\newcommand{\doublehat}[1]{%
\begingroup%
  \let\macc@kerna\z@%
  \let\macc@kernb\z@%
  \let\macc@nucleus\@empty%
  \hat{\raisebox{.35ex}{\vphantom{\ensuremath{#1}}}\smash{\hat{#1}}}%
\endgroup%
}
\newcommand{\journalVersion}[1]{}
\newtheorem{theorem}{Theorem}
\newtheorem{problem}{Problem}
\newtheorem{lemma}[theorem]{Lemma}
\newtheorem{assumption}[theorem]{Assumption}
\newtheorem{definition}[theorem]{Definition}
\newtheorem{proposition}[theorem]{Proposition}
\newtheorem{remark}[theorem]{Remark}
\newtheorem{example}[theorem]{Example}
\newtheorem{fact}[theorem]{Fact}
\newcommand{\cf}{\emph{cf.}\xspace}
\newcommand{\bdmath}{\begin{dmath}}
\newcommand{\edmath}{\end{dmath}}
\newcommand{\beq}{\begin{equation}}
\newcommand{\eeq}{\end{equation}}
\newcommand{\bdm}{\begin{displaymath}}
\newcommand{\edm}{\end{displaymath}}
\newcommand{\bea}{\begin{eqnarray}}
\newcommand{\eea}{\end{eqnarray}}
\newcommand{\beal}{\beq \begin{array}{ll}}
\newcommand{\eeal}{\end{array} \eeq}
\newcommand{\beas}{\begin{eqnarray*}}
\newcommand{\eeas}{\end{eqnarray*}}
\newcommand{\ba}{\begin{array}}
\newcommand{\ea}{\end{array}}
\newcommand{\bit}{\begin{itemize}}
\newcommand{\eit}{\end{itemize}}
\newcommand{\ben}{\begin{enumerate}}
\newcommand{\een}{\end{enumerate}}
\newcommand{\calA}{{\cal A}}
\newcommand{\calB}{{\cal B}}
\newcommand{\calC}{{\cal C}}
\newcommand{\calD}{{\cal D}}
\newcommand{\calE}{{\cal E}}
\newcommand{\calI}{{\cal I}}
\newcommand{\calJ}{{\cal J}}
\newcommand{\calL}{{\cal L}}
\newcommand{\calM}{{\cal M}}
\newcommand{\calO}{{\cal O}}
\newcommand{\calS}{{\cal S}}
\newcommand{\calT}{{\cal T}}
\newcommand{\setal}{~\emph{et~al.}\xspace}
\newcommand{\eg}{\emph{e.g.,}\xspace}
\newcommand{\ie}{\emph{i.e.,}\xspace}
\newcommand{\myParagraph}[1]{{\bf #1.}\xspace}
\newcommand{\M}[1]{{\bm #1}} % Face for matrices
\renewcommand{\boldsymbol}[1]{{\bm #1}}
\newcommand{\hide}[1]{}
\newcommand{\tocheck}[1]{{\color{brown} #1}}
\newcommand{\grayout}[1]{{\color{gray} #1}}
\newcommand{\hiddenText}{{\color{gray} hidden text.}}
\newcommand{\hideWithText}[1]{\hiddenText}
\newcommand{\kron}{\otimes}
\newcommand{\dist}{\mathbf{dist}}
\newcommand{\Natural}[1]{ { {\mathbb N}^{#1} } }
\newcommand{\subject}{\text{ subject to }}
\DeclareMathOperator*{\argmax}{arg\,max}
\DeclareMathOperator*{\argmin}{arg\,min}
\newcommand{\E}{{\mathbb{E}}}
\newcommand{\prob}[1]{{\mathbb P}\left(#1\right)}
\newcommand{\tran}{^{\mathsf{T}}}
\newcommand{\trace}[1]{\mathrm{tr}\left(#1\right)}
\newcommand{\rank}[1]{\mathrm{rank}\left(#1\right)}
\newcommand{\inv}{^{-1}}
\newcommand{\ones}{{\mathbf 1}}
\newcommand{\zero}{{\mathbf 0}}
\newcommand{\eye}{{\mathbf I}}
\newcommand{\vect}[1]{\left[\begin{array}{c}  #1  \end{array}\right]}
\newcommand{\matTwo}[1]{\left[\begin{array}{cc}  #1  \end{array}\right]}
\newcommand{\matThree}[1]{\left[\begin{array}{ccc}  #1  \end{array}\right]}
\newcommand{\Real}[1]{ { {\mathbb R}^{#1} } }
\newcommand{\opt}{^{\star}}
\newcommand{\at}[1]{^{(#1)}}
\newcommand{\setdef}[2]{ \{#1 \; {:} \; #2 \} }
\newcommand{\SOthree}{\ensuremath{\mathrm{SO}(3)}\xspace}
\newcommand{\MA}{\M{A}}
\newcommand{\MB}{\M{B}}
\newcommand{\MC}{\M{C}}
\newcommand{\MK}{\M{K}}
\newcommand{\MM}{\M{M}}
\newcommand{\MP}{\M{P}}
\newcommand{\MR}{\M{R}}
\newcommand{\MS}{\M{S}}
\newcommand{\MV}{\M{V}}
\newcommand{\MH}{\M{H}}
\newcommand{\MX}{\M{X}}
\newcommand{\va}{\boldsymbol{a}} 
\newcommand{\vb}{\boldsymbol{b}}
\newcommand{\vc}{\boldsymbol{c}}
\newcommand{\vq}{\boldsymbol{q}}
\newcommand{\vs}{\boldsymbol{s}}
\newcommand{\vu}{\boldsymbol{u}}
\newcommand{\vv}{\boldsymbol{v}}
\newcommand{\vt}{\boldsymbol{t}}
\newcommand{\vxx}{\boldsymbol{x}} 
\newcommand{\vy}{\boldsymbol{y}}
\newcommand{\vgamma}{\boldsymbol{\gamma}}
\newcommand{\valpha}{\boldsymbol{\alpha}}
\newcommand{\vbeta}{\boldsymbol{\beta}}
\newcommand{\vepsilon}{\boldsymbol{\epsilon}}
\newcommand{\scenario}[1]{{\smaller \sf#1}\xspace}
\newcommand{\cvx}{{\sf cvx}\xspace}
\newcommand{\blue}[1]{{\color{blue}#1}}
\newcommand{\linkToPdf}[1]{\href{#1}{\blue{(pdf)}}}
\newcommand{\linkToPpt}[1]{\href{#1}{\blue{(ppt)}}}
\newcommand{\linkToCode}[1]{\href{#1}{\blue{(code)}}}
\newcommand{\linkToWeb}[1]{\href{#1}{\blue{(web)}}}
\newcommand{\linkToVideo}[1]{\href{#1}{\blue{(video)}}}
\newcommand{\linkToMedia}[1]{\href{#1}{\blue{(media)}}}
\newcommand{\award}[1]{\xspace} % {{\red{#1}}} % omit awards
\newcommand{\vz}{\boldsymbol{z}}
\newcommand{\ransac}{RANSAC\xspace}
\newcommand{\SLIDESlong}{Sparse LIst-Decodable Estimation\xspace} % Sparse List Decodable Estimation
\newcommand{\SLIDES}{SLIDE\xspace} % Sparse List Decodable Estimation
\newcommand{\polyring}[1]{\mathbb{R}[#1]}
\newcommand{\binomial}[2]{\underline{#1}_{#2}}
\newcommand{\paper}{monograph\xspace}
\newcommand{\Paper}{Monograph\xspace}
\newcommand{\MRout}{\MR^\prime}
\newcommand{\mth}{th\xspace} %$^{\smaller \text{th}}$
\newcommand{\fstar}{f^\star}
\newcommand{\vxxstar}{\vxx^{\star}}
\newcommand{\pstar}{p^\star}
\newcommand{\nchoosek}[2]{\left(\substack{#1 \\ #2} \right)}
\newcommand{\meq}{\mathrm{eq}}
\newcommand{\mathmom}{\mathrm{mom}}
\newcommand{\mathreq}{\mathrm{req}}
\newcommand{\mathloc}{\mathrm{loc}}
\newcommand{\sym}[1]{\mathbb{S}^{#1}}
\newcommand{\pd}[1]{\sym{#1}_{++}}
\newcommand{\psd}[1]{\sym{#1}_{+}}
\renewcommand{\int}{\mathbb{Z}}
\renewcommand{\deg}[1]{\mathrm{deg}\parentheses{#1}}
\newcommand{\bbX}{\mathbb{X}_\scenario{sdp}}
\newcommand{\ceil}[1]{\left\lceil #1 \right\rceil}
\newcommand{\floor}[1]{\left\lfloor #1 \right\rfloor}
\newcommand{\barc}{\bar{c}}
\newcommand{\barcsq}{\barc^2}
\newcommand{\vectorize}[1]{{\mathrm{vec}}\left(#1\right)}
\newcommand{\mosek}{\scenario{MOSEK}}
\newcommand{\bmat}{\left[ \begin{array}}
\newcommand{\emat}{\end{array} \right]}
\newcommand{\bal}{\begin{align}}
\newcommand{\eal}{\end{align}}
\newcommand{\trinum}{\mathfrak{t}}
\newcommand{\MXstar}{\MX^\star}
\newcommand{\parentheses}[1]{\left(#1\right)}
\newcommand{\mmom}{m_{\mathrm{mom}}}
\newcommand{\revise}[1]{#1}
\newcommand{\vcat}{\,;\,} %\ \Vert\ }
\newcommand{\hcat}{\,,\,} %\ \Vert\ }
\newcommand{\maybeOmit}[1]{} 
\newcommand{\inprod}[2]{\left\langle #1, #2 \right\rangle}
\newcommand{\cbrace}[1]{\left\{ #1\right\}}
\newcommand{\sbracket}[1]{\left[ #1\right]}
\newcommand{\geometricperception}{geometric perception\xspace}
\newcommand{\nrMeasurements}{n}
\newcommand{\SOd}{\ensuremath{\mathrm{SO}(d)}\xspace}
\newcommand{\SEd}{\ensuremath{\mathrm{SE}(d)}\xspace}
\renewcommand{\vectorize}[1]{\ensuremath{vec}(#1)}
\newcommand{\Domain}{\mathbb{\MX}}
\newcommand{\sumOverMeas}{\sum_{i=1}^{\nrMeasurements}}
\newcommand{\sumOverMeasj}{\sum_{j=1}^{\nrMeasurements}}
\newcommand{\vomega}{\M{\omega}}
\newcommand{\normOne}[1]{\left\| #1 \right\|_1}
\newcommand{\normTwo}[1]{\left\| #1 \right\|_2}
\newcommand{\normFrob}[1]{\left\| #1 \right\|_\scenario{F}}
\newcommand{\normInf}[1]{\left\| #1 \right\|_\infty}
\newcommand{\nondegenerate}{nondegenerate\xspace}
\newcommand{\Nondegenerate}{Nondegenerate\xspace}
\newcommand{\nondegeneracy}{nondegeneracy\xspace}
\newcommand{\minDim}{\bar{d}}
\newcommand{\dimy}{d_y}
\newcommand{\dimx}{d_x}
\newcommand{\dimd}{\dimx}
\newcommand{\dimJ}{d_\calJ}
\newcommand{\dimAmbient}{d_x}
\newcommand{\dimA}{\dimx \times \dimy}
\newcommand{\maxCon}{_\scenario{MC}}
\newcommand{\maxConi}[1]{_{\scenario{MC},#1}}
\newcommand{\ls}{_\scenario{LS}}
\newcommand{\tls}{_\scenario{TLS}}
\newcommand{\lts}{_\scenario{LTS}}
\newcommand{\ltssdp}{_\scenario{lts-sdp1}}
\newcommand{\ltssdpT}{_\scenario{lts-sdp2}}
\newcommand{\mcsdp}{_\scenario{mc-sdp}}
\newcommand{\tlssdp}{_\scenario{tls-sdp}}
\newcommand{\gt}{^\circ}
\newcommand{\uncorr}{\opt}
\newcommand{\InlierSet}{\calI}
\newcommand{\setInliers}{\calI}
\newcommand{\err}{\scenario{err}}
\newcommand{\errFeasible}[3]{\err(#2,#3)}
\newcommand{\optt}{\scenario{opt}}
\newcommand{\opttsos}{\widehat{\optt}\ltssdp}
\newcommand{\DistInliers}{{\widetilde \calI}}
\newcommand{\DistSampleInliers}{\dddot{\calI}}
\newcommand{\outlierRate}{\beta}
\newcommand{\inlierRate}{\alpha}
\newcommand{\Expect}[2]{\E_{#1}\left[#2\right]}
\renewcommand{\dist}{\mu}
\newcommand{\pdist}{\tilde{\mu}}
\newcommand{\levelpd}{\ell}
\newcommand{\relaxOrder}{r}
\newcommand{\newSize}{t}
\newcommand{\satisfyOrder}{k}
\newcommand{\satisfyOrderTwo}{k'}
\newcommand{\pExpect}[2]{\tilde{\E}_{#1}\left[#2\right]}
\newcommand{\support}{\text{support}}
\newcommand{\sosimply}[2]{\sststile{#1}{#2}} % #1 variable, #2 order
\newcommand{\psatisfy}[2]{\sdtstile{#1}{#2}} %  #1 variable, #2 order
\newcommand{\oneOverNrMeas}{\frac{1}{\nrMeasurements}}
\newcommand{\aveOverMeas}{\oneOverNrMeas\sumOverMeas}
\newcommand{\residuali}[1]{\normTwo{\vy_i - \MA_i\tran #1}}
\newcommand{\residualUncorri}[1]{\normTwo{\vy_i\uncorr - (\MA_i\uncorr)\tran #1}}
\newcommand{\HoldersInequality}{H\"older's inequality\xspace}
\newcommand{\CS}{Cauchy-Schwarz\xspace}
\newcommand{\sos}{sos\xspace}
\newcommand{\Sos}{Sos\xspace}
\newcommand{\explanation}[1]{ \grayout{#1} \nonumber \\}
\newcommand{\textExplanation}[1]{ \grayout{\text{(#1)}} }
\newcommand{\degree}[1]{\text{deg}\left(#1\right)}
\newcommand{\List}{\calL}
\newcommand{\Longlist}{\doublehat{\calS}}
\newcommand{\axiomsLTS}{\calL_{\vomega,\vxx}}
\newcommand{\axiomsLTST}{\calT_{\vomega,\vxx}}
\newcommand{\axiomsLDR}{\calT_{\vomega,\vxx}}
\newcommand{\axiomsMC}{\calM_{\vomega,\vxx}}
\newcommand{\axiomsTLS}{\calM_{\vomega,\vxx}}
\newcommand{\sumOverInliers}{ \sum_{i\in \setInliers}}
\newcommand{\aveOverInliers}{\frac{1}{|\setInliers|} \sumOverInliers}
\newcommand{\sumOverOutliers}{ \sum_{i\in \setOutliers}}
\newcommand{\wt}{\text{wt}}
\newcommand{\an}{\inlierRate \, \nrMeasurements}
\newcommand{\bn}{\outlierRate \, \nrMeasurements}
\newcommand{\setOutliers}{\calO}
\newcommand{\positiv}{Positivstellens\"atze\xspace}
\newcommand{\momentMatrix}{moment matrix\xspace}
\newcommand{\naiveBoundMC}{
\frac{ 2 \sqrt{\dimJ} \;\barc }{ 
\min_{ \calJ \subset \InlierSet\maxCon, |\calJ|=\dimJ } \sigma_{\min(\MA_\calJ)}
 } }
 \newcommand{\naiveBoundTLS}{
\frac{ 2 \sqrt{\dimJ} \;\barc }{ 
\min_{ \calJ \subset \InlierSet\tls, |\calJ|=\dimJ } \sigma_{\min(\MA_\calJ)}
 } }
\newcommand{\indicator}[1]{{\mathbbm{1}}\!\left(#1\right)}
\newcommand{\boundOne}{\barc}
\newcommand{\relaxLevel}{k}
\newcommand{\boundx}{M_x}
\newcommand{\Crelax}{C\!\left(\nicefrac{\relaxLevel}{2}\right)}
\newcommand{\coeffCk}{\outlierRate^{\frac{\relaxLevel}{2}-1} 
 \Crelax^{\frac{\relaxLevel}{2}} 2^{ 3\relaxLevel-1} }
\newcommand{\getEntries}[2]{#1_{[#2]}}
\newcommand{\myskip}{\vspace{0mm}}
\renewcommand{\tocheck}[1]{#1}
\newcommand{\mathcom}{,}
\newcommand{\mathper}{.}
\newcommand{\antiConReq}{( \frac{\inlierRate^2 \eta^2 (1-2\barc)^2}{32 \barc} ,2\barc,2\boundx)}
\newcommand{\probBound}{1 - \left(1 - \frac{\inlierRate}{2} \right)^{ \frac{N}{\inlierRate} }}
\newcommand{\probBoundReal}{0.99}
\newcommand{\blke}[2]{_{[#1 \,,\, #2]}}
\newcommand{\blku}[1]{_{[#1]}}
\newcommand{\myspa}{\,&}
\newcommand{\vm}{\bm{m}}
\newcommand{\sosProofDef}[1]{Given a system of polynomial constraint $\calA$ and a polynomial $g$,  
a sum-of-square (\sos) proof that the system $\calA$ implies $g \geq 0$ 
consists of sum-of-squares polynomials $\{p_\calS\}_{ \calS \subseteq [m]}$ such that:

\vspace{-6mm}

\begin{align}
\label{#1}
g= \sum_{\calS\subseteq [m]} p_\calS \cdot \prod_{i\in\calS}f_i.
\end{align}
We say that the proof has degree $\satisfyOrder$ if for every $\calS \subseteq [m]$, $\degree{p_\calS \cdot \prod_{i\in\calS}f_i} \leq \satisfyOrder$ where $\degree{\cdot}$ denotes the degree of a polynomial.
We use the notation:
\begin{align}
\calA(\vxx) \sosimply{\vxx}{\satisfyOrder} \{ g(\vxx) \geq 0 \} 
\qquad \text{or}\qquad
\{f_i(\vxx) \geq 0,\ldots,f_m(\vxx) \geq 0\} \sosimply{\vxx}{\satisfyOrder} \{ g(\vxx) \geq 0 \} 
\end{align}
to denote that there is a proof of degree at most $\satisfyOrder$ of the fact that $\calA = \{f_i(\vxx) \geq 0,\ldots,f_m(\vxx) \geq 0\}$ implies $g \geq 0$ (\ie any $\vxx$ that satisfies $\calA(\vxx)$ 
is such that $g(\vxx)\geq 0$). 
We omit the variables and write $\calA(\vxx) \sosimply{}{\satisfyOrder} \{ g(\vxx) \geq 0 \} $, when they are clear from the context.
Moreover, we write 
\begin{align}
\sosimply{\vxx}{\satisfyOrder} \{ g(\vxx) \geq 0 \} 
\end{align} 
if there is a sum-of-squares proof that $g(\vxx)\geq 0$ for any $\vxx \in \Real{\dimd}$ (\ie $g(\vxx)$ is sum-of-squares).}
\newcommand{\ltsObjectiveThm}[1]{
Consider~\cref{prob:outlierRobustEstimation} with 
measurements ($\vy_i, \MA_i$), $i \in [\nrMeasurements]$, and known outlier rate $\beta$. 
Call $\DistSampleInliers$ the set of measurements ($\vy_i\uncorr, \MA_i\uncorr$), $i \in [\nrMeasurements]$, where the outliers are replaced by inliers
%\footnote{This set is not given to the estimator and is only used to benchmark its performance.} 
and assume that the set of matrices $\MA_i\uncorr$, $i \in \DistSampleInliers$, is $k$-certifiably $C$-hypercontractive with $k\geq 4$. Moreover, assume $\outlierRate < \beta_{\max} = \sqrt[\frac{k}{2}-1]{
1 / (\Crelax^{\frac{\relaxLevel}{2}} 2^{ 3\relaxLevel-1} )}$.
% Moreover, assume $\normInf{\vy_i} \leq \boundy$, $i\in[\nrMeasurements]$, for some given positive $\boundy$.
Then, \cref{algo:lts1} with relaxation order $\relaxOrder \geq \relaxLevel$ outputs an estimate 
 $\vxx\ltssdp$ (not necessarily in $\Domain$) such that:

 \vspace{-8mm}

\begin{align}
\hspace{-5mm}
\label{#1}
\err_{\DistSampleInliers} ( \vxx\ltssdp ) 
\leq
(1 + C_1(\relaxLevel,\outlierRate)^{\frac{2}{\relaxLevel}} )
\;  \optt_{\DistSampleInliers}+
C_2(\relaxLevel,\outlierRate)^{\frac{2}{\relaxLevel}}
\left(\frac{1}{\nrMeasurements} \sumOverMeas 
\normTwo{\vy_i\uncorr - (\MA_i\uncorr)\tran \vxx\opt}^{\relaxLevel} \right)^{\frac{2}{\relaxLevel}}
\mathcom
\end{align}
where $C_1(\relaxLevel,\outlierRate)$ and $C_2(\relaxLevel,\outlierRate)$ are given functions, 
 $\err_\DistSampleInliers(\vxx) \!\triangleq\!\aveOverMeas \residualUncorri{\vxx}^2$ 
is the residual error of an estimate $\vxx$ with respect to the inliers $\DistSampleInliers$, 
$\vxx\opt \triangleq  \argmin_{\vxx \in \Domain} \aveOverMeas \residualUncorri{\vxx}^2$ 
is the best estimate from an oracle estimator that has access to all the inliers,
% (\red{where the minimum is over all $\vxx$ of bit complexity $B$}), 
and $\optt_{\DistSampleInliers} \triangleq \err_\DistSampleInliers(\vxx\opt)$ is the corresponding residual error with respect to the inliers $\DistSampleInliers$.
}
\newcommand{\ltsThm}[1]{
Consider~\cref{prob:outlierRobustEstimation} with 
measurements ($\vy_i, \MA_i$), $i \in [\nrMeasurements]$, and outlier rate $\outlierRate < 0.5$ (or, equivalently,  inlier rate $\inlierRate = 1-\outlierRate > 0.5$). 
Call $\setInliers$ the set of inliers %(with $|\setInliers| = \an$ and $\inlierRate = 1-\outlierRate$),
and assume that the set of matrices $\MA_i$, $i \in \setInliers$, is $k$-certifiably $\antiConReq$-anti-concentrated for some $\eta >0$. 
% Moreover, assume $\normInf{\vy_i} \leq \boundy$, $i\in[\nrMeasurements]$, for some given positive $\boundy$.
Then, \cref{algo:lts2} with relaxation order $\relaxOrder \geq \relaxLevel/2$ outputs an estimate 
 $\vxx\ltssdpT$ (not necessarily in $\Domain$) such that:

\vspace{-3mm}

\beq
\label{#1}
\normTwo{\vxx\ltssdpT - \vxx\gt} \leq \boundx \left( \frac{\inlierRate \; \eta \;  }{2} +  2\frac{1-\inlierRate}{\inlierRate} \right).
\eeq
}
\newcommand{\mcThm}[1]{
Consider~\cref{prob:outlierRobustEstimation} with 
measurements ($\vy_i, \MA_i$), $i \in [\nrMeasurements]$, and outlier rate $\outlierRate < 0.5$ (or, equivalently,  inlier rate $\inlierRate = 1-\outlierRate > 0.5$). 
Call $\setInliers$ the set of inliers %(with $|\setInliers| = \an$ and $\inlierRate = 1-\outlierRate$),
and assume that the set of matrices $\MA_i$, $i \in \setInliers$, is $k$-certifiably $\antiConReq$-anti-concentrated  for some $\eta >0$. 
% Moreover, assume $\normInf{\vy_i} \leq \boundy$, $i\in[\nrMeasurements]$, for some given positive $\boundy$.
Then, \cref{algo:mc} with relaxation order $\relaxOrder \geq \relaxLevel/2$ outputs an estimate 
 $\vxx\mcsdp$ (not necessarily in $\Domain$) such that:

\vspace{-3mm}

\beq
\label{#1}
\normTwo{\vxx\mcsdp - \vxx\gt} \leq \boundx \left( \frac{\inlierRate \; \eta \;  }{2} +  2\frac{1-\inlierRate}{\inlierRate} \right).
\eeq
}
\newcommand{\tlsThm}[1]{
Consider~\cref{prob:outlierRobustEstimation} with 
measurements ($\vy_i, \MA_i$), $i \in [\nrMeasurements]$, and outlier rate $\outlierRate < 0.5$ (or, equivalently,  inlier rate $\inlierRate = 1-\outlierRate > 0.5$). 
Call $\setInliers$ the set of inliers %(with $|\setInliers| = \an$ and $\inlierRate = 1-\outlierRate$),
and assume that the set of matrices $\MA_i$, $i \in \setInliers$, is $k$-certifiably $\antiConReq$-anti-concentrated for some $\eta >0$. 
% Moreover, assume $\normInf{\vy_i} \leq \boundy$, $i\in[\nrMeasurements]$, for some given positive $\boundy$.
Then, \cref{algo:tls} with relaxation order $\relaxOrder \geq \relaxLevel/2$ outputs an estimate 
 $\vxx\tlssdp$ (not necessarily in $\Domain$) such that:

\vspace{-2mm}

\beq
\label{#1}
\normTwo{\vxx\tlssdp - \vxx\gt} \leq  \frac{ \inlierRate  \boundx \nrMeasurements }{\an - \frac{\gamma\gt}{\barcsq} } \left( 
\frac{\inlierRate \; \eta}{2} + 2 \frac{1-\inlierRate}{\inlierRate}
 \right),
\eeq

\vspace{-2mm}

where $\gamma\gt \triangleq  \sumOverInliers \residuali{\vxx\gt}^2$ is the 
squared\,residual\,error\,of\,the ground truth $\vxx\gt$ over the inliers $\setInliers$.
\vspace{-2mm}
}
\newcommand{\ldrThm}[1]{
Consider~\cref{prob:outlierRobustEstimation} with 
measurements ($\vy_i, \MA_i$), $i \in [\nrMeasurements]$, and known outlier rate $\outlierRate > 0.5$ (or, equivalently, known inlier rate $\inlierRate = 1-\outlierRate < 0.5$). 
Call $\setInliers$ the set of inliers %(with $|\setInliers| = \an$ and $\inlierRate = 1-\outlierRate$),
and assume that the set of matrices $\MA_i$, $i \in \setInliers$, is $k$-certifiably $\antiConReq$-anti-concentrated for some $\eta >0$. 
% Moreover, assume $\normInf{\vy_i} \leq \boundy$, $i\in[\nrMeasurements]$, for some given positive $\boundy$.
Then, with probability at least $\probBound$ (over the draw of the samples in the algorithms), where $N \geq 1$ is a user-defined parameter, \cref{algo:ldr} with relaxation order $\relaxOrder \geq \relaxLevel/2$  outputs a list 
$\List$ of size $N/\inlierRate$ such that there is an estimate $\vxx \in \List$ (with $\vxx$ not necessarily in $\Domain$) such that 

\vspace{-3mm}

\beq
\label{#1}
\normTwo{\vxx - \vxx\gt} \leq  \eta \boundx. %  \frac{\noiseStd}{\inlierRate} +
\eeq
Moreover, when $\alpha \geq 0.01$ (\ie at least 1\% of the measurements are inliers) and $N=10$, 
the relation $\normTwo{\vxx - \vxx\gt} \leq  \eta \boundx$ holds with probability at least $\probBoundReal$ 
over the draw of the samples.
}
\titlespacing*{\section}{0pt}{4mm}{2mm}
\titlespacing*{\subsection}{0pt}{2mm}{2mm}
\title{\huge{Estimation Contracts for \\ Outlier-Robust  Geometric Perception}}
\author{Luca Carlone\thanks{
The author is with the Laboratory for 
Information \& Decision Systems (LIDS) and the Department of Aeronautics and Astronautics at the 
Massachusetts Institute of Technology, Cambridge, USA 
(email: {\tt lcarlone@mit.edu}). \hspace{1cm} This work was partially funded by the NSF
CAREER award ``Certifiable Perception for Autonomous
Cyber-Physical Systems'' and by ARL DCIST CRA W911NF-17-
2-0181.}
% \thanks{Digital Object Identifier (DOI): see top of this page.}
  \vspace{-1cm}
% \thanks{Manuscript received: September 10, 2019; 
% Revised: November 29, 2019; 
% Accepted: January 6, 2020.
% }
% \thanks{This paper was recommended for publication by Editor Eric Marchand upon evaluation of the Associate Editor and Reviewers' comments.~\finalize{The authors would like to thank the anonymous reviewers for their thorough comments.}} 
% \thanks{This work was partially funded by ARL DCIST CRA W911NF-17-2-0181, ONR RAIDER N00014-18-1-2828, and Lincoln Laboratory’s Resilient Perception in Degraded Environments program.}
}
\begin{document}

\maketitle

%%%%%%%%%%%%%%%% LONG VERSION!!!
%!TEX root = main.tex

\begin{abstract}
Outlier-robust estimation is a fundamental problem and has been extensively investigated 
by statisticians and practitioners. The last few years have seen a convergence across research fields towards ``algorithmic robust statistics'', which focuses on developing tractable outlier-robust techniques for high-dimensional estimation problems.  
% across research fields, including robotics, computer vision, and statistics. 
Despite this convergence, research efforts across fields %(\eg applied mathematics, robotics, and computer vision) 
have been mostly disconnected from one another.
% often proceeded in parallel. 
% and without much cross-fertilization.   
This \paper bridges recent work on certifiable outlier-robust estimation for geometric perception in robotics and computer vision with  
parallel work in robust statistics.
In particular, 
% we adapt recent results on robust \emph{linear} regression to problems 
% arising in robotics and vision, by leveraging a semi-algebraic description of the domains of the variables in these problems. 
we adapt and extend recent results on  \emph{robust linear regression}  (applicable to the low-outlier regime with $\ll 50\%$ outliers) and 
\emph{list-decodable regression} (applicable to the high-outlier regime with $\gg 50\%$ outliers) 
to the setup commonly found in robotics and vision, where 
(i)~variables (\eg rotations, poses) belong to a non-convex domain, 
(ii)~measurements are vector-valued, and % (as opposed to scalars), and 
(iii)~the number of outliers is not known a priori.
%case where variables belong to a semi-algebraic set (\eg rotations, poses), which is the typical setup in 
%robotics and vision.
%---the typical setup in robotics and vision. 
% for the case with $\beta \ll 50\%$ outliers, we adapt recent results on robust \emph{linear} regressions to the case where the actual number of outliers is unknown and the variables belong to a semi-algebraic set (\eg rotations, poses) ---this is the typical setup in robotics and vision. 
% Moreover, for the case with $\beta \gg 50\%$ outliers, we adapt results on \emph{list decodable regression} 
% to operate on realistic robotics and vision problems. 
% These a
% In particular, we adapt existing results to the case where (i) measurements are vector-valued (as opposed to ), 
The emphasis here is on performance guarantees: rather than proposing radically new algorithms, we provide
 conditions on the input measurements under which 
 %can we guarantee that 
 modern estimation algorithms (possibly after small modifications) are guaranteed to recover an estimate close to the ground truth in the presence of outliers. These conditions are what we call an ``estimation contract''. 
 The \paper also provides numerical experiments to shed light on the applicability of the theoretical results 
and to showcase the potential of list-decodable regression algorithms in geometric perception.  
Besides the proposed extensions of existing results, % (\eg extending results on robust linear regression )
we believe the main contributions of this \paper are (i)~to unify parallel research lines by pointing out 
commonalities and differences, (ii)~to introduce advanced material (\eg sum-of-squares proofs) in an accessible and  self-contained presentation for the practitioner, and (iii)~to point out a few immediate opportunities and open questions in outlier-robust geometric perception.
% attempt at 
% tutorial in nature: in particular, we 
% expose and draw connections between the use of semidefinite relaxations (and sum-of-squares proofs)  across communities and present the material assuming little background on these topics. 
\end{abstract}

% Keywords appear just beneath the abstract. Use only for final RAL version.
% \begin{IEEEkeywords}Graduated non-convexity, outlier rejection, robust estimation, spatial perception, global optimization.
% \end{IEEEkeywords}

\clearpage
% TOC
  \microtypesetup{protrusion=false}
  \tableofcontents{}
  \microtypesetup{protrusion=true}
\clearpage

%!TEX root = main.tex

\section{Introduction}
\label{sec:introduction}
% \IEEEPARstart{R}{obust}

Geometric perception %is a fundamental problem in robot perception and computer vision and
 is the problem of estimating unknown geometric models (\eg poses, rotations, 3D structure) from sensor data (\eg camera images, lidar scans, inertial data, wheel odometry). 
 Geometric perception has been at the center stage of robotics and computer vision research since their inception, 
 and includes problems such as object pose (and possibly shape) estimation~\cite{Yang20tro-teaser,Zhou15cvpr}, 
 robot or camera motion estimation~\cite{Scaramuzza11ram}, sensor calibration~\cite{Hartley13ijcv},
 Simultaneous Localization And Mapping (SLAM)~\cite{Cadena16tro-SLAMsurvey}, and Structure from Motion (SfM)~\cite{Triggs99-modern}, to mention a few.

 %, and it has been at the center-stage of robotics and computer vision research since their inception. 
 % A typical perception pipeline includes a \emph{perception front-end}, that extracts intermediate representations 
 % $\vy_i, i=1,\ldots,\nrMeasurements$ from the raw sensor data 
 At its core, geometric perception solves an estimation problem, where, given measurements $\vy_i, i=1,\ldots,\nrMeasurements$, one has to compute a variable of interest $\vxx\gt$ (the ``ground truth''). For instance, in an object pose estimation problem, $\vxx\gt$ is the to-be-computed 3D pose of the object (say, a car), while the $\vy_i$'s might be observations of relevant points on the object (\eg the wheels and the headlights of the car). 
 The unknown $\vxx\gt$ and the measurements $\vy_i$ are related by a \emph{measurement (or generative) model}. 
In this \paper, we focus our attention on the common case where the measurements are vector-valued, \ie $\vy_i \in \Real{\dimy}$, and the noise is additive, leading to measurement models in the form:
 %$\vy_i = f_i(\vxx, \vepsilon)$, 
 \begin{align}
% \label{eq:genGenModel}
\label{eq:genModel}
\vy_i = f_i(\vxx\gt)+ \vepsilon, \quad  \text{ with } \quad  \vy_i \in \Real{\dimy} 
\quad \text{ and } \quad \vxx\gt \in \Domain \subseteq \Real{\dimx},
\end{align}
where $f_i(\cdot)$ is a known function, $\vepsilon$ is the measurement noise, and $\Domain$ is the domain of $\vxx\gt$ (\eg the set of 3D poses in an object pose estimation problem). 
As we will see in~\cref{sec:motivatingProblems}, many geometric perception problems 
have measurement models in the form of eq.~\eqref{eq:genModel}.\footnote{These assumptions imply a small loss of generality, \eg in SLAM and rotation averaging the measurements belong to a smooth manifold rather than a vector space and the noise is multiplicative. However, even in these cases, the resulting outlier-free formulations ---under suitable noise assumptions--- assume the form of standard least squares~\cite{Rosen18ijrr-sesync,Hartley13ijcv}, hence 
we believe adapting the results in this \paper to those setups is indeed possible, see~\cref{sec:openProblems}.} 
 
% (as we will see, all examples in~\cref{sec:motivatingProblems} fall in this setup). % leading to the following measurement model:  
% \begin{align}
% \label{eq:genModel}
% \vy_i = \MA_i \tran \vxx + \vepsilon
% \end{align}
% group corruption model

When the noise in~\eqref{eq:genModel} is zero-mean and Gaussian, the maximum likelihood estimate of $\vxx\gt$ can be computed via 
standard least squares:\footnote{Without loss of generality, we assume $\vepsilon$ to have an isotropic Gaussian distribution with identity covariance, but arbitrary covariances can be easily accommodated by rescaling $\vy_i$ and $f_i(\cdot)$ by the square root of the inverse covariance.} 
\begin{align}
\label{eq:LS}
\tag{LS}
\vxx_\ls = \argmin_{ \substack{ \vxx \in \Domain } } 
\sumOverMeas \normTwo{ \vy_i - f_i(\vxx) }^2 \,.
%\sumOverMeas \min\left( \normTwo{ \vy_i - \MA_i \tran  \vxx }^2 \;,\; \barcsq\right)
\end{align}
% where $\Domain$ is the domain of $\vxx$ (\eg the set of 3D poses in an object pose estimation problem).
While problem~\eqref{eq:LS} can be still hard to solve (\eg due to potential non-convexity of $f_i(\cdot)$ or $\Domain$), its structure ---at least for common geometric perception problems--- has been extensively studied in robotics and vision, and the literature offers a broad range of solvers, including closed-form solutions~\cite{Horn87josa},
 iterative local solvers~\cite{Dellaert17fnt-factorGraph}, 
minimal solvers~\cite{Larsson18cvpr-minimalSolvers}, 
and convex relaxations~\cite{Rosen18ijrr-sesync,Shi21rss-pace,Zhou15cvpr,Kahl07IJCV-GlobalOptGeometricReconstruction,Carlone15icra-verification,Carlone16tro-duality2D}.

\myParagraph{Outlier-robust estimation}
In practice, many of the measurements fed to the estimation process are \emph{outliers}, \ie they largely deviate from the measurement model~\eqref{eq:genModel} and possibly do not carry any information about $\vxx\gt$. In robotics and vision, %this is typically due to the fact that 
the measurements $\vy_i$ are the result of a pre-processing of the raw sensor data; such preprocessing is often referred to as the \emph{perception front-end}, while the estimation algorithms that compute $\vxx\gt$ from the $\vy_i$'s are referred to as the \emph{perception back-end}. For instance, in an object pose estimation problem, the perception front-end extracts the position of relevant features $\vy_i$ on the object from raw image pixels (typically using a neural network), while the back-end computes the object pose given the $\vy_i$'s. The perception front-end is prone to errors (\eg the network may mis-detect the wheels of the car in the image), resulting in measurements $\vy_i$ with large errors. 
In the presence of outliers, the least squares estimator~\eqref{eq:LS} is known to produce grossly incorrect results, hence it is desirable to adopt an outlier-robust estimator that can correctly estimate $\vxx\gt$ in the presence of many outliers. In this \paper, we do not make assumption on the nature of the outliers and consider the worst case 
where a fraction $\outlierRate$ of measurements is arbitrarily corrupted, a setup commonly referred to as the \emph{strong adversary model} in statistics and learning~\cite{Klivans18arxiv-robustRegression}. 
% (potentially both in terms of the $\vy_i$ and the $f$)

% Thankfully, 
%\myParagraph{Least Trimmed Squares}
\myParagraph{The robust statistics lens}
 Classical robust statistics~\cite{Tukey75picm,Huber81,Rousseeuw81,Rousseeuw87book} provides many alternative formulations to~\eqref{eq:LS} that allow regaining robustness to outliers. 
For instance, if the number of outliers is known, say a fraction $\outlierRate$ of the $\nrMeasurements$ measurements is corrupted, we can use the \emph{Least Trimmed Squares}~\eqref{eq:LTS} estimator~\cite{Rousseeuw81} to compute an outlier-robust estimate: 
\begin{align}
\label{eq:LTS}
\tag{LTS}
\vxx_\lts = \argmin_{ \substack{ \vomega \in \{0;1\}^\nrMeasurements \\ \vxx \in \Domain } } 
\;&\; \sumOverMeas \omega_i \cdot \normTwo{ \vy_i - f_i(\vxx) }^2 
\;, \quad
\subject 
\quad \sumOverMeas{\omega_i} = \an \;, \nonumber
%\sumOverMeas \min\left( \normTwo{ \vy_i - \MA_i \tran \vxx }^2 \;,\; \barcsq\right)
\end{align}
where we defined the \emph{inlier rate} $\inlierRate \triangleq 1- \outlierRate$, and introduced binary variables
$\vomega \in \{0;1\}^\nrMeasurements$ which are in charge of selecting the best $\an$ measurements (when $\omega_i = 1$, the $i$-th measurement is selected as an inlier by~\eqref{eq:LTS}, while $\omega_i=0$ otherwise); 
in words,~\eqref{eq:LTS} selects the $\an$ measurements that induce the smallest error 
for some estimate $\vxx$ and disregards the remaining measurements as outliers.   
% Regrettably
Unfortunately, the optimization problem~\eqref{eq:LTS}, as well as many other popular outlier-robust formulations, are NP-hard~\cite{Bernholt06tr-hardnessRobustEstimation} and for a long while no tractable algorithm was available for high-dimensional outlier-robust estimation problems (\eg in the problems we discuss in~\cref{sec:motivatingProblems} and~\cref{sec:openProblems}, $\vxx$'s dimension ranges from $9$ to potentially more than a thousand). 
In recent years, \emph{algorithmic robust statistics} came to the rescue, by proposing polynomial-time algorithms for outlier-robust estimation with strong performance guarantees, including~\cite{Klivans18arxiv-robustRegression,Diakonikolas19icml-robustRegression,Prasad20jrss-robustEstimation,Diakonikolas19soda-robustRegression,Bhatia17neurips-robustRegression}. For instance, while not explicitly recognized in the paper,
the algorithm by Klivans\setal~\cite{Klivans18arxiv-robustRegression} can be understood as a convex relaxation for problem~\eqref{eq:LTS} for the case where $f_i(\cdot)$ is a real-valued linear function. 
 Many of these works use Lasserre's moment relaxation~\cite{Lasserre01siopt-LasserreHierarchy} as an algorithmic workhorse, and adopt the dual view of sum-of-squares relaxations~\cite{Parrilo00thesis} to prove bounds on the quality of the estimates. % of $\vxx$.
%strong performance guarantees for the relaxation. 

\myParagraph{The computer vision lens}
In typical robotics and vision applications, the number of outliers is unknown, therefore 
outlier-robust estimators have to simultaneously look for a suitable estimate of $\vxx\gt$ while searching for a large set of inliers. In computer vision, a common formulation for outlier-robust estimation with unknown number of outliers is 
\emph{consensus maximization}~\cite{Chin17slcv-maximumConsensusAdvances}, which searches for the largest set of inliers such that the measurements selected as inliers have a low error with respect to some estimate:
\begin{align}
\label{eq:MC}
\tag{MC}
\vxx_\maxCon = \argmax_{ \substack{ \vomega \in \{0;1\}^\nrMeasurements \\ \vxx \in \Domain }} 
\;&\; \sumOverMeas \omega_i \;, \quad
\subject 
\quad \omega_i \cdot \normTwo{ \vy_i - f_i (\vxx) }^2 \leq \barcsq \;, \nonumber
\end{align}
where the given constant $\barc \geq 0$ is the maximum error for a measurement to be considered an inlier.
Problem~\eqref{eq:MC} has been shown to be inapproximable~\cite{Antonante21tro-outlierRobustEstimation,Chin18eccv-robustFitting}, and the literature has been traditionally split between 
fast heuristics (which do not provide performance guarantees) and globally optimal solvers (which 
can compute optimal solutions but run in worst-case exponential time). 
%somewhat in parallel with related work in outlier-robust statistics, 
The recent work~\cite{Yang22pami-certifiablePerception} shows that for common geometric perception problems,~\eqref{eq:MC} can be written as a polynomial optimization problem (POP) and 
relaxed via Lasserre's moment relaxation. The key insight behind~\cite{Yang22pami-certifiablePerception}, reviewed in~\cref{sec:motivatingProblems}, is that ---for common perception problems--- the domain $\Domain$ is a basic semi-algebraic set (\ie it can be written as a set of polynomial inequalities), while with a suitable parametrization, the function $f_i(\cdot)$ becomes a (vector-valued) linear function.  
% propose convex relaxations for~\eqref{eq:MC}.   

\myParagraph{The robotics lens}
In robotics, the go-to approach for outlier-robust estimation has been the use of M-estimators~\cite{Huber81}, which 
replace the least squares cost in~\eqref{eq:LS} with a robust loss function. 
In this \paper we focus on a particular choice of robust loss function, the \emph{truncated least squares (or truncated quadratic)} cost:
%\footnote{We believe many of the results in this \paper can be extended to other robust loss functions, \eg Geman-McClure, Tukey’s Biweight, L1, Huber, and Barron’s adaptive kernel, see~\cite{Yang22pami-certifiablePerception} and the discussion in~\cref{sec:openProblems}.}
%
\begin{align}
\label{eq:TLS}
\vxx_\tls 
&= 
\argmin_{ \substack{  \vxx \in \Domain } } 
\sumOverMeas \min\left( \normTwo{ \vy_i - f_i (\vxx) }^2 \;,\; \barcsq \right) \nonumber
\\
\tag{TLS}
&= \argmin_{ \substack{ \vomega \in \{0;1\}^\nrMeasurements \\ \vxx \in \Domain } } 
\sumOverMeas \omega_i \cdot \normTwo{ \vy_i - f_i (\vxx) }^2 + (1-\omega_i) \cdot \barcsq \;,
\end{align}
where the objective is the pointwise minimum of a quadratic and a constant function, \ie it is quadratic 
for small \emph{residuals} $\normTwo{ \vy_i -f_i (\vxx) } \leq \barc$, and becomes constant for large residuals.
In the second row of~\eqref{eq:TLS} we noticed that the truncated least squares cost can be equivalently rewritten using auxiliary binary variables $\vomega$, by observing that for two numbers $a,b$, $\min(a,b) = \min_{\omega\in\{0;1\}} \omega \cdot a + (1-\omega) \cdot b$. 
% Problem~\eqref{eq:TLS} is known to be highly resilient to outliers, while also being difficult to optimize with local solvers~\cite{MacTavish15crv-robustEstimation}.
Also problem~\eqref{eq:TLS} has been shown to be inapproximable in the worst case~\cite{Antonante21tro-outlierRobustEstimation}. While traditionally problem~\eqref{eq:TLS} has been attacked using local solvers~\cite{MacTavish15crv-robustEstimation} or continuation schemes~\cite{Yang20ral-GNC}, 
recent work~\cite{Yang22pami-certifiablePerception,Yang20neurips-certifiablePerception,Yang20tro-teaser,Yang19iccv-quasar,Lajoie19ral-DCGM} has shown that for common perception problems,~\eqref{eq:TLS} can be written as a POP and relaxed via Lasserre's moment relaxation.
More surprisingly, many works have empirically observed the relaxation to be tight~\cite{Yang22pami-certifiablePerception,Yang20neurips-certifiablePerception,Yang19iccv-quasar}, at least for reasonable levels of noise and outliers, with very recent work~\cite{Peng22arxiv-robustRotationSearch} providing initial 
theoretical results to support such empirical evidence, at least for the specific problem of rotation search.
 However, the performance of these estimators is commonly demonstrated via empirical evaluation, and the literature is still lacking more general
 theoretical guarantees on the quality of the resulting estimates.

% Research Convergence
\myParagraph{Catalyst, convergence, and contribution}
Despite the heterogeneity of the formulations reviewed above, %one can observe that 
we observe that
recent years have witnessed a convergence across fields 
 %recent work across fields has witnessed a convergence of  
 towards designing tractable algorithms for high-dimensional outlier-robust estimation using 
 moment relaxations. 
 A few examples include~\cite{Klivans18arxiv-robustRegression,Karmalkar19neurips-ListDecodableRegression,Yang22pami-certifiablePerception,Yang20neurips-certifiablePerception,Yang20tro-teaser,Yang19iccv-quasar,Lajoie19ral-DCGM,Peng22arxiv-robustRotationSearch,Carlone18ral-robustPGO2D}. 
%The catalyst for 
Such a convergence has been triggered by the progress in polynomial optimization via moment and sum-of-squares relaxations, starting from the seminal works~\cite{Lasserre01siopt-LasserreHierarchy,Parrilo00thesis,Shor98noa,Nesterov00ao,Parrilo03mp-sos}. 
 At the same time, research across fields has remained disconnected, with researchers being mostly unaware of the parallel work in other areas. 

The goal of this \paper is to bridge this gap and connect geometric perception problems in robotics and vision to novel tools 
in outlier-robust statistics. Towards this goal 
we adapt and extend recent results from robust statistics
to the setup and formulations commonly found in robotics and vision.  
For the case with low outlier rates (\ie $\beta \ll 0.5$), we adapt results 
from~\cite{Klivans18arxiv-robustRegression}, which considers outlier-robust regression using least trimmed squares~\eqref{eq:LTS} with scalar linear measurements, 
to the robotics setup where the measurements are vector valued and 
the variables belong to a non-convex domain; we also develop a simple bound 
on the distance of the estimate from the ground truth (while~\cite{Klivans18arxiv-robustRegression} focuses on bounding the 
residual errors for the inliers).  
Then, we extend these results to the case where the number of outliers is unknown. In particular, we compute bounds on the 
estimation error (\ie the distance between the estimate and $\vxx\gt$) for~\eqref{eq:MC} and~\eqref{eq:TLS}.
These results constitute the first general performance guarantees for the convex relaxations~\cite{Yang22pami-certifiablePerception,Yang20neurips-certifiablePerception,Yang19iccv-quasar}, going beyond the empirical observations in~\cite{Yang22pami-certifiablePerception} and the problem-specific optimality guarantees 
in~\cite{Yang19iccv-quasar,Peng22arxiv-robustRotationSearch}.

Then we consider the case with high outlier rates (\ie $\beta \gg 0.5$), where a majority of the measurements are outliers. While in robotics and vision it has been observed that with random (\ie non-adversarial) outliers, 
the point estimators~\eqref{eq:MC} and~\eqref{eq:TLS} are still able to retrieve good estimates for $\vxx\gt$~\cite{Yang22pami-certifiablePerception,Yang20neurips-certifiablePerception,Yang19iccv-quasar,Yang20tro-teaser}, in the presence of adversarial outliers, the estimate resulting 
from~\eqref{eq:LTS},\,\eqref{eq:MC},\,and\,\eqref{eq:TLS}  can be arbitrarily far from the ground truth: intuitively, since the outliers constitute the majority of the measurements, they can agree on an arbitrary $\vxx$ and form a large set of mutually consistent measurements that are picked as solution to~\eqref{eq:LTS},~\eqref{eq:MC}, and~\eqref{eq:TLS}.\footnote{\label{foot:motionEstimation}Note that the case with a high number of adversarial outliers is often the one encountered in practice in robotics and vision: think about a motion estimation problem where the robot has to estimate its motion from point features detected by the camera~\cite{Scaramuzza11ram}: if there is a large moving object in front of the camera, most features may fall on the moving object (rather that on the static portion of the scene), leading to incorrect motion estimates.} 
In robotics and related fields, this setup has been recognized to require computing multiple estimates, in order to find one that is close to the ground truth, ranging from early work on multi-hypothesis target tracking~\cite{BarShalom93} and particle filters~\cite{Dellaert99}, to recent work on multi-hypothesis smoothing~\cite{Hsiao19icra-mhISAM2,Fourie16iros-isam3}. However, none of these works simultaneously provide tractable algorithms and performance guarantees for the resulting estimates. % (at least for polynomial-time algorithms).
In this \paper, we connect to the recent literature on \emph{list-decodable regression}~\cite{Karmalkar19neurips-ListDecodableRegression}, which proposes polynomial-time estimators that return a small list of estimates such that 
with high probability at least one of the estimates is close to the ground truth.
In particular, we provide a minor adaptation of the results in~\cite{Karmalkar19neurips-ListDecodableRegression} to 
account for vector-valued measurements.

Finally, we present numerical experiments on a canonical geometric perception problem to shed light on the theoretical results. The experiments provide encouraging evidence that many of the assumptions supporting the theoretical analysis (\eg certifiable hypercontractivity or certifiable anti-concentration) are often satisfied by real data. 
At the same time, they reveal a large gap between theory (which mostly guarantees performance for high-order, computationally expensive moment relaxations)  and practice (where low-order relaxations already exhibit impressive performance). Our numerical evaluation also provides the first empirical evidence that a sparse low-order moment relaxation for list-decodable regression (based on a modified version of the algorithm proposed in~\cite{Karmalkar19neurips-ListDecodableRegression}) is able to accurately recover estimates in geometric perception problems with high outlier rates, where~\eqref{eq:LTS},~\eqref{eq:MC}, and~\eqref{eq:TLS} are doomed to fail. Moreover, the experiments show that if the measurements are generated by multiple estimates (\eg different subsets of measurements are generated by different variables $\vxx\gt$), then our sparse moment relaxation for list-decodable regression is able to simultaneously recover \emph{all} the estimates generating the data.\footnote{With reference to the motion estimation example in footnote~\ref{foot:motionEstimation}, such an algorithm would simultaneously recover the motion of all the objects in the scene, rather than just the motion with respect to the object capturing most point features, which would be quite useful in practical applications.} 
We release open-source code to reproduce our numerical experiments, including an implementation of key algorithms covered in this \paper at~\url{https://github.com/MIT-SPARK/estimation-contracts}.

We remark that the emphasis in this \paper is on performance guarantees. We do not present new algorithms (we mostly propose small modifications to existing algorithms) but rather try to address the question: under which conditions on the input measurements
 can we guarantee that modern outlier-robust estimation algorithms based on moment relaxations recover an estimate close to the ground truth in the presence of outliers? These conditions are what we call an ``estimation contract''. 
Besides the proposed extensions of existing results, 
we believe the main contributions of this \paper are (i) to unify parallel research lines by pointing out 
commonalities and differences, (ii) to introduce advanced material (\eg sum-of-squares proofs) in an accessible and  self-contained presentation for the practitioner, and (iii) to point out a few immediate opportunities and open questions in outlier-robust geometric perception.
% \LC{contributions
% - x in X (for LTS and LDR)
% - vector-valued measurements
% - no known number of inliers
% -- MC
% -- TLS
% }
%
% case with extreme outliers ($>50\%$)
% LDR: List decodable regression
% \LC{TODO: read RY19, BJKK17, KP19  from LDR}
% \LC{maybe iteratively apply GNC for list decodable?}
% follow LDRpaper: start from Huber etc..
%
 %
 This ``unification'' is expected to benefit both practitioners and researchers in robust statistics.
 On the robotics and computer vision side, this \paper provides new and fairly general performance guarantees for 
 robust estimation algorithms based on moment relaxations, applied to geometric perception problems. Moreover, the \paper reviews a new proof system 
 (based on sum-of-squares proofs) that provides a richer language to discuss properties of  moment relaxations beyond the typical analysis based on a manual design of dual certificates~\cite{Yang19iccv-quasar,Peng22arxiv-robustRotationSearch,Eriksson18cvpr-strongDuality,Rosen18ijrr-sesync}. 
 Furthermore, it positions list-decodable regression based on moment relaxations as a useful and computationally tractable tool for 
 multi-hypotheses estimation.
% \myParagraph{Relevance to Robotics and Vision}
% - new guarantees
% - new proof system (largely extend system based on duality certificates)
% \myParagraph{Relevance to Robust Statistics}
On the robust statistics side, we hope the reader will be intrigued by the 
remarks about the practical performance and the empirical tightness of the moment relaxation of~\eqref{eq:TLS} 
 (discussed in greater detail in~\cite{Yang22pami-certifiablePerception}) and the practical performance of a low-order relaxation for list-decodable regression, which we believe deserve 
 further investigation. We also hope to attract further attention towards the case where the number of
 outlier is unknown and the variables are confined to semi-algebraic sets, which is the setup commonly encountered in robotics and vision problems. 
 % Finally, we hope that the setup considered in this paper, where the variables are confined to semi-algebraic sets, 
 % will trigger further research. 
% the paper brings to light three observations. 
% First, 
% the excellent practical performance of~\eqref{eq:TLS} in problems with low outliers,
% the tightness of the relaxation of~\eqref{eq:TLS} in certain classes of problems~\cite{Yang22pami-certifiablePerception}, and the  
% - new formulations: TLS, MaxCon
% - evidence things are good at order 2
% - new structure (constrained variables)

%%%%%%%%%%%%%%%%%%%%%%%%%%%%%%%%%%%%%%%%%%%%%%%%%%%%%%%%%%%%%%%%%%%%%%%
\myParagraph{\Paper structure}
\cref{sec:relatedWork} starts by reviewing related works across fields.
\cref{sec:motivatingProblems} showcases the fact that many estimation problems in robotics and vision can be 
formulated using a linear measurement model with variables belonging to a basic semi-algebraic set.
\cref{sec:notationAndPreliminaries} introduces notation and preliminaries (while postponing as many details as possible to the appendix).
\cref{sec:problemStatement} succinctly states the problem of outlier-robust estimation and our quest for estimation contracts.
\cref{sec:lowOutliers} studies the case with low outlier rates and provides error bounds for~\eqref{eq:LTS}, \eqref{eq:MC}, and~\eqref{eq:TLS}. 
\cref{sec:highOutliers} studies the case with high outlier rates and adapts results from list-decodable regression.
\cref{sec:experiments} presents numerical experiments on a rotation search problem.
\cref{sec:openProblems} discusses opportunities and open problems and~\cref{sec:conclusions} concludes the \paper.

%!TEX root = main.tex

\section{Related Work}
\label{sec:relatedWork}

\myParagraph{Outlier-robust estimation in robotics and computer vision} 
Traditional algorithms for outlier-robust estimation for geometric perception can be divided into \emph{fast heuristics} and \emph{globally optimal solvers}. Two general frameworks for designing fast heuristics are \ransac~\cite{Fischler81} and \emph{graduated non-convexity} (GNC)~\cite{Black96ijcv-unification,Yang20ral-GNC,Antonante21tro-outlierRobustEstimation}. {\ransac} solves problem~\eqref{eq:MC} by repeatedly sampling 
a minimal set of measurements, computing an estimate $\vxx$ from the minimal set, and searching for a large set of measurements that agrees with the estimate $\vxx$~\cite{Chin17slcv-maximumConsensusAdvances}; while being a well-established algorithm, \ransac mostly applies to low-dimensional problems, since the expected
number of  iterations required by \ransac to find an outlier-free set of measurements grows exponentially in the size of the minimal measurement set; moreover, the number of \ransac iterations   also grows exponentially with the outlier rate $\outlierRate$~\cite{Bustos2015iccv-gore3D}.
{GNC} solves M-estimation problems, including~\eqref{eq:TLS}, via a continuation scheme that starts from a convex approximation of the cost function and then gradually recovers the robust cost. GNC can scale to high-dimensional problems and applies to a broad class of loss functions ---including adaptive loss functions~\cite{Barron19cvpr-adaptRobustLoss,Chebrolu20arxiv-adaptiveCost}. However, the effectiveness of the continuation scheme is problem-dependent.
Iterative local optimization %of robust costs with initialization 
is also a popular fast heuristics for the case where an initial guess is available~\cite{Schonberger16cvpr-SfMRevisited,Agarwal13icra,Sunderhauf12icra,Hitchcox22ral-robustEstimation}. Approximate but deterministic algorithms have also been designed to solve consensus maximization~\cite{Le19pami-deterministicApproximateMC}. 
On the other hand, globally optimal solvers are guaranteed to retrieve optimal solutions for outlier-robust estimation but run in worst-case exponential time. These solvers are typically designed using Branch and Bound 
or mixed-integer programming~\cite{Bazin12accv-globalRotSearch,Bustos18pami-GORE,Izatt17isrr-MIPregistration,Yang2014ECCV-optimalEssentialEstimationBnBConsensusMax,Paudel15iccv-robustSOS,Li09cvpr-robustFitting,Li07iccv-3DRegistration,Enqvist08eccv-robustOptimalPose}. 
Enqvist\setal~\cite{Enqvist15ijcv-tractableRobustEstimation} present an outlier-robust estimator that 
runs in polynomial time with respect to the number of measurements, but still has exponential complexity
 in the dimension of the variable to be estimated (see also~\cite{Enqvist12eccv-robustFitting,Olsson08cvpr-polyRegOutlier}). 
% \maybeOmit{Graph-theoretic approaches are popular for
%boost 
%robustness via a preliminary 
% outlier pruning~\cite{Yang19rss-teaser,Bustos18pami-GORE,Shi21icra-robin}.}
% Graph-theoretic approaches are popular for
%boost 
%robustness via a preliminary 
% outlier pruning~\cite{Yang19rss-teaser,Bustos18pami-GORE,Shi21icra-robin}.
Chin\setal~\cite{Chin15cvpr-CMTreeAstar} frame~\eqref{eq:MC} as a tree search problem and propose 
an approach based on $A^*$ search.

\emph{Certifiably optimal outlier-robust algorithms}~\cite{Yang22pami-certifiablePerception,Yang20neurips-certifiablePerception,Yang20tro-teaser,Yang19iccv-quasar,Lajoie19ral-DCGM} have recently emerged as 
a way to obtain optimal solutions to certain outlier-robust problems in polynomial time (both with respect to the number of measurements and the dimension of the variable to be estimated).
These approaches relax non-convex outlier-robust formulations, including~\eqref{eq:MC} and~\eqref{eq:TLS}, into a  convex optimization. 
% While certain robust costs, such as L1~\cite{Wang13ima} and Huber~\cite{Carlone18ral-robustPGO2D}, are already convex, they have low breakdown points (\ie they can be compromised by a single outlier~\cite{Maronna19book-robustStats}). 
The key insight behind these algorithms is twofold: 
(i) for several estimation problems arising in robotics and vision, we can rewrite %the objective and constraints of 
the optimization problems arising in outlier-robust estimation as 
\emph{polynomial optimization problems} (POP), and 
(ii) for these POPs, Lasserre's moment relaxation~\cite{Lasserre01siopt-LasserreHierarchy}  is empirically seen to be tight at the lowest relaxation order (\ie order 2), hence providing a tractable way to compute robust estimates. 
These works mostly provide a posteriori optimality certificates~\cite{Yang22pami-certifiablePerception,Yang20neurips-certifiablePerception} (\ie they solve the relaxation, compute a rounded estimate, and then compute a suboptimality gap for that specific estimate, possibly certifying its optimality), while the papers~\cite{Yang19iccv-quasar,Peng22arxiv-robustRotationSearch} 
provide problem-specific a priori conditions under which the relaxation is tight.
% they also provide estimation constraints, that describe when such estimates are expected to the close to the ground truth~\cite{Yang20tro-teaser}.
% % Problem-specific certifiably robust algorithms have been proposed 
% % to deal with high-breakdown-point formulations, such as the TLS cost. 
% recent results on tightness of relaxation~\cite{Peng22arxiv-robustRotationSearch}, but problem specific (rotation search)

\myParagraph{Outlier-robust estimation in robust statistics} 
Outlier-robust estimation has been studied in robust statistics, starting from the seminal work of 
Huber, Tukey, and Rousseeuw~\cite{Tukey75picm,Huber81,Rousseeuw81,Rousseeuw87book}, among many others. 
However, these classical frameworks do not immediately lead to tractable algorithms and 
 are often provably hard to 
 solve~\cite{Bernholt06tr-hardnessRobustEstimation,Antonante21tro-outlierRobustEstimation,Chin17slcv-maximumConsensusAdvances}, leading to algorithms whose complexity increases exponentially in the dimension of $\vxx$.
 % for instance, typical instances of M-estimation~\cite{Huber81} and consensus maximization~\cite{Chin17slcv-maximumConsensusAdvances}
% lead to non-convex optimization problems, while other popular estimation frameworks, including~\eqref{eq:LTS}, 
%  lead to algorithms whose complexity is exponential in the dimension of $\vxx$~\cite{Bernholt06tr-hardnessRobustEstimation} or are even inapproximable~\cite{Chin18eccv-robustFitting,Antonante21tro-outlierRobustEstimation}.
The growing interest towards high-dimensional outlier-robust estimation has triggered a large number of 
recent works on ``algorithmic robust statistics'', which focus on the design of tractable algorithms for high-dimensional estimation with outliers.
 Early algorithms along this line have focused on clustering and moment estimation (\eg how to robustly estimate mean and covariance of a distribution given samples)~\cite{Lai16focs-momentEstimation,Diakonikolas16focs-robustMomentEstimation,Diakonikolas19siam-robustMomentEstimation,Charikar17stoc-robustEstimationTheory,Kothari17arxiv-clustering,Kothari18stoc-robustMomentEstimation,Diakonikolas18soda-robustLearningGaussians}
and subspace learning for classification in the presence of malicious noise~\cite{Klivans09alp-subspaceLearning,Diakonikolas18stoc-subspaceLearning,Awasthi17acm-robustLinearSeparators}. 
% strong adversary model: outliers can be arbitrary and can even adapt to the inliers.

More relevant to this \paper is the recent work on \emph{robust linear regression}~\cite{Klivans18arxiv-robustRegression, % x
Diakonikolas19icml-robustRegression, % x
Prasad20jrss-robustEstimation, % x
Diakonikolas19soda-robustRegression, % x
Bhatia17neurips-robustRegression, % x
Karmalkar18arxiv-robustL1Regression},  % x
where one has to compute an estimate of $\vxx\gt$ given an outlier-corrupted set of linear measurements:
 \begin{align} 
 \label{eq:linModelStats}
 y_i = \va_i\tran \vxx\gt + \epsilon, \quad 
 \text{ with } \quad  y_i \in \Real{} 
\quad \text{ and } \quad \vxx\gt \in \Real{\dimx},
 \end{align}
where $y_i$ are given scalar measurements, $\va_i$ are given vectors of suitable dimension, and $\epsilon$ is the measurement noise. Typical contamination models studied in the literature include the \emph{Huber contamination model} (\eg~\cite{Du17arxiv-sparseRobustEstimation}), which assumes that a fraction of measurements are randomly generated by an unknown outlier distribution, and the \emph{strong adversary model}, in which the outlier-generation mechanism has access to the inliers
% is also given the inliers 
and can replace a given fraction of them with arbitrary outliers.
%then adjust the outliers accordingly to trick an estimator.
 % Some of these works assume only the measurements $\vy_i$ to be corrupted, while others consider the more general setup where also the (linear) measurement model is corrupted.
% work with sparse $\vxx$ (LASSO etc): , even with Huber contamination model~\cite{Du17arxiv-sparseRobustEstimation}
% More relevant: work on regression. Sparse and very recent.
The literature on outlier-robust linear regression in the low-outlier regime ($\outlierRate \ll 0.5$) includes approaches based on 
iterative outlier filtering~\cite{Diakonikolas19icml-robustRegression,Diakonikolas19soda-robustRegression}, 
robust gradient estimation~\cite{Prasad20jrss-robustEstimation}, 
hard thresholding~\cite{Bhatia17neurips-robustRegression,Bhatia15neurips-hardThresholding,Chen13icml-robustSparseRegression}, 
$\ell_1$-regression~\cite{Nguyen13tit-l1robustEstimation,Karmalkar18arxiv-robustL1Regression,Wright10tit-l1robustEstimation},
and moment/sum-of-squares relaxations~\cite{Klivans18arxiv-robustRegression}.
 %~\cite{Diakonikolas16focs-robustMomentEstimation,Diakonikolas18soda-robustLearningGaussians,}
% useful to discuss low-outlier rejection~\cite{Klivans18arxiv-robustRegression,Diakonikolas19icml-robustRegression,Prasad20jrss-robustEstimation,Diakonikolas19soda-robustRegression,Bhatia17neurips-robustRegression} 
% -- tolerate (\eg $<10\%$)~\cite{Karmalkar19neurips-ListDecodableRegression}
% - others assume corruption only in the labels: long list, \eg~\cite{Bhatia17neurips-robustRegression,Karmalkar18arxiv-robustL1Regression}
Our interest towards moment/sum-of-squares relaxations is motivated by that fact that 
 the framework in~\cite{Klivans18arxiv-robustRegression} can accommodate 
a large class of linear measurement models ---more precisely, fairly general distributions for the vectors $\va_i$ in~\eqref{eq:linModelStats}--- while related work often takes stronger assumptions on $\va_i$ (\eg~\cite{Diakonikolas19soda-robustRegression} assumes $\va_i$'s are sampled from a Gaussian distribution);
moreover, the corresponding algorithms operate in the strong adversary model, which is useful to derive worst-case 
guarantees for geometric perception. % (\cf \cite{Prasad20jrss-robustEstimation}).
%
% given data is assumed Gaussian~\cite{Diakonikolas19soda-robustRegression}: too restrictive
In the high-outlier regime ($\outlierRate \gg 0.5$), it is still possible to recover a good estimate when the outliers are not adversarial. The works~\cite{Nguyen13tit-l1robustEstimation,Wright10tit-l1robustEstimation} provide approaches based on  $\ell_1$-minimization to recover good estimates assuming oblivious outliers (rather than adversarial), 
% cannot adversarially contaminate the measurements given a 
and under relatively strong assumptions on the measurements 
(\cite{Wright10tit-l1robustEstimation} assumes sub-orthogonality, 
\cite{Nguyen13tit-l1robustEstimation} assumes the vectors $\va_i$ to be drawn from an isotropic Gaussian). The works~\cite{DOrsi21icml-outliersOverwhelm,Tsakonas13spl-robustEstimation,Norman22arxiv-robustRegression} also assume oblivious outliers and analyze the Huber loss estimator (\cite{Tsakonas13spl-robustEstimation} assumes Gaussian $\va_i$,~\cite{DOrsi21icml-outliersOverwhelm} requires the column span of the matrix staking all the vectors  $\va_i$ not to contain approximately sparse vectors,~\cite{Norman22arxiv-robustRegression} assumes a bounded or sub-Gaussian distribution for the $\va_i$'s).
The works~\cite{Karmalkar19neurips-ListDecodableRegression,Raghavendra20soda-ListDecodableRegressions} are the first to provide polynomial-time algorithms for \emph{list-decodable} linear regression with adversarial outliers, where the estimator returns multiple hypotheses such that at least one of the hypotheses is close to the ground truth $\vxx\gt$. List-decodable learning was originally introduced in~\cite{Balcan08stoc-clustering} and was also studied in the context of moment estimation in~\cite{Charikar17stoc-robustEstimationTheory,Diakonikolas18stoc-listDecodable,Kothari17arxiv-clustering}.

%!TEX root = main.tex

\section{Motivating Problems}
\label{sec:motivatingProblems}

% \LC{need to use $\gt$ everywhere?}

This section shows that many foundational problems in \geometricperception can be formulated as 
\emph{linear} estimation problems with variables belonging to a \emph{basic semi-algebraic set} (\ie a set that can be described by a finite number of polynomial inequality constraints).\footnote{This observation constitutes the basis for many certifiable solvers for outlier-free and outlier-robust estimation that have been developed in robotics and vision, see~\cite{Yang22pami-certifiablePerception,Rosen18ijrr-sesync} and the references therein.} 
Mathematically, we will formulate the measurement model for many problems of interest as:
 \begin{align} 
 \label{eq:linModel}
 \vy_i = \MA_i\tran \vxx\gt + \vepsilon, \quad  \text{ with } \quad  \vy_i \in \Real{\dimy} 
\quad \text{ and } \quad \vxx\gt \in \Domain \subseteq \Real{\dimx}.
 \end{align}
 This observation will be crucial towards adapting existing results in robust statistics to the geometric perception setup.
Indeed, the measurement model~\eqref{eq:linModel} is essentially the same of the one used in robust regression, see~eq.~\eqref{eq:linModelStats}, with the 
exception that measurements $\vy_i \in \Real{\dimy}$ are vector-valued, and the variable $\vxx\gt$ belongs to 
a specific domain $\Domain$ (typically, a smooth manifold) rather than $\Real{\dimAmbient}$. 
Towards recasting many estimation problems in \geometricperception as in eq.~\eqref{eq:linModel},
we start by restating the well-known fact that 
---for common variables of interest in robotics and vision--- the domain $\Domain$ is a basic semi-algebraic set (see \eg~\cite{Yang22pami-certifiablePerception,Tron15rssws3D-dualityPGO3D,Carlone15icra-verification,Briales18cvpr-global2view,Yang20cvpr-shapeStar}).
% We start by observing that the domain of variables of interest in geometric perception can be described 
% by semi-algebraic sets.

% with additional polynomial constraints.
\begin{fact}[Variables in geometric perception]
\label{fact:variablesInGeoPerception}
% \LC{varieties!}
The $d$-dimensional \emph{Special Orthogonal group} 
$\SOd \triangleq \{ \MR \in \revise{\Real{d\times d}} \mid \MR\tran \MR = \eye_d, \det(\MR) = +1 \}$
(\ie the group of rotations), 
and the \emph{Special Euclidean group} 
$\SEd \triangleq \left\{ {\tiny \matTwo{\MR & \vt \\ \zero & 1} } \in \revise{\Real{(d+1)\times (d+1)}} \mid  \MR \in \SOd, \vt \in\Real{d} \right\}$
(\ie the group of poses and rigid transformations) 
are basic semi-algebraic 
sets.\footnote{ More precisely, $\SOd$ and $\SEd$ are \emph{algebraic varieties}, \ie sets that can be described 
by a finite set of polynomial \emph{equality} constraints 
$\{ f_1(\vxx) = 0, f_2(\vxx) = 0, \ldots,f_m(\vxx)=0 \}$. Note that we can rewrite a variety  
as a basic semi-algebraic set by replacing each equality constraint $f_i(\vxx) = 0$ with two inequality constraint $f_i(\vxx) \geq 0$ and $f_i(\vxx) \leq 0$, hence a variety can be understood as a special case of a basic semi-algebraic set.
 } 
Moreover, several geometric constraints (\eg field-of-view or maximum distance constraints) can be 
written as basic semi-algebraic sets.
% \footnote{We use the standard notation where $\SOd \triangleq \{ \MR \in \revise{\Real{d\times d}} \mid \MR\tran \MR = \eye_d, \det(\MR) = +1 \}$ is the $d$-dimensional \emph{special orthogonal group} (rotation matrices), while $\SEd \triangleq \left\{ {\tiny \matTwo{\MR & \vt \\ \zero & 1} } \in \revise{\Real{(d+1)\times (d+1)}} \mid 
% \MR \in \SOd, \vt \in\Real{d} \right\}$ is the \emph{special Euclidean group} (poses and rigid transformations).}
\end{fact}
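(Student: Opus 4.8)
The plan is to write down, for each set in question, an explicit finite family of polynomial (in)equalities in the entries of the variable, and then to invoke the elementary closure fact that a finite intersection of basic semi-algebraic sets is again basic semi-algebraic (one simply concatenates the defining lists). First I would handle $\SOd$: parametrizing $\MR = (R_{ij}) \in \Real{d\times d}$, the constraint $\MR\tran\MR = \eye_d$ is exactly the family of $\binom{d+1}{2}$ degree-$2$ polynomial equalities $\sum_{k=1}^{d} R_{ki}R_{kj} = \delta_{ij}$ for $1\le i\le j\le d$, and $\det(\MR) = +1$ is a single degree-$d$ polynomial equality (the Leibniz formula exhibits $\det$ as a degree-$d$ polynomial in the $R_{ij}$). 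Thus $\SOd$ is the common zero set of finitely many polynomials --- an algebraic variety --- and by the reduction noted in the footnote ($f = 0 \iff f\ge 0 \wedge -f\ge 0$) it is a basic semi-algebraic set; one can even drop to the single inequality $\det(\MR)\ge 0$, since orthogonality already forces $\det(\MR)\in\{\pm 1\}$.

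For $\SEd$, I would view a pose as the matrix $\matTwo{\MR & \vt\\ \zero & 1}\in\Real{(d+1)\times(d+1)}$ and impose: (i) the $d+1$ affine equalities pinning the last row to $(0,\dots,0,1)$; (ii) the polynomial equalities above forcing the upper-left block to lie in $\SOd$; with the translation block $\vt\in\Real{d}$ left free. Again this is a finite list of polynomial equalities, so $\SEd$ is a variety, hence basic semi-algebraic; equivalently $\SEd\cong\SOd\times\Real{d}$, and the product of a basic semi-algebraic set with $\Real{d}$ is basic semi-algebraic.

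For the geometric constraints, the same template applies: a maximum-distance constraint $\normTwo{\vt-\vp}\le\rho$ is the single degree-$2$ inequality $\rho^2 - \normTwo{\vt-\vp}^2\ge 0$; a half-space (e.g.\ ``object in front of the camera'') constraint $\inprod{\va}{\vxx}\le b$ is a single degree-$1$ inequality; and a field-of-view/cone constraint, requiring a (possibly rigidly transformed) point $\vp$ to lie within angle $\theta$ of an axis $\vu$, is captured by the two polynomial inequalities $\inprod{\vu}{\vp}\ge 0$ and $\inprod{\vu}{\vp}^2 - \cos^2\theta\,\normTwo{\vp}^2\ge 0$. Appending any such finite collection to the list defining $\SOd$ or $\SEd$ keeps the set basic semi-algebraic, which finishes the argument.

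The statement is essentially elementary, so there is no real ``hard part''; the only care needed is bookkeeping --- being explicit about the degrees of the constraints (in particular recognizing $\det$ as a degree-$d$ polynomial), and correctly invoking the equality-to-inequality reduction together with closure under finite intersection. I would also remark that the distance and field-of-view bounds are genuine inequalities that cannot be replaced by equalities, which is precisely why the general claim is phrased in terms of basic semi-algebraic sets rather than varieties.
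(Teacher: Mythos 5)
Your proposal is correct and follows essentially the same route the paper takes (and sketches in its footnote): exhibit $\SOd$ and $\SEd$ as zero sets of finitely many polynomial equalities (orthogonality as degree-2 equalities, $\det(\MR)=1$ via the Leibniz formula, the fixed last row for $\SEd$), convert equalities to pairs of inequalities, and note that the additional geometric constraints are already finite lists of polynomial inequalities. The extra observations (that $\det(\MR)\geq 0$ suffices under orthogonality, and the degree bookkeeping) are fine but not needed beyond what the paper asserts.
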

% \emph{basic semialgebraic} set, \ie a set formed by polynomial inequalities~\cite{Ahmadi16ln-sos} (we can always rewrite an equality $h(\vxx)=0$  as two inequalities $h(\vxx)\leq 0, h(\vxx)\geq 0$ ). 
% This fact is well known, see \eg~\cite{Yang22pami-certifiablePerception}.

% We will now show that for many problems of interests, the measurement model can be written as a
%  linear function of $\vxx\gt$.
 Now we observe that several \geometricperception problems can be written as \emph{linear} models, 
%over basic semi-algebraic sets, 
akin to eq.~\eqref{eq:linModel} (we will discuss few more examples, including SLAM and rotation averaging in~\cref{sec:openProblems}). 
 The reader familiar with \geometricperception can safely skip this section. 

\setcounter{theorem}{0}

%%%%%%%%%%%%%%%%%%%%%%%%%%%%%%%%%%%%%%%%%%%%%%%%%%%%%%%%%%%%%%%%%%%%%%%%%%%%%%%%%%%%%%%%%%%%%%%
\begin{example}[Rotation search (a.k.a. the Wahba problem)]
\label{ex:wahba}
Estimate the rotation $\MR \in \SOthree$ that aligns pairs of 3D points $(\va_i,\vb_i)$, $i=1,\ldots,\nrMeasurements$. 
The measurement model for the (inlier) measurements is given by: 

\vspace{-6mm}

\begin{align}
\label{eq:wahba}
\vb_ i = \MR \va_i + \vepsilon 
\quad 
\xRightarrow{
 \vxx\gt \triangleq \vectorize{\MR}
 }
\quad
\overbrace{\vb_ i = (\va_i\tran \kron \eye_3) \vxx\gt + \vepsilon}^{\text{same form as~\eqref{eq:linModel}}},
\end{align}
where $\vepsilon$ is the measurement noise.  In~\eqref{eq:wahba}, 
we used the vectorization operator $\vectorize{\cdot}$ to transform a 3D matrix into a vector
 and manipulated the expression using standard vectorization properties.
% (in particular, for two matrices $$ $\kron$ denotes the Kronecker product). 
Rotation search arises, for instance, in satellite attitude estimation~\cite{wahba1965siam-wahbaProblem} and image stitching~\cite{Yang19iccv-quasar}.
\end{example}

%%%%%%%%%%%%%%%%%%%%%%%%%%%%%%%%%%%%%%%%%%%%%%%%%%%%%%%%%%%%%%%%%%%%%%%%%%%%%%%%%%%%%%%%%%%%%%%
\begin{example}[3D point cloud registration]
\label{ex:pointCloudRegistration}
Estimate the rigid transformation $(\MR, \vt)$, with $\MR \in \SOthree$ and $\vt \in\Real{3}$,  
that aligns pairs of 3D points $(\va_i,\vb_i)$, $i=1,\ldots,\nrMeasurements$. 
The measurement model for the (inlier) measurements is: 
\begin{align}
\label{eq:3Dregistration}
\vb_ i = \MR \va_i + \vt + \vepsilon 
\quad 
\xRightarrow{
{\vxx\gt \triangleq {\tiny \matTwo{ \vectorize{\MR} \\ \vt} } } 
}
\quad
\overbrace{
\vb_ i = \matTwo{\va_i\tran \kron \eye_3 & \eye_3} \vxx\gt + \vepsilon 
}^{\text{same form as~\eqref{eq:linModel}}},
\end{align}
% where $\vepsilon$ is the measurement noise. 
Point-to-plane 3D registration can be similarly formulated using a linear model involving a rigid transformation~\cite{Yang22pami-certifiablePerception}. Registration problems are commonly encountered in instance-level object pose estimation, scan-matching for 3D reconstruction, and (stereo or RGB-D) visual odometry~\cite{Yang20tro-teaser}.
\end{example}

%%%%%%%%%%%%%%%%%%%%%%%%%%%%%%%%%%%%%%%%%%%%%%%%%%%%%%%%%%%%%%%%%%%%%%%%%%%%%%%%%%%%%%%%%%%%%%%
% \begin{example}[Mesh Registration \cite{Briales17cvpr-registration,Shi21icra-robin}] 
% \label{ex:mesh}
% Given a set of $N$ putative correspondences from a 3D point cloud to a 3D mesh, where the point cloud $\{(\vp_i,\vu_i) \}_{i=1}^N$ is represented as a collection of points ($\vp_i \in \Real{3}$) with estimated normals ($\vu_i \in \usphere{2}$), and the mesh $\{(\vq_i,\vv_i )\}_{i=1}^N$ is represented as a collection of faces with unit normals $(\vv_i \in \usphere{2})$ and arbitrary points that belong to them $(\vq_i \in \Real{3})$, mesh registration seeks the best rigid transformation $\vxx = (\MR,\vt) \in \SOthree \times \Real{3}$ to align the point cloud with the mesh. The residual function is chosen as: $r(\vxx,\vz_i) =\sqrt{ \norm{\inprod{\vv_i}{\vq_i - \MR \vp_i - \vt}}^2 + \norm{\vv_i - \MR \vu_i}^2 }$, where $\norm{\inprod{\vv_i}{\vq_i - \MR \vp_i - \vt}}$ is the point-to-plane distance, and $\norm{\vv_i - \MR \vu_i}$ is the normal-to-normal distance. Similar to Example \ref{ex:pointcloud}, we enforce $\vt \in \ball^3_T$. Fig. \ref{fig:applications}(d) visualizes an instance of mesh registration using the {\teddybear} model from the {\homebrew} dataset \cite{Kaskman19-homebrewedDB} (outlier correspondences shown in red).
% \end{example}

%%%%%%%%%%%%%%%%%%%%%%%%%%%%%%%%%%%%%%%%%%%%%%%%%%%%%%%%%%%%%%%%%%%%%%%%%%%%%%%%%%%%%%%%%%%%%%%
\begin{example}[3D-3D category-level object pose and shape estimation]
\label{ex:category}
Estimate the rigid transformation $(\MR, \vt)$, with $\MR \in \SOthree$ and $\vt \in\Real{3}$, 
and the shape parameters $\vc \in \Real{K}$ (describing the shape of a 3D object) from 3D point measurements $\vb_i$, $i=1,\ldots,\nrMeasurements$.  
The generative model for the (inlier) measurements is: 
\begin{align}
\label{eq:3D3DCategoryLevel}
\vb_ i = \MR \MS_i \vc + \vt + \vepsilon 
\quad
\xRightarrow{
	\substack{ \vt' \triangleq \MR\tran\vt \;,\; \vepsilon' \triangleq \MR\tran\vepsilon \;,\; 
\vxx\gt \triangleq {\tiny \vect{\vectorize{\MR\tran} \\ \vt' \\ \vc} }}
} 
% \overset{ \substack{ \vt' \triangleq \MR\tran\vt, \;\; \vepsilon' \triangleq \MR\tran\vepsilon  \\ 
% \vxx \triangleq {\tiny \vect{\vectorize{\MR\tran} \\ \vt' \\ \vc} }} }{\Longrightarrow} 
\quad
% \overbrace{
\zero = \matThree{ -\vb_i\tran \kron \eye_3 & \eye_3 & \MS_i } \vxx\gt + \vepsilon',
% }^{\text{same form as~\eqref{eq:linModel}}}
\end{align}
where $\MS_i \in \Real{3 \times K}$ is a matrix of given basis shapes (such that the final shape $\MS_i \vc$, $i=1,\ldots,\nrMeasurements$, is written as a linear combination of the basis shapes). 
Note that for~\eqref{eq:3D3DCategoryLevel} to fall in the class of problems~\eqref{eq:linModel}, we have to assume that $\vepsilon$ has an isotropic distribution, such that the distribution of $\vepsilon' \triangleq \MR\tran\vepsilon$ does not depend on the unknown $\MR$. 
The model in~\eqref{eq:3D3DCategoryLevel} is known as the 
\emph{active shape model}~\cite{Cootes95cviu}, 
and finds application in face detection, human pose estimation, and object pose and shape estimation, see~\cite{Zhou15cvpr,Yang20cvpr-shapeStar,Shi21rss-pace,Shi22arxiv-PACE} among others.
In these problems, it is not uncommon for the number of shapes $K$ to be large, \eg $K \gg 100$.  
% that aligns pairs of 3D points $(\va_i,\vb_i)$, $i=1,\ldots,\nrMeasurements$. 
% We include $\psi(\vxx,\lambda) = \lambda \norm{\vc}^2$ as a regularization for the shape parameters $\vc$, as in~\cite{Shi21rss-pace}. Again, we enforce $\vt \in \ball^3_T, \vc \in \ball^K_T$ to be both bounded. Fig. \ref{fig:applications}(f) pictures an example of category-level perception from the {\apollo} dataset~\cite{Wang19pami-apolloscape}, where one estimates the pose and shape of a vehicle given 2D semantic keypoint detections with associated depth values (outliers shown in red).
% , as the problem can be ill-posed when the number of shapes is much large than the number of keypoints $K > N$. 
\end{example}

%%%%%%%%%%%%%%%%%%%%%%%%%%%%%%%%%%%%%%%%%%%%%%%%%%%%%%%%%%%%%%%%%%%%%%%%%%%%%%%%%%%%%%%%%%%%%%%
\begin{example}[Absolute pose estimation]
\label{ex:absolutepose}
Estimate the camera pose $(\MR, \vt)$, with $\MR \in \SOthree$ and $\vt \in\Real{3}$, 
from (calibrated) pixel observations, written as unit-norm vectors $\vu_i$, picturing known 3D points $\va_i \in \Real{3}$, $i=1,\ldots,\nrMeasurements$.  
The generative model for the (inlier) measurements is: 
\begin{align}
\label{eq:absolutePose}
\lambda_i \vu_i = \MR \va_i + \vt + \vepsilon 
\quad 
\xRightarrow{
\substack{ \vepsilon_i' \triangleq [\vu_i]_\times \vepsilon \;,\; \vxx\gt \triangleq {\tiny \vect{\vectorize{\MR} \\ \vt } }} 
}
\quad
\overbrace{
\zero = \matThree{ [\vu_i]_\times (\va_i\tran \kron \eye_3)  & [\vu_i]_\times} \vxx + \vepsilon_i' 
}^{\text{same form as~\eqref{eq:linModel}}},
\end{align}
Intuitively, the measurement model on the left describes the fact that ---up to an unknown scale $\lambda_i$---
 the pixel measurement $\vu_i$ pictures the 3D point $\va_i$ after it is transformed to the camera frame according to the camera pose $(\MR,\vt)$.
While the model on the left is already linear, on the right-hand side we algebraically eliminated the scale factors $\lambda_i$ and 
obtained a lower-dimensional linear measurement model by 
% where on the left-hand-side, we observed that the observation $\vu_i$ is a scaled version (with some unknown scale factor $\lambda_i$) of the 3D point, and on the right 
% we simplified the generative model by 
multiplying both sides by the orthogonal projector $[\vu_i]_\times\triangleq \eye_3 - \vu_i \vu_i\tran$, which is such that $[\vu_i]_\times \vu_i = \zero$.
 The absolute pose estimation problem arises in camera localization in known scenes and object pose estimation from camera images, see \eg~\cite{Kneip2014ECCV-UPnP,Schweighofer2008bmvc-SOSforPnP,Yang21iccv-damp}.
\end{example}

Two remarks are in order.
First, we observe that while the adoption of the linear model in~\eqref{eq:linModel} with variables in $\Domain$ might seem  an obvious choice, 
 this has not necessarily been the go-to approach in robotics and vision.
 In many cases, researchers still prefer non-linear measurement models over Euclidean space,\footnote{For instance, one can parametrize a rotation using Euler angles (or a tangent-space representation), which makes the corresponding measurement models nonlinear, but allows treating the variables as 
 unconstrained quantities.}  rather than
 a linear model over the semi-algebraic domain $\Domain$, since the former leads to \emph{unconstrained} nonlinear least squares problems that can be quickly solved by local solvers when an initial guess is available~\cite{Dellaert17fnt-factorGraph}. 
 The second remark is that ---in the outlier-free case--- all the \geometricperception examples in this section can be considered solved; in particular, rotation search and 3D registration admit a closed-form solution~\cite{Horn87josa}; 
  absolute pose estimation can be solved globally using Gr\"obner basis~\cite{Kneip2014ECCV-UPnP}; 
  3D-3D category-level perception can be solved via a tight convex relaxation~\cite{Shi22arxiv-PACE}.
  On the other hand, some of these problems remain challenging in the presence of outliers and are still 
  the subject of active research, see~\cite{Bustos2015iccv-gore3D,Yang20tro-teaser,Shi22arxiv-PACE} and the references therein.

We conclude this section with an assumption that is required for the theoretical analysis and practical performance of the machinery used in this \paper, \ie moment/sum-of-squares relaxations.
%important for the technical derivation in this paper.

\setcounter{theorem}{1}
\begin{assumption}[Explicitly bounded domain]
\label{ass:explicitlyBoundedxs}
In this \paper we assume that the domain $\Domain$ is \emph{explicitly bounded} (or \emph{Archimedian}), meaning that it contains a constraint in the form $\normTwo{\vxx}^2 \leq M_x^2$ for some finite constant $\boundx > 0$. 
\end{assumption} 

This assumption is typically not restrictive since many geometric variables already belong to bounded sets. 
For instance, rotations $\MR\in\SOthree$ satisfy $\normTwo{\vectorize{\MR}}^2 = 3$; 
the shape vector in~\cref{ex:category} is typically assumed to belong to the probability simplex, hence it
satisfies $\normTwo{\vc}^2 \leq 1$; 
% normTwo{x} \leq \normOne{x} = 1 
finally, translations can be assumed to be bounded since the sensors producing the measurements in the examples above have finite range.

% \grayout{
% For mathematical convenience (\ie to satisfy the Archimedeanness condition in Theorem \ref{thm:lasserre}), we assume the translation to be bounded: $\vt \in \ball^3_T$, where $ \ball^q_T \triangleq \{\vt \in \Real{q}\mid \norm{\vt} \leq T \}$ defines a $q$-dimensional ball centered at the origin with radius $T$. 
% }
% the set of poses and rotations can be described by quadratic (in-)equality constraints, a fact already used in, \eg~\cite{Tron15RSSW-rotationdeterminant,Carlone15icra-verification,Briales18cvpr-global2view,Yang20cvpr-perfectshape}.
% \begin{example}[Single Rotation Averaging \cite{Hartley13ijcv}] 
% \label{ex:singlerotation}
% Estimate a $\dimrot$-dimensional rotation $\MR \in \mySOd$
% %$\{ \vz_i = \tldMR_i \in \mySOd \}_{i=1}^N$, 
% given rotation measurements in the form:
% \beq
% \eeq
% single rotation averaging seeks to find the best average rotation $\vxx = \MR \in \mySOd$. The residual function is chosen as the chordal distance between $\MR$ and $\tldMR_i$: $r(\vxx, \vz_i) = \Vert \MR - \tldMR_i \Vert$. 
% \end{example}
%!TEX root = main.tex

\section{Preliminaries on Moment Relaxations and Sum-of-Squares Proofs}
\label{sec:notationAndPreliminaries}

This section reviews the three main ingredients of modern techniques for outlier-robust estimation: 
semidefinite programming, moment relaxations, and sum-of-squares proofs. 
In the next sections, we will use these concepts to state outlier-robust estimation algorithms 
(using moment relaxations, which lead to relaxing our estimation problems to tractable semidefinite programs) 
and to analyze their performance (using sum-of-squares proofs).
We keep this presentation short and pragmatic, and refer the interested reader to the appendix and 
to specialized references~\cite{Lasserre10book-momentsOpt,Parrilo03mp-sos,Barak16notes-proofsBeliefsSos} for details.
% key facts about multi-block semidefinite programming \cite{tutuncu03MP-SDPT3} (Section \ref{sec:pre-sdp}), 
% and provides an introduction to polynomial optimization and Lasserre’s semidefinite relaxation hierarchy~\cite{Lasserre01siopt-lasserrehierarchy} (Section \ref{sec:pre-pop}).
% While somewhat mathematically dense, these preliminaries are designed as a pragmatic introduction for the non-expert reader.

% \subsection{Notation}
% {\bf Scalars, vectors, matrices}. 
\myParagraph{Notation} We use lowercase characters (\eg $y$) to denote real scalars, bold lowercase characters (\eg $\vy$) for real (column) vectors, and bold uppercase characters (\eg $\MA$) for real matrices. $\eye_d$ denotes the identity matrix of size $d \times d$, $\ones_d$ denotes the vector of ones of size $d$, $\zero$ denotes the all-zero vector or matrix of appropriate size.
The symbol $\kron$ denotes the Kronecker product.
% Given $\MA \in \Real{m \times n}$, $a_{ij}$ denotes the $(i,j)$-th entry of $\MA$, and $[\MA]_{\calI,\calJ}$ denotes the submatrix of $\MA$ formed by indexing rows $\calI \subseteq [m]$ and columns $\calJ \subseteq [n]$, where $[n] \triangleq \{1,\dots,n \}$ is the set of positive integers up to $n$. For a vector $\vv \in \Real{n}$, we shorthand $v_i$ for its $i$-th entry and $\vv_\calI$ for its entries indexed by $\calI \subseteq [n]$.
For a square matrix $\MA \in \Real{d \times d}$,
$\trace{\MA} \triangleq \sum_{i=1}^d a_{ii}$ denotes the matrix trace.
For $\MA,\MB \in \Real{m \times d}$, $\inprod{\MA}{\MB}\triangleq \trace{\MA\tran \MB} =  \sum_{i=1}^m \sum_{j=1}^d A_{ij} B_{ij}$ denotes the usual inner product between real matrices.   
$[\MA,\MB]$ and $[\MA\;\MB]$ denote the \emph{horizontal} concatenation, while $[\MA \vcat \MB]$ denotes the \emph{vertical} concatenation, for proper $\MA,\MB$. 
For a vector $\vv$, we use $\normOne{\vv}$, $\normTwo{\vv}$, and $\normInf{\vv}$ to denote the $\ell_1$, $\ell_2$, and $\ell_\infty$ norm of $\vv$, respectively. 
For $a \in \Real{}$, the symbol $\ceil{a}$ returns the smallest integer $\geq a$.
% and the Frobenious norm of a matrix, 
% \ie $\norm{\va} \triangleq \sqrt{\inprod{\va}{\va}}$ for any $\va \in \Real{n}$ and $\norm{\MA} \triangleq \sqrt{\inprod{\MA}{\MA}}$ for any $\MA \in \Real{m \times n}$. 
% $\norm{\va}_1 \triangleq \sum_{i=1}^n \abs{a_i}$ denotes the $\ell_1$ norm of a vector. 
%
% {\bf Sets}.
% We use $\sym{n}$ to denote the space of $n \times n$ real symmetric matrices, and $\psd{n}$ (resp. $\pd{n}$) to denote the set of matrices in $\sym{n}$ that are \emph{positive semidefinite} (resp. definite). We also write $\MX \succeq 0$ (resp. $\MX \succ 0$) to indicate $\MX$ is positive semidefinite (resp. definite).  
% $\usphere{d-1} \triangleq \{ \vv \in \Real{d} \mid \norm{\vv} = 1 \}$ denotes the $d$-dimensional unit sphere. 
% $\ball_r^d \triangleq \{ \vv \in \Real{d} \mid \norm{\vv} \leq r \}$ denotes the $d$-dimensional ball centered at zero with radius $r$, and $\calC_{\alpha} \triangleq \{ \vv \in \Real{3} \mid (\tan{\alpha}) v_3 \geq \sqrt{v_1^2 + v_2^2} \}$ denotes the 3D cone pointed at zero with half angle $\alpha \in (0, \frac{\pi}{2} )$ and central axis aligned to the $z$-axis. 
% Any $\MR \in \SOtwo$, \ie a 2D \emph{rotation matrix}, can be equivalently parametrized as $\MR = [v_1, -v_2 \vcat v_2, v_1]$ for some $\vv \in \usphere{1}$.
We use $\sym{d}$ to denote the space of $d \times d$ real symmetric matrices, and $\psd{d}$ (resp. $\pd{d}$) to denote the set of matrices in $\sym{n}$ that are \emph{positive semidefinite} (resp. definite). We also write $\MX \succeq 0$ (resp. $\MX \succ 0$) to indicate $\MX$ is positive semidefinite (resp. definite).   
We denote with $\Natural{}$ the set of natural numbers (nonnegative integers), 
and for a given $m \in \Natural{}$ with $m \geq 1$, we use the notation $[m]\triangleq \{1,\ldots,m\}$ to denote the set of indices from $1$ to $m$. 
 For a finite set $\calA$, $|\calA|$ denotes the cardinality of $\calA$. 
Finally, $\ones_\calA \in \{0;1\}^m$ is the indicator vector of the set $\calA\subseteq[m]$, whose $i$-th entry is 1 if $i\in\calA$ or zero otherwise.
%$\nnint$ (resp. $\pint$) denotes the set of nonnegative (resp. positive) integers, and $\bbQ$ denotes the set of rational numbers. 
% Consequently, any polynomial $p(\vxx)$ can be written as $p(\vxx) = \inprod{\vc}{[\vxx]_{\deg{p}}}$ with $\vc \in \Real{\bar{d}_{\deg{p}}}$ being its vector of coefficients.

%%%%%%%%%%%%%%%%%%%%%%%%%%%%%%%%%%%%%%%%%%%%%%%%%%%%%%%%%%%%%%%%%%%%%%%%%%%%%%%%
\subsection{Semidefinite Programming}
\label{sec:pre-sdp}

The algorithms discussed in this \paper require solving large semidefinite programs. 
A semidefinite program is a convex optimization problem and can be solved in polynomial time 
by off-the-shelf optimization solvers, \eg~\cite{mosek,CVXwebsite}.
More formally, a \emph{multi-block} semidefinite programming (SDP) problem
%\footnote{We adopt the standard \emph{primal} form, \red{see~\cite{tutuncu03MP-SDPT3}}.} 
% in the standard \emph{primal} form \cite{tutuncu03MP-SDPT3} 
is an optimization problem in the following {primal}  form \cite{tutuncu03MP-SDPT3}:
\begin{equation}\label{eq:primalSDP}
\min_{\MX \in \bbX} \cbrace{\inprod{\MC}{\MX} \mid \calA (\MX) = \vb,\ \MX \succeq 0} \tag{SDP},
\end{equation}
where the variable $\MX = (\MX_1,\dots,\MX_l)$ is a collection of $l$ square matrices (the ``blocks'') with $\MX_i \in \Real{d_i \times d_i}$ for $i=1,\ldots,l$ (conveniently ordered such that $d_1\geq \dots \geq d_l$); 
the domain $\bbX \triangleq \sym{d_1} \times \dots \times \sym{d_l}$ and the constraint $\MX \succeq 0$ restrict the matrices to be symmetric positive semidefinite.
 The objective is a linear combination of the matrices in $\MX$, \ie 
$\inprod{\MC}{\MX} \triangleq \sum_{i=1}^l \inprod{\MC_i}{\MX_i}$ (for given matrices $\MC_i\in \sym{d_i}, i=1,\ldots,l$).
The problem includes independent linear constraints $\calA (\MX) = \vb$, where:
\beq
\calA (\MX) \triangleq
\sbracket{
\sum_{i=1}^l  \inprod{\MA_{i1}}{\MX_i} \vcat
\dots \vcat
\sum_{i=1}^l  \inprod{\MA_{im}}{\MX_i}
} \in \Real{m},
\eeq
for given matrices $\MA_{ij}\!\in\!\sym{d_i}, i\!=\!1,\ldots,l$, and $j\!=\!1,\dots,m$,
and a given vector $\vb\!\in\!\Real{m}$.  
%\footnote{Each $\calA_i$ represents a generic linear map that maps $\MX_i$ to a vector, \ie 
% $\calA_i (\MX_i) \triangleq [\inprod{\MA_{ij}}{\MX_i}]_{j=1}^m$ with $\MA_{ij} \in \sym{n_i},j=1,\dots,m$}
% Finally, the constraint $\MX\!\in\!\calK$ enforces that each matrix in $\MX$ is positive semidefinite 
% (\ie $\calK\!\triangleq\!\psd{n_1}\!\times\!\dots\!\times\!\psd{n_l}$ is a product of $l$ positive semidefinite cones). \revise{We also write $\MX \succeq 0$ to indicate each matrix in $\MX$ is positive semidefinite when $\MX$ is a collection of matrices (note that we need the notation $\MX \in \calK$ for describing details of our SDP solver).}

%%%%%%%%%%%%%%%%%%%%%%%%%%%%%%%%%%%%%%%%%%%%%%%%%%%%%%%%%%%%%%%%%%%%%%%%%%%%%%%%%%%%%%%%%%%%%%%%%%%%%
\subsection{Polynomial Optimization and Lasserre's Hierarchy of Moment Relaxations}
\label{sec:pre-pop}

Our interest towards polynomial optimization stems from the fact 
that the outlier-robust formulations discussed in~\cref{sec:introduction}, \ie~\eqref{eq:LTS},~\eqref{eq:MC}, and~\eqref{eq:TLS}, can be written as 
polynomial optimization problems when the measurement model is linear (or, more generally, polynomial) and $\vxx$ belongs to a basic semi-algebraic set. Moreover, as we will see in this section, Lasserre's hierarchy of moment relaxations provides a systematic approach to relax polynomial optimization problems (which in general are hard to solve) into semidefinite programs (which can be solved in polynomial time). 
%In hindsight, 
Indeed, the tools covered in this section already provide a 
 fairly general template to design outlier-robust estimators for geometric perception and 
 have been used in recent works~\cite{Yang22pami-certifiablePerception,Yang20neurips-certifiablePerception,Yang20tro-teaser,Yang19iccv-quasar,Lajoie19ral-DCGM}.

{\bf Polynomial optimization}. Given a vector $\vxx = [x_1 \vcat x_2 \vcat \ldots \vcat x_{\dimx}] \in \Real{\dimx}$,
% (recall that ``$\vcat$'' denotes vertical concatenation), 
a \emph{monomial} in $\vxx$ is a product of $x_i$'s with \emph{nonnegative} integer exponents 
%\ie $\vxx^{\valpha} \triangleq x_1^{\alpha_1}\cdots x_{\dimx}^{\alpha_{\dimx}}$ for $\valpha \in \nnint^{\dimx}$ 
(for instance $x_1^2 x_5 x_6^3$ is a monomial). The sum of the exponents %, $\norm{\valpha}_1$ 
is called the \emph{degree} of the monomial (\eg the monomial $x_1^2 x_5 x_6^3$ has degree $6$). 
A real \emph{polynomial} $p(\vxx)$ is a finite sum of monomials with real coefficients. 
\revise{We shorthand $p$ in place of $p(\vxx)$ when the variable $\vxx$ is clear from the context.}
%, \ie $p(\vxx) \triangleq \sum_{\valpha \in \calP } c_{\valpha} \vxx^{\valpha}$, where $\calP \subset \nnint^d $ is the set of unique exponents, and $[c_{\valpha}]_{\valpha \in \calP}$ is the set of real coefficients. 
The degree of a polynomial $p$, denoted by $\deg{p}$, is the \emph{maximum} degree of its monomials. 
The ring of real polynomials is denoted by $\polyring{\vxx}$. 

A polynomial optimization problem (POP) is an optimization problem in the form: 
\begin{equation}\label{eq:pop}
\pstar  \triangleq \min_{\vxx \in \Real{\dimx}} \cbrace{p(\vxx) \ \middle\vert\ \substack{ \displaystyle h_i(\vxx) = 0, i=1,\dots,l_h \\ \displaystyle g_j(\vxx) \geq 0, j = 1,\dots,l_g } }, \tag{POP}
\end{equation} 
where $p, h_i, g_j \in \polyring{\vxx}$. Problem~\eqref{eq:pop} is hard to solve in general~\cite{Lasserre10book-momentsOpt} (\eg hard combinatorial problems with binary constraints $x_i \in \{0,1\}$ can be written in the form~\eqref{eq:pop} by imposing $x_i^2 = x_i, \; i=1,\dots,\dimx$), but it admits a well-studied convex relaxation that we review below. 
%\eg it can model combinatorial binary constraints $x_i \in \{+1,-1\}$ via $x_i^2 - 1 = 0,i=1,\dots,d$.

{\bf Lasserre's hierarchy of moment relaxations~\cite{Lasserre10book-momentsOpt,Lasserre01siopt-LasserreHierarchy,Lasserre18icm-momentRelaxation}}. 
% We now give a simplified (and somewhat less conventional) introduction to Lasserre's hierarchy that is sufficient for understanding our paper. 
% For a comprehensive treatment, we refer the reader to~\cite{Lasserre10book-momentsOpt}.  %monograph
We denote with $[\vxx]_{\relaxOrder}$ 
  the \revise{vector} of monomials of degree up to $\relaxOrder$. For example, if $\vxx = [x_1 \vcat x_2]$ and $\relaxOrder=2$, then $[\vxx]_2 = [1\vcat x_1 \vcat x_2 \vcat x_1^2 \vcat x_1 x_2 \vcat x_2^2]$. The dimension of $[\vxx]_\relaxOrder$ is $\binomial{d}{\relaxOrder} \triangleq \nchoosek{\dimx+\relaxOrder}{\relaxOrder}$. With $[\vxx]_\relaxOrder$, we form the so-called \emph{\momentMatrix} $\MX_{2\relaxOrder} \triangleq [\vxx]_\relaxOrder [\vxx]_{\relaxOrder}\tran$. 
For instance, for $\vxx = [x_1 \vcat x_2]$ and $\relaxOrder=2$ (\cf $[\vxx]_2$ above)\footnote{Contrary to~\cite{Yang22pami-certifiablePerception}, we use the subscript $2\relaxOrder$ instead of $\relaxOrder$ for the moment matrix computed as $[\vxx]_\relaxOrder [\vxx]_{\relaxOrder}\tran$; we believe this notation is more intuitive (the moment matrix contains monomials up to degree $2\relaxOrder$) and is more consistent with the standard definition of  pseudo-distributions, as we will see in~\cref{app:pseudo-distributions}.}:
\beq\label{eq:momentMatrix}
\MX_{4} \triangleq  [\vxx]_2 [\vxx]_2\tran \!=\!
\small{  
\left[
\begin{array}{cccccc}
	1 &  x_1  &  x_2  & x_1^2 & x_1 x_2 & x_2^2 \\
	x_1 &  x_1^2  &  x_1 x_2  & x_1^3 & x_1^2 x_2 & x_1 x_2^2 \\
	x_2 &  x_1 x_2  &  x_2^2  & x_1^2 x_2 & x_1 x_2^2 & x_2^3 \\
	x_1^2 &  x_1^3  &  x_1^2 x_2  & x_1^4 & x_1^3 x_2 & x_1^2 x_2^2 \\
	 x_1 x_2 &   x_1^2 x_2  &   x_1 x_2^2  & x_1^3 x_2 & x_1^2 x_2^2 & x_1 x_2^3 \\
	 x_2^2 &  x_1 x_2^2 &  x_2^3  & x_1^2 x_2^2 & x_1 x_2^3 & x_2^4 
\end{array}
\right]
}.
\eeq
% When convenient, we will use the notation $[\MX_{2\relaxOrder}]_{u,v}$ (for some monomials $u$ and $v$) to access the entries in row- 
% $\MX_{4}$

By construction, $\MX_{2\relaxOrder}$ is positive semidefinite and has $\rank{\MX_{2\relaxOrder}} = 1$. 
Moreover, the set of \emph{unique} entries in $\MX_{2\relaxOrder}$ is simply $[\vxx]_{2\relaxOrder}$, \ie the set of monomials of degree up to $2\relaxOrder$ (these monomials appear multiple times in $\MX_{2\relaxOrder}$, \eg see $x_1 x_2$ in eq.~\eqref{eq:momentMatrix}).
Therefore, a key fact is that \emph{---for a suitable matrix $\MA$--- 
the linear function $\inprod{\MA}{\MX_{2\relaxOrder}}$
can express any polynomial in $\vxx$ of degree up to $2\relaxOrder$.}\footnote{
For instance, we can write the polynomial $5 x_1^3 + 0.3 x_1^4 + 4 x_1^2 x_2^2 + x_2^4$ as $\inprod{\MA}{\MX_{2\relaxOrder}}$ with:
\begin{align}
\MA = \smaller{  
\left[
\begin{array}{cccccc}
	0 &  0  &  0  & 0 & 0 & 0 \\
	0 &  0  &  0  & \frac{5}{2} & 0 & 0 \\
	0 &  0  &  0  & 0 & 0 & 0 \\
	0 &  \frac{5}{2}  &  0  & 0.3 & 0 & 2 \\
	0 &  0  &  0  & 0 & 0 & 0 \\
	0 &  0  &  0  & 2 & 0 & 1 
\end{array}
\right].
}
\end{align}
% where the matrix $\MA$ simply contains the coefficients of the polynomial at suitable locations.
}

The key idea of Lasserre's hierarchy of moment relaxations is to (i) rewrite~\eqref{eq:pop} using the moment matrix $\MX_{2\relaxOrder}$, 
(ii) relax the (non-convex) rank-1 constraint on $\MX_{2\relaxOrder}$ (and only enforce $\MX_{2\relaxOrder}\succeq 0$, which is a convex constraint),\footnote{As\,shown\,in~\cref{app:pseudo-distributions}, relaxing\,the\,rank constraint converts the {\momentMatrix into a \emph{pseudo-moment matrix}.}} and
(iii) add redundant constraints that are trivially satisfied in~\eqref{eq:pop} but still contribute to improving the quality of the relaxation.
This leads to a \emph{convex} semidefinite program in the form~\eqref{eq:primalSDP} (the interested reader can find a more detailed derivation in~\cref{app:momentRelaxation}), which can be conveniently solved in polynomial time:
\begin{equation}\label{eq:momentRelaxation}
m\opt \triangleq \min_{\MX \in \bbX} \cbrace{\inprod{\MC}{\MX} \mid \calA (\MX) = \vb,\ \MX \succeq 0} 
\tag{LAS$_{\relaxOrder}$}.
\end{equation}
One can solve the relaxation for different choices of $\relaxOrder$, leading to a \emph{hierarchy} of convex relaxations; $\relaxOrder$~is typically referred to as the \emph{order} of the relaxation.
The importance of Lasserre's hierarchy lies in its stunning theoretical properties: when the set defined by the constraints is explicitly bounded (\cref{ass:explicitlyBoundedxs}), then 
Lasserre's relaxation~\eqref{eq:momentRelaxation} can be proven to compute a solution to the original~\eqref{eq:pop} when $\relaxOrder \rightarrow \infty$, and ---under further technical conditions--- the relaxation computes a solution to~\eqref{eq:pop} 
for some finite $\relaxOrder$~\cite{Lasserre18icm-momentRelaxation,Nie14mp-finiteConvergenceLassere}. %, see~\ref{app:momentRelaxation}. 
In practice, the dimension of $\MX_{2\relaxOrder}$ quickly grows for increasing $\relaxOrder$, hence solving the SDP~\eqref{eq:momentRelaxation} is only practical for small relaxation orders $\relaxOrder$. % (\eg $\relaxOrder = 1$ or $\relaxOrder = 2$).
% \footnote{While semidefinite programming can be solved in polynomial time, current solvers are relatively slow in practice when the dimension of the problem (or the number of constraints) is large, see~\cite{Yang21arxiv-stride} for a more extensive discussion.} 
Therefore, it would be interesting to see if we can compute a good approximation of the solution $\vxx\opt$ of~\eqref{eq:pop} from the solution of the moment relaxation at some small $\relaxOrder$.
We expand on this point below.

\myParagraph{Rounding and a posteriori guarantees}
How can we estimate a ``good'' solution for~\eqref{eq:pop} and assess its quality using~\eqref{eq:momentRelaxation}? 
Given a solution $\MX\opt$ to~\eqref{eq:momentRelaxation} (again, computable in polynomial time) for some small relaxation order $\relaxOrder$, one can extract a feasible (but possibly suboptimal) estimate $\hat{\vxx}$ for~\eqref{eq:pop}, using a  \emph{rounding} procedure; for instance, we can extract the entries of $\MX\opt$ corresponding to $\vxx$, which we denote as $\getEntries{\MX\opt}{\vxx}$ (\cf~\eqref{eq:momentMatrix}, where we can extract 
$\getEntries{\MX\opt}{\vxx} = [x_1 \vcat x_2
]$ from the first column of the matrix), and then project the corresponding vector to the set $\Domain$. 
The rounding procedure is often problem dependent, but it is straightforward to implement in the geometric perception problems considered in this \paper, where projecting to the feasible set~\eqref{eq:pop} is easy (\eg given a generic $3\times 3$ matrix, it is easy to project the matrix onto the set of 3D rotations, see~\cite{Hartley13ijcv}). 
Interestingly, checking the quality of the estimate $\hat{\vxx}$ \emph{a posteriori} (\ie after solving the moment relaxation) is also easy: if we call $\hat{p} \triangleq p(\hat{\vxx})$ the objective attained by $\hat{\vxx}$ in~\eqref{eq:pop}, 
it holds:
\begin{align}
\label{eq:suboptimality}
m\opt 
\leq 
p\opt 
\leq 
 \hat{p},
\end{align}
where the first inequality follows from the fact that~\eqref{eq:momentRelaxation} is a relaxation, while 
the second follows from the fact that $p\opt$ is the optimal (lowest) cost over the feasible set of~\eqref{eq:pop} and $\hat{\vxx}$ is feasible for~\eqref{eq:pop}. 
Since we can compute $m\opt$ after solving the relaxation~\eqref{eq:momentRelaxation}
 and we can also compute $\hat{p}$ after rounding, we can use~\eqref{eq:suboptimality} to bound the suboptimality of $\hat{\vxx}$ (\ie from~\eqref{eq:suboptimality}, we can bound the suboptimality gap of $\hat{\vxx}$ as $\hat{p} - p\opt \leq \hat{p} - m\opt$) and understand how far is $\hat{\vxx}$ from an optimal solution. 
Moreover, if $\hat{p} = m\opt$, the inequalities~\eqref{eq:suboptimality} become tight, and we can conclude that $\hat{p} = p\opt$ and $\hat{\vxx}$ is indeed an optimal solution.
This a posteriori checks are at the basis of the certifiable algorithms proposed in 
robotics and vision~\cite{Yang22pami-certifiablePerception,Yang20neurips-certifiablePerception,Yang20tro-teaser,Yang19iccv-quasar,Lajoie19ral-DCGM}, which first reformulate robust estimation as a~\eqref{eq:pop} and then apply the following algorithmic workflow:
\begin{align}
\label{eq:workflow}
\text{\eqref{eq:pop}}
\xRightarrow{ \text{moment relaxation} }
\text{\eqref{eq:momentRelaxation}}
\xRightarrow{ \text{SDP solver} }
(\MX\opt, m\opt)
\xRightarrow{ \text{rounding} }
(\hat{\vxx}, \hat{p})
\xRightarrow{ \text{certification} }
\hat{p} \overset{?}{=} m\opt,
\end{align}
where the last step is to certify optimality of $\hat{\vxx}$ whenever $\hat{p} = m\opt$.
In practice, what is making the works~\cite{Yang22pami-certifiablePerception,Yang20neurips-certifiablePerception,Yang20tro-teaser,Yang19iccv-quasar,Lajoie19ral-DCGM} (as well as the previous work on outlier-free estimation, \eg~\cite{Rosen18ijrr-sesync,Carlone15iros-duality3D,Carlone15icra-verification,Briales17cvpr-registration,Eriksson18cvpr-strongDuality}) compelling is the empirical observation that the moment relaxation is \emph{tight} (meaning $\hat{p} = m\opt$) in many practical problems for $\relaxOrder=1$ or $\relaxOrder=2$, hence the workflow above allows computing optimal solutions to~\eqref{eq:pop} efficiently in practice. 
Moreover, the certification step can be directly used in practice, \eg a robot can trust an estimate 
if it is certified as optimal or discard it (or at least handle it more carefully) if no optimality certificate is obtained.

{\bf What about a priori guarantees?} 
 The a posteriori guarantees above can only be obtained for a given problem and input data
  % (\eg given the measurement model and the input measurements in~\eqref{eq:TLS}), 
 after solving the corresponding relaxation, and for a specific choice of rounding. 
 Moreover, the empirical observation that the relaxation is often tight seems to 
 be a \emph{deus ex machina} and unexpectedly solve a provably hard problem. 
Therefore in this \paper we are interested in \emph{a priori guarantees}. How can we theoretically justify the 
empirical performance of the moment relaxation for geometric perception? can we characterize the 
input data for which we expect to obtain good relaxations and rounded estimates close to the ground-truth variable we are trying to estimate? These questions are of practical relevance, since answering these questions (i)~would enable a better understanding of the conditions under which the perception system of a robot is expected to work well, and 
(ii)~would allow the design of novel perception front-ends that can produce measurements that are more likely to lead to good estimates.
% if we can characterize the set of input data that leads to accurate estimates, we would be in a better position to (i) understand under which conditions our algorithms are supposed to work (which might help to understand the \emph)
%define what is domain of operation of the algorithm (\ie in which cases it is expected to work )

Providing a priori guarantees would be easy for very large relaxation orders, \ie $\relaxOrder \rightarrow \infty$, since in this case the solution of~\eqref{eq:momentRelaxation} can be proven to retrieve the solution of~\eqref{eq:pop} (see~\cref{app:momentRelaxation}). However, here we want to provide a priori guarantees for very small $\relaxOrder$. In robotics and vision, such guarantees have only appeared 
for specific problems~\cite{Rosen18ijrr-sesync,Eriksson18cvpr-strongDuality,Yang19iccv-quasar,Peng22arxiv-robustRotationSearch}. % and through a manual insepe 
However, it would be desirable to have a more general language to discuss properties of moment relaxations for small relaxation orders $\relaxOrder$. 
Luckily, the sum-of-squares proofs system, described below, provides such a language.
%  Finally, it remains unclear for which family of input data the relaxation
%
% magic, got lucky with empirical evidence
%
% characterize the input data for which relaxation works.
%
% 
%
% \LC{from slides} at relaxation order $d$, $O(n^{2d})$ variables and $O(n^d)$ matrix size

%%%%%%%%%%%%%%%%%%%%%%%%%%%%%%%%%%%%%%%%%%%%%%%%%%%%%%%%%%%%%%%%%%%%%%%%%%%%%%%%
\subsection{Sum-of-Squares Proofs}
\label{sec:pre-sos}

Sum-of-squares (\sos) proofs provide an advanced way to reason about polynomial constraints and 
to infer properties of the moment relaxation introduced above. 
 Here we want to give some intuition about the \sos proof system, 
 and we postpone a more formal introduction to~\cref{app:sosProofs}.

 {\bf Why do we need to reason about polynomial constraints?} Let us start with some intuition and motivation, before formalizing the concept of \sos proof.
 Assume that we rephrased one of the problems presented in the introduction, \eg~\eqref{eq:TLS}, as a~\eqref{eq:pop} and obtained the corresponding moment relaxation~\eqref{eq:momentRelaxation}. 
We then solved~\eqref{eq:momentRelaxation} to obtain a solution matrix $\MX\opt$ (later, we are going to call this object a ``pseudo-moment matrix''). Now, we would like to infer that $\MX\opt$ satisfies some property of interest; for instance, in our estimation problems we may want to ensure that for some suitable linear function $\calL(\cdot)$, the following holds:
\begin{align}
\label{eq:exampleRelation}
\normTwo{ \calL(\MX\opt) - \vxx\gt }^2 \leq \eta^2,
\end{align}  
which states that the estimate $\calL(\MX\opt)$ computed from $\MX\opt$ is within a distance $\eta$ from the ground-truth  $\vxx\gt$. 
The \sos proofs system does exactly that: it provides a systematic way to conclude that the solution of the relaxation of a system of polynomial constraints, such as the constraints in~\eqref{eq:TLS} (let us call these polynomial constraints $\calA$) also satisfies a desired polynomial relation, \eg~\eqref{eq:exampleRelation} (let us call this polynomial constraint $\calB$). %, which must be expressible as a polynomial.
 % Note that both~\eqref{eq:pop} and~\eqref{eq:exampleRelation} must be written as polynomials. 
 The key idea is that if we can provide an \emph{\sos proof} that $\calA$ ``implies'' $\calB$, a novel type of proof that we introduce below, % we formalize below, 
 % in a sense that we make more formal below, 
 then a moment relaxation of $\calA$ will also satisfy $\calB$.
 %the corresponding relaxations will also be related.
 % crux is to show that there is an \sos proof relating the two sets of polynomial
 Note that~\eqref{eq:exampleRelation} is only an example of implication we might be attempting to prove, while in general, we might try to prove other (polynomial-expressible) properties of the 
 moment relaxation. 
% In particular, it states that the solution of the moment relaxation of a system of polynomial constraints $\calA$ satisfies a desired property $\calB$ as long as 

{\bf What is an \sos proof?}
First of all, we recall that a polynomial $p(\vxx)$ is \emph{sum-of-squares} (\sos) if there exist polynomials
 $q_1(\vxx),\ldots,q_t(\vxx)$ such that $p = q^2_1 + \ldots + q_t^2$.
% Let us denote with $f(\vxx)$ a polynomial in variables $\vxx = [x_1,x_2,\ldots,x_n]$.
Now consider a system of polynomial constraints $\calA(\vxx) = \{f_1(\vxx)\geq 0, f_2(\vxx)\geq 0,\ldots,f_m(\vxx)\geq 0\}$ for some given polynomials $f_i(\vxx)$, $i \in [m]$, and 
the inequality $g(\vxx)\geq 0$ (for some polynomial $g(\vxx)$).
% In the following, we omit the argument when clear from the context and write $f$ instead of $f(\vxx)$.
We are then interested in defining a proof that $\calA(\vxx)$ implies $g(\vxx)\geq 0$, \ie any $\vxx$ that satisfies $\calA(\vxx)$ is such that $g(\vxx)\geq 0$.

\begin{definition}[Sum-of-squares proof]\label{def:sosProof}
\sosProofDef{eq:sosProof-main}
\end{definition}

From eq.~\eqref{eq:sosProof-main}, it is clear why the polynomials $p_\calS$ are a ``proof'' of $g \geq 0$ for any $\vxx$ satisfying $\calA$: for any $\vxx \in \calA$, $\prod_{i\in\calS}f_i \geq 0$ by definition, hence if we can write $g$ as the product of a sum-of-squares (hence non-negative) polynomial and $\prod_{i\in\calS}f_i$, we automatically prove that $g \geq 0$ whenever $\vxx \in \calA$. 
We note that the existence of an \sos proof is a \emph{sufficient} condition for the fact that 
$g(\vxx) \geq 0$ whenever $\calA(\vxx)$ is satisfied. However, it is not a necessary condition, in that 
there might be valid relations that cannot be proved using \sos proofs. 
For instance, while the polynomial $p(\vxx) = x_1^4 x_2^2 + x_1^2 x_2^4 + 1 - 3 x_1^2 x_2^2$ 
(the Motzkin polynomial, see~\cite[p. 59]{Blekherman12Book-sdpandConvexAlgebraicGeometry}) 
is such that $p(\vxx) \geq 0$ for all $\vxx \in \Real{2}$, it is not possible to develop an sos proof
for such a fact, since $p(\vxx)$ is 
not \sos~\cite[p. 59]{Blekherman12Book-sdpandConvexAlgebraicGeometry}.
In other words, the \sos proof system is more stringent than the traditional proofs we might be used to, 
and properties that hold with traditional proofs might not hold in the \sos sense. At the same time, if we are able to derive an \sos proof we can obtain strong guarantees for our moment relaxations, and existing results reassure us that all relevant properties of moment relaxations can be proven via \sos proofs (see~\cref{app:sosProofs}). 
% on the positive side, 

{\bf How to derive an \sos proof?}
Sum-of-squares provide a proof system to reasons about polynomial
constraints. 
For instance, imagine that our goal is to derive a proof 
that $\calA$ implies $g \geq 0$. 
In a traditional mathematical proof system, we might first prove that 
$\calA$ implies $g' \geq 0$ (for some other polynomial $g'$) and that $g' \geq g$, to finally conclude that $\calA$ implies $g \geq 0$.
% This ``chain of implications'' is key to obtaining the desired relation after a number of manipulations.
% We do these sorts of manipulation all the time when deriving results in robotics and vision.
Similarly, the \sos proof system provides a systematic mechanism to derive this chain of implications, 
but \emph{with more stringent rules} compared to the ones we are typically used to in robotics and vision.
% if we have an \sos proof that $\calA$ implies $g \geq 0$, we want to use 
% such a proof system to infer if another implication also holds true, say $\calA$ implies $g' \geq 0$ (for some other polynomial $g'$). This ``chain of implications'' is key to obtaining the desired relation after a number of manipulations.
% For instance, 
% for function to be sos its degree must be even
% Reasoning in this proof system is not immediate. 
For instance, we have already observed the fact that $p(\vxx) \geq 0$ for some degree $\satisfyOrder$ polynomial does not necessarily imply that there is a sum-of-squares proof $\sosimply{\vxx}{\satisfyOrder} \{ p(\vxx) \geq 0 \}$.
Similarly, for some polynomial constraints $\calA$ and polynomials $g'(\vxx)$ and $g(\vxx)$
 such that $g(\vxx) \geq g'(\vxx)$ for every $\vxx \in \Real{\dimd}$, 
the fact that $\calA \sosimply{\vxx}{\satisfyOrder} \{ g'(\vxx) \geq 0 \}$ 
does not necessarily imply that $\calA \sosimply{\vxx}{\satisfyOrder} \{ g(\vxx) \geq 0 \}$, since the latter fact might not admit a sum-of-squares proof. In this sense, the \sos proof system is more restrictive than the typical algebraic manipulation we are used to. 
Fortunately, previous work provides a toolkit of inference rules that can be used to correctly reason in the \sos proof system. We collect a set of ``\sos rules'' in~\cref{app:sosProofs}, mostly drawing 
from~\cite{Karmalkar19neurips-ListDecodableRegression,Klivans18arxiv-robustRegression,
Hopkins18stoc-mixtureModelAndSoS,Ma16focs-tensorDecompositionViaSoS,Diakonikolas22pmlr-robustMeanViaSoS}. 

% every sos polynomial is nonnegative, while the converse is not true.
{\bf How to use \sos proofs?} 
Assume we were able to derive a proof that 
$\calA(\vxx) \sosimply{\vxx}{\satisfyOrder} \{ g(\vxx) \geq 0 \}$. 
Then, the key fact that we mention here informally, and formalize in~\cref{app:sosProofs}, 
is that any pseudo-moment matrix that satisfies a moment relaxation of 
  $\calA$ also satisfies a moment relaxation of $g \geq 0$.\footnote{We will need some extra notation 
  to make this claim more precise, and postpone those details to~\cref{app:sosProofs}. 
  } 
  This fact will be instrumental in proving that the solution of the moment relaxations 
  of~\eqref{eq:LTS},~\eqref{eq:TLS}, and~\eqref{eq:MC} have to satisfy some desirable properties, 
  and will be key to deriving the error bounds we present below.
 The \sos proof system has found extensive applications in algorithmic statistics (tracing back to the seminal paper~\cite{Barak12stoc-sos}), leading to the so called ``proof to algorithms'' paradigm, 
  where the proof system directly suggests tractable algorithms to solve a problem.
% \LC{ started by Boaz Barak, Fernando GSL Brandao, Aram W Harrow, Jonathan Kelner, David Steurer, and
% Yuan Zhou. Hypercontractivity, sum-of-squares proofs, and their applications. In Proceedings
% of the forty-fourth annual ACM symposium on Theory of computing, pages 307–326, 2012}
% If we have an sos proof A->B, then we know that a moment matrix solving a 
% relaxation of A also solves a relaxation of B.
% Need to be careful since orders might be different,..
% details in appendix.

 The reader should be able to follow the rest of this \paper without reading the appendices, whose material 
 is mostly useful to support the technical proofs and for a more rigorous introduction to the material 
 informally reviewed in this section. At the same time, we invite the reader to use \cref{app:momentRelaxation} to \cref{app:sosProofs} as an 
accessible introduction to the world of moment relaxations and \sos proofs: in those appendices, we attempt to
 bridge the advanced presentation that is typically found in statistics papers and books with the more familiar optimization lens used in robotics and vision.

%!TEX root = main.tex

\section{Estimation Contracts: Problem Statement}
\label{sec:problemStatement}

This \paper is concerned with the following problems.

 \begin{problem}[Outlier-robust estimation in geometric perception]\label{prob:outlierRobustEstimation}
 Estimate $\vxx\gt \in \Domain$ (where $\Domain$ is a basic semi-algebraic set) given $\nrMeasurements$ measurements 
 $(\vy_i,\MA_i)$, $i \in [\nrMeasurements]$, such that a subset of $\an$ measurements
  (the \emph{inliers}) follows the measurement model:
   \begin{align} 
   \label{eq:linModelProbStatement}
 \vy_i = \MA_i\tran \vxx\gt + \vepsilon, \quad  \text{ with } \quad  \vy_i \in \Real{\dimy} 
\quad \text{ and } \quad \vxx\gt \in \Domain \subseteq \Real{\dimx},
 \end{align}
  with $\normTwo{\vepsilon}\leq\barc$ (for a given noise bound $\barc$) and the remaining $\bn$ measurements (the \emph{outliers}, with $\beta = 1-\alpha$) are arbitrary (and possibly adversarially chosen).  
 \end{problem}

 \begin{problem}[Estimation contracts]\label{prob:estimationContracts}
 For each algorithm developed to solve~\cref{prob:outlierRobustEstimation}, 
 provide conditions on the inliers 
 such that the resulting estimate \mbox{is guaranteed to be close to the ground truth~$\vxx\gt$.} 
 \end{problem}

 As discussed in~\cref{sec:motivatingProblems} (and further stressed in~\cref{sec:openProblems} below), Problem~\ref{prob:outlierRobustEstimation} arises in many geometric perception applications.
 Our main focus in this \paper will be on designing estimation contracts (\cref{prob:estimationContracts}): 
rather than proposing radically new algorithms for~\cref{prob:outlierRobustEstimation}, we review existing algorithms  (possibly with small modifications), and then derive suitable conditions under which those algorithms are guaranteed to return good estimates.
   % Provide an estimation contract that establishes under which condition a bound on the distance of the resulting estimate $\vxx$ from the ground truth $\vxx\gt$.

\cref{sec:lowOutliers} below studies the low-outlier case, where $\beta \ll 0.5$. 
 There, we review algorithms to attack~\cref{prob:outlierRobustEstimation} 
  based on moment relaxations of~\eqref{eq:LTS},~\eqref{eq:MC}, and~\eqref{eq:TLS}.
%  as a way to compute an estimate $\vxx$, and then present results (namely, estimation contracts) to 
Then, for each algorithm, we provide estimation contracts.

\cref{sec:highOutliers} studies the high-outlier case, where $\beta \gg 0.5$. 
In such a case, a sound algorithm must return multiple estimates and our estimation contracts 
derive conditions under which at least one of the returned estimates is close to the ground truth.
% Towards deriving estimation contract for the high-outlier case, we adapt recent results on \emph{list-decodable 
The results we present are an adaptation to the geometric perception setup of the recent work on \emph{list-decodable regression} by
Karmalkar\setal~\cite{Karmalkar19neurips-ListDecodableRegression}.

%!TEX root = main.tex

\section{Estimation Contracts for Low Outlier Rates}
\label{sec:lowOutliers}

This section provides estimation contracts for~\eqref{eq:LTS},~\eqref{eq:MC}, and~\eqref{eq:TLS}
 for problems with low-outlier rates (\ie $\beta \ll 0.5$). 
 We start by deriving some naive contracts for~\eqref{eq:MC} and~\eqref{eq:TLS} in~\cref{sec:lowOut-aPosterioriResults}: these naive results 
 only use basic manipulations and do not rely on any of the machinery presented above; at the same time,
 their derivation shares several insights with the more advanced contracts we 
 provide in the subsequent sections and motivates 
 the need for the machinery in~\cref{sec:notationAndPreliminaries}. 
 Then,~\cref{sec:lowOut-aPrioriResults-LTS},~\cref{sec:lowOut-aPrioriResults-MC}, and~\cref{sec:lowOut-aPrioriResults-TLS} 
 present more advanced estimation contracts based on \sos proofs for~\eqref{eq:LTS},~\eqref{eq:MC}, and~\eqref{eq:TLS}, respectively.

	%!TEX root = main.tex

\subsection{A Posteriori (Naive) Estimation Contracts for~\eqref{eq:MC} and~\eqref{eq:TLS}}
\label{sec:lowOut-aPosterioriResults}

Here we develop two simple results for~\eqref{eq:MC} and~\eqref{eq:TLS} that quantify the 
distance of an optimal solution of the two problems from the ground-truth parameter $\vxx\gt$.
 These results are straightforward to prove, but have several shortcomings that we discuss in~\cref{rmk:issuesLowOut-aPosterioriResults} at the end of the section. 

Before presenting the results in this section, 
we need to remark that in all the estimation problems considered in~\cref{sec:motivatingProblems}, we cannot reconstruct the unknown $\vxx\gt$ from a single measurement, but we rather need a sufficiently large subset of measurements.

\begin{definition}[\Nondegenerate and minimal measurement set]
A set $\calJ$ of measurements is \emph{\nondegenerate} if the following optimization problem admits a unique solution:
\begin{align}
\label{eq:nondegeneracy}
\min_{\vxx \in \Domain} \sum_{i \in \calJ} \normTwo{ \vy_i - \MA_i\tran \vxx }^2.
\end{align}
A \nondegenerate set of minimal size $\minDim$ is called a \emph{minimal} measurement set. 
\end{definition}

% In the noiseless case, 
Characterizations of the minimal sets for common geometric perception problems
%all the problems in Section~\ref{sec:motivatingProblems}
 are well known in the literature.
Indeed, a subfield of computer vision is specifically concerned with the design of \emph{minimal solvers}, which 
compute an estimate
$\vxx$ from a minimal set of measurements.\footnote{The popularity of minimal solvers stems from their extensive use in outlier rejection schemes, such as RANSAC~\cite{Fischler81}.} 
For instance, the rotation search problem and the 3D registration problem in~\cref{sec:motivatingProblems}
%, and the absolute problem 
require at least $\minDim = 3$ non-collinear 3D point measurements for the resulting estimate in~\eqref{eq:nondegeneracy} to be unique.
% \footnote{The 3D-3D category-level perception problem is typically solved by adding a regularization term over $\vc$ hence the resulting problem also has a unique solution when provided with at least $\minDim = 3$ non-collinear points.}
% On the other hand, the minimal set for multiple rotation averaging is a spanning tree of the rotation graph ($\minDim = N-1$).
%requires a subset of relative pose measurements that make the rotation graph connected.
% Note that the size 

The following proposition provides our first estimation contract, 
which establishes when an optimal solution of~\eqref{eq:MC} is close to the ground truth 
 $\vxx\gt$. 

%%%%%%%%%%%%%%%%%%%%%%%%%%%%%%%%%%%%%%%%%%%%%%%%%%%%%%%%%%%%%%%%%%%%%%%%%%%%%%%%%%%%%%%
\noindent\fbox{
\parbox{\textwidth}{
\begin{proposition}[Low-outlier case: a posteriori estimation contract for~\eqref{eq:MC}]\label{thm:lowOut-aposterioriNoisyMC}
Consider~\cref{prob:outlierRobustEstimation} with 
measurements ($\vy_i, \MA_i$), $i \in [\nrMeasurements]$, and
assume the measurement set contains $\an \geq \frac{\nrMeasurements + \minDim}{2}$ inliers, where $\minDim$ is the 
size of a minimal set. Moreover, assume that every subset of $\minDim$ inliers is \nondegenerate. 
Then, for any integer $\dimJ$ such that $\minDim \leq \dimJ \leq (2\inlierRate-1)n$,
an optimal solution $\vxx\maxCon$ of~\eqref{eq:MC} satisfies
\beq
\label{eq:noisyBoundMC-statement}
\normTwo{\vxx\maxCon - \vxx\gt} \leq \naiveBoundMC \mathcom
\eeq
where $\InlierSet\maxCon$ is the set of inliers selected by~\eqref{eq:MC}, $\MA_\calJ$ is the matrix obtained by horizontally stacking all submatrices $\MA_i$ for all $i \in \calJ$, and $\sigma_\min(\cdot)$ denotes the smallest singular value of a matrix. 
Moreover, if the inliers are noiseless, \ie $\vepsilon = \zero$ in eq.~\eqref{eq:linModelProbStatement}, and $\barc = 0$,  then $\vxx\maxCon = \vxx\gt$.
\end{proposition}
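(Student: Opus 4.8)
The plan is to compare the consensus-maximization solution $(\vxx\maxCon,\InlierSet\maxCon)$ with the pair $(\vxx\gt,\setInliers)$ formed by the ground truth and the true inlier set, which is itself feasible for~\eqref{eq:MC}. First I would observe that every inlier $i\in\setInliers$ satisfies $\normTwo{\vy_i-\MA_i\tran\vxx\gt}=\normTwo{\vepsilon}\le\barc$, so $(\vxx\gt,\ones_{\setInliers})$ is feasible for~\eqref{eq:MC} and attains objective $|\setInliers|\ge\an$; since~\eqref{eq:MC} \emph{maximizes} the number of selected measurements, $|\InlierSet\maxCon|\ge\an$. Combining this with $|\setInliers|\ge\an$ and the inclusion--exclusion bound $|\InlierSet\maxCon\cap\setInliers|\ge|\InlierSet\maxCon|+|\setInliers|-\nrMeasurements$ gives $|\InlierSet\maxCon\cap\setInliers|\ge(2\inlierRate-1)\nrMeasurements$, and the hypothesis $\an\ge(\nrMeasurements+\minDim)/2$ rewrites as $(2\inlierRate-1)\nrMeasurements\ge\minDim$. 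Hence for any $\dimJ$ in the stated range I can fix a set $\calJ\subseteq\InlierSet\maxCon\cap\setInliers$ with $|\calJ|=\dimJ$; since $\calJ$ contains a minimal (hence \nondegenerate) subset of inliers, $\MA_\calJ$ has full row rank $\dimx$, equivalently $\sum_{i\in\calJ}\MA_i\MA_i\tran\succ0$, so $\sigma_{\min}(\MA_\calJ)>0$.

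Next I would control the residual of $\vxx\maxCon$ on each $i\in\calJ$. Because $i$ is selected by~\eqref{eq:MC} we have $\normTwo{\vy_i-\MA_i\tran\vxx\maxCon}\le\barc$, and because $i$ is a true inlier $\vy_i=\MA_i\tran\vxx\gt+\vepsilon$ with $\normTwo{\vepsilon}\le\barc$; subtracting and applying the triangle inequality yields $\normTwo{\MA_i\tran(\vxx\maxCon-\vxx\gt)}\le 2\barc$. Stacking over $i\in\calJ$ and using $\normTwo{\MA_\calJ\tran\vv}^2=\sum_{i\in\calJ}\normTwo{\MA_i\tran\vv}^2$ with $\vv=\vxx\maxCon-\vxx\gt$ gives $\normTwo{\MA_\calJ\tran(\vxx\maxCon-\vxx\gt)}\le 2\sqrt{\dimJ}\,\barc$. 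On the other hand, full row rank of $\MA_\calJ$ means $\MA_\calJ\MA_\calJ\tran\succ0$ and hence $\normTwo{\MA_\calJ\tran\vv}\ge\sigma_{\min}(\MA_\calJ)\normTwo{\vv}$ for all $\vv\in\Real{\dimx}$. Combining the two inequalities gives $\normTwo{\vxx\maxCon-\vxx\gt}\le 2\sqrt{\dimJ}\,\barc/\sigma_{\min}(\MA_\calJ)$. Since this particular $\calJ$ is one of the $\dimJ$-element subsets of $\InlierSet\maxCon$, replacing $\sigma_{\min}(\MA_\calJ)$ by $\min_{\calJ\subseteq\InlierSet\maxCon,\,|\calJ|=\dimJ}\sigma_{\min}(\MA_\calJ)$ only weakens the bound, producing exactly~\eqref{eq:noisyBoundMC-statement}. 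For the last claim, if the inliers are noiseless and $\barc=0$ the per-measurement estimate becomes $\normTwo{\MA_i\tran(\vxx\maxCon-\vxx\gt)}=0$, so $\MA_\calJ\tran(\vxx\maxCon-\vxx\gt)=\zero$, and full row rank of $\MA_\calJ$ forces $\vxx\maxCon=\vxx\gt$.

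Most of this is elementary linear algebra and counting, and the two steps I expect to require genuine care are the following. First, I need to connect the \emph{definition} of \nondegeneracy in~\eqref{eq:nondegeneracy} with the full-row-rank / positive-$\sigma_{\min}$ statement actually used: for a minimal set the Hessian of~\eqref{eq:nondegeneracy} is the constant matrix $\sum_{i\in\calJ}\MA_i\MA_i\tran$, and enlarging the set only adds positive semidefinite terms, so once a minimal subset is \nondegenerate every superset of it has $\MA_\calJ$ of full row rank $\dimx$; this is the reading relevant to all the problems in~\cref{sec:motivatingProblems}, and I would make the argument that uniqueness in~\eqref{eq:nondegeneracy} yields this rank condition. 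Second, $\InlierSet\maxCon$ need not be contained in $\setInliers$ (a corrupted $\vy_i$ can accidentally obey the $\barc$-consensus constraint), so the minimum in the denominator of~\eqref{eq:noisyBoundMC-statement} ranges over $\dimJ$-subsets of $\InlierSet\maxCon$ that are not all guaranteed \nondegenerate; the bound is nevertheless valid because it is derived for the specific $\calJ\subseteq\InlierSet\maxCon\cap\setInliers$, and one only passes to the (smaller, possibly vacuous) minimum at the very last step. I would add a remark that restricting that minimum to $\dimJ$-subsets of $\InlierSet\maxCon\cap\setInliers$ gives a sharper, always finite bound. No step beyond these presents a real obstacle.
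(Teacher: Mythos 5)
Your derivation of the noisy bound follows the paper's own proof essentially step for step: feasibility of $(\vxx\gt,\ones_{\setInliers})$ plus optimality gives $|\InlierSet\maxCon|\geq\an$, the counting argument gives an overlap of size at least $(2\inlierRate-1)\nrMeasurements\geq\minDim$, the per-measurement residual bounds and the triangle inequality give $\normTwo{\MA_\calJ\tran(\vxx\maxCon-\vxx\gt)}\leq 2\sqrt{\dimJ}\,\barc$ for a specific $\calJ\subseteq\InlierSet\maxCon\cap\setInliers$, and the inequality $\normTwo{\MA_\calJ\tran\vv}\geq\sigma_{\min}(\MA_\calJ\tran)\normTwo{\vv}$ followed by passing to the minimum over all $\dimJ$-subsets of $\InlierSet\maxCon$ yields \eqref{eq:noisyBoundMC-statement}. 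That part is correct, and your remark that the minimum could be restricted to subsets of $\InlierSet\maxCon\cap\setInliers$ is valid (though not computable in practice).

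The gap is exactly in the step you flagged as needing care, and in the noiseless claim built on it. \Nondegeneracy of $\calJ$ means uniqueness of the minimizer of \eqref{eq:nondegeneracy} \emph{over the non-convex domain} $\Domain$, and this does \emph{not} imply that $\MA_\calJ$ has full row rank or that $\sigma_{\min}(\MA_\calJ)>0$: your Hessian argument silently replaces the constrained problem by the unconstrained quadratic, for which uniqueness would indeed be equivalent to $\sum_{i\in\calJ}\MA_i\MA_i\tran\succ 0$, but that equivalence breaks once $\vxx$ is restricted to $\Domain$. A counterexample from the paper's flagship problem: in rotation search, three non-collinear unit vectors $\va_1,\va_2,\va_3$ that are linearly dependent (coplanar through the origin) determine $\MR\in\SOthree$ uniquely, yet $\MA_\calJ\tran=[\va_1\;\va_2\;\va_3]\tran\kron\eye_3$ has rank $6<9$; the paper's footnote following the proof makes the same point, noting that $\sigma_{\min}(\MA_\calJ)$ can vanish under the stated assumptions (e.g.\ in the category-level and absolute-pose models, where $\vxx\gt$ lies in the null space of the stacked $\MA_i\tran$). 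For the noisy bound this is harmless (the bound simply becomes vacuous when the singular value degenerates), but your noiseless argument ``$\MA_\calJ\tran(\vxx\maxCon-\vxx\gt)=\zero$ and full row rank forces $\vxx\maxCon=\vxx\gt$'' genuinely fails. The paper instead argues directly from the definition: when $\vepsilon=\zero$ and $\barc=0$, both $\vxx\gt$ and $\vxx\maxCon$ attain zero cost in \eqref{eq:nondegeneracy} over $\calJ$, which contains a \nondegenerate subset, so uniqueness of the minimizer over $\Domain$ gives $\vxx\maxCon=\vxx\gt$ with no rank condition at all. Replace your last step (and drop the rank claim) with this uniqueness argument and the proof is complete.
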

}}

We report the proof here (while all other technical proofs are postponed to the appendix) since 
the structure of the proof is quite enlightening in its simplicity. Indeed, we will see that this simple
proof shares many insights with the proofs of more advanced results presented later in this \paper.

\begin{proof}
Given an optimal solution $(\vxx\maxCon,\vomega\maxCon)$ of problem~\eqref{eq:MC}, 
let us call $\InlierSet$ and $\InlierSet\maxCon$ the true inliers and the set of measurements selected by~\eqref{eq:MC} (\ie $\InlierSet\maxCon \triangleq  \setdef{i\in[n]}{ \omega\maxConi{i} = 1 }$), respectively. 
Recall that the proposition assumes $|\InlierSet| = \an \geq \frac{\nrMeasurements + \minDim}{2}$. 
We prove the result in two steps.

\myParagraph{(i) The solution of \eqref{eq:MC} captures enough inliers}
We denote with $\ones_{\InlierSet}$ the indicator vector of the set $\InlierSet$, \ie the $\nrMeasurements$-vector that has the $i$-th entry equal to $1$ if $i \in \InlierSet$ or zero otherwise.
We then note that $(\vxx\gt, \ones_{\InlierSet})$ is a feasible solution for~\eqref{eq:MC}  and attains a cost $\an$. Therefore, by optimality of $(\vxx\maxCon,\vomega\maxCon)$,  
it follows $|\InlierSet\maxCon| \geq \an$.
Since $|\InlierSet| = \an$ and $|\InlierSet\maxCon| \geq \an$ then:
\beq 
|\InlierSet \cap \InlierSet\maxCon| \overbrace{\geq}^{\text{sets overlap in $[\nrMeasurements]$}} (2\inlierRate - 1) \nrMeasurements \overbrace{\geq}^{\text{using } \an \geq \frac{\nrMeasurements + \minDim}{2}} \!\minDim,
\eeq
 \ie the measurements selected by~\eqref{eq:MC} must include at least $(2\inlierRate-1)n \geq  \minDim$ inliers.

\myParagraph{(ii) The inliers captured by~\eqref{eq:MC}  bound the estimation error}
Let us pick any subset $\calJ$ of the set $\InlierSet \cap \InlierSet\maxCon$, such that $|\calJ| = \dimJ$, for any  choice of integer $\dimJ $ such that $\minDim \leq \dimJ \leq (2\inlierRate-1)n$. 
 Note that both $\vxx\gt$ and $\vxx\maxCon$ are feasible for~\eqref{eq:MC} over $\calJ$, hence:
 \begin{align}
 \label{eq:boundsGT1}
 \max_{ i \in  \calJ} \normTwo{ \vy_i - \MA\tran_i \vxx\gt }^2 \leq \barc^2 
 \Longrightarrow
 \sum_{ i \in  \calJ} \normTwo{ \vy_i - \MA\tran_i \vxx\gt }^2 \leq \dimJ \; \barc^2 
 \Longrightarrow 
 \normTwo{ \vy_\calJ - \MA\tran_\calJ \vxx\gt } \leq \sqrt{\dimJ} \;\barc 
 % \;\;\overbrace{\Longrightarrow}^{ \normInf{\cdot} \leq \normTwo{\cdot} } \;\;
 %  \normInf{ \vy_\calJ - \MA\tran_\calJ \vxx\gt } \leq \barc,
  \\
   \label{eq:boundsMC1}
   \max_{ i \in  \calJ} \normTwo{ \vy_i - \MA\tran_i \vxx\maxCon }^2 \leq \barc^2  
   \Longrightarrow
   \sum_{ i \in  \calJ} \normTwo{ \vy_i - \MA\tran_i \vxx\maxCon }^2 \leq \dimJ \; \barc^2  
   \Longrightarrow
  \normTwo{ \vy_\calJ - \MA\tran_\calJ \vxx\maxCon } \leq \sqrt{\dimJ} \;\barc, 
 \end{align} 
 where $\vy_\calJ$ and $\MA\tran_\calJ$ vertically stack all the measurements $\vy_i$ and matrices $\MA\tran_i$ 
 for $i \in \calJ$.

Now we observe that the mismatch between $\vxx\maxCon$ and $\vxx\gt$ after multiplying by $\MA\tran_\calJ$ must be small:
%Using the triangle inequality and bounds~\eqref{eq:boundsGT1}-\eqref{eq:boundsMC1}: 
\begin{align}
\normTwo{  \MA\tran_\calJ (\vxx\maxCon - \vxx\gt) } 
\overbrace{=}^{\text{adding and subtracting $\vy_\calJ$}} 
\normTwo{ (\vy_\calJ - \MA\tran_\calJ \vxx\gt) - (\vy_\calJ - \MA\tran_\calJ \vxx\maxCon) }
\\ 
\overbrace{\leq}^{\text{triangle inequality}} 
\normTwo{ \vy_\calJ - \MA\tran_\calJ \vxx\gt } + \normTwo{ \vy_\calJ - \MA\tran_\calJ \vxx\maxCon } 
\overbrace{\leq}^{\text{using~\eqref{eq:boundsGT1}-\eqref{eq:boundsMC1}}} 2 \sqrt{\dimJ} \;\barc.
\label{eq:proof3}
\end{align}
Recalling %that for any vector $\vv \in \Real{d_v}$, it holds $\normTwo{\vv} \leq \sqrt{d_v} \normInf{\vv}$, and 
that for a matrix $\MM$ and vector $\vv$, $\normTwo{\MM\vv} \geq \sigma_\min(\MM) \normTwo{\vv}$, where 
$\sigma_\min(\MM)$ is the smallest singular value of $\MM$:
\begin{align}
 \label{eq:proof4}
 \normTwo{ \MA\tran_\calJ (\vxx\maxCon - \vxx\gt) }
\geq \sigma_\min(\MA\tran_\calJ) \normTwo{ \vxx\maxCon - \vxx\gt}.
\end{align}
Combining~\eqref{eq:proof4} and~\eqref{eq:proof3}:
\vspace{-10mm}

 \begin{align}
 \label{eq:proof5}
\normTwo{\vxx\maxCon - \vxx\gt} \leq \frac{  2 \sqrt{\dimJ} \;\barc  }{  \sigma_\min(\MA\tran_\calJ)  }.
\end{align}
Since we do not know which subset $\calJ$ was selected by~\eqref{eq:MC}, we choose the  
set $\calJ$ of cardinality $\dimJ$ attaining the smallest singular value across the set of inliers selected by~\eqref{eq:MC}, yielding 
the desired result for the case of noisy inliers. % in~\cref{thm:lowOut-aposterioriNoisyMC}. 
% Note that the assumption of \nondegeneracy implies that $\sigma_\min(\MA\tran_\calJ)$ remains nonzero.
% \LC{is this true for manifold stuff?}
In the case of noiseless inliers and $\barc = 0$,~\eqref{eq:boundsGT1}-\eqref{eq:boundsMC1}  
hold exactly, \ie $\vy_\calJ - \MA\tran_\calJ \vxx\gt = \zero$ and $\vy_\calJ - \MA\tran_\calJ \vxx\maxCon = \zero$ and both $\vxx\gt$ and $\vxx\maxCon$ attain the minimum (with zero cost in this case) of~\eqref{eq:nondegeneracy}. However, from the non-degeneracy assumption, problem~\eqref{eq:nondegeneracy} admits a unique minimizer, hence $\vxx\maxCon = \vxx\gt$, which concludes the proof.
\end{proof}

The proof of~\cref{thm:lowOut-aposterioriNoisyMC} involves two steps: 
(i)~we proved that the solution of~\eqref{eq:MC} must capture a sufficient number of true inliers (\ie there must be enough overlap between the set of measurements selected by~\eqref{eq:MC} and the true inliers),
(ii)~the fact that the solution $\vxx\maxCon$ has to be consistent with the true inliers forces the estimation error to be small. In the noiseless case ($\barc = 0$), the proposition predicts that $\vxx\maxCon = \vxx\gt$ as long as 
there are at least  $\frac{\nrMeasurements + \minDim}{2}$ (\nondegenerate) inliers.\footnote{While still easy to prove, in the case of noiseless inliers and $\barc = 0$, the result $\vxx\maxCon = \vxx\gt$ does not directly follow as a consequence of the noisy bound
$\normTwo{\vxx\maxCon - \vxx\gt} \leq \naiveBoundMC$: in the noiseless case the singular value $\sigma_{\min(\MA_\calJ)}$ might become zero for any subset $\calJ$ (since the ground truth might be in the null space of $\MA_i\tran$, \cf~\eqref{eq:3D3DCategoryLevel} and~\eqref{eq:absolutePose}), therefore both the numerator (in particular, $\barc$) and the denominator of the noisy upper bound go to zero.} 
 The noisy bound in~\eqref{eq:noisyBoundMC-statement} holds for any choice of integer $\dimJ$ between $\minDim$ and $(2\inlierRate-1)n$. As we will see in~\cref{sec:experiments}, larger values of $\dimJ$ lead to better bounds but are harder to compute (due to the need to search over all possible subsets of measurements of size $\dimJ$).
% This is in contrast with the noisy case, where the 
% does not necessarily follow as a corollary of the noisy bound $a$.

\cref{fig:maxConNoiseless} stresses the key role of \nondegeneracy in the estimation contract in~\cref{thm:lowOut-aposterioriNoisyMC}. The figure shows a simple linear regression problem in 3D where we need to fit a plane 
given 3D points belonging to the plane. In particular, we are given a set of $15$ measurements with $9$ inliers and $6$ outliers. While the set of inliers has size $\frac{\nrMeasurements + \minDim}{2} = 9$ (which satisfies one requirement in the proposition), there are subsets of degenerate points (in particular, the 4 collinear points at the intersection between the two planes). 
In this case,~\eqref{eq:MC} will produce the estimate in red, which is far from the ground truth (black plane).
Intuitively, degenerate sets of measurements can be easily ``stolen'' by an adversary since they are compatible with multiple estimates of $\vxx$ (including estimates far from $\vxx\gt$).

\begin{figure}[t!]
\centering
	\includegraphics[width=0.6\columnwidth, trim= 20mm 80mm 30mm 80mm, clip]{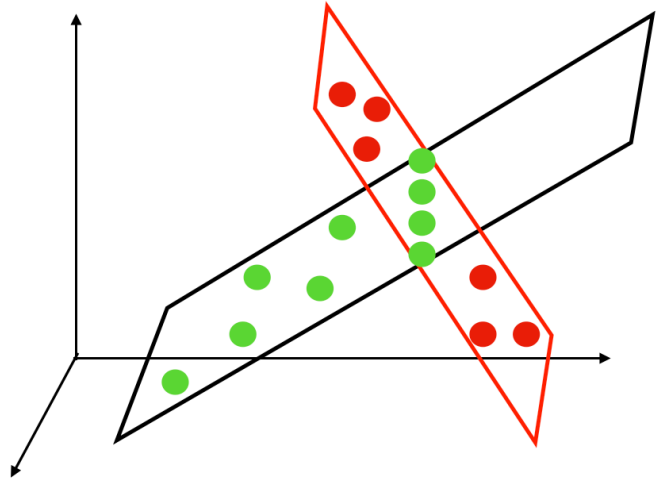} 
	\caption{Plane fitting with 9 inliers (green 3D points) and 6 outliers (red 3D points). 
	True plane is shown in black, while the plane computed by~\eqref{eq:MC} is shown in red. 
	When the estimation contract in~\cref{thm:lowOut-aposterioriNoisyMC} is violated,~\eqref{eq:MC} 
	may fail to recover the variable $\vxx\gt$ even when given $\frac{\nrMeasurements + \minDim}
	{2}$ inliers.  \label{fig:maxConNoiseless}}
\end{figure}

A result similar to~\cref{thm:lowOut-aposterioriNoisyMC} can be proven for~\eqref{eq:TLS}.

\noindent\fbox{
\parbox{\textwidth}{
\begin{proposition}[Low-outlier case: a posteriori estimation contract for~\eqref{eq:TLS}]
\label{thm:lowOut-aposterioriNoisyTLS}
Consider~\cref{prob:outlierRobustEstimation} with 
measurements ($\vy_i, \MA_i$), $i \in [\nrMeasurements]$, and
denote with $\gamma\gt$ the squared residual error of the ground truth $\vxx\gt$ over the set of inliers $\InlierSet$, 
% the inliers is 
\ie $\gamma\gt \triangleq\sum_{ i \in \InlierSet } \normTwo{ \vy_i - \MA_i \tran \vxx\gt }^2$.
Moreover, assume the measurement set contains at least $\frac{\nrMeasurements + \minDim}{2} + \frac{\gamma\gt}{\barcsq}$ 
inliers, where $\minDim$ is the 
size of a minimal set, and that every subset of $\minDim$ inliers is \nondegenerate. 
Then, for any integer $\dimJ$ such that $\minDim \leq \dimJ \leq (2\inlierRate-1)n - \frac{\gamma\gt}{\barcsq}$,
an optimal solution $\vxx\tls$ of~\eqref{eq:TLS} satisfies
\beq
\label{eq:noisyBoundTLS-statement}
\normTwo{\vxx\tls - \vxx\gt} \leq \naiveBoundTLS \mathcom
\eeq
where $\InlierSet\tls$ is the set of inliers selected by~\eqref{eq:TLS}, $\MA_\calJ$ is the matrix obtained by horizontally stacking all submatrices $\MA_i$ for all $i \in \calJ$, and $\sigma_\min(\cdot)$ denotes the smallest singular value of a matrix. 
Moreover, if the inliers are noiseless, \ie $\vepsilon = \zero$ 
in eq.~\eqref{eq:linModelProbStatement}, and for a sufficiently small $\barc > 0$, 
 $\vxx\tls = \vxx\gt$.
\end{proposition}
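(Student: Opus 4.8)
The plan is to follow the same two-step template as the proof of Proposition~\ref{thm:lowOut-aposterioriNoisyMC}: first argue that an optimal solution of~\eqref{eq:TLS} is \emph{forced} to retain at least $\minDim$ true inliers, and then use consistency with those inliers to bound $\normTwo{\vxx\tls-\vxx\gt}$. The only genuinely new feature relative to~\eqref{eq:MC} is that the binary reformulation of~\eqref{eq:TLS} charges a penalty $\barcsq$ for every \emph{deselected} measurement rather than merely counting the selected ones; this is precisely why the ground-truth residual $\gamma\gt$ enters both the inlier-count hypothesis and the final bound. Throughout, given an optimal $(\vxx\tls,\vomega\tls)$ we set $\InlierSet\tls \triangleq \{i\in[\nrMeasurements]:\omega\tls_i=1\}$, and we may assume $\vomega\tls$ is chosen greedily given $\vxx\tls$, so that $\omega\tls_i=1$ implies $\normTwo{\vy_i-\MA_i\tran\vxx\tls}^2\le\barcsq$ (otherwise flipping $\omega\tls_i$ to $0$ strictly lowers the objective).

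\emph{Step 1 (the solution of~\eqref{eq:TLS} captures at least $\minDim$ true inliers).} The feasible point $(\vxx\gt,\ones_\InlierSet)$ attains objective $\gamma\gt+\bn\barcsq$: the $\an$ inliers contribute their squared residuals, summing to $\gamma\gt$, and the $\bn$ deselected outliers contribute $\barcsq$ each. By optimality and nonnegativity of the ``selected residuals'' term, $(\nrMeasurements-|\InlierSet\tls|)\barcsq\le\gamma\gt+\bn\barcsq$, i.e. $|\InlierSet\tls|\ge\an-\gamma\gt/\barcsq$. Combining with $|\InlierSet|=\an$ and the inclusion--exclusion bound $|\InlierSet\cap\InlierSet\tls|\ge|\InlierSet|+|\InlierSet\tls|-\nrMeasurements$ gives $|\InlierSet\cap\InlierSet\tls|\ge(2\inlierRate-1)\nrMeasurements-\gamma\gt/\barcsq$, which under the hypothesis $\an\ge\frac{\nrMeasurements+\minDim}{2}+\gamma\gt/\barcsq$ is at least $\minDim$ (and this same inequality shows the range for $\dimJ$ in the statement is nonempty).

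\emph{Step 2 (the captured inliers bound the error).} Pick any $\calJ\subseteq\InlierSet\cap\InlierSet\tls$ with $|\calJ|=\dimJ$, for any integer $\dimJ$ with $\minDim\le\dimJ\le(2\inlierRate-1)\nrMeasurements-\gamma\gt/\barcsq$ (possible by Step 1). Since $\calJ\subseteq\InlierSet$, each $i\in\calJ$ obeys $\normTwo{\vy_i-\MA_i\tran\vxx\gt}=\normTwo{\vepsilon}\le\barc$; since $\calJ\subseteq\InlierSet\tls$, each $i\in\calJ$ obeys $\normTwo{\vy_i-\MA_i\tran\vxx\tls}\le\barc$. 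Stacking over $\calJ$ yields $\normTwo{\vy_\calJ-\MA_\calJ\tran\vxx\gt}\le\sqrt{\dimJ}\,\barc$ and $\normTwo{\vy_\calJ-\MA_\calJ\tran\vxx\tls}\le\sqrt{\dimJ}\,\barc$, so by adding and subtracting $\vy_\calJ$ and the triangle inequality $\normTwo{\MA_\calJ\tran(\vxx\tls-\vxx\gt)}\le 2\sqrt{\dimJ}\,\barc$; together with $\normTwo{\MA_\calJ\tran\vv}\ge\sigma_\min(\MA_\calJ\tran)\normTwo{\vv}$ this gives $\normTwo{\vxx\tls-\vxx\gt}\le 2\sqrt{\dimJ}\,\barc/\sigma_\min(\MA_\calJ\tran)$. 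Since we do not know which $\calJ$ was retained, we take the worst case over subsets of size $\dimJ$ of $\InlierSet\tls$ (which contains $\InlierSet\cap\InlierSet\tls$), obtaining the claimed bound $\naiveBoundTLS$; exactly as in Proposition~\ref{thm:lowOut-aposterioriNoisyMC}, larger $\dimJ$ gives a tighter but more expensive bound.

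\emph{Step 3 (noiseless inliers) and the main obstacle.} If $\vepsilon=\zero$ then $\gamma\gt=0$, the inlier-count hypothesis reduces to $\an\ge\frac{\nrMeasurements+\minDim}{2}$, and Step 1 gives $|\InlierSet\cap\InlierSet\tls|\ge(2\inlierRate-1)\nrMeasurements\ge\minDim$; a captured set $\calJ$ of true inliers then satisfies $\vy_i=\MA_i\tran\vxx\gt$ exactly, so $\vxx\tls$ lies in the sublevel set $\{\vxx\in\Domain:\sum_{i\in\calJ}\normTwo{\vy_i-\MA_i\tran\vxx}^2\le\dimJ\barcsq\}$ of the \nondegenerate problem~\eqref{eq:nondegeneracy} on $\calJ$, whose unique minimizer is $\vxx\gt$ with value $0$. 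Using that $\Domain$ is bounded (\cref{ass:explicitlyBoundedxs}), a compactness/continuity argument around this unique minimizer forces $\vxx\tls=\vxx\gt$ once $\barc$ is below a problem-dependent threshold, mirroring the noiseless conclusion of Proposition~\ref{thm:lowOut-aposterioriNoisyMC}. Steps 1--2 are routine adaptations of the~\eqref{eq:MC} argument, using only inclusion--exclusion, the triangle inequality, and $\normTwo{\MA\vv}\ge\sigma_\min(\MA)\normTwo{\vv}$; the delicate point is Step 3, because unlike Proposition~\ref{thm:lowOut-aposterioriNoisyMC} we cannot take $\barc=0$ (that would make the~\eqref{eq:TLS} objective identically zero), so the exact identity $\vxx\tls=\vxx\gt$ must be extracted from \nondegeneracy rather than from an exact fit, and one must make precise how small $\barc$ must be in terms of how well $\minDim$ \nondegenerate true inliers separate $\vxx\gt$ from the rest of $\Domain$.
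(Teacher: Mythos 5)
Your Steps 1 and 2 are essentially the paper's own proof. The paper likewise bounds the size of the selected set by comparing the \eqref{eq:TLS} cost of the optimum against the cost $\gamma\gt + \barcsq(\nrMeasurements-\an)$ attained by the feasible point $(\vxx\gt,\ones_{\InlierSet})$ (it phrases this as a contradiction, you argue the same inequality directly), concludes $|\InlierSet\tls|\geq \an - \gamma\gt/\barcsq$, gets $|\InlierSet\cap\InlierSet\tls|\geq (2\inlierRate-1)\nrMeasurements-\gamma\gt/\barcsq\geq \minDim$ by inclusion--exclusion, and then finishes the noisy bound \eqref{eq:noisyBoundTLS-statement} with exactly the stacking/triangle-inequality/smallest-singular-value step of \cref{thm:lowOut-aposterioriNoisyMC}. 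So the first claim is handled correctly and in the same way.

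The gap is in Step 3, the noiseless exact-recovery claim. Your argument places $\vxx\tls$ in the sublevel set $\{\vxx\in\Domain:\sum_{i\in\calJ}\normTwo{\vy_i-\MA_i\tran\vxx}^2\leq \dimJ\,\barcsq\}$ of the \nondegenerate problem \eqref{eq:nondegeneracy} and appeals to compactness/continuity; but for any \emph{positive} $\barc$ this sublevel set is in general a whole neighborhood of $\vxx\gt$ inside $\Domain$, so membership only yields $\normTwo{\vxx\tls-\vxx\gt}\to 0$ as $\barc\to 0$, never the identity $\vxx\tls=\vxx\gt$ at a finite threshold. Nothing in the sublevel-set argument rules out an optimizer that fits the captured inliers only approximately (within $\barc$) while, say, trading a small inlier penalty against capturing an additional outlier; excluding this requires information about $\vxx\tls$ beyond its membership in a shrinking set. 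The paper closes the argument by a different mechanism: it asserts that for sufficiently small $\barc>0$ the optimal $\vxx\tls$ fits its selected measurements \emph{exactly}, i.e. $\vy_i-\MA_i\tran\vxx\tls=\zero$ on $\InlierSet\tls$; then both $\vxx\gt$ and $\vxx\tls$ attain the value zero in \eqref{eq:nondegeneracy} over the common subset $\calJ$, and uniqueness of the minimizer (\nondegeneracy) forces $\vxx\tls=\vxx\gt$. In other words, the exact identity is extracted from an exact-fit step plus uniqueness, not from a distance bound; you correctly flagged this as the delicate point, but the threshold-$\barc$ compactness argument you sketch does not supply that step.
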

}}

% It is easy to adjust the statement of~\cref{thm:lowOut-aposterioriNoisyTLS} to prove that 
% in the noiseless case ($\gamma\gt=0$) and for a sufficiently small $\barc$, 
% $\vxx\tls = \vxx\gt$ as long as 
% there are at least $\frac{\nrMeasurements + \minDim}{2}$ (\nondegenerate) inliers. 

While~\cref{thm:lowOut-aposterioriNoisyMC} 
and~\cref{thm:lowOut-aposterioriNoisyTLS} allow us to gain intuition about the problem, 
the applicability of these estimation contracts is limited, as we discuss below.
%have drawbacks, both in terms of bounds and applicability, as we discuss below.
%that we review in~\cref{rmk:issuesLowOut-aPosterioriResults}.

\begin{remark}[A posteriori estimation contracts]
\label{rmk:issuesLowOut-aPosterioriResults} 
The estimation contracts in \cref{thm:lowOut-aposterioriNoisyMC} and~\cref{thm:lowOut-aposterioriNoisyTLS} 
 refer to the optimal solutions of~\eqref{eq:MC} and~\eqref{eq:TLS}, respectively. 
 However, both problems are NP-hard~\cite{Antonante21tro-outlierRobustEstimation,Chin18eccv-robustFitting}, hence computing optimal solutions is difficult in general.
 % these guarantees do not directly characte 
 We can still apply these contracts as follows. 
 Assume that we solve~\eqref{eq:MC} and~\eqref{eq:TLS} using the moment relaxation described in~\cref{sec:notationAndPreliminaries} (see also~\cite{Yang22pami-certifiablePerception}) and that the relaxation produces a certifiably optimal result (\cf the workflow~\eqref{eq:workflow}); then, such a certifiably optimal solution would still enjoy the guarantees in~\cref{thm:lowOut-aposterioriNoisyMC} and~\cref{thm:lowOut-aposterioriNoisyTLS}. 
 However, since we can only check optimality a posteriori (and only apply these contracts when the relaxation is tight), we call these contracts ``a posteriori'' estimation contracts. 
% \LC{mention that these are naive results: dimension-dependent: \url{https://arxiv.org/pdf/2002.05139.pdf}}
% We remark~\cref{thm:lowOut-aposterioriNoiselessMC} and~\cref{thm:lowOut-aposterioriNoisyMC} are ``a posterior'' contracts, 
% since we cannot guarantee the computation of optimal solution is polynomial time, but we can apply them to certifiably optimal solutions.
\end{remark}

In practice, we would like to have more general performance guarantees regardless of the tightness of the 
moment relaxation. Luckily, the \sos proof machinery we introduced in~\cref{sec:notationAndPreliminaries} 
does exactly that: it allows inferring properties of the solution of moment relaxations, regardless of its tightness.
Using such a machinery, we can derive ``a priori'' estimation contracts as shown below. 	
	%!TEX root = main.tex

\subsection{A Priori Estimation Contracts for~\eqref{eq:LTS},~\eqref{eq:MC}, and~\eqref{eq:TLS}}
\label{sec:lowOut-aPrioriResults}

In the rest of this \paper we develop ``a priori'' estimation contracts, which establish conditions 
on the input data such that certain algorithms based on moment relaxations of~\eqref{eq:LTS},~\eqref{eq:MC}, and~\eqref{eq:TLS} are able to compute an estimate for~\cref{prob:outlierRobustEstimation} 
which is provably close to the ground truth $\vxx\gt$. % (according to some metric).

The philosophy is quite different from the typical work done in robotics and vision. 
While in those fields, one is typically concerned with the distribution 
of the measurement noise $\vepsilon$ in~\eqref{eq:linModel}, here we are concerned with both 
the measurement noise and the distribution producing the matrices $\MA_i$ in~\eqref{eq:linModel}.\footnote{While these matrices are considered given in robotics and computer vision applications, we can imagine they are generated by sampling from a distribution.} 
One for instance might assume the entries of the measurement matrices to be sampled from a zero-mean Gaussian and 
study the behavior of an outlier-robust estimator. 
The importance of these matrices for outlier-robust estimation should already be apparent from the statements of~\cref{thm:lowOut-aposterioriNoisyMC} and~\cref{thm:lowOut-aposterioriNoisyTLS}.  
In the following, we write $\MA$ to denote a matrix random variable, while $\MA_i$, $i\in[\nrMeasurements]$, are the realizations of such a random variable.
Moreover, we denote with $\DistInliers$ the distribution producing the matrices $\MA_i$, $i\in[\nrMeasurements]$ for the inlier measurements (note the tilde, which we use to distinguish the 
distribution of the inliers from the inlier set $\InlierSet \subseteq [\nrMeasurements]$, \cf~\cref{thm:lowOut-aposterioriNoisyMC}).

In this context, we hope to make the least restrictive assumptions on $\DistInliers$; in other words, we would like 
our estimators to be guaranteed to work well for a broad class of distributions that generate the input data. 
Moreover, we want to restrict any assumptions to the inliers, while the mechanism generating outliers can be arbitrary.
% Different philosophy: data $(\vy_i, \MA_i)$ is sampled from distribution and we want to conclude that the 
% relaxation (at a suitable relaxation order) produces a good estimate with high probability.
In particular we will consider two large families of distributions: 
certifiable hypercontractive and certifiable anti-concentrated distributions, which we introduce below.
These definitions are based on~\cite{Klivans18arxiv-robustRegression,Karmalkar19neurips-ListDecodableRegression}, 
but we extend both to be defined over matrices.

\begin{definition}[Certifiable hypercontractivity for matrices, adapted from~\cite{Klivans18arxiv-robustRegression}]
For a function $C : [k] \mapsto \Real{}_+$, we say that a distribution $\DistInliers$ over matrices $\MA$ 
is $k$-certifiably C-hypercontractive if for every $t \leq k/2$, there is a degree $k$ 
sum-of-squares proof of the following inequality in variable $\vv$ 
\begin{align}
\label{eq:hyper-def}
\Expect{\DistInliers}{\normTwo{ \MA\tran \vv }^{2t} } \leq C(t)^t 
\left( \Expect{\DistInliers}{\normTwo{ \MA\tran \vv }^2 } \right)^{t}.
\end{align}
We say that a set of matrices $\MA_i$, $i \in \InlierSet$, is $k$-certifiably C-hypercontractive if the uniform distribution over the 
set is $k$-certifiably C-hypercontractive, \ie
 \begin{align}
 \label{eq:hyper-uniform}
\left( \aveOverInliers \normTwo{ \MA_i\tran \vv }^{2t} \right) \leq C(t)^t 
\left( \aveOverInliers \normTwo{ \MA_i\tran \vv }^2 \right)^{t}.
\end{align}
\end{definition}

We remark that certifiable hypercontractivity not only requires~\eqref{eq:hyper-def} (or~\eqref{eq:hyper-uniform}) to be satisfied, but also requires this fact to have a sum-of-squares proof (see~\cref{sec:notationAndPreliminaries}). 
Hypercontractivity essentially requires controlling the moments of $\normTwo{ \MA\tran \vv }$, 
or, in the case of~\eqref{eq:hyper-uniform}, it bounds the $2t$-norm of the vector with entries 
$\normTwo{ \MA_i\tran \vv }$ with respect to the $2$-norm. In the proofs, this property will be used to bound 
terms such as $\normTwo{ \MA\tran (\vxx - \vxx\gt) }^2$ for some $\vxx$, an aspect that was also key to proving~\cref{thm:lowOut-aposterioriNoisyMC} and~\cref{thm:lowOut-aposterioriNoisyTLS} (\cf with eq.~\eqref{eq:proof4}). 
What makes certifiable hypercontractivity interesting is the fact that 
a large class of distributions satisfies this property, including Gaussians, 
product of distributions with sub-gaussian marginals, and uniform distributions over the boolean 
hypercube~\cite{Bakshi21soda-listDecodableSubspaceRecovery}.
%; this property is a generalization of the notion of certifiable subgaussianity, discussed in.
Moreover, in~\cref{sec:experiments} we show that this property is empirically satisfied in many real problems.
Finally, we note that, given a set of matrices $\MA_i$, $i \in \InlierSet$, we can easily check if~\eqref{eq:hyper-uniform} is satisfied (for a given $t$), since one can check if the polynomial expression~\eqref{eq:hyper-uniform} (in variable $\vv$) is sum-of-squares in polynomial time using semidefinite programming (see~\cref{sec:experiments},~\cref{app:sosProofs}, and~\cite[chapter 3]{Blekherman12Book-sdpandConvexAlgebraicGeometry}). 

While we use hypercontractivity in~\cref{thm:lowOut-apriori-LTS-objective}, 
in most of the \paper, we will use a stronger assumption on the inlier distribution, known as \emph{certifiable anti-concentration}.
Since this property is more involved, we first introduce the concept of anti-concentration, 
and then provide its \sos counterpart, \ie certifiable anti-concentration.
Both~\cref{def:Anticon} and~\cref{def:certAnticon} are based on~\cite{Karmalkar19neurips-ListDecodableRegression} and extend the corresponding definitions to 
work with matrices (rather than vectors).

\begin{definition}[Anti-concentration for matrices, adapted from~\cite{Karmalkar19neurips-ListDecodableRegression}]
\label{def:Anticon}
A zero-mean random matrix $\MA \in \Real{\dimA}$ has a \emph{$\delta$-anti-concentrated} distribution if 
$\prob{ \normTwo{\MA\tran \vv} = 0  } \leq \delta$ for all non-zero $\vv$. 
% A set of matrices $\MA_i$, $i=1,\ldots,n$, is \emph{$\delta$-anti-concentrated} if the uniform distribution over the set is $\delta$-anti-concentrated, \ie  
\end{definition}

In the technical proofs, we are going to apply these properties to terms such as 
$\normTwo{\MA_i\tran (\vxx - \vxx\gt)}$ and anti-concentration is essentially asking that 
any mismatch vector $(\vxx - \vxx\gt)$ remains ``visible'' (with some probability) after mapping it through $\MA_i\tran$; similarly to the proof of~\cref{thm:lowOut-aposterioriNoisyMC}, this will allow us to bound $\normTwo{\vxx - \vxx\gt}$ given a bound on $\normTwo{\MA_i\tran (\vxx - \vxx\gt)}$.
%In case of a uniform distribution over a set of matrices $\MA_i$, $i \in \InlierSet$, 
Anti-concentration %can be also understood as 
is also related to the probability of sampling a set of matrices 
$\MA_i$, $i \in \InlierSet$, such that $[\MA_1\tran\vcat \MA_2\tran \vcat\ldots\vcat \MA\tran_{|\InlierSet|}]$ is not full rank. In this sense, it is connected to the notion of non-degeneracy 
%(the rank impacts the number of solutions )
and to the results in~\cref{sec:lowOut-aPosterioriResults} (\eg if a matrix is not full rank, its smallest singular value is zero, \cf~\cref{thm:lowOut-aposterioriNoisyMC}).

% Since we want to use the \sos proof machinery, we will need to require that adjust~\cref{def:Anticon} as follows. 
The \sos version of~\cref{def:Anticon} reads as follows.

\begin{definition}[Certifiable anti-concentration for matrices, adapted from~\cite{Karmalkar19neurips-ListDecodableRegression}]
\label{def:certAnticon}
A random matrix $\MA$ has a \emph{$k$-certifiably $(C,\delta,M)$-anti-concentrated} distribution if there is an even univariate polynomial $p$ satisfying $p(0)=1$ such that there is a degree $k$ sum-of-squares proof of the following two inequalities:\footnote{We remark that $p\left(\normTwo{\MA\tran \vv}\right)$ in eq.~\eqref{eq:certAntiCon1} 
is also a polynomial function of the vector $\MA\tran \vv$  since $p$ is an even polynomial, hence it only includes monomials of even degree which can be written as a function of the polynomial $\normTwo{\MA\tran \vv}^2$; the same observation holds for $p\left(\normTwo{\MA_i\tran \vv}\right)$ in eq.~\eqref{eq:certAntiConSet1} with respect to $\MA_i\tran \vv$. }
\begin{align}
&\forall \vv \;, \; \normTwo{\MA\tran \vv}^2 
\leq \delta^2 %\Expect{}{ \normTwo{\MA\tran \vv}^2 }  
\quad \text{implies} \quad
p^2\left(\normTwo{\MA\tran \vv} \right)  \geq (1-\delta)^2, \label{eq:certAntiCon1}
\\
&\forall \vv \;, \; \normTwo{\vv}^2 \leq M^2 \quad \text{implies} \quad
\|\vv\|^2 \cdot \Expect{}{  \; p^2\left(\normTwo{\MA\tran \vv} \right) } \leq C \, \delta \,M^2. \label{eq:certAntiCon2}
\end{align}
% where $M$ is a positive constant.
A set of matrices $\MA_i$, $i \in \InlierSet$, is \emph{$k$-certifiably $(C,\delta,M)$-anti-concentrated} 
if the uniform distribution over the set is \emph{$k$-certifiably $(C,\delta,M)$-anti-concentrated}, \ie
\begin{align}
&\forall \vv \;, \; \normTwo{\MA_i\tran \vv}^2 
\leq \delta^2  %\left( \aveOverMeas \normTwo{\MA_i\tran \vv}^2 \right) 
\quad \text{implies} \quad
 p^2\left(\normTwo{\MA_i\tran \vv} \right) \geq (1-\delta)^2, \label{eq:certAntiConSet1}
\\
&\forall \vv \;, \; \normTwo{\vv}^2 \leq M^2 \quad \text{implies} \quad
\|\vv\|^2 \cdot
\aveOverInliers \; p^2\left(\normTwo{\MA_i\tran \vv}\right)  \leq C \, \delta \,M^2. \label{eq:certAntiConSet2}
\end{align}\end{definition}

% The following generalizes Definition 1.4 in~\cite{Karmalkar19neurips-ListDecodableRegression}.
The connection between~\cref{def:Anticon} and its \sos counterpart in~\cref{def:certAnticon} 
is not immediate, so a few comments are in order. First, let us consider the probability in~\cref{def:Anticon} 
and observe that by definition, for any given $\boundOne \geq 0$,
% \begin{align}
% \label{eq:anticonInsight}
$\prob{ \normTwo{\MA\tran \vv}^2 \leq \barcsq } = \Expect{}{ \indicator{ \normTwo{\MA\tran \vv}^2 \leq \barcsq } }$, 
% \end{align}
where $\Expect{}{\cdot}$ is the standard expectation and $\indicator{\cdot}$ denotes the indicator function, which is such that, for a boolean condition $a$, $\indicator{a}=1$ if the condition is satisfied or zero otherwise.
Ideally, we would like to just define certifiable anti-concentration to be 
satisfied when there is an \sos proof for:
\begin{align}
\label{eq:anticonInsight2}
\prob{ \normTwo{\MA\tran \vv}^2 \leq \barcsq } = \Expect{}{ \indicator{ \normTwo{\MA\tran \vv}^2 \leq \barcsq } } \leq \delta.
\end{align} 
Unfortunately, 
the indicator function is not a polynomial so such a requirement would not make sense.
Therefore, the insight behind~\cref{def:certAnticon} is twofold.
 First, we require the existence of a polynomial $p$ that ``behaves'' like the indicator function (\cf conditions~\eqref{eq:certAntiCon1} and~\eqref{eq:certAntiConSet1}): this polynomial is required to be close to $1$ (\ie close to the indicator function) whenever the input is smaller than a threshold. Second, we impose such a polynomial to satisfy an anti-concentration bound similar to~\eqref{eq:anticonInsight2} (\cf conditions~\eqref{eq:certAntiCon2} and~\eqref{eq:certAntiConSet2}). Intuitively, the right-hand-side inequality in~\eqref{eq:certAntiConSet2} 
 prevents that vectors with large norm $\| \vv \|$ produce small $\normTwo{\MA_i\tran \vv}$, since that would in turn make $p^2\left(\normTwo{\MA_i\tran \vv}\right)$ large, hence violating the inequality.  
% In summary, certifiably anti-concentration ensures that there is an \sos proof that the 
% probability that the norm $\normTwo{\MA\tran \vv}$ becomes small is low.
Certifiable anti-concentration will allow us to control the norm $\normTwo{\vxx-\vxx\gt}$ from the norm of $\MA_i\tran(\vxx-\vxx\gt)$, similarly to what we did in the proofs in~\cref{sec:lowOut-aPosterioriResults}. 

Certifiable anti-concentration, in its original definition in~\cite{Karmalkar19neurips-ListDecodableRegression}, is a stronger requirement compared to certifiable hypercontractivity, 
but it still encompasses several relevant distributions, including the standard Gaussian distribution 
and any anti-concentrated spherically symmetric distributions with strictly sub-exponential tails~\cite{Karmalkar19neurips-ListDecodableRegression}. In addition, this requirement has been shown to be \emph{necessary} for list-decodable linear regression in~\cite{Karmalkar19neurips-ListDecodableRegression}. 
We remark that our definition is slightly different from the one in~\cite{Karmalkar19neurips-ListDecodableRegression}, \eg we do not normalize the right-hand-side of the inequality $\normTwo{\MA\tran \vv}^2 
\leq \delta^2$ by the variance of $\normTwo{\MA\tran \vv}$ and assume a slightly different implication in~\eqref{eq:certAntiCon1} and~\eqref{eq:certAntiConSet1}; these choices were made for the sake of simplicity, at the cost of some desirable properties of the original definition (\eg scale invariance) and the need for an additional parameter $M$ in the definition. In~\cref{sec:experiments}, we empirically show that certifiable anti-concentration is much harder to satisfy in practice (in particular for low-degree polynomials $p$), but it still applies to some practical problems.
We are now ready the present the main results of this \paper.
% \LC{$\Expect{}{ \normTwo{\MA\tran \vv}^2 }$ is the variance and is only a normalization factor}

% In particular, the set of certifiably hypercontractive
% distributions is strictly larger than the currently known list of certifiably anti-concentrated distributions (that essentially only holds for rotationally symmetric distributions with sufficiently light
% tails).
% \LC{difference hyper and anti: \url{https://arxiv.org/pdf/2002.05139.pdf}}
%
% shown to be necessary and sufficient for list-decodable linear regression in~\cite{Karmalkar19neurips-ListDecodableRegression}
%
% intuitively: prevents that set of outliers steals a large set of inliers (\LC{connects to degeneracy?})
%  while leading to an estimate that is far from truth.
%
% \grayout{
% \LC{following resembles my non-degeneracy condition, not sure if needed}
%  \begin{proposition}[Proposition 2.2 in~\cite{Karmalkar19neurips-ListDecodableRegression}]
% Suppose $\DistInliers$ is $\inlierRate$-anti-concentrated. 
% Suppose there exists a partition $\calS_1,\calS_2,\ldots,\calS_k \subseteq [\nrMeasurements]$ 
% such that each $|\calS_i| \geq \inlierRate \nrMeasurements$ and call $\vxx_1,\vxx_2,\ldots,\vxx_k$ 
% the corresponding estimates. There for some $i$, $\normTwo{\vxx_i - \vxx\gt} \leq \red{\eta}$. (\LC{Karma does noiseless})
% \end{proposition}
% }
%
% previous relation essentially says that under anti-concentration assumptions, if we can find a partition 
% of $[\nrMeasurements]$ such that each partition is sufficiently large, then we solved the problem.
% Prevents estimate from hiding ..
%
% issue: such partition might not exist. 
		%!TEX root = main.tex

\subsubsection{Estimation Contracts for~\eqref{eq:LTS}}
\label{sec:lowOut-aPrioriResults-LTS}

This section presents estimation contracts for two slightly different estimators based on moment relaxations of the~\eqref{eq:LTS} problem. The first contract is from~\cite{Klivans18arxiv-robustRegression} (which we adapt to vector-valued measurements) and  bounds the residual error of the estimate with respect to the inliers; the second is novel and directly bounds the distance of the estimate with respect to the ground truth.
% these algorithms can tolerate relatively low amounts of outliers (\eg $<10\%$)~\cite{Karmalkar19neurips-ListDecodableRegression}.

\begin{algorithm}[h!]
	\caption{Moment relaxation for~\eqref{eq:LTS}, version 1~\cite{Klivans18arxiv-robustRegression}.\label{algo:lts1}}
	\SetAlgoLined
	\KwIn{ input data $(\vy_i,\MA_i)$, $i \in [\nrMeasurements]$, inlier rate $\inlierRate$, exponent $\relaxLevel$,
	relaxation order $\relaxOrder \geq \relaxLevel$.}
	\KwOut{ estimate of $\vxx\gt$. }

	\tcc{Algorithm solves a relaxation of the following~\eqref{eq:LTS} problem:\\
	\vspace{-5mm}
	\beq
	\hspace{-5mm}
	\label{eq:LTS-k}\tag{LTS1}
	%\vxx_\lts \!=\!\!\! 
	\min_{ \vomega, \vxx, \bar{\vy}_i, \bar{\MA}_i, i\in[\nrMeasurements] } 
	\!\! \left( \aveOverMeas \normTwo{ \bar{\vy}_i - \bar{\MA}_i\tran \vxx }^2 \right)^{\frac{\relaxLevel}{2}} 
	\text{s.t.} 
	\;\;
	\axiomsLTS \triangleq \left\{
	\begin{array}{cl}
	\omega_i^2 = \omega_i, \;\; i\in[\nrMeasurements] \\ 
	\sumOverMeas{\omega_i} = \alpha \nrMeasurements \\
	\omega_i \cdot(\bar{\vy}_i - {\vy}_i) = \zero \;\; i\in[\nrMeasurements]\\
	\omega_i \cdot(\bar{\MA}_i - {\MA}_i) = \zero \;\; i\in[\nrMeasurements]\\
	\vxx \in \Domain
	\end{array}
	\right\}
	\hspace{-18mm}
	\eeq
	\vspace{-5mm}
	}

	\tcc{Compute matrix $\MX\opt$ by solving SDP resulting from moment relaxation}
	$\MX\opt = \scenario{solve\_moment\_relaxation\_at\_order\_}r\,\eqref{eq:LTS-k}$ %(at order $\relaxOrder$)
	\myskip

    \tcc{Pick entries of $\MX\opt$ corresponding to $\vxx$}
    $\vxx\ltssdp \triangleq \getEntries{\MX\opt}{\vxx}$
   \myskip

	\Return{$\vxx\ltssdp$.}
\end{algorithm}

Let us first review the estimator proposed in~\cite{Klivans18arxiv-robustRegression}, that we report in~\cref{algo:lts1}.
The algorithm corresponds to Algorithm 5.2 in~\cite{Klivans18arxiv-robustRegression}, with the 
exception that we consider vector-valued measurements and the unknown $\vxx$ in our problem 
belongs to a basic semi-algebraic set $\Domain$.
The algorithm is based on a moment relaxation of a slightly different (but equivalent) reformulation of~\eqref{eq:LTS}, given in~\eqref{eq:LTS-k} within~\cref{algo:lts1}. In particular, 
problem~\eqref{eq:LTS-k} enforces the constraint $\omega_i^2 = \omega_i$, which is equivalent to the constraint  
$\omega_i \in \{0;1\}$ in~\eqref{eq:LTS}; moreover, similar to~\eqref{eq:LTS}, it includes the constraints $\sumOverMeas \omega_i = \an$ and $\vxx \in \Domain$.
However, problem~\eqref{eq:LTS-k} also includes extra variables $\bar{\vy}_i, \bar{\MA}_i, i\in[\nrMeasurements]$,  that are such that $\bar{\vy}_i = \vy_i$ and $\bar{\MA}_i = {\MA}_i$ whenever $\omega_i=1$ (which is enforced by the constraints $\omega_i \cdot(\bar{\vy}_i - {\vy}_i) = \zero, \omega_i \cdot(\bar{\MA}_i - {\MA}_i) = \zero$), or are zero otherwise (which is enforced by the objective, since whenever $\omega_i=0$ these variables are no longer constrained).
 Clearly, this is equivalent to using $\omega_i$ in the objective as we did in~\eqref{eq:LTS}.
Finally, problem~\eqref{eq:LTS-k} also elevates the objective to the power $k/2$ (where $k$ is an input parameter), which again does not change the optimal solution $\vxx\lts$ of the non-relaxed problem as compared to the formulation~\eqref{eq:LTS}. In~\eqref{eq:LTS-k}, we denoted with $\axiomsLTS$ the set of constraints of the problem, 
since in the technical proofs we will establish \sos proofs that connect the set of polynomial constraints $\axiomsLTS$ to relevant properties of the estimate $\vxx\ltssdp$.
After solving the moment relaxation of~\eqref{eq:LTS-k} at order $\relaxOrder$,
\cref{algo:lts1} returns the entries of the (pseudo-moment) matrix $\MX\opt$ corresponding to $\vxx$, \ie $\getEntries{\MX\opt}{\vxx}$.

We are now ready to present the estimation contract for~\cref{algo:lts1}, which is 
an adaptation of Theorem 5.1 in~\cite{Klivans18arxiv-robustRegression} 
to vector-valued measurements.\footnote{
The reader might notice that, at first glance, the statement~\cref{thm:lowOut-apriori-LTS-objective} is quite different from the statement of Theorem 5.1 in~\cite{Klivans18arxiv-robustRegression}; however, the substance is identical: we work directly on the set of matrices $\MA_i$, $i\in[\nrMeasurements]$, rather than the generative distribution creating such samples (in the notation of~\cite{Klivans18arxiv-robustRegression}, we work on $\widehat{\calD}$ 
instead of $\calD$), to avoid the complexity of discussing generalization bounds and requiring bounds on the bit complexity of $\vxx\opt$, which do not add much to the discussion in this \paper.
}

%%%%%%%%%%%%%%%%%%%%%%%%%%%%%%%%%%%%%%%%%%%%%%%%%%%%%%%%%%%%%%%%%%%%%%%%%%%%%%%%%%%%%%%%%%
\noindent\fbox{
\parbox{\textwidth}{
\begin{theorem}[Low-outlier case: a priori estimation contract for~\cref{algo:lts1}, adapted from  Theorem 5.1 in~\cite{Klivans18arxiv-robustRegression}]
\label{thm:lowOut-apriori-LTS-objective}
\ltsObjectiveThm{eq:lowOut-apriori-LTS-objective}
\end{theorem}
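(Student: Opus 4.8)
The plan is to mirror the sum-of-squares argument of Klivans\setal~\cite{Klivans18arxiv-robustRegression}, upgraded to vector-valued residuals. Solving the moment relaxation of~\eqref{eq:LTS-k} at order $\relaxOrder\geq\relaxLevel$ produces a degree-$2\relaxOrder$ pseudo-expectation $\pExpect{}{\cdot}$ over the decision variables $(\vomega,\vxx,\bar{\vy}_i,\bar{\MA}_i)$ that satisfies the polynomial system $\axiomsLTS$, and \cref{algo:lts1} returns $\vxx\ltssdp=\pExpect{}{\vxx}$. Let $\setInliers\subseteq[\nrMeasurements]$ denote the (unknown) set of true inliers, $|\setInliers|=\an$, and let $\vw\opt\triangleq\ones_{\setInliers}$; by construction $(\vy_i,\MA_i)=(\vy_i\uncorr,\MA_i\uncorr)$ for every $i\in\setInliers$, and $\relaxLevel$ is even so that $\relaxLevel/2$ is an integer. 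Two elementary observations set the stage. First, plugging the feasible point $(\vomega,\vxx,\bar{\vy}_i,\bar{\MA}_i)=(\vw\opt,\vxx\opt,\vy_i\,w\opt_i,\MA_i\,w\opt_i)$ into~\eqref{eq:LTS-k} attains objective value $\left(\aveOverMeas w\opt_i\normTwo{\vy_i\uncorr-(\MA_i\uncorr)\tran\vxx\opt}^2\right)^{\relaxLevel/2}\leq\optt_\DistSampleInliers^{\relaxLevel/2}$, so by optimality of the relaxation $\pExpect{}{\left(\aveOverMeas\normTwo{\bar{\vy}_i-\bar{\MA}_i\tran\vxx}^2\right)^{\relaxLevel/2}}\leq\optt_\DistSampleInliers^{\relaxLevel/2}$. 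Second, from $\omega_i^2=\omega_i$, $0\leq w\opt_i\leq1$ and $\sumOverMeas\omega_i=\an$ one obtains the \sos fact $\sumOverMeas(1-\omega_i)w\opt_i\leq\sumOverMeas(1-\omega_i)=\bn$, and likewise $\sumOverMeas(1-w\opt_i)=\bn$.

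The heart of the proof is a single \sos implication, of degree at most $2\relaxOrder$, from $\axiomsLTS$ to an inequality of the form
\beq
\gamma_0\cdot\err_\DistSampleInliers(\vxx)^{\relaxLevel/2}\;\leq\;\gamma_1\left(\aveOverMeas\normTwo{\bar{\vy}_i-\bar{\MA}_i\tran\vxx}^2\right)^{\relaxLevel/2}+\gamma_2\left(\aveOverMeas\normTwo{\vy_i\uncorr-(\MA_i\uncorr)\tran\vxx\opt}^{\relaxLevel}\right)+\gamma_3\,\optt_\DistSampleInliers^{\relaxLevel/2},
\eeq
with $\gamma_0=\gamma_0(\relaxLevel,\outlierRate,C)>0$ and $\gamma_1,\gamma_2,\gamma_3\geq0$ (with $\gamma_1\geq1$). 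To derive it, write the clean residual $s_i(\vxx)\triangleq\vy_i\uncorr-(\MA_i\uncorr)\tran\vxx$ and decompose $\aveOverMeas\normTwo{s_i(\vxx)}^2$ according to the $0/1$ weights $\omega_i w\opt_i$, $(1-\omega_i)w\opt_i$ and $(1-w\opt_i)$. On the ``$\omega_i w\opt_i$'' part, use the constraints $\omega_i(\bar{\vy}_i-\vy_i)=\zero$, $\omega_i(\bar{\MA}_i-\MA_i)=\zero$ together with $(\vy_i,\MA_i)=(\vy_i\uncorr,\MA_i\uncorr)$ on $\setInliers$ to replace $\omega_i w\opt_i\,s_i(\vxx)$ by $\omega_i w\opt_i(\bar{\vy}_i-\bar{\MA}_i\tran\vxx)$, which is dominated by the relaxation objective. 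On the two small parts (each supported on at most $\bn$ indices), apply an \sos H\"older inequality for $0/1$-weighted sums to pass to $\relaxLevel$-th powers, then split $s_i(\vxx)=\big(\vy_i\uncorr-(\MA_i\uncorr)\tran\vxx\opt\big)-(\MA_i\uncorr)\tran(\vxx-\vxx\opt)$ and invoke $\relaxLevel$-certifiable $C$-hypercontractivity of $\{\MA_i\uncorr\}_{i\in\DistSampleInliers}$ with $\vv=\vxx-\vxx\opt$ to bound $\aveOverMeas\normTwo{(\MA_i\uncorr)\tran(\vxx-\vxx\opt)}^{\relaxLevel}\leq\Crelax^{\relaxLevel/2}\big(\aveOverMeas\normTwo{(\MA_i\uncorr)\tran(\vxx-\vxx\opt)}^2\big)^{\relaxLevel/2}$. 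Finally close the recursion with the elementary \sos bound $\aveOverMeas\normTwo{(\MA_i\uncorr)\tran(\vxx-\vxx\opt)}^2\leq2\,\optt_\DistSampleInliers+2\,\err_\DistSampleInliers(\vxx)$, which produces a term proportional to $\outlierRate^{\relaxLevel/2-1}\Crelax^{\relaxLevel/2}\,\err_\DistSampleInliers(\vxx)^{\relaxLevel/2}$, with proportionality constant at most $2^{3\relaxLevel-1}$, that is absorbed into the left-hand side. Throughout, everything stays at the level of genuine polynomials (powers $\relaxLevel/2$ and $\relaxLevel$, never the fractional exponent $2/\relaxLevel$), using only the \sos versions of Minkowski's inequality $\normTwo{\va+\vb}^{\relaxLevel}\leq2^{\relaxLevel-1}(\normTwo{\va}^{\relaxLevel}+\normTwo{\vb}^{\relaxLevel})$, the power-mean inequality $(\sum_{j=1}^m a_j)^{\relaxLevel/2}\leq m^{\relaxLevel/2-1}\sum_{j=1}^m a_j^{\relaxLevel/2}$ for $a_j\geq0$, and the $0/1$-weighted \sos H\"older rule collected in~\cref{app:sosProofs}.

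To finish, apply $\pExpect{}{\cdot}$ to the displayed \sos inequality: its first right-hand term is $\leq\optt_\DistSampleInliers^{\relaxLevel/2}$ by the feasibility observation, and the remaining two are constants, so $\pExpect{}{\err_\DistSampleInliers(\vxx)^{\relaxLevel/2}}\leq\frac{\gamma_1+\gamma_3}{\gamma_0}\,\optt_\DistSampleInliers^{\relaxLevel/2}+\frac{\gamma_2}{\gamma_0}\,\aveOverMeas\normTwo{\vy_i\uncorr-(\MA_i\uncorr)\tran\vxx\opt}^{\relaxLevel}$. Since $\err_\DistSampleInliers(\cdot)$ is a convex quadratic, pseudo-expectation Jensen gives $\err_\DistSampleInliers(\vxx\ltssdp)=\err_\DistSampleInliers(\pExpect{}{\vxx})\leq\pExpect{}{\err_\DistSampleInliers(\vxx)}$, and the \sos H\"older inequality gives $\pExpect{}{\err_\DistSampleInliers(\vxx)}\leq\big(\pExpect{}{\err_\DistSampleInliers(\vxx)^{\relaxLevel/2}}\big)^{2/\relaxLevel}$. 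Taking the $2/\relaxLevel$-th root of the previous display, using subadditivity of $t\mapsto t^{2/\relaxLevel}$ to split the sum, and regrouping the $\optt_\DistSampleInliers$ coefficient (which is $\geq1$, the unavoidable ``$1$'' coming from the relaxation-objective term) into the form $1+C_1(\relaxLevel,\outlierRate)^{2/\relaxLevel}$ yields precisely~\eqref{eq:lowOut-apriori-LTS-objective}; imposing $\gamma_0>0$ while tracking the accumulated powers of two from the Minkowski/power-mean/H\"older steps is exactly what forces $\outlierRate<\beta_{\max}=\sqrt[\frac{\relaxLevel}{2}-1]{1/(\Crelax^{\relaxLevel/2}2^{3\relaxLevel-1})}$ and pins down the closed forms of $C_1(\relaxLevel,\outlierRate)$ and $C_2(\relaxLevel,\outlierRate)$. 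I expect the main obstacle to be precisely this constant bookkeeping: keeping every manipulation a legitimate degree-$2\relaxOrder$ \sos implication (the fractional power $2/\relaxLevel$ may only be applied outside the pseudo-expectation), handling the $0/1$-weighted H\"older steps correctly, and propagating the powers-of-two so that they collapse exactly to the stated threshold and constants.
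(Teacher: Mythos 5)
Your plan follows essentially the same route as the paper's own proof (itself an adaptation of Klivans~\emph{et~al.}): an \sos ``robust certifiability'' implication derived from $\axiomsLTS$ via the inlier-overlap bound, $0/1$-weighted \sos H\"older, \sos triangle/power-mean inequalities, and certifiable hypercontractivity applied to $\vxx-\vxx\opt$, followed by passing to pseudo-expectations, bounding the relaxation objective through the feasible oracle point $(\ones_{\setInliers},\vxx\opt)$, and finishing with pseudo-expectation H\"older/Jensen and the $2/\relaxLevel$ power taken outside. The only divergence is organizational---you absorb the $\err_{\DistSampleInliers}(\vxx)^{\relaxLevel/2}$ term directly with a three-way weight split, whereas the paper keeps the difference between $\err_{\DistSampleInliers}(\vxx)$ and the relaxation objective on the left and absorbs via the generalized \sos triangle inequality---which only shifts the explicit constants, so you should not promise that your bookkeeping reproduces the stated $\beta_{\max}$, $C_1$, and $C_2$ verbatim.
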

}}
%%%%%%%%%%%%%%%%%%%%%%%%%%%%%%%%%%%%%%%%%%%%%%%%%%%%%%%%%%%%%%%%%%%%%%%%%%%%%%%%%%%%%%%%%%

The theorem states that ---under certifiable hypercontractivity and as long as the outlier rate is below $\outlierRate_\max$---
 the estimate $\vxx\ltssdp$ fits well the inlier measurements.
In order words, the residual error of $\vxx\ltssdp$ over the inliers 
%(after some trimming, \ie 
% $\trimmederr_\InlierSet( \vxx\ltssdp )$), 
is not much larger than 
the best residual $\optt_{\DistSampleInliers}$ that an ``oracle'' estimator that 
has access to all the inliers would achieve, plus some some higher-order 
terms that depend on the optimal estimate $\vxx\opt$ returned by the oracle.
% Comment: 
% - bound is about stating that the error of the relaxation is not much larger than the error of the best estimate given inliers 
% $\optt_{\DistInliers,B}$ plus some higher order moments
% - level 4 is impractical
% - bounds M and bounded bit complexity is mostly to derive generalization bounds

The proof of~\cref{thm:lowOut-apriori-LTS-objective} given in~\cref{app:proof-lowOut-apriori-LTS-objective} is quite involved, but a key tool in it is to bound 
terms such as $\MA_i\tran (\vxx - \vxx\opt)$ using certifiable hypercontractivity.
In particular, the proof derives an \sos proof that the constraint set $\axiomsLTS$
of~\eqref{eq:LTS-k} implies a desired error bound, and then uses this proof to infer the bound 
in~\eqref{eq:lowOut-apriori-LTS-objective}. On the downside, 
\cref{thm:lowOut-apriori-LTS-objective} requires a moment relaxation of order $\relaxOrder \geq \relaxLevel \geq 4$: this 
requirement stems from the fact that the objective function  in~\eqref{eq:LTS-k} 
 is a polynomial of degree $2\relaxLevel$ (which requires a moment relaxation of order at least $\relaxOrder \geq \relaxLevel$) 
 %in the variables
  and the fact that the theorem demands $k\geq 4$.
Unfortunately, 
%solving
% ~\cref{algo:lts1} would require solving a 
 solving moment relaxations of order $4$ is currently impractical.\footnote{Recall from~\cref{sec:notationAndPreliminaries} that the moment matrix has size 
 $\nchoosek{d + \relaxOrder}{\relaxOrder} \times \nchoosek{d +\relaxOrder}{\relaxOrder}$ 
  where $d$ is the dimension of the variables in the polynomial optimization problem~\eqref{eq:LTS-k} and $\relaxOrder$ is the relaxation order; note that the dimension $d$ grows with the number of measurements $\nrMeasurements$, and in practical problems $\nrMeasurements \gg 100$. 
  Therefore, the matrix quickly becomes too large to handle for current SDP solvers. As an example, for $d = 100$ and 
  $\relaxOrder = 4$, $\nchoosek{d+\relaxOrder}{\relaxOrder} = \num{4598126}$.} 
 For instance, related work~\cite{Yang22pami-certifiablePerception} 
 relies on relaxations of order $2$ and still has to develop sparse approximations and ad-hoc solvers to make them run in a reasonable time. 
 Moreover, the bound~\eqref{eq:lowOut-apriori-LTS-objective} does not immediately inform us about how far the estimate is from the ground truth.

In the rest of this section we present a second algorithm and the corresponding estimation contract that 
%(i)~potentially requires a lower-order moment relations (still depending on a parameter $k$), and (ii) 
directly quantifies the distance between the estimate and the ground truth. This result is novel, but relies on two lemmas proposed for a different goal in~\cite{Karmalkar19neurips-ListDecodableRegression}.

Let us start by stating a slightly different~\eqref{eq:LTS}-like estimator in~\cref{algo:lts2}.
\begin{algorithm}[h!]
	\caption{Moment relaxation for~\eqref{eq:LTS}, version 2.\label{algo:lts2}}
	\SetAlgoLined
	\KwIn{ input data $(\vy_i,\MA_i)$, $i \in [\nrMeasurements]$, inlier rate $\inlierRate$, relaxation order $\relaxOrder \geq 2$.}
	\KwOut{ estimate of $\vxx\gt$. }

	\tcc{Algorithm solves a relaxation of the following~\eqref{eq:LTS}-like problem:\\
	\vspace{-5mm}
	\beq
	\label{eq:LTS-T}\tag{LTS2}
	%\vxx_\lts =
	 \min_{ \vomega, \vxx } 
	\;\;  \aveOverMeas \omega_i \cdot \normTwo{ \vy_i - {\MA}_i\tran \vxx }^2 
	\text{s.t.} 
	\;\;
	\axiomsLTST \triangleq \left\{
	\begin{array}{cl}
	\omega_i^2 = \omega_i, \;\; i\in[\nrMeasurements] \\ 
	\sumOverMeas{\omega_i} = \alpha \nrMeasurements \\
	\omega_i \cdot \residuali{\vxx}^2 \leq \barc^2 \;\; i\in[\nrMeasurements]\\
	\vxx \in \Domain
	\end{array}
	\right\}
	\eeq
	\vspace{-5mm}
	}

	\tcc{Compute matrix $\MX\opt$ by solving SDP resulting from moment relaxation}
	$\MX\opt = \scenario{solve\_moment\_relaxation\_at\_order\_}r\,\eqref{eq:LTS-T}$ %(at order $\relaxOrder$)
	\myskip

    \tcc{Compute estimate}
    for each $i \in [\nrMeasurements]$ set:  $\vv_i = 
    \left\{
    \begin{array}{ll}
    \frac{\getEntries{\MX\opt}{\omega_i\vxx}}{\getEntries{\MX\opt}{\omega_i}} & \text{ if $\getEntries{\MX\opt}{\omega_i}>0$}
    \\
    \zero & \text{ otherwise}
    \end{array}\right.$ \label{line:vi}

    $\vxx\ltssdpT = \sumOverMeas \frac{ \getEntries{\MX\opt}{\omega_i} }{ \sumOverMeasj \getEntries{\MX\opt}{\omega_j} } \vv_i$ \label{line:ltssdpT}
    \myskip

	\Return{$\vxx\ltssdpT$.}
\end{algorithm}

\cref{algo:lts2} is fairly different from~\cref{algo:lts1}.
First of all, it is based on a relaxation of an ``\eqref{eq:LTS}-like'' problem: 
the non-relaxed problem~\eqref{eq:LTS-T} has additional constraints $\omega_i \cdot \residuali{\vxx}^2 \leq \barc^2$ that do not appear in~\eqref{eq:LTS}, but it can be seen to be equivalent to~\eqref{eq:LTS} with $f_i(\vxx) = \MA_i\tran \vxx$ otherwise.\footnote{Note that the pair $(\vxx\gt,\vomega\gt)$, where $\omega_i\gt=1$ if $i\in \InlierSet$ and zero otherwise, satisfies all the constraints in~\eqref{eq:LTS-T}, hence the problem is feasible. } 
Moreover, while~\cref{algo:lts2} also uses a moment relaxation, it computes an estimate $\vxx\ltssdpT$ by averaging multiple vectors extracted from the solution of the moment relaxation (lines~\ref{line:vi}-\ref{line:ltssdpT}). 

We provide the following estimation contract for~\cref{algo:lts2}.

\noindent\fbox{
\parbox{\textwidth}{
\begin{proposition}[Low-outlier case: a priori estimation contract for~\cref{algo:lts2}]
\label{thm:lowOut-apriori-LTS}
\ltsThm{eq:lowOut-apriori-LTS}
\end{proposition}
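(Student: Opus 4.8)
\emph{Overall approach.} I would argue with pseudo-distributions, as set up in~\cref{sec:notationAndPreliminaries}. Solving the moment relaxation of~\eqref{eq:LTS-T} at order $\relaxOrder\geq\relaxLevel/2$ produces a degree-$2\relaxOrder$ pseudo-distribution $\pdist$ over $(\vomega,\vxx)$ satisfying the system $\axiomsLTST$ in the \sos sense. The first observation is that, because $\axiomsLTST$ contains $\sumOverMeas\omega_i=\an$, the rounded estimate of~\cref{algo:lts2} simplifies to a single pseudo-moment, $\vxx\ltssdpT=\pExpect{\pdist}{\vxx}$ (indices with $\pExpect{\pdist}{\omega_i}=0$ drop out because pseudo--Cauchy--Schwarz together with $\omega_i^2=\omega_i$ forces $\pExpect{\pdist}{\omega_i\vxx}=\zero$ there, matching the $\vv_i=\zero$ convention). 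Using $\sumOverMeas\omega_i=\an$ once more to write $\vxx\ltssdpT-\vxx\gt=\frac{1}{\an}\sumOverMeas\pExpect{\pdist}{\omega_i(\vxx-\vxx\gt)}$ and splitting the sum over the true inlier set $\setInliers$ and its complement, the triangle inequality gives
\begin{align}
\label{plan:split}
\normTwo{\vxx\ltssdpT-\vxx\gt}\ \leq\
\underbrace{\frac{1}{\an}\sumOverInliers\normTwo{\pExpect{\pdist}{\omega_i(\vxx-\vxx\gt)}}}_{(\mathrm{A})}
\ +\
\underbrace{\frac{1}{\an}\sum_{i\notin\setInliers}\normTwo{\pExpect{\pdist}{\omega_i(\vxx-\vxx\gt)}}}_{(\mathrm{B})},
\end{align}
where $(\mathrm{A})$ is the ``anti-concentration term'' and $(\mathrm{B})$ the ``overlap-deficit term''. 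The plan is to bound $(\mathrm{B})\leq 2\frac{1-\inlierRate}{\inlierRate}\boundx$ and $(\mathrm{A})\leq\frac{\inlierRate\eta}{2}\boundx$; each reduces to an \sos implication that is then pushed through $\pdist$. The skeleton parallels the two-step proof of the naive contract in~\cref{thm:lowOut-aposterioriNoisyMC}, with certifiable anti-concentration taking the role that $\sigma_{\min}(\MA_\calJ)$ played there.

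\emph{Bounding $(\mathrm{B})$.} Since $\sumOverMeas\omega_i=\an$ and $|\setInliers|=\an$, the trivial degree-one bound $\sum_{i\notin\setInliers}\omega_i=\an-\sumOverInliers\omega_i\leq\an-\big(\an-\bn\big)=\bn$ gives $\sum_{i\notin\setInliers}\pExpect{\pdist}{\omega_i}\leq\bn$. On the other hand, $\vxx\in\Domain$ and~\cref{ass:explicitlyBoundedxs} yield $\normTwo{\vxx}^2\leq\boundx^2$ as an \sos fact, which together with $\normTwo{\vxx\gt}\leq\boundx$ and a Lagrange-identity argument gives the \sos fact $\omega_i\big(2\boundx-\vu\tran(\vxx-\vxx\gt)\big)\geq 0$ for every fixed unit vector $\vu$, hence $\normTwo{\pExpect{\pdist}{\omega_i(\vxx-\vxx\gt)}}\leq 2\boundx\,\pExpect{\pdist}{\omega_i}$. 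Combining, $(\mathrm{B})\leq\frac{2\boundx}{\an}\sum_{i\notin\setInliers}\pExpect{\pdist}{\omega_i}\leq\frac{2\boundx\,\bn}{\an}=2\frac{1-\inlierRate}{\inlierRate}\boundx$. This is the analogue of step~(i) of~\cref{thm:lowOut-aposterioriNoisyMC} (``the solution captures enough inliers''), now measured on the weights $\vomega$.

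\emph{Bounding $(\mathrm{A})$ --- the heart.} For $i\in\setInliers$ the inlier model gives $\vy_i-\MA_i\tran\vxx\gt=\vepsilon_i$ with $\normTwo{\vepsilon_i}\leq\barc$; combining this with the constraint $\omega_i\cdot\residuali{\vxx}^2\leq\barc^2$, the \sos inequality $\normTwo{\va-\vb}^2\leq 2\normTwo{\va}^2+2\normTwo{\vb}^2$ (with $\va=\vepsilon_i$, $\vb=\vy_i-\MA_i\tran\vxx$), then multiplying by $\omega_i$ and using $\omega_i\leq 1$, yields the \sos fact $\omega_i\cdot\normTwo{\MA_i\tran(\vxx-\vxx\gt)}^2\leq(2\barc)^2$. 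This fixes the scale $\delta=2\barc$, and the bound $\normTwo{\vxx-\vxx\gt}^2\leq(2\boundx)^2$ (from the previous paragraph) fixes $M=2\boundx$. Let $p$ be the even polynomial with $p(0)=1$ witnessing that $\{\MA_i\}_{i\in\setInliers}$ is $\relaxLevel$-certifiably $\antiConReq$-anti-concentrated. Multiplying the \sos proof of~\eqref{eq:certAntiConSet1} by $\omega_i$ (licit because $\omega_i$ is \sos modulo $\omega_i^2=\omega_i$, and the residual bound $\omega_i\normTwo{\MA_i\tran(\vxx-\vxx\gt)}^2\leq\delta^2$ just derived supplies the hypothesis) gives $\omega_i\,p^2\!\big(\normTwo{\MA_i\tran(\vxx-\vxx\gt)}\big)\geq\omega_i(1-\delta)^2$, i.e. $\omega_i\,p^2(\cdot)$ is ``almost $\omega_i$''; and~\eqref{eq:certAntiConSet2} applied with $\vv=\vxx-\vxx\gt$ gives $\normTwo{\vxx-\vxx\gt}^2\cdot\aveOverInliers p^2(\cdot)\leq C\,\delta\,M^2$. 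A Cauchy--Schwarz argument (over the coordinates of $\vxx-\vxx\gt$ and over $i\in\setInliers$) fed by these two inequalities then bounds $(\mathrm{A})$ by a constant multiple of $\sqrt{C\delta}\,M$; substituting $C=\frac{\inlierRate^2\eta^2(1-2\barc)^2}{32\barc}$, $\delta=2\barc$, $M=2\boundx$ makes this equal $\frac{\inlierRate\eta}{2}\boundx$. All of this is done inside the \sos calculus at degree at most $2\relaxOrder$, which is exactly what $\relaxOrder\geq\relaxLevel/2$ provides (so that $2\relaxOrder$ accommodates the degree-$\relaxLevel$ anti-concentration certificates together with the extra factors of $\omega_i$).

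\emph{Assembly and the main obstacle.} Inserting the bounds on $(\mathrm{A})$ and $(\mathrm{B})$ into~\eqref{plan:split} gives $\normTwo{\vxx\ltssdpT-\vxx\gt}\leq\frac{\inlierRate\eta}{2}\boundx+2\frac{1-\inlierRate}{\inlierRate}\boundx=\boundx\parentheses{\frac{\inlierRate\eta}{2}+2\frac{1-\inlierRate}{\inlierRate}}$, as claimed. I expect the main obstacle to be the bound on $(\mathrm{A})$: realizing the anti-concentration step symbolically in the \sos proof system --- keeping every intermediate inequality \sos-certified when multiplied by $\omega_i$ or by $\normTwo{\vxx-\vxx\gt}^2$, controlling the sign and magnitude of $p$ on the ``agreeing'' region (which likely forces one to work throughout with $p^2$ and $1-p^2$ rather than with $1-p$, and possibly a mild strengthening of~\cref{def:certAnticon} bounding $p$ from above there), and bookkeeping $(C,\delta,M)$ so they recombine into precisely $\frac{\inlierRate\eta}{2}$. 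A secondary subtlety is the opening reduction: checking that the rounding in~\cref{algo:lts2} indeed reproduces $\pExpect{\pdist}{\vxx}$ (the $\pExpect{\pdist}{\omega_i}=0$ corner case) and that pseudo--Cauchy--Schwarz is available at the relaxation order used; the two lemmas borrowed from~\cite{Karmalkar19neurips-ListDecodableRegression} are precisely the \sos anti-concentration statement used in $(\mathrm{A})$ and the manipulation underlying the split~\eqref{plan:split}.
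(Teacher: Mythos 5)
Your proposal is correct and follows essentially the same route as the paper's proof: the same inlier/outlier split of $\sum_i \pExpect{\pdist}{\omega_i(\vxx-\vxx\gt)}$ normalized by $\sumOverMeas\pExpect{\pdist}{\omega_i}=\an$, the outlier term bounded by $2\boundx$ per unit weight with total outlier weight at most $\bn$, and the inlier term bounded via exactly the two adapted lemmas of~\cite{Karmalkar19neurips-ListDecodableRegression} (deriving $\omega_i\normTwo{\MA_i\tran(\vxx-\vxx\gt)}^2\leq 4\barcsq$, then invoking certifiable anti-concentration with $\delta=2\barc$, $M=2\boundx$, followed by pseudo-Cauchy--Schwarz and the $\ell_1$--$\ell_2$ step). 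The obstacle you flag in bounding $(\mathrm{A})$ does not materialize: the paper's~\cref{eq:boundxxgt} closes the \sos argument with the given~\cref{def:certAnticon} by using $\omega_i^2=\omega_i$ to write $\omega_i\,p^2(\omega_i\normTwo{\MA_i\tran(\vxx-\vxx\gt)})\leq p^2(\normTwo{\MA_i\tran(\vxx-\vxx\gt)})$, with no strengthening of the definition needed.
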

}}
While~\cref{thm:lowOut-apriori-LTS} takes a stronger assumption on the input data 
(see the discussion on certifiable hypercontractivity vs. certifiable anti-concentration in~\cref{sec:lowOut-aPrioriResults}), %it has many desirable features. First, the proof is more straightforward 
it provides a direct bound on the distance of the estimate from the ground truth $\vxx\gt$. 
% Moreover, the corresponding technical proof is less involved t
\cref{fig:boundLTST} plots the bound on the right-hand-side of eq.~\eqref{eq:lowOut-apriori-LTS} as a solid blue line, for $\boundx = 1$, $\eta = 0.01$, and inlier rates $\inlierRate$ between $0$ and $1$. 
For comparison, we also report the trivial bound $\normTwo{\vxx\ltssdpT - \vxx\gt} \leq 2 \boundx$ as a dashed red line.\footnote{The bound follows from the triangle inequality $\normTwo{\vxx\ltssdpT - \vxx\gt} \leq \normTwo{\vxx\ltssdpT} + \normTwo{\vxx\gt} \leq 2 \boundx$.} 
We observe that, as expected, the proposed bound is only informative in the 
low-outlier case (\ie when $\inlierRate > 0.5$). Moreover, the bound predicts decreasing estimation errors when the number of inliers increases, consistently with our expectations, and approaches $\frac{\boundx \eta}{2}$ as $\inlierRate$ approaches 1.
% \LC{mention feasible degree of moment relaxation}

\begin{figure}[h!]
\centering
	\includegraphics[width=0.6\columnwidth, trim= 0mm 0mm 0mm 0mm, clip]{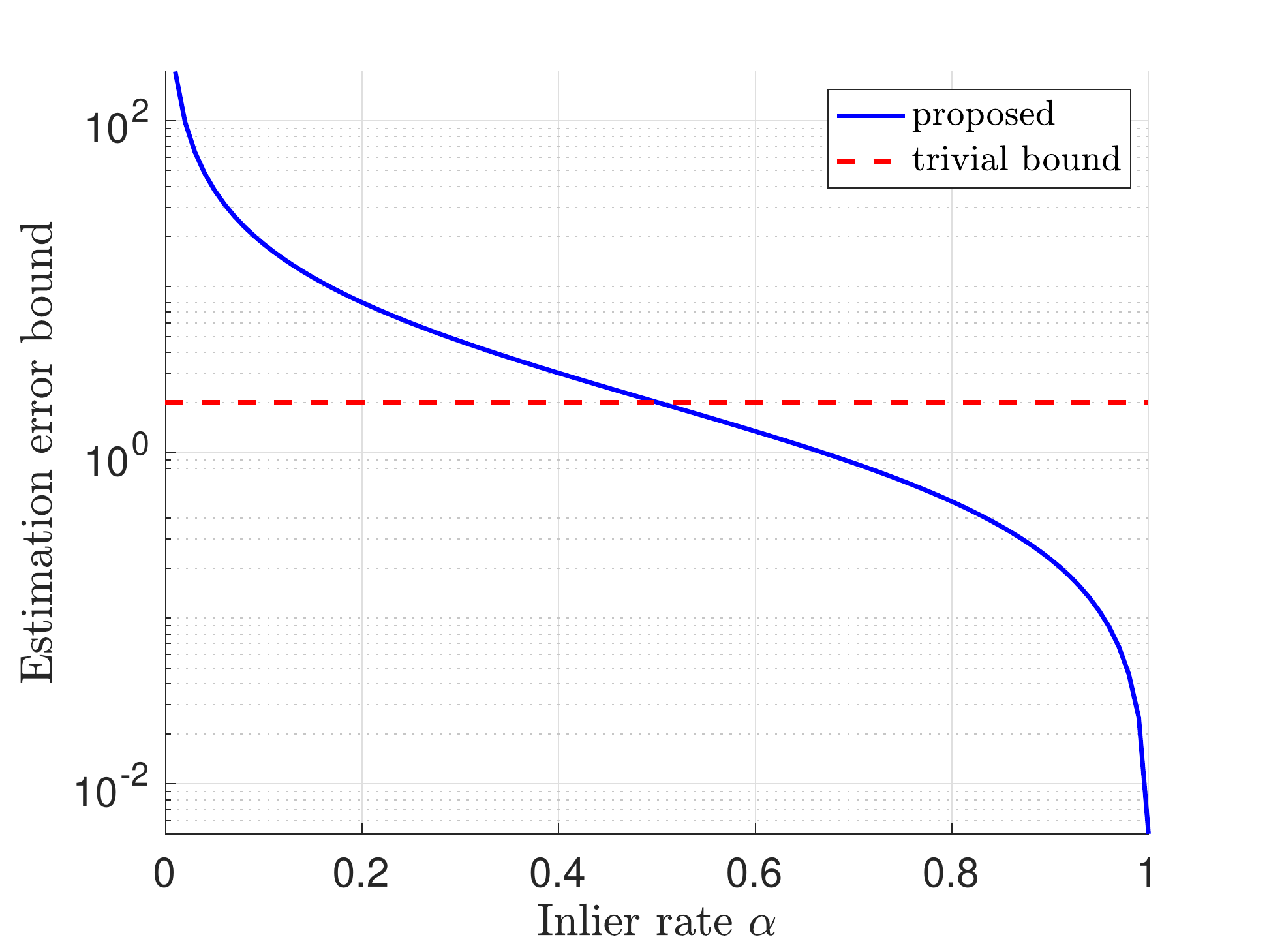}\vspace{-3mm} 
	\caption{Comparison between the proposed bound in eq.~\eqref{eq:lowOut-apriori-LTS} 
	with $\boundx = 1$, $\eta = 0.01$ (solid blue line) and the trivial bound $\normTwo{\vxx\ltssdpT - \vxx\gt} \leq 2 \boundx$ (dashed red line) for inlier rates $\inlierRate \in [0,1]$.   \label{fig:boundLTST}}
\end{figure}

We conclude this section by noting that ---while in principle~\cref{algo:lts2} might require a relaxation of order as low as $2$, similar to the ones commonly used in practice (see~\cite{Yang22pami-certifiablePerception})--- 
the choice of the order is still dictated by the anti-concentration properties of the input data, due to the constraint $\relaxOrder \geq \relaxLevel/2$, where $\relaxOrder$ is the order of the moment relaxation, while $\relaxLevel$ depends on the anti-concentration properties of the inliers. As we will see in~\cref{sec:experiments}, $\relaxLevel$ will need to be  $\geq6$ (due to the degree of the polynomials involved in~\eqref{eq:certAntiConSet2}), thus again restricting the practicality of the result in~\cref{thm:lowOut-apriori-LTS}, which in practice would only be applicable to relaxations of order $\geq 3$. 

		%!TEX root = main.tex

\subsubsection{Estimation Contract for~\eqref{eq:MC}}
\label{sec:lowOut-aPrioriResults-MC}

This section presents an estimation contract for~\cref{algo:mc} below, 
which is based on a moment relaxation of the~\eqref{eq:MC} problem. 
Notice that~\eqref{eq:MC-lin} in~\cref{algo:mc}  is equivalent to~\eqref{eq:MC} for $f_i(\vxx) = \MA_i\tran \vxx$.
% This section presents an estimation contract for a moment relaxation of the~\eqref{eq:MC} problem.
Let us start by presenting the algorithm the estimation contract refers to.
\begin{algorithm}[h!]
	\caption{Moment relaxation for~\eqref{eq:MC}.\label{algo:mc}}
	\SetAlgoLined
	\KwIn{ input data $(\vy_i,\MA_i)$, $i \in [\nrMeasurements]$, relaxation order $\relaxOrder \geq 2$.}
	\KwOut{ estimate of $\vxx\gt$. }

	\tcc{Algorithm solves a relaxation of the following~\eqref{eq:MC} problem:\\
	\vspace{-1mm}
	\beq
	\label{eq:MC-lin}\tag{MC1}
	%\vxx_\maxCon = 
	\max_{ \vomega, \vxx} 
	\;\;  \sumOverMeas \omega_i, \qquad
	\text{s.t.} 
	\;\;
	\axiomsMC \triangleq \left\{
	\begin{array}{cl}
	\omega_i^2 = \omega_i, \;\; i\in[\nrMeasurements] \\ 
	\omega_i \cdot \residuali{\vxx}^2 \leq \barc^2 \;\; i\in[\nrMeasurements]\\
	\vxx \in \Domain
	\end{array}
	\right\}
	\eeq
	\vspace{-5mm}
	}

	\tcc{Compute matrix $\MX\opt$ by solving SDP resulting from moment relaxation}
	$\MX\opt = \scenario{solve\_moment\_relaxation\_at\_order\_}r\,\eqref{eq:MC-lin}$ %(at order $\relaxOrder$)
	\myskip

    \tcc{Compute estimate}
    for each $i \in [\nrMeasurements]$ set: $\vv_i = 
    \left\{
    \begin{array}{ll}
    \frac{ \getEntries{\MX\opt}{\omega_i\vxx} }{ \getEntries{\MX\opt}{\omega_i} } & \text{ if $\getEntries{\MX\opt}{\omega_i}>0$}
    \\
    \zero & \text{ otherwise}
    \end{array}\right.$

    $\vxx\mcsdp = \sumOverMeas \frac{ \getEntries{\MX\opt}{\omega_i} }{ \sumOverMeasj \getEntries{\MX\opt}{\omega_j} } \vv_i$
    \myskip

	\Return{$\vxx\mcsdp$.}
\end{algorithm}

Contrary to the algorithms in~\cref{sec:lowOut-aPrioriResults-LTS},~\cref{algo:mc} does not require prior knowledge about the number of inliers $\an$. The moment relaxation in~\cref{algo:mc} is similar to the one proposed in~\cite{Yang22pami-certifiablePerception};\footnote{The presentation in~\cite{Yang22pami-certifiablePerception} uses the slightly different but equivalent parametrization where  
% The statement of the polynomial optimization problems corresponding to~\eqref{eq:MC} and~\eqref{eq:TLS} is slightly different but equivalent: in~\cite{Yang22pami-certifiablePerception}, 
the binary variables are restricted to $\omega_i \in \{-1;+1\}$ and their binary nature is enforced via $\omega_i^2 = 1$, while here we use a more straightforward parametrization with 
$\omega_i \in \{0;1\}$ and enforce it via $\omega_i^2 = \omega_i$.} 
the computation of the estimate $\vxx\mcsdp$ is different from the one in~\cite{Yang22pami-certifiablePerception}, but the two ``rounding'' schemes are equivalent when the relaxation is tight (in which case $\getEntries{\MX\opt}{\omega_i} \in \{0,1\}$ and $\getEntries{\MX\opt}{\omega_i\vxx} = \getEntries{\MX\opt}{\vxx}$ $\forall i$ such that $\getEntries{\MX\opt}{\omega_i} > 0$). 
We provide the following estimation contract for~\cref{algo:mc}.

%%%%%%%%%%%%%%%%%%%%%%%%%%%%%%%%%%%%%%%%%%%%%%%%%%%%%%%%%%%%%%%%%
\noindent\fbox{
\parbox{\textwidth}{
\begin{proposition}[Low-outlier case: a priori estimation contract for~\cref{algo:mc}]
\label{thm:lowOut-apriori-MC}
\mcThm{eq:lowOut-apriori-MC}
\end{proposition}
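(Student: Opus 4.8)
The plan is to transplant the two-step skeleton of the proof of~\cref{thm:lowOut-aposterioriNoisyMC} onto the pseudo-distribution $\pExpect{}{\cdot}$ associated (via the moment-relaxation/pseudo-distribution correspondence recalled in the appendix) to an optimal solution $\MX\opt$ of the order-$\relaxOrder$ relaxation of~\eqref{eq:MC-lin}, which satisfies the axiom system $\axiomsMC$. Step~(i) will show that the relaxation ``captures'' at least $\an$ units of inlier weight; step~(ii) will show that those captured inliers force the rounded estimate $\vxx\mcsdp$ to lie near $\vxx\gt$, this time using certifiable anti-concentration (\cref{def:certAnticon}) in place of the deterministic singular-value bound of~\cref{thm:lowOut-aposterioriNoisyMC}. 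The argument essentially duplicates the one behind~\cref{thm:lowOut-apriori-LTS} for~\cref{algo:lts2} and reuses the same two lemmas imported from~\cite{Karmalkar19neurips-ListDecodableRegression}; the only genuinely new point is that~\eqref{eq:MC-lin} does not fix $\sumOverMeas\omega_i$, so the captured weight must be lower-bounded through optimality rather than read off a constraint.

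For step~(i), set $w_i\triangleq\getEntries{\MX\opt}{\omega_i}=\pExpect{}{\omega_i}$ and $W\triangleq\sumOverMeas w_i$. The point $(\vxx\gt,\ones_\setInliers)$ is feasible for~\eqref{eq:MC-lin} --- the axiom $\omega_i\residuali{\vxx}^2\le\barcsq$ holds on inliers since $\vy_i-\MA_i\tran\vxx\gt=\vepsilon$ with $\normTwo{\vepsilon}\le\barc$, and trivially on outliers --- and attains objective $\an$, so by optimality $W=\pExpect{}{\sumOverMeas\omega_i}\ge\an$. From the axiom $\omega_i^2=\omega_i$ one gets $0\le w_i\le 1$ (apply $\pExpect{}{\cdot}$ to the squares $\omega_i^2$ and $(1-\omega_i)^2$, noting $(1-\omega_i)^2=1-\omega_i$ modulo the axiom), hence the outlier weight obeys $\sum_{i\in\setOutliers}w_i\le|\setOutliers|=\bn$.

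For step~(ii), put $\vv\triangleq\vxx-\vxx\gt$ and let $\vv_i$ be as in~\cref{algo:mc}. A pseudo-expectation Cauchy--Schwarz (one of the two imported lemmas), combined with $\omega_i^2=\omega_i$ and the axiom $\normTwo{\vxx}^2\le\boundx^2$, gives $\normTwo{\vv_i}\le\boundx$ for every $i$ and $w_i\normTwo{\vv_i-\vxx\gt}^2=\normTwo{\pExpect{}{\omega_i\vv}}^2/w_i\le\pExpect{}{\omega_i\normTwo{\vv}^2}$. Next, on each inlier the axioms $\omega_i\residuali{\vxx}^2\le\barcsq$ and $\normTwo{\vepsilon}\le\barc$ give an sos proof of $\omega_i\normTwo{\MA_i\tran\vv}^2\le 4\barcsq=\delta^2$ with $\delta=2\barc$ (use $\normTwo{a-b}^2\le 2\normTwo{a}^2+2\normTwo{b}^2$ and $\omega_i\le 1$); feeding this into~\eqref{eq:certAntiConSet1}, multiplied through by $\omega_i=\omega_i^2$, yields $\omega_i\,p^2\!\left(\normTwo{\MA_i\tran\vv}\right)\ge\omega_i(1-2\barc)^2$. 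Multiplying by $\normTwo{\vv}^2$, using $\omega_i p^2\le p^2$, averaging over $\setInliers$, and invoking~\eqref{eq:certAntiConSet2} at $M=2\boundx$ (note $\normTwo{\vv}^2\le 4\boundx^2=M^2$ in the pseudo-distribution) gives $(1-2\barc)^2\,\pExpect{}{\big(\sumOverInliers\omega_i\big)\normTwo{\vv}^2}\le|\setInliers|\,C\,\delta\,M^2$; substituting $C=\tfrac{\inlierRate^2\eta^2(1-2\barc)^2}{32\barc}$, $\delta=2\barc$, $M^2=4\boundx^2$ collapses the right-hand side to $\tfrac14\inlierRate^3\eta^2\nrMeasurements\boundx^2$, hence $\sumOverInliers w_i\normTwo{\vv_i-\vxx\gt}^2\le\tfrac14\inlierRate^3\eta^2\nrMeasurements\boundx^2$. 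Finally, writing $\vxx\mcsdp-\vxx\gt=\sumOverMeas(w_i/W)(\vv_i-\vxx\gt)$ and splitting into inlier and outlier parts, weighted Cauchy--Schwarz with $W\ge\an$ bounds the inlier part by $\sqrt{\tfrac{1}{\an}\cdot\tfrac14\inlierRate^3\eta^2\nrMeasurements\boundx^2}=\tfrac12\inlierRate\eta\boundx$, while $\normTwo{\vv_i-\vxx\gt}\le 2\boundx$, $\sum_{i\in\setOutliers}w_i\le\bn$, and $W\ge\an$ bound the outlier part by $2\boundx\tfrac{1-\inlierRate}{\inlierRate}$; adding the two yields the claimed inequality.

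I expect the main obstacle to be making step~(ii) fully rigorous inside a degree-$2\relaxOrder$ pseudo-distribution: one must verify that substituting $\vv=\vxx-\vxx\gt$ into the degree-$\relaxLevel$ sos certificates of~\eqref{eq:certAntiConSet1}--\eqref{eq:certAntiConSet2} and then multiplying through by the $\omega_i$'s keeps every intermediate inequality of the form ``sos plus an sos-combination of the axioms $\axiomsMC$'' and within the degree allowed by $\relaxOrder\ge\relaxLevel/2$ (absorbing, as in~\cite{Karmalkar19neurips-ListDecodableRegression}, the $O(1)$ extra degree introduced by the $\omega_i$ factors). The remaining ingredients --- the pseudo-expectation Cauchy--Schwarz steps and the arithmetic with the constants $C$, $\delta$, $M$ --- are routine.
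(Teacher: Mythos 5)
Your proposal is correct and follows essentially the same route as the paper's own proof: the paper likewise obtains $\sumOverMeas \pExpect{\pdist}{\omega_i}\ge\an$ from optimality of the relaxation against the feasible point $(\vxx\gt,\ones_{\setInliers})$, invokes the two Karmalkar-style lemmas (whose content your step~(ii) re-derives) to bound the inlier-weighted deviation, bounds each outlier term by $2\boundx$ times its weight, and divides by the total weight before applying Jensen/triangle inequality. The only cosmetic differences are that you carry a squared intermediate bound and finish with weighted Cauchy--Schwarz where the paper's lemma uses an $\ell_1$--$\ell_2$ norm step, and that your shorthand ``multiplied through by $\omega_i$'' is made rigorous, exactly as in the paper's lemma, by substituting $\omega_i(\vxx-\vxx\gt)$ into the even-polynomial anti-concentration certificate and using $\omega_i^2=\omega_i$ --- precisely the degree/sos bookkeeping you already flag.
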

}}

Note that the performance guarantees are essentially the same as~\cref{thm:lowOut-apriori-LTS} and the 
proof proceeds along the same line, but in this case the objective forces the solution to pick enough inliers, while in~\cref{thm:lowOut-apriori-LTS} the same role was played by the constraint 
$\sumOverMeas{\omega_i} = \an$. 
% At the same time,~\cref{algo:mc}, contrarily the algorithms in~\cref{sec:lowOut-aPrioriResults-LTS}, has the desirable practical advantage of not requiring prior knowledge about the number of inliers $\an$. 

		%!TEX root = main.tex

\subsubsection{Estimation Contract for~\eqref{eq:TLS}}
\label{sec:lowOut-aPrioriResults-TLS}

This section presents an estimation contract for~\cref{algo:tls}, which is based on a moment relaxation of the~\eqref{eq:TLS} problem. 
Notice that~\eqref{eq:TLS-lin} in~\cref{algo:tls}  is equivalent to~\eqref{eq:TLS} for $f_i(\vxx) = \MA_i\tran \vxx$.\footnote{The attentive reader might notice that $\axiomsMC$ in~\eqref{eq:TLS-lin} includes redundant constraints $\omega_i \cdot \residuali{\vxx}^2 \leq \barc^2$ in $\axiomsMC$: these constraints do not change the optimal solution of~\eqref{eq:TLS} ---since the objective is already forcing $\omega_i = 0$ whenever $\residuali{\vxx}^2 > \barcsq$--- but will make our proofs easier.}
\begin{algorithm}[h!]
	\caption{Moment relaxation for~\eqref{eq:TLS}.\label{algo:tls}}
	\SetAlgoLined
	\KwIn{ input data $(\vy_i,\MA_i)$, $i \in [\nrMeasurements]$, relaxation order $\relaxOrder$.}
	\KwOut{ estimate of $\vxx\gt$. }

	\tcc{Algorithm solves a relaxation of the following~\eqref{eq:TLS} problem:\\
	\vspace{-5mm}
	\begin{align}
	\label{eq:TLS-lin}\tag{TLS1}
	%\vxx_\tls = 
	\min_{ \vxx,\vomega } 
		& \hspace{3mm} \sumOverMeas \omega_i \cdot \residuali{\vxx}^2 + (1-\omega_i) \cdot \barcsq
	\\
	\text{s.t.}
	& \hspace{3mm}
	\axiomsMC \triangleq \left\{
	\begin{array}{cl}
	\omega_i^2 = \omega_i, \;\; i\in[\nrMeasurements] \\ 
	\omega_i \cdot \residuali{\vxx}^2 \leq \barc^2 \;\; i\in[\nrMeasurements] \nonumber\\
	\vxx \in \Domain
	\end{array}
	\right\}
	\end{align}
	\vspace{-5mm}
	}

	\tcc{Compute matrix $\MX\opt$ by solving SDP resulting from moment relaxation}
	$\MX\opt = \scenario{solve\_moment\_relaxation\_at\_order\_}r\,\eqref{eq:TLS-lin}$ %(at order $\relaxOrder$)
	\myskip

    \tcc{Compute estimate}
    for each $i \in [\nrMeasurements]$ set: $\vv_i = 
    \left\{
    \begin{array}{ll}
    \frac{ \getEntries{\MX\opt}{\omega_i\vxx} }{ \getEntries{\MX\opt}{\omega_i} } & \text{ if $\getEntries{\MX\opt}{\omega_i}>0$}
    \\
    \zero & \text{ otherwise}
    \end{array}\right.$

    $\vxx\tlssdp = \sumOverMeas \frac{ \getEntries{\MX\opt}{\omega_i} }{ \sumOverMeasj \getEntries{\MX\opt}{\omega_j} } \vv_i$
    \myskip

	\Return{$\vxx\tlssdp$.}
\end{algorithm}

Similar to~\cref{algo:mc} in~\cref{sec:lowOut-aPrioriResults-MC},~\cref{algo:tls} does not require prior knowledge about the number of inliers $\an$. Moreover, the moment relaxation is similar to the one proposed in~\cite{Yang22pami-certifiablePerception} (except from the redundant constraints 
$\omega_i \cdot \residuali{\vxx}^2 \leq \barc^2$ in $\axiomsMC$); the computation of the estimate $\vxx\tlssdp$ is different from the one in~\cite{Yang22pami-certifiablePerception}, but the two ``rounding'' schemes are equivalent when the relaxation is tight. 
We provide the following estimation contract for~\cref{algo:tls}.

%%%%%%%%%%%%%%%%%%%%%%%%%%%%%%%%%%%%%%%%%%%%%%%%%%%%%%%%%%%%%%%%%%%%%%%%%
\noindent\fbox{
\parbox{\textwidth}{
\begin{proposition}[Low-outlier case: a priori estimation contract for~\cref{algo:tls}]
\label{thm:lowOut-apriori-TLS}
\tlsThm{eq:lowOut-apriori-TLS}
\end{proposition}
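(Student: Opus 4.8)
The plan is to mirror the proof of Proposition~\ref{thm:lowOut-apriori-MC} (the estimation contract for \eqref{eq:MC}), adapting it to handle the fact that \eqref{eq:TLS} does not have a hard constraint fixing the number of selected inliers. The starting point is the optimality of the pair $(\vxx,\vomega)$ extracted from the moment relaxation of \eqref{eq:TLS-lin}. First I would use the ground-truth pair $(\vxx\gt, \vomega\gt)$, with $\omega_i\gt = 1$ for $i \in \setInliers$ and $0$ otherwise, as a feasible competitor in \eqref{eq:TLS-lin}: its objective value is $\sum_{i\in\setInliers}\residuali{\vxx\gt}^2 + (n - \an)\barcsq = \gamma\gt + \bn\,\barcsq$. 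Comparing against the relaxed optimum (which, via the \sos machinery of \cref{sec:notationAndPreliminaries}, upper-bounds the true optimum in the appropriate pseudo-expectation sense) forces a lower bound on the pseudo-expected number of measurements that the relaxed solution effectively selects \emph{and} are true inliers — concretely, after rearranging, one obtains that $\pExpect{}{\sum_{i \in \setInliers}\omega_i} \geq \an - \gamma\gt/\barcsq$. This is the analogue of step (i) ("the solution captures enough inliers") in the proof of Proposition~\ref{thm:lowOut-aposterioriNoisyMC}, now carried out at the level of pseudo-distributions.

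Next, step (ii) proceeds exactly as in the \sos proof underlying Proposition~\ref{thm:lowOut-apriori-MC}. For each $i$ with $\omega_i = 1$ the constraint $\omega_i \cdot \residuali{\vxx}^2 \leq \barcsq$ together with the inlier model $\vy_i = \MA_i\tran\vxx\gt + \vepsilon_i$, $\normTwo{\vepsilon_i}\leq\barc$, gives an \sos proof that $\omega_i \cdot \normTwo{\MA_i\tran(\vxx - \vxx\gt)}^2 \leq 4\barcsq$ (triangle inequality, squared, in the \sos sense — \cf eq.~\eqref{eq:proof3}). Summing over the inliers selected by the relaxation and invoking $k$-certifiable $\antiConReq$-anti-concentration of $\{\MA_i\}_{i\in\setInliers}$ — precisely eqs.~\eqref{eq:certAntiConSet1}–\eqref{eq:certAntiConSet2} applied to the mismatch vector $\vv = \vxx - \vxx\gt$, whose norm is bounded by $2\boundx$ via \cref{ass:explicitlyBoundedxs} — yields an \sos proof that $\normTwo{\vxx-\vxx\gt}$ is controlled by the number of selected inliers that land in $\setInliers$. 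Combining with the lower bound from step (i), and translating the resulting \sos proof into a statement about the rounded estimate $\vxx\tlssdp$ (the convex combination of the $\vv_i$ computed in \cref{algo:tls}) via the pseudo-expectation property, produces a bound of the shape
\beq
\normTwo{\vxx\tlssdp - \vxx\gt} \leq \frac{\inlierRate\,\boundx\,\nrMeasurements}{\an - \gamma\gt/\barcsq}\left(\frac{\inlierRate\,\eta}{2} + 2\,\frac{1-\inlierRate}{\inlierRate}\right),
\eeq
where the prefactor $\inlierRate\,\boundx\,\nrMeasurements / (\an - \gamma\gt/\barcsq)$ is exactly the degradation in the "effective inlier count" caused by not knowing $\an$ a priori (when $\gamma\gt = 0$ this reduces to $\boundx$, recovering the form of Proposition~\ref{thm:lowOut-apriori-MC}).

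I expect the main obstacle to be the bookkeeping in step (i): unlike \eqref{eq:MC}, whose objective directly counts selected measurements, \eqref{eq:TLS} trades off residual magnitude against the penalty $\barcsq$ per rejected measurement, so extracting a clean lower bound on $\pExpect{}{\sum_{i\in\setInliers}\omega_i}$ requires carefully splitting the objective over inliers and outliers and arguing that the outlier contribution is nonnegative — all while staying inside the \sos proof system, where inequalities that are "obvious" pointwise (e.g. $\omega_i\,\residuali{\vxx}^2 \geq 0$) must be justified as \sos certificates using the rules collected in \cref{app:sosProofs}. A secondary subtlety is ensuring the denominator $\an - \gamma\gt/\barcsq$ is positive, which is guaranteed by the implicit assumption (inherited from the setup and the analogue in Proposition~\ref{thm:lowOut-aposterioriNoisyTLS}) that $\barc$ is large enough relative to the ground-truth inlier residuals; I would state this explicitly. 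The remaining steps — the \sos triangle-inequality bound and the anti-concentration step — are essentially verbatim copies of the corresponding lemmas from~\cite{Karmalkar19neurips-ListDecodableRegression} that already power Propositions~\ref{thm:lowOut-apriori-LTS} and~\ref{thm:lowOut-apriori-MC}, so I would invoke those rather than re-derive them.
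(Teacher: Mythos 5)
Your overall skeleton is the right one and largely matches the paper's proof: reuse the anti-concentration lemmas (the adaptations of Lemmas 4.1--4.2 of Karmalkar et al., i.e.\ \cref{eq:boundxxgt} and \cref{lem:exists_vi}, which apply verbatim because $\axiomsMC$ is a subset of the constraints of~\eqref{eq:TLS-lin}), compare the optimal pseudo-distribution against the one supported on $(\vxx\gt,\vomega\gt)$ with objective $\gamma\gt + (\nrMeasurements-\an)\barcsq$, and finish with the weighted-average rounding and Jensen's inequality; the interpretation of the prefactor and the degeneration to the~\eqref{eq:MC} bound when $\gamma\gt=0$ are also correct.

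However, your step (i) contains a genuine error. The optimality comparison does \emph{not} yield $\pExpect{\pdist}{\sum_{i\in\setInliers}\omega_i}\geq \an-\gamma\gt/\barcsq$, i.e.\ a lower bound on the weight placed on \emph{true} inliers. From $\sumOverMeas\pExpect{\pdist}{\omega_i\,\residuali{\vxx}^2}+\big(\nrMeasurements-\sumOverMeas\pExpect{\pdist}{\omega_i}\big)\barcsq\leq \gamma\gt+(\nrMeasurements-\an)\barcsq$ and nonnegativity of $\pExpect{\pdist}{\omega_i\,\residuali{\vxx}^2}=\pExpect{\pdist}{\omega_i^2\,\residuali{\vxx}^2}$ one only gets $\sumOverMeas\pExpect{\pdist}{\omega_i}\geq \an-\gamma\gt/\barcsq$, a bound on the \emph{total} selected weight. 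A bound restricted to $\setInliers$ is false in general in the strong adversary model: mutually consistent outliers can satisfy $\omega_i\,\residuali{\vxx}^2\leq\barcsq$ for some spurious $\vxx$ and absorb up to $(1-\inlierRate)\nrMeasurements$ of the weight, so the best you could conclude is $\sum_{i\in\setInliers}\pExpect{\pdist}{\omega_i}\geq\an-\gamma\gt/\barcsq-(1-\inlierRate)\nrMeasurements$ (an overlap argument of the kind used in the a posteriori \cref{thm:lowOut-aposterioriNoisyTLS}, which needs $\inlierRate>0.5$ and is not what the a priori proof uses). The fix, which is exactly how the paper proceeds, is that no inlier-weight lower bound is needed: \cref{lem:exists_vi} already gives $\sumOverInliers\pExpect{\pdist}{\omega_i}\normTwo{\vv_i-\vxx\gt}\leq \inlierRate^2\eta\boundx\nrMeasurements/2$ regardless of how the weight is distributed, the outlier contribution is bounded trivially by $\sumOverOutliers\pExpect{\pdist}{\omega_i}\normTwo{\vv_i-\vxx\gt}\leq 2\nrMeasurements\boundx(1-\inlierRate)$ (this is where the $2(1-\inlierRate)/\inlierRate$ term comes from, a step your sketch glosses over), and the total-weight bound $\sumOverMeas\pExpect{\pdist}{\omega_i}\geq\an-\gamma\gt/\barcsq$ enters only as the normalization denominator of $\vxx\tlssdp$ before applying Jensen. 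With your step (i) restated as a bound on the total weight and the outlier term handled explicitly, your argument becomes the paper's proof.
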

}}
Contrarily to~\cref{thm:lowOut-apriori-LTS} and~\cref{thm:lowOut-apriori-MC}, the 
error bound in~\cref{thm:lowOut-apriori-TLS} depends on the ground truth residual error $\gamma\gt$. In particular, if $\gamma\gt = 0$ (\ie noiseless inliers), the bound becomes the same as the ones in~\cref{thm:lowOut-apriori-LTS} and~\cref{thm:lowOut-apriori-MC}, but as the residual error  approaches $\gamma\gt = \an \; \barcsq$ (\ie each inlier has the maximum allowed error $\barcsq$), the bound become vacuous.

%!TEX root = main.tex

\section{Estimation Contracts for High Outlier Rates}
\label{sec:highOutliers}

This section focuses on the high-outlier case where $\beta \gg 0.5$.
Note that in the high-outlier case any point estimator ---including~\eqref{eq:LTS},~\eqref{eq:MC}, and~\eqref{eq:TLS}--- can be tricked into returning an arbitrarily wrong estimate: intuitively, 
since the majority of the measurements are outliers, the outliers can agree on an $\vxx$ that 
optimizes~\eqref{eq:LTS},~\eqref{eq:MC}, or~\eqref{eq:TLS}, while being far from $\vxx\gt$.
 At the same time, an algorithm can still compute a provably accurate estimate 
 if it is allowed to return a \emph{list} of potential hypotheses, in the hope that at least one of them is correct. 
 This setup is typically referred to as \emph{list-decodable regression}~\cite{Karmalkar19neurips-ListDecodableRegression}. 
In the following, we review the algorithm and theoretical guarantees from~\cite{Karmalkar19neurips-ListDecodableRegression}, 
which we adapt to the case of vector-valued measurements.
	%!TEX root = main.tex

\subsection{A Priori Estimation Contract for List-Decodable Estimation}
\label{sec:highOut-aPrioriResults}

In the high-outlier case, Karmalkar\setal~\cite{Karmalkar19neurips-ListDecodableRegression} 
proposed using~\cref{algo:ldr} (given below) to perform list-decodable outlier-robust regression. 
The algorithm returns a (small) list of potential estimates, and, as we will see below, 
under suitable conditions on the inliers, it guarantees that at least one of such estimates is close to the ground truth. Both the algorithm and estimation contract in this section are adaptations of the results in~\cite{Karmalkar19neurips-ListDecodableRegression} to the case of vector-valued measurements. 

\begin{algorithm}[h!]
  \caption{List-Decodable Outlier-Robust Estimation~\cite{Karmalkar19neurips-ListDecodableRegression}.\label{algo:ldr}}
  \SetAlgoLined
  \KwIn{ input data $(\vy_i,\MA_i)$, $i \in [\nrMeasurements]$, inlier rate $\inlierRate$, relaxation order $\relaxOrder \geq 2$, integer $N \geq 1$.}
  \KwOut{ list of estimates of $\vxx\gt$. }

  \tcc{Algorithm solves a relaxation of the following problem:\\
  \vspace{-5mm}
  \begin{align}
  \label{eq:LDR}\tag{LDR}
  \min_{\vomega,\vxx} & \normTwo{\vomega}^2, 
  \;\;
  \text{s.t.} \;\; \axiomsLDR \triangleq \left\{
  \begin{array}{cl}
  \omega_i^2 = \omega_i, \;\; i=[\nrMeasurements] \\ 
  \sumOverMeas{\omega_i} = \inlierRate \nrMeasurements  \\
  \omega_i \cdot \residuali{\vxx}^2 \leq \barcsq, \;\; i=[\nrMeasurements]\\
  \vxx \in \Domain 
  \end{array}
  \right\}
  \end{align}
  \vspace{-5mm}
  }

  \tcc{Compute matrix $\MX\opt$ by solving SDP resulting from moment relaxation}
  $\MX\opt = \scenario{solve\_moment\_relaxation\_at\_order\_}r\,\eqref{eq:LDR}$ %(at order $\relaxOrder$)

  \tcc{Compute list of estimates}
  
  for each $i \in [\nrMeasurements]$ set: $\vv_i = 
  \left\{
  \begin{array}{ll}
  \frac{ \getEntries{\MX\opt}{\omega_i\vxx} }{ \getEntries{\MX\opt}{\omega_i} } & \text{ if $\getEntries{\MX\opt}{\omega_i}>0$}
  \\
  \zero & \text{ otherwise}
  \end{array}\right.$ \label{line:ldr2}

  create empty list $\List = \emptyset$.

  sample $N/\inlierRate$ times from $[\nrMeasurements]$ with probability $\frac{1}{\an} \getEntries{\MX\opt}{\omega_i}$, and for each extracted $i$ add $\vv_i$ to $\List$. \label{line:ldr4}

  \Return{$\List$.}
\end{algorithm}

The rationale behind problem~\eqref{eq:LDR} is not as immediate as the problems presented in the previous sections. For instance, in~\eqref{eq:MC-lin}, it was clear we tried to select the largest number of measurements, \ie the largest $\sumOverMeas \omega_i$, such that the corresponding estimate $\vxx$ had residual below $\barc$. 
However,~\eqref{eq:LDR} seems to do just the opposite: it is looking for the \emph{smallest} $\sumOverMeas \omega_i^2$. 
% The problem is based on the following insight. 

\myParagraph{Intuitions behind~\cref{algo:ldr}}
Imagine we could build a list $\Longlist$ containing every possible subset $\calS'$ of $\an$ measurements that satisfy $\residuali{\vxx'}^2 \leq \barcsq, \forall i\in\calS'$ and for some  estimate $\vxx'$. Clearly, the set of inliers $\setInliers$ would be one of such subsets and hence belong to $\Longlist$.
For each such subset $\calS' \in \Longlist$, 
we could define the indicator vector $\vomega' = \ones_{\calS'}$, and by inspection,
 $(\vomega',\vxx')$ would be feasible for~\eqref{eq:LDR}. 
Therefore, the first intuition behind~\cref{algo:ldr} is that the feasible set of $\axiomsLDR$ of~\eqref{eq:LDR} includes every such $(\vomega',\vxx')$, including $(\vomega_\setInliers,\vxx\gt)$.
 Unfortunately, the list $\Longlist$ has exponential size in general~\cite{Karmalkar19neurips-ListDecodableRegression}. 
 Therefore, the second intuition behind~\cref{algo:ldr} is that the moment relaxation~\eqref{eq:LDR} 
 will compute an indicator vector $\vomega$ that will ``spread'' across many subsets in $\Longlist$.
 While this is formalized in the proofs (and further discussed in~\cite{Karmalkar19neurips-ListDecodableRegression}), it is relatively easy to visualize why the relaxation would ``spread'' across the subsets by considering the following simpler relaxation of~\eqref{eq:LDR}: 
% (we assume $\barcsq = 0$ for simplicity):
  \begin{align}
  \label{eq:LDRrelaxlin}
  \min_{\vomega,\vxx} & \normTwo{\vomega}^2, 
  \;\;
  \text{s.t.} \;\; \left\{
  \begin{array}{cl}
  \omega_i \in [0,1], \;\; i=[\nrMeasurements] \\ 
  \sumOverMeas{\omega_i} = \an  \\
  \omega_i \cdot \residuali{\vxx}^2 \leq \barcsq, \;\; i=[\nrMeasurements]\\
  \vxx \in \Domain 
  \end{array}
  \right\}
  \end{align}
  where we only relaxed $\omega_i \in \{0;1\}$ to $\omega_i \in [0,1]$.
  Imagine that the list $\Longlist$ contains two overlapping sets 
 $\calS'$ and $\calS"$ and hence $\vomega' = \ones_{\calS'}$ and $\vomega" = \ones_{\calS"}$ would be feasible for~\eqref{eq:LDRrelaxlin} for some $\vxx$. In such a case the vector $\vomega = \frac{\ones_{\calS'} + \ones_{\calS"}}{2}$
  %= \ones_{\calS' \cup \calS"} 
 ---that spreads across both subsets---  would still satisfy the constraint $\sumOverMeas{\omega_i} = \an$ and achieve a lower objective in~\eqref{eq:LDRrelaxlin} compared to 
  $\ones_{\calS'}$ and $\ones_{\calS"}$.\footnote{ 
  Calling $t$ the number of common elements between $\calS'$ and $\calS"$:
  $\normTwo{\vomega}^2 = \frac{1}{4} (\normTwo{\ones_{\calS'} + \ones_{\calS"}}^2) = 
  \frac{1}{4} (2 \an  + 2 t) = \frac{\an + t}{2}$ (which is $< \an$ as long as the sets do not fully overlap), while each set would achieve an objective
  $\normTwo{\vomega'}^2 = \normTwo{\vomega"}^2 = \an$.}
   Similarly, after applying the moment relaxation to~\eqref{eq:LDR}, the objective of~\eqref{eq:LDR} will favor indicator vectors that span multiple sets. The last intuition behind~\cref{algo:ldr} is that we can use these indicator vectors to sample good estimates of $\vxx\gt$, as done in lines~\ref{line:ldr2}-\ref{line:ldr4}  of the algorithm.
   %Then, we can use the resulting indicator variables to sample good hypotheses for $\vxx$
   %under certain conditions on the inliers, 
   %by sampling the resulting indicator variables, we can recover some of the inliers with 
% before in this paper: we want pseudo-distribution to be peaked. here we want it to be as spread as possible across hypotheses
 % Assuming the inliers satisfy certain conditions, they will be well-represented by this ``spread-out'' indicator variable, hence if we sample 
% Key idea:
% - soluble: subset such that noise is within bound.
% - we can return all subsets of size $\inlierRate \nrMeasurements$ that are soluble. Issue~\cite{Karmalkar19neurips-ListDecodableRegression} number of such subsets is exponential in the dimension of $\vxx$.
% - second idea would be to search for disjoint subsets of $[\nrMeasurements]$: this would easily miss good solutions~\cite{Karmalkar19neurips-ListDecodableRegression} 

\myParagraph{Estimation contract}
The estimation contract for~\cref{algo:ldr} is given as follows. 

\noindent\fbox{
\parbox{\textwidth}{
\begin{theorem}[High-outlier case: a priori estimation contract for~\cref{algo:ldr}, adapted from  Theorem 1.5 in~\cite{Karmalkar19neurips-ListDecodableRegression}]
\label{thm:highOut-apriori}
\ldrThm{eq:highOut-apriori}
\end{theorem}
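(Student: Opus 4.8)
The plan is to follow the standard three–step \sos recipe for robust estimation, adapting the two core lemmas of~\cite{Karmalkar19neurips-ListDecodableRegression} from scalar to vector–valued measurements. First I would note that $(\vomega,\vxx)=(\ones_\setInliers,\vxx\gt)$ is feasible for~\eqref{eq:LDR}: the constraints $\omega_i^2=\omega_i$, $\sumOverMeas\omega_i=\an$, and $\vxx\gt\in\Domain$ are immediate, and for $i\in\setInliers$ the inlier model~\eqref{eq:linModelProbStatement} gives $\omega_i\residuali{\vxx\gt}^2=\normTwo{\vepsilon}^2\leq\barcsq$. Hence the moment relaxation at order $\relaxOrder\geq\relaxLevel/2$ is feasible, and the SDP returns a degree-$2\relaxOrder$ pseudo-distribution $\mu$ over $(\vomega,\vxx)$ satisfying $\axiomsLDR$ in pseudo-expectation; write $\vw\triangleq\pExpect{\mu}{\vomega}$, so $\sum_iw_i=\an$ and $0\leq w_i\leq 1$ (the upper bound because $1-\omega_i\equiv(1-\omega_i)^2$ modulo $\{\omega_i^2=\omega_i\}$). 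From $\mu$ I will use two facts: a Cauchy--Schwarz-type error bound (next paragraph), and the ``spreading'' bound $\inprod{\ones_\setInliers}{\vw}\geq\inlierRate\,\an$, which is where the $\normTwo{\vomega}^2$ objective enters --- since $\mathrm{Dirac}(\ones_\setInliers,\vxx\gt)$ is feasible, first-order optimality of $\mu$ against the perturbation $(1-\epsilon)\mu+\epsilon\,\mathrm{Dirac}(\ones_\setInliers,\vxx\gt)$ forces $\inprod{\ones_\setInliers}{\vw}\geq\normTwo{\vw}^2\geq(\sum_iw_i)^2/\nrMeasurements=\an^2/\nrMeasurements=\inlierRate\,\an$.

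The heart of the argument is an \sos proof, with $\vv\triangleq\vxx-\vxx\gt$, $\delta\triangleq2\barc$, $M\triangleq2\boundx$, from $\axiomsLDR$ together with the numerical facts $\residuali{\vxx\gt}^2\leq\barcsq$ ($i\in\setInliers$) and $\normTwo{\vxx\gt}^2\leq\boundx^2$ and the certifiable anti-concentration of the inlier matrices, of
\beq
\label{eq:planBound}
\inprod{\ones_\setInliers}{\vomega}\cdot\normTwo{\vv}^2 \;\leq\; \frac{\an\,\inlierRate^2\,\eta^2\,\boundx^2}{4}.
\eeq
I would build it in four moves. (i) Multiplying $\barcsq-\omega_i\residuali{\vxx}^2\geq0$ by $\omega_i=\omega_i^2\geq0$ and combining with $\omega_i(\barcsq-\residuali{\vxx\gt}^2)\geq0$ via $\normTwo{a-b}^2\leq2\normTwo{a}^2+2\normTwo{b}^2$ gives $\omega_i\normTwo{\MA_i\tran\vv}^2\leq4\omega_i\barcsq=\omega_i\delta^2$. (ii) Multiplying the degree-$\relaxLevel$ \sos certificate of~\eqref{eq:certAntiConSet1} by $\omega_i$ yields $\omega_i\,p^2(\normTwo{\MA_i\tran\vv})\geq(1-2\barc)^2\omega_i$; summing over $i\in\setInliers$, $(1-2\barc)^2\inprod{\ones_\setInliers}{\vomega}\leq\sum_{i\in\setInliers}\omega_i\,p^2(\normTwo{\MA_i\tran\vv})$. (iii) Using $\omega_i\leq1$ in the \sos sense and $p^2,\normTwo{\vv}^2\geq0$, then $\normTwo{\vv}^2\leq2\normTwo{\vxx}^2+2\normTwo{\vxx\gt}^2\leq M^2$ (\cref{ass:explicitlyBoundedxs}) and~\eqref{eq:certAntiConSet2}, I get $(1-2\barc)^2\inprod{\ones_\setInliers}{\vomega}\normTwo{\vv}^2\leq\normTwo{\vv}^2\sum_{i\in\setInliers}p^2(\normTwo{\MA_i\tran\vv})\leq\an\,C\,\delta\,M^2$. (iv) Substituting $C=\frac{\inlierRate^2\eta^2(1-2\barc)^2}{32\barc}$, $\delta=2\barc$, $M^2=4\boundx^2$ and dividing by $(1-2\barc)^2$ yields~\eqref{eq:planBound}. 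Taking $\pExpect{\mu}{\cdot}$ of~\eqref{eq:planBound} then gives $\pExpect{\mu}{\inprod{\ones_\setInliers}{\vomega}\normTwo{\vxx-\vxx\gt}^2}\leq\frac{\an\inlierRate^2\eta^2\boundx^2}{4}$.

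Next I would run the rounding analysis. For each $i$ with $w_i=\getEntries{\MX\opt}{\omega_i}>0$ the algorithm sets $\vv_i=\pExpect{\mu}{\omega_i\vxx}/w_i$, so $w_i(\vv_i-\vxx\gt)=\pExpect{\mu}{\omega_i(\vxx-\vxx\gt)}$, and pseudo-Cauchy--Schwarz applied coordinate-wise with $\omega_i=\omega_i^2$ gives $w_i\normTwo{\vv_i-\vxx\gt}^2\leq\pExpect{\mu}{\omega_i\normTwo{\vxx-\vxx\gt}^2}$; summing over $i\in\setInliers$ and using the previous bound, $\sum_{i\in\setInliers}w_i\normTwo{\vv_i-\vxx\gt}^2\leq\frac{\an\inlierRate^2\eta^2\boundx^2}{4}$. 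The algorithm draws $i$ with probability $q_i=w_i/\an$ (valid since $\sum_iw_i=\an$); by Markov the chance of drawing an inlier index with $\normTwo{\vv_i-\vxx\gt}>\eta\boundx$ is at most $\frac{1}{\an\eta^2\boundx^2}\sum_{i\in\setInliers}w_i\normTwo{\vv_i-\vxx\gt}^2\leq\inlierRate^2/4$, while the chance of drawing any inlier index is $\inprod{\ones_\setInliers}{\vw}/\an\geq\inlierRate$ by the spreading bound. Hence a single draw puts into $\List$ a $\vv_i$ with $\normTwo{\vv_i-\vxx\gt}\leq\eta\boundx$ with probability at least $\inlierRate-\inlierRate^2/4\geq\inlierRate/2$; over the $N/\inlierRate$ independent draws, the probability that $\List$ contains no such estimate is at most $(1-\inlierRate/2)^{N/\inlierRate}$, which is the first claim. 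For the ``moreover'', with $N=10$ and $\inlierRate\geq0.01$ one has $(1-\inlierRate/2)^{10/\inlierRate}\leq e^{-5}<0.01$ (from $1-x\leq e^{-x}$), so the stated estimate lies in the length-$(\leq1000)$ list with probability at least $\probBoundReal$.

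The delicate part will be the \sos core: I must verify that every inference --- in particular multiplying the degree-$\relaxLevel$ anti-concentration certificates~\eqref{eq:certAntiConSet1}--\eqref{eq:certAntiConSet2} by $\omega_i$ and by $\normTwo{\vxx-\vxx\gt}^2$ and folding in the domain constraints --- stays within the degree budget $2\relaxOrder\geq\relaxLevel$, so that~\eqref{eq:planBound} is provable at the relaxation order actually used and therefore respected by $\mu$. A second subtlety is pinning down the role of the $\normTwo{\vomega}^2$ objective in yielding $\inprod{\ones_\setInliers}{\vw}\geq\inlierRate\,\an$; without it an adversary could plant a consistent cluster of $\an$ outliers and the SDP could concentrate all its mass there. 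Both points are treated in~\cite{Karmalkar19neurips-ListDecodableRegression} for the scalar case, and passing to vector-valued measurements only changes constants and invokes the matrix versions of~\cref{def:Anticon} and~\cref{def:certAnticon}.
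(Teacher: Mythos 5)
Your proposal is correct and follows essentially the same route as the paper: the identical three-lemma structure adapted from~\cite{Karmalkar19neurips-ListDecodableRegression} (an anti-concentration-based \sos certificate giving $\sum_{i\in\setInliers}\omega_i\normTwo{\vxx-\vxx\gt}^2\leq\frac{\an\inlierRate^2\eta^2\boundx^2}{4}$, a Cauchy--Schwarz rounding step transferring this to the $\vv_i$, and the spreading bound $\sum_{i\in\setInliers}\pExpect{\pdist}{\omega_i}\geq\inlierRate^2\nrMeasurements$), finished by the same Markov/sampling argument yielding per-draw success probability at least $\inlierRate/2$, the bound $1-(1-\inlierRate/2)^{N/\inlierRate}$, and the $N=10$, $\inlierRate\geq0.01$ particularization. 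The only deviations are local and valid simplifications: you obtain the spreading bound from first-order optimality of $\normTwo{\pExpect{\pdist}{\vomega}}^2$ along the mixture with the Dirac at $(\ones_\setInliers,\vxx\gt)$ plus Cauchy--Schwarz, where the paper runs an explicit perturbation-and-contradiction computation, and you use an unconditional second-moment Markov step (with $\inlierRate-\inlierRate^2/4\geq\inlierRate/2$) in place of the paper's conditional first-moment one, with the same resulting constants.
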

}}

% \myParagraph{Practical considerations} 
The estimation contract in~\cref{thm:highOut-apriori} provides probabilistic guarantees on the outcome of the estimator, as opposed to the deterministic results in the previous sections.
 The sampling stage is needed to keep the list small, while also ensuring that it captures at least an estimate close to the ground truth.   
 Note that the final statement in the theorem is just a particularization of the claim that eq.~\eqref{eq:highOut-apriori} holds with probability at least $\probBound$; this is 
 given to reassure the reader that the approach can provide good estimates with high probability already for small $N$ (\ie $N=10$) and for very challenging problems where $\inlierRate$ is very low (\ie $\inlierRate = 0.01$).
  Clearly, the probability will increase with the user-specified parameter $N$, which controls the number of samples to draw.

\cref{thm:highOut-apriori} and~\cref{algo:ldr} are of interest for geometric perception for several reasons.
First of all, contrary to the literature on multi-hypothesis estimation  (\eg~\cite{BarShalom93,Hsiao19icra-mhISAM2,Fourie16iros-isam3}) and particle filtering (\eg~\cite{Dellaert99}) in robotics and related fields, 
\cref{algo:ldr} provides performance guarantees while retaining a small number of hypotheses; 
on the other hand, in multi-hypothesis tracking, one either has to retain an exponential number of hypotheses or 
typically loses performance guarantees, while in particle filtering one typically only has asymptotic guarantees for  growing number of samples.
Furthermore, while the sampling scheme might carry some resemblance of \ransac~\cite{Fischler81}, 
the number of samples (\ie iterations) required by \ransac grows exponentially with the size of the minimal set (which, in turn, depends on the dimension of $\vxx\gt$) and the outlier rate~\cite{Bustos18pami-GORE}; this is in stark contrast with \cref{algo:ldr}, where the number of samples 
is independent on the dimension of $\vxx\gt$ and only grows as $N/\inlierRate$ (also note that the number of samples in~\cref{algo:ldr} is upper bounded by $\nrMeasurements$).
%\footnote{Also observe that the number of samples is upper bounded by $\nrMeasurements$.}
% On the downside,
%  \red{and similarly to the results in the previous sections, the guarantees in~\cref{thm:highOut-apriori} require a 
%  moment relaxation of order at least $4$, which we have already observed to be impractical since it leads 
%  to a very large SDP (more comments on this in the following section).}
% idea: return a list of potential candidates. Estimating the variable with a single estimate is information-theoretically impossible. 
% goal: output a small (of size only depending on the number of inliers) list
%
% discussion based on~\cite{Karmalkar19neurips-ListDecodableRegression}, but similar result is also presented in
% ~\cite{Raghavendra20soda-ListDecodableRegressions}.
% adversary can corrupt more than $50\%$ of the measurements.
%
% \LC{I removed certifiable anti con definitions from here}
% essentially bound probability that high-dimensional random variable has zero projections in any directions.
% Anti-concentration inequality are well studied, see, \eg~\cite{Tao12combinatorica}
%
% impractical: for Gaussians, $k=..$ (\LC{check size of relaxation})
% Proof follows similar to our naive results:
% - we show that selected set must overlap substantially with inliers (\cref{lem:sum_wi})
% - we show that over intersection error cannot be large (\cref{eq:boundxxgt} and~\cref{lem:exists_vi})

In the following section we show that while the guarantees in~\cref{thm:highOut-apriori} fall short in practical problems, a simple modification of~\cref{algo:ldr} with $\relaxOrder=2$ has impressive performance in practice.	

%!TEX root = main.tex

\section{Numerical Experiments}
\label{sec:experiments}

This section elucidates on the theoretical results, by  
(i) assessing the tightness and the applicability of the error bounds discussed in this \paper,
(ii) evaluating how strict are the assumptions made on the input data (\ie certifiable hypercontractivity or certifiable anti-concentration), and 
(iii) assessing the practical performance of (a variant of)~\cref{algo:ldr} for list-decodable estimation.
The analysis is performed on a canonical perception problem, namely \emph{rotation search} (see~\cref{ex:wahba}), which finds application in satellite attitude estimation~\cite{wahba1965siam-wahbaProblem} and image stitching~\cite{Yang19iccv-quasar} problems, among others.
% In the low-outlier regime, we focus on the~\eqref{eq:TLS} formulation (rather than~\eqref{eq:LTS} or~\eqref{eq:MC}), for which there are well established solvers and implementations based on moment relaxations~\cite{Yang19iccv-quasar,Yang22pami-certifiablePerception}, which have been already proven to perform at the state of the art. 
After introducing the experimental setup in~\cref{sec:experiments-setup}, 
\cref{sec:experiments-aposteriori} focuses on the a posteriori bound introduced in~\cref{thm:lowOut-aposterioriNoisyTLS}, \cref{sec:experiments-hyper} analyzes the certifiable hypercontractivity assumption and the contract in~\cref{thm:lowOut-apriori-LTS-objective}, 
\cref{sec:experiments-anticon} analyzes the certifiable anti-concentration assumption and the contract in~\cref{thm:lowOut-aposterioriNoisyTLS}. 
\cref{sec:experiments-ldr} concludes the experiments with an empirical evaluation of list-decodable estimation.

% \myParagraph{Summary of findings}
% - a posteriori bounds are useful
% - show Hypercontractivity is mild
% - but theorem hyper is only for tiny beta
% - anti is much stricter, but produces more informative bounds

%%%%%%%%%%%%%%%%%%%%%%%%%%%%%%%%%%%%%%%%%%%%%%%%%%%%%%%%%%%%%%%%%%%%%%%%%%%%%
%%%%%%%%%%%%%%%%%%%%%%%%%%%%%%%%%%%%%%%%%%%%%%%%%%%%%%%%%%%%%%%%%%%%%%%%%%%%%
\subsection{Experimental Setup}
\label{sec:experiments-setup}

We use \emph{rotation search} (also known as the \emph{Wahba problem})~\cite{wahba1965siam-wahbaProblem,Yang19iccv-quasar}
 as a prototypical outlier-robust estimation problem, see Example~\ref{ex:wahba}. The goal of the problem is to estimate a rotation $\MR$ that aligns two sets of 3D vectors (typically normalized to have unit norm). The inputs to the problem are the vector pairs $\va_i, \vb_i \in \Real{3}$, $i=1,\ldots,\nrMeasurements$. The vectors $\va_i$ are used to build the measurement matrix $\MA_i$ as in Example~\ref{ex:wahba}; 
these vectors are either known or measured: for instance, in satellite pose estimation these vectors are read from a star database, while in image stitching they correspond to detected image features. 
$\vb_i$ are measured bearing vectors. The association between pairs of vectors $(\va_i,\vb_i)$ is obtained using image processing or learning-based methods and is prone to errors, making a large number of measurements outliers.
 We test on both synthetic 
 %, which allows testing on a broader range of noise and outlier rates, as well as 
 and real data, as described below.

\myParagraph{Synthetic data}
We use the same data generation protocol of~\cite{Yang19iccv-quasar}.
In particular, we create each test instance by sampling $\nrMeasurements$ 
  vectors $\{\va_i\}_{i=1}^\nrMeasurements$ uniformly at random on the unit sphere. 
 Then, we pick a random rotation $\MR$, and apply it to each $\va_i$ according to eq.~\eqref{eq:wahba} and
  add zero-mean Gaussian noise with covariance $10^{-4} \cdot \eye_3$ to get $\{\vb_i\}_{i=1}^\nrMeasurements$. To generate outliers, we replace a fraction $\outlierRate$ of $\vb_i$'s with random unit-norm vectors. We set the maximum error bound $\barcsq$ from the quantile of the $\chi^2$
 distribution with three degrees of freedom and lower tail probability equal to $0.9999$ (see Remark~1 in~\cite{Yang19iccv-quasar}), thus obtaining $\barcsq = 0.0021$ (which also accounts for the measurement variance).
Results on synthetic data are averaged over 100 Monte Carlo runs unless specified otherwise. 

\begin{figure}[ht]
\hspace{0.5cm}
\begin{minipage}[b]{0.50\linewidth}
\centering
\includegraphics[height=6.65cm, trim= 0mm 0mm 0mm 0mm, clip]{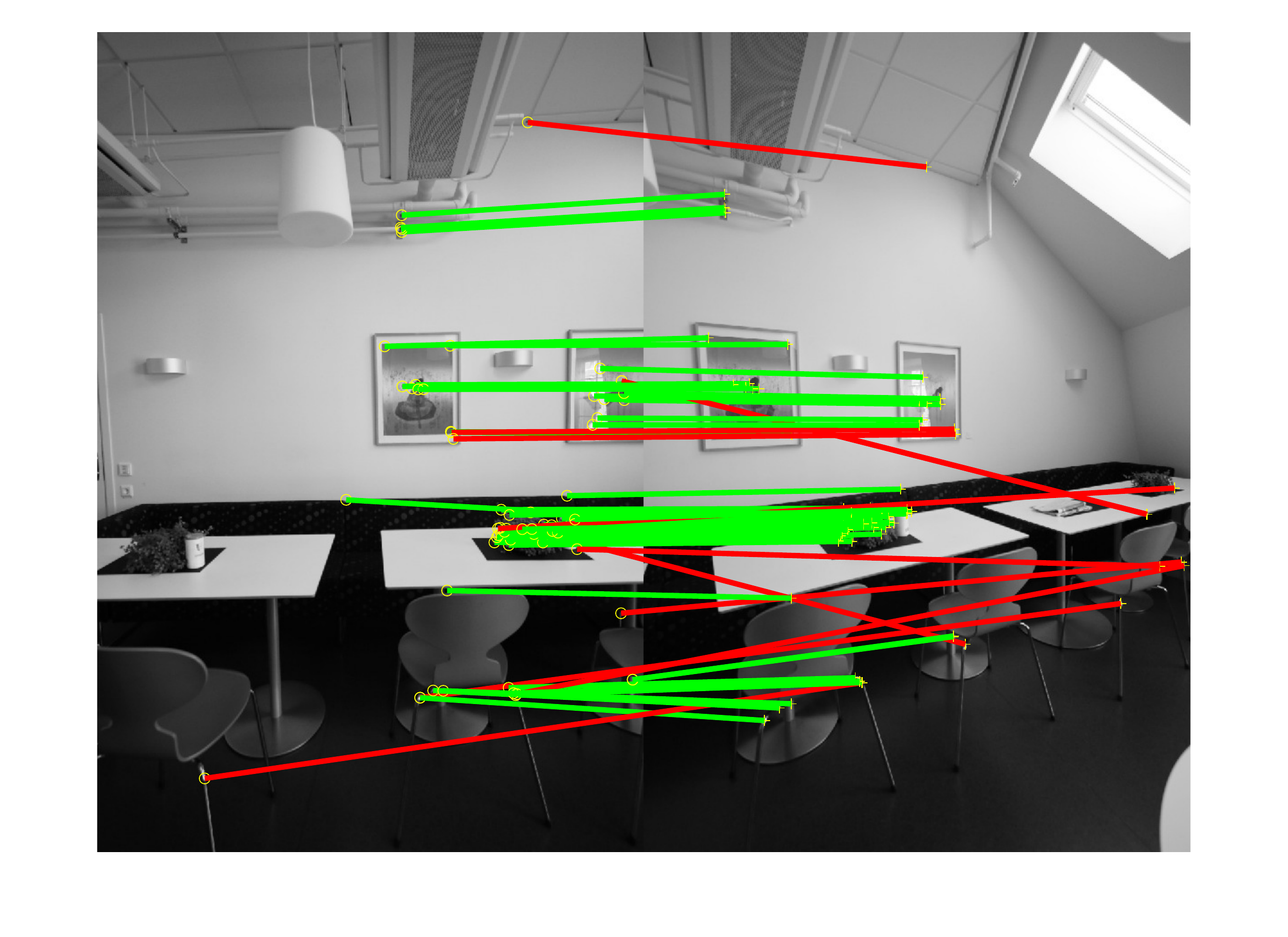}\vspace{-7mm}
\\
(a) 
\vspace{-0.1cm}
\end{minipage} 
\begin{minipage}[b]{0.50\linewidth}%\vspace{-5mm}
\centering
\includegraphics[height=6.5cm, trim= 0mm 0mm 0mm 0mm, clip]{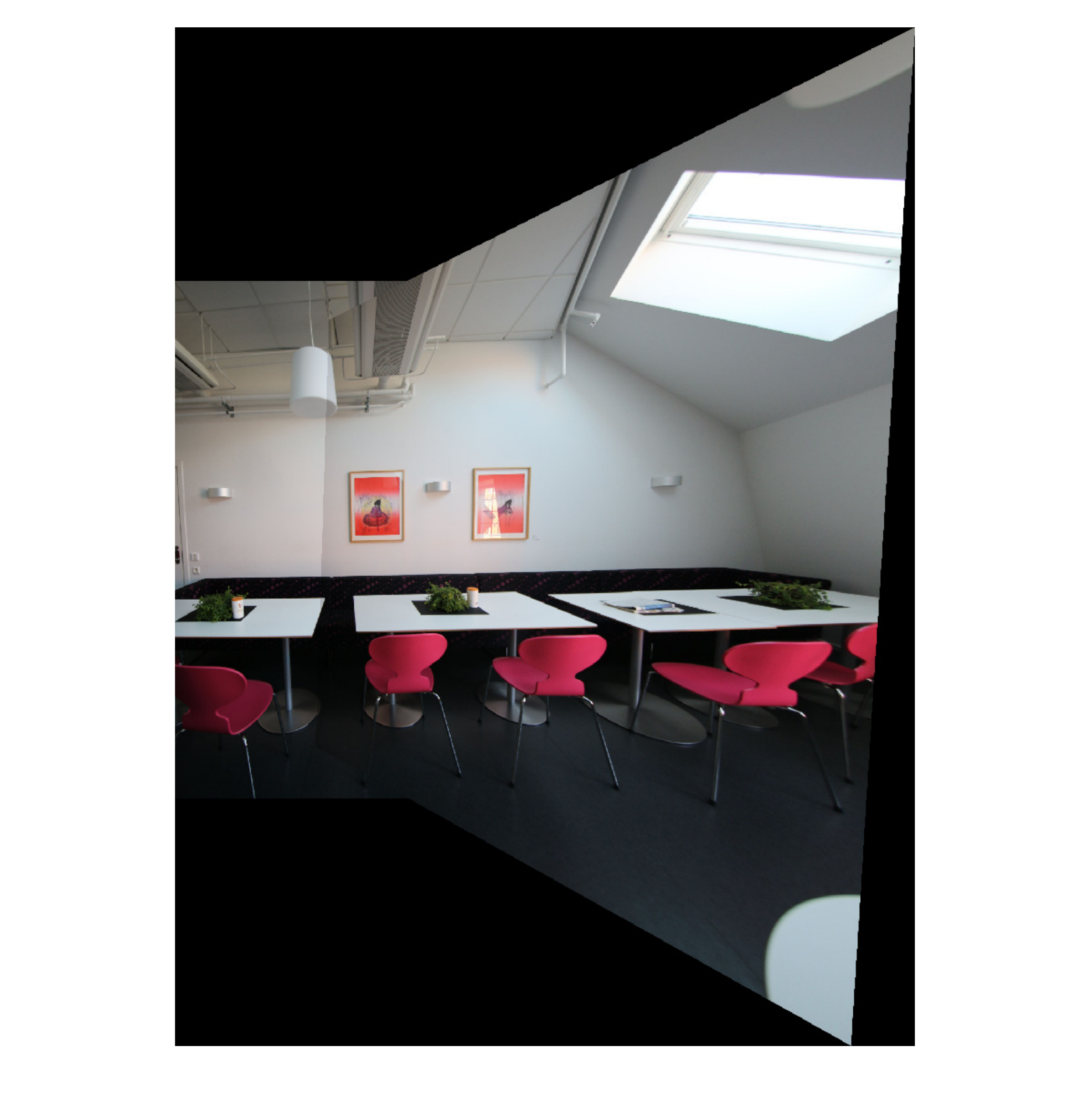}\vspace{-5mm}
\\
(b) 
\end{minipage}
\vspace{-8mm}
\caption{Image stitching example: (a) Feature matches  between two images from which the vectors $(\va_i,\vb_i)$, $i=1,\ldots,\nrMeasurements$, are generated; the matches include many outliers, shown in red (inliers are shown in green).
(b) Image stitching results using the moment relaxation approach in~\cite{Yang19iccv-quasar}. 
\label{fig:imageStitching}}
\end{figure}

\myParagraph{Real data}
We use the PASSTA dataset to test rotation search problems arising in image stitching applications~\cite{Meneghetti15scia-PASSATImageStitchingData} and use the same data generation protocol of~\cite{Yang19iccv-quasar}. 
In particular, to generate the vector pairs $\va_i, \vb_i \in \Real{3}$, $i=1,\ldots,\nrMeasurements$, we first use SURF~\cite{Bay06eccv} to detect and match point features between the two images.
\cref{fig:imageStitching}(a) shows the matches for one of the image pairs in the dataset.
 % Due to significantly different lighting conditions, small overlapping area, and different objects having similar image features, 46 of the established 70 SURF correspondences ($66\%$) are outliers (shown in red). 
From the SURF feature points, we apply the inverse of the known camera intrinsic matrix $\MK$ to obtain unit-norm bearing vectors $\{\va_i,\vb_i\}_{i=1}^{70}$ observed in each camera frame. 
%of the corresponding 3D vectors 
Then, we solve the rotation search problem to estimate the rotation $\MR$ between the two camera frames, using the outlier-corrupted pairs $\{\va_i,\vb_i\}_{i=1}^{70}$. 
% use the approach~\cite{Yang19iccv-quasar} (with the same choice of parameters) 
% %(with $\sigma^2\barcsq=0.001$) 
% to find the relative rotation $\MR$ between the two camera frames. 
Finally, using the estimated $\MR$, we can compute the homography matrix as $\MH=\MK\MR\MK\inv$ to stitch the pair of images together; see the example in
\cref{fig:imageStitching}(b). %  shows an example of the image stitching results.

\myParagraph{Algorithms and implementation details}
We use the sparse moment relaxation proposed in~\cite{Yang19iccv-quasar} 
to solve the~\eqref{eq:TLS} formulation of the rotation search problem, and use STRIDE~\cite{Yang22pami-certifiablePerception,Yang22mapr-stride} as an SDP solver for the resulting semidefinite relaxation. 
Note that the SDP proposed in~\cite{Yang19iccv-quasar} is a sparse version of the relaxation in~\cref{algo:tls} (with order $\relaxOrder=2$), and the output of the two relaxations match whenever the sparse relaxation is tight;  in all results below 
%where the relaxation is used to solve~\cref{algo:tls} 
we observe the sparse moment relaxation of~\eqref{eq:TLS} to be tight (\ie relaxation gap below $10^{-7}$) hence we do not differentiate between the two algorithms.
We use \cvx~\cite{CVXwebsite} and \mosek~\cite{mosek} as a parser/solver for the SDPs arising in list-decodable estimation. 
The numerical results are obtained on a MacBook Pro with 2.8 GHz Quad-Core Intel Core i7 processor. 
The Matlab code to reproduce the experiments below can be found at~\url{https://github.com/MIT-SPARK/estimation-contracts}. 

%%%%%%%%%%%%%%%%%%%%%%%%%%%%%%%%%%%%%%%%%%%%%%%%%%%%%%%%%%%%%%%%%%%%%%%%%%%%%
%%%%%%%%%%%%%%%%%%%%%%%%%%%%%%%%%%%%%%%%%%%%%%%%%%%%%%%%%%%%%%%%%%%%%%%%%%%%%
\subsection{Low Outlier Rates: A Posteriori Bounds}
\label{sec:experiments-aposteriori}

\begin{figure}[t]
\centering
\includegraphics[width=0.6\columnwidth, trim= 0mm 0mm 0mm 0mm, clip]{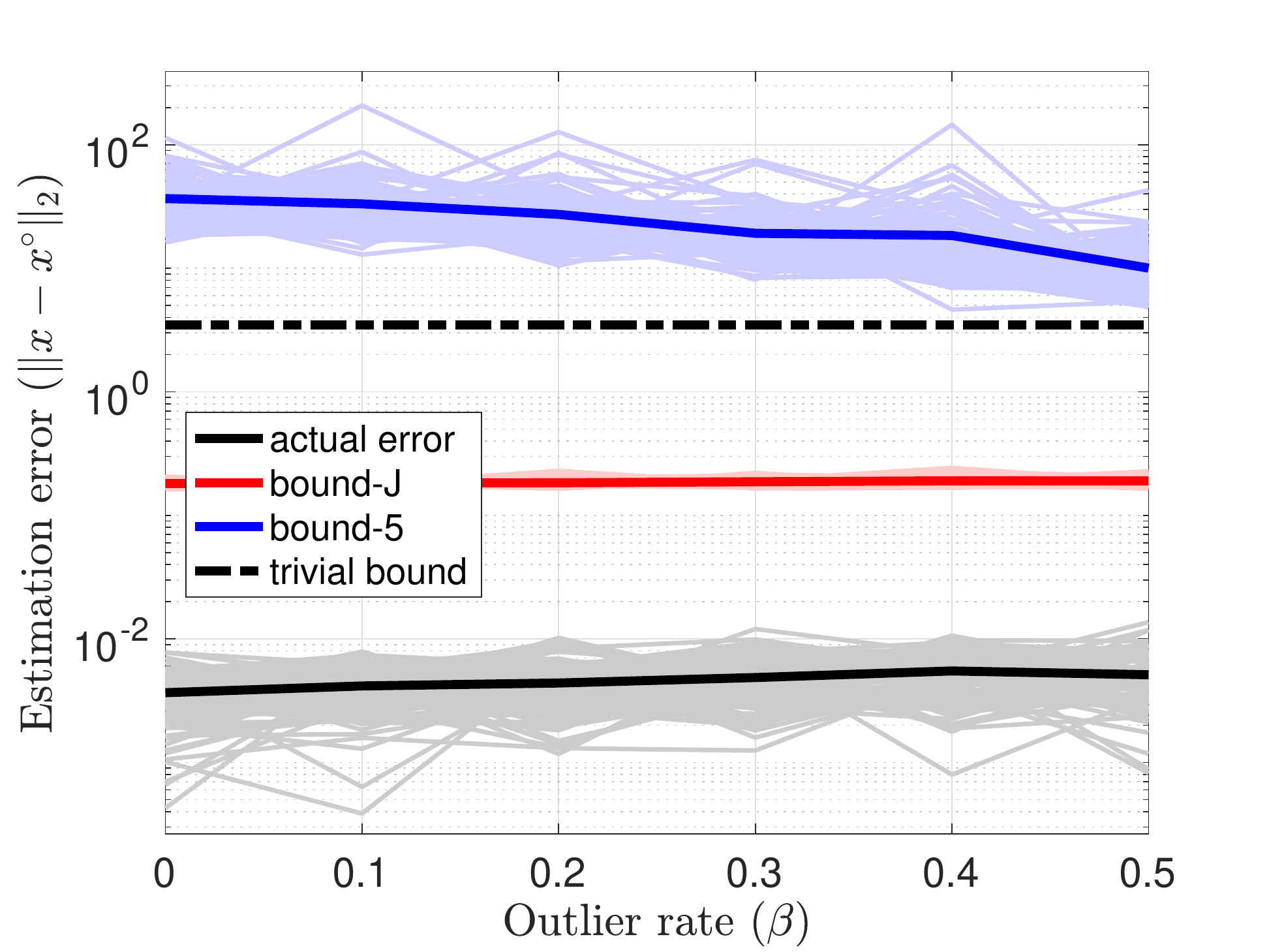} 
\vspace{-3mm}
\caption{Actual error of the~\eqref{eq:TLS} estimator compared to the a posteriori bounds from~\cref{thm:lowOut-aposterioriNoisyTLS}  and the trivial bound $2 \boundx = 2\sqrt{3}$, with $\nrMeasurements = 50$ and increasing outlier rates $\beta$. We report a posteriori bounds for $\dimJ=5$ (``bound-5'') and for the set $\calJ$ chosen as the correctly selected inliers (``bound-J''). \label{fig:aposterioriBounds}} 
\end{figure}

This section evaluates the quality of the a posteriori bound in~\cref{thm:lowOut-aposterioriNoisyTLS}.
 The bound applies since in all the tests in this section the relaxation from~\cite{Yang19iccv-quasar} is tight and hence it obtains an optimal solution to~\eqref{eq:TLS}.
 Moreover, with our data generation method, every subset of $\minDim=3$ inliers is nondegenerate by construction.\footnote{In rotation search, the unknown rotation can be uniquely estimated from $\minDim=3$ non-collinear 3D points, and our data generation method for the 3D vectors $\va_i$ produces triplets of collinear points with probability zero.}  
We note that the bound holds for any choice of integer {$\dimJ = \minDim,\ldots, (2\inlierRate-1)n - \frac{\gamma\gt}{\barcsq}$}.
For a small $\dimJ$, we can compute the bound by brute-force testing every subset $\calJ$ of the set of inliers computed by~\eqref{eq:TLS} and exhaustively searching the smallest value of $\sigma_\min(\MA_\calJ)$; in our tests, we are able to compute the bound for $\dimJ=5$ in a matter of seconds.
However, we expect large values of $\dimJ$ to produce more informative bounds (intuitively, in the proof of~\cref{thm:lowOut-aposterioriNoisyTLS} those bounds leverage a larger number of measurements); since the bound~\cref{thm:lowOut-aposterioriNoisyTLS} is expensive to compute for large $\dimJ$, we manually compute the set $\calJ$ corresponding to the intersection of the measurements selected by~\eqref{eq:TLS} as inliers and the true inliers, and compute the resulting bound for that choice of $\calJ$: this is the set used in our proofs to derive the estimation error bounds; note that such a set $\calJ$ can only be computed in simulation, since in practice the true inliers are unknown. 

\Cref{fig:aposterioriBounds} compares the actual estimation error with the bounds from~\cref{thm:lowOut-aposterioriNoisyTLS}. The estimation error is computed as $\normTwo{\vxx\tls - \vxx\gt}$, where $\vxx$ and $\vxx\gt$ are the vectorized representations of the estimated and the ground-truth rotation, respectively. For the bounds, we plot the bound we compute for $\dimJ=5$ (label: ``bound-5'') and the bound obtained from the choice of $\calJ$ described above (label: ``bound-J''). 
We also plot the trivial bound $2\boundx = 2\sqrt{3}$ computed by using the triangle inequality 
on $\normTwo{\vxx\tls - \vxx\gt}$ and by noting that every (vectorized) rotation has norm $\sqrt{3}$. 
Several comments are in order. First, the actual error increases with the outlier rate (the trend is slightly more difficult to see due to the log scale): this is expected since with increasing outlier rates the number of ``useful'' measurements decreases; however, the error remains very small (\ie less than 1 degree) for all outlier rates. This confirms the empirical robustness of moment relaxations of~\eqref{eq:TLS} already observed in related work~\cite{Yang19iccv-quasar,Yang22pami-certifiablePerception}, which have been reported to achieve impressive performance even for much higher rates of random outliers. 
Second, bound-5 is unfortunately too loose and uninformative, since ---while it improves for larger outlier rates\footnote{The bound improves for larger $\outlierRate$ since the set of inliers selected by~\eqref{eq:TLS} $\InlierSet_\tls$ typically shrinks for increasing $\beta$ and hence it becomes less likely to sample data producing small $\sigma_\min(\MA_\calJ)$ }--- it remains larger than the trivial bound.
On the other hand, bound-J is significantly better than the trivial bound. 
While there is still a large gap between the actual error and bound-J, we remark that the bound considers the worst-case scenario where the outliers are possibly adversarial;
% (\ie we do not make any assumptions on the generation of the outliers); 
on the other hand, the outlier generation mechanism we borrow from~\cite{Yang19iccv-quasar} is relatively benign, since the outliers are sampled independently, at random, and without knowledge of the inliers. In other words, there might well be instances where the actual error could be much closer to bound-J.

%%%%%%%%%%%%%%%%%%%%%%%%%%%%%%%%%%%%%%%%%%%%%%%%%%%%%%%%%%%%%%%%%%%%%%%%%%%%%
%%%%%%%%%%%%%%%%%%%%%%%%%%%%%%%%%%%%%%%%%%%%%%%%%%%%%%%%%%%%%%%%%%%%%%%%%%%%%
\subsection{Low Outlier Rates: A Priori Bounds and Hypercontractivity}
\label{sec:experiments-hyper}
This section shows that $k$-certifiable hypercontractivity is a relatively mild assumption for $k=4$ and for choices of $C(t)^t$ as small as $5$ or $6$. This observation is validated on both synthetic and real data. On the other hand, the experiments offer a closer look at the bound in~\cref{thm:lowOut-apriori-LTS-objective}, which in hindsight only applies to relatively ``easy'' problems with very small amounts of outliers. 

%%%%%%%%%%%%%%%%%%%%%%%%%%%%%%%%%%%%%%%%%%%%%%%%%%%%%%%%%%%%%%%%%%%%%%%%%%%%%
\myParagraph{Numerically checking hypercontractivity}
Certifiable hypercontractivity can be easily assessed numerically using the definition in eq.~\eqref{eq:hyper-def}. The definition of $k$-certifiable hypercontractivity essentially requires that the polynomial %(in the variable $\vv$)
\beq 
h(\vv) = C(t)^t 
\left( \aveOverMeas \normTwo{ \MA_i\tran \vv }^2 \right)^{t} - \left( \aveOverMeas \normTwo{ \MA_i\tran \vv }^{2t} \right)
\eeq
 is a sum-of-squares polynomial for every $t \leq k/2$ and for a given choice of $C(t)^t$.
 While larger values of $k$ lead to slightly better bounds here we are interested in the smallest choice of $\relaxLevel$ (which is $\relaxLevel = 4$ according to~\cref{thm:lowOut-apriori-LTS-objective}) since the larger the $\relaxLevel$ the larger the size of the resulting moment relaxation.
 Choosing $\relaxLevel = 4$, we are only left to test that $h(\vv)$ is \sos for $t=2$ (the condition is trivially satisfied for $t=1$ as long as $C(t)^t \geq 1$). For $t=2$, we test multiple values of $C(t)^t$ and for each we check if $h(\vv)$ is \sos using the {\tt findsos} function in the SOSTOOLS library~\cite{Papachristodoulou13-sostools}. The function either returns a decomposition of the polynomial into a sum of squares or reports that the polynomial is not \sos. 

We remark that~\cref{thm:lowOut-apriori-LTS-objective} requires that the set of all $\nrMeasurements$ matrices $\MA_i$  ---where potential outliers are replaced with inliers--- satisfies $k$-certifiable hypercontractivity. Therefore, if we assume that the outliers only contaminate the vectors $\vb_i$ (hence do not alter the $\MA_i$'s, which are only influenced by the vectors $\va_i$'s),\footnote{This assumption is mild in many rotation search applications, for instance when $\va_i$ are read from a database or we have control on how to sample them.} we can actually check hypercontractivity in practice. 
 Note that for $t=2$, $h(\vv)$ is a degree-4 polynomial in the variable $\vv$ (which is 9-dimensional in our rotation search example): therefore, as we will see, it is quite fast to check if $h(\vv)$ 
 is \sos using off-the-shelf solvers.

%%%%%%%%%%%%%%%%%%%%%%%%%%%%%%%%%%%%%%%%%%%%%%%%%%%%%%%%%%%%%%%%%%%%%%%%%%%%%
\myParagraph{Experiments on synthetic data}
We show that 4-certifiable hypercontractivity is satisfied for small $C(t)^t$ in our synthetic examples and observe that
we can numerically check if a given set of measurements is certifiably hypercontractive in a fraction of a second. 

\begin{figure}[t]
\begin{minipage}[b]{0.45\linewidth}
\centering
\includegraphics[width=1\columnwidth, trim= 0mm 0mm 0mm 0mm, clip]{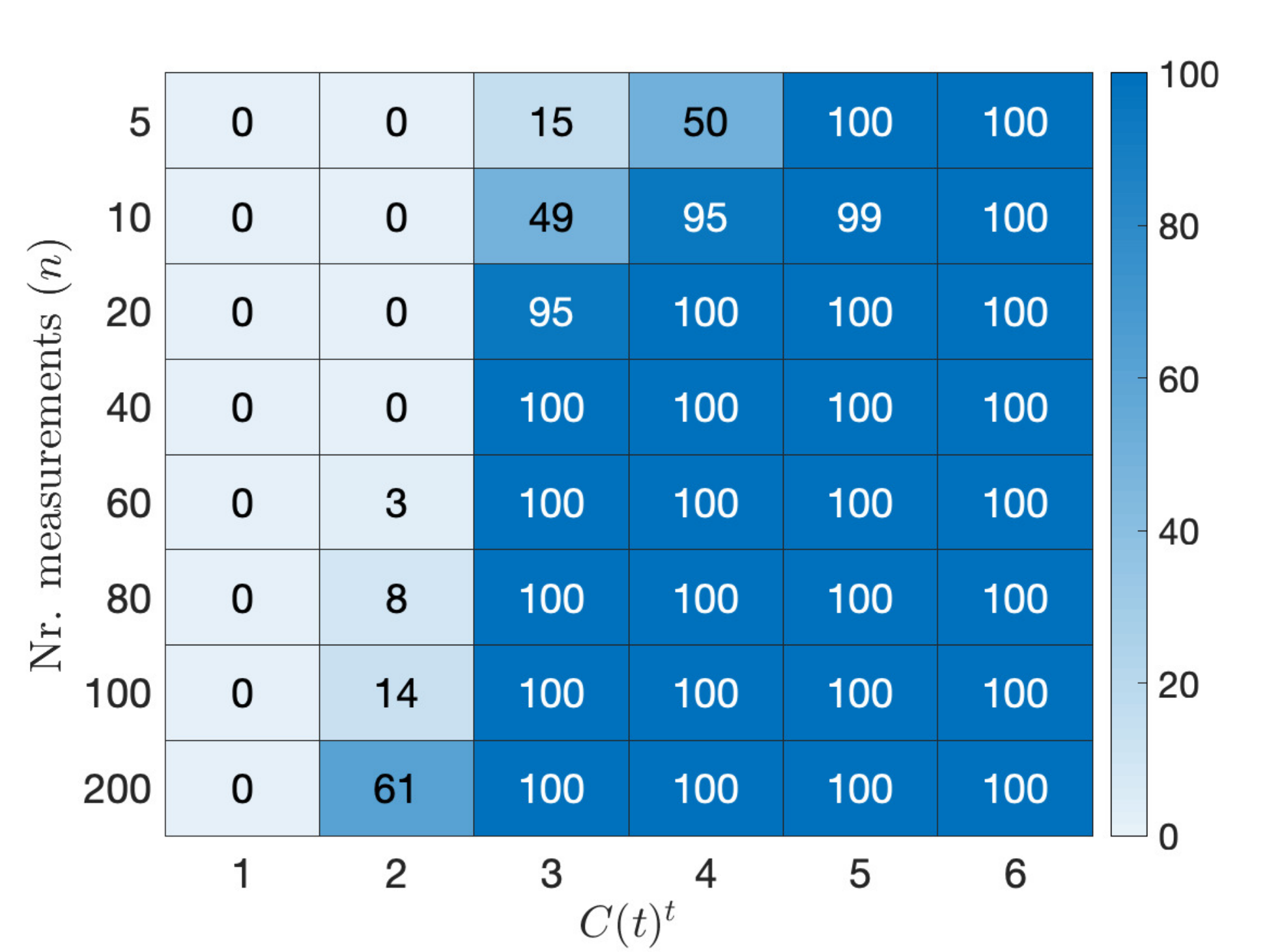}\vspace{0mm}\\
{(a)}
\end{minipage}
\hspace{0.5cm}
\begin{minipage}[b]{0.45\linewidth} 
\centering
\includegraphics[width=1\columnwidth, trim= 0mm 0mm 0mm 0mm, clip]{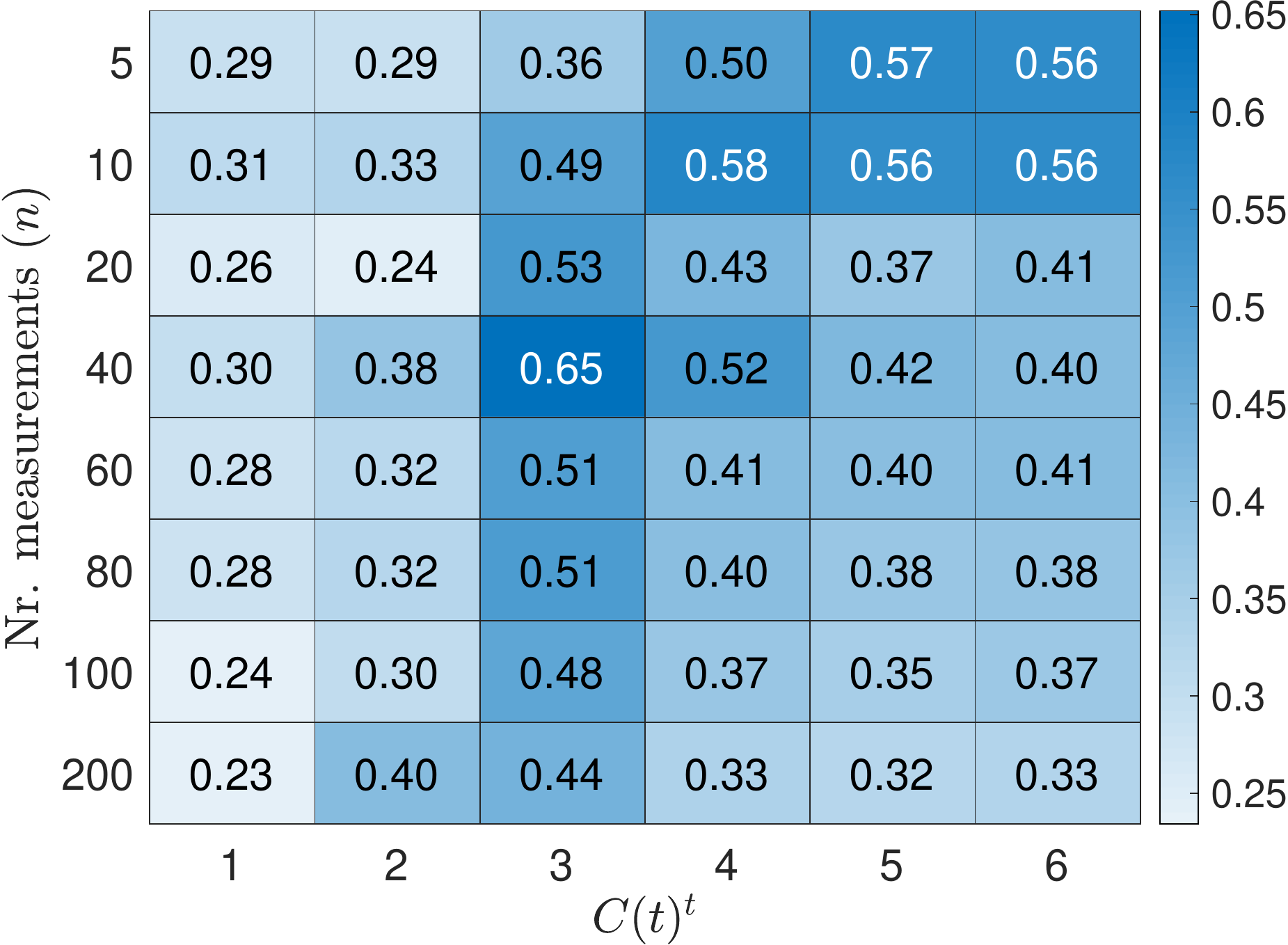}\vspace{0mm}
% \includegraphics[width=1\columnwidth, trim= 0mm 0mm 0mm 0mm, clip]{runtime_hyper4}\vspace{0mm}
% % also good, but previous run
\\
{(b)}
\end{minipage}\vspace{-3mm}
\caption{Certifiable hypercontractivity results on synthetic data: (a) Percentage of tests where $4$-certifiable hypercontractivity holds 
for various choices of $C(t)^t$ and number of measurements $\nrMeasurements$.
(b) Average runtime (in seconds, averaged over 100 tests) for the same test configurations in figure (a).  \label{fig:stats_synthetic_hyper}}
\end{figure}

\Cref{fig:stats_synthetic_hyper}(a) reports the number of tests (over 100 runs) where $4$-certifiable hypercontractivity holds for various choices of $C(t)^t$ (chosen as a constant ranging from $1$ to $6$) and for increasing number of measurements $\nrMeasurements$ (we test $\nrMeasurements = \{5, 10, 20, 40, 60, 80, 100, 200\}$). All instances were found to satisfy $4$-certifiably hypercontractivity for $C(t)^t = 6$ (independently of the choice of $\nrMeasurements$), while no instance satisfied the property for $C(t)^t = 1$. From \cref{fig:stats_synthetic_hyper}(a), we observe a sharp phase transition between the two regimes, and we note that large measurement sets tend to satisfy $4$-certifiable hypercontractivity for even smaller values of $C(t)^t$.
This is desirable, since the smaller the $C(t)^t$, the better are the error bounds we obtain from~\cref{thm:lowOut-apriori-LTS-objective}.
% - phase transition (matrix plot) - for increasingly large choices of C. hyper already for small C

\Cref{fig:stats_synthetic_hyper}(b) reports the average runtime (in seconds, averaged over 100 tests) for the same test configurations in~\cref{fig:stats_synthetic_hyper}(a). 
The average runtime remains below 1 second in all cases and is fairly insensitive to the number of measurements. It is interesting to notice that the solver is faster in resolving instances that are clearly certifiably hypercontractive or not, while it has to work harder to decide the instances at the ``boundary'' of the phase transition observed in~\cref{fig:stats_synthetic_hyper}(a).

\begin{table*}[h]
\centering
\caption{Certifiable hypercontractivity results on real image stitching data: percentage of tests where $4$-certifiable hypercontractivity holds 
for increasing $C(2)^2$.}
\label{tab:realResultsHyper}
\begin{tabular}{|r|c|c|c|c|c|c|c|c|c|c|c|c|}
        \toprule
                  $C(2)^2 = $      & $1$   & $2$  & $3$  & $4$  & $5$ & $6$& $7$& $8$& $9$& $10$& $15$& $20$  \\  
        \midrule
        \% cert. hypercon.               
                        & $0.0$  & $0.0$  & $0.0$  & $8.3$  & $50.0$  & $58.3$  & $75.0$  & $83.3$  & $83.3$  & $91.7$  & $100.0$  & $100.0$ \\ 
        \bottomrule
\end{tabular}
% log_results_experiment3
\end{table*}

%%%%%%%%%%%%%%%%%%%%%%%%%%%%%%%%%%%%%%%%%%%%%%%%%%%%%%%%%%%%%%%%%%%%%%%%%%%%%
\myParagraph{Experiments on real data}
We show that $4$-certifiable hypercontractivity also holds for real data for a small value of $C(t)^t$ (with $t=2$), 
with a similar trend as the one observed in the synthetic experiments above. 
Towards this goal, we consider the image stitching data described in~\cref{sec:experiments-setup}.
\Cref{tab:realResultsHyper} reports the percentage of tests 
%(over the 12 instances considered in~\cite{Yang19iccv-quasar}) 
where $4$-certifiable hypercontractivity holds for various choices of $C(t)^t$ (chosen as a constant between 1 and 20).
 Most instances satisfy $4$-certifiably hypercontractivity already for $C(t)^t = 6$ and all instances satisfy the property for $C(t)^t = 15$, confirming that certifiable hypercontractivity is a mild assumption for the rotation search problem.

\begin{figure}[h]
\begin{minipage}[b]{0.48\linewidth}
\centering
\includegraphics[width=1\columnwidth, trim= 0mm 0mm 0mm 0mm, clip]{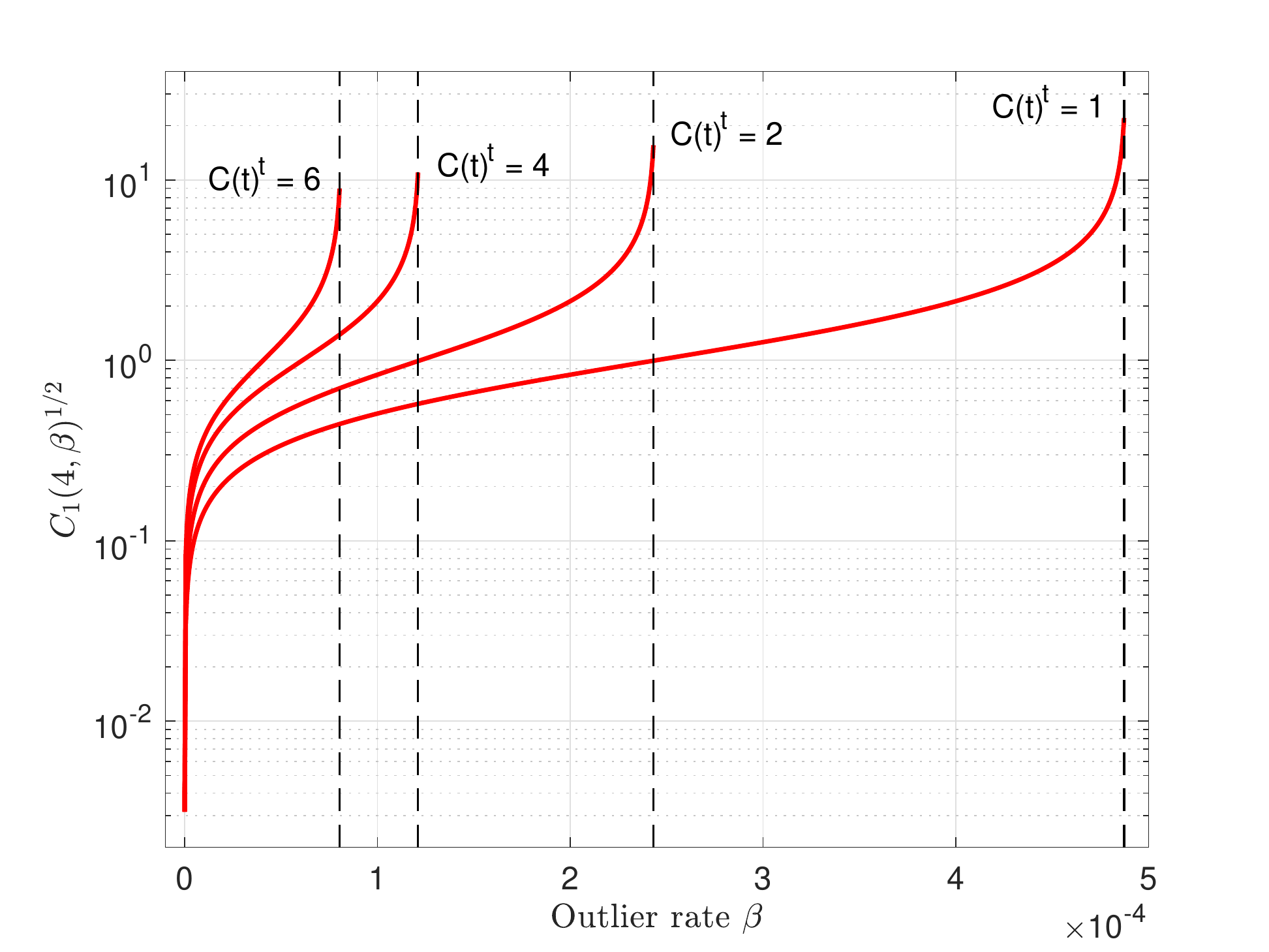}
\vspace{-7mm}\\ (a)
\end{minipage}
\begin{minipage}[b]{0.48\linewidth} 
\centering
\includegraphics[width=1\columnwidth, trim= 0mm 0mm 0mm 0mm, clip]{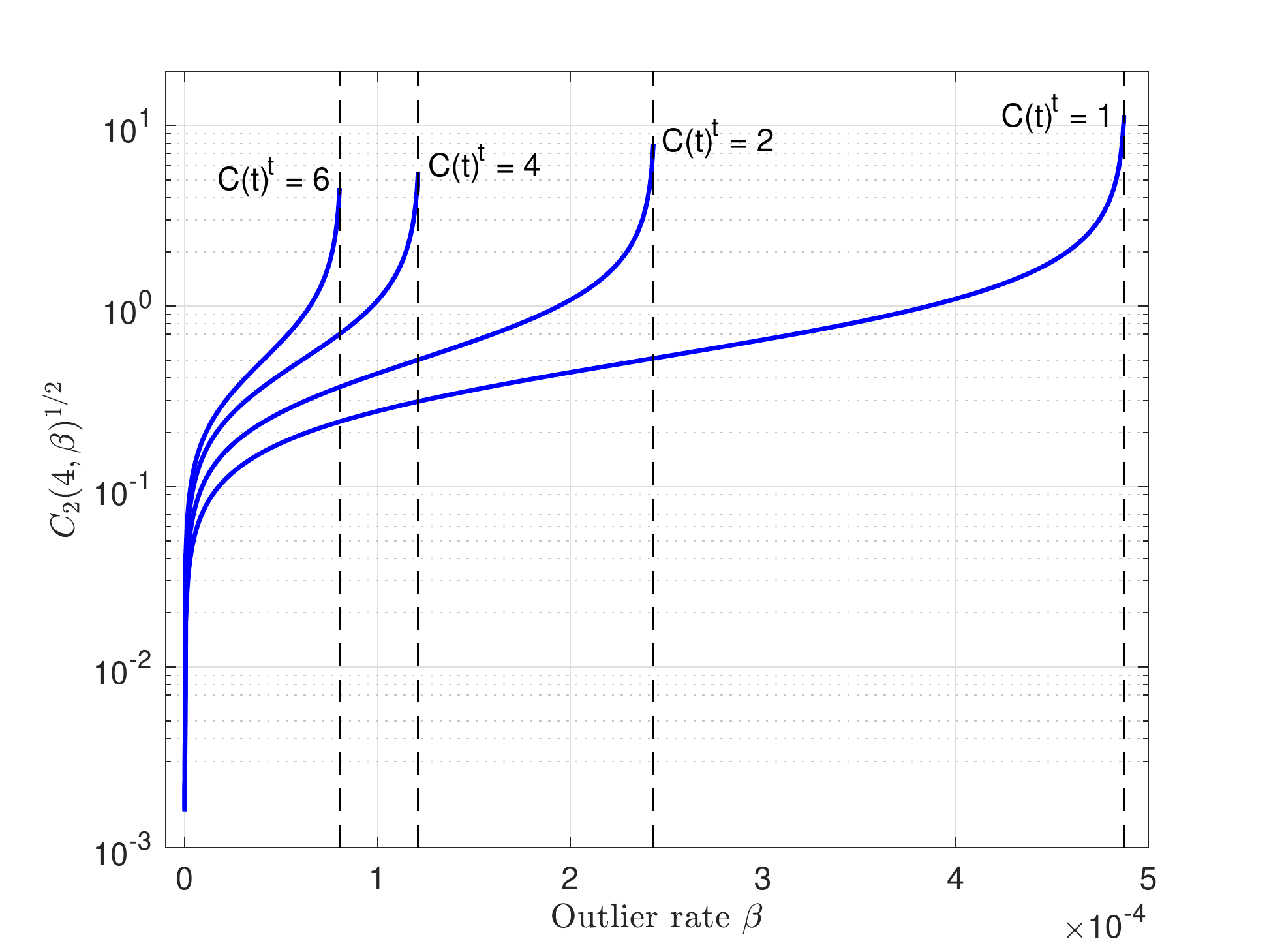}
\vspace{-7mm}\\ (b)
\end{minipage}
\\
\begin{minipage}[b]{0.48\linewidth} 
\centering
\includegraphics[width=1\columnwidth, trim= 0mm 0mm 0mm 0mm, clip]{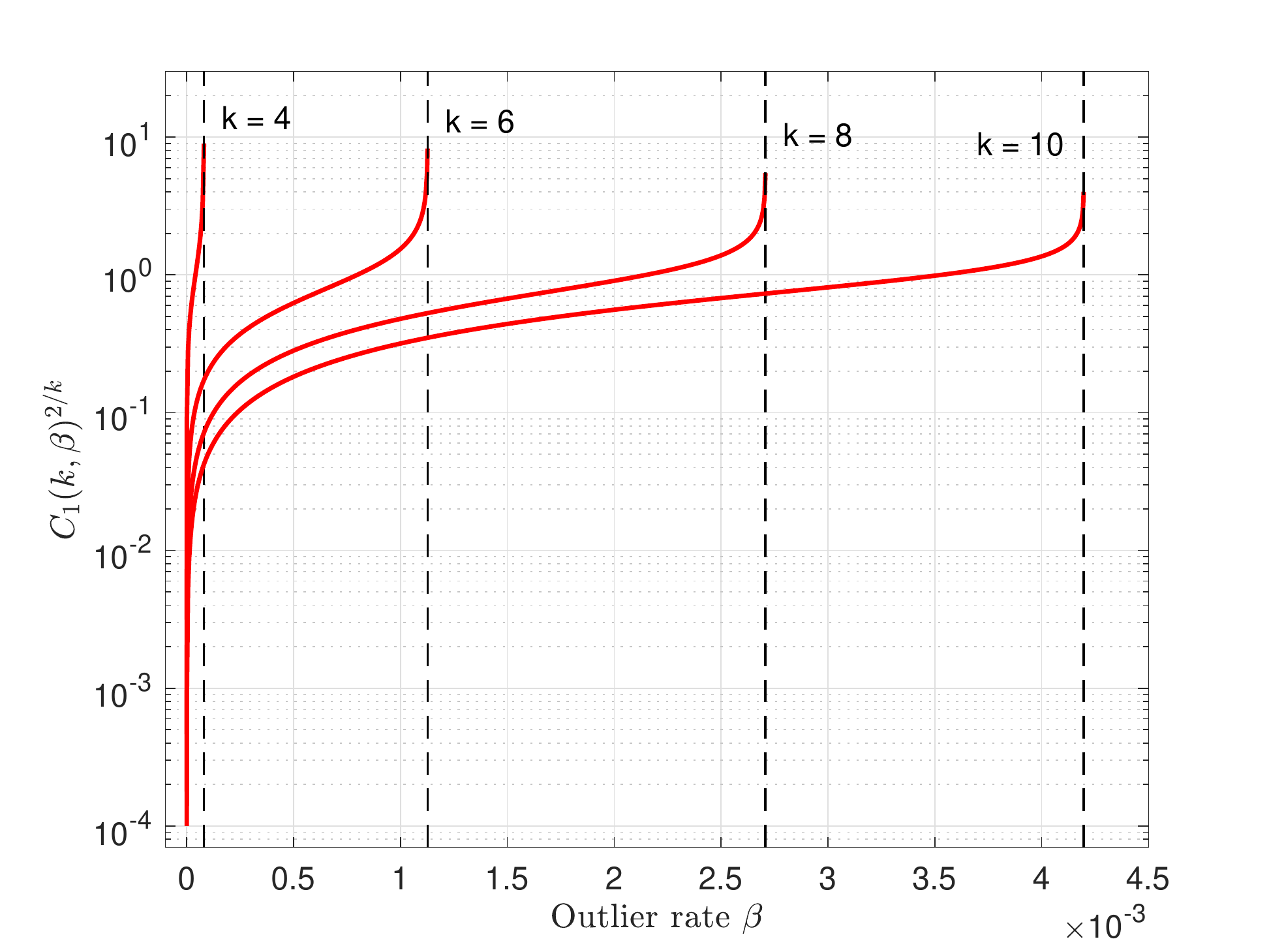}
\vspace{-7mm}\\ (c)
\end{minipage}
\begin{minipage}[b]{0.48\linewidth} 
\centering
\includegraphics[width=1\columnwidth, trim= 0mm 0mm 0mm 0mm, clip]{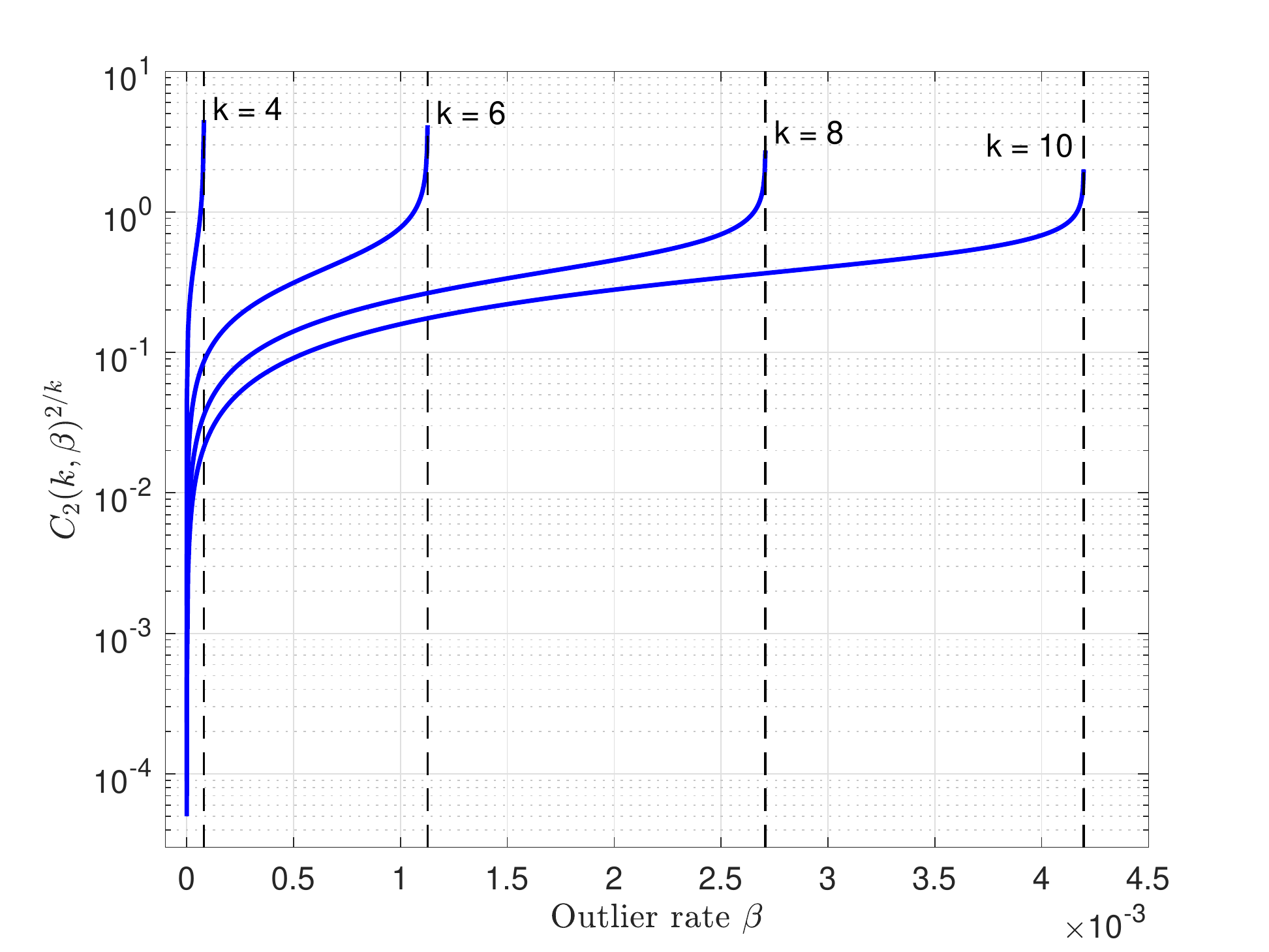}
\vspace{-7mm}\\ (d)
\end{minipage}
\caption{Coefficients $C_1(\relaxLevel,\outlierRate)^{\frac{2}{k}}$ (left column), $C_2(\relaxLevel,\outlierRate)^{\frac{2}{k}}$ (right column), and $\beta_\max$ (visualized as dashed vertical lines in each figure) in~\cref{thm:lowOut-apriori-LTS-objective} for different choices of $C(t)^t$ (sub-figures (a) and (b)) and $k$ (sub-figures (c) and (d)), and for increasing outlier rates. 
 \label{fig:boundObjective}}
\end{figure}

%%%%%%%%%%%%%%%%%%%%%%%%%%%%%%%%%%%%%%%%%%%%%%%%%%%%%%%%%%%%%%%%%%%%%%%%%%%%%
\myParagraph{Bound in~\cref{thm:lowOut-apriori-LTS-objective}}
We evaluate the bound in~\cref{thm:lowOut-apriori-LTS-objective} by plotting the coefficients $C_1(\relaxLevel,\outlierRate)^{\frac{2}{k}}$ and $C_2(\relaxLevel,\outlierRate)^{\frac{2}{k}}$ in the theorem, and the maximum outlier rate $\beta_\max$ for which the theorem holds (\ie $\beta_\max = \sqrt[\frac{k}{2}-1]{
1 / (\Crelax^{\frac{\relaxLevel}{2}} 2^{ 3\relaxLevel-1} )}$).
 Ideally, we would like for both $C_1(\relaxLevel,\outlierRate)^{\frac{2}{k}}$ and $C_2(\relaxLevel,\outlierRate)^{\frac{2}{k}}$ to be as small as possible, which would make the residual error of our estimate $\vxx\ltssdp$ with respect to the inliers $\err_{\DistSampleInliers} ( \vxx\ltssdp )$ as close as possible to the residual error $\optt_{\DistSampleInliers}$ an oracle estimator that has access to all the inliers would commit. Moreover, we would like $\beta_\max$ to be as large as possible (but still below $0.5$), in order for the theorem to be broadly applicable.

\Cref{fig:boundObjective}(a-b) show the coefficients $C_1(\relaxLevel,\outlierRate)^{\frac{2}{k}}$ and $C_2(\relaxLevel,\outlierRate)^{\frac{2}{k}}$ for $\relaxLevel =4$ and for increasing outlier rates $\outlierRate$; we plot multiple lines corresponding to different choices of the coefficient $C(t)^t \in \{1,2,4,6\}$.
For each choice of $C(t)^t$ we also plot the maximum outlier rate $\beta_\max$ (visualized as a  dashed vertical line) for which that error bound holds (indeed it can be seen that the bounds diverge as they approach their admissible upper bound).
To interpret the coefficients, the reader might observe that when $C_1(\relaxLevel,\outlierRate)^{\frac{2}{k}} = 1$ it means that the estimator has twice the error of the oracle, while when  $C_1(\relaxLevel,\outlierRate)^{\frac{2}{k}} = 10$ it has an error that is more than 10 times larger than the oracle.
By observing the $x$-axis of the plot, we quickly realize that for $k=4$, even for small values of $C(t)^t$, the maximum outlier rate captured by the theorem is very small (in the order of $10^{-4}$); this makes the theorem overly conservative for robotics applications where even in the low-outlier case, the outliers can approach  $50\%$, and where the number of measurements typically ranges from tens to thousands (which, with an outlier rate of $10^{-4}$, would only allow for a handful of outliers). 

\Cref{fig:boundObjective}(c-d) show the coefficients $C_1(\relaxLevel,\outlierRate)^{\frac{2}{k}}$ and $C_2(\relaxLevel,\outlierRate)^{\frac{2}{k}}$ for $C(t)^t =6$ and for increasing outlier rates $\outlierRate$;
in this case, we plot multiple lines corresponding to different choices of $k \in \{4,6,8,10\}$.
For each choice of $k$ we also plot the maximum outlier rate $\beta_\max$ (visualized as a  dashed vertical line). 
As we mentioned, the bound in the theorem becomes stronger for higher $k$ (at the cost of much increased computation in solving the moment relaxation). At the same time, even with a $k$ as large at $10$ the bound remains applicable only to outlier rates below $4.5 \cdot 10^{-3}$, which is unfortunately still tiny to be of practical interest for robotics and vision. 

%%%%%%%%%%%%%%%%%%%%%%%%%%%%%%%%%%%%%%%%%%%%%%%%%%%%%%%%%%%%%%%%%%%%%%%%%%%%%
%%%%%%%%%%%%%%%%%%%%%%%%%%%%%%%%%%%%%%%%%%%%%%%%%%%%%%%%%%%%%%%%%%%%%%%%%%%%%
\subsection{Low Outlier Rates: A Priori Bounds and Anti-Concentration}
\label{sec:experiments-anticon}

This section shows that $k$-certifiable anti-concentration is a more stringent condition in practice, 
but it can  still be satisfied for $k=6$ and for suitable choices of $\eta$ that make the error bounds in~\cref{thm:lowOut-apriori-LTS,thm:lowOut-apriori-MC} nontrivial. While in the rotation search problem the property is only satisfied for outlier rates close to zero, we show that on a variant of the problem, certifiable anti-concentration is satisfied for outlier rates up to $50\%$. To the best of our knowledge, this is the first study evaluating certifiable anti-concentration numerically and showing its applicability with low-degree polynomials  $p$.
% degree-2 polynoamials $p$  
% This observation is validated on both synthetic and real data. The experiments also considers a variation of the Wahba problem that admits a small $\eta$ and leads to  tighter bounds as the one shown in~\cref{fig:boundLTST}.

%%%%%%%%%%%%%%%%%%%%%%%%%%%%%%%%%%%%%%%%%%%%%%%%%%%%%%%%%%%%%%%%%%%%%%%%%%%%%
\myParagraph{Numerically checking anti-concentration}
Certifiable anti-concentration can be assessed numerically (given the set of inliers $\InlierSet$) using the definition in eqs.~\eqref{eq:certAntiConSet1}-\eqref{eq:certAntiConSet2}. 
Let us rewrite the anti-concentration conditions as two polynomial optimization problems:
\begin{align}
\label{eq:pop-anticon1}
\min_\vv \;\;&\;\; p^2\left(\normTwo{\MA_i\tran \vv} \right) - (1 - \delta)^2, 
\;\;\;\;
\subject 
\normTwo{\MA_i\tran \vv}^2  \leq \delta^2
\\
\label{eq:pop-anticon2}
\min_\vv \;\;&\;\; C \, \delta \,M^2 - \|\vv\|^2 \cdot
\aveOverInliers \; p^2\left(\normTwo{\MA_i\tran \vv}\right), 
\;\;\;\;
\subject \normTwo{\vv}^2 \leq M^2  \mathper
\end{align}
Anti-concentration requires that the optimal cost of both problems remains greater than zero 
such that $p^2\left(\normTwo{\MA_i\tran \vv} \right) \geq (1 - \delta)^2$ (for all $i \in \InlierSet$) and 
$\|\vv\|^2 \cdot
\aveOverInliers \; p^2\left(\normTwo{\MA_i\tran \vv}\right) \leq C \, \delta \,M^2$
as requested by the definition.
\emph{Certifiable} anti-concentration requires an \sos proof of anti-concentration; 
because of the duality between moment relaxations and \sos relaxations, 
certifiable anti-concentration essentially requires
that even if we apply a moment relaxation (of suitable order) to the polynomial optimization problems~\eqref{eq:pop-anticon1}-\eqref{eq:pop-anticon2}, then the optimal objective of the relaxation remains positive. Clearly, asking the moment relaxation to produce a positive objective is a sufficient condition for the objective of~\eqref{eq:pop-anticon1}-\eqref{eq:pop-anticon2} to remain positive, hence certifiable anti-concentration is a more stringent condition. % compared to 

 %, and this is exactly why requiring an \sos proof of anti-concentration 
Given the matrices $\MA_i$ (for all the inliers), the polynomial $p$, and the coefficients $C, \delta, M$, and $\eta$, we can numerically solve a moment relaxation of~\eqref{eq:pop-anticon1}-\eqref{eq:pop-anticon2} using the
{\tt findbound} function in the SOSTOOLS library~\cite{Papachristodoulou13-sostools}.
We note that~\eqref{eq:pop-anticon1} involves polynomials of degree up to $2 \cdot \degree{p}$ 
while~\eqref{eq:pop-anticon2} involves polynomials of degree up to $2 \cdot \degree{p} + 2$, both in the variable $\vv$ (which is 9-dimensional in our rotation search example).
In practice, the {\tt findbound} function runs out of memory for degrees larger than $8$ in our problem, which imposes to choose
a degree-2 polynomial $p$. For such choice of $p$,~\eqref{eq:pop-anticon1} involves degree-4 polynomials, while~\eqref{eq:pop-anticon2} involves degree-6 polynomials, which has implications on the minimum order of the moment relaxation we compute via {\tt findbound}.\footnote{
{\tt findbound} takes a ``{\tt degree}'' input argument, that can be understood as the level of the pseudo-distribution computed by the function. In this case, we use the minimum {\tt degree}=4 for~\eqref{eq:pop-anticon1}
and {\tt degree}=6 for~\eqref{eq:pop-anticon2}.}  
% , and for which we can solve~\eqref{eq:pop-anticon1} at degree $4$ and solve~\eqref{eq:pop-anticon2} at degree $6$.

So far we concluded that we can numerically check certifiable anti-concentration given the 
matrices $\MA_i$, the polynomial $p$, and the coefficients $C, \delta, M$, and $\eta$.
The matrices $\MA_i$ are given as an input to the problem, while our theorems (\cref{thm:lowOut-apriori-LTS,thm:lowOut-apriori-MC,thm:lowOut-apriori-TLS}) specify how to set $\delta$, $C$, and $M$.
Therefore, we are only left to figure our how to set the constant $\eta$ and the polynomial $p$.
As we will see, these parameters control the trade-off between tightness of our error bound and computational cost (more precisely, the order of the moment relaxation) of the corresponding estimators. 

\begin{figure}[t]
\begin{minipage}[b]{0.5\linewidth}
\centering
\includegraphics[width=0.9\columnwidth, trim= 0mm 0mm 0mm 0mm, clip]{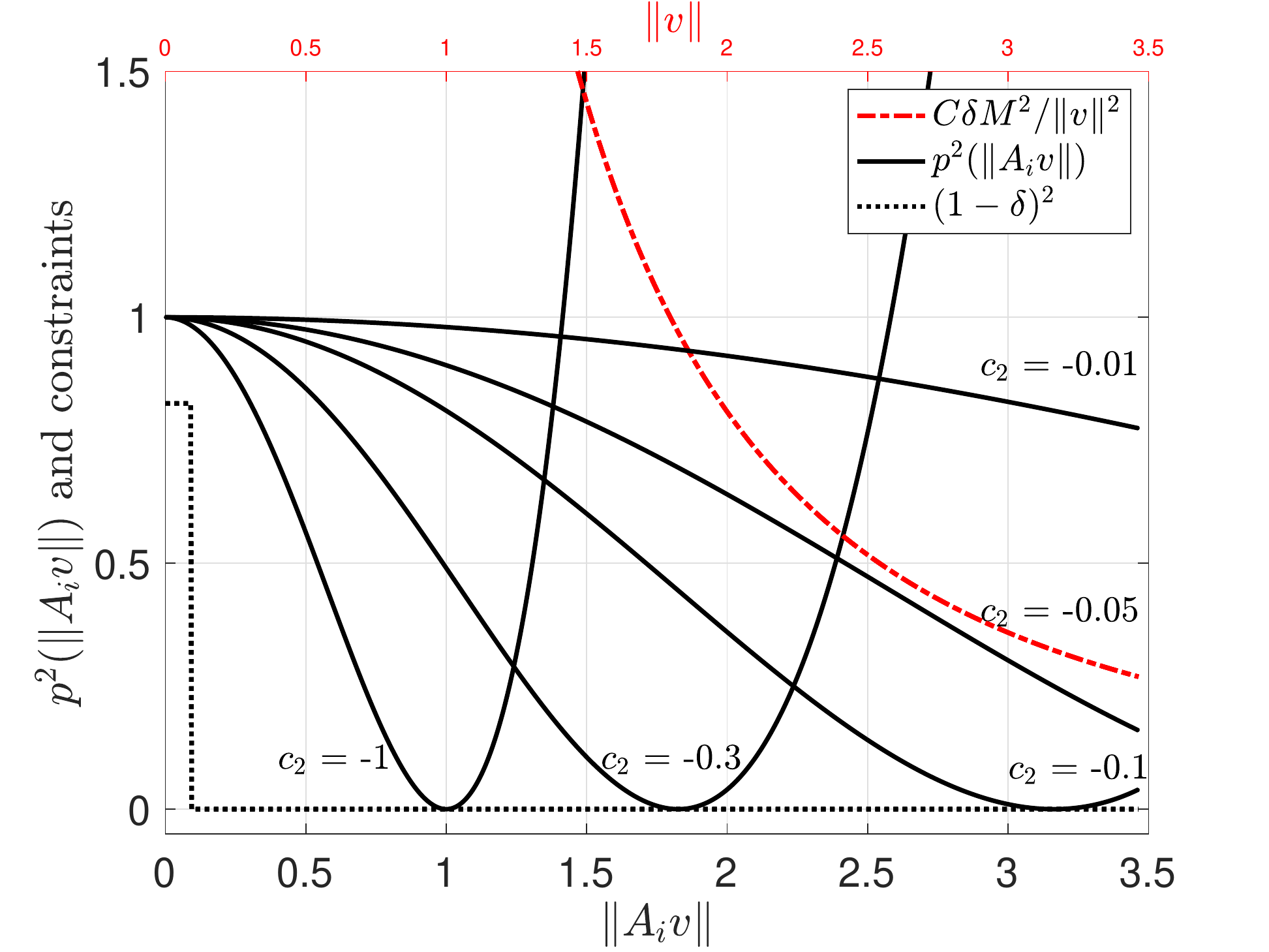}
\vspace{-1mm}\\ (a)
\end{minipage}
\begin{minipage}[b]{0.5\linewidth} 
\centering
\includegraphics[width=0.9\columnwidth, trim= 0mm 0mm 0mm 0mm, clip]{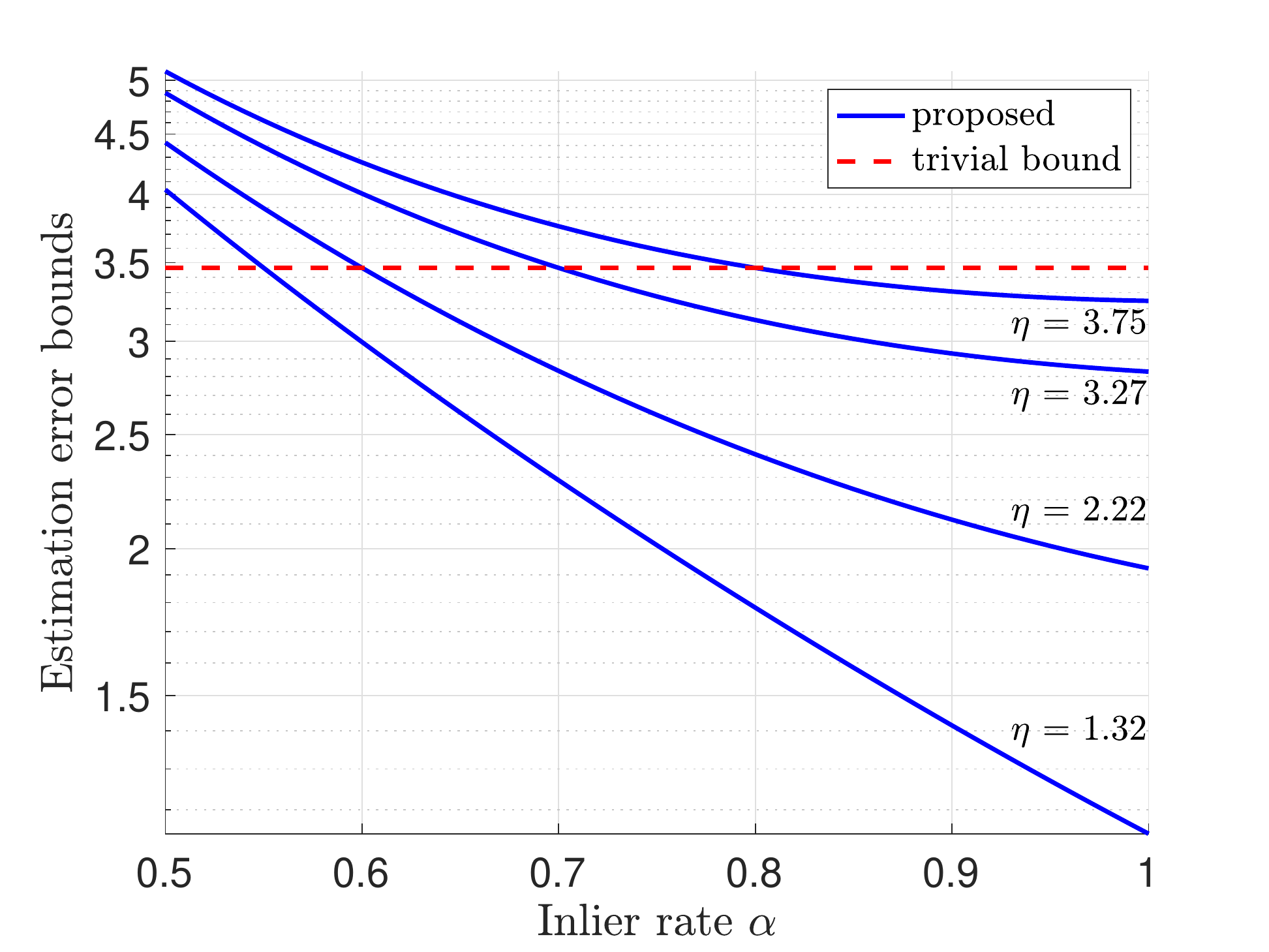}
\vspace{-1mm}\\ (b)
\end{minipage}
\\
\begin{minipage}[b]{0.5\linewidth} 
\centering
\includegraphics[width=0.9\columnwidth, trim= 0mm 0mm 0mm 0mm, clip]{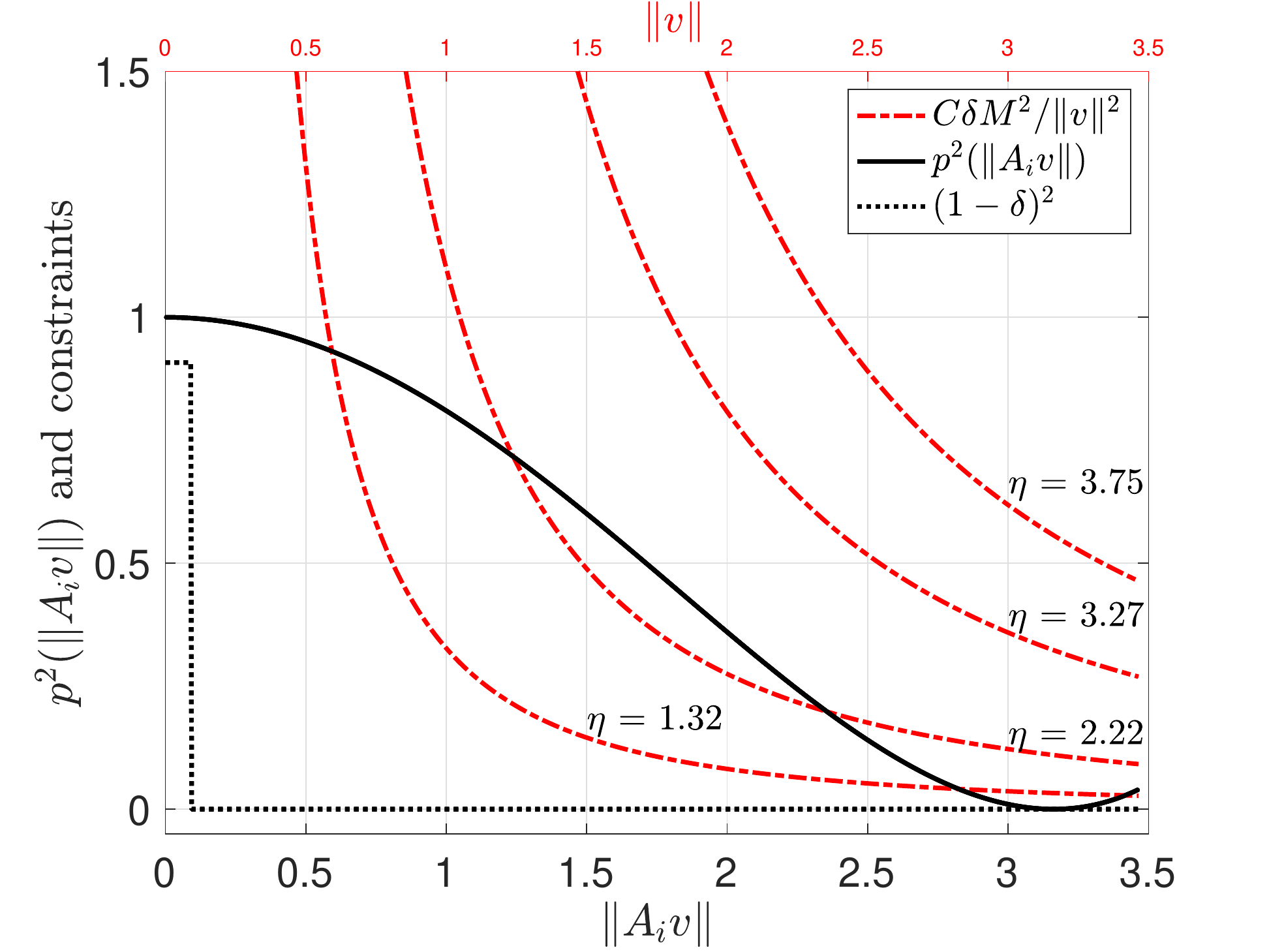}
\vspace{-1mm}\\ (c)
\end{minipage}
\begin{minipage}[b]{0.5\linewidth} 
\centering
\includegraphics[width=0.9\columnwidth, trim= 0mm 0mm 0mm 0mm, clip]{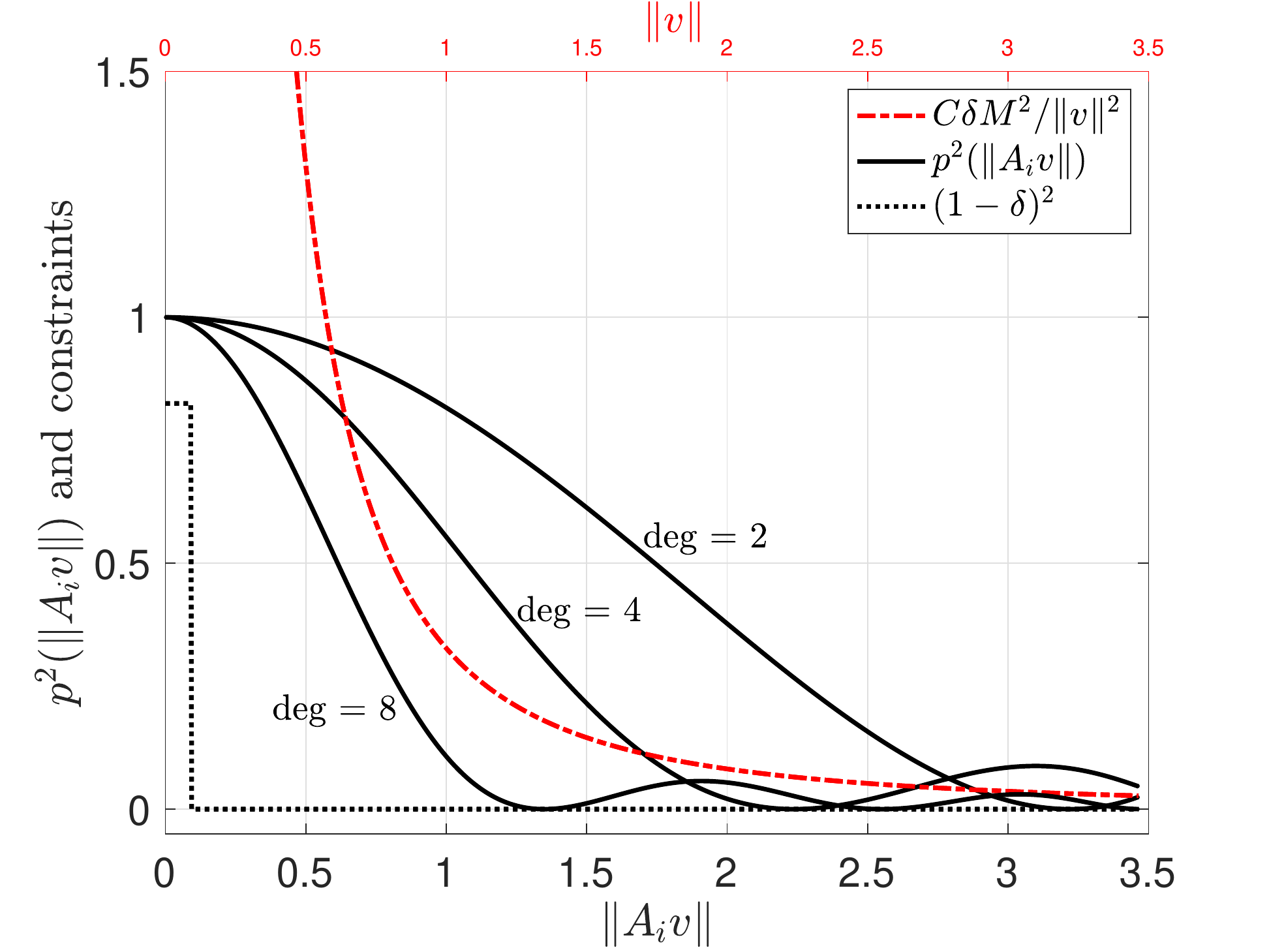}
\vspace{-1mm}\\ (d)
\end{minipage}
\vspace{-6mm}
\caption{Visualization of the functions involved in the definition of anti-concentration in eq.~\eqref{eq:functionsAntiCon1}-\eqref{eq:functionsAntiCon2}.
(a) Polynomials $p^2$ parametrized as in~\eqref{eq:p} for different choices of coefficient $c_2$ 
compared to the bounds in~\eqref{eq:functionsAntiCon1}-\eqref{eq:functionsAntiCon2} for $\eta = 3.26$.
(b) Estimation error bounds from~\cref{thm:lowOut-apriori-MC,thm:lowOut-apriori-LTS} for different choices of $\eta$.
(c) Polynomials $p^2$ parametrized as in~\eqref{eq:p} with $c_2=-0.1$, and bounds in~\eqref{eq:functionsAntiCon1}-\eqref{eq:functionsAntiCon2} for different choices of $\eta$.
(d) Polynomials $p^2$ of degree 2, 4, 8, and suitable choices of coefficients 
compared to the bounds in~\eqref{eq:functionsAntiCon1}-\eqref{eq:functionsAntiCon2} for $\eta = 1.32$.
 \label{fig:anticonVisualizations}}
\end{figure}

%%%%%%%%%%%%%%%%%%%%%%%%%%%%%%%%%%%%%%%%%%%%%%%%%%%%%%%%%%%%%%%%%%%%%%%%%%%%%
\myParagraph{Setting $p$ and $\eta$}
The goal of this subsection is to provide insights on the selection of $p$ and $\eta$.
As we have already observed, for computational reasons we are going to restrict ourselves to degree-2 polynomials $p$; however later in this subsection we are also going to comment on the impact of higher-degree polynomials $p$. 
\cref{def:certAnticon} imposes $p(0)=1$ and requires $p$ to be an even polynomial, hence a degree-2 
polynomial $p$ is restricted to the form:
\beq
\label{eq:p}
p(a) = 1 + c_2 a^2 \mathcom
\eeq
for some scalar variable $a$ and coefficient $c_2$. 
% In this section, we will discuss how to choose $c_2$.
In order to evaluate potential choices of $c_2$ we consider the case of a single inlier ($|\InlierSet|=1$) and visualize the functions involved in the definition of certifiable anti-concentration.
In particular, we observe that a necessary condition for certifiable anti-concentration to be satisfied is that the following inequalities hold in the traditional sense:\footnote{Recall again that asking for an \sos proof imposes a stronger condition compared to just asking for the inequality to be satisfied in the traditional sense.} 
% \begin{align}
%  \left( p\left(\normTwo{\MA_i\tran \vv} \right) -1 \right)^2 \leq \delta^2, 
%  & \quad \forall \vv \text{ such that } \normTwo{\MA_i\tran \vv}^2  \leq \delta^2
%  \\
%  \label{eq:anti-con-manipulated}
% \|\vv\|^2 \cdot \aveOverMeas \; p^2\left(\normTwo{\MA_i\tran \vv}\right) \leq C \, \delta \,M^2, 
% & \quad \forall \vv \text{ such that }
%  \normTwo{\vv}^2 \leq M^2 
% \end{align}
% Now let us manipulate the previous inequalities to make $p^2\left(\normTwo{\MA_i\tran \vv} \right)$ appear in the first inequality and assume $\nrMeasurements = 1$:
\begin{align}
   p^2\left(\normTwo{\MA_i\tran \vv} \right) \geq (1-\delta)^2, 
 & \quad \forall \vv \text{ such that } \normTwo{\MA_i\tran \vv}^2  \leq \delta^2
\label{eq:functionsAntiCon1}\\
p^2\left(\normTwo{\MA_i\tran \vv}\right) \leq \frac{C \, \delta \,M^2}{\|\vv\|^2}, 
& \quad \forall \vv \text{ such that }
 \normTwo{\vv}^2 \leq M^2 \mathcom
 \label{eq:functionsAntiCon2}
\end{align}
where we assumed $|\InlierSet|=1$.
Now we see that the conditions~\eqref{eq:functionsAntiCon1}-\eqref{eq:functionsAntiCon2} can be understood as constraints on the behavior of the polynomial 
$p^2\left(\normTwo{\MA_i\tran \vv} \right)$.~\Cref{fig:anticonVisualizations}(a) shows the behavior of the functions involved in the inequalities~\eqref{eq:functionsAntiCon1}-\eqref{eq:functionsAntiCon2}. 
We visualize 4 choices of $p^2\left(\normTwo{\MA_i\tran \vv} \right)$, corresponding to 4 different choices of coefficients $c_2 \in \{-1, -0.3, -0.1, -0.05, -0.01\}$ in~\eqref{eq:p}. We also 
 visualize the bounds $(1 - \delta)^2$ in~\eqref{eq:functionsAntiCon1}, which only has to hold for $\normTwo{\MA_i\tran \vv}^2  \leq \delta^2$. Moreover, we visualize the upper bound $\frac{C \, \delta \,M^2}{\|\vv\|^2}$ for $C,\delta,M$ chosen as in~\cref{thm:lowOut-apriori-LTS} and
 for $\eta = 3.26$ (we discuss how to choose $\eta$ below). Note that the latter function has a different x-axis ($\|\vv\|$, in red), but, noting that the matrices $\MA_i$ in the rotation search problem have maximum  singular value equal to 1, the axes are still useful to visualize together.\footnote{Since the matrices $\MA_i$ are built from unit vectors, it can be easily shown that the matrices have the largest singular value equal to $1$. 
% Since the singular values of $\MA_i$ are unitary, 
 %one can choose $\vv$ to make $\|\MA_i \vv\| = \|\vv\|$, hence 
Therefore, for each choice of $\|\vv\|$ one can choose the direction of $\vv$ to make $\|\MA_i \vv\| = \|\vv\|$, making the top and bottom x-axes of~\cref{fig:anticonVisualizations}(a) commensurable; however note that $\|\MA_i \vv\| \leq \|\vv\|$ in general, and $\|\MA_i \vv\|$ can become as small as zero for rank-deficient $\MA_i$.}
In this example with $|\InlierSet|=1$, a necessary condition for certifiable anti-concentration to hold is that the squared polynomial $p^2$ remains below the red upper bound in~\cref{fig:anticonVisualizations}(a) and above the lower bounds $(1-\delta)^2$ visualized as a black dotted line in the figure.
From the figure it is clear that the latter condition is relatively easy to satisfy (at least for degree-2 polynomials), while remaining below the red upper bound is a more stringent constraint.
In particular, if we choose large negative values of the coefficient $c_2$ (\eg $c_2 = -1$ or $c_2 = -0.3$), the squared polynomial $p^2$ ``bounces back'' and quickly reaches the upper bound.  If the coefficient is too small (\eg $c_2 = -0.01$ in the figure), the polynomial does not decrease sufficiently fast and still violates the upper bound. However, choosing $c_2 = -0.1$ and $c_2 = -0.05$ satisfies both constraints, with $c_2 = -0.1$ being the farthest of the two from the upper bound. Later on, we will show that $c_2 = -0.1$ indeed leads to certifying anti-concentration in our rotation search problem for a suitably chosen $\eta$.
We remark that while~\cref{fig:anticonVisualizations}(a) is somewhat insightful in visualizing the constraints that the polynomial $p$ has to satisfy, when $|\InlierSet| > 1$ the upper bound has to be only satisfied in average  (\ie $\aveOverInliers p^2\left(\normTwo{\MA_i\tran \vv}\right) \leq \frac{C \, \delta \,M^2}{\|\vv\|^2}$ rather than $p^2\left(\normTwo{\MA_i\tran \vv}\right) \leq \frac{C \, \delta \,M^2}{\|\vv\|^2}$ for all $i$), giving some extra slack in satisfying anti-concentration.

Now let us turn our attention to the parameter $\eta$.
The parameter $\eta$ can be roughly understood as a measure of the ``quality'' of the inlier set: 
good inlier sets will be certifiably anti-concentrated for a small $\eta$, which induces 
better error bounds (all bounds in~\cref{thm:lowOut-apriori-LTS,thm:lowOut-apriori-MC,thm:lowOut-apriori-TLS} increase monotonically with $\eta$). 
On the other hand, while we can arbitrarily increase the $\eta$ to make a given batch of inliers certifiably anti-concentrated, a large $\eta$ might lead to a trivial bound in the corresponding theorem. 
In particular, for the bounds in~\cref{thm:lowOut-apriori-LTS,thm:lowOut-apriori-MC} to be nontrivial (\ie smaller than the trivial bound $2\boundx$), we need:
\begin{align}
\label{eq:choiceEta}
\left( \frac{\inlierRate \; \eta \;  }{2} +  2\frac{1-\inlierRate}{\inlierRate} \right) < 2 
\quad \Longrightarrow \quad
\eta < \frac{2}{\inlierRate} \left(2-2\frac{1-\inlierRate}{\inlierRate} \right) \mathper
\end{align} 
Therefore, for a certain inlier rate $\inlierRate$, we can choose the $\eta$ such that our theorems do not produce a trivial bound and then test is such $\eta$ ensures certifiable anti-concentration. Ideally, we would like to have a nontrivial bound already around $\inlierRate=0.5$, but as we will see in a second that might lead to a small $\eta$, which makes the certifiable anti-concentration condition too strict. Below, we choose potential values of $\eta$ according to~\eqref{eq:choiceEta},  
such that they produce nontrivial bounds for $\inlierRate \in [0.55, 0.6, 0.7, 0.8]$, respectively leading to 
the $\eta \in [1.32, 2.22, 3.26, 3.75]$.~\Cref{fig:anticonVisualizations}(b) shows the trend of the bounds in~\cref{thm:lowOut-apriori-LTS,thm:lowOut-apriori-MC} for those choices of $\eta$, compared with the trivial bound $2 \boundx = 2 \sqrt{3}$. As expected, our choices of $\eta$ are designed such that the proposed bounds  move below the trivial bound for $\inlierRate \in [0.55, 0.6, 0.7, 0.8]$, respectively. 
 Ideally, we would like to choose the smallest possible $\eta$.~\Cref{fig:anticonVisualizations}(c) visualizes the price we need to pay to decrease the $\eta$:  the figure shows how smaller values of $\eta$ lead to increasingly tight upper bounds $\frac{C \, \delta \,M^2}{\|\vv\|^2}$ that the polynomial $p$ needs to satisfy.
 The figure already shows that in this case ($|\InlierSet|=1$) it is unlikely we can find a degree-2 polynomial that satisfies the upper bound for $\eta = 1.32$.
% 
% Maybe at 60\% inliers: this makes the maximum $\eta = 2.2$
% \begin{align}
% \aveOverMeas \; p^2\left(\normTwo{\MA_i\tran \vv}\right)  \leq C \, \delta \,M^2 \frac{1}{\|\vv\|^2 \cdot}
% \end{align} 
% We try to design each one to be below.
\Cref{fig:anticonVisualizations}(d) shows that we can still satisfy the upper bound $\frac{C \, \delta \,M^2}{\|\vv\|^2}$ for a small $\eta$ if we are willing to increase the degree of the polynomial $p$: the figure plots 
even polynomials of degree 2, 4, and 8 (with suitably chosen coefficients) contrasting them with the upper bound obtained for $\eta=1.32$. It can be observed that the degree-8 polynomial remains below the upper bound in the figure. This shows that we can afford stronger guarantees (\ie smaller $\eta$) if we are willing to pay the computational cost of using a higher-degree polynomial, which in turns implies a larger $k$ in the definition of certifiable anti-concentration and a higher order of the moment relaxations involved in our robust estimators.
In the following we stick to the more realistic case of degree-2 polynomial $p$ and we numerically investigate which $\eta$ we can ``afford'' in synthetic problems for both the rotation search problem and for a variation of the problem.

\begin{figure}[t]
\begin{minipage}[b]{0.33\linewidth}
\centering
\includegraphics[width=1\columnwidth, trim= 0mm 0mm 0mm 0mm, clip]{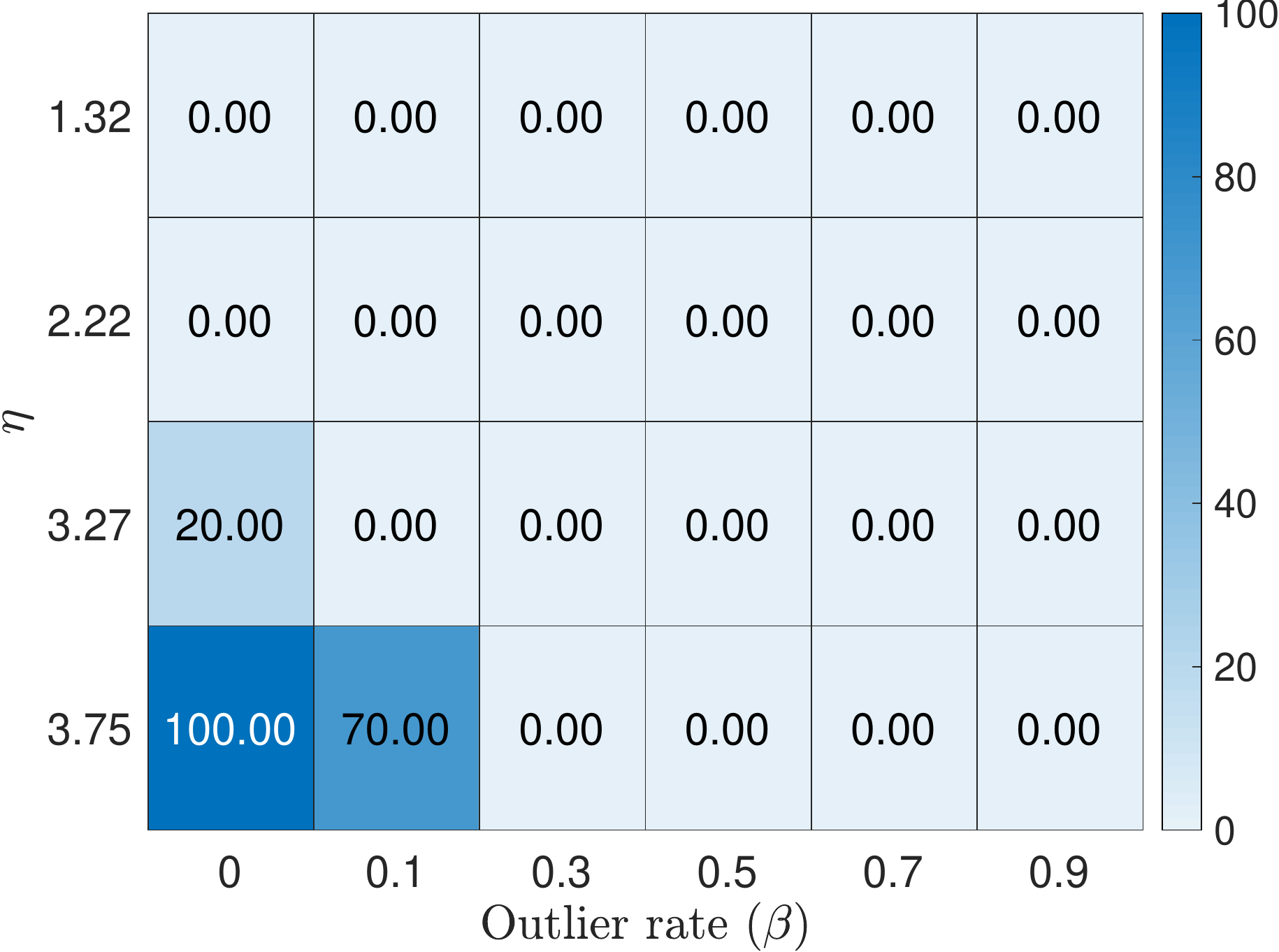}
% log_results_experiment7_isModified_0
\vspace{0mm}\vspace{-3mm}\\
{(a)}
\end{minipage}
% \hspace{0.5cm}
\begin{minipage}[b]{0.33\linewidth} 
\centering
\includegraphics[width=1\columnwidth, trim= 0mm 0mm 0mm 0mm, clip]{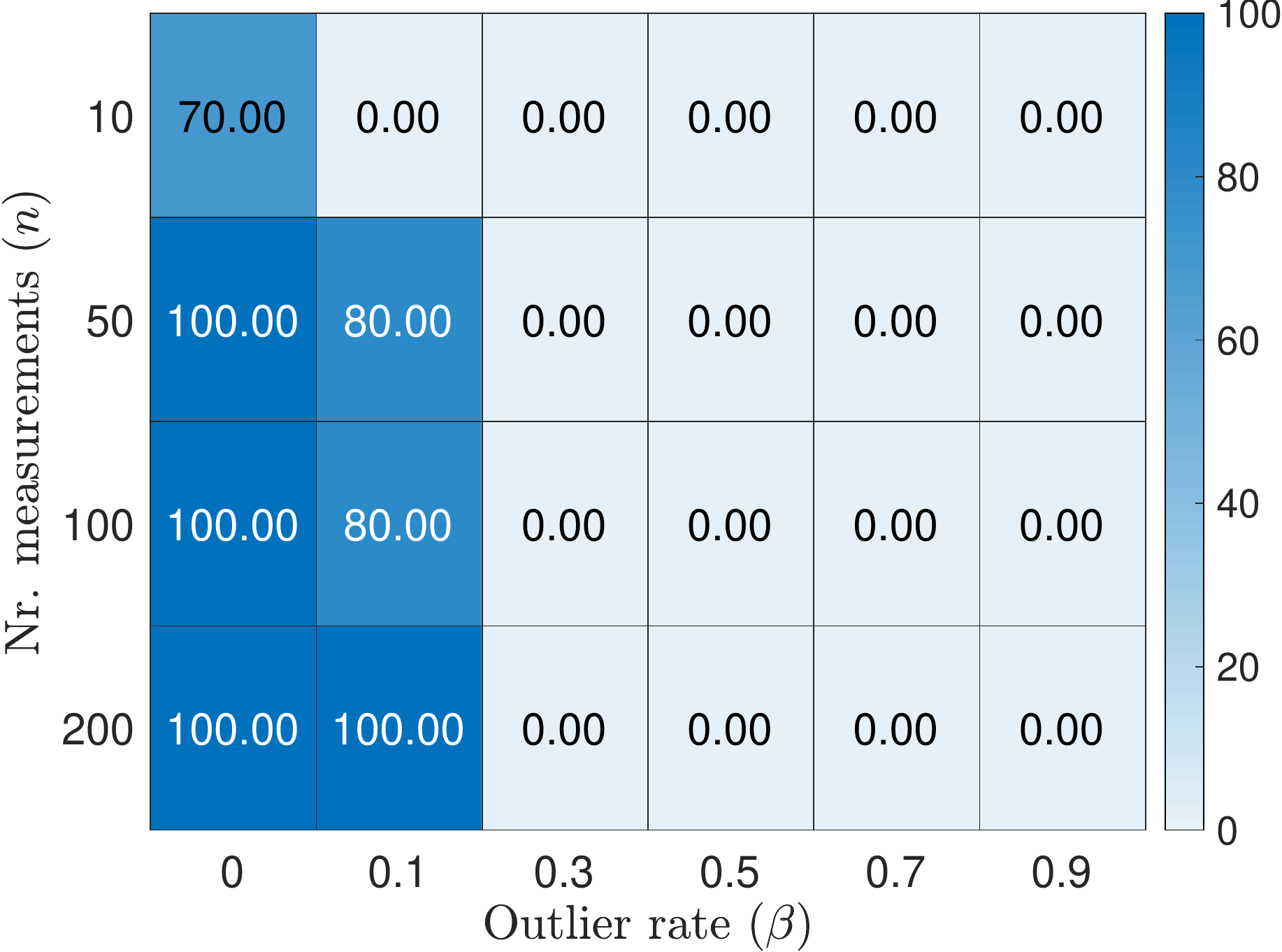} 
% save log_results_experiment6b_isModified_true
\vspace{0mm}\vspace{-3mm}\\
(b)
\end{minipage}\vspace{-3mm}
% \hspace{0.5cm}
\begin{minipage}[b]{0.33\linewidth} 
\centering
\includegraphics[width=1\columnwidth, trim= 0mm 0mm 0mm 0mm, clip]{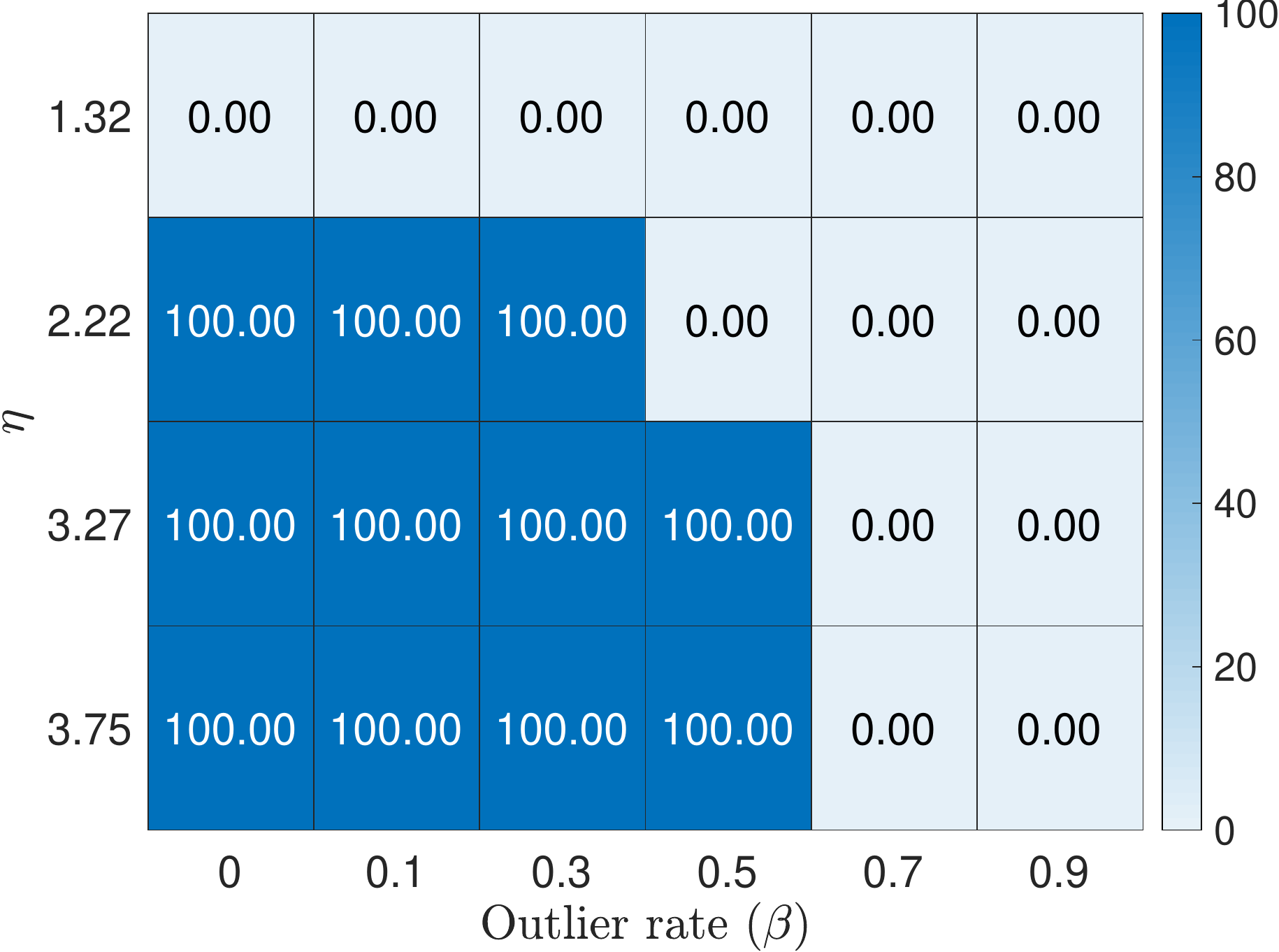}
% save log_results_experiment6b_isModified_true
\vspace{0mm}\vspace{-3mm}\\
{(c)}
\end{minipage}\vspace{-3mm}
\caption{Certifiable anti-concentration results on synthetic data:
(a) Rotation search problem: percentage of tests where $6$-certifiable anti-concentration holds 
for various choices of $\eta$ and outlier rates $\beta$ (with $\nrMeasurements = 50$). 
(b) Rotation search problem: percentage of tests where $6$-certifiable anti-concentration holds 
for increasing number of measurements $\nrMeasurements$ and outlier rates $\beta$ (with $\eta = 3.75$). 
(b) Modified version of the rotation search problem: 
percentage of tests where $6$-certifiable anti-concentration holds 
for various choices of $\eta$ and $\beta$ (with $\nrMeasurements = 50$).
Statistics computed over 10 runs. 
\label{fig:stats_synthetic_anticon_modified}}
\end{figure}
% we can provide more inliers (rotation averaging):
% now small eta works.

%%%%%%%%%%%%%%%%%%%%%%%%%%%%%%%%%%%%%%%%%%%%%%%%%%%%%%%%%%%%%%%%%%%%%%%%%%%%%
\myParagraph{Experiments on synthetic data} % anti-concentration for degree-2 $p$ (for some eta)
% \myParagraph{Wahba problem: anti-concentration for degree-2 $p$ (for some eta)}
% p is unknown: visualize p (visualize $C\delta$, Mx). necessary condition is positive poly
% we can try with degree 2 p and multiple coefficients: do not reach positive gam2, no hyper
We choose the univariate polynomial to be $p(a) = 1 - 0.1 a^2$ and test certifiable anti-concentration in a rotation search problem with $\nrMeasurements = 50$ measurements, generated as described in~\cref{sec:experiments-setup}. Testing certifiable anti-concentration is much slower than testing certifiable hypercontractivity, hence we only compute statistics over 10 runs. We test whether 
certifiable anti-concentration holds for increasing outlier rates $\outlierRate \in \{0;0.1;0.3;0.5;0.7;0.9\}$ and for different choices of $\eta \in \{1.32; 2.22; 3.26; 3.75\}$. 
\cref{fig:stats_synthetic_anticon_modified}(a) reports the percentage of tests where $6$-certifiable anti-concentration holds for various choices of $\eta$  and outlier rates $\outlierRate$.
We observe that certifiable anti-concentration is only satisfied for a relatively large $\eta$ and for small outlier rates. In particular, the parameter $C$ in~\eqref{eq:functionsAntiCon2} depends on the square of $\inlierRate = 1-\outlierRate$ according to~\cref{thm:lowOut-apriori-LTS,thm:lowOut-apriori-MC}, hence quickly making the condition too strict for increasing outlier rates $\outlierRate$. 

\Cref{fig:stats_synthetic_anticon_modified}(b) reports the percentage of tests where $6$-certifiable anti-concentration holds for increasing number of measurements $\nrMeasurements$  and outlier rates $\outlierRate$, and for $\eta = 3.75$. The figure shows that certifiable anti-concentration is more likely to hold for larger sets of measurements, while still only applying to problems with relatively low outlier rates (\ie 10\%).
While \Cref{fig:stats_synthetic_anticon_modified}(a-b)  seem to suggest that certifiable anti-concentration with degree-2 polynomials $p$ is too strict of a condition to be of practical interest, below we report a variant of the rotation search problem, where the condition is satisfied for much larger outlier rates $\outlierRate$.
% \LC{zero anti-concentrated instances on real data}

We consider a modified version of the rotation search problem, where each measurement, instead of being a single 3D vector $\vb_i$, is a triplet of orthogonal vectors $\vb_i\at{1}$, $\vb_i\at{2}$, $\vb_i\at{3}$. In this variant of the rotation search problem, the measurement model becomes:
\beq
\label{eq:modifiedWahba}
\vy_i = \matTwo{\vb_i\at{1} \\ \vb_i\at{2} \\ \vb_i\at{3}}
 = \matTwo{\MR \va_i\at{1} + \vepsilon\at{1} \\ \MR \va_i\at{2}+ \vepsilon\at{2} \\ \MR \va_i\at{3} + \vepsilon\at{3}}
  = \matTwo{ (\va_i\at{1})\tran \kron \eye_3 \\ (\va_i\at{2})\tran \kron \eye_3 \\ (\va_i\at{3})\tran \kron \eye_3 } \cdot \vectorize{\MR} = \MA_i\tran \vxx\gt + \vepsilon \mathper
\eeq
The expert reader might notice that this is now similar to a \emph{rotation averaging} problem (see~\cref{ex:singleRotationAveraging} in~\cref{sec:openProblems}), since each measurement allows reconstructing the full rotation $\MR$. We numerically test certifiable anti-concentration in this problem variant. Towards this goal, we use the same data generation protocol described in~\cref{sec:experiments-setup} to sample $\va_i$, and 
set $\va_i\at{1} = \va_i$; then we create an orthogonal vector $\va_i\at{2}$ by sampling a second random vector, projecting it onto the null space of $\va_i\at{1}$, and normalizing it to have unit norm; 
finally, we set $\va_i\at{3} = \va_i\at{1} \times \va_i\at{2}$, where $\times$ is the cross product.
Note that the vectors $\vb_i\at{1}, \vb_i\at{2}, \vb_i\at{3}$ do not appear in the definition of anti-concentration, hence we do not have to generate them.
\Cref{fig:stats_synthetic_anticon_modified}(c) reports the percentage of tests where $6$-certifiable anti-concentration holds for various choices of $\eta$ and outlier rates $\outlierRate$ in this variant of the rotation search problem.
In this case, certifiable anti-concentration is more broadly satisfied for several choices of $\eta$ and for outlier rates as high as $50\%$.
To the best of our knowledge, this is the first numerical evidence supporting the use of certifiable anti-concentration with degree-2 polynomials $p$. Moreover, since certifiable anti-concentration holds for relatively small $\eta$, this variant of the problem enjoys the tighter error bounds shown in~\cref{fig:anticonVisualizations}(b).
 Besides the intuition that this variant of the rotation search problem is ``easier'' (an aspect that is correctly captured by $\eta$ for which certifiable anti-concentration holds), there is a precise technical reason that makes the property easier to satisfy. 
 By inspection, we note that the matrix $\MA_i$ in eq.~\eqref{eq:modifiedWahba} is an orthogonal matrix,\footnote{$\MA_i\tran \MA_i \!=\!\matTwo{ (\va_i\at{1})\tran \kron \eye_3 \\ (\va_i\at{2})\tran \kron \eye_3 \\ (\va_i\at{3})\tran \kron \eye_3 } \!\cdot\! 
 \matTwo{ \va_i\at{1} \kron \eye_3 \; \va_i\at{2} \kron \eye_3 \; \va_i\at{3} \kron \eye_3 } 
 \!=\!
 \matThree{
(\va_i\at{1})\tran \va_i\at{1} \eye_3 
\!\!&\!\! (\va_i\at{1})\tran \va_i\at{2}  \eye_3 
\!\!&\!\!(\va_i\at{1})\tran \va_i\at{3}  \eye_3 
\\
\star
\!\!&\!\! (\va_i\at{2})\tran \va_i\at{2}  \eye_3 
\!\!&\!\!(\va_i\at{2})\tran \va_i\at{3}  \eye_3 
\\
\star
\!\!&\!\! \star
\!\!&\!\!(\va_i\at{3})\tran \va_i\at{3}  \eye_3
 } \!=\! \eye_9$} 
 hence $\| \MA_i\tran \vv \| = \|\vv\|$. This makes the top and bottom x-axes in~\cref{fig:anticonVisualizations}(a) the same, and the anti-concentration condition becomes a point-wise comparison between the polynomial $p^2$ and the bounds.\footnote{As we observed, in the standard rotation search problem, $\| \MA_i\tran \vv \|$ and $\|\vv\|$ are different in general, but it still holds $\| \MA_i\tran \vv \| \leq \|\vv\|$ since the largest singular value of $\MA_i$ is equal to $1$. 
 Since, however, for certain choices of $\vv$, $\| \MA_i\tran \vv \|$ can be much smaller than $\|\vv\|$ (and be even zero if the matrix $\MA_i$ is rank-deficient), the condition~\eqref{eq:functionsAntiCon2} may need to be satisfied for any $0 \leq \| \MA_i\tran \vv \| \leq \|\vv\|$. On the other hand, when the matrix $\MA_i$ is orthogonal, $\| \MA_i\tran \vv \| = \|\vv\|$ and the condition in~\eqref{eq:functionsAntiCon2} simplifies to $p^2\left(\normTwo{\vv}\right) \leq \frac{C \, \delta \,M^2}{\|\vv\|^2}$, which is less stringent to satisfy. 
 } 

The runtime to check certifiable anti-concentration is much higher than the one required for certifiable hypercontractivity, since now we have to solve constrained optimization problems, also involving higher-degree polynomials.
Solving a moment relaxation of~\eqref{eq:pop-anticon1} using {\tt findbound}~\cite{Papachristodoulou13-sostools} takes around 2.9 seconds on a standard laptop. However, solving a relaxation of~\eqref{eq:pop-anticon2} requires around 7 minutes, due to the need to use a higher-order relaxation (recall that~\eqref{eq:pop-anticon2} involves degree-6 polynomials).
%
% aveAveRuntime1 =
%    2.794511851775000
% aveAveRuntime2 =
%      4.205373939290334e+02

%%%%%%%%%%%%%%%%%%%%%%%%%%%%%%%%%%%%%%%%%%%%%%%%%%%%%%%%%%%%%%%%%%%%%%%%%%%%%
% \myParagraph{Experiments on real data}
%%%%%%%%%%%%%%%%%%%%%%%%%%%%%%%%%%%%%%%%%%%%%%%%%%%%%%%%%%%%%%%%%%%%%%%%%%%%%
% \myParagraph{Experiments on a ``modified'' Wahba problem}  
% \myParagraph{Modified Wahba problem: anti-concentration for degree-2 $p$ (for a much smaller eta)}
% %%%%%%%%%%%%%%%%%%%%%%%%%%%%%%%%%%%%%%%%%%%%%%%%%%%%%%%%%%%%%%%%%%%%%%%%%%%%%
% \myParagraph{Modified Wahba problem: empirical assessment of the bound}
% however, still requires degree 6, which is not feasible.
% Show degree 2 or 4 results and show they still apply:
% - show results TLS (not: MC, LTS2 vs bound)

%%%%%%%%%%%%%%%%%%%%%%%%%%%%%%%%%%%%%%%%%%%%%%%%%%%%%%%%%%%%%%%%%%%%%%%%%%%%%
%%%%%%%%%%%%%%%%%%%%%%%%%%%%%%%%%%%%%%%%%%%%%%%%%%%%%%%%%%%%%%%%%%%%%%%%%%%%%
\subsection{High Outlier Rates: List-Decodable Estimation}
\label{sec:experiments-ldr}

We conclude this experimental section by discussing list-decodable estimation and the high-outlier  regime. In particular, this section shows that despite the fact that the theoretical guarantees of~\cref{thm:highOut-apriori} fall short in practice, a small variation of~\cref{algo:ldr} has impressive  empirical performance. To the best of our knowledge, the algorithm we describe is the 
 first practical algorithm for list-decodable estimation based on the framework proposed in~\cite{Karmalkar19neurips-ListDecodableRegression}.

We start by observing that the theoretical results in~\cref{thm:highOut-apriori} fall short of providing usable performance guarantees in the high-outlier-rate case: first of all, certifiable anti-concentration, even in the simpler variant of the rotation search problem in~\cref{fig:stats_synthetic_anticon_modified}(c), does not empirically hold for outlier rates $\outlierRate > 0.5$ (at least when using a degree-2 $p$), hence the results only cover the low-outlier regime in practice. Second, the guarantees in~\cref{thm:highOut-apriori} (\ie the bound $\normTwo{\vxx - \vxx\gt} \leq  \eta \boundx$) are non-trivial only for $\eta < 2$; however, \cref{fig:stats_synthetic_anticon_modified}(c) shows that 
{anti-concentration starts holding for $\eta > 2$ in most cases,} making the result in~\cref{thm:highOut-apriori} of pure theoretical interest.
 Finally, the smallest $k$ allowed by the theorem is $k=6$, which leads to large order-3 moment relaxations that we cannot solve with current SDP solvers. 
% More in detail, in the previous section we have seen that $k$-anti-concentration only holds for $\outlierRate \leq 0.5$ in problems of interest (making the guarantees for $\outlierRate \geq 0.5$ in~\cref{thm:highOut-apriori}  mostly of theoretical interest). 
% Moreover, 
% \LC{figure out order of relaxation: maybe 3?  Moreover, similar to the other results involving anti-concentration, even solving the moment relaxation in~\cref{algo:ldr} would not be feasible in practice, since it would lead to a large SDP.}
Despite these limitations, one might still wonder if~\cref{algo:ldr} would produce good estimates even for lower relaxation orders, beyond what's covered by the theory.

 This section shows that a sparse order-2 relaxation of problem~\eqref{eq:LDR} in~\cref{algo:ldr} already leads to accurate list-decodable estimation. 
In particular, we show that a small variant of~\cref{algo:ldr}, that we call \emph{\SLIDESlong} (\SLIDES), 
(i) produces good estimates for problems with high rates of random outliers,
(ii) produces good estimates for problems with high rates of {``adversarial''} (more precisely, mutually consistent) outliers,
and 
(iii) if the measurements are generated by multiple rotations (\ie different subsets of measurements are generated by different rotations), then \SLIDES simultaneously recovers \emph{all} the rotations generating the data.
% if the outliers are consistent, \ie are produced by one or more rotations 
% $\bar{\MR} \neq \MR\gt$, then our list decodable algorithm can recover \emph{all} rotations producing the data.

\myParagraph{\SLIDESlong (\SLIDES)} 
The proposed algorithm entails three main changes to~\cref{algo:ldr}, for $\relaxOrder=2$. First of all, instead of solving a moment relaxation of order $\relaxOrder=2$, which is still expensive for large $\nrMeasurements$ (we test for $\nrMeasurements = 50$), we develop a sparse relaxation, whose details are given in~\cref{app:sparse-relaxations-LDR}. The relaxation essentially considers a sparse monomial basis, which neglects certain degree-2 monomials (\eg $\omega_i \cdot \omega_j$) that would make the dimension of the resulting SDP much larger; the monomial basis is designed to still give us access to the entries of the pseudo-moment matrix
%pseudo-expectations 
used in~\cref{algo:ldr}.
The second modification is to round the list of solutions of~\cref{algo:ldr} to the domain $\Domain$.
The latter is a consequential change: we empirically noticed that the original approach in~\cref{algo:ldr} (but with our sparse relaxation) produces estimates with norm close to zero. As we will see, projecting the estimates to $\Domain$ has the effect of re-normalizing the result and correcting scaling problems. Finally, we always return $\nrMeasurements$ hypotheses, rather than sampling; 
this makes the results deterministic and independent from an arbitrary choice of number of hypotheses (which can no longer be guided by the guarantees in~\cref{thm:highOut-apriori}).
Below, we will refer to this modified algorithm as ``\SLIDES'' (\SLIDESlong).
% (as in Sparse List Decodable Regression).

\begin{figure}[h!]
% log_results_experiment8_isAdversarial_0_recoverAll_0
\begin{minipage}[b]{0.5\linewidth} 
\centering
\includegraphics[width=1\columnwidth, trim= 0mm 0mm 0mm 0mm, clip]{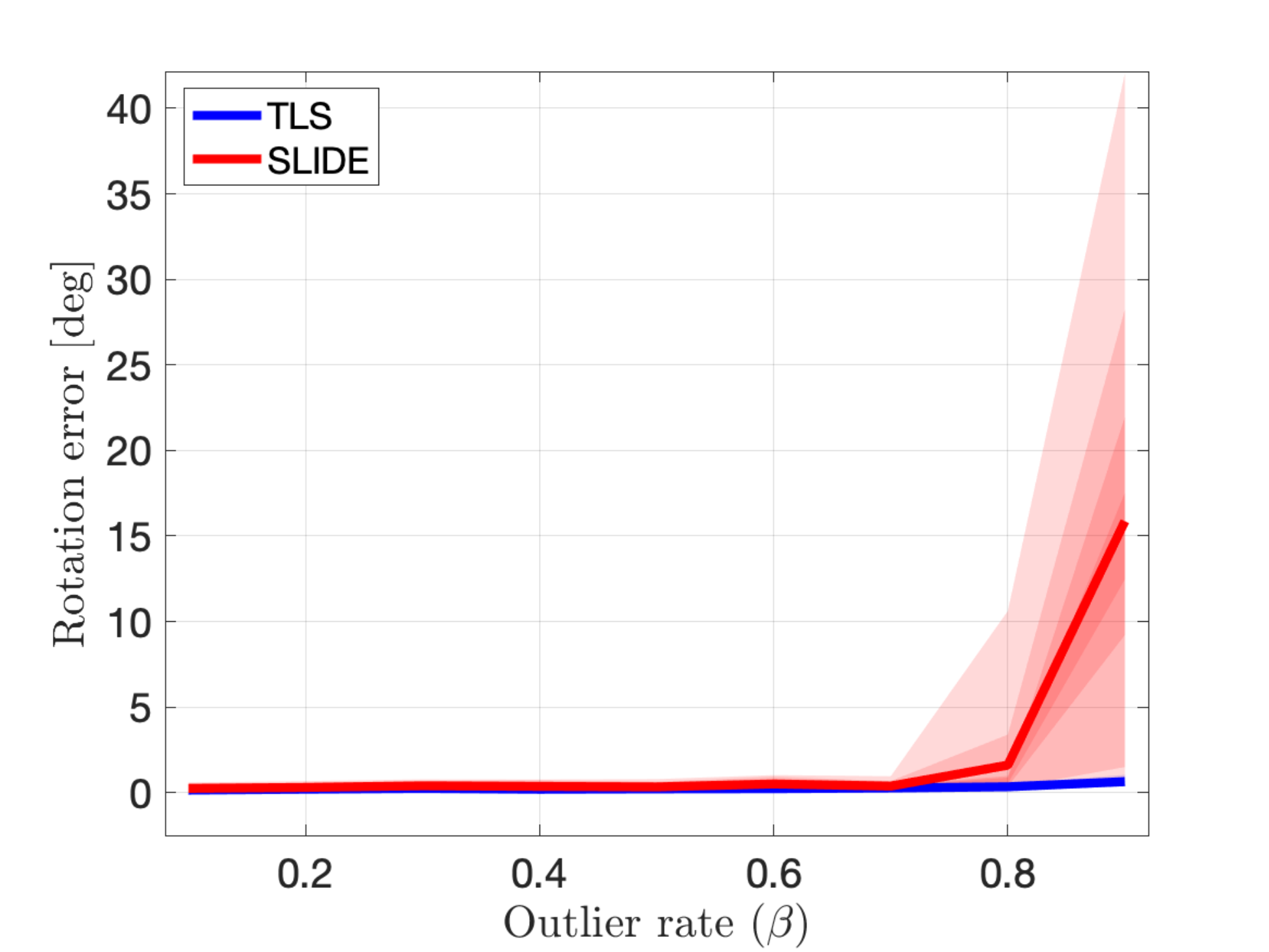}
\vspace{-5mm}\\
(a)
\end{minipage}%\vspace{-3mm}
% \hspace{0.5cm}
\begin{minipage}[b]{0.5\linewidth} 
\centering
\includegraphics[width=1\columnwidth, trim= 0mm 0mm 0mm 0mm, clip]{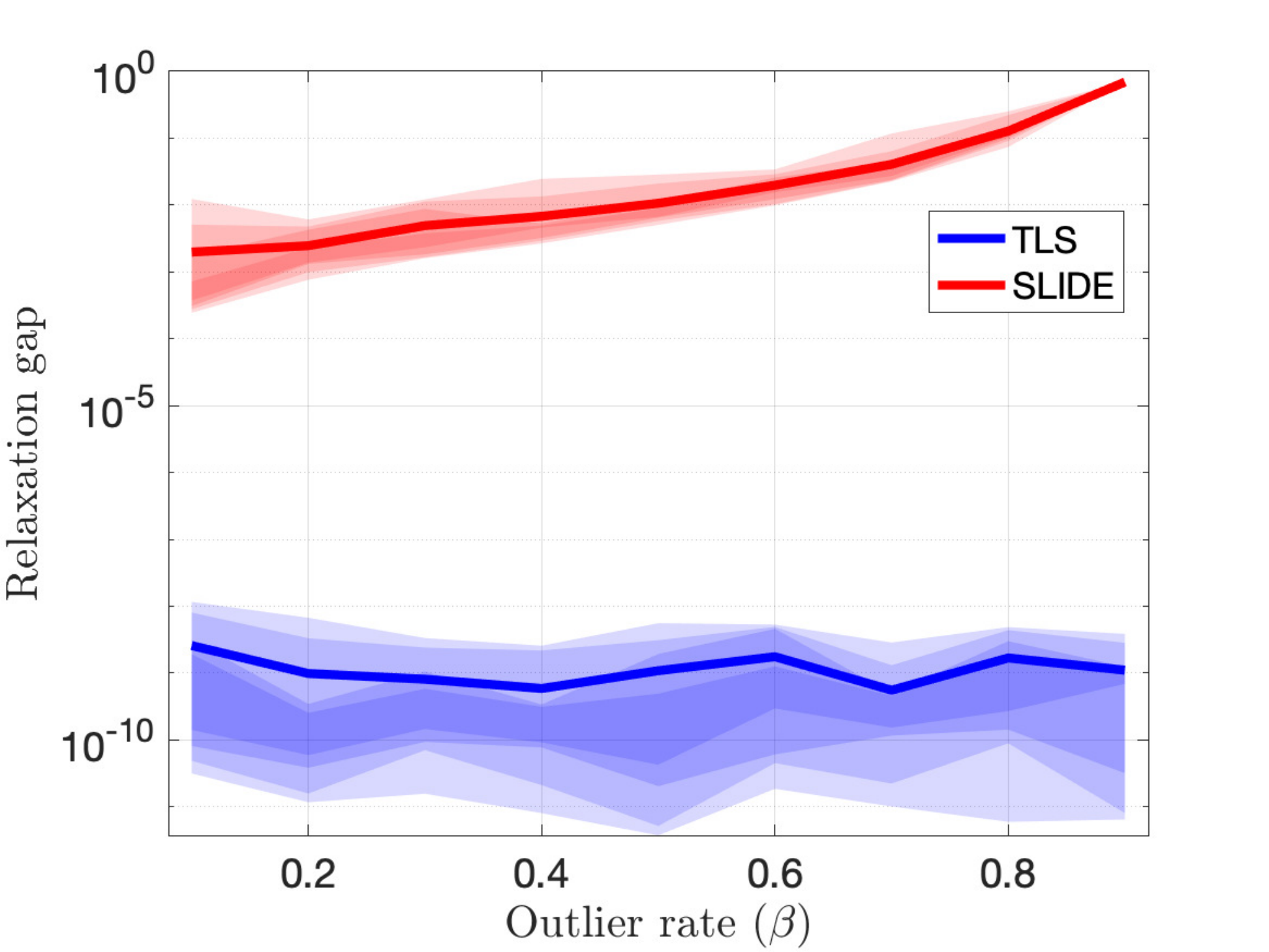}
\vspace{-5mm}\\
(b)
\end{minipage}\vspace{-7mm}
\caption{Rotation search with random outliers: 
(a) Rotation error in degrees for the TLS approach from~\cite{Yang19iccv-quasar} and \SLIDES for increasing outlier rates $\outlierRate$. 
(b) Relaxation gap for the same approaches ($y$-axis is on log scale). 
Shaded areas correspond to the 25\mth, 50\mth, 75\mth, and 90\mth percentiles.
Results are averaged over 10 runs.\label{fig:stats_LDR_1object}}
\end{figure}

\myParagraph{Experiments on synthetic data}
\Cref{fig:stats_LDR_1object} reports statistics comparing the estimation errors of the sparse relaxation of the~\eqref{eq:TLS} problem proposed in~\cite{Yang19iccv-quasar} (label: ``TLS'') against \SLIDES, using the data generation protocol of~\cref{sec:experiments-setup} with $\nrMeasurements = 50$ and increasing outlier rates $\outlierRate$ between $0.1$ and $0.9$ at increments of $0.1$.
\Cref{fig:stats_LDR_1object}(a) reports the rotation error in degrees between the estimate from each approach and the ground truth. For \SLIDES, we report the smallest error across the estimates in the list. 
% , computed as $\|\hat{\vxx} - \vxx\gt\|$, 
% where $\hat{\vxx}$ is the single estimate produced by~\eqref{sec:experiments-setup} (label: ``TLS'')
%  or is the best hypotheses for our SLDR approach. 
 The TLS results confirm the observation from~\cite{Yang19iccv-quasar} that TLS is extremely robust to extreme rates of non-adversarial outliers (as described in~\cref{sec:experiments-setup}, in this case we sample the outliers at random). However, the interesting observation is that \SLIDES is also able to retrieve good estimates in practice as long as the amount of inliers is sufficiently large; 
 note that for $\outlierRate = 0.9$ we only have $5$ inliers, which is comparable to the size of the minimum set of measurements we need to uniquely solve the problem ($\minDim = 3$ in rotation search).
% \Cref{fig:stats_LDR_1object}(b) also reports error statistics, but now expressed as a more readable rotation error in degrees. The figure only confirms that both approaches produce reasonably accurate estimates, while TLS is still more accurate across the spectrum.
\Cref{fig:stats_LDR_1object}(b) reports the relative relaxation gap for both approaches, computed as in eq.~(24) in~\cite{Yang22pami-certifiablePerception}:
\beq
\gamma = \frac{| f_{\text{sdp}} - \hat{f} |}{1 + |f_{\text{sdp}} | + |\hat{f}|}
\eeq 
where $f_{\text{sdp}}$ is the optimal objective of the SDP relaxation and $\hat{f}$ is the objective attained by a rounded solution.\footnote{For \SLIDES, the relaxation gap can be easily computed by observing that the optimal objective of the non-relaxed polynomial optimization problem~\eqref{eq:LDR} is always $\inlierRate \cdot \nrMeasurements$.}
As already reported in~\cite{Yang19iccv-quasar}, the TLS relaxation remains tight across the spectrum. 
On the other hand, \SLIDES's relaxation is typically loose, while still returning accurate solutions.

\begin{figure}[h!]
\begin{minipage}[b]{0.5\linewidth} 
\centering
\includegraphics[width=1\columnwidth, trim= 0mm 0mm 0mm 0mm, clip]{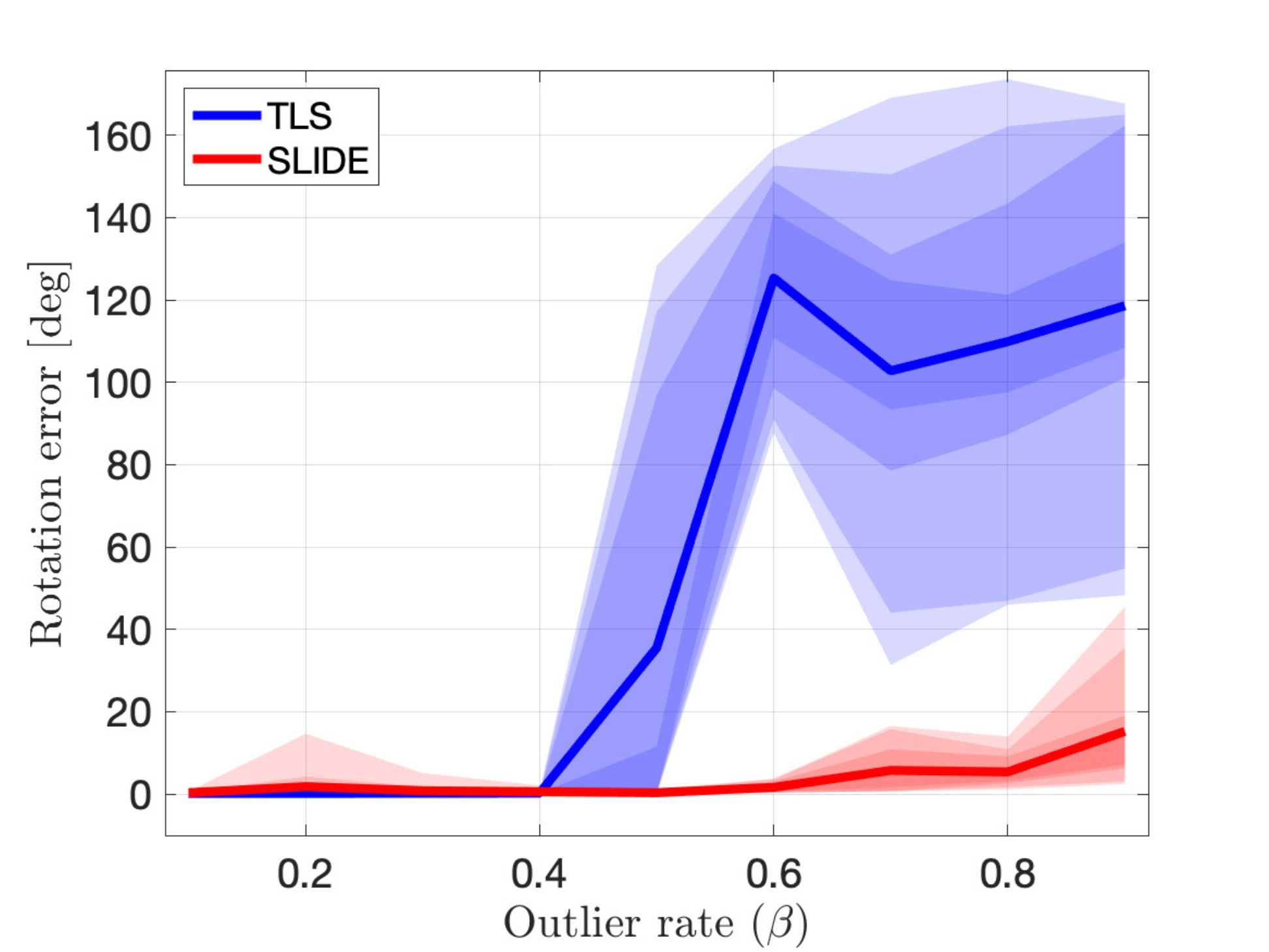}
\vspace{-5mm}\\
(a)
\end{minipage}%\vspace{-3mm}
% \hspace{0.5cm}
\begin{minipage}[b]{0.5\linewidth} 
\centering
\includegraphics[width=1\columnwidth, trim= 0mm 0mm 0mm 0mm, clip]{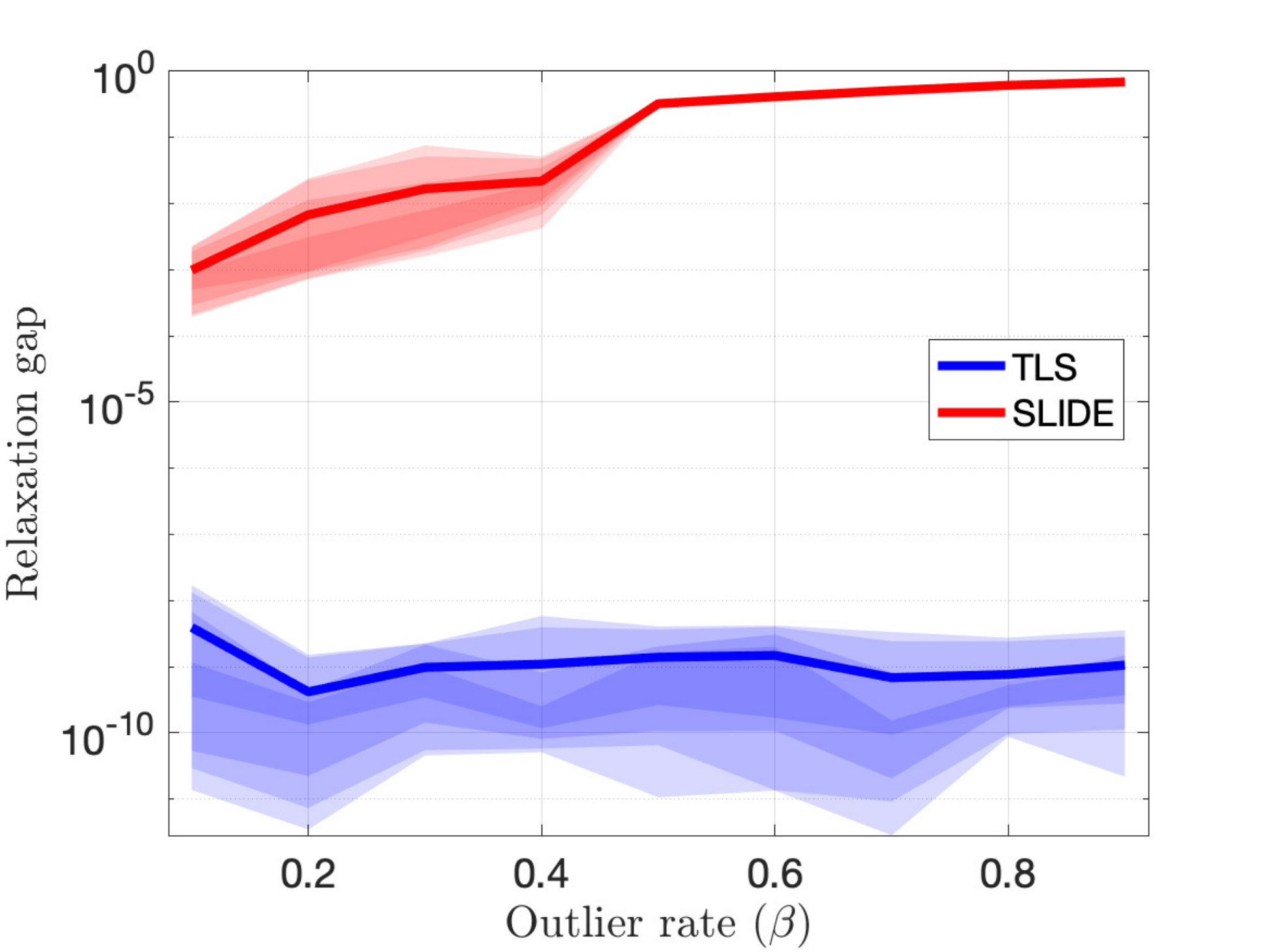}
\vspace{-5mm}\\
(b)
\end{minipage}
\vspace{-7mm}
\caption{Rotation search with mutually consistent outliers: 
(a) Rotation error in degrees for the TLS approach from~\cite{Yang19iccv-quasar} and \SLIDES for increasing outlier rates $\outlierRate$. 
(b) Relaxation gap for the same approaches ($y$-axis is on log scale). 
Shaded areas correspond to the 25\mth, 50\mth, 75\mth, and 90\mth percentiles.
Results are averaged over 10 runs.  \label{fig:stats_LDR_2objects}}
\end{figure}

\Cref{fig:stats_LDR_2objects} reports the same statistics, but now for a case with mutually consistent outliers. In this case, we generate the set of outliers according to the same generative model of the rotation search model~\eqref{eq:wahba}, but using a rotation $\MRout \neq \MR\gt$ that we choose at random. Essentially, the set of measurements now contains two different hypotheses for the parameter we want to estimate, and if the outliers are more than $50\%$ any single-hypothesis estimator is expected to fail (and return an estimate close to $\MRout$ instead of $\MR\gt$).
\Cref{fig:stats_LDR_2objects}(a) reports the rotation error for both TLS and \SLIDES.
 As expected, TLS experiences a phase transition, where for $\outlierRate < 0.5$ it is able to retrieve a good estimate for $\MR\gt$, while for $\outlierRate > 0.5$, the outliers form the most likely hypothesis, leading the estimator to perform arbitrarily poorly.
 However, one can observe how \SLIDES still performs similarly to the non-adversarial case, achieving much smaller errors (the mean error remains below $4^\circ$ for $\outlierRate = 0.8$).
% \Cref{fig:stats_LDR_2objects}(b) reports the error statistics as rotation error in degrees, confirming that SLDR computes estimates with less than \red{$5^\circ$} error for most outlier rates.
\Cref{fig:stats_LDR_1object}(b) reports the relaxation gap for both approaches. 
\SLIDES's relaxation remains loose, while, quite interestingly, TLS remains tight even in the presence of 
mutually consistent outliers.

\begin{figure}[h!]
\begin{minipage}[b]{0.5\linewidth} 
\centering
\includegraphics[width=1\columnwidth, trim= 0mm 0mm 0mm 0mm, clip]{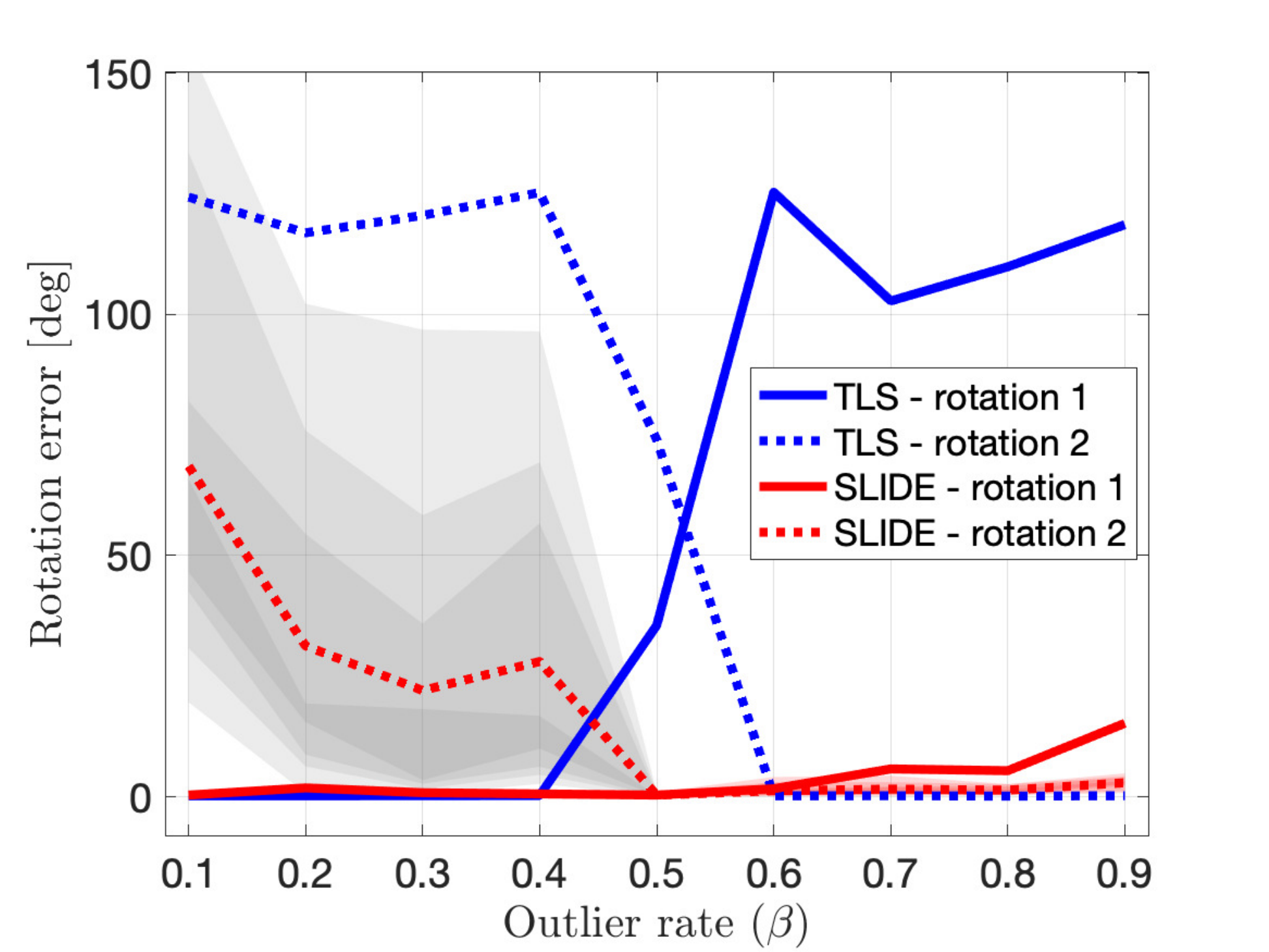}\\
(a)
\end{minipage}%\vspace{-3mm}
% \hspace{0.5cm}
\begin{minipage}[b]{0.5\linewidth} 
\centering
\includegraphics[width=1\columnwidth, trim= 0mm 0mm 0mm 0mm, clip]{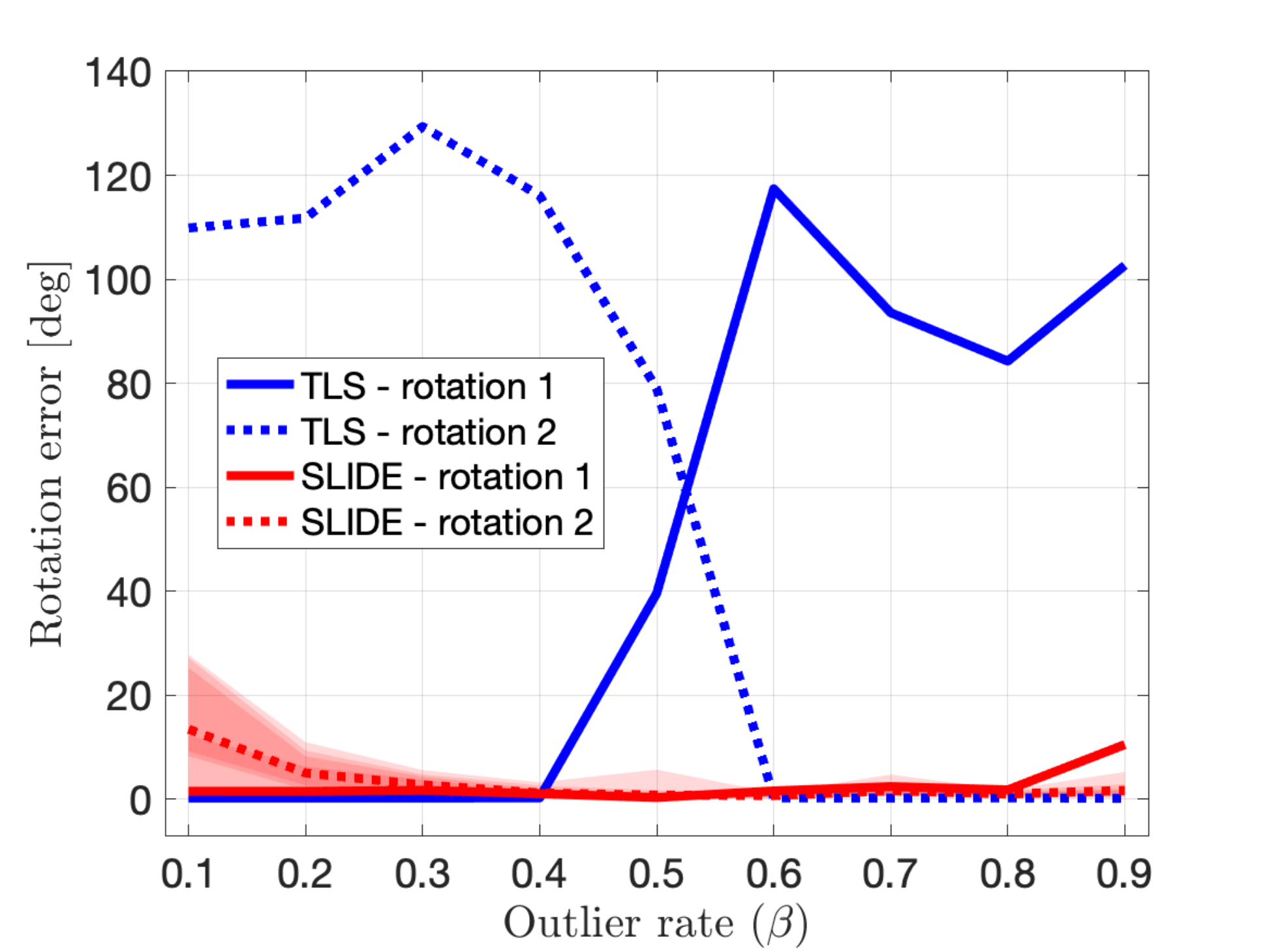}
\\
(b)
\end{minipage}
\vspace{-3mm}
\caption{Rotation search with mutually consistent outliers: 
rotation errors in degrees for the TLS approach from~\cite{Yang19iccv-quasar} and \SLIDES for increasing outlier rates $\outlierRate$. Errors are reported for both rotation hypotheses present in the data. Shaded areas are only shown for \SLIDES and for the second rotation (the shaded area for the first rotation can be observed in~\cref{fig:stats_LDR_1object}(a)) and correspond to the 25\mth, 50\mth, 75\mth, and 90\mth percentiles.
Results are averaged over 10 runs.  \label{fig:stats_LDR_2objects_bothErrors}}
\end{figure}

For the same set of experiments with mutually consistent outliers, \cref{fig:stats_LDR_2objects_bothErrors}(a) provides error statistics for both rotations ($\MRout$, $\MR\gt$) present in the sensor data. As expected, TLS, depending on the outlier rate, retrieves either one or the other rotation (\ie it only retrieves the dominant hypothesis in the data). 
Indeed,~\eqref{eq:TLS} ---similarly to~\eqref{eq:MC}--- looks for the estimate supported by the ``largest'' set of measurements, hence it is not able to find more complex patterns in the measurements. 
On the other hand, \SLIDES's behavior is much more intriguing. 
Let us start by considering the high-outlier regime ($\outlierRate>0.5$), which is easier to parse.
It is clear that when $\outlierRate>0.5$, \SLIDES is able to report both rotations producing the measurements, as long as the number of inliers for each is sufficiently large (similarly to what we observed in~\cref{fig:stats_LDR_1object,fig:stats_LDR_2objects}). 
 This is quite relevant for practical applications: for instance, in rotation search, the presence of different rotations generating the data may be due to the 3D vectors experiencing different motions over time (\eg a subset belongs to a static portion of the environment, while another subset belongs to a moving object), hence being able to retrieve both might be useful to simultaneously infer motion of multiple objects with respect to the sensor. 
In the low-outlier case $\outlierRate<0.5$, it might seem that \SLIDES is not able to recover the second rotation; however, this is only a byproduct of our experimental setup: 
for each $\outlierRate$, the polynomial optimization~\eqref{eq:LDR} that \SLIDES solves only searchers 
 for estimates ``supported'' by at least $\inlierRate = 1-\outlierRate$ fraction of the measurements.
 Therefore, for $\outlierRate=0.1$, we are asking \SLIDES to report any estimate that is supported by at least $90\%$ of the measurements, hence \SLIDES correctly reports a single estimate ($\MR\gt$), since the other estimate ($\MRout$) is only supported by $10\%$ of the measurements (\ie the outliers).
 If we instead pass the minimum $\min(\outlierRate,1-\outlierRate)$ between the inlier and the outlier rate as the fraction of the measurement set \SLIDES has to look for, we obtain the behavior in~\cref{fig:stats_LDR_2objects_bothErrors}(b), where \SLIDES is able to recover both rotations as long as they are supported by enough measurements.

% \Cref{fig:stats_LDR_2objects_bothErrors}(b) shows that \SLIDES is empirically able to report all the estimates supported by at least $\an$ inliers, as long as $\an$ is sufficiently large. 
\Cref{fig:stats_LDR_2objects_bothErrors_qualitative} provides qualitative evidence to further support the observation that
 \SLIDES is empirically able to report all the estimates supported by at least $\an$ inliers, as long as $\an$ is sufficiently large. \Cref{fig:stats_LDR_2objects_bothErrors_qualitative}(a) shows an example with mutually consistent outliers and $\outlierRate=0.5$: the figure shows the rotations $\MR\gt$ and $\MRout$ generating the data as 
coordinate frames with thick and short axes, and the estimates in the list returned by \SLIDES as coordinate frames with thin and long axes. The figure shows that all the estimates returned by \SLIDES  cluster around the  ground-truth rotations generating the data. 
\Cref{fig:stats_LDR_2objects_bothErrors_qualitative}(b) shows a more extreme case where the data is generated by 5 different rotations, each one producing $20\%$ of the measurements. The figure shows the 5 ground-truth rotations with thick and short axes, and the estimates from \SLIDES as thin and long axes. Also in this case, \SLIDES's estimates cluster around the ground-truth rotations, and indeed \SLIDES is able to recover all rotations within a maximum error of $2.6^\circ$. 
% \LC{qualitative result with 2 objects}
% \LC{qualitative result with 5 objects}: 
% LDR: 2.6[deg]
% interestingly also in the latter case TLS is tight (relax gap < 1-7) but recovers only one of the objects (the most likely) with an error of 0.0932 deg.
These results also seem to suggest other algorithmic variants of~\cref{algo:ldr}, where one, rather than sampling a smaller list, would get a list of size $\nrMeasurements$ and then reduce the size of the list by clustering it into fewer hypotheses. We discuss this and other extensions in the following section. 
% possibly a clustering phase after getting the hypotheses can further improve the results. 

As we mentioned in~\cref{sec:experiments-setup}, our implementation of \SLIDES is in Matlab and uses~\mosek~\cite{mosek} as an SDP solver. The average runtime of \SLIDES in problems with $\nrMeasurements = 50$ is~3 minutes.

\begin{figure}[h!]
% \begin{minipage}[b]{0.33\linewidth}
% \centering
% \includegraphics[width=1.2\columnwidth, trim= 0mm 0mm 0mm 0mm, clip]{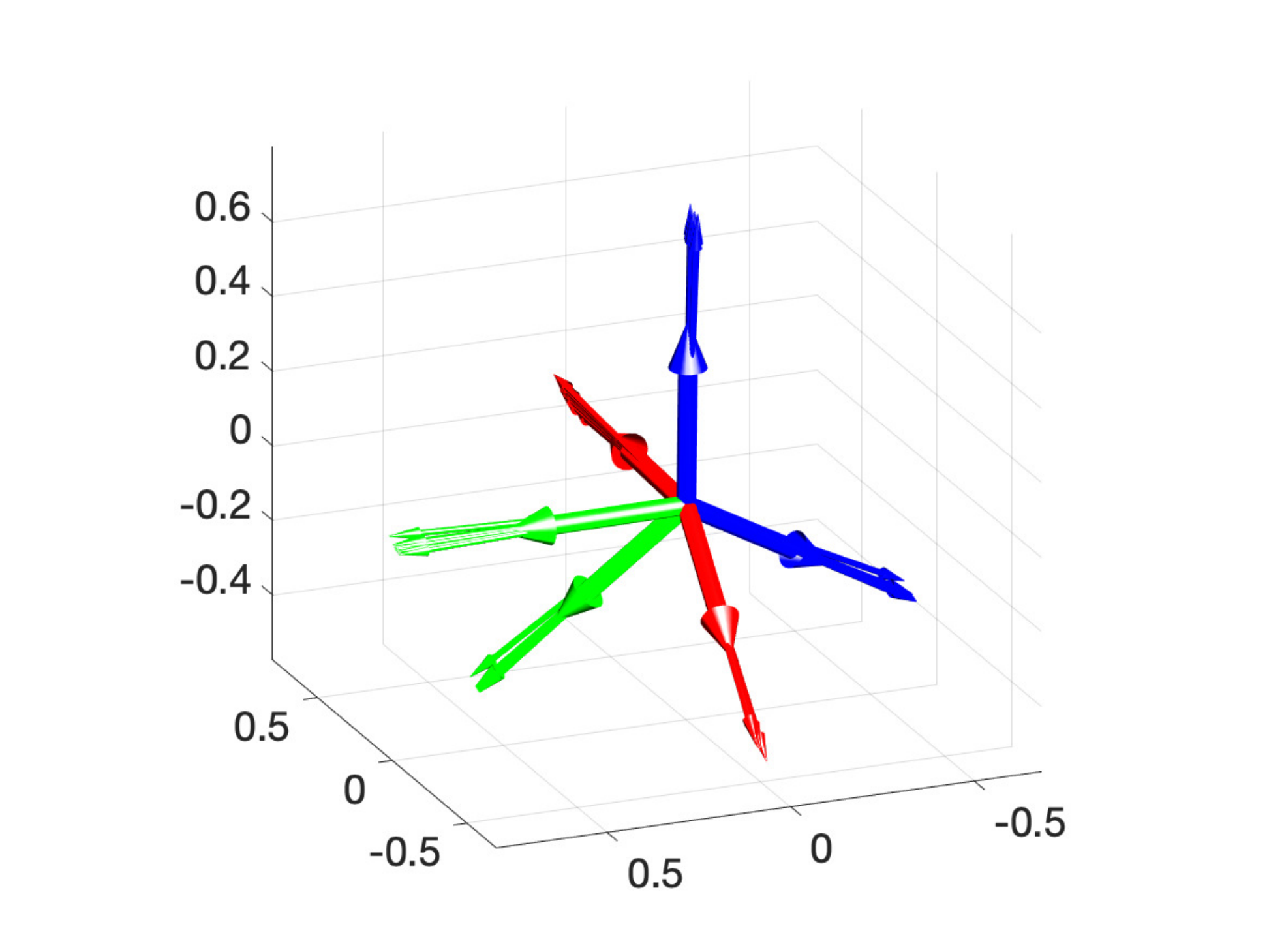}
% \vspace{-3mm}\\
% (a)
% \end{minipage}
\hspace{-0.5cm}
\begin{minipage}[b]{0.5\linewidth} 
\centering
\includegraphics[width=1.2\columnwidth, trim= 0mm 0mm 0mm 0mm, clip]{rotsLDR}
\vspace{-8mm}\\
(a)
\end{minipage}\vspace{-5mm}
% \hspace{0.5cm}
\begin{minipage}[b]{0.5\linewidth} 
\centering
\includegraphics[width=1.2\columnwidth, trim= 0mm 0mm 0mm 0mm, clip]{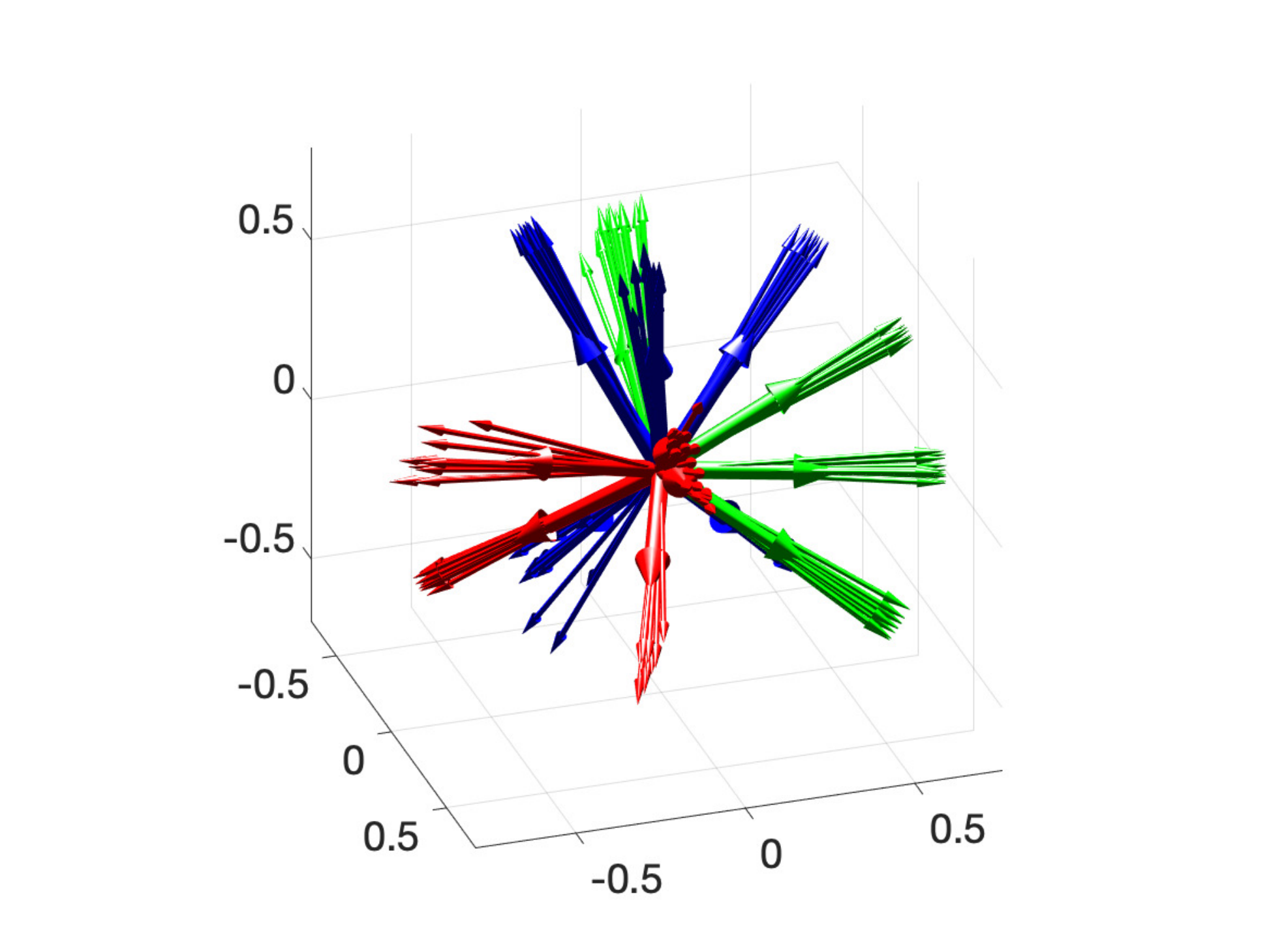}
%% TLS
% +++ TLS obj 2: rot err =0.0932 [deg], norm error=0.0023 +++
% +++ TLS obj 2: rot err =141 [deg], norm error=2.67 +++
% +++ TLS obj 2: rot err =60.3 [deg], norm error=1.42 +++
% +++ TLS obj 2: rot err =47.7 [deg], norm error=1.14 +++
% +++ TLS obj 2: rot err =175 [deg], norm error=2.83 +++
%% LDR
% object nr: 1, R_err_j: 0.88[deg].
% object nr: 2, R_err_j: 2.6[deg].
% object nr: 3, R_err_j: 1.9[deg].
% object nr: 4, R_err_j: 2.2[deg].
% object nr: 5, R_err_j: 0.21[deg].
\vspace{-8mm}\\
(b)
\end{minipage}\vspace{2mm}
\caption{Ground-truth rotations generating the data vs. estimates from \SLIDES.
The ground-truth rotations are visualized as coordinate frames with thick and short x (red), y (green), and z (blue) orthogonal axes, and the estimates in the list returned by \SLIDES as coordinate frames with thin and long axes. 
(a) Rotation search with 2 rotations, each generating $50\%$ of the measurements.
% (\ie mutually consistent outliers and $\outlierRate=0.5$).
(b) Rotation search with 5 rotations, each generating $20\%$ of the measurements.   
 \label{fig:stats_LDR_2objects_bothErrors_qualitative}}
\end{figure}

%!TEX root = main.tex

\section{Extensions and Open Problems}
\label{sec:openProblems}

\myParagraph{Extensions to other geometric perception problems} 
In this \paper we focused on problems where inliers 
can be expressed via a linear model with additive noise: $\vy_i = \MA_i\tran \vxx\gt + \vepsilon$.
However, in many geometric perception problems, the measurements do not belong to a vector space and 
the noise is no longer additive. In the following, we review three examples of such problems, and 
 observe that ---in the outlier-free case--- they can still be cast as:
 % \vspace{-8mm}
%
\begin{align}
\label{eq:LS-domain}
\textstyle \min_{\vxx \in \Domain} \; \normTwo{\vy_i - \MA_i\tran \vxx}^2, 
%\qquad\subject \; \vxx \in \Domain
\end{align}
hence still being amenable for estimators such as~\eqref{eq:LTS},~\eqref{eq:MC}, and~\eqref{eq:TLS} 
with $f_i(\vxx) = \MA_i\tran \vxx$.

%%%%%%%%%%%%%%%%%%%%%%%%%%%%%%%%%%%%%%%%%%%%%%%%%%%%%%%%%%%%%%%%%%%%%%%%%%%%%%%%%%%%%%%%%%%%%%%
\begin{example}[Single Rotation Averaging]
\label{ex:singleRotationAveraging}
Estimate a rotation $\MR \in \SOthree$ given rotation measurements $\MR_i$, $i=1,\ldots,\nrMeasurements$. 
The measurement model for the (inlier) measurements is given by: 
\begin{align}
\textstyle \MR_i = \MR \cdot \MR_\epsilon, \qquad i \in [\nrMeasurements],
\end{align}
where $\MR_\epsilon$ is a random rotation describing the measurement noise.
When the rotation noise $\MR_\epsilon$ follows an isotropic Langevin distribution with mode
$\eye_3$ and concentration parameter $\kappa$ (inconsequential in this example), 
a maximum likelihood estimator for the outlier-free single rotation averaging problem 
is given by the following optimization problem (where $\normFrob{\cdot}$ denotes the Frobenius norm):
\begin{align}
 \min_{\MR \in \SOthree} \sumOverMeas \kappa \normFrob{\MR - \MR_i}^2,
\end{align}
which has the same form of~\eqref{eq:LS-domain} after vectorizing the matrix into a $9$-vector.
Rotation averaging finds application in camera
 calibration, motion capture, spacecraft
attitude determination, and crystallography, see~\cite{Lee20arxiv-robustRotationAveraging,Hartley13ijcv} and the references therein.
\end{example}

%%%%%%%%%%%%%%%%%%%%%%%%%%%%%%%%%%%%%%%%%%%%%%%%%%%%%%%%%%%%%%%%%%%%%%%%%%%%%%%%%%%%%%%%%%%%%%%
\begin{example}[Multiple Rotation Averaging] 
\label{ex:multirotation}
Estimate a set of rotations $\MR_k \in \SOthree$, $k=1,\ldots,N$, from relative rotation measurements $\bar{\MR}_{ij}$ between (a sufficiently large set of) pairs of rotations. The generative model for the (inlier) measurements is: 
\begin{align}
\label{eq:multipleRotationAveraging}
\textstyle \bar{\MR}_{ij} = \MR_i\tran \MR_j \MR_\epsilon, \qquad (i,j) \in \calE,
\quad 
\end{align}
where $\calE$ is the set of pairs $(i,j)$ such that a measurement $\bar{\MR}_{ij}$ is available.
The problem can be visualized as a graph, where each node is associated a to-be-estimated rotation, while edges correspond to pairwise rotation measurements.
When the rotation noise $\MR_\epsilon$ follows an isotropic Langevin distribution with mode
$\eye_3$ and concentration parameter $\kappa$ (inconsequential also in this example), a maximum likelihood estimator for the outlier-free multiple rotation averaging problem 
is given by the following optimization problem~\cite{Dellaert20eccv-shonan}:
\begin{align}
 \min_{\MR_k \in \SOthree, k=1,\ldots,N}  \sum_{(i,j)\in\calE} \kappa \normFrob{\MR_j - \MR_i \bar{\MR}_{ij}}^2,
\end{align}
which has the same form of~\eqref{eq:LS-domain} after vectorization.
Multiple rotation averaging arises in Structure from Motion and camera calibration~\cite{Hartley13ijcv} among other problems, and can be used to compute an initial guess for Simultaneous Localization and Mapping methods~\cite{Carlone15icra-initPGO3D}. It has also been studied in conjunctions with Shor's relaxation in
\cite{Eriksson18cvpr-strongDuality,Dellaert20eccv-shonan}.
\end{example}
% Let $\calG = (\calV,\calE)$ be an undirected graph with vertex set $\calV = [n]$ and edge set $\calE$. Each vertex $i \in \calV$ is associated with an unknown rotation $\MR_i \in \mySOd$ (typically $\dimrot=2$ or $\dimrot=3$), while each edge $(i,j) \in \calE$ gives a relative rotation measurement $\tldMR_{ij} \in \mySOd$ between the unknown rotations at vertex $i$ and $j$.
% Multiple rotation averaging estimates the set of absolute rotations on the vertices $\vxx = \{\MR_i\}_{i \in \calV} \in \mySOd^{n}$ from relative measurements over $\calE$. The residual function is chosen as the chordal distance between $\MR_i \tldMR_{ij}$ and $\MR_j$ for $(i,j) \in \calE$: $r(\vxx,\vz_{ij}) = \Vert \MR_i \tldMR_{ij} - \MR_j \Vert$. 
% \maybeOmit{Optionally, if a set $\calR$ of relative measurements, 
% % corresponding  to edges $\calR\subseteq \calE$,
%  is known to be free of outliers (\eg odometry measurements in robot navigation), then a regularization $\regularizer = \sum_{(i,j) \in \calR} \Vert \MR_i \tldMR_{ij} - \MR_j \Vert^2$ is added to \eqref{eq:robust}.} 
%  Fig. \ref{fig:applications}(b) plots an instance of 2D multiple rotation averaging with $9$ (unknown) absolute rotations and $11$ (measured) relative measurements, two of which are outliers (shown in red).

%%%%%%%%%%%%%%%%%%%%%%%%%%%%%%%%%%%%%%%%%%%%%%%%%%%%%%%%%%%%%%%%%%%%%%%%%%%%%%%%%%%%%%%%%%%%%%%
\begin{example}[Pose Graph Optimization] 
Estimate a set of poses $(\vt_k,\MR_k)$ with $\vt_k \in \Real{3}$ and $\MR_k \in \SOthree$, $k=1,\ldots,N$, from relative pose measurements $(\bar{\vt}_{ij}, \bar{\MR}_{ij})$ between (a sufficiently large set of) pairs of poses. The generative model for the (inlier) measurements is: 
\begin{align}
\textstyle \bar{\MR}_{ij} = \MR_i\tran \MR_j \MR_\epsilon,  
\qquad \bar{\vt}_{ij} = \MR_i\tran (\vt_j - \vt_i) + \vt_\epsilon,  \qquad (i,j) \in \calE,
\quad 
\end{align}
where $\calE$ is the set of pairs $(i,j)$ such that a measurement $(\bar{\vt}_{ij}, \bar{\MR}_{ij})$ is available.
The problem can again be visualized as a graph, the \emph{pose graph}, where each node is associated a to-be-estimated pose, while edges correspond to pairwise pose measurements.
When the rotation noise $\MR_\epsilon$ follows an isotropic Langevin distribution with mode
$\eye_3$ and concentration parameter $\kappa$, and the translation error $\vt_\epsilon$ is a zero-mean Gaussian with 
covariance $\frac{1}{\tau}\eye_3$, a maximum likelihood estimator for the outlier-free pose graph optimization problem is given by the following optimization problem~\cite{Carlone15icra-verification,Carlone15iros-duality3D,Rosen18ijrr-sesync}:
\begin{align}
\label{eq:pgo}
 \min_{\vt_k \in \Real{3}, \MR_k \in \SOthree, k=1,\ldots,N} 
 \sum_{(i,j)\in\calE} \tau \normTwo{ \vt_j - \vt_i - \MR_i \bar{\vt}_{ij} }^2
 +
 \sum_{(i,j)\in\calE} \kappa \normFrob{\MR_j - \MR_i \bar{\MR}_{ij}}^2,
\end{align}
which, observing the quadratic nature of the cost in~\eqref{eq:pgo}, can be recast as in eq.~\eqref{eq:LS-domain}.
Pose graph optimization finds application in Simultaneous Localization and Mapping among other fields~\cite{Cadena16tro-SLAMsurvey} and has been investigated in conjunction with Shor's relaxation 
in~\cite{Carlone15icra-verification,Carlone15iros-duality3D,Rosen18ijrr-sesync,Briales17ral,Carlone16tro-duality2D,Fan19iros-2dpgo}.  
\end{example}

All the examples above still reduce to linear regression over a basic semi-algebraic set, hence 
we believe it is possible to extend the results presented in this \paper to these problems. 
Indeed, moment relaxations of a~\eqref{eq:TLS} formulation of
outlier-robust pose graph optimization have been proposed in~\cite{Lajoie19ral-DCGM}. 
At the same time, multiple rotation averaging and pose graph optimization 
pose further challenges, due to the very high-dimensional nature of the problem (for which 
even a sparse moment relaxation of order $2$ is out of reach for current SDP solvers and for realistic problem sizes~\cite{Lajoie19ral-DCGM}), 
and might benefit from different assumptions on the measurements that better
leverage the graph-theoretic nature of the problem (\eg problems~\eqref{eq:multipleRotationAveraging}  and~\eqref{eq:pgo} admit a unique solution 
only if the underlying graph is connected).

\myParagraph{Other extensions and open problems}
This \paper extends results from outlier-robust statistics to problems with vector-valued measurements, unknown outlier rates, and variables belonging to a basic semi-algebraic set~$\Domain$. 
However, all the algorithms presented in this \paper may return estimates outside $\Domain$.
While in principle, for the sets arising in geometric perception,
it is typically easy to project the outputs of these algorithms onto $\Domain$,
% (\ie there are well-known and often close) 
%(\ie \emph{rounding} the estimates), 
the corresponding estimation contracts currently do not account for such a rounding.
Leveraging Lemma 3 in~\cite{Doherty22arxiv-spectralInitialization}, we could account for the rounding by adding a constant factor of 2 in front of the error bounds in~\cref{thm:lowOut-apriori-LTS,thm:lowOut-apriori-MC,thm:lowOut-apriori-TLS,thm:highOut-apriori}.\footnote{We thank Kevin Doherty for pointing out the result in~\cite{Doherty22arxiv-spectralInitialization}.} 
However, it would be desirable to take advantage of the domain constraint to \emph{tighten} the error bounds in the proofs, rather than treating it as an afterthought (in this \paper, we mostly make sure that the domain constraint does not break the proofs).
% Clearly, the effect of the rounding is straightforward to measure a posteriori, but 
% bounding the impact of the rounding a priori remains an open question. 
% Moreover, in this \paper we mostly disregard the constraint $\vxx\in\Domain$
% (except from making sure such requirement does not break the proofs), but it would be 
% desirable to take advantage of this constraint to tighten the error bounds.

A second limitation is that the estimation contracts presented in this \paper 
 require the corresponding algorithms to solve high-order moment relaxations.
 For instance, we have already observed that~\cref{thm:lowOut-apriori-LTS-objective} requires 
 a relaxation of order $\relaxOrder \geq \relaxLevel \geq 4$ (under $k$-certifiable hypercontractivity), 
 which would be impractical to solve with current SDP solvers; 
 %(ideally, we would like to solve moment relaxations of order at most $2$); 
 in this case the bound $\relaxOrder \geq \relaxLevel \geq 4$ on the relaxation order is imposed by the degree of the objective in~\cref{algo:lts1} (which imposes $\relaxOrder \geq \relaxLevel$), and by the proof (which requires $\relaxLevel \geq 4$). 
 This requirement is further reinforced by the fact that typical distributions (\eg Gaussians) have been shown to be $4$-certifiably hypercontractive~\cite{Klivans18arxiv-robustRegression}, hence again requiring $\relaxLevel \geq 4$ for the performance guarantees to hold. 
The same issue arises for the other estimation contracts, due to the requirement that the relaxation order must satisfy $\relaxOrder \geq \relaxLevel/2$ for $\relaxLevel$-certifiably anti-concentrated inliers and that 
 % the anti-concentration requirement  imposes 
 $\relaxLevel \geq 6$, due to the degree of the polynomials in~\eqref{eq:certAntiConSet2}.
Therefore, it would be desirable to 
%better 
% characterize the matrices arising in $2$-certifiably 
% anti-concentrated sets (these would lead to relaxations amenable to current SDP solvers)
% % \footnote{ 
% % For instance, Karmalkar\setal~\cite{Karmalkar19neurips-ListDecodableRegression} show 
% %    that for a parameter $0 < \eta < 0.1$ a standard Gaussian is $\relaxLevel$-certifiably $(2, 2\eta,1)$-anti-concentrated for $\relaxLevel = O(\frac{1}{\eta^2} \log^2(1/\eta))$.} 
% or developing 
develop guarantees for more practical (and possibly sparse) order-2 relaxations,
%less restrictive conditions (at least for the low-outlier case),
with the goal of fully
explaining the empirical performance observed in~\cite{Yang22pami-certifiablePerception}.
Third, it would be interesting to explore the performance of list-decodable estimation in other perception problems.
In particular, it would be useful to see if the impressive performance observed in rotation search problems (\cref{sec:experiments-ldr}) can be also replicated  in other geometric perception problems.
%\SLIDE performs in other 
Along these  lines, it would be desirable to develop more sophisticated  algorithmic variants of \SLIDES based on the framework in~\cite{Karmalkar19neurips-ListDecodableRegression}. For instance, rather than sampling potential estimates from the pseudo-moment matrix, one could collect all estimates (as done in \SLIDES), cluster them into a smaller number of hypotheses, and return the cluster centers, hoping that the cluster-wise averaging can further reduce the estimation errors.
Related to the discussion above, it would be desirable to extend the performance guarantees in~\cref{thm:highOut-apriori} to encompass order-2 relaxations and capture the fact that projecting the estimates to the domain $\Domain$ leads to much better performance in practice.

Fourth, it would be interesting to characterize the behavior of adversarial outliers in geometric perception.
How does an adversarial attack on a geometric perception problem look like? 
While in the experiments we observed that ``mutually consistent'' outliers are more challenging to reject, 
a strong adversary model would also leverage the set of inliers to create even more difficult problem instances.
% Therefore, it would be of intell to develop adversarial approaches that can adv attack 
Along these lines, designing algorithms to produce adversarial outliers in perception problems might provide further insights on outlier-robust estimation algorithms, somewhat drawing inspiration from the large and parallel body of research on adversarial machine learning, see, \eg~\cite{Goodfellow14arxiv-adversarialExample}.\footnote{An empirical study on adversarial attacks on SLAM has been recently proposed in~\cite{Ikram22ral-adversarialSLAM}.} 

Finally, a broader issue is that many of the estimation problems considered in this \paper 
must be solved on a stringent runtime budget. For instance, point cloud registration problems (\cref{ex:pointCloudRegistration}) are solved at frame-rate (\eg $>20$ Hz) in many RGB-D SLAM applications~\cite{Cadena16tro-SLAMsurvey}, hence 
requiring the estimation algorithm to run in a fraction of a second.
 SDP solvers applied to the moment relaxations considered in this \paper (even at order $2$) are far from meeting this runtime constraints. 
 Therefore, it would be interesting to develop specialized solvers that take advantage of the problem structure; 
 for instance, the work~\cite{Yang21arxiv-stride} leverages the fact that the SDP is a relaxation of a polynomial optimization problem to speed up computation, while~\cite{Rosen18ijrr-sesync} achieves real-time performance by solving large SDPs using the Riemannian staircase method~\cite{Boumal16arxiv}. 
 Finally, it is important to further extend the reach of sparse versions of Lasserre's hierarchy of
  moment relaxations, which can reduce the size of the matrices in the relaxation by leveraging the problem structure, see~\cite{Weisser18mpc-SBSOS,Wang21siopt-chordaltssos,Wang20arXiv-cs-tssos,Wang21SIOPT-tssos}.
%!TEX root = main.tex

\section{Conclusions}
\label{sec:conclusions}

We studied outlier-robust estimation in the context of the geometric perception problems arising in robotics and computer vision. Many of these problems can be reformulated as linear estimation problems with variables belonging to a basic semi-algebraic set, and the goal is to retrieve a good estimate of the variables in the presence of outliers. 
We provided a unified view of converging work on outlier-robust estimation across robust statistics, robotics, and computer vision and discussed technical tools underlying modern estimation approaches, including moment relaxations and sum-of-squares proofs.
Then, we reviewed existing algorithms and presented \emph{estimation contracts}, which establish conditions on the input measurements under which modern estimation algorithms  are guaranteed to recover an estimate close to the ground truth in the presence of outliers.  
Towards this goal, we adapted and extended recent results on outlier-robust \emph{linear regression}  (applicable to the low-outlier case with $\ll 50\%$ outliers) and 
\emph{list-decodable regression} (applicable to the high-outlier case with $\gg 50\%$ outliers) 
to the setup commonly found in robotics and vision, where 
(i)~variables (\eg rotations, poses) belong to non-convex sets, 
(ii)~measurements are vector-valued, and % (as opposed to scalars), and 
(iii)~the number of outliers is not known a priori.
Besides the technical results,
we hope this \paper can provide a unifying view of parallel research lines on outlier-robust estimation across fields.
Moreover, we hope that practitioners will benefit from our layman introduction to moment relaxations and sum-of-squares proofs and will use these tools to attack other outstanding problems in robotics and vision.
Finally, we hope that researchers in robust statistics will be intrigued by the formulations and 
the empirical performance observed in robotics and vision problems, and will contribute to 
bridging the current gap between theoretical results and practical algorithms.

%!TEX root = main.tex

\section*{Acknowledgments}

We thank Pablo Parrilo for pointing out relevant work in robust statistics and for 
suggesting potential connections that led to the development of this \paper.
We also thank Heng Yang for useful discussion about the relation between 
Putinar's and Schmudgen's \positiv and for documenting and sharing relevant code about rotation search and image stitching. 
Finally, we thank Sushrut Karmalkar for the useful discussion on 
certifiable anti-concentration and list-decodable regression.
% \LC{thank Hank panorama stitching help and code}

% assumes hyperref
\phantomsection
\addcontentsline{toc}{section}{References}
% Bibliography
\bibliographystyle{IEEEtran}
\bibliography{../../references/myRefs.bib,../../references/refs.bib}

\renewcommand{\theequation}{A\arabic{equation}}
\renewcommand{\thetheorem}{A\arabic{theorem}}
\renewcommand{\thefigure}{A\arabic{figure}}
\renewcommand{\thetable}{A\arabic{table}}
% use algocf instead of algorithm b/c we are using algorithm2e
\renewcommand{\thealgocf}{A\arabic{algocf}}

\appendix

% \appendix
\setcounter{equation}{0}
\setcounter{section}{0}
\setcounter{theorem}{0}
\setcounter{figure}{0}
% use algocf instead of algorithm b/c we are using
% algorithm2e
\setcounter{algocf}{0}

\clearpage
% \appendices 
%!TEX root = main.tex

\section{An Algorithmic View of Lasserre's Hierarchy of Moment Relaxations}
\label{app:momentRelaxation}

Here we provide an algorithmic (and somewhat unorthodox) view of Lasserre's hierarchy of moment relaxations~\cite{Lasserre01siopt-LasserreHierarchy}; we refer the reader to~\cite{Lasserre18icm-momentRelaxation,Lasserre10book-momentsOpt} for a more standard introduction. 

Lasserre's hierarchy provides a systematic way to relax a polynomial optimization problem~\eqref{eq:pop} into 
a semidefinite (convex) program. 
We start by restating~\eqref{eq:pop}:
\begin{equation}
\pstar  \triangleq \min_{\vxx \in \Real{\dimx}} \cbrace{p(\vxx) \ \middle\vert\ \substack{ \displaystyle h_i(\vxx) = 0, i=1,\dots,l_h \\ \displaystyle g_j(\vxx) \geq 0, j = 1,\dots,l_g } }, \tag{POP}
\end{equation} 
where $p(\vxx), h_i(\vxx), g_j(\vxx)$ are polynomials in the variable $\vxx \in \Real{\dimx}$.

The key idea behind Lasserre's hierarchy of moment relaxations
is to (i) rewrite the polynomial optimization problem~\eqref{eq:pop} using the moment matrix $\MX_{2\relaxOrder}$\footnote{Recall that the moment matrix is defined as $\MX_{2\relaxOrder} \triangleq [\vxx]_\relaxOrder [\vxx]_{\relaxOrder}\tran$, where $[\vxx]_\relaxOrder$ is the vector of monomials of degree up to $\relaxOrder$. For instance, for $\vxx = [x_1 \vcat x_2]$ and $\relaxOrder=2$, the matrix $\MX_{2\relaxOrder}$ takes the form in eq.~\eqref{eq:momentMatrix}.}, 
(ii) relax the (non-convex) rank-1 constraint on $\MX_{2\relaxOrder}$, and
(iii) add redundant constraints that are trivially satisfied in~\eqref{eq:pop} but might still improve the quality of the relaxation; 
as shown below, this leads to a semidefinite program.

\myParagraph{(i) Rewriting~\eqref{eq:pop} using $\MX_{2\relaxOrder}$} 
Recall that any polynomial of degree up to $2\relaxOrder$ can be written as a linear function of the moment matrix $\MX_{2\relaxOrder}$ (\cf~\cref{sec:pre-pop}). 
Therefore, we pick a positive integer $\relaxOrder$ (the \emph{order} of the relaxation) such that $2\relaxOrder \geq \max \{\deg{p}\!,\deg{h_1}\!, \ldots, \deg{h_{l_h}}\!,
\deg{g_1}\!, \ldots, \deg{g_{l_g}}\!\}$, such that %_{i \in [l_h], j\in [l_g]}$ 
 we can express both objective function and constraints as a linear function of $\MX_{2\relaxOrder}$. 
With this choice of $\relaxOrder$, we can rewrite the objective and the equality constraints in~\eqref{eq:pop} as: 
\beal\label{eq:objective}
\!\!\!\!\!\!\text{\grayout{objective}}: & \inprod{\MC_1}{ \MX_{2\relaxOrder} } \\
\eeal
\beal \label{eq:eqConstraints1}
\!\!\!\text{\grayout{equality constraints}}: & \!\!\! \inprod{\MA_{\meq,j}}{ \MX_{2\relaxOrder} } = 0, \; j=1,\ldots,l_h, \\
\eeal
for suitable matrices $\MC_1$ and $\MA_{\meq,j}$.
% Note that using $\MX_{2\relaxOrder}$ is already a relaxation since we are no longer enforcing the entries of $\MX_{2\relaxOrder}$
% to be monomials 
% (\eg we do not enforce  
% the entry $x_1 x_2$ in~\eqref{eq:momentMatrix} to be the product of the entries $x_1$ and $x_2$, which would be a non-convex constraint).

 \emph{(ii) Relaxing the (non-convex) rank-$1$ constraint on $\MX_{2\relaxOrder}$}. 
 At the previous point we noticed we can rewrite objective and constraints in~\eqref{eq:pop} as linear (hence convex) 
 functions of $\MX_{2\relaxOrder}$. However, $\MX_{2\relaxOrder}$ still belongs to the set of positive-semidefinite rank-1 matrices (since it is defined as $[\vxx]_\relaxOrder [\vxx]_{\relaxOrder}\tran$, where $[\vxx]_\relaxOrder$ is a vector of monomials), which is a non-convex set
 due to the rank constraint. Therefore, we simply relax the rank constraint and only enforce:
\beal \label{eq:eqMomentIsPSD}
\!\!\!\text{\grayout{pseudo-moment matrix}}: & \MX_{2\relaxOrder} \succeq 0. \\
\eeal

 \emph{(iii) Adding redundant constraints}. Since we have relaxed~\eqref{eq:pop} by re-parametrizing it using $\MX_{2\relaxOrder}$ 
 and dropping the rank constraint, the final step to obtain Lasserre's relaxation consists in adding extra constraints to make the relaxation tighter. 
 First of all, we observe that there are multiple repeated entries in the moment matrix (\eg in~\eqref{eq:momentMatrix}, the entry $x_1 x_2$ 
 appears 4 times in the matrix). Therefore, we can enforce these entries to be the same. In general, this leads to 
 $\mmom = \trinum(\binomial{d}{\relaxOrder}) - \binomial{d}{2\relaxOrder} + 1$ linear constraints,
where $\binomial{d}{2\relaxOrder} \triangleq \nchoosek{\dimx+2\relaxOrder}{2\relaxOrder}$ (the size of the monomial basis of degree up to $2\relaxOrder$, \ie $[\vxx]_{2\relaxOrder}$) and
$\trinum(n) \triangleq \frac{n(n+1)}{2}$ is the dimension of $\sym{n}$. These constraints are typically called \emph{moment constraints}:
\beal\label{eq:momentConstraints}
\text{\grayout{moment constraints}}: & \revise{\inprod{\MA_{\mathmom,0}}{ \MX_{2\relaxOrder} } = 1}, \\
& \inprod{\MA_{\mathmom,j}}{ \MX_{2\relaxOrder} } = 0, \\
&  j = 1, \ldots, \trinum(\binomial{d}{\relaxOrder}) - \binomial{d}{2\relaxOrder},
\eeal
\revise{where $\MA_{\mathmom,0}$ is all-zero except $[\MA_{\mathmom,0}]_{11} =1$, and it is used to define the constraint $[\MX_{2\relaxOrder}]_{11} = 1$, following from the 
definition of the moment matrix (see eq.~\eqref{eq:momentMatrix}).}

Second, we can also add \emph{redundant} equality constraints. Simply put, if $h_i = 0$, then also $h_i \cdot x_1 = 0$, $h_i \cdot x_2 = 0$, and so on, for any monomial we multiply by $h_i$. Since via $\MX_{2\relaxOrder}$ we can represent any polynomial of degree up to $2\relaxOrder$, we can write as linear constraints any polynomial equality in the form $h_i \cdot [\vxx]_{2\relaxOrder - \deg{h_i}} = \zero$ (the degree of the monomials is chosen such that the product does not exceed degree $2\relaxOrder$). These new equalities can again be written linearly as:
\beal\label{eq:redundantEqualityConstraints}  
\hspace{-3mm}\text{\grayout{(redundant) equality constraints}}: \inprod{\MA_{\mathreq,ij}}{ \MX_{2\relaxOrder} } = 0, \\
\quad\quad i = 1, \ldots, l_h, \ \ 
j = 1, \ldots, \binomial{d}{2\relaxOrder - \deg{h_i}},\!\!\!\!\!\!\!\!\!
\eeal
for suitable $\MA_{\mathreq,ij}$.
Since the first entry of $[\vxx]_{2\relaxOrder - \deg{h_i}}$ is always 1 (\ie the monomial of order zero),~eq.~\eqref{eq:redundantEqualityConstraints}  already includes the original 
equality constraints in~\eqref{eq:eqConstraints1}.

Finally, we observe that if $g_j \geq 0$, then for any positive semidefinite matrix $\MM$, it holds $g_j \cdot \MM \succeq 0$.
Since we can represent any polynomial of order up to $2\relaxOrder$ as a linear function of $\MX_{2\relaxOrder}$, 
we can add redundant constraints in the form $g_j \cdot \MX_{2(\relaxOrder - \ceil{\deg{g_j}/2})} \succeq 0$ 
(by construction $g_j \cdot \MX_{2(\relaxOrder - \ceil{\deg{g_j}/2})}$ only contains polynomials of degree up to $2\relaxOrder$).
To phrase the resulting 
relaxation in the standard form~\eqref{eq:primalSDP}, it is common to add extra matrix variables $\MX_{g_j} = g_j \cdot \MX_{2(\relaxOrder - \ceil{\deg{g_j}/2})}$ for $j=1,\ldots,l_g$ 
(the \emph{localizing matrices} \cite[\S 3.2.1]{Lasserre10book-momentsOpt})
and then force these matrices to be a linear function of  $\MX_{2\relaxOrder}$:
\beal\label{eq:locMatrices}
\text{\grayout{localizing matrices}}: & \MX_{g_j} \succeq 0, \;\; j=1,\ldots,l_g,
\eeal
\beal \label{eq:localizingConstraints}  
\hspace{-2mm} \text{\grayout{{localizing} constraints}}: \inprod{\MA_{\mathloc,jkh}}{ \MX_{2\relaxOrder} } = [\MX_{g_j}]_{hk}, \\
\quad \quad j = 1, \ldots, l_g,\ \  
1 \leq h\leq k \leq \binomial{d}{\relaxOrder - \ceil{\deg{g_j}/2}},
\eeal
% for suitable matrices $\MA_{loc,jkh}$, 
where the linear constraints (for some matrix $\MA_{\mathloc,jkh}$) enforce each entry of $\MX_{g_j}$ to be a linear combination of entries of the matrix $\MX_{2\relaxOrder}$.

Following steps (i)-(iii) above, it is straightforward to obtain the following semidefinite program:
\begin{equation}\label{eq:lasserre}
\hspace{-4mm} \fstar_{2\relaxOrder} =\!\! \displaystyle\min_{\MX = (\MX_{2\relaxOrder} , \{\MX_{g_j}\}_{j \in [l_g]} )} \cbrace{\inprod{\MC_1}{\MX_{2\relaxOrder}} \mid \calA(\MX)\!=\!\vb,\MX\! \succeq\! 0}\!,\!\!\! %\tag{LAS}
\end{equation}
where the variable $\MX = (\MX_{2\relaxOrder} , \{\MX_{g_j}\}_{j \in [l_g]} )$ is a collection of positive-semidefinite matrices (\cf~\eqref{eq:eqMomentIsPSD} and~\eqref{eq:locMatrices}), 
the objective is the one given in~\eqref{eq:objective}, and the linear constraints $\calA(\MX)=\vb$
collect all the constraints in~\eqref{eq:momentConstraints},~\eqref{eq:redundantEqualityConstraints}, and~\eqref{eq:localizingConstraints}.
Problem \eqref{eq:lasserre} can be readily formulated as a multi-block SDP in the primal form~\eqref{eq:primalSDP}, which matches the data format used by common SDP solvers. 
The matrix $\MX_{2\relaxOrder}$ solving~\eqref{eq:lasserre} is typically referred to as the \emph{pseudo-moment matrix}.\footnote{The rationale behind this name will become apparent in~\cref{app:pseudo-distributions}.}
One can solve the relaxation for different choices of $\relaxOrder$, leading to a \emph{hierarchy} of convex relaxations.

While we presented Lasserre's hierarchy in a somewhat procedural way, the importance of the hierarchy lies in its stunning theoretical properties, that we review below.
% The following theorem collects key results of Lasserre's hierarchy.
\begin{theorem}[Lasserre's Hierarchy \cite{Lasserre01siopt-LasserreHierarchy,Lasserre10book-momentsOpt,Nie14mp-finiteConvergenceLassere}]
\label{thm:lasserre}
Let $-\infty < \pstar < \infty$ be the optimum of \eqref{eq:pop} and \revise{$\fstar_{2\relaxOrder}$ (resp. $\MXstar_{2\relaxOrder}$) be the optimum (resp. one optimizer) of \eqref{eq:lasserre},} and assume \eqref{eq:pop} is explicitly bounded 
(\ie it satisfies the Archimedeanness condition in \cite[Definition 3.137]{Blekherman12Book-sdpandConvexAlgebraicGeometry}), then 
\begin{enumerate}[label=(\roman*)]
\item \revise{(lower bound and convergence)} $\fstar_{2\relaxOrder}$ converges to $\pstar$ from below as $\relaxOrder \rightarrow \infty$, and convergence occurs at a finite $\relaxOrder$ under suitable technical conditions~\cite{Nie14mp-finiteConvergenceLassere};
\item \revise{(rank-one solutions)} if $\fstar_{2\relaxOrder} = \pstar$ at some finite $\relaxOrder$, then for every global minimizer $\vxxstar$ of~\eqref{eq:pop}, 
$\MXstar_{2\relaxOrder} \triangleq [\vxxstar]_{\relaxOrder} [\vxxstar]_{\relaxOrder}\tran$ is optimal for \eqref{eq:lasserre}, and every rank-one optimal solution $\MXstar_{2\relaxOrder}$ of \eqref{eq:lasserre} can be written as $[\vxxstar]_\relaxOrder [\vxxstar]_{\relaxOrder}\tran$ for some $\vxxstar$ that is optimal for \eqref{eq:pop};
\item \revise{(optimality certificate)} if $\rank{\MXstar_{2\relaxOrder}} = 1$ at some finite $\relaxOrder$, then $\fstar_{2\relaxOrder} = \pstar$.
\end{enumerate}
\end{theorem}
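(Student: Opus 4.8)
The plan is to establish the three parts in turn, leaning on the explicit construction of~\eqref{eq:lasserre} from~\eqref{eq:pop} detailed above. The starting point is the \emph{lower bound}: \eqref{eq:lasserre} is a bona fide relaxation because, given any feasible $\vxx$ for~\eqref{eq:pop}, the lifted matrices $\MX_{2\relaxOrder}=[\vxx]_\relaxOrder[\vxx]_\relaxOrder\tran$ and $\MX_{g_j}=g_j(\vxx)\,[\vxx]_{\relaxOrder-\ceil{\deg{g_j}/2}}[\vxx]_{\relaxOrder-\ceil{\deg{g_j}/2}}\tran$ satisfy every constraint of~\eqref{eq:lasserre} (the moment constraints because equal monomials receive equal values, the redundant equality constraints because $h_i(\vxx)=0$, the localizing constraints by construction, and both semidefiniteness requirements because $g_j(\vxx)\ge 0$) and attain objective $p(\vxx)$; minimizing over feasible $\vxx$ yields $\fstar_{2\relaxOrder}\le\pstar$ for all $\relaxOrder$. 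The harder direction, $\fstar_{2\relaxOrder}\uparrow\pstar$, is where I would invoke Putinar's \positiv: under the Archimedean hypothesis, for every $\varepsilon>0$ the polynomial $p-\pstar+\varepsilon$ is strictly positive on the feasible set, hence equals $\sigma_0+\sum_j\sigma_j g_j$ for sums of squares $\sigma_0,\sigma_j$; this is exactly a dual-feasible certificate for~\eqref{eq:lasserre} at any sufficiently large order, so weak duality forces $\fstar_{2\relaxOrder}\ge\pstar-\varepsilon$. Finite convergence under the extra technical conditions is the content of the flat-truncation results of Nie~\cite{Nie14mp-finiteConvergenceLassere}, which I would cite rather than reprove.

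For part (ii), assume $\fstar_{2\relaxOrder}=\pstar$ at some finite $\relaxOrder$. If $\vxxstar$ is a global minimizer of~\eqref{eq:pop}, then the construction above shows $\MXstar_{2\relaxOrder}\triangleq[\vxxstar]_\relaxOrder[\vxxstar]_\relaxOrder\tran$ is feasible for~\eqref{eq:lasserre} with objective $p(\vxxstar)=\pstar=\fstar_{2\relaxOrder}$, hence optimal. Conversely, given a rank-one optimizer $\MXstar_{2\relaxOrder}=\vu\vu\tran$, the constraint $[\MXstar_{2\relaxOrder}]_{11}=1$ pins down $u_1=1$ (fixing the overall sign of $\vu$), and the moment constraints, which equate all positions of the matrix holding the same monomial, force $\vu$ to be a genuine vector of monomial evaluations, i.e. $\vu=[\vxxstar]_\relaxOrder$ with $\vxxstar\in\Real{\dimx}$ read off from the degree-one entries. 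This $\vxxstar$ is feasible for~\eqref{eq:pop}: evaluating the redundant equality constraints at the rank-one matrix gives $h_i(\vxxstar)=0$, while each localizing block $\MX_{g_j}$ is forced to equal $g_j(\vxxstar)$ times a rank-one positive semidefinite matrix, so $\MX_{g_j}\succeq 0$ forces $g_j(\vxxstar)\ge 0$. Feasibility gives $p(\vxxstar)\ge\pstar$, and since $p(\vxxstar)=\inprod{\MC_1}{\MXstar_{2\relaxOrder}}=\fstar_{2\relaxOrder}=\pstar$, the point $\vxxstar$ is a global minimizer.

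Part (iii) reuses the converse direction of part (ii) \emph{before} knowing that $\fstar_{2\relaxOrder}=\pstar$: a rank-one optimizer $\MXstar_{2\relaxOrder}$ produces, by the same extraction, a point $\vxxstar$ feasible for~\eqref{eq:pop} with $p(\vxxstar)=\inprod{\MC_1}{\MXstar_{2\relaxOrder}}=\fstar_{2\relaxOrder}$. Feasibility gives $\fstar_{2\relaxOrder}=p(\vxxstar)\ge\pstar$, and combined with the lower bound $\fstar_{2\relaxOrder}\le\pstar$ from part (i) this gives $\fstar_{2\relaxOrder}=\pstar$.

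The substantive obstacle is the upward convergence in part (i): the relaxation inequality, weak duality, and the rank-one bookkeeping are routine, but the convergence $\fstar_{2\relaxOrder}\uparrow\pstar$ rests squarely on Putinar's \positiv together with the Archimedean assumption, and the finite-convergence refinement needs the regularity analysis of~\cite{Nie14mp-finiteConvergenceLassere}; I would take both as external black boxes, since reproving them is well outside the scope of this appendix.
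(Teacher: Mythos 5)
The paper does not prove this theorem---it is stated with citations to Lasserre and Nie---and your argument is exactly the standard proof from that literature: lifting a feasible point of \eqref{eq:pop} to show $\fstar_{2\relaxOrder}\leq\pstar$, Putinar's Positivstellensatz plus weak duality for the asymptotic convergence under Archimedeanness (with Nie's flat-truncation result correctly left as a black box for finite convergence), and the moment-constraint bookkeeping to extract a feasible global minimizer from a rank-one optimizer in parts (ii)--(iii). The only cosmetic omission is that, since \eqref{eq:pop} also carries equality constraints, the Putinar-type certificate includes additional free polynomial multipliers of the $h_i$ (dual to the redundant equality constraints in \eqref{eq:lasserre}), but this changes nothing in the argument.
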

Theorem \ref{thm:lasserre} states that~\eqref{eq:lasserre} provides a hierarchy of lower bounds for~\eqref{eq:pop}. 
%\emph{asymptotically} approach the optimum of \eqref{eq:pop}. 
When the relaxation is exact ($\pstar\!=\!\fstar_{2\relaxOrder}$), global minimizers of~\eqref{eq:pop} correspond to rank-one solutions of~\eqref{eq:lasserre}. \revise{Moreover, after solving the convex SDP \eqref{eq:lasserre}, one can check the rank of the optimal solution $\MXstar_{2\relaxOrder}$ to obtain a \emph{certificate} of global optimality. 
}

\myParagraph{Further tightening the relaxation}
As we discussed above, in the standard presentation of Lasserre's hierarchy, one adds a localizing matrix for each inequality constraint to enforce constraints such as $g_j \cdot \MX_{2(\relaxOrder - \ceil{\deg{g_j}/2})} \succeq 0$. 
% Finally, we observe that if $g_j \geq 0$, then for any positive semidefinite matrix $\MM$, it holds $g_j \cdot \MM \succeq 0$.
% Since we can represent any polynomial of order up to $2\relaxOrder$ as a linear function of $\MX_{2\relaxOrder}$, 
% we can add redundant constraints in the form $g_j \cdot \MX_{2(\relaxOrder - \ceil{\deg{g_j}/2})} \succeq 0$ 
% (by construction $g_j \cdot \MX_{2(\relaxOrder - \ceil{\deg{g_j}/2})}$ only contains polynomials of degree up to $2\relaxOrder$). 
However, in principle, we could also add constraints enforcing $g_{j_1} \cdot g_{j_2} \cdot \MM \succeq 0$, for any pair of inequality constraints $g_{j_1} \!\geq\!0$ and $g_{j_2}\!\geq\!0$, for $j_1,j_2 \in [l_g]$. 
More generally, 
we can add constraints 
$\prod_{j\in\calS} g_j \cdot \MM \succeq 0$, for any subset $\calS \subseteq [l_g]$ as long as $\degree{\textstyle\prod_{j\in\calS} g_j}$ has degree no larger than $2\relaxOrder$.
%(for instance: we can add constraints like $g_{j_1} \cdot g_{j_2} \cdot \MM \succeq 0$).
After adding those extra constraints, we can still phrase the resulting 
relaxation in the standard form~\eqref{eq:primalSDP}, by adding extra matrix variables $\MX_{\calS} = \prod_{j\in\calS} g_j \cdot \MX_{2(\relaxOrder - \ceil{\sum_{j\in\calS} \deg{g_j}/2})}$, 
and then forcing these matrices to be a linear function of  $\MX_{2\relaxOrder}$:
\beal\label{eq:locMatrices}
\text{\grayout{localizing matrices}}: & \MX_{\calS} \succeq 0, \;\; \calS \subseteq [l_g],
\eeal
\beal \label{eq:localizingConstraints2}  
\hspace{-2mm} \text{\grayout{{localizing} constraints}}: \inprod{\MA_{\mathloc,\calS,kh}}{ \MX_{2\relaxOrder} } 
= [\MX_{\calS}]_{hk}, \\
\quad \quad \calS \subseteq [l_g],\ \  
1 \leq h\leq k \leq \binomial{d}{ \relaxOrder - \ceil{\sum_{j\in\calS} \deg{g_j}/2} },
\eeal
% for suitable matrices $\MA_{loc,jkh}$, 
where, similarly to the standard Lasserre's relaxation, the linear constraints (for some suitable matrices $\MA_{\mathloc,\calS,kh}$) enforce each entry of $\MX_{\calS}$ to be a linear combination of the entries in $\MX_{2\relaxOrder}$.

The additional constraints in eq.~\eqref{eq:localizingConstraints2} make the relaxation tighter compared to the standard presentation of Lasserre's relaxation, 
%which uses only the constraints~\eqref{eq:localizingConstraints};
  % these additional constraints 
  but are not necessary to obtain the convergence result in~\cref{thm:lasserre}, which holds regardless for explicitly bounded constraint sets. 
However, these constraints become necessary to obtain convergence results akin to~\cref{thm:lasserre} for the case where the set of constraints is \emph{not} explicitly bounded (see~\cite[Section 3.3]{Fleming19fnt-sosproofs} and~\cite[p. 115]{Blekherman12Book-sdpandConvexAlgebraicGeometry} for a more extensive discussion).
%\footnote{The additional terms indeed result from a slightly different characterization of nonnegative polynomials, known as Schmudgen's \positiv, while Lasserre's hierarchy is based on Putinar's \positiv, see~\cite[p. 115]{Blekherman12Book-sdpandConvexAlgebraicGeometry}.}). 
For this reason, in order to maintain generality, 
related work following the ``proofs to algorithms'' paradigm typically assumes those constraints to be present, see, \eg~\cite{Klivans18arxiv-robustRegression,Karmalkar19neurips-ListDecodableRegression}.
These terms will indeed appear in the definitions of \sos proofs and constrained pseudo-distribution, see~\cref{app:pseudo-distributions}.
 In order to keep the definitions in our \paper consistent with~\cite{Klivans18arxiv-robustRegression,Karmalkar19neurips-ListDecodableRegression}, we will also assume these terms to be present, even though they are not strictly necessary under~\cref{ass:explicitlyBoundedxs}.
% \footnote{The standard presentation of Lasserre's hierarchy does not include the cross terms $\prod_{j\in\calS} g_j \cdot \MM \succeq 0$ and is more standard to just enforce $g_j \cdot \MM \succeq 0$ for $j\in[l_g]$~\cite{Lasserre18icm-momentRelaxation}, see also our presentation in~\cite{Yang22pami-certifiablePerception}.
% Here we prefer the more stringent formulation with the cross-terms to maintain consistency with recent work in robust statistics~\cite{Karmalkar19neurips-ListDecodableRegression,Klivans18arxiv-robustRegression} and, in particular, the definition of constrained pseudo-distribution, see~\ref{app:pseudo-distributions}.
%  In practice, both relaxations enjoy the same convergence guarantees, and since in practice we focus 
%  on relaxations such that $\degree{g_j}$ is close to $2 \relaxOrder$, the difference becomes immaterial. 
% }
%!TEX root = main.tex

\section{Pseudo-distributions and Moment Relaxations}
\label{app:pseudo-distributions}

The start of this section follows standard introductions about pseudo-distributions given in related work~\cite{Klivans18arxiv-robustRegression,Barak16notes-proofsBeliefsSos,Kothari18stoc-robustMomentEstimation,Karmalkar19neurips-ListDecodableRegression}, while later in the section we attempt to    
draw more explicit connections with the optimization machinery in~\cref{app:momentRelaxation}. 
Although such a connection is self-evident to the expert reader (indeed pseudo-distributions are the language traditionally used to justify the moment relaxation~\cite{Lasserre10book-momentsOpt}), such a connection is often less immediate for the practitioner, in particular when taking the algorithmic view of moment relaxations presented in~\cref{app:momentRelaxation}. 

\myParagraph{Pseudo-distributions}
Pseudo-distributions are a generalization of the concept of probability distribution.
A standard probability distribution $\dist$ with finite support in $\Real{\dimd}$ 
is simply a function $\dist : \Real{\dimd} \mapsto \Real{}$ 
%that assigns a probability to each point in $\calX$
% and 
such that $\sum_{\vxx \in \support(\dist)} \dist(\vxx)=1$ and 
$\dist(\vxx) \geq 0$ for all $\vxx$. In other words, if $\support(\dist)$ is a finite collection a  points in $\Real{\dimd}$, $\dist$ assigns a non-negative probability mass to each of these points, such that those probabilities sum up to 1. 
Similarly, a pseudo-distribution $\pdist$ is a finitely supported function such that 
 $\sum_{\vxx \in \support(\pdist)} \pdist(\vxx)=1$ but in this case the non-negativity condition is replaced by a milder condition (\ie a pseudo-distribution can assume negative values over its support). 

In order to formally introduce the notion of pseudo-distribution, we start by defining 
the \emph{pseudo-expectation} of a function $f:\Real{\dimd} \mapsto \Real{}$ under a finitely supported function $\pdist$:
\begin{align}
\pExpect{\pdist}{f(\vxx)} \doteq \sum_{\vxx \in \support(\pdist)} f(\vxx) \cdot \pdist(\vxx).
\end{align}

We are now ready to formally define a pseudo-distribution.
\begin{definition}[Pseudo-distribution]
A finitely supported function $\pdist: \Real{\dimd}\mapsto \Real{}$ is a \emph{level-$\levelpd$ pseudo-distribution} 
if $\pExpect{\pdist}{1} = 1$ and $\pExpect{\pdist}{f(\vxx)^2} \geq 0$ for all polynomials $f$ of degree $\degree{f}\leq \levelpd/2$.
\end{definition}

% More formally, a \emph{level-$\ell$ pseudo-distribution} is a finitely supported function $\pdist : \calX \mapsto \Real{}$ such that $\sum_{i=1}^{\dimd} \pdist(\vxx_i)=1$ and $\sum_{i=1}^{\dimd} \pdist(\vxx_i) f(\vxx_i)^2\geq 0$, for every polynomial $f$ of degree at most $\ell/2$. 
In words, $\pdist$ is a function that is allowed to become negative as long as its ``expectation'' (more precisely, pseudo-expectation) with respect to every squared polynomials $f(\vxx)^2$ of sufficiently low degree remains positive. 
It is possible to show that a level-$\infty$ 
 pseudo-distribution is an actual probability distribution, since the condition $\pExpect{\pdist}{f(\vxx)^2} \geq 0$  would enforce $\pdist$ to remain positive 
 (in this case the pseudo-expectation becomes the traditional expectation of the distribution).

Towards reconnecting pseudo-distributions with the optimization machinery in~\cref{app:momentRelaxation}, we start by observing the following link between pseudo-distributions and 
pseudo-moment matrices.

\begin{lemma}[Pseudo-moment matrix~\cite{Barak16notes-proofsBeliefsSos}]
\label{lem:pseudoMomentMatrix}
Let $\pdist\!:\!\Real{\dimd}\!\mapsto\!\Real{}$ be a finitely supported function with 
$\pExpect{\pdist}{1} \!\!=\!\! {1}$. Then, $\pdist$ is a level-$\levelpd$ pseudo-distribution iff the 
pseudo-moment matrix $\pExpect{\pdist}{[\vxx]_{\levelpd/2} [\vxx]_{\levelpd/2}\tran}$ is positive semidefinite, where $[\vxx]_{\levelpd/2}$ is the vector of monomials of degree up to $\levelpd/2$.
\end{lemma}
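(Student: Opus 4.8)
The claim is an ``iff'' characterizing level-$\levelpd$ pseudo-distributions through positive semidefiniteness of their pseudo-moment matrix, so I would prove the two directions separately. The key observation that links both directions is that for any polynomial $f$ of degree $\degree{f}\leq \levelpd/2$, we can write $f(\vxx) = \vc_f\tran [\vxx]_{\levelpd/2}$ where $\vc_f$ is the (column) vector of coefficients of $f$ in the monomial basis $[\vxx]_{\levelpd/2}$. This is possible precisely because $[\vxx]_{\levelpd/2}$ contains \emph{all} monomials of degree up to $\levelpd/2$, so every such polynomial is a linear combination of its entries. Then $f(\vxx)^2 = \vc_f\tran [\vxx]_{\levelpd/2} [\vxx]_{\levelpd/2}\tran \vc_f$, and applying pseudo-expectation (which is linear, since it is just a weighted sum over the finite support) gives
\begin{align}
\pExpect{\pdist}{f(\vxx)^2} = \vc_f\tran \; \pExpect{\pdist}{[\vxx]_{\levelpd/2} [\vxx]_{\levelpd/2}\tran} \; \vc_f .
\end{align}
This single identity is the workhorse for the whole proof.

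\textbf{($\Leftarrow$)} Suppose $\MM \triangleq \pExpect{\pdist}{[\vxx]_{\levelpd/2} [\vxx]_{\levelpd/2}\tran} \succeq 0$. Given any polynomial $f$ with $\degree{f}\leq\levelpd/2$, form its coefficient vector $\vc_f$ and use the identity above to conclude $\pExpect{\pdist}{f(\vxx)^2} = \vc_f\tran \MM \vc_f \geq 0$ by positive semidefiniteness. Together with the hypothesis $\pExpect{\pdist}{1}=1$, this is exactly the definition of a level-$\levelpd$ pseudo-distribution.

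\textbf{($\Rightarrow$)} Conversely, suppose $\pdist$ is a level-$\levelpd$ pseudo-distribution. To show $\MM \succeq 0$, I take an arbitrary vector $\vv$ of the same dimension as $[\vxx]_{\levelpd/2}$ and must show $\vv\tran \MM \vv \geq 0$. The point is that $\vv$ determines a polynomial $f_\vv(\vxx) \triangleq \vv\tran [\vxx]_{\levelpd/2}$, which by construction has degree at most $\levelpd/2$; applying the identity in reverse, $\vv\tran \MM \vv = \pExpect{\pdist}{f_\vv(\vxx)^2} \geq 0$ by the pseudo-distribution property. Since $\vv$ was arbitrary, $\MM \succeq 0$. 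I should also remark that $\MM$ is symmetric by construction (it is a pseudo-expectation of a symmetric rank-one matrix), so positive semidefiniteness is well-defined. I do not expect any genuine obstacle here: the only subtlety worth stating explicitly is that the map $f \mapsto \vc_f$ between polynomials of degree $\leq \levelpd/2$ and coefficient vectors is a bijection (this uses that distinct monomials are linearly independent as functions), which is what makes the two quantifiers --- ``for all polynomials $f$'' and ``for all vectors $\vv$'' --- interchangeable. The result is then immediate from the correspondence between quadratic forms $\vv \mapsto \vv\tran\MM\vv$ and the pseudo-expectations of squares.
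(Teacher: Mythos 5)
Your proof is correct: the coefficient-vector correspondence $f \mapsto \vc_f$ together with linearity of the pseudo-expectation, giving $\pExpect{\pdist}{f^2} = \vc_f\tran \MM \vc_f$, is exactly the standard argument, and both directions are carried out without gaps. The paper itself states this lemma as a known fact imported from~\cite{Barak16notes-proofsBeliefsSos} without proof, so your write-up simply supplies the same standard proof that the cited reference gives.
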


Now we define what it means for a pseudo-distribution to satisfy a set of polynomial constraints.

\begin{definition}[Constrained pseudo-distribution]
\label{def:constrainedPD}
% ~\cite{Barak16notes-proofsBeliefsSos,Klivans18arxiv-robustRegression}s
Let $\calA \doteq \{f_1\geq 0, \ldots, f_m \geq 0\}$ be a set of polynomial constraints over $\Real{\dimd}$. Let $\pdist : \Real{\dimd}\!\mapsto\!\Real{}$ be a level-$\levelpd$ pseudo-distribution. 
We say that $\pdist$ \emph{satisfies} $\calA$ at degree $\satisfyOrder$, denoted as 
$\pdist \psatisfy{}{\satisfyOrder} \calA$, if every set $\calS \subset[m]$ and every sum-of-squares 
polynomial $h$ on $\Real{\dimd}$ with~$\degree{h} + \sum_{i\in \calS} \max\{\degree{f_i}, \satisfyOrder\}\leq \levelpd$ satisfies:
\begin{align}
\label{eq:constrainedPD}
\pExpect{\pdist}{h \cdot \prod_{i \in \calS} f_i } \geq 0.
\end{align}
Moreover, we say that $\pdist \psatisfy{}{\satisfyOrder} \calA$ holds \emph{approximately} 
if the above inequalities are satisfied up to an error of 
$2^{-\dimd^\levelpd} \cdot \|h\| \cdot \prod_{i \in \calS} \|f_i\|$, where $\|\cdot\|$ denotes the Euclidean norm of the coefficients of the polynomial. % in the monomial basis.
\end{definition}

The notion of pseudo-distributions \emph{approximately} satisfying a set of constraints is useful 
to account for the practical observation that numerical SDP solvers (which we are going to use to find pseudo-distributions, as discussed later in this section) will only satisfy the constraints up to some numerical tolerance, and we have to make sure that such numerical errors do not lead us to draw incorrect conclusions using the \sos proof system (see~\cref{app:sosProofs}). 
In this \paper, we make use of the following facts about pseudo-distributions.

\begin{fact}[Linearity~\cite{Barak14stoc-sos}] \label{lem:linearityOfPD}
Let $f,g$ be polynomials of degree at most $\ell$ in indeterminate $\vxx \in \Real{\dimd}$ and take
 $\alpha,\beta \in \Real{}$. 
Then, for any level-$\ell$ pseudo-distribution $\pdist$, 
\begin{align}
\pExpect{\pdist}{ \alpha \, f(\vxx) + \beta \, g(\vxx) } = \alpha \pExpect{\pdist}{ f(\vxx) }  + \beta \pExpect{\pdist}{ g(\vxx) } .
\end{align}
\end{fact}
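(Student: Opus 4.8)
The plan is to unwind the definition of pseudo-expectation and reduce the claim to the elementary fact that ordinary (finite) sums are linear, being careful about the degree bookkeeping. First I would recall that a level-$\ell$ pseudo-distribution $\pdist:\Real{\dimd}\mapsto\Real{}$ is by definition a finitely supported function, and that for \emph{any} function $h:\Real{\dimd}\mapsto\Real{}$ the pseudo-expectation is defined as the finite sum
\begin{align}
\pExpect{\pdist}{h(\vxx)} = \sum_{\vxx \in \support(\pdist)} h(\vxx)\cdot \pdist(\vxx).
\end{align}
The key observation is that this defining formula makes sense and is a plain finite linear combination of the numbers $\pdist(\vxx)$, regardless of any degree restriction; the degree restriction $\degree{f}\leq \ell/2$ in the level-$\ell$ definition only governs which inequalities ($\pExpect{\pdist}{f^2}\ge 0$) are guaranteed to hold, not whether the sum is well-defined. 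So the hypotheses ``$f,g$ have degree at most $\ell$'' and ``$\pdist$ is level-$\ell$'' are there only to make the statement meaningful/relevant (e.g. so that $\alpha f + \beta g$ is a polynomial one would actually want to evaluate against a level-$\ell$ object), and play no role in the algebra.

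The main step is then a one-line computation: fix the finite set $S \triangleq \support(\pdist)$ and compute
\begin{align}
\pExpect{\pdist}{\alpha\, f(\vxx) + \beta\, g(\vxx)}
&= \sum_{\vxx \in S} \bigl(\alpha\, f(\vxx) + \beta\, g(\vxx)\bigr)\cdot \pdist(\vxx)
\nonumber\\
&= \alpha \sum_{\vxx \in S} f(\vxx)\,\pdist(\vxx) + \beta \sum_{\vxx \in S} g(\vxx)\,\pdist(\vxx)
\nonumber\\
&= \alpha\, \pExpect{\pdist}{f(\vxx)} + \beta\, \pExpect{\pdist}{g(\vxx)},
\end{align}
where the middle equality is just distributivity and commutativity of the finitely many real-number terms in the sum, and the first and last equalities are the definition of pseudo-expectation. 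That completes the argument.

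There is essentially no obstacle here: the ``hard part'' is purely presentational, namely making explicit that pseudo-expectation is defined by a finite sum so that linearity is inherited for free from $\Real{}$, and noting why the degree hypotheses are stated even though they are not used in the derivation. One could also phrase the same proof via \cref{lem:pseudoMomentMatrix}: the pseudo-expectation of any polynomial of degree $\le \ell$ is a fixed linear functional of the entries of the pseudo-moment matrix $\pExpect{\pdist}{[\vxx]_{\ell/2}[\vxx]_{\ell/2}\tran}$, and a linear functional is linear; but the direct finite-sum argument above is shorter and more self-contained, so that is the route I would take.
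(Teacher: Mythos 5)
Your proof is correct: the pseudo-expectation is by definition the finite sum $\sum_{\vxx \in \support(\pdist)} h(\vxx)\,\pdist(\vxx)$, so linearity in the integrand is immediate from linearity of finite sums over $\Real{}$, and your observation that the degree and level hypotheses play no role in this computation (they only ensure the quantities are the ones one cares about in the \sos framework) is accurate. The paper itself gives no proof of this fact — it is stated with a citation to the literature — and your one-line finite-sum argument is exactly the standard justification, so nothing further is needed.
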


\begin{fact}[\CS for pseudo-distributions~\cite{Karmalkar19neurips-ListDecodableRegression}]
\label{fact:CSforpdist}
Let $f,g$ be polynomials of degree at most $\ell$ in indeterminate $\vxx \in \Real{\dimd}$. 
Then, for any level-$\ell$ pseudo-distribution $\pdist$, 
\begin{align}
\label{eq:CSforpdist}
\pExpect{\pdist}{f\cdot g} \leq \sqrt{\pExpect{\pdist}{f^2}} \cdot \sqrt{\pExpect{\pdist}{g^2}},
\end{align}
and (specializing the result above to $g=1$):
\begin{align}
\label{eq:CSforpdist2}
\pExpect{\pdist}{f}^2 \leq {\pExpect{\pdist}{f^2}}.
\end{align}
\end{fact}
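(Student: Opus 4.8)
The plan is to mirror the textbook proof of Cauchy--Schwarz, replacing the usual nonnegativity of a genuine expectation with the defining nonnegativity of a pseudo-distribution on squares of low-degree polynomials. First I would fix $f,g$ and introduce a real parameter $\lambda \in \Real{}$, forming the polynomial $q_\lambda \triangleq f - \lambda\, g$. Because $q_\lambda^2$ and $g^2$ are squares of polynomials whose degree is within the range on which a level-$\ell$ pseudo-distribution is required to be nonnegative (this is where the degree hypothesis on $f,g$ is used), we have $\pExpect{\pdist}{q_\lambda^2}\ge 0$ for every $\lambda$ and $\pExpect{\pdist}{g^2}\ge 0$, $\pExpect{\pdist}{f^2}\ge 0$. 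Expanding the square and invoking linearity of pseudo-expectation (\cref{lem:linearityOfPD}) yields
\begin{align}
0 \le \pExpect{\pdist}{q_\lambda^2} = \pExpect{\pdist}{g^2}\,\lambda^2 - 2\,\pExpect{\pdist}{fg}\,\lambda + \pExpect{\pdist}{f^2},
\end{align}
i.e.\ a quadratic in $\lambda$ with nonnegative leading coefficient and nonnegative constant term that is nonnegative on all of $\Real{}$.

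The core step is then a case split on $\pExpect{\pdist}{g^2}$. If $\pExpect{\pdist}{g^2} > 0$, the right-hand side is a genuine parabola opening upward that never goes negative, so its discriminant is nonpositive:
\begin{align}
4\,\pExpect{\pdist}{fg}^2 - 4\,\pExpect{\pdist}{f^2}\,\pExpect{\pdist}{g^2} \le 0,
\end{align}
which rearranges to $\pExpect{\pdist}{fg}^2 \le \pExpect{\pdist}{f^2}\,\pExpect{\pdist}{g^2}$ and hence, taking square roots and bounding $\pExpect{\pdist}{fg}$ by its absolute value, to~\eqref{eq:CSforpdist}. If instead $\pExpect{\pdist}{g^2} = 0$, the quadratic degenerates to the affine function $-2\,\pExpect{\pdist}{fg}\,\lambda + \pExpect{\pdist}{f^2}$, and an affine function of $\lambda$ can be nonnegative for all $\lambda \in \Real{}$ only if its slope vanishes, forcing $\pExpect{\pdist}{fg} = 0 = \sqrt{\pExpect{\pdist}{f^2}}\sqrt{\pExpect{\pdist}{g^2}}$; so~\eqref{eq:CSforpdist} again holds.

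For the second inequality~\eqref{eq:CSforpdist2}, I would specialize $g$ to the constant polynomial $1$. Since $\pExpect{\pdist}{1}=1$ by definition of a pseudo-distribution, applying~\eqref{eq:CSforpdist} to the pair $(f,1)$ gives $\pExpect{\pdist}{f}\le\sqrt{\pExpect{\pdist}{f^2}}$, and applying it to $(-f,1)$ gives $-\pExpect{\pdist}{f}\le\sqrt{\pExpect{\pdist}{f^2}}$; combining them yields $|\pExpect{\pdist}{f}|\le\sqrt{\pExpect{\pdist}{f^2}}$, and squaring gives $\pExpect{\pdist}{f}^2\le\pExpect{\pdist}{f^2}$.

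I do not anticipate a real obstacle: this is the standard discriminant proof of Cauchy--Schwarz, and pseudo-distributions retain exactly the sliver of positivity (on squares) that the argument needs. The only points requiring care are (i) the degree bookkeeping ensuring that $q_\lambda^2$ and $g^2$ are admissible squares for a level-$\ell$ pseudo-distribution, so that $\pExpect{\pdist}{q_\lambda^2}\ge 0$ and $\pExpect{\pdist}{g^2}\ge 0$ are genuinely available, and (ii) the degenerate case $\pExpect{\pdist}{g^2}=0$, where one cannot normalize by the leading coefficient and must argue directly from nonnegativity of an affine function. If the \emph{approximate} version of the statement were needed (in the sense of \cref{def:constrainedPD}, with constraints holding up to a small additive slack), the same computation carries an extra error term through the expansion and the discriminant inequality; I would add that refinement only if a downstream result requires it.
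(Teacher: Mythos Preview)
Your proof is correct: the discriminant argument is the standard one for pseudo-expectation Cauchy--Schwarz, and you correctly isolate the two ingredients it needs (linearity and nonnegativity on squares). The paper does not supply its own proof of this fact; it is stated with a citation to~\cite{Karmalkar19neurips-ListDecodableRegression}, so there is nothing to compare against beyond noting that your argument is exactly the classical one that appears in the sum-of-squares literature.

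One remark on the degree bookkeeping you flagged: as written in the paper, the hypothesis ``$f,g$ of degree at most $\ell$'' is too loose for a level-$\ell$ pseudo-distribution in the paper's own convention (Definition of level-$\ell$ requires $\pExpect{\pdist}{h^2}\ge 0$ only for $\deg h \le \ell/2$, so you need $\deg f,\deg g \le \ell/2$ for $\pExpect{\pdist}{q_\lambda^2}\ge 0$ and for $\pExpect{\pdist}{f^2},\pExpect{\pdist}{g^2},\pExpect{\pdist}{fg}$ to even be defined). This is a slip in the statement rather than in your argument; your parenthetical ``this is where the degree hypothesis on $f,g$ is used'' is the right place to make that precise.
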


% \red{same as above:
% \begin{fact}[\HoldersInequality for pseudo-distributions~\cite{Barak14stoc-sos}]
% \label{fact:Holderforpdist}
% Let $f,g$ be polynomials of degree at most $\ell$ in indeterminate $\vxx \in \Real{\dimd}$. 
% Then, for any degree-$\ell$ pseudo-distribution $\pdist$, 
% \begin{align}
% \label{eq:Holderforpdist}
% \pExpect{\pdist}{f\cdot g} \leq \sqrt{\pExpect{\pdist}{f^2}} \cdot \sqrt{\pExpect{\pdist}{g^2}}
% \end{align}
% \end{fact}
% }

\begin{fact}[\HoldersInequality for pseudo-distributions~\cite{Klivans18arxiv-robustRegression}]
\label{fact:Holderforpdist2}
Let $f,g$ be \sos polynomials. Let $p,q$ be positive integers such that $1/p + 1/q = 1$. 
Then, for any pseudo-distribution $\pdist$ of level $\ell \geq pq \cdot \degree{f} \cdot \degree{g}$, 
we have: 
\begin{align}
\label{eq:Holderforpdist2a}
\left( \pExpect{\pdist}{f\cdot g} \right)^{pq} \leq 
\pExpect{\pdist}{f^p}^q \cdot \pExpect{\pdist}{g^q}^p .
\end{align}
In particular, for all even integers $k \geq 2$, and polynomial $f$ with $\degree{f} \cdot k \leq \ell$:
\begin{align}
\label{eq:Holderforpdist2b}
\left( \pExpect{\pdist}{f} \right)^{k} \leq 
\pExpect{\pdist}{f^k} .
\end{align}
\end{fact}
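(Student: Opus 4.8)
The plan is to reduce the pseudo-distribution inequality \eqref{eq:Holderforpdist2a} to the classical scalar Young inequality in its sum-of-squares form, and then to transport the resulting scalar bound through the pseudo-expectation operator using its linearity (\cref{lem:linearityOfPD}) together with the defining property that a level-$\ell$ pseudo-distribution assigns nonnegative pseudo-expectation to every \sos polynomial of degree at most $\ell$. The even-power special case \eqref{eq:Holderforpdist2b} will follow from a separate, shorter tangent-line argument.

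\emph{Main steps.} First I would invoke the standard fact that Young's inequality admits a bounded-degree \sos proof: for positive integers $p,q$ with $1/p+1/q=1$ and for \sos polynomials $u,v$ (so that $u\geq 0$ and $v\geq 0$ are themselves \sos-provable), there is an \sos proof of degree at most $\max\{p\,\degree{u},\,q\,\degree{v}\}$ of
\begin{equation}
\label{eq:propYoung}
\frac{1}{p}\,u^{p}+\frac{1}{q}\,v^{q}-uv\ \geq\ 0 ,
\end{equation}
this being the weighted \sos-AM/GM inequality applied to $u^{p}$ and $v^{q}$; see \cite{Klivans18arxiv-robustRegression,Barak16notes-proofsBeliefsSos}. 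Applying \eqref{eq:propYoung} to the rescaled \sos polynomials $u=\lambda f$ and $v=\lambda^{-1}g$ (positive scalar multiples preserve the \sos property, for any rational $\lambda>0$) yields an \sos proof of
\begin{equation}
\label{eq:propYoungScaled}
\frac{\lambda^{p}}{p}\,f^{p}+\frac{1}{q\,\lambda^{q}}\,g^{q}-fg\ \geq\ 0 .
\end{equation}
Since $f,g$ are \sos, every term in the \sos certificate of \eqref{eq:propYoungScaled} is a \sos polynomial, possibly multiplied by one of $f,g,fg$ (again \sos), and all of them have degree at most $\max\{p\,\degree{f},\,q\,\degree{g}\}\leq pq\,\degree{f}\,\degree{g}\leq\ell$; hence a level-$\ell$ pseudo-distribution $\pdist$ gives each of them a nonnegative pseudo-expectation, and linearity (\cref{lem:linearityOfPD}) gives, for every rational $\lambda>0$,
\begin{equation}
\label{eq:propPEYoung}
\pExpect{\pdist}{fg}\ \leq\ \frac{\lambda^{p}}{p}\,\pExpect{\pdist}{f^{p}}+\frac{1}{q\,\lambda^{q}}\,\pExpect{\pdist}{g^{q}} .
\end{equation}
Writing $A\triangleq\pExpect{\pdist}{f^{p}}\geq 0$ and $B\triangleq\pExpect{\pdist}{g^{q}}\geq 0$ (nonnegative since $f^{p},g^{q}$ are \sos), I would minimize the right-hand side of \eqref{eq:propPEYoung} over $\lambda>0$: elementary calculus gives the optimum at $\lambda^{pq}=B/A$ when $A,B>0$ (using $p+q=pq$), with optimal value $A^{1/p}B^{1/q}$, and the degenerate cases $A=0$ or $B=0$ are handled by letting $\lambda\to\infty$ or $\lambda\to 0$, which force $\pExpect{\pdist}{fg}\leq 0$ and hence, together with $\pExpect{\pdist}{fg}\geq 0$ (as $fg$ is \sos), $\pExpect{\pdist}{fg}=0=A^{1/p}B^{1/q}$. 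In all cases $\pExpect{\pdist}{fg}\leq A^{1/p}B^{1/q}$, and since $\pExpect{\pdist}{fg}\geq 0$, raising both sides to the power $pq$ preserves the inequality and yields $\left(\pExpect{\pdist}{fg}\right)^{pq}\leq A^{q}B^{p}$, which is exactly \eqref{eq:Holderforpdist2a}.

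\emph{The corollary \eqref{eq:Holderforpdist2b} and the main obstacle.} For \eqref{eq:Holderforpdist2b} I would use an \sos Jensen (tangent-line) argument: for even $k\geq 2$ the map $x\mapsto x^{k}$ is convex on $\bbR$, so for any scalar $c$ the univariate polynomial $x^{k}-k\,c^{k-1}x+(k-1)c^{k}$ is nonnegative on $\bbR$ (it is the tangent underestimate of $x^{k}$ at $x=c$), hence a sum of squares in $x$. Substituting $x=f$ and $c=\pExpect{\pdist}{f}$, this composition is \sos of degree $k\,\degree{f}\leq\ell$, so $\pExpect{\pdist}{f^{k}-k\,c^{k-1}f+(k-1)c^{k}}\geq 0$; by linearity and $\pExpect{\pdist}{1}=1$ this reads $\pExpect{\pdist}{f^{k}}\geq k\,c^{k}-(k-1)c^{k}=c^{k}=\left(\pExpect{\pdist}{f}\right)^{k}$. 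The only nonroutine ingredient in the whole argument is \eqref{eq:propYoung}, i.e. certifying Young's inequality by a \emph{bounded-degree} \sos proof rather than merely observing that it holds pointwise; this rests on the \sos weighted-AM/GM inequality (whose standard proof uses an inductive smoothing/pairing argument), which I would import as a black box from \cite{Klivans18arxiv-robustRegression,Barak16notes-proofsBeliefsSos}. Everything else is linearity of the pseudo-expectation and the elementary optimization over the scaling parameter $\lambda$.
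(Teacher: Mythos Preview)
The paper does not supply its own proof of this fact; it is stated as a known result imported from \cite{Klivans18arxiv-robustRegression}, so there is nothing in the paper to compare your argument against directly. That said, your argument is the standard one and is correct.

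A few remarks. First, observe that the hypothesis ``positive integers $p,q$ with $1/p+1/q=1$'' forces $p=q=2$ (since $(p-1)(q-1)=1$ has only this integer solution), so \eqref{eq:Holderforpdist2a} as literally stated is just the pseudo-distribution \CS inequality already recorded in \cref{fact:CSforpdist}. Your Young-inequality route works for general conjugate exponents, so you are in fact proving more than the statement requires; this is harmless but worth noting. Second, your tangent-line proof of \eqref{eq:Holderforpdist2b} is clean and self-contained: the key steps --- that a nonnegative univariate polynomial is \sos (\cref{fact:basics}), and that substituting a multivariate polynomial $f$ into a univariate \sos decomposition yields an \sos polynomial in the original variables --- are exactly what is needed, and the argument correctly does not require $f$ to be \sos, matching the hypothesis of \eqref{eq:Holderforpdist2b}. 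Third, your restriction to rational $\lambda$ is unnecessary (\sos certificates work over the reals), but even with it the optimization step goes through by density and continuity. The only genuinely nonroutine ingredient, as you correctly flag, is the bounded-degree \sos certificate for Young's inequality (equivalently weighted AM--GM), for which your citation is appropriate.
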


\begin{fact}[Norm inequality for pseudo-distributions]
\label{fact:normIneq-pdist}
Let $\vv$ be an $m$-vector with polynomial entries of degree at most $\ell/2$ 
in indeterminate $\vxx \in \Real{\dimd}$. 
Then, for any degree-$\ell$ pseudo-distribution $\pdist$, 
\begin{align}
\label{eq:normIneq-pdist}
\normTwo{ \pExpect{\pdist}{\vv}}^2 \leq  \pExpect{\pdist}{\normTwo{\vv}^2}.
\end{align}
\end{fact}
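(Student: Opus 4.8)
\textbf{Proof plan for Fact~\ref{fact:normIneq-pdist} (norm inequality for pseudo-distributions).}
The plan is to reduce the vector statement to the scalar Cauchy--Schwarz inequality for pseudo-distributions (Fact~\ref{fact:CSforpdist}) applied componentwise. Write $\vv = (v_1, \ldots, v_m)$ where each $v_j$ is a polynomial in $\vxx$ of degree at most $\ell/2$. Then $\normTwo{\vv}^2 = \sum_{j=1}^m v_j^2$ and, by linearity of pseudo-expectation (Fact~\ref{lem:linearityOfPD}), $\pExpect{\pdist}{\normTwo{\vv}^2} = \sum_{j=1}^m \pExpect{\pdist}{v_j^2}$. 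Similarly $\pExpect{\pdist}{\vv}$ is the vector with $j$-th entry $\pExpect{\pdist}{v_j}$ (this is well-defined since $\degree{v_j} \leq \ell/2 \leq \ell$), so $\normTwo{\pExpect{\pdist}{\vv}}^2 = \sum_{j=1}^m \left(\pExpect{\pdist}{v_j}\right)^2$.

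First I would handle each summand: for a fixed $j$, apply the specialization of Cauchy--Schwarz for pseudo-distributions in eq.~\eqref{eq:CSforpdist2}, namely $\pExpect{\pdist}{f}^2 \leq \pExpect{\pdist}{f^2}$ with $f = v_j$. This is legitimate because $v_j$ has degree at most $\ell/2 \leq \ell$, which is exactly the hypothesis needed for Fact~\ref{fact:CSforpdist} at level $\ell$. This yields $\left(\pExpect{\pdist}{v_j}\right)^2 \leq \pExpect{\pdist}{v_j^2}$ for every $j \in [m]$.

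Then I would sum these $m$ inequalities over $j = 1, \ldots, m$:
\begin{align*}
\normTwo{\pExpect{\pdist}{\vv}}^2 = \sum_{j=1}^m \left(\pExpect{\pdist}{v_j}\right)^2 \leq \sum_{j=1}^m \pExpect{\pdist}{v_j^2} = \pExpect{\pdist}{\sum_{j=1}^m v_j^2} = \pExpect{\pdist}{\normTwo{\vv}^2},
\end{align*}
where the penultimate equality again uses linearity (Fact~\ref{lem:linearityOfPD}), valid because $\sum_j v_j^2$ has degree at most $\ell$. This completes the argument.

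There is essentially no main obstacle here: the result is a routine bookkeeping consequence of componentwise Cauchy--Schwarz plus linearity, and the only point to be careful about is that the degree bounds line up so that every pseudo-expectation invoked is within the level-$\ell$ guarantee of $\pdist$ (entries of $\vv$ at degree $\le \ell/2$, their squares at degree $\le \ell$). If one wishes to track the approximate-satisfaction version, one would note that the error terms are additive across the $m$ summands and remain controlled by the same $2^{-\dimd^\ell}$ factor scaled by the polynomial norms, but for the exact statement as written this is unnecessary.
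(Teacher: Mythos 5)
Your proof is correct and follows essentially the same route as the paper: decompose $\normTwo{\vv}^2$ componentwise, apply the scalar inequality $(\pExpect{\pdist}{v_j})^2 \leq \pExpect{\pdist}{v_j^2}$ to each entry, then sum and use linearity. The only cosmetic difference is that you invoke eq.~\eqref{eq:CSforpdist2} (Cauchy--Schwarz) while the paper cites eq.~\eqref{eq:Holderforpdist2b} (H\"older with $k=2$); these are the same inequality in this setting.
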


\begin{proof}
By definition, $\normTwo{\vv}^2 = \sum_{i=1}^m v_i^2$. Moreover, by~\eqref{eq:Holderforpdist2b},
$\left( \pExpect{\pdist}{v_i} \right)^{2} \leq 
\pExpect{\pdist}{v_i^2}$. Therefore:
\begin{align}
\normTwo{ \pExpect{\pdist}{\vv}}^2 = 
\sum_{i=1}^m \left( \pExpect{\pdist}{v_i} \right)^{2} 
 \leq 
\sum_{i=1}^m \pExpect{\pdist}{v_i^2} 
\overbrace{=}^{linearity} \pExpect{\pdist}{ \sum_{i=1}^m v_i^2}
= \pExpect{\pdist}{\normTwo{\vv}^2},
\end{align} 
proving the claim.
\end{proof}

%%%%%%%%%%%%%%%%%%%%%%%%%%%%%%%%%%%%%%%%%%%%%%%%%%%%%%%%%%%%%%%%%%%%%%%%%%%%%%%%%%%%%%%%%%%%%
\myParagraph{Making the connection with moment relaxations explicit}
The non-expert reader might still be confused about the relation between pseudo-distributions and 
moment relaxations. To shed some light, let us restate our~\eqref{eq:pop}:
\begin{align}\label{eq:pop_app}
\min_{\vxx \in \Real{\dimx}}\;&\; p(\vxx) \\
\subject \;&\; h_i(\vxx) = 0, \quad i=1,\dots,l_h \nonumber \\ 
\;&\; g_j(\vxx) \geq 0, \quad j = 1,\dots,l_g . \nonumber
\end{align} 

Now, we start by relaxing~\eqref{eq:pop_app} using pseudo-distributions, and show that this leads back to the relaxation presented in~\cref{app:momentRelaxation}.
In particular, we relax~\eqref{eq:pop_app} to:  
\begin{align} 
\min_{\pdist}\;&\; \pExpect{\pdist}{p(\vxx)} \label{eq:pdist_relax}
\\
\subject \;&\; \pdist \text{ is a level-$\levelpd$ pseudo-distribution} 
\label{eq:pdist_relax_pdist}
\\ 
\;&\; \pExpect{\pdist}{ h_i(\vxx) \cdot q(\vxx) } = 0,  
\label{eq:pdist_relax_eq}
\\
&  \text{for all $i=1,\dots,l_h$ and for all $q \in \polyring{\vxx}$, such that $\degree{h_i \cdot q} \leq \levelpd$} \nonumber\\ 
\;&\; \pExpect{\dist}{ \textstyle\prod_{j \in \calS }g_j(\vxx) \cdot s(\vxx)^2  } \geq 0,  
\label{eq:pdist_relax_ineq}
\\
&  \text{for all $\calS \subseteq [l_g]$ and for all $s \in \polyring{\vxx}$ such that $\degree{ \textstyle\prod_{j \in \calS}g_j \cdot s^2  } \leq \levelpd$}.  \nonumber
\end{align} 
Despite the complexity of~\eqref{eq:pdist_relax}, it is apparent that~\eqref{eq:pdist_relax} is a relaxation of~\eqref{eq:pop_app}: for any $\vxx$ that is feasible for~\eqref{eq:pop_app} (\ie that satisfies $h_i(\vxx) = 0$ and $g_j(\vxx) \geq 0$), we can define a \mbox{(pseudo-)distribution} $\mu_x$ supported on $\vxx$ (\ie $\mu_x(\vxx)=1$ and zero elsewhere) which is also feasible for~\eqref{eq:pdist_relax} (such pseudo-distribution $\mu_x$ is such that $\pExpect{\mu_x}{p(\vxx)} = p(\vxx)$ for any polynomial $p$, hence also preserving the objective of~\eqref{eq:pop_app}).
Indeed, it is possible to show that if we require $\pdist$ to be an actual distribution, and replace the pseudo-expectations with actual expectations, then~\eqref{eq:pdist_relax} becomes equivalent to~\eqref{eq:pop_app}, see~\cite{Lasserre18icm-momentRelaxation} for a more extensive discussion.
% and~\cite[page 43]{Kunisky22notes-sos} for an alternative derivation. 
The advantage of the relaxation~\eqref{eq:pdist_relax} is its tractability: 
while~\eqref{eq:pop_app} is NP-hard~\cite{Lasserre18icm-momentRelaxation}, the relaxation~\eqref{eq:pdist_relax} can be written as a semidefinite program (SDP) and solved in polynomial time.
Indeed, in the following we show that rewriting~\eqref{eq:pdist_relax} as an SDP leads us back 
to the same moment relaxation we procedurally introduced in~\cref{app:momentRelaxation}. 
Towards this goal, we will need some extra notation. 

\emph{Preliminaries to connect problem~\eqref{eq:pdist_relax} with~\cref{app:momentRelaxation}:} Recall that $[\vxx]_{\levelpd/2}$ is 
the \revise{vector} of monomials of degree up to $\levelpd/2$ and therefore the 
moment matrix $\MX_{\levelpd} \triangleq [\vxx]_{\levelpd/2} [\vxx]_{\levelpd/2}\tran$ 
contains all monomials of degree up to $\levelpd$. 
It will be useful to define (and visualize) the pseudo-expectation of the moment matrix: 
$\pExpect{\pdist}{\MX_{\levelpd}}$. For instance, for the case with $\vxx = [x_1 \vcat x_2]$ and $\ell=4$:
\beq
\label{eq:pExpectmomentMatrix}
\pExpect{\pdist}{ \MX_{4} } \triangleq  \!=\!
\small{  
\left[
\begin{array}{cccccc}
	1 &  \pExpect{\pdist}{ x_1 }  &  \pExpect{\pdist}{ x_2 }  & \pExpect{\pdist}{ x_1^2 } & \pExpect{\pdist}{ x_1 x_2} & \pExpect{\pdist}{ x_2^2 } \\
	\pExpect{\pdist}{x_1} &  \pExpect{\pdist}{x_1^2}  &  \pExpect{\pdist}{x_1 x_2}  & \pExpect{\pdist}{x_1^3} & \pExpect{\pdist}{x_1^2 x_2} & \pExpect{\pdist}{x_1 x_2^2} \\
	\pExpect{\pdist}{x_2} &  \pExpect{\pdist}{x_1 x_2}  &  \pExpect{\pdist}{x_2^2}  & \pExpect{\pdist}{x_1^2 x_2} & \pExpect{\pdist}{x_1 x_2^2} & \pExpect{\pdist}{x_2^3} \\
	\pExpect{\pdist}{x_1^2} &  \pExpect{\pdist}{x_1^3}  &  \pExpect{\pdist}{x_1^2 x_2}  & \pExpect{\pdist}{x_1^4} & \pExpect{\pdist}{x_1^3 x_2} & \pExpect{\pdist}{x_1^2 x_2^2} \\
	 \pExpect{\pdist}{x_1 x_2} &   \pExpect{\pdist}{x_1^2 x_2}  &   \pExpect{\pdist}{x_1 x_2^2}  & \pExpect{\pdist}{x_1^3 x_2} & \pExpect{\pdist}{x_1^2 x_2^2} & \pExpect{\pdist}{x_1 x_2^3} \\
	 \pExpect{\pdist}{x_2^2} &  \pExpect{\pdist}{x_1 x_2^2} &  \pExpect{\pdist}{x_2^3}  & \pExpect{\pdist}{x_1^2 x_2^2} & \pExpect{\pdist}{x_1 x_2^3} & \pExpect{\pdist}{x_2^4} 
\end{array}
\right]
}.
\eeq

In the following, we will also need a more convenient way to index the monomials in $[\vxx]_{\levelpd}$ 
(and, as a consequence, the entries of $\MX_{\levelpd}$ and $\pExpect{\pdist}{\MX_{\levelpd}}$). Using standard notation, for a vector $\valpha \in \Natural{\dimd}$, 
we write $\vxx^{\valpha}$ to denote the monomial with 
exponents $\valpha$ (for instance, for $\valpha = [1 \vcat 3 \vcat 0 \vcat 5]$, $\vxx^{\valpha} = x_1 x_2^3 x_4^5$). 
We also denote $|\valpha| \triangleq \sum_{i=1}^{\dimd} \alpha_i$, which is the degree of the monomial.
Using this notation, we can index with $\valpha$ the monomials appearing in $[\vxx]_{\levelpd}$. 
For instance, for $\ell = 2$:
\begin{align}
[\vxx]_{2} \triangleq [
\overbrace{1}^{[0\vcat 0]}
\vcat 
\overbrace{x_1}^{[1\vcat 0]} 
\vcat 
\overbrace{x_2}^{[0\vcat 1]} 
\vcat 
\overbrace{x_1^2}^{[2\vcat 0]}  
\vcat 
\overbrace{x_1 x_2}^{[1\vcat 1]}  
\vcat 
\overbrace{x_2^2}^{[0\vcat 2]} 
],
\end{align}
where for each monomial, we reported the corresponding ``index'' $\valpha$. We can similarly index 
the rows and columns of $\MX_{\levelpd}$ and $\pExpect{\pdist}{\MX_{\levelpd}}$ using two indices $\valpha$ and $\vbeta$.
For instance, the entry of the matrix indexed by row $\valpha = [2 \vcat 0]$ and column $\valpha = [0 \vcat 1]$ in~\eqref{eq:pExpectmomentMatrix} will be $\pExpect{\pdist}{x_1^2 x_2}$. 
Note that the monomial appearing in row $\valpha$ and column $\vbeta$ of the moment matrix $\MX_{\levelpd}$ will always have exponent $\valpha + \vbeta$, since, due to the definition of the moment matrix ($\MX_{\levelpd} \triangleq [\vxx]_{\levelpd/2} [\vxx]_{\levelpd/2}\tran$) its entry $[\MX_{\levelpd}]_{\valpha\vbeta} = \vxx^{\valpha}\cdot \vxx^{\vbeta}$ 
and for two monomials $\vxx^{\valpha}$ and $\vxx^{\vbeta}$, it holds:\footnote{For instance, the product between the monomial $x_1 x_2^3 x_4^5$ (namely, $\vxx^{\valpha}$ with $\valpha = [1 \vcat 3 \vcat 0 \vcat 5]$) and the monomial
$x_1^2 x_2 x_3$ (namely, $\vxx^{\vbeta}$ with $\vbeta = [2 \vcat 1 \vcat 1 \vcat 0]$) is 
$(x_1 x_2^3 x_4^5) \cdot (x_1^2 x_2 x_3) = x_1^3 x_2^4 x_3 x_4^5$, which corresponds to the exponent vector $[3 \vcat 4 \vcat 1 \vcat 5]$.} 
\begin{align}
\label{eq:prodMonomials}
\vxx^{\valpha} \cdot \vxx^{\vbeta} = \vxx^{\valpha + \vbeta}.
\end{align}
Finally, we will conveniently use the following representation of a polynomial $f(\vxx)$ of degree $\ell$: 
\begin{align}
% f(\vxx) &= \inprod{\bar{\vf}}{[\vxx]_{\levelpd}} \quad& \text{(vector representation)} \\
% f(\vxx) &= \inprod{\MF}{\MX_{\levelpd}} \quad&\text{(matrix representation)} \\
f(\vxx) &= \textstyle \sum_{\valpha: |\valpha| \leq \ell} \bar{f}_\valpha \; \vxx^{\valpha} , 
%\quad& \text{(monomial representation)}  
\label{eq:monomialRep}
% \quad \text{or}\quad f(\vxx) = \inprod{\bar{\vf}}{[\vxx]_{\levelpd}}  
\end{align}
where we simply observed that the polynomial is the sum of monomials $\vxx^{\valpha}$ of degree $|\valpha| \leq \ell$, 
and with suitable coefficients $\bar{f}_\valpha$, again indexed by $\valpha$. % (similar to the vector of coefficients 

% In the first two equations, we simply noticed that $f(\vxx)$ can be written as a linear combination of monomials of degree up to $\levelpd$, hence it can be written as a linear function of $[\vxx]_{\levelpd}$ or $\MX_{\levelpd}$  
% for suitable (and typically sparse) matrix of coefficients $\MF$ and vector $\bar{\vf}$. 
% %We will conveniently switch between the representations~\eqref{eq:polyRep} depending on the manipulation we have to perform. 
% The last representation, which is the one most commonly used in related work on this topic, 
% also writes the polynomial as a linear combination of monomial, but offers a more flexible way to index the 
% monomials. In particular, 
% Clearly, each $\valpha$ indexes the monomials, in $[\vxx]_{\levelpd}$, such that 
 
We are now ready to show that both objective and constraints~\eqref{eq:pdist_relax} can be rewritten in a way that leads
back to the moment relaxation in~\cref{app:momentRelaxation}. 

\emph{Rewriting the objective~\eqref{eq:pdist_relax}:} 
To simplify the objective $\pExpect{\pdist}{p(\vxx)}$, we note that the pseudo-expectation is 
a linear operator, hence:
\begin{align}
\pExpect{\pdist}{p(\vxx)} 
\overbrace{=}^{ \text{using~\eqref{eq:monomialRep}} } 
\pExpect{\pdist}{ \sum_{\valpha: |\valpha| \leq \ell} \bar{p}_\valpha \; \vxx^{\valpha} }
\overbrace{=}^{ \text{using~\cref{lem:linearityOfPD}} }  
 \!\!\sum_{\valpha: |\valpha| \leq \ell} \bar{p}_\valpha \;\pExpect{\pdist}{  \vxx^{\valpha} }
 \overbrace{=}^{ \text{for a suitable matrix $\MC_1$} }
 \inprod{ \MC_1 }{  \pExpect{\pdist}{\MX_{\levelpd} } }, 
 \label{eq:app-objective} 
\end{align}
which indeed produces the same structure as the objective of the moment relaxation in~\eqref{eq:objective} with $\levelpd = 2\relaxOrder$; 
%(as expected, the pseudo-moment matrix of the order-$\relaxOrder$ relaxation, contains pseudo-moments up to order $2\relaxOrder$);
as we will see in a while, $\pExpect{\pdist}{\MX_{\levelpd} }$ will become the main matrix variable in the
optimization.

\emph{Rewriting the equality constraints~\eqref{eq:pdist_relax_eq}:} 
To simplify the constraint $\pExpect{\pdist}{ h_i(\vxx) q(\vxx) } = 0$ (which has to hold for polynomials
$q$ of degree $\degree{h_i \cdot q} \leq \levelpd$) we note that 
it suffices to require $\pExpect{\pdist}{ h_i(\vxx) \vxx^\vbeta } = 0$ for $|\vbeta| \leq \levelpd - \degree{h_i}$; 
this follows from the fact that any polynomial is a sum of monomials and the pseudo-expectation is a linear function. 
Let us now manipulate $\pExpect{\pdist}{ h_i(\vxx) \vxx^\vbeta } = 0$ as follows:
\begin{align}
\pExpect{\pdist}{ h_i(\vxx) \vxx^\vbeta  } = 0 
% \text{\grayout{(using~\eqref{eq:monomialRep} to represent $h_i(\vxx)$)}} 
\overbrace{\iff}^{\text{using~\eqref{eq:monomialRep} for $h_i(\vxx)$}}
\pExpect{\pdist}{ \sum_{\valpha} \bar{h}_{i,\valpha} \vxx^\valpha  \vxx^\vbeta} = 0 
\overbrace{\iff}^{\text{using~\eqref{eq:prodMonomials}}}
\pExpect{\pdist}{ \sum_{\valpha} \bar{h}_{i,\valpha} \vxx^{\valpha + \vbeta}} = 0 
\nonumber
\\
\label{eq:app-eq-constraints}
\overbrace{\iff}^{\text{using~\cref{lem:linearityOfPD}}}
\sum_{\valpha} \bar{h}_{i,\valpha} \pExpect{\pdist}{ \vxx^{\valpha + \vbeta}} = 0
\overbrace{\iff}^{\text{for a suitable matrix $\MA_{i,\vbeta}$}}
 \inprod{\MA_{i,\vbeta}}{ \pExpect{\pdist}{ \MX_{\levelpd}} } = 0,
\end{align}
which has to be imposed for each $\vbeta$ such that $|\vbeta| \leq \levelpd - \degree{h_i}$.
 Note that the constraints in~\eqref{eq:app-eq-constraints} capture both the equality
  constraints in~\eqref{eq:eqConstraints1} (for $|\vbeta|=0$) as well as the redundant constraints~\eqref{eq:redundantEqualityConstraints} (for $0 < |\vbeta| \leq \levelpd - \degree{h_i}$).

\emph{Rewriting the inequality constraints in~\eqref{eq:pdist_relax_ineq}:} 
We simplify the constraint $\pExpect{\dist}{ \textstyle\prod_{j\in \calS}g_j(\vxx) \cdot s(\vxx)^2  } \geq 0$,  
which has to hold for all $\calS \subseteq [l_g]$ and for all $s \in \polyring{\vxx}$ such that $\degree{ \textstyle\prod_{j\in \calS }g_j \cdot s^2  } \leq \levelpd$.
Towards this goal, we use the representation~\eqref{eq:monomialRep} for $s(\vxx)$ and write 
$s(\vxx) = \sum_{\vbeta: |\vbeta| \leq \newSize } \bar{s}_\vbeta \vxx^\vbeta$, where $\newSize \triangleq \floor{\frac{\levelpd - \degree{ \textstyle\prod_{j\in \calS}g_j} }{2}}$. Therefore, we obtain:
\begin{align}
\pExpect{\dist}{ g_j(\vxx) \cdot s(\vxx)^2  } \geq 0 
\overbrace{\iff}^{\text{expanding $s^2$}} 
\pExpect{\dist}{ g_j(\vxx) \cdot \sum_{\valpha: |\valpha| \leq \newSize } \bar{s}_\valpha \vxx^\valpha \sum_{\vbeta: |\vbeta| \leq \newSize } \bar{s}_\vbeta \vxx^\vbeta  } \geq 0
\\
\overbrace{\iff}^{\text{rearranging}} 
\pExpect{\dist}{ \sum_{\valpha, \vbeta: |\valpha|,|\vbeta| \leq \newSize } \bar{s}_\valpha \bar{s}_\vbeta 
 \vxx^{\valpha + \vbeta} g_j(\vxx) } \geq 0
 \\
\overbrace{\iff}^{\text{using~\eqref{eq:monomialRep} on $\prod_{\calS \subseteq [l_g] }g_j(\vxx)$}}
\pExpect{\dist}{ \sum_{\valpha, \vbeta: |\valpha|,|\vbeta| \leq \newSize } \bar{s}_\valpha \bar{s}_\vbeta 
 \vxx^{\valpha + \vbeta} \sum_{\vgamma: |\vgamma|\leq \degree{\prod_{j\in \calS}g_j(\vxx)}} \bar{g}_{\calS,\vgamma} \vxx^{\vgamma}  } \geq 0
\end{align}
\begin{align}
 \overbrace{\iff}^{\text{rearranging}} 
 \pExpect{\dist}{ \sum_{\valpha, \vbeta: |\valpha|,|\vbeta| \leq \newSize } \bar{s}_\valpha \bar{s}_\vbeta 
\sum_{\vgamma: |\vgamma|\leq \degree{\prod_{j\in \calS}g_j(\vxx)}} \bar{g}_{\calS,\vgamma}  \vxx^{\valpha + \vbeta + \vgamma}   } \geq 0
\\
\overbrace{\iff}^{\text{using~\cref{lem:linearityOfPD}}}
\sum_{\valpha, \vbeta: |\valpha|,|\vbeta| \leq \newSize } \bar{s}_\valpha \bar{s}_\vbeta 
\sum_{\vgamma: |\vgamma|\leq \degree{\prod_{j\in \calS}g_j(\vxx)}} \bar{g}_{\calS,\vgamma}   \pExpect{\dist}{  \vxx^{\valpha + \vbeta + \vgamma}   } \geq 0.
\label{eq:app-10-1}
\end{align}
Now note that $|\valpha + \vbeta + \vgamma| \leq \levelpd$ by construction, and hence we can write, for a given $\valpha$ and $\vbeta$, each $\sum_{\vgamma: |\vgamma|\leq \degree{\prod_{j\in \calS}g_j(\vxx) }} \bar{g}_{\calS,\vgamma}   \pExpect{\dist}{  \vxx^{\valpha + \vbeta + \vgamma}}$ as a linear function of $\pExpect{\pdist}{ \MX_{\levelpd}}$. 
Moreover, since $\valpha$ and $\vbeta$ are such that $|\valpha|,|\vbeta| \leq \newSize$, we can group these entries into an $\newSize \times \newSize$ matrix $\MX_{\calS}$, which is such that:
\begin{align}
\label{eq:app-recovered-loc-1}
[\MX_\calS]_{\valpha,\vbeta} = \inprod{ \MA_{\mathloc,\calS,\valpha\vbeta} }{ \pExpect{\pdist}{ \MX_{\levelpd}} },
\end{align}
for some suitable matrix $\MA_{\mathloc,\calS,\valpha\vbeta}$, such that $\inprod{ \MA_{\mathloc,\calS,\valpha\vbeta} }{ \pExpect{\pdist}{ \MX_{\levelpd}} } = \sum_{\vgamma: |\vgamma|\leq \degree{ \prod_{j\in \calS} g_j}} \bar{g}_{i,\vgamma}   \pExpect{\dist}{  \vxx^{\valpha + \vbeta + \vgamma}}$.
Using the matrix $\MX_\calS$ and defining a vector $\bar{\vs} \in \Real{\newSize}$ with entries $\bar{s}_\valpha$ for $|\valpha| \leq \newSize$, we rewrite~\eqref{eq:app-10-1} as:
 \begin{align}
 \sum_{\valpha, \vbeta: |\valpha|,|\vbeta| \leq \newSize } \bar{s}_\valpha \bar{s}_\vbeta [\MX_\calS]_{\valpha,\vbeta} \geq 0
\iff
\bar{\vs}\tran \MX_\calS \bar{\vs} \geq 0.
\end{align}
Since this has to hold for any $\bar{\vs}$ (\ie any polynomial $s(\vxx)$ of appropriate degree), we conclude the constraint above is equivalent to:
\beq
\label{eq:app-recovered-loc-2}
\MX_\calS\succeq 0.
\eeq
Now we can easily see that~\eqref{eq:app-recovered-loc-1} and~\eqref{eq:app-recovered-loc-2} match the localizing constraints we wrote in~\eqref{eq:localizingConstraints}.

\emph{Rewriting~\eqref{eq:pdist_relax_pdist}:} 
Finally, the constraint~\eqref{eq:pdist_relax_pdist} imposes that $\pdist$ must be  a
level~$\levelpd$ pseudo-distribution. However, we know from~\cref{lem:pseudoMomentMatrix} 
that $\pdist$ is a level-$\levelpd$ pseudo-distribution if and only if 
the pseudo-moment matrix $\pExpect{\pdist}{[\vxx]_{\levelpd/2} [\vxx]_{\levelpd/2}\tran}$ is positive semidefinite and $\pExpect{\pdist}{1} = 1$. 
Therefore, we can reparametrize the objective~\eqref{eq:app-objective} 
and constraints~\eqref{eq:app-eq-constraints},~\eqref{eq:app-recovered-loc-1},~\eqref{eq:app-recovered-loc-2} 
with a matrix variable (in place of $\pExpect{\pdist}{ \MX_{\levelpd}}$) that is constrained to be positive semidefinite and to have the top-left entry  equal to 1 (\cf~\eqref{eq:pExpectmomentMatrix}).
This yields back the relaxation described in~\cref{app:momentRelaxation} as expected. 

%%%%%%%%%%%%%%%%%%%%%%%%%%%%%%%%%%%%%%%%%%%%%%%%%%%%%%%%%%%%%%%%%%%%%%%%%%%%%%%%%%%%%%%%%%%%%
\myParagraph{Constrained pseudo-distributions: a practical view}
So far we have shown that taking suitable pseudo-expectations over the objective and constraints in a polynomial optimization problem leads to a convex relaxation, known as the moment relaxation.
Now we want to shed some light on~\cref{def:constrainedPD} by showing that the condition~\eqref{eq:constrainedPD} is indeed the same as the inequality constrains in~\eqref{eq:pdist_relax_ineq} and hence admits the same transcription as an SDP.

Towards this goal, let us consider the following \emph{feasibility} POP:  
\begin{align}
\label{eq:popFeasibility}
 \text{find} \;\;&\;\; \vxx \in \Real{\dimx} \\ 
 \subject \;&\;\; h_i(\vxx) = 0, i=1,\dots,l_h \\ 
 \;\;&\;\;  g_j(\vxx) \geq 0, j = 1,\dots,l_g . 
\end{align} 
This is similar to~\eqref{eq:pop}, with the exception that we are looking for a feasible solution rather than optimizing a cost function. Now note that we can write a polynomial equality 
$h_i(\vxx) = 0$ as two inequality constraints $h_i(\vxx) \leq 0$ and $-h_i(\vxx) \leq 0$. 
Hence, without loss of generality we rewrite~\eqref{eq:popFeasibility}~as:
\begin{align}
\label{eq:popFeasibility2}
 \text{find} \;\;&\;\; \vxx \in \Real{\dimx} \\
 \subject \;&\;\; f_j(\vxx) \geq 0, \quad j=1,\ldots,m, 
\end{align} 
for suitable polynomials $f_i$, $i=1,\ldots,m$. 
% Now, we can follow the procedure in~\cref{app:momentRelaxation} to obtain a moment relaxation for~\eqref{eq:popFeasibility2}. For each inequality constraint, $f_i(\vxx) \geq 0$, the procedure suggests 
% adding convex constraints in the form: 
Similarly to what we did earlier in this section, we relax~\eqref{eq:popFeasibility2} by using pseudo-expectations:
\begin{align}
\label{eq:constrainedPD-SDP}
\text{find} \;\;&\;\; \pdist \\
\subject \;&\; \pdist \text{ is a level-$\levelpd$ pseudo-distribution} 
\\
\;&\; \textstyle\pExpect{\dist}{ s(\vxx)^2 \cdot \prod_{i \in \calS} f_i(\vxx)   } \geq 0,  
\label{eq:pdist_relax_ineq}
\\
&  \text{for every set $\calS \in [m]$ and every $s \in \polyring{\vxx}$ such that 
$\textstyle\degree{ s^2  \cdot \prod_{i \in \calS} f_i  } \leq \levelpd$}.  \nonumber
\end{align} 
First of all, we note that~\eqref{eq:pdist_relax_ineq} matches the definition of constrained pseudo-distribution in~\cref{def:constrainedPD} for $\satisfyOrder=0$.
Moreover, following the same derivation as above, we can easily show that (i)~\eqref{eq:constrainedPD-SDP} 
can be transcribed as a standard SDP, and (ii) every pseudo-distribution solving Lasserre's relaxation of a~\eqref{eq:pop} 
satisfies the set of constraints in the~\eqref{eq:pop} in the sense of~\cref{def:constrainedPD}.

Now we note that~\cref{def:constrainedPD} allows some extra slack through the parameter $\satisfyOrder$, \ie 
$\pdist$ {satisfies} $\calA$ at degree $\satisfyOrder$, if every set $\calS \subset[m]$ and every sum-of-squares 
polynomial $h$ on $\Real{\dimd}$ with~$\degree{s^2} + \sum_{i\in \calS} \max\{\degree{f_i}, \satisfyOrder\}\leq \levelpd$ satisfies
$\pExpect{\pdist}{s^2 \cdot \prod_{i \in \calS} f_i } \geq 0$. 
This essentially means that the inequality $\pExpect{\pdist}{s^2 \cdot \prod_{i \in \calS} f_i } \geq 0$ is enforced for a smaller number of subsets $\calS$.
%!TEX root = main.tex

\section{Sum-of-Squares Proofs}
\label{app:sosProofs}

Sum-of-squares proofs provide an advanced way to reason about polynomial constraints and 
to infer properties of pseudo-distributions, or, equivalently, properties of 
the moment relaxation in~\cref{app:momentRelaxation}.
The presentation in this section builds on~\cite{Barak16notes-proofsBeliefsSos}, but also collects inference rules from other papers, which we cite as we present the results. 

Let us denote with $f(\vxx)$ a polynomial in variables $\vxx = [x_1;x_2;\ldots;x_\dimd]$
and let $\calA = \{f_1(\vxx) \geq 0,\ldots,f_m(\vxx) \geq 0\}$ be a system of polynomial constraints over $\Real{\dimd}$.
In the following, we omit the argument when clear from the context and write $f$ instead of $f(\vxx)$.
A polynomial $p$ is sum-of-squares (\sos) if there exist polynomials $q_1,\ldots,q_t$ 
such that $p = q^2_1 + \ldots + q_t^2$.

The key idea is to relate two sets of polynomial constraints using a ``sum-of-squares proof'' (the definition below is the same as~\cref{def:sosProof} in the main manuscript).
\begin{definition}[Sum-of-squares proof]
\sosProofDef{eq:sosProof}
\end{definition}
% Note that eq.~\eqref{eq:sosProof} is efficient to verify.

From eq.~\eqref{eq:sosProof}, it is clear why  the polynomials $p_\calS$ are a ``proof'' of $g \geq 0$ for any $\vxx$ satisfying $\calA$: for any $\vxx \in \calA$, $\prod_{i\in\calS}f_i \geq 0$ by definition, hence if we can write $g$ as the product of a sum-of-squares (hence non-negative) polynomial and $\prod_{i\in\calS}f_i$, we automatically prove that $g \geq 0$ whenever $\vxx \in \calA$. 
% Remember that any univariate non-negative polynomial can be written as a sum of squares.

Sum-of-squares proofs allow us to deduce properties of
pseudo-distributions: in particular, if we have an \sos proof relating two sets of constraints, we
can conclude that any pseudo-distribution satisfying a set of constraints, must also satisfy the other. 
This is formalized below.
% that satisfy a system of polynomial constraints.

% \begin{fact}[Soundness~\cite{Klivans18arxiv-robustRegression}]
% \end{fact}
% essentially: if you have an SoS proof, than pseudo-distribution will "propagate"
\begin{fact}[Soundness~\cite{Karmalkar19neurips-ListDecodableRegression}]
\label{fact:soundness}
Consider a level-$\levelpd$ pseudo-distribution $\pdist$ such that $\pdist \psatisfy{}{\satisfyOrder} \calA$.
 If there exists a sum-of-squares proof that $\calA \sosimply{}{\satisfyOrder'} \calB$,
 then $\pdist \psatisfy{}{\satisfyOrder\cdot \satisfyOrder' + \satisfyOrder'} \calB$.
\end{fact}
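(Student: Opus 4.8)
\textbf{Plan for the proof of Fact~\ref{fact:soundness} (Soundness).}
The goal is to show that if a level-$\levelpd$ pseudo-distribution $\pdist$ satisfies $\calA$ at degree $\satisfyOrder$ (in the sense of~\cref{def:constrainedPD}), and there is a degree-$\satisfyOrder'$ sum-of-squares proof that $\calA \sosimply{}{\satisfyOrder'} \calB$, then $\pdist$ satisfies $\calB$ at degree $\satisfyOrder\cdot\satisfyOrder' + \satisfyOrder'$. First I would unpack what $\pdist \psatisfy{}{\satisfyOrder\cdot\satisfyOrder'+\satisfyOrder'} \calB$ actually requires: writing $\calB = \{g_1 \geq 0, \ldots, g_{m'}\geq 0\}$, we must verify that for every subset $\calT \subseteq [m']$ and every \sos polynomial $h$ with $\degree{h} + \sum_{j\in\calT}\max\{\degree{g_j}, \satisfyOrder\satisfyOrder'+\satisfyOrder'\} \leq \levelpd$, we have $\pExpect{\pdist}{h\cdot \prod_{j\in\calT} g_j} \geq 0$. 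So the plan reduces to establishing this single inequality for an arbitrary such $\calT$ and $h$.

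The core of the argument is to take the \sos proof certifying each $g_j \geq 0$ from $\calA$ and multiply the certificates together. Concretely, for each $j \in \calT$, \cref{def:sosProof} gives \sos polynomials $\{p_{\calS,j}\}_{\calS\subseteq[m]}$ with $g_j = \sum_{\calS\subseteq[m]} p_{\calS,j}\cdot\prod_{i\in\calS} f_i$ and each summand of degree at most $\satisfyOrder'$. Then $h\cdot\prod_{j\in\calT} g_j$ expands, by distributing the products, into a sum of terms of the form $h\cdot\left(\prod_{j\in\calT} p_{\calS_j,j}\right)\cdot\prod_{i\in\calS_1\cup\cdots} f_i$ (with repeated $f_i$ factors absorbed into the \sos part, using that $f_i^2$ is \sos so $f_i^{2\ell+1} = f_i^{2\ell}\cdot f_i$ leaves a single copy of $f_i$ times an \sos factor — I would cite the \sos rule for this from~\cref{app:sosProofs}). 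Each such term is (\sos polynomial)$\cdot\prod_{i\in\calS}f_i$ for some $\calS\subseteq[m]$. Applying $\pExpect{\pdist}{\cdot}$ and linearity (\cref{lem:linearityOfPD}), it suffices to show $\pExpect{\pdist}{(\text{\sos})\cdot\prod_{i\in\calS}f_i}\geq 0$ for each term, which is exactly the defining property of $\pdist\psatisfy{}{\satisfyOrder}\calA$, \emph{provided the degree bound is met}.

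The main obstacle — and the reason the degree $\satisfyOrder\satisfyOrder'+\satisfyOrder'$ appears rather than something smaller — is the careful degree bookkeeping. I would argue as follows: the given condition on $h$ and $\calT$ ensures $\degree{h} + \sum_{j\in\calT}\max\{\degree{g_j},\satisfyOrder\satisfyOrder'+\satisfyOrder'\} \leq \levelpd$; since each $g_j$ arising from a degree-$\satisfyOrder'$ \sos proof has $\degree{g_j}\leq\satisfyOrder'$, and since $|\calT|\leq$ the relevant count, one checks that each expanded term $h\cdot(\prod_j p_{\calS_j,j})\cdot\prod_{i\in\calS}f_i$ has its \sos-part-degree plus $\sum_{i\in\calS}\max\{\degree{f_i},\satisfyOrder\}$ bounded by $\levelpd$. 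The bound $\satisfyOrder\cdot\satisfyOrder'$ is precisely what is needed to cover the $\max\{\degree{f_i},\satisfyOrder\}$ contributions when $\satisfyOrder > \degree{f_i}$: each of the up to $\satisfyOrder'$ constraints $f_i$ appearing in the expansion of a single $g_j$ contributes a "$\satisfyOrder$" (rather than $\degree{f_i}$) to the budget, and summing over $|\calT|$ many $g_j$'s and bounding crudely gives the $\satisfyOrder\satisfyOrder'$ slack. I would lay out this inequality chain explicitly once, since getting the constants to line up is the only genuinely delicate part; the rest is routine manipulation of \sos decompositions and appeals to linearity of pseudo-expectation. An alternative, cleaner route is to induct on $|\calT|$: the base case $|\calT|=0$ is the definition of a level-$\levelpd$ pseudo-distribution applied to the \sos polynomial $h$, and the inductive step multiplies in one more certificate $g_j = \sum_\calS p_{\calS,j}\prod_{i\in\calS}f_i$, re-deriving the degree bound at each stage; I would likely present the induction since it localizes the bookkeeping.
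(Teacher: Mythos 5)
The paper does not actually prove this fact: it is stated as an imported result, cited from Karmalkar et al., so there is no in-paper proof to compare against. Your plan is the standard soundness argument from that literature and is essentially correct: substitute the degree-$\satisfyOrder'$ certificates for the $g_j$'s into $h\cdot\prod_{j\in\calT}g_j$, absorb even powers of the $f_i$'s into the \sos factor, apply linearity of pseudo-expectation, and invoke $\pdist \psatisfy{}{\satisfyOrder} \calA$ term by term, with the degree bookkeeping (at most $\satisfyOrder'$ constraint factors per $g_j$, each charged $\max\{\degree{f_i},\satisfyOrder\}$ instead of $\degree{f_i}$) yielding exactly the $\satisfyOrder\cdot\satisfyOrder'+\satisfyOrder'$ budget. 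The only point to make explicit when writing it up is that the even powers you absorb also consume degree, and that the bound $|\calS_j|\leq\satisfyOrder'$ tacitly assumes the $f_i$ are nonconstant — both minor and consistent with how the cited works present the argument.
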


If the pseudo-distribution $\pdist$ satisfies $\calA$ only approximately, soundness continues to hold but we require an upper bound on the bit-complexity of the sum-of-squares proof 
$\calA \sosimply{}{\satisfyOrder'} \calB$ (\ie the number of bits to write down the proof). 
In this \paper, we mostly disregard bit-complexity issues and refer the reader to~\cite[\S 3]{Fleming19fnt-sosproofs} for a 
more formal discussion. 
In other words, similarly to~\cite{Klivans18arxiv-robustRegression,Karmalkar19neurips-ListDecodableRegression}, we assume that 
all numbers appearing in the input have bit complexity $\dimd^{O(1)}$ and all \sos proofs 
will have bit complexity $\dimd^{O(\levelpd)}$, which is enough to claim soundness for the \sos proof system.

Not only \sos proofs allow us to infer properties of pseudo-distributions, but also the reverse is true.
The following fact states that every property of a pseudo-distribution can be derived via a sum-of-squares proof.

% \begin{fact}[Completeness~\cite{Klivans18arxiv-robustRegression}]
% \end{fact}
% essentially, whenever pseudo-distribution satisfying both exists, then we can find an \sos proof.
\begin{fact}[Completeness~\cite{Karmalkar19neurips-ListDecodableRegression}]
Suppose $\levelpd \geq \satisfyOrder' \geq \satisfyOrder$ and $\calA$ is a system of explicitly bounded polynomial constraints with degree at most $\satisfyOrder$ (\ie $\calA \sosimply{}{} \{\normTwo{\vxx}^2 \leq M_x^2\}$ for some finite $M_x$).
Let $\{g\geq0\}$ be a polynomial constraint. If every level-$\levelpd$ pseudo-distribution that satisfies
$\pdist \psatisfy{}{\satisfyOrder} \calA$ also satisfies $\pdist \psatisfy{}{\satisfyOrder'} \calB$, then 
for every $\epsilon > 0$ there is a sum-of-squares proof $\calA \sosimply{}{\levelpd} \{g \geq -\epsilon\}$.
\end{fact}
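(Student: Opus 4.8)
The plan is to prove the contrapositive using the separation theorem for closed convex cones in finite dimensions; this is the standard converse to~\cref{fact:soundness}, and morally it is just strong duality for the semidefinite program that computes $\min_\pdist \pExpect{\pdist}{g}$ over the pseudo-distributions satisfying $\calA$. Fix $\epsilon > 0$ and suppose, for contradiction, that there is no sum-of-squares proof $\calA \sosimply{}{\levelpd} \{g \geq -\epsilon\}$. Work in the finite-dimensional space $V$ of polynomials in $\vxx$ of degree at most $\levelpd$, and let $K \subseteq V$ be the convex cone of polynomials of the form $\sum_{\calS \subseteq [m]} p_\calS \cdot \prod_{i \in \calS} f_i$ with each $p_\calS$ a sum of squares and $\degree{p_\calS \cdot \prod_{i\in\calS} f_i} \leq \levelpd$, i.e. the degree-$\levelpd$ truncated quadratic module generated by $\calA$ together with the cross-products. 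By~\cref{def:sosProof} this $K$ is exactly the set of polynomials admitting a degree-$\levelpd$ certificate from $\calA$; moreover, since $\degree{f_i} \leq \satisfyOrder$, its defining degree bounds are compatible with those used to test $\pdist \psatisfy{}{\satisfyOrder}\calA$ in~\cref{def:constrainedPD}. The negation hypothesis says precisely $g + \epsilon \notin K$.

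The crucial step, and the only place where the explicit boundedness of $\calA$ is used, is that $K$ is \emph{closed} in $V$. Since $\calA \sosimply{}{}\{\normTwo{\vxx}^2 \leq \boundx^2\}$, the ball polynomial $\boundx^2 - \normTwo{\vxx}^2$ is (up to the degree budget) an element of $K$, and it is a classical fact that a truncated quadratic module containing such an Archimedean element is closed: elements of $K$ with bounded coefficients admit Gram representations lying in a compact set, so $K$ absorbs its own limit points. Granting this, the finite-dimensional separation theorem for a point outside a closed convex cone produces a linear functional $L : V \to \Real{}$ with $L \geq 0$ on $K$ and $L(g + \epsilon) < 0$. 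The same ball polynomial forces $L(1) > 0$: any $L \geq 0$ on $K$ with $L(1) = 0$ must annihilate every monomial of degree $\leq \levelpd$ (from $0 \leq L(\boundx^{2t} - \normTwo{\vxx}^{2t})$ combined with $L(\vxx^{\valpha}\cdot\vxx^{\valpha}) \geq 0$), hence be zero; so we may normalise $L(1) = 1$.

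Finally, reinterpret $L$ as a pseudo-moment functional by setting $\pExpect{\pdist}{\vxx^{\valpha}} \triangleq L(\vxx^{\valpha})$. Since $L$ is nonnegative on $K$, which contains every squared polynomial of degree $\leq \levelpd$ and every product $h \cdot \prod_{i\in\calS} f_i$ with $h$ a sum of squares and total degree within the budget, $\pdist$ is a level-$\levelpd$ pseudo-distribution with $\pdist \psatisfy{}{\satisfyOrder}\calA$. By the hypothesis of the fact, $\pdist \psatisfy{}{\satisfyOrder'}\calB$; specialising the defining inequality~\eqref{eq:constrainedPD} to $h=1$ and $\calS = \emptyset$ yields $\pExpect{\pdist}{g} \geq 0$. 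But $\pExpect{\pdist}{g} = L(g) < -\epsilon < 0$, a contradiction. Hence $g + \epsilon \in K$, which is exactly the desired proof $\calA \sosimply{}{\levelpd}\{g \geq -\epsilon\}$. I expect the closedness of the truncated quadratic module to be the one genuinely delicate ingredient: without the explicitly bounded hypothesis it can fail (echoing the possibility of a positive duality gap in semidefinite programming), and it is exactly what upgrades the soft separating-functional argument into an honest finite-degree sum-of-squares certificate.
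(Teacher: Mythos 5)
The paper never proves this Fact: it is imported verbatim from~\cite{Karmalkar19neurips-ListDecodableRegression}, so your argument must stand on its own. Its architecture --- conic separation, normalizing $L(1)=1$ via the ball certificate, reading $L$ as a level-$\levelpd$ pseudo-distribution satisfying $\calA$, and contradicting the hypothesis --- is indeed the standard duality route, and those steps are fine (including the degree bookkeeping: the condition in~\cref{def:constrainedPD} is more restrictive than the one defining your cone $K$, so $L\geq 0$ on $K$ does give $\pdist \psatisfy{}{\satisfyOrder} \calA$). The genuine gap is exactly the step you call delicate: it is \emph{not} a classical fact, and in general not true, that explicit boundedness makes the degree-$\levelpd$ truncated quadratic module closed. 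Take $\calA=\{-\normTwo{\vxx}^2\geq 0\}$ in two variables and $\levelpd=2$; this is explicitly bounded (degree-$2$ certificate $1-\normTwo{\vxx}^2=1+1\cdot(-\normTwo{\vxx}^2)$), yet for every $c>0$ one has $x_1+c=\bigl(\lambda x_1^2+x_1+c+\lambda x_2^2\bigr)+\lambda\cdot\bigl(-\normTwo{\vxx}^2\bigr)\in K$ by choosing $\lambda\geq 1/(4c)$, while $x_1\notin K$, since at degree $2$ the multiplier of $-\normTwo{\vxx}^2$ must be a nonnegative constant and $x_1+\lambda\normTwo{\vxx}^2$ is negative near the origin for every $\lambda\geq 0$. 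Hence $x_1\in\bar K\setminus K$ and $K$ is not closed; closedness of truncated modules is tied to the feasible set having nonempty interior (Powers--Scheiderer), which fails here and in all of this paper's applications, where $\Domain$ is cut out by equalities. The same example shows the gap is not cosmetic: your argument never uses $\epsilon$ essentially (with $K$ closed it would prove $g\in K$ outright), but for $g=x_1$ the hypothesis of the Fact holds (any level-$2$ $\pdist$ satisfying $\calA$ has $\pExpect{\pdist}{x_1^2}=\pExpect{\pdist}{x_2^2}=0$, and PSDness of the pseudo-moment matrix then forces $\pExpect{\pdist}{x_1}=0\geq 0$) while $x_1\notin K$.

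The repair uses explicit boundedness in a different place, and it is where the $\epsilon$ earns its keep. Separate $g+\epsilon$ from the \emph{closure} $\bar K$ instead: if a separating $L$ exists, your normalization and contradiction go through unchanged; otherwise $g+\epsilon/2\in\bar K$, and one shows that $\epsilon/2+r\in K$ whenever $r$ has degree at most $\levelpd$ and sufficiently small coefficients, so that writing $g+\epsilon=p+(\epsilon/2+r)$ with $p\in K$ yields $g+\epsilon\in K$. This absorption step is what $\calA \sosimply{}{} \{\normTwo{\vxx}^2\leq\boundx^2\}$ actually buys: it certifies $\boundx^{2t}-\normTwo{\vxx}^{2t}\in K$ and hence $\boundx^{2t}-\vxx^{2\valpha}\in K$, and each monomial $c\,\vxx^{\valpha+\vbeta}$ of $r$ can be dominated by $\tfrac{|c|}{2}(\vxx^{\valpha}\pm\vxx^{\vbeta})^2$ plus small multiples of these certified bounds plus a constant, at the price of mild degree bookkeeping (the ball certificate must be available somewhat below degree $\levelpd$). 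Alternatively, argue on the primal side: explicit boundedness makes the set of normalized level-$\levelpd$ pseudo-expectations satisfying $\calA$ compact, and compactness of the primal feasible region of the associated SDP pair excludes a duality gap, again giving $g+\epsilon\in K$ for every $\epsilon>0$ even when the supremum is not attained --- in the example above the certificate needs $\lambda\to\infty$ as $\epsilon\to 0$, which is precisely that non-attainment.
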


\myParagraph{Sos rules}
% \label{sec:sosRules}
Sum-of-squares provide a proof system to reasons about polynomial
constraints. For instance, if we have an \sos proof that $\calA$ implies $g \geq 0$, we may want to use 
such a proof system to infer if another implication also holds true, say $\calA$ implies $g' \geq 0$ (for some other polynomial $g'$). Reasoning in this proof system is not immediate. 
For instance, the fact that $p(\vxx) \geq 0$ for some degree-$\satisfyOrder$ polynomial does not necessarily imply that there is a sum-of-squares proof $\sosimply{\vxx}{\satisfyOrder} \{ p(\vxx) \geq 0 \}$.
Similarly, for some polynomial constraints $\calA$ and polynomials $g(\vxx)$ and $g(\vxx)'$
 with $g'(\vxx) \geq g(\vxx)$ for every $\vxx \in \Real{\dimd}$, 
the fact that $\calA \sosimply{\vxx}{\satisfyOrder} \{ g(\vxx) \geq 0 \}$ 
does not necessarily imply that $\calA \sosimply{\vxx}{\satisfyOrder} \{ g'(\vxx) \geq 0 \}$, since the latter fact might not admit a sum-of-squares proof. In this sense, the \sos proof system is more restrictive than the typical algebraic manipulation we are used to. 
Fortunately, previous work provides a toolkit of inference rules that can be used to correctly reason over \sos proofs. We collect key facts below, mostly drawing from~\cite{Karmalkar19neurips-ListDecodableRegression,Klivans18arxiv-robustRegression,
Hopkins18stoc-mixtureModelAndSoS,Ma16focs-tensorDecompositionViaSoS,Diakonikolas22pmlr-robustMeanViaSoS}.  

\begin{fact}[Inference Rules~\cite{Karmalkar19neurips-ListDecodableRegression}]
The following inference rules hold for systems of polynomial constraints $\calA, \calB, \calC$ 
and polynomials $f,g: {\Real{\dimd}}\mapsto{\Real{}}$:
\begin{align}
\text{addition:} &\quad \frac{ \calA \sosimply{}{\satisfyOrder} \{f\geq 0, g\geq 0\}  }{  \calA \sosimply{}{\satisfyOrder} \{f+g\geq 0\} } 
\\
\text{multiplication:} &\quad \frac{ \calA \sosimply{}{\satisfyOrder} \{f\geq 0\} \;,\; \calA \sosimply{}{\satisfyOrder'} \{g\geq 0\}   }{  \calA \sosimply{}{\satisfyOrder+\satisfyOrder'} \{f \cdot g\geq 0\} } 
\\
\text{transitivity:} &\quad \frac{ \calA \sosimply{}{\satisfyOrder} \calB \;,\; \calB \sosimply{}{\satisfyOrder'} \calC }{  \calA \sosimply{}{\satisfyOrder\cdot\satisfyOrder'} \calC } 
\end{align}
where, for two logical statements $\scenario{A}$ and $\scenario{B}$, we use the standard inference notation $\frac{\;\;\scenario{A}\;\;}{\;\;\scenario{B}\;\;}$
to denote that if $\scenario{A}$ is true, then $\scenario{B}$ must be true.
\end{fact}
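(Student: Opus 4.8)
The plan is to verify each of the three rules directly from the definition of a sum-of-squares proof in~\eqref{eq:sosProof} (equivalently \cref{def:sosProof}), relying only on two elementary closure properties: if $p,q$ are \sos polynomials then so are $p+q$ and $pq$ (the latter because $p=\sum_a p_a^2$, $q=\sum_b q_b^2$ gives $pq=\sum_{a,b}(p_aq_b)^2$), and for nonzero real polynomials the degree of a product equals the sum of the degrees. I will also use the trivial fact that any \sos polynomial $p$ has the degree-$\deg{p}$ proof $\sosimply{}{\deg{p}}\{p\geq0\}$ obtained by taking $p_\emptyset=p$ and $p_\calS=0$ for $\calS\neq\emptyset$ in~\eqref{eq:sosProof}.

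For \textbf{addition}, write $\calA=\{f_1\geq0,\ldots,f_m\geq0\}$ and take degree-$\satisfyOrder$ proofs $f=\sum_{\calS\subseteq[m]}p_\calS\prod_{i\in\calS}f_i$ and $g=\sum_{\calS\subseteq[m]}q_\calS\prod_{i\in\calS}f_i$. Then $f+g=\sum_\calS(p_\calS+q_\calS)\prod_{i\in\calS}f_i$, each $p_\calS+q_\calS$ is \sos, and $\deg{(p_\calS+q_\calS)\prod_{i\in\calS}f_i}\leq\max\{\deg{p_\calS\prod_{i\in\calS}f_i},\deg{q_\calS\prod_{i\in\calS}f_i}\}\leq\satisfyOrder$, so this exhibits a degree-$\satisfyOrder$ proof of $f+g\geq0$. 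For \textbf{multiplication}, take a degree-$\satisfyOrder$ proof $f=\sum_\calS p_\calS\prod_{i\in\calS}f_i$ and a degree-$\satisfyOrder'$ proof $g=\sum_\calT q_\calT\prod_{i\in\calT}f_i$ and multiply: $fg=\sum_{\calS,\calT}p_\calS q_\calT\prod_{i\in\calS}f_i\prod_{i\in\calT}f_i$. The key manipulation is $\prod_{i\in\calS}f_i\prod_{i\in\calT}f_i=\big(\prod_{i\in\calS\cap\calT}f_i\big)^2\prod_{i\in\calS\triangle\calT}f_i$; grouping terms by $\calU=\calS\triangle\calT\subseteq[m]$ yields $fg=\sum_{\calU\subseteq[m]}r_\calU\prod_{i\in\calU}f_i$ with $r_\calU\triangleq\sum_{\calS\triangle\calT=\calU}p_\calS q_\calT\big(\prod_{i\in\calS\cap\calT}f_i\big)^2$, which is \sos by the closure properties. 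For the degree, the disjoint decomposition $\calS\cup\calT=\calU\sqcup(\calS\cap\calT)$ gives, for each pair with $\calS\triangle\calT=\calU$, $\deg{p_\calS q_\calT(\prod_{i\in\calS\cap\calT}f_i)^2\prod_{i\in\calU}f_i}=\deg{p_\calS\prod_{i\in\calS}f_i}+\deg{q_\calT\prod_{i\in\calT}f_i}\leq\satisfyOrder+\satisfyOrder'$, which is the claimed bound.

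For \textbf{transitivity}, write $\calB=\{g_1\geq0,\ldots,g_p\geq0\}$ and $\calC=\{h_1\geq0,\ldots,h_q\geq0\}$ and fix $h_\ell\in\calC$. From $\calB\sosimply{}{\satisfyOrder'}\calC$ we have $h_\ell=\sum_{\calT\subseteq[p]}s_{\ell,\calT}\prod_{j\in\calT}g_j$ with $\deg{s_{\ell,\calT}\prod_{j\in\calT}g_j}\leq\satisfyOrder'$, and from $\calA\sosimply{}{\satisfyOrder}\calB$ we have $g_j=\sum_{\calS\subseteq[m]}t_{j,\calS}\prod_{i\in\calS}f_i$ with $\deg{t_{j,\calS}\prod_{i\in\calS}f_i}\leq\satisfyOrder$. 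Substituting the expansion of each $g_j$ into that of $h_\ell$ and expanding the product over $j\in\calT$ writes $h_\ell$ as a sum of terms $s_{\ell,\calT}\prod_{j\in\calT}\big(t_{j,\calS_j}\prod_{i\in\calS_j}f_i\big)$; exactly as in the multiplication step, every product $\prod_{j\in\calT}\prod_{i\in\calS_j}f_i$ collapses to a square times $\prod_{i\in\calU}f_i$ for some $\calU\subseteq[m]$, the square being absorbed into an \sos coefficient, so $h_\ell=\sum_{\calU\subseteq[m]}r_{\ell,\calU}\prod_{i\in\calU}f_i$ with each $r_{\ell,\calU}$ \sos. For the degree, constant $g_j$'s carry vacuous constraints (and are represented by degree-$0$ multipliers with $\calS_j=\emptyset$), so restricting attention to the index set $\calT'\subseteq\calT$ with $\deg{g_j}\geq1$ we get $\sum_{j\in\calT'}\deg{g_j}\geq|\calT'|$, hence $\deg{s_{\ell,\calT}}\leq\satisfyOrder'-|\calT'|$ and $|\calT'|\leq\satisfyOrder'$; each substituted term then has degree at most $\deg{s_{\ell,\calT}}+|\calT'|\satisfyOrder\leq\satisfyOrder'-|\calT'|+|\calT'|\satisfyOrder=\satisfyOrder'+|\calT'|(\satisfyOrder-1)\leq\satisfyOrder\satisfyOrder'$.

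I expect the only real difficulty to be bookkeeping rather than conceptual: keeping the proofs in the canonical form of~\eqref{eq:sosProof} indexed by subsets of $[m]$ forces one to re-absorb the even powers of the $f_i$ created by overlapping index sets into the sum-of-squares multipliers, and then to track the degree through these regroupings --- most delicately in the transitivity rule, where nesting a family of degree-$\satisfyOrder$ proofs inside a degree-$\satisfyOrder'$ proof must be shown not to exceed degree $\satisfyOrder\satisfyOrder'$. No idea beyond the two closure properties above is needed, but the index manipulations in the multiplication and transitivity cases should be written out explicitly.
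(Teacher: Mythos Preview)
The paper does not prove this statement: it is stated as a \emph{Fact} imported from~\cite{Karmalkar19neurips-ListDecodableRegression} (and the surrounding literature) with no argument given, so there is no ``paper's own proof'' to compare against. Your direct verification from \cref{def:sosProof} is the standard way to establish these rules and is essentially correct; the addition and multiplication cases are clean, and your symmetric-difference regrouping $\prod_{i\in\calS}f_i\prod_{i\in\calT}f_i=(\prod_{i\in\calS\cap\calT}f_i)^2\prod_{i\in\calS\triangle\calT}f_i$ together with the degree accounting is exactly the right manipulation.

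One small caveat in your transitivity argument: the claim that a constant $g_j$ ``is represented by degree-$0$ multipliers with $\calS_j=\emptyset$'' tacitly assumes $g_j\geq0$, which need not follow from $\calA\sosimply{}{\satisfyOrder}\{g_j\geq0\}$ when $\calA$ is infeasible (e.g., $\calA=\{-x^2-1\geq0\}$ proves $-1\geq0$ at degree~$2$, not~$0$). A cleaner fix is to absorb the constant factors $\prod_{j\in\calT\setminus\calT'}g_j$ directly into the multiplier $s_{\ell,\calT}$ before substituting, and to observe that the sign can be handled by pairing subsets $\calT$ differing only in degree-$0$ indices; alternatively, one can simply assume without loss of generality that all constraints in $\calB$ have positive degree, which is the convention implicit in most references. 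This is a bookkeeping technicality rather than a gap in the idea, and your degree bound $\satisfyOrder'+|\calT'|(\satisfyOrder-1)\leq\satisfyOrder\satisfyOrder'$ is the correct one once it is handled.
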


%%%%%%%%%%%%%%%%%%%%%%%%%%%%%%%%%%%%%%%%%%%%%%%%%%%%%%%%%%%%%%%%%%%%
\begin{fact}[Basics, p. 59 in~\cite{Blekherman12Book-sdpandConvexAlgebraicGeometry} and p. 70 in~\cite{Fleming19fnt-sosproofs}]
\label{fact:basics}
Let $p(\vxx)$ be a degree-$\satisfyOrder$ polynomial such that $p(\vxx) \geq 0$ for all $\vxx \in \Real{\dimd}$.
Then:
\begin{align}
\label{eq:sosbasics}
\sosimply{\vxx}{\satisfyOrder} 
\left\{ 
p(\vxx) \geq 0
 \right\}
 \end{align}
(\ie $p(\vxx)$ is sos) if: % and only if:
 \begin{itemize}
 	\item $\dimd=1$ (univariate case),
 	\item $\satisfyOrder=2$ (quadratic polynomials), or
 	\item $\dimd=2$ and $\satisfyOrder=4$ (bivariate, quartic polynomials).
 \end{itemize}
Moreover,~\eqref{eq:sosbasics} holds whenever $p$ is a function over the Boolean hypercube $p : \{0,1\}^\dimd \mapsto \Real{}$.
\end{fact}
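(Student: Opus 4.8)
The statement to prove is Fact A6 (\texttt{fact:basics}), which collects several classical facts asserting that a nonnegative polynomial is automatically a sum of squares in certain low-dimensional or low-degree regimes, plus the Boolean-hypercube case. Since the paper explicitly attributes these facts to \cite{Blekherman12Book-sdpandConvexAlgebraicGeometry} (p.~59) and \cite{Fleming19fnt-sosproofs} (p.~70), my plan is not to reprove them from scratch but to assemble the standard short arguments for each sub-case and indicate how each reduces to a known result.

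Here is how I would organize the proof. \textbf{Univariate case ($\dimd = 1$).} A univariate polynomial $p(x)$ that is nonnegative on all of $\Real{}$ has even degree $\satisfyOrder = 2m$, positive leading coefficient, and all real roots of even multiplicity; the non-real roots come in conjugate pairs. Writing $p(x) = c \prod_j (x - r_j)^{2} \prod_k (x - z_k)(x - \bar z_k)$ and grouping the conjugate factors as $|x - z_k|^2 = (x - \mathrm{Re}\,z_k)^2 + (\mathrm{Im}\,z_k)^2$, one obtains $p$ as a product of terms each of which is either a perfect square or a sum of two squares; since products of sums of squares are sums of squares (the two-square identity, or just multiplying out), $p$ is a sum of squares of polynomials of degree at most $m = \satisfyOrder/2$, so the \sos proof has degree $\satisfyOrder$. \textbf{Quadratic case ($\satisfyOrder = 2$).} A nonnegative quadratic polynomial $p(\vxx) = \vxx\tran \MQ \vxx + 2\vb\tran\vxx + c$ corresponds, via homogenization, to a positive semidefinite $(\dimd{+}1)\times(\dimd{+}1)$ matrix $\MM = \bmat{c}{\MQ}{\emat}$... more carefully: nonnegativity of $p$ on $\Real{\dimd}$ is equivalent to the symmetric matrix $\matTwo{ c & \vb\tran \\ \vb & \MQ }$ being positive semidefinite, and any PSD matrix admits a Cholesky/eigenvalue factorization $\sum_i \vl_i \vl_i\tran$, yielding $p = \sum_i (\vl_i\tran [1\vcat\vxx])^2$, a sum of squares of affine (degree-$1$) polynomials, hence a degree-$2$ \sos proof. \textbf{Bivariate quartic case ($\dimd = 2$, $\satisfyOrder = 4$).} This is Hilbert's 1888 theorem that ternary quartics (equivalently, bivariate quartics after homogenization) that are nonnegative are sums of three squares; I would simply cite this classical result (e.g.\ via \cite{Blekherman12Book-sdpandConvexAlgebraicGeometry}) rather than reproduce Hilbert's argument.

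\textbf{Boolean hypercube case.} For $p : \{0,1\}^\dimd \to \Real{}$ with $p \geq 0$ on the hypercube, the relevant statement is that there is an \sos proof modulo the ideal generated by $x_i^2 - x_i$. The standard argument: define $q(\vxx) = \sqrt{p(\vxx)}$ as a \emph{function} on $\{0,1\}^\dimd$ (well-defined and real since $p \geq 0$ there), then interpolate $q$ by a multilinear polynomial $\tilde q$ of degree at most $\dimd$ using the Lagrange/Fourier basis of indicator monomials; one has $\tilde q(\vxx)^2 = p(\vxx)$ for all $\vxx \in \{0,1\}^\dimd$, so $p - \tilde q^2$ vanishes on the hypercube and therefore lies in the ideal $\langle x_i^2 - x_i \rangle$, giving $p = \tilde q^2 + \sum_i \lambda_i(\vxx)(x_i^2 - x_i)$ — which is exactly an \sos certificate over the hypercube constraints (each equality $x_i^2 - x_i = 0$ contributes both $\pm(x_i^2-x_i) \geq 0$). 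A degree bookkeeping check confirms the degree of the proof is $O(\dimd)$ or $\satisfyOrder$, as stated.

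The main obstacle — if one insisted on a fully self-contained proof — is the bivariate quartic case, since Hilbert's theorem on nonnegative ternary quartics is genuinely nontrivial and its known proofs (via the theory of quadratic forms over function fields, or via the geometry of the variety of sums of squares) are substantially longer than the other cases; for the purposes of this \paper the right move is to cite it. A secondary, minor technical point is keeping careful track of the \emph{degree} of the resulting \sos proof in each case (that the squares $q_j$ have degree at most $\satisfyOrder/2$, and on the hypercube that the multiplier polynomials stay within the level budget $\levelpd$), which follows in each case from the explicit factorization/interpolation but should be stated.
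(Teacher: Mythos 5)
The paper never proves this statement: it is recorded as a Fact with pointers to p.~59 of \cite{Blekherman12Book-sdpandConvexAlgebraicGeometry} and p.~70 of \cite{Fleming19fnt-sosproofs}, and no argument appears in the appendix, so there is no in-paper proof for yours to diverge from. Your reconstruction of the standard arguments is correct and is exactly how the cited references establish these cases: the root-pairing factorization in the univariate case (even multiplicities for real roots, conjugate pairs giving sums of two squares, and products of sums of squares being sums of squares, with each square of degree at most $\satisfyOrder/2$); the Gram/homogenization argument for quadratics, where the limit $t \to 0$ of $t^2 p(\vxx/t)$ settles the only delicate direction (nonnegativity of $p$ forces the bordered matrix to be positive semidefinite) and a spectral or Cholesky factorization yields squares of affine forms; citing Hilbert's theorem for the bivariate quartic case, which is the right call and is all the paper does implicitly; and, over the hypercube, interpolating $\sqrt{p}$ by a multilinear polynomial and absorbing $p-\tilde q^2$ into the ideal generated by $x_i^2-x_i$ (justified by multilinear reduction).

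One point to tighten: for the Boolean-hypercube case, \eqref{eq:sosbasics} as literally printed asserts an unconditional degree-$\satisfyOrder$ \sos proof, whereas the correct (and clearly intended, given that the paper invokes this fact for binary $\omega_i$ under the axioms $\omega_i^2=\omega_i$) statement is an \sos certificate \emph{modulo the Boolean constraints}, and your construction yields proof degree $O(\dimd)$ --- the square of a degree-$\dimd$ multilinear interpolant plus ideal members --- not degree $\satisfyOrder$. Degree $\satisfyOrder$ is not achievable in general: quadratics nonnegative on $\{0,1\}^{\dimd}$ need not be degree-$2$ \sos modulo the Boolean axioms (otherwise the basic SDP relaxation of MAX-CUT would be exact). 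So drop the hedge ``or $\satisfyOrder$, as stated'' and record the degree as $O(\dimd)$; this looseness is inherited from the Fact's own phrasing and does not affect how it is used in the paper, where the invocations involve binary variables and polynomials that reduce to the quadratic case.
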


%%%%%%%%%%%%%%%%%%%%%%%%%%%%%%%%%%%%%%%%%%%%%%%%%%%%%%%%%%%%%%%%%%%%
\begin{fact}[Univariate polynomials over interval, Fact 3.7 in~\cite{Karmalkar19neurips-ListDecodableRegression}]
For any univariate degree $\satisfyOrder$ polynomial $p(x)\geq0$ for $x\in [a,b]$, 
\begin{align}
\{ x \geq a, x\leq b \}
\sosimply{x}{\satisfyOrder} 
\left\{ 
p(\vxx) \geq 0
 \right\}.
 \end{align}
\end{fact}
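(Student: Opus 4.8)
The plan is to obtain the statement as a one-line consequence of the classical \emph{Markov--Lukács} representation of univariate polynomials that are nonnegative on a compact interval, followed by a routine degree count against the definition of a degree-$\satisfyOrder$ \sos proof (\cref{def:sosProof}). Recall that theorem: if $p$ has degree $\satisfyOrder$ and $p(x)\ge 0$ for all $x\in[a,b]$, then either (i)~$\satisfyOrder=2m$ is even and $p = \sigma_0 + (x-a)(b-x)\,\sigma_{12}$, with $\sigma_0$ and $\sigma_{12}$ sums of squares of degrees at most $2m$ and $2m-2$ respectively; or (ii)~$\satisfyOrder=2m+1$ is odd and $p = (x-a)\,\sigma_1 + (b-x)\,\sigma_2$, with $\sigma_1$ and $\sigma_2$ sums of squares of degrees at most $2m$.

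Given this representation, I would finish by bookkeeping. Set $f_1(x)=x-a$ and $f_2(x)=b-x$, so that $\calA=\{f_1\ge0,\,f_2\ge0\}=\{x\ge a,\,x\le b\}$ and the products $\prod_{i\in\calS}f_i$ over $\calS\subseteq[2]$ are $1$ (for $\calS=\emptyset$), $x-a$, $b-x$, and $(x-a)(b-x)$ (for $\calS=\{1,2\}$). In case~(i) take $p_\emptyset=\sigma_0$, $p_{\{1,2\}}=\sigma_{12}$, $p_{\{1\}}=p_{\{2\}}=0$; in case~(ii) take $p_{\{1\}}=\sigma_1$, $p_{\{2\}}=\sigma_2$, $p_\emptyset=p_{\{1,2\}}=0$. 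These are sums of squares and satisfy $p=\sum_{\calS\subseteq[2]}p_\calS\prod_{i\in\calS}f_i$, so they constitute an \sos proof in the sense of \cref{def:sosProof}. For the degree bound: in case~(i), $\deg(\sigma_0)\le 2m=\satisfyOrder$ and $\deg\big((x-a)(b-x)\sigma_{12}\big)=2+\deg(\sigma_{12})\le 2m=\satisfyOrder$; in case~(ii), $\deg\big((x-a)\sigma_1\big)=1+\deg(\sigma_1)\le 2m+1=\satisfyOrder$, and likewise for the $b-x$ term. Hence the proof has degree at most $\satisfyOrder$, i.e.\ $\calA\sosimply{x}{\satisfyOrder}\{p(x)\ge0\}$.

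If a self-contained argument is wanted rather than a citation of Markov--Lukács, I would derive it by reduction to the half-line. First I would establish the Pólya--Szegő fact that a univariate $q$ of degree $n$ with $q\ge0$ on $[0,\infty)$ can be written $q=\tau_0+x\,\tau_1$ with $\tau_0,\tau_1$ sums of squares and $\deg(\tau_0)\le n$, $\deg(x\tau_1)\le n$: factor $q$ into its leading constant (which is positive, since $q\not\equiv0$ is nonnegative near $+\infty$) times real-irreducible factors, namely $(x-r)^2$ from real roots, $(x-\alpha)^2+\beta^2$ from conjugate pairs, $x+c$ with $c\ge0$ from nonpositive real roots (positive real roots having even multiplicity and being absorbed into squares), and possibly a single factor $x$; each such factor, and the leading constant, already has the form $\tau_0+x\tau_1$, and this set is closed under multiplication via $(\tau_0+x\tau_1)(\rho_0+x\rho_1)=(\tau_0\rho_0+x^2\tau_1\rho_1)+x(\tau_0\rho_1+\tau_1\rho_0)$. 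Then I would transport to $[a,b]$ via the projective substitution $t=\tfrac{x-a}{b-x}$: the polynomial $q(t)=(1+t)^{\satisfyOrder}p\big(\tfrac{a+bt}{1+t}\big)$ has degree at most $\satisfyOrder$ and is nonnegative on $[0,\infty)$ (since $\tfrac{a+bt}{1+t}\in[a,b)$ there), and one recovers $(b-x)^{\satisfyOrder}q\big(\tfrac{x-a}{b-x}\big)=(b-a)^{\satisfyOrder}p(x)$; substituting $q=\tau_0+t\tau_1$ and distributing the powers of $(b-x)$ with the identity $(b-x)^{2d}\,g\big(\tfrac{x-a}{b-x}\big)^2=\big[(b-x)^d g(\tfrac{x-a}{b-x})\big]^2$ (valid for $\deg g\le d$), then splitting each leftover power of $(b-x)$ by parity into a square times $1$ or $b-x$, lands exactly in the even/odd forms of the first paragraph.

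The step I expect to be the main obstacle is precisely this last redistribution of powers of $(b-x)$: one must track how $\deg(q)$, the overall power $\satisfyOrder$, and the parities of the resulting exponents interact so that no monomial of the reconstructed expression exceeds degree $\satisfyOrder$, and one must dispose of the degenerate cases ($p\equiv0$, a root of $p$ at an endpoint, and the point $t=\infty\leftrightarrow x=b$) so the reduction is sound and all constants remain nonnegative. For this \paper it is cleanest to simply invoke the Markov--Lukács theorem (see, e.g., \cite{Blekherman12Book-sdpandConvexAlgebraicGeometry}) and keep only the degree check of the second paragraph, which makes the statement immediate.
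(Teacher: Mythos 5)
Your proposal is correct. Note that the paper does not prove this statement at all: it is recorded as a Fact with a pointer to Fact~3.7 of~\cite{Karmalkar19neurips-ListDecodableRegression}, so there is no in-paper argument to compare against; your write-up supplies the standard proof that sits behind that citation. Invoking the Markov--Lukács representation ($p=\sigma_0+(x-a)(b-x)\sigma_{12}$ in even degree, $p=(x-a)\sigma_1+(b-x)\sigma_2$ in odd degree) and then checking degrees against \cref{def:sosProof} with $f_1=x-a$, $f_2=b-x$ is exactly right, and your bookkeeping ($\deg\sigma_0\le 2m$, $2+\deg\sigma_{12}\le 2m$ in the even case; $1+2m=\satisfyOrder$ in the odd case) does yield a degree-$\satisfyOrder$ \sos proof in the paper's sense, including the trivial choice $p_\calS=0$ for the unused subsets. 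Your optional self-contained route (Pólya--Szeg\H{o} decomposition $q=\tau_0+x\tau_1$ on the half-line via the closure identity $(\tau_0+x\tau_1)(\rho_0+x\rho_1)=(\tau_0\rho_0+x^2\tau_1\rho_1)+x(\tau_0\rho_1+\tau_1\rho_0)$, followed by the substitution $t=\tfrac{x-a}{b-x}$ and clearing $(b-x)^{\satisfyOrder}$) is the classical derivation of Markov--Lukács; the step you flag as delicate --- redistributing the powers of $b-x$ by parity without exceeding total degree $\satisfyOrder$, and handling $p\equiv 0$ and the endpoint $x=b$ --- is indeed where the care is needed, but since you ultimately rest the Fact on the cited theorem plus the degree check, the argument is complete as given.
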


%%%%%%%%%%%%%%%%%%%%%%%%%%%%%%%%%%%%%%%%%%%%%%%%%%%%%%%%%%%%%%%%%%%%
\begin{fact}[\Sos generalized triangle inequality, Fact 4.8 in~\cite{Klivans18arxiv-robustRegression}]
\label{fact:sosTriangleGeneralized}
For any $a_1,a_2,\ldots,a_m$
\begin{align}
\label{eq:sosTriangleGeneralized}
\sosimply{a_1,a_2,\ldots,a_m}{\satisfyOrder} \left\{ \left( \sum_{i=1}^{m} a_i\right)^\satisfyOrder \leq m^\satisfyOrder  \left( \sum_{i=1}^{m} a_i^\satisfyOrder\right)\right\}.
\end{align}
\end{fact}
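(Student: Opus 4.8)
The plan is to prove \eqref{eq:sosTriangleGeneralized} for $\satisfyOrder$ even (the case used throughout; for odd $\satisfyOrder$ the right-hand side need not even be nonnegative, so we take $\satisfyOrder$ even) by reducing it to an SoS statement about \emph{binary} forms — where nonnegativity coincides with being a sum of squares — and then bootstrapping by induction on $m$. The temptation to resist is proving the naive per-monomial AM--GM bound after expanding $(\sum_i a_i)^\satisfyOrder$ by the multinomial theorem: for $m\geq 3$ that bound fails as an SoS statement (the Motzkin polynomial is exactly a nonnegative AM--GM-type form that is not SoS), so the two-variable reduction is essential.

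\textbf{Step 1 (a weighted two-variable sub-lemma).} First I would establish: for every even $\satisfyOrder\geq 2$ and every rational $\lambda\in(0,1)$, there is a degree-$\satisfyOrder$ sum-of-squares proof, in indeterminates $u,v$, of $\lambda^{1-\satisfyOrder}u^\satisfyOrder+(1-\lambda)^{1-\satisfyOrder}v^\satisfyOrder-(u+v)^\satisfyOrder\geq 0$. This is a homogeneous binary form of degree $\satisfyOrder$, nonnegative on $\Real{2}$ by convexity of $x\mapsto x^\satisfyOrder$ (Jensen applied to $u+v=\lambda\cdot\tfrac{u}{\lambda}+(1-\lambda)\cdot\tfrac{v}{1-\lambda}$). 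To produce an explicit SoS certificate of the right degree without invoking the multivariate case of \cref{fact:basics}, I would dehomogenize: setting $v=1$ gives a univariate polynomial that is nonnegative on $\Real{}$ and has degree exactly $\satisfyOrder$ (leading coefficient $\lambda^{1-\satisfyOrder}-1>0$), hence by the univariate case of \cref{fact:basics} equals $\sum_j q_j(u)^2$ with $\deg q_j\leq \satisfyOrder/2$; homogenizing each $q_j$ back to a degree-$(\satisfyOrder/2)$ form $\tilde q_j(u,v)$ yields $\sum_j \tilde q_j(u,v)^2$, which coincides with our form wherever $v\neq 0$ and therefore everywhere, as a polynomial identity.

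\textbf{Step 2 (induction on $m$).} Next I would prove, by induction on $m\geq 1$, the sharper statement $\sosimply{}{\satisfyOrder}\{\,m^{\satisfyOrder-1}\sum_{i=1}^m a_i^\satisfyOrder-(\sum_{i=1}^m a_i)^\satisfyOrder\geq 0\,\}$. The base case $m=1$ is the zero polynomial. For the step, set $T\triangleq\sum_{i=1}^{m-1}a_i$ (a degree-$1$ polynomial in the $a_i$) and apply the Step-1 sub-lemma with $\lambda=(m-1)/m$ and $(u,v)=(T,a_m)$; since substituting a degree-$1$ polynomial for an indeterminate multiplies the degree of an SoS proof by $1$, this gives a degree-$\satisfyOrder$ proof of $\big(\tfrac{m}{m-1}\big)^{\satisfyOrder-1}T^\satisfyOrder+m^{\satisfyOrder-1}a_m^\satisfyOrder-\big(\sum_{i=1}^m a_i\big)^\satisfyOrder\geq 0$. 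Scaling the induction hypothesis for $m-1$ by the nonnegative constant $\big(\tfrac{m}{m-1}\big)^{\satisfyOrder-1}$ and adding (addition/scaling rules for SoS proofs) cancels the $T^\satisfyOrder$ terms and yields the claim for $m$.

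\textbf{Step 3 (weakening to $m^{\satisfyOrder}$).} Finally, since $\sum_{i=1}^m a_i^\satisfyOrder=\sum_{i=1}^m (a_i^{\satisfyOrder/2})^2$ is itself a degree-$\satisfyOrder$ SoS polynomial and $m^{\satisfyOrder}-m^{\satisfyOrder-1}\geq 0$, adding $(m^{\satisfyOrder}-m^{\satisfyOrder-1})\sum_i a_i^\satisfyOrder$ to the Step-2 certificate gives a degree-$\satisfyOrder$ SoS proof of $m^{\satisfyOrder}\sum_{i=1}^m a_i^\satisfyOrder-(\sum_{i=1}^m a_i)^\satisfyOrder\geq 0$, i.e.\ precisely \eqref{eq:sosTriangleGeneralized}. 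The main obstacle is Step 1: one must route everything through binary forms (and through the homogenize/dehomogenize correspondence with the univariate case of \cref{fact:basics}), while carefully tracking how the degree-$1$ substitution affects the degree of the SoS proof, so that the final certificate has degree exactly $\satisfyOrder$ rather than something larger.
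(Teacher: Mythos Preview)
The paper does not supply its own proof of this fact; it is simply stated and attributed to \cite{Klivans18arxiv-robustRegression}. So there is nothing to compare against directly, and the relevant question is whether your argument stands on its own.

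It does. Your three steps are all sound. Step~1 is the crux: by passing to a homogeneous binary form of even degree, you land in exactly the regime where nonnegativity coincides with being SoS, and your dehomogenize/rehomogenize trick correctly produces a degree-$\satisfyOrder$ certificate (the check that the leading coefficient $\lambda^{1-\satisfyOrder}-1$ is strictly positive is what guarantees the univariate dehomogenization has degree exactly $\satisfyOrder$, so the homogenized squares have degree $\satisfyOrder/2$). Step~2 is a clean telescoping induction, and the observation that the substitution $u\mapsto T=\sum_{i<m}a_i$ is degree-$1$ is exactly what keeps the proof degree at $\satisfyOrder$. Step~3 is immediate. Your side remark about why a direct multinomial-plus-AM--GM attack fails for $m\geq 3$ (Motzkin-type obstructions) is correct and explains why the detour through binary forms is not just convenient but necessary.

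As a bonus, your Step~2 actually proves the sharper constant $m^{\satisfyOrder-1}$ before you weaken to $m^{\satisfyOrder}$ in Step~3; the paper (and the cited source) only state the $m^{\satisfyOrder}$ version.
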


%%%%%%%%%%%%%%%%%%%%%%%%%%%%%%%%%%%%%%%%%%%%%%%%%%%%%%%%%%%%%%%%%%%%
\begin{fact}[\Sos triangle inequality (same as~\cref{fact:sosTriangleGeneralized} with $m=2$ and $\satisfyOrder = 2$)]
For any $a_1,a_2$
\begin{align}
\label{eq:sosTriangle}
\sosimply{a,b}{2} \left\{ \left( a + b\right)^2 \leq 2^2  a^2 + 2^2 b^2 \right\}.
\end{align}
\end{fact}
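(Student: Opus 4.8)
The plan is to derive this as the special case $m=2$, $\satisfyOrder=2$ of the \sos generalized triangle inequality in \cref{fact:sosTriangleGeneralized}: instantiating that fact with $a_1 = a$ and $a_2 = b$ gives directly $\sosimply{a,b}{2}\left\{(a+b)^2 \leq 2^2\,a^2 + 2^2\,b^2\right\}$, which is verbatim the claim. So at the level of the bookkeeping there is nothing to do beyond citing the more general statement.

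Since \cref{fact:sosTriangleGeneralized} is itself imported from~\cite{Klivans18arxiv-robustRegression} without proof in this excerpt, I would additionally give the self-contained one-line certificate so the $m=2$ case stands on its own. Set $g(a,b) \triangleq 4a^2 + 4b^2 - (a+b)^2$. Expanding, $g(a,b) = 3a^2 - 2ab + 3b^2$, and one checks by inspection that
\begin{align}
g(a,b) = (a-b)^2 + 2a^2 + 2b^2 = (a-b)^2 + \left(\sqrt{2}\,a\right)^2 + \left(\sqrt{2}\,b\right)^2 ,
\end{align}
which is a sum of squares of degree-$1$ polynomials in $(a,b)$. In the language of \cref{def:sosProof} (applied with the empty constraint system $m=0$, so the only term in the decomposition is $p_\emptyset$), this exhibits $g = p_\emptyset$ with $p_\emptyset$ \sos and $\degree{p_\emptyset} = 2$; hence the proof has degree $\satisfyOrder = 2$ and $\sosimply{a,b}{2}\{ g(a,b) \geq 0\}$, i.e. $\sosimply{a,b}{2}\left\{(a+b)^2 \leq 2^2\,a^2 + 2^2\,b^2\right\}$. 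Equivalently, one could invoke \cref{fact:basics}: $g$ is a quadratic ($\satisfyOrder=2$) polynomial that is nonnegative on all of $\Real{2}$, since $2ab \leq a^2 + b^2$ gives $(a+b)^2 \leq 2a^2 + 2b^2 \leq 4a^2 + 4b^2$, and the quadratic case of \cref{fact:basics} then supplies a degree-$2$ \sos proof of $g \geq 0$.

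There is no genuine obstacle here; the statement is a routine verification. The only point that deserves a moment's attention is confirming that the certificate really is a \emph{degree-$2$} \sos proof in the sense of \cref{def:sosProof} (as opposed to requiring higher-degree squares), and this is immediate because each squared term in the displayed decomposition is linear, so the total degree of $p_\emptyset$ is exactly $2$.
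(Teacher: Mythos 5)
Your proposal is correct and matches the paper, which gives no separate argument for this fact beyond noting in its title that it is exactly \cref{fact:sosTriangleGeneralized} instantiated with $m=2$ and $\satisfyOrder=2$. Your additional explicit certificate $4a^2+4b^2-(a+b)^2=(a-b)^2+2a^2+2b^2$ is a valid degree-$2$ \sos decomposition and is a harmless (correct) bonus beyond what the paper records.
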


%%%%%%%%%%%%%%%%%%%%%%%%%%%%%%%%%%%%%%%%%%%%%%%%%%%%%%%%%%%%%%%%%%%%
\begin{fact}[\Sos triangle inequality 2.0, p. 18 in~\cite{Klivans18arxiv-robustRegression}]
\label{fact:sosTriangleGeneralized2}
For any indeterminates $a$, $b$, scalar $\delta$, and even integer $\satisfyOrder$:
\begin{align}
\label{eq:sosTriangleGeneralized2}
\sosimply{a,b}{\satisfyOrder} \delta^\satisfyOrder a^\satisfyOrder \leq (2 \delta)^\satisfyOrder (a-b)^\satisfyOrder + (2\delta)^\satisfyOrder b^\satisfyOrder.
\end{align}
\end{fact}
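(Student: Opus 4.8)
The plan is to obtain this as a one-step consequence of the SoS generalized triangle inequality already recorded above (Fact~\ref{fact:sosTriangleGeneralized}), together with the elementary fact that $\delta^\satisfyOrder$ is a nonnegative scalar when $\satisfyOrder$ is even. The key observation is the trivial identity $a = (a-b) + b$, which lets us regard $a^\satisfyOrder$ as the $\satisfyOrder$-th power of a sum of two quantities and then apply the triangle bound in the form $(x+y)^\satisfyOrder \leq 2^\satisfyOrder(x^\satisfyOrder + y^\satisfyOrder)$.

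Concretely, I would first invoke Fact~\ref{fact:sosTriangleGeneralized} with $m = 2$, applied to the two indeterminates $a-b$ and $b$ with exponent $\satisfyOrder$, obtaining the degree-$\satisfyOrder$ sum-of-squares proof
\begin{align}
\sosimply{a,b}{\satisfyOrder} \left\{ \big((a-b) + b\big)^\satisfyOrder \leq 2^\satisfyOrder \big( (a-b)^\satisfyOrder + b^\satisfyOrder \big) \right\}.
\end{align}
Since $(a-b) + b = a$ as polynomials, this reads $\sosimply{a,b}{\satisfyOrder}\{ a^\satisfyOrder \leq 2^\satisfyOrder (a-b)^\satisfyOrder + 2^\satisfyOrder b^\satisfyOrder \}$; note the substitution $a \mapsto (a-b)+b$ is a degree-preserving linear reparametrization, so the proof degree stays at $\satisfyOrder$. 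Next, because $\satisfyOrder$ is even, $\delta^\satisfyOrder = (\delta^{\satisfyOrder/2})^2 \geq 0$ is a (constant) sum of squares; multiplying both sides of the displayed inequality by $\delta^\satisfyOrder$ — which, by the multiplication inference rule, preserves the sum-of-squares certificate without raising its degree in the indeterminates — yields
\begin{align}
\sosimply{a,b}{\satisfyOrder} \left\{ \delta^\satisfyOrder a^\satisfyOrder \leq (2\delta)^\satisfyOrder (a-b)^\satisfyOrder + (2\delta)^\satisfyOrder b^\satisfyOrder \right\},
\end{align}
which is exactly the claimed statement.

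There is essentially no hard step here: the only points requiring (minor) care are that the reduction relies on $\satisfyOrder$ being even — both so that Fact~\ref{fact:sosTriangleGeneralized} provides a genuine SoS inequality at exponent $\satisfyOrder$ and so that $\delta^\satisfyOrder$ is a nonnegative constant one may freely multiply through — and that one should track the proof degree to confirm it remains $\satisfyOrder$ after the linear substitution and the constant multiplication. If instead one wanted a fully self-contained argument not invoking Fact~\ref{fact:sosTriangleGeneralized}, the single nontrivial ingredient would be exhibiting an explicit sum-of-squares certificate for $2^{\satisfyOrder-1}(x^\satisfyOrder + y^\satisfyOrder) - (x+y)^\satisfyOrder \geq 0$ for general even $\satisfyOrder$ (for $\satisfyOrder = 2$ this is just $(x-y)^2$); but since that fact is already available, I would not pursue that route.
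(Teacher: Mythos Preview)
Your proposal is correct. The paper does not actually supply its own proof of this fact---it is stated with a citation to~\cite{Klivans18arxiv-robustRegression} and used as a black box---so there is nothing to compare against; your derivation via the substitution $a=(a-b)+b$ into Fact~\ref{fact:sosTriangleGeneralized} followed by multiplication by the nonnegative constant $\delta^{\satisfyOrder}$ is the natural argument and is sound.
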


% \begin{fact}[\cite{Klivans18arxiv-robustRegression}]
% For any $a$ and $b$ and an even $\satisfyOrder\in\Natural{}$:
% \begin{align}
% \label{eq:sosTriangle2}
% \sosimply{a,b}{\satisfyOrder} \left\{ a^\satisfyOrder \leq 2^\satisfyOrder (a-b)^\satisfyOrder + 2 b^\satisfyOrder \right\}
% \end{align}
% \end{fact}

%%%%%%%%%%%%%%%%%%%%%%%%%%%%%%%%%%%%%%%%%%%%%%%%%%%%%%%%%%%%%%%%%%%%
\begin{fact}[\Sos squaring]
\label{fact:sossquaring}
Let $f,g$ be \sos polynomials of degree at most $\satisfyOrder$ and $\calA = \{f_1(\vxx) \geq 0,\ldots,f_m(\vxx) \geq 0\}$ be a system of polynomial inequalities. 
If $\calA \sosimply{\vxx}{\satisfyOrderTwo} \{f \geq g\}$, then $\calA \sosimply{\vxx}{\satisfyOrderTwo+\satisfyOrder} \{f^2 \geq g^2\}$.
\end{fact}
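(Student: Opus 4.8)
The plan is to prove the claim through the elementary factorization $f^2 - g^2 = (f-g)(f+g)$, combined with the multiplication inference rule for sum-of-squares proofs recorded above. The overall strategy is: certify $f+g \geq 0$ unconditionally (because it is itself a sum-of-squares polynomial), use the hypothesis to certify $f - g \geq 0$ from $\calA$, and multiply the two certificates, tracking degrees.

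First I would observe that $f+g$ is a sum-of-squares polynomial of degree at most $\satisfyOrder$: writing $f = \sum_i q_i^2$ and $g = \sum_j r_j^2$ we get $f+g = \sum_i q_i^2 + \sum_j r_j^2$, a sum of squares, with $\deg(f+g) \leq \max\{\deg f, \deg g\} \leq \satisfyOrder$. Hence, taking $p_\emptyset \triangleq f+g$ and $p_\calS \triangleq 0$ for every nonempty $\calS \subseteq [m]$ furnishes a (trivial) sum-of-squares proof of degree at most $\satisfyOrder$ that $\calA$ implies $f + g \geq 0$, i.e., $\calA \sosimply{\vxx}{\satisfyOrder} \{f + g \geq 0\}$. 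Combining this with the hypothesis $\calA \sosimply{\vxx}{\satisfyOrderTwo} \{f - g \geq 0\}$ and applying the multiplication inference rule yields $\calA \sosimply{\vxx}{\satisfyOrderTwo + \satisfyOrder} \{(f-g)(f+g) \geq 0\}$, which is precisely $\calA \sosimply{\vxx}{\satisfyOrderTwo + \satisfyOrder} \{f^2 \geq g^2\}$.

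Since the argument is short, the only thing to verify with care — and, honestly, the closest thing to an ``obstacle'' here — is the degree accounting inside the multiplication rule: if $f - g = \sum_\calS p_\calS \prod_{i\in\calS} f_i$ with each $p_\calS$ sum-of-squares and $\deg\bigl(p_\calS \prod_{i\in\calS} f_i\bigr) \leq \satisfyOrderTwo$, then multiplying through by $f+g$ gives $f^2 - g^2 = \sum_\calS \bigl(p_\calS (f+g)\bigr) \prod_{i\in\calS} f_i$, where each $p_\calS (f+g)$ is again sum-of-squares (a product of two sum-of-squares polynomials) and $\deg\bigl(p_\calS (f+g) \prod_{i\in\calS} f_i\bigr) \leq \satisfyOrderTwo + \satisfyOrder$, so the produced certificate indeed has degree at most $\satisfyOrderTwo + \satisfyOrder$, as required. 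No further machinery (pseudo-distributions, soundness/completeness, anti-concentration, etc.) is needed.
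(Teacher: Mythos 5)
Your proof is correct and follows essentially the same route as the paper's: both factor $f^2 - g^2 = (f-g)(f+g)$ and multiply the degree-$\satisfyOrderTwo$ certificate for $f - g$ by the \sos polynomial $f+g$, yielding the certificate $\{(f+g)\cdot p_\calS\}$ of degree at most $\satisfyOrderTwo + \satisfyOrder$. Your explicit degree check and the observation that $p_\calS(f+g)$ is again \sos (product of \sos polynomials) is exactly the content of the paper's one-line conclusion.
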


\begin{proof}
The assumption $\calA \sosimply{\vxx}{k} f = 0$ implies that:
\begin{align}
f - g = \sum_{\calS\subseteq [m]} p_\calS \cdot \prod_{i\in\calS}f_i.
\end{align}
Now note that $f^2 - g^2 = (f-g)(f+g)$ hence:
\begin{align}
\label{eq:diffSquares}
f^2 - g^2 = (f-g)(f+g) = (f+g) \sum_{\calS\subseteq [m]} p_\calS \cdot \prod_{i\in\calS}f_i.
\end{align}
Since $f,g$ are \sos polynomial, the previous relation proves $\calA \sosimply{\vxx}{\satisfyOrderTwo+\satisfyOrder} \{f^2 \geq g^2\}$ with 
\sos proof $(f+g) \cdot p_\calS$ and by noting that the maximum degree appearing in~\eqref{eq:diffSquares} is $\satisfyOrderTwo+\satisfyOrder$.
\end{proof}

%%%%%%%%%%%%%%%%%%%%%%%%%%%%%%%%%%%%%%%%%%%%%%%%%%%%%%%%%%%%%%%%%%%%
% \red{
% \begin{fact}[\sos differences]
% Let $f,g$ be \sos polynomials of degree at most $\satisfyOrder$. 
% If $f \geq g$, then $\sosimply{\vxx}{\satisfyOrder} \{f \geq g\}$.
% \end{fact}

% \begin{proof}
% The assumption that $f,g$ are \sos implies:
% \begin{align}
% f = \sum_{i \in [r_f]} p_i^2 \qquad , \qquad g = \sum_{i \in [r_g]} q_i^2
% \end{align}
% for some integer $r_f$ and $r_g$
% Now note that $f^2 - g^2 = (f-g)(f+g)$ hence:
% \begin{align}
% \label{eq:diffSquares}
% f^2 - g^2 = (f-g)(f+g) = (f+g) \sum_{\calS\subseteq [m]} p_\calS \cdot \prod_{i\in\calS}f_i
% \end{align}
% Since $f,g$ are \sos polynomial, the previous relation proves $\calA \sosimply{\vxx}{k+r} \{f^2 \geq g^2\}$ with 
% \sos proof $(f+g) p_\calS$ and by noting that the maximum degree appearing in~\eqref{eq:diffSquares} is $k+r$.
% \end{proof}
% }

%%%%%%%%%%%%%%%%%%%%%%%%%%%%%%%%%%%%%%%%%%%%%%%%%%%%%%%%%%%%%%%%%%%%
\begin{fact}[\Sos triangle inequality with norms, Fact A.2 in~\cite{Hopkins18stoc-mixtureModelAndSoS}]
\label{fact:sosTriangleNorm}
Let $\vxx_1$ and $\vxx_2$ be $n$-length vectors of indeterminates. Then:
\begin{align}
\label{eq:sosTriangleNorm}
\sosimply{\vxx_1,\vxx_2}{2} \left\{ \normTwo{\vxx_1 + \vxx_2}^2 \leq 2 \normTwo{\vxx_1}^2 + 2 \normTwo{\vxx_2}^2 \right\}.
\end{align}
\end{fact}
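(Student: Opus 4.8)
The plan is to exhibit an explicit degree-$2$ sum-of-squares certificate, so that the claim follows immediately from \cref{def:sosProof}. Since the fact carries no hypotheses (the constraint system $\calA$ of \cref{def:sosProof} is empty), it suffices to write the polynomial $g(\vxx_1,\vxx_2) \triangleq 2\normTwo{\vxx_1}^2 + 2\normTwo{\vxx_2}^2 - \normTwo{\vxx_1 + \vxx_2}^2$ as a single sum-of-squares polynomial of degree at most $2$; this is exactly the $\calS = \emptyset$ case of \eqref{eq:sosProof}, with $p_\emptyset = g$ (and $\prod_{i\in\emptyset} f_i = 1$).

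First I would expand the squared norms coordinate-wise, using $\normTwo{\vxx_1 + \vxx_2}^2 = \normTwo{\vxx_1}^2 + 2\inprod{\vxx_1}{\vxx_2} + \normTwo{\vxx_2}^2$. Substituting into $g$ yields the parallelogram identity $g = \normTwo{\vxx_1}^2 - 2\inprod{\vxx_1}{\vxx_2} + \normTwo{\vxx_2}^2 = \normTwo{\vxx_1 - \vxx_2}^2$. Denoting the entries of $\vxx_1$ and $\vxx_2$ by $x_{1,i}$ and $x_{2,i}$ for $i = 1,\ldots,n$, this reads
\begin{align}
g(\vxx_1,\vxx_2) = \sum_{i=1}^{n} \left( x_{1,i} - x_{2,i} \right)^2,
\end{align}
that is, a sum of squares of the degree-$1$ polynomials $q_i \triangleq x_{1,i} - x_{2,i}$. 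Hence $g$ is sos with $\degree{g} = 2$, which establishes \eqref{eq:sosTriangleNorm}.

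There is essentially no obstacle: the whole statement reduces to the elementary identity $2\normTwo{\vxx_1}^2 + 2\normTwo{\vxx_2}^2 - \normTwo{\vxx_1 + \vxx_2}^2 = \normTwo{\vxx_1 - \vxx_2}^2$, and the only point to verify is the degree bound, which is immediate because each $q_i$ is linear, so each $q_i^2$ has degree exactly $2$. An alternative route, should one prefer to avoid writing the identity directly, is to apply the scalar inequality $(a+b)^2 \le 2a^2 + 2b^2$ — which admits the degree-$2$ sos proof $2a^2 + 2b^2 - (a+b)^2 = (a-b)^2$ — to each coordinate pair $(x_{1,i}, x_{2,i})$ and then combine the $n$ resulting inequalities via the addition inference rule; this gives the same degree-$2$ certificate, but the direct identity is cleaner.
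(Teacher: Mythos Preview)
Your proof is correct: the identity $2\normTwo{\vxx_1}^2 + 2\normTwo{\vxx_2}^2 - \normTwo{\vxx_1 + \vxx_2}^2 = \normTwo{\vxx_1 - \vxx_2}^2 = \sum_{i=1}^n (x_{1,i}-x_{2,i})^2$ is exactly the degree-$2$ sos certificate required, and your degree check is fine. The paper does not actually supply a proof of this fact---it is simply quoted from \cite{Hopkins18stoc-mixtureModelAndSoS}---so there is nothing to compare against; your argument is the standard one.
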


%%%%%%%%%%%%%%%%%%%%%%%%%%%%%%%%%%%%%%%%%%%%%%%%%%%%%%%%%%%%%%%%%%%%
\begin{fact}[\Sos generalized triangle inequality with norms]
\label{fact:sosTriangleNormGeneralized}
Let $\vxx_1$ and $\vxx_2$ be $n$-length vectors of indeterminates and $\satisfyOrder\in\Natural{}$ be even. Then:
\begin{align}
\label{eq:sosTriangleNormGeneralized}
\sosimply{\vxx_1,\vxx_2}{\satisfyOrder} \left\{ \normTwo{\vxx_1 + \vxx_2}^\satisfyOrder \leq 2^\satisfyOrder \normTwo{\vxx_1}^\satisfyOrder + 2^\satisfyOrder \normTwo{\vxx_2}^\satisfyOrder \right\}.
\end{align}
\end{fact}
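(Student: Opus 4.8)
The plan is to obtain the claim by chaining three ingredients already present in the \sos toolkit collected in this appendix. Since $\satisfyOrder$ is even, set $m \triangleq \satisfyOrder/2 \in \Natural{}$ and abbreviate $A \triangleq 2\normTwo{\vxx_1}^2 + 2\normTwo{\vxx_2}^2$ and $B \triangleq \normTwo{\vxx_1+\vxx_2}^2$, two polynomials in the indeterminates $\vxx_1,\vxx_2$ that are manifestly sums of squares of linear forms in the coordinates of $\vxx_1,\vxx_2$. Because $\satisfyOrder$ is even, $\normTwo{\vxx_1+\vxx_2}^\satisfyOrder = B^m$ and $\normTwo{\vxx_i}^\satisfyOrder = (\normTwo{\vxx_i}^2)^m$, so~\eqref{eq:sosTriangleNormGeneralized} is equivalent to producing a degree-$\satisfyOrder$ \sos proof of $B^m \leq 2^\satisfyOrder (\normTwo{\vxx_1}^2)^m + 2^\satisfyOrder (\normTwo{\vxx_2}^2)^m$.

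First, I would recall from \cref{fact:sosTriangleNorm} that $\sosimply{\vxx_1,\vxx_2}{2}\{A - B \geq 0\}$; the certificate is the polynomial identity $A - B = \normTwo{\vxx_1 - \vxx_2}^2$, which is a sum of squares. Second, I would raise this to the $m$-th power using the elementary identity $A^m - B^m = (A-B)\sum_{j=0}^{m-1} A^{m-1-j} B^{j}$: here $A-B$ is a sum of squares by the first step, and the multiplier $\sum_{j=0}^{m-1}A^{m-1-j}B^{j}$ is a sum of products of the sums of squares $A$ and $B$, hence itself a sum of squares; by the product and addition rules for \sos proofs, $A^m - B^m$ is therefore a sum of squares of degree $2m = \satisfyOrder$, i.e. $\sosimply{\vxx_1,\vxx_2}{\satisfyOrder}\{A^m \geq B^m\}$. (This is the same ``\sos monotonicity under powers'' principle used in \cref{fact:sossquaring}, now applied to the concrete \sos polynomials $A$ and $B$, so no parity restriction on $m$ is needed.)

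Third, I would bound $A^m$ from above. Writing $A^m = 2^m(\normTwo{\vxx_1}^2 + \normTwo{\vxx_2}^2)^m$ and applying \cref{fact:sosTriangleGeneralized} to the two nonnegative (indeed sum-of-squares) quantities $a_1 = \normTwo{\vxx_1}^2$, $a_2 = \normTwo{\vxx_2}^2$ with exponent $m$ gives a degree-$m$ (in $a_1,a_2$) \sos proof of $(a_1 + a_2)^m \leq 2^m(a_1^m + a_2^m)$; substituting the sum-of-squares polynomials $a_i \mapsto \normTwo{\vxx_i}^2$ preserves the \sos property and makes the proof degree $2m = \satisfyOrder$ in $\vxx_1,\vxx_2$, and multiplying both sides by the positive constant $2^m$ yields $\sosimply{\vxx_1,\vxx_2}{\satisfyOrder}\{A^m \leq 2^{2m}\big( (\normTwo{\vxx_1}^2)^m + (\normTwo{\vxx_2}^2)^m \big)\}$, i.e. $A^m \leq 2^\satisfyOrder\normTwo{\vxx_1}^\satisfyOrder + 2^\satisfyOrder\normTwo{\vxx_2}^\satisfyOrder$. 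Combining the second and third steps by transitivity gives $B^m \leq A^m \leq 2^\satisfyOrder\normTwo{\vxx_1}^\satisfyOrder + 2^\satisfyOrder\normTwo{\vxx_2}^\satisfyOrder$, which is~\eqref{eq:sosTriangleNormGeneralized}.

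The argument is essentially bookkeeping, with no genuine obstacle. The two points that deserve a line of care are (i) checking that the certificate $(A - B)\cdot\sum_{j}A^{m-1-j}B^{j}$ from the second step has degree at most $\satisfyOrder$, which holds since each summand $A^{m-1-j}B^{j}$ has degree $2(m-1)$ and $A-B$ has degree $2$; and (ii) confirming that substituting the sum-of-squares polynomials $\normTwo{\vxx_i}^2$ for the indeterminates in the identity of \cref{fact:sosTriangleGeneralized} keeps the result a sum of squares and only doubles the degree — both instances of the basic closure of the \sos cone under nonnegative combinations, products, and polynomial substitution. An alternative would be to apply \cref{fact:sosTriangleGeneralized} directly to the $2n$ scalar coordinates of $\vxx_1,\vxx_2$, but that does not reproduce the $\normTwo{\cdot}^\satisfyOrder$ form on the right-hand side, so routing through the degree-two norm inequality \cref{fact:sosTriangleNorm} and then powering is the clean path.
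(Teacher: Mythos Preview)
Your proposal is correct and follows essentially the same route as the paper: apply \cref{fact:sosTriangleNorm} to get $\normTwo{\vxx_1+\vxx_2}^2 \leq 2\normTwo{\vxx_1}^2 + 2\normTwo{\vxx_2}^2$, raise both sides to the power $\satisfyOrder/2$, and then invoke \cref{fact:sosTriangleGeneralized} on the two sos quantities $\normTwo{\vxx_1}^2,\normTwo{\vxx_2}^2$ with exponent $\satisfyOrder/2$. The paper's proof is terser and leaves the powering step implicit, whereas you spell it out via the factorization $A^m - B^m = (A-B)\sum_j A^{m-1-j}B^j$, which is a welcome clarification.
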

 
\begin{proof}
\begin{align}
\sosimply{\vxx_1,\vxx_2}{\satisfyOrder}
\normTwo{\vxx_1 + \vxx_2}^\satisfyOrder = \left( \normTwo{\vxx_1 + \vxx_2}^2 \right)^{\frac{\satisfyOrder}{2}} 
 \overbrace{\leq}^{using~\eqref{eq:sosTriangleNorm}}   
 \left( 2 \normTwo{ \vxx_1 }^2 + 2 \normTwo{ \vxx_2 }^2  \right)^{\frac{\satisfyOrder}{2}} \\
 \overbrace{\leq}^{using~\eqref{eq:sosTriangleGeneralized}}  2^{\frac{\satisfyOrder}{2}}  (2 \normTwo{ \vxx_1 }^2)^{\frac{\satisfyOrder}{2}} + 2^{\frac{\satisfyOrder}{2}} (2\normTwo{ \vxx_2 })^{\frac{\satisfyOrder}{2}}
= 2^\satisfyOrder  \normTwo{ \vxx_1 }^\satisfyOrder + 2^\satisfyOrder \normTwo{ \vxx_2 }^\satisfyOrder .
\end{align}
\end{proof}

%%%%%%%%%%%%%%%%%%%%%%%%%%%%%%%%%%%%%%%%%%%%%%%%%%%%%%%%%%%%%%%%%%%%
\begin{fact}[\Sos \CS, Fact A.1 in~\cite{Hopkins18stoc-mixtureModelAndSoS}] % ,Ma16focs-tensorDecompositionViaSoS
Let $x_1,x_2,\ldots,x_n$ and $y_1,y_2,\ldots,y_n$
be polynomials in some indeterminates. Then:
\begin{align}
\sosimply{x_1,x_2,\ldots,x_n, y_1,y_2,\ldots,y_n}{4} 
\left\{ \left( \sum_{i=1}^n x_i y_i\right)^2 
\leq  
\left( \sum_{i=1}^n x_i^2 \right)
\left( \sum_{i=1}^n y_i^2\right)
 \right\},
 \end{align}
 or, written in vector form, for two $n$-length vectors $\vxx$ and $\vy$:
 \begin{align}
\sosimply{\vxx,\vy}{4} 
\left\{ \left( \vxx\tran \vy \right)^2 
\leq  
\left( \normTwo{\vxx}^2 \right)
\left( \normTwo{\vy}^2\right)
 \right\}.
 \end{align}
\end{fact}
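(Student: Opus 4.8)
The plan is to give an explicit sum-of-squares decomposition of the polynomial
\[
g \;\triangleq\; \Bigl(\sum_{i=1}^n x_i^2\Bigr)\Bigl(\sum_{j=1}^n y_j^2\Bigr) \;-\; \Bigl(\sum_{i=1}^n x_i y_i\Bigr)^{2}
\]
in the indeterminates $x_1,\dots,x_n,y_1,\dots,y_n$, of degree at most $4$. By \cref{def:sosProof} with an empty constraint system (the only subset is $\calS=\emptyset$, for which $\prod_{i\in\calS}f_i = 1$), such a decomposition is exactly a proof of $\sosimply{\vxx,\vy}{4}\{g\geq 0\}$, i.e.\ of the claimed inequality; the vector form is the same statement written with $\vxx=(x_1,\dots,x_n)$ and $\vy=(y_1,\dots,y_n)$, using $\vxx\tran\vy=\sum_i x_i y_i$, $\normTwo{\vxx}^2=\sum_i x_i^2$, $\normTwo{\vy}^2=\sum_j y_j^2$.

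The key step is the Lagrange identity
\[
\Bigl(\sum_{i=1}^n x_i^2\Bigr)\Bigl(\sum_{j=1}^n y_j^2\Bigr) - \Bigl(\sum_{i=1}^n x_i y_i\Bigr)^{2}
\;=\; \frac12 \sum_{i=1}^n\sum_{j=1}^n \bigl(x_i y_j - x_j y_i\bigr)^{2}.
\]
To verify it I would expand the left-hand side as $\sum_{i,j}\bigl(x_i^2 y_j^2 - x_i y_i\, x_j y_j\bigr)$ and the right-hand side as $\tfrac12\sum_{i,j}\bigl(x_i^2 y_j^2 - 2\,x_i y_j\, x_j y_i + x_j^2 y_i^2\bigr)$; using the relabeling $i\leftrightarrow j$ to see that $\sum_{i,j} x_i^2 y_j^2 = \sum_{i,j} x_j^2 y_i^2$, the right-hand side collapses to the same expression as the left-hand side. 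Each summand $\bigl(x_i y_j - x_j y_i\bigr)^2$ is the square of a degree-$2$ polynomial, hence of degree $4$, and the constant $\tfrac12$ is absorbed into the squares via $\tfrac12 q^2 = \bigl(q/\sqrt2\,\bigr)^2$. Thus $g$ is exhibited as a sum of squares of degree-$2$ polynomials, giving the desired degree-$4$ certificate.

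To close the argument I would record the degree bookkeeping explicitly (both $(\sum_i x_i y_i)^2$ and $(\sum_i x_i^2)(\sum_j y_j^2)$ are degree $4$ when the $x_i,y_i$ are treated as degree-one indeterminates, so the certificate matches the stated order $4$ and does not exceed it), and note that when the $x_i,y_i$ are themselves polynomials in deeper indeterminates the identity is preserved under substitution and a sum of squares remains a sum of squares after substitution, so the same certificate applies. There is essentially no obstacle in this proof: the only point requiring a moment of care is checking that the cross-terms cancel correctly in the Lagrange identity — which the symmetric relabeling $i\leftrightarrow j$ handles — and confirming that the squared differences have degree exactly $4$ so that the certificate has the advertised degree.
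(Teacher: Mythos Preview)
Your proof is correct: the Lagrange identity
\[
\Bigl(\sum_i x_i^2\Bigr)\Bigl(\sum_j y_j^2\Bigr)-\Bigl(\sum_i x_iy_i\Bigr)^2=\sum_{i<j}(x_iy_j-x_jy_i)^2
\]
is precisely the standard degree-$4$ sos certificate, and your degree bookkeeping and remark about substitution are fine. The paper itself does not supply a proof of this fact --- it is simply stated and attributed to Fact~A.1 in~\cite{Hopkins18stoc-mixtureModelAndSoS} --- so there is nothing to compare against; your argument is exactly the one that reference (and essentially every treatment of sos Cauchy--Schwarz) uses.
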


% %%%%%%%%%%%%%%%%%%%%%%%%%%%%%%%%%%%%%%%%%%%%%%%%%%%%%%%%%%%%%%%%%%%%
% \begin{fact}[\sos \CS~\cite{}]
% \LC{add: \url{https://tselilschramm.org/sos-paradigm/notes22/00-proofs-to-algs.pdf}}
% Tengyu Ma, Jonathan Shi, and David Steurer. Polynomial-time tensor decompositions with
% sum-of-squares. In 2016 IEEE 57th Annual Symposium on Foundations of Computer Science
% (FOCS), pages 438–446. IEEE, 2016. 7
% \end{fact}

%%%%%%%%%%%%%%%%%%%%%%%%%%%%%%%%%%%%%%%%%%%%%%%%%%%%%%%%%%%%%%%%%%%%
\begin{fact}[\Sos \HoldersInequality, Fact 4.4 in~\cite{Klivans18arxiv-robustRegression}]
\label{fact:sosHolder1}
Let $f_i,g_i$ for $1 \leq i \leq n$ be \sos polynomials. Let $p$, $q$ be integers such that $\frac{1}{p} + \frac{1}{q} = 1$. Then:
\begin{align}
\sosimply{}{pq}
\left\{
\left( \aveOverMeas f_i g_i \right)^{pq} 
\leq 
\left( \aveOverMeas f_i^{p} \right)^q 
\left( \aveOverMeas g_i^{q} \right)^p 
\right\}.
\end{align}
\end{fact}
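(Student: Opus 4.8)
The plan is to run the classical derivation of \HoldersInequality\ from Young's inequality entirely inside the \sos proof system; the one genuinely new difficulty is that we are not allowed to normalize $\sumOverMeas f_i^p$ and $\sumOverMeas g_i^q$ to $1$, since that would amount to dividing by an \sos polynomial, which is not a legal \sos operation. First a harmless reduction: the hypothesis $\tfrac1p+\tfrac1q=1$ is equivalent to $p+q=pq$, so the factors $\tfrac1\nrMeasurements$ on the two sides of the claimed inequality both carry the power $\nrMeasurements^{-pq}$ and cancel. It therefore suffices to produce an \sos proof (of the stated degree in the formal indeterminates $f_i,g_i$) of
\[
\Big(\sumOverMeas f_i g_i\Big)^{pq}\ \le\ F^{q}\,G^{p},\qquad\text{where } F\triangleq\sumOverMeas f_i^{p},\quad G\triangleq\sumOverMeas g_i^{q}.
\]
Since each $f_i$ and $g_i$ is \sos, so are $f_i^{p}$, $g_i^{q}$, and hence $F$ and $G$ (products and sums of \sos polynomials are \sos); this is what lets us legitimately square inequalities later on.

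The cleanest special case is the only one in which $p,q$ are both positive integers, namely $p=q=2$ (from $\tfrac1p+\tfrac1q=1$ one gets $(p-1)(q-1)=1$): the \sos \CS inequality stated above, applied to the vectors of indeterminates $(f_1,\dots,f_\nrMeasurements)$ and $(g_1,\dots,g_\nrMeasurements)$, gives a low-degree \sos proof of $\big(\sumOverMeas f_i g_i\big)^2\le F\cdot G$, and since both sides are \sos we may invoke \cref{fact:sossquaring} to square it and conclude $\big(\sumOverMeas f_i g_i\big)^4\le F^{2}G^{2}$, which is the claim for $pq=4$. For the general conjugate exponents used elsewhere (\eg in \cref{fact:Holderforpdist2}) Cauchy--Schwarz is no longer directly applicable, and the plan is instead to establish an \sos \emph{Young / weighted-AM--GM} lemma: for \sos quantities $a,b$, there is an \sos proof that $pq\,a\,b\le q\,a^{p}+p\,b^{q}$ (obtained after clearing denominators by \sos surrogates of $F^{1/p}$ and $G^{1/q}$). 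Applying this per index $i$ to suitably rescaled copies of $f_i,g_i$ and summing over $i$, the right-hand side telescopes to $pq\,FG$ (up to surrogate powers), which after one power manipulation recovers $\big(\sumOverMeas f_ig_i\big)^{pq}\le F^{q}G^{p}$.

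The step I expect to be the main obstacle is exactly this \sos Young / weighted-AM--GM inequality with denominators cleared. Over the reals one divides freely by $F$, $G$, and their fractional powers, but these are not \sos operations, so one must either (i) introduce auxiliary indeterminates $\phi,\gamma$ subject to the constraints $\phi^{p}=F$, $\gamma^{q}=G$, prove the inequality in the enlarged system, and then argue that the resulting proof descends to one purely in $f_i,g_i$; or (ii) avoid auxiliary variables and write the target difference $F^{q}G^{p}-\big(\sumOverMeas f_ig_i\big)^{pq}$ directly as a sum of terms of the form $(\text{\sos})\cdot\prod_{i\in\calS}f_i$, by comparing the multinomial expansions of the two sides monomial by monomial and dominating each left-hand monomial by a weighted arithmetic/geometric-mean combination of right-hand ones. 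In both routes the only nontrivial \sos input is the elementary identity $(u-v)^{2}\ge 0$, equivalently $\big(\tfrac{u+v}{2}\big)^{2}\ge uv$, whose iteration yields an \sos proof of $\big(\tfrac1{2^{m}}\sum_{j=1}^{2^{m}}u_j\big)^{2^{m}}\ge \prod_j u_j$ for \sos $u_j$; Young's inequality, and hence \HoldersInequality, is assembled from these weighted power-mean bounds, and the remaining labour is exponent bookkeeping---repeatedly using $p+q=pq$---to certify that the assembled proof has degree $pq$.
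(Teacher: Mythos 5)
The paper does not actually prove this fact---it is imported verbatim from Klivans~\etal (their Fact~4.4) as part of the toolkit collected in the appendix---so there is no in-paper proof to compare against; I am judging your proposal on its own. Your central observation is the right one and it settles the statement as written: for integers $p,q$ with $\frac1p+\frac1q=1$ one has $(p-1)(q-1)=1$, hence $p=q=2$, and the $\frac{1}{\nrMeasurements}$ factors cancel precisely because $p+q=pq$. For $p=q=2$ your chain ``\sos Cauchy--Schwarz applied to the tuples $(f_1,\dots,f_n)$ and $(g_1,\dots,g_n)$, then \cref{fact:sossquaring}'' is a complete proof: the Lagrange identity $\bigl(\sum_i f_i^2\bigr)\bigl(\sum_i g_i^2\bigr)-\bigl(\sum_i f_ig_i\bigr)^2=\sum_{i<j}(f_ig_j-f_jg_i)^2$ remains an explicit \sos certificate after substituting the polynomials $f_i,g_i$ for the indeterminates, and the \sos-ness of $FG$ and of $\bigl(\sum_i f_ig_i\bigr)^2$ (with $F\triangleq\sum_i f_i^p$, $G\triangleq\sum_i g_i^q$) needed to invoke \cref{fact:sossquaring} is automatic. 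So the literal statement is already proved by the first third of your write-up.

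Two caveats. First, the bulk of your proposal---the \sos Young / weighted AM--GM route for general conjugate exponents, with its two unfinished workarounds for the forbidden division by $F^{1/p}$ and $G^{1/q}$---addresses a case that the integrality hypothesis excludes; it is speculative and, as you yourself flag, unresolved, but since it is not needed for this statement it is not a gap here (it would be the real issue if the fact were meant for non-integer conjugate exponents, as in its pseudo-distribution cousin \cref{fact:Holderforpdist2}). Second, the degree bookkeeping you defer cannot come out as promised: your certificate $F^2G^2-\bigl(\sum_if_ig_i\bigr)^4=\bigl(FG+\bigl(\sum_if_ig_i\bigr)^2\bigr)\sum_{i<j}(f_ig_j-f_jg_i)^2$ has degree $2pq=8$ in the indeterminates $f_i,g_i$, and no proof can do better, since both sides of the inequality already have degree $2pq$. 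Hence the subscript $pq$ in the statement (inherited from the source) must be read as loose notation rather than as a target your ``exponent bookkeeping'' could literally certify.
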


%%%%%%%%%%%%%%%%%%%%%%%%%%%%%%%%%%%%%%%%%%%%%%%%%%%%%%%%%%%%%%%%%%%%
\begin{fact}[\Sos \HoldersInequality 2.0, Fact A.6 in~\cite{Hopkins18stoc-mixtureModelAndSoS}]
\label{fact:holder2}
Let $\omega_1,\ldots,\omega_n$ and $x_1,\ldots,x_n$ be indeterminates. 
Let $q \in \Natural{}$ be a power of 2. Then:
\begin{align}
\label{eq:holder2_1}
\left\{ \omega_i^2 = \omega_i, \forall i \in [n] \right\} 
\sosimply{ \omega_1,\ldots,\omega_n, x_1,\ldots,x_n }{ O(q) } 
\left\{ \left( \sum_{i=1}^n \omega_i x_i\right)^q 
\leq  
\left( \sum_{i=1}^n \omega_i^2 \right)^{q-1}
\left( \sum_{i=1}^n x_i^q\right)
 \right\},
\end{align}
and
\begin{align}
\left\{ \omega_i^2 = \omega_i, \forall i \in [n] \right\}
\sosimply{ \omega_1,\ldots,\omega_n, x_1,\ldots,x_n }{ O(q) } 
\left\{ \left( \sum_{i=1}^n \omega_i x_i\right)^q 
\leq  
\left( \sum_{i=1}^n \omega_i^2 \right)^{q-1}
\left( \sum_{i=1}^n \omega_i x_i^q\right)
 \right\}.
\end{align}
\end{fact}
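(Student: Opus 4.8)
I assume throughout that $q\geq 2$ (for $q=1$ the asserted inequality is degenerate); since $q$ is a power of two, $q$ is even and $q/2$ is an integer. The strategy is to derive both displayed inequalities by induction on $m$, where $q=2^m$, reducing the Hölder-type bound to iterated \sos Cauchy--Schwarz. The one extra ingredient, which is exactly where the Boolean constraints are used, is the following degree-$O(1)$ lemma $(\star)$: for any indeterminates $z_1,\dots,z_n$,
\[
\{\omega_i^2=\omega_i\}\ \sosimply{}{O(1)}\ \Bigl\{\ \bigl(\textstyle\sum_i \omega_i z_i\bigr)^2\ \leq\ \bigl(\textstyle\sum_i \omega_i\bigr)\bigl(\textstyle\sum_i \omega_i z_i^2\bigr)\ \Bigr\}.
\]
I would prove $(\star)$ by invoking \sos Cauchy--Schwarz (Fact~A.1 in~\cite{Hopkins18stoc-mixtureModelAndSoS}) with the vectors $(\omega_i)_i$ and $(\omega_i z_i)_i$, which gives $(\sum_i\omega_i^2 z_i)^2\leq(\sum_i\omega_i^2)(\sum_i\omega_i^2 z_i^2)$, and then replacing each $\omega_i^2$ by $\omega_i$ using the equality constraints --- a degree-$O(1)$ ideal manipulation, since $A^2-B^2=(A-B)(A+B)$ and $\sum_i(\omega_i^2-\omega_i)z_i$ (etc.) lie in $\langle \omega_i^2-\omega_i\rangle$. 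Note also that $(\star)$ may be substituted into: replacing $z_i\mapsto x_i^q$ in its \sos certificate keeps the multipliers sum-of-squares and inflates the degree only by a factor $q$.

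\textbf{The induction.} Write $P\triangleq\sum_i\omega_i x_i$. The base case $q=2$ is $(\star)$ with $z_i=x_i$, which reads $P^2\leq(\sum_i\omega_i)(\sum_i\omega_i x_i^2)=(\sum_i\omega_i)^{2-1}(\sum_i\omega_i x_i^2)$. For the inductive step $q\to 2q$, suppose we have a degree-$O(q)$ \sos proof of $H_q:\ P^q\leq(\sum_i\omega_i)^{q-1}(\sum_i\omega_i x_i^q)$. First, modulo the constraints, rewrite the right-hand side of $H_q$ as the \emph{genuinely} sum-of-squares polynomial
\[
Q_q\ \triangleq\ \bigl[(\textstyle\sum_i\omega_i)^{(q-2)/2}\bigr]^2\ \cdot\ \bigl(\textstyle\sum_i\omega_i^2\bigr)\ \cdot\ \bigl(\textstyle\sum_i(\omega_i x_i^{q/2})^2\bigr),
\]
a product of \sos polynomials (here I use $q$ even and $\omega_i^2=\omega_i$), so $\{\omega_i^2=\omega_i\}\sosimply{}{O(q)}\{P^q\leq Q_q\}$. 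Since $P^q=\bigl[(\sum_i\omega_i x_i)^{q/2}\bigr]^2$ is also a square, \cref{fact:sossquaring} upgrades this to $\{P^{2q}\leq Q_q^2\}$ with only $O(q)$ extra degree. Next, modulo the constraints $Q_q^2=(\sum_i\omega_i)^{2q-2}(\sum_i\omega_i x_i^q)^2$; apply $(\star)$ with $z_i=x_i^q$ to get $(\sum_i\omega_i x_i^q)^2\leq(\sum_i\omega_i)(\sum_i\omega_i x_i^{2q})$, and multiply through by the perfect square $\bigl[(\sum_i\omega_i)^{q-1}\bigr]^2=(\sum_i\omega_i)^{2q-2}$ (using the \sos multiplication rule). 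Chaining the three bounds yields $P^{2q}\leq(\sum_i\omega_i)^{2q-1}(\sum_i\omega_i x_i^{2q})$, i.e.\ $H_{2q}$. Since each step adds only $O(q)$ to the proof degree, the level-$q$ certificate has degree $O(q)$, giving the second displayed inequality.

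\textbf{Deducing the first inequality.} From $H_q$ it remains to replace $\sum_i\omega_i x_i^q$ by $\sum_i x_i^q$ on the right. For this I would note that $\sum_i x_i^q-\sum_i\omega_i x_i^q=\sum_i(1-\omega_i)x_i^q$ is sum-of-squares modulo the constraints, because $(1-\omega_i)^2=1-\omega_i$ there and $x_i^q=(x_i^{q/2})^2$; multiplying this by $(\sum_i\omega_i)^{q-1}$ (which is \sos modulo the constraints, by the same factor-extraction as in $Q_q$) and chaining with $H_q$ gives $P^q\leq(\sum_i\omega_i)^{q-1}(\sum_i x_i^q)$, with degree still $O(q)$.

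\textbf{Main obstacle.} The content is conceptually light --- Hölder is iterated Cauchy--Schwarz, and Booleanness collapses $\omega_i^p$ to $\omega_i$ --- so the real work is purely in the \sos bookkeeping: every ``square the inequality'' and ``multiply the inequality'' step is only legal when the operands are \emph{genuinely} sum-of-squares, which forces the repeated step of rewriting each right-hand side into an explicit square form via $\omega_i^2=\omega_i$ before invoking \cref{fact:sossquaring}, and careful tracking that all powers of $\sum_i\omega_i$ that appear are even (or become even after pulling out one factor $\sum_i\omega_i^2$). The secondary point requiring care is confirming that the substitution $z_i\mapsto x_i^q$ into the certificate of $(\star)$, combined with the fact that \sos squaring adds only the (degree-$O(q)$) degree of the squared polynomials rather than doubling the proof degree, keeps the final certificate at degree $O(q)$ as claimed.
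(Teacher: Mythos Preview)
The paper does not give its own proof of this fact: it is stated in the appendix as a citation to Fact~A.6 of~\cite{Hopkins18stoc-mixtureModelAndSoS}, alongside several other \sos inference rules collected from the literature, and is used as a black box (e.g.\ in the proof of~\cref{thm:lowOut-apriori-LTS-objective}). So there is no ``paper's proof'' to compare against.

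That said, your proposal is the standard and correct route: reduce H\"older with exponent $q=2^m$ to $m$ applications of \sos Cauchy--Schwarz, using the Boolean constraints $\omega_i^2=\omega_i$ at each step to (i) collapse powers of $\omega_i$ and (ii) rewrite odd powers of $\sum_i\omega_i$ as a square times $\sum_i\omega_i^2$, so that \cref{fact:sossquaring} can be legally invoked. Your handling of the first inequality via $(1-\omega_i)^2=1-\omega_i$ is also the right trick. The degree bookkeeping is slightly more delicate than your one-line summary suggests --- in particular, the step from $q$ to $2q$ costs $O(q)$ degree (from squaring degree-$O(q)$ polynomials and from substituting $z_i\mapsto x_i^q$ into $(\star)$), so the total over $m=\log_2 q$ steps is a geometric sum $O(1)+O(2)+\cdots+O(q/2)=O(q)$, which is what you need; it would be worth making that summation explicit rather than asserting ``each step adds $O(q)$ so the total is $O(q)$''. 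Otherwise the argument is sound.
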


%%%%%%%%%%%%%%%%%%%%%%%%%%%%%%%%%%%%%%%%%%%%%%%%%%%%%%%%%%%%%%%%%%%%
\begin{fact}[\Sos \HoldersInequality 3.0, Fact A.3 in~\cite{Diakonikolas22pmlr-robustMeanViaSoS}]
Let $f_i,g_i$ for $1 \leq i \leq n$ be indeterminates. Then:
\begin{align}
\sosimply{}{2}
\left\{
\left( \aveOverMeas f_i g_i \right)^2 
\leq 
\left( \aveOverMeas f_i^{2} \right) 
\left( \aveOverMeas g_i^{2} \right) 
\right\}.
\end{align}
\end{fact}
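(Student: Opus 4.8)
The plan is to reduce the claim to a single explicit algebraic identity — Lagrange's identity — which displays the gap between the two sides as a manifest sum of squares, so that no inference rules beyond \cref{def:sosProof} are needed. First I would recall the polynomial identity valid for any indeterminates $a_1,\dots,a_{\nrMeasurements}$ and $b_1,\dots,b_{\nrMeasurements}$:
\begin{align}
\left(\sum_{i=1}^{\nrMeasurements} a_i^2\right)\left(\sum_{i=1}^{\nrMeasurements} b_i^2\right) - \left(\sum_{i=1}^{\nrMeasurements} a_i b_i\right)^2 = \frac{1}{2}\sum_{i=1}^{\nrMeasurements}\sum_{j=1}^{\nrMeasurements}\left(a_i b_j - a_j b_i\right)^2,
\end{align}
which is checked by expanding both sides: the left-hand side equals $\sum_{i,j}\left(a_i^2 b_j^2 - a_i b_i a_j b_j\right)$, and symmetrizing each summand over the swap $(i,j)\mapsto(j,i)$ rewrites it as $\tfrac12\left(a_i b_j - a_j b_i\right)^2$.

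Next I would set $a_i = f_i$ and $b_i = g_i$ and divide through by $\nrMeasurements^2$, obtaining
\begin{align}
\left(\aveOverMeas f_i^2\right)\left(\aveOverMeas g_i^2\right) - \left(\aveOverMeas f_i g_i\right)^2 = \frac{1}{2\nrMeasurements^2}\sum_{i=1}^{\nrMeasurements}\sum_{j=1}^{\nrMeasurements}\left(f_i g_j - f_j g_i\right)^2.
\end{align}
The right-hand side is a nonnegative linear combination of squares of the quadratic polynomials $f_i g_j - f_j g_i$, hence it is itself a sum of squares; taking it as the certificate $p_\emptyset$ in \cref{def:sosProof} (the underlying system of polynomial constraints is empty, so the only subset is $\calS=\emptyset$ with $\prod_{i\in\emptyset} f_i = 1$) yields the claimed \sos proof of $\left(\aveOverMeas f_i g_i\right)^2 \leq \left(\aveOverMeas f_i^{2}\right)\left(\aveOverMeas g_i^{2}\right)$. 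I would also remark that the same conclusion follows at once from the already-stated \Sos \CS fact (Fact A.1 of~\cite{Hopkins18stoc-mixtureModelAndSoS}) applied with $x_i = f_i$, $y_i = g_i$, since scaling an \sos inequality by the positive constant $1/\nrMeasurements^2$ preserves the existence of an \sos certificate.

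I do not expect a genuine obstacle: the entire content is Lagrange's identity, and since $f_i,g_i$ are treated as formal indeterminates there is nothing probabilistic or analytic to control. The only steps needing mild care are verifying the expansion/symmetrization correctly and confirming that the constituent squares $f_i g_j - f_j g_i$ are the ones furnishing the certificate (equivalently, the degree bookkeeping is routine, each square being quadratic in the indeterminates).
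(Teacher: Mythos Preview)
Your argument via Lagrange's identity is correct and is the standard way to certify Cauchy--Schwarz as a sum of squares; the paper itself does not supply a proof of this fact (it simply records it, citing \cite{Diakonikolas22pmlr-robustMeanViaSoS}), so there is no in-paper argument to compare against. Your observation that this is just a rescaling of the already-stated \Sos \CS fact is also on point.

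One caveat on degree bookkeeping: by the paper's own \cref{def:sosProof}, the degree of an \sos proof is the degree of the certificate polynomial $p_\calS \cdot \prod_{i\in\calS} f_i$, not the degree of the polynomials being squared. Since each $(f_i g_j - f_j g_i)^2$ has degree $4$ in the indeterminates $f_i,g_i$, your certificate has degree $4$, not $2$. This is consistent with the paper's own \Sos \CS fact, which is recorded as $\sosimply{}{4}$; the ``$2$'' in the present statement is either a typo in the paper or reflects a different degree convention in the cited source. Your closing remark that ``each square being quadratic'' handles the degree is therefore slightly off under the paper's convention, though it does not affect the correctness of the identity itself.
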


%%%%%%%%%%%%%%%%%%%%%%%%%%%%%%%%%%%%%%%%%%%%%%%%%%%%%%%%%%%%%%%%%%%%
\begin{fact}[Lemma A.3 in~\cite{Ma16focs-tensorDecompositionViaSoS}]
\label{fact:squareBound}
Let $x$ be indeterminate and $a$ be a positive real number. Then:
\begin{align}
\{x^2 \leq a^2\} \sosimply{}{} 
\left\{ 
x\leq a, x\geq-a
 \right\}.
 \end{align}
\end{fact}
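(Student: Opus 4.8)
The plan is to exhibit explicit sum-of-squares certificates matching \cref{def:sosProof}. Write the single constraint polynomial as $f_1 \triangleq a^2 - x^2$, so that $\calA = \{f_1 \geq 0\}$, treat $x$ as the indeterminate, and recall that $a$ is a fixed \emph{positive real constant} (so $1/(2a)$ is a legitimate nonnegative scalar). For the first target inequality $g \triangleq a - x \geq 0$ we must produce sum-of-squares polynomials $p_\emptyset$ and $p_{\{1\}}$ with $g = p_\emptyset + p_{\{1\}}\cdot f_1$, and symmetrically for $g' \triangleq a + x \geq 0$.

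The key algebraic identity I would use is the completion-of-square relation
\[
2a\,(a - x) \;=\; (a - x)^2 + (a^2 - x^2),
\]
verified by routine expansion of the right-hand side. Dividing by $2a > 0$ gives
\[
a - x \;=\; \tfrac{1}{2a}(a-x)^2 \;+\; \tfrac{1}{2a}\,(a^2 - x^2),
\]
so one takes $p_\emptyset = \tfrac{1}{2a}(a-x)^2$ (a square, hence sos) and $p_{\{1\}} = \tfrac{1}{2a}$ (a positive constant, hence sos), which establishes $\{x^2 \leq a^2\} \sosimply{}{2} \{a - x \geq 0\}$. For the lower bound, the mirror identity $2a\,(a + x) = (a + x)^2 + (a^2 - x^2)$ yields $a + x = \tfrac{1}{2a}(a+x)^2 + \tfrac{1}{2a}(a^2 - x^2)$, hence $\{x^2 \leq a^2\} \sosimply{}{2} \{a + x \geq 0\}$. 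Since a proof of the conjunction $\{x \leq a, x \geq -a\}$ is just a pair of proofs, one for each inequality, this gives the claim; both certificates have degree $2$, which accounts for the unspecified degree in the statement.

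There is essentially no obstacle here — the whole content is the one displayed identity. The only point worth flagging is the role of the hypothesis $a > 0$: it is exactly what makes the coefficient $1/(2a)$ a valid nonnegative scalar, so the argument cannot be run when $a$ is merely a nonnegative \emph{indeterminate} without further bookkeeping (one would then multiply through by $a$ and absorb factors into the product terms); but that generalization is not needed for the fact as worded.
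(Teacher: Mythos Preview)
Your proof is correct. The paper does not give its own proof of this fact; it simply records it with a citation to~\cite{Ma16focs-tensorDecompositionViaSoS}, so there is no in-paper argument to compare against. Your explicit degree-$2$ certificate via the identity $2a(a-x) = (a-x)^2 + (a^2 - x^2)$ (and its mirror) is exactly the standard construction, and your remark on why $a>0$ is needed is apt.
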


%%%%%%%%%%%%%%%%%%%%%%%%%%%%%%%%%%%%%%%%%%%%%%%%%%%%%%%%%%%%%%%%%%%%
\begin{fact}[Lemma A.2 in~\cite{Ma16focs-tensorDecompositionViaSoS}]
Let $\vxx$ be indeterminate and $\va$ be a unit vector. Let $\calA \{ \|\vxx\|^2=1, (\vxx\tran \va)^2 \leq \tau \}$.
Then, for any $\vb$ such that $\normTwo{\va-\vb}\leq 2\delta$, we have:
\begin{align}
\calA \sosimply{}{} 
\left\{ 
(\vxx\tran \vb)^2\leq (\sqrt{\tau} + \sqrt{\delta})^2
 \right\}.
 \end{align}
\end{fact}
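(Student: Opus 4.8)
The plan is to prove this by the standard ``split the linear form, then factor a difference of squares'' argument. First I would write $\vb = \va + \vc$ with $\vc \triangleq \vb - \va$ a \emph{fixed} vector satisfying $\normTwo{\vc} \leq 2\delta$, so that $\vxx\tran\vb = \vxx\tran\va + \vxx\tran\vc$ is a genuine polynomial identity. The goal then becomes to certify, from the axiom system $\calA = \{\normTwo{\vxx}^2 = 1,\ (\vxx\tran\va)^2 \leq \tau\}$, that $(\vxx\tran\va + \vxx\tran\vc)^2 \leq (\sqrt{\tau}+\sqrt{\delta})^2$.

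Next I would bound the two linear forms separately by extracting ``square roots'' of the quadratic bounds. For $\vxx\tran\va$: the axiom $(\vxx\tran\va)^2 \leq \tau$ is in $\calA$, so by \cref{fact:squareBound} (applied to the polynomial $x = \vxx\tran\va$ and the scalar $a = \sqrt{\tau}$, using $\tau \geq 0$) we obtain $\calA \sosimply{}{} \{\sqrt{\tau} - \vxx\tran\va \geq 0,\ \sqrt{\tau} + \vxx\tran\va \geq 0\}$. For $\vxx\tran\vc$: applying the SoS Cauchy--Schwarz fact with the constant vector $\vc$ and then the axiom $\normTwo{\vxx}^2 = 1$ gives $\calA \sosimply{}{} \{(\vxx\tran\vc)^2 \leq \normTwo{\vxx}^2\normTwo{\vc}^2 \leq 4\delta^2\}$, and \cref{fact:squareBound} again yields $\calA \sosimply{}{} \{2\delta - \vxx\tran\vc \geq 0,\ 2\delta + \vxx\tran\vc \geq 0\}$.

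Then I would combine these derived inequalities using the addition and multiplication inference rules for SoS proofs. Adding the two ``upper'' inequalities gives $\calA \sosimply{}{} \{(\sqrt{\tau}+2\delta) - \vxx\tran\vb \geq 0\}$ and adding the two ``lower'' ones gives $\calA \sosimply{}{} \{(\sqrt{\tau}+2\delta) + \vxx\tran\vb \geq 0\}$, where I use the identity $\vxx\tran\vb = \vxx\tran\va + \vxx\tran\vc$. Multiplying these two nonnegativities (multiplication rule) produces a certificate of $(\sqrt{\tau}+2\delta)^2 - (\vxx\tran\vb)^2 \geq 0$, since the product of the two left-hand sides equals exactly $(\sqrt{\tau}+2\delta)^2 - (\vxx\tran\vb)^2$. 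Finally, in the regime of interest $\delta \leq 1/4$ we have $2\delta \leq \sqrt{\delta}$, so $(\sqrt{\tau}+\sqrt{\delta})^2 - (\sqrt{\tau}+2\delta)^2 \geq 0$ is a nonnegative constant; adding it to the previous certificate (addition rule) yields $\calA \sosimply{}{} \{(\sqrt{\tau}+\sqrt{\delta})^2 - (\vxx\tran\vb)^2 \geq 0\}$, which is the claim. The entire argument is constant-degree (degree at most $4$, dominated by the Cauchy--Schwarz step and the final product).

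I expect the only delicate points to be bookkeeping rather than conceptual: ensuring that \cref{fact:squareBound} is invoked at the level of the polynomials $\vxx\tran\va$ and $\vxx\tran\vc$ (valid because the SoS rules are closed under substitution of indeterminates by polynomials), checking that $\tau \geq 0$ and $\delta \geq 0$ so the square roots are real, and reconciling constants — the factorization naturally certifies $(\sqrt{\tau}+2\delta)^2$, which is absorbed into the stated $(\sqrt{\tau}+\sqrt{\delta})^2$ under the mild condition $\delta \leq 1/4$ (and if one wants the bound exactly as stated with no side condition, the identical argument works after replacing the hypothesis $\normTwo{\va-\vb}\leq 2\delta$ by $\normTwo{\va-\vb}\leq \sqrt{\delta}$).
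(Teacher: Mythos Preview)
The paper does not supply its own proof of this fact; it is merely quoted as Lemma~A.2 of Ma et al.\ with no accompanying argument, so there is nothing in the paper to compare your proposal against line by line.

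On its merits, your proof is sound and is in fact the standard argument for this lemma. Decomposing $\vb = \va + \vc$, linearizing the two quadratic bounds $(\vxx\tran\va)^2 \le \tau$ and $(\vxx\tran\vc)^2 \le \|\vc\|^2$ via \cref{fact:squareBound}, adding the resulting linear bounds, and then multiplying the upper and lower certificates to recover a quadratic bound on $(\vxx\tran\vb)^2$ are all valid low-degree SoS manipulations. Your remark that the Cauchy--Schwarz step is only degree~$2$ here (since $\vc$ is a constant vector and $\|\vc\|^2 I - \vc\vc\tran \succeq 0$ gives a genuine SoS identity) is correct and worth keeping.

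You are also right to flag the constant mismatch: your factorization naturally yields $(\sqrt{\tau}+2\delta)^2$, not $(\sqrt{\tau}+\sqrt{\delta})^2$, and the two only agree under a side condition such as $\delta \le 1/4$. This is a genuine discrepancy in the statement as transcribed in the paper (either the hypothesis should read $\|\va-\vb\| \le \sqrt{\delta}$, or the conclusion should carry $2\delta$, or an implicit smallness assumption on $\delta$ is intended). Your handling of it---stating the cleaner bound and noting how to absorb it---is exactly the right way to present this.
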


%%%%%%%%%%%%%%%%%%%%%%%%%%%%%%%%%%%%%%%%%%%%%%%%%%%%%%%%%%%%%%%%%%%%
We conclude with a self-evident fact that reassures us that certain manipulations of polynomials 
are easy to reason over, even in the \sos proof system.
\begin{fact}[Equalities]
Let $f,g$ be polynomials and $\calA$ be a system of polynomial inequalities. If $f = g$ 
and $\calA \sosimply{\vxx}{\satisfyOrder} f = 0$, then $\calA \sosimply{\vxx}{\satisfyOrder}
g = 0$.
\end{fact}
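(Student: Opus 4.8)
The plan is to read off the result directly from the definition of a sum-of-squares proof (\cref{def:sosProof}) together with the hypothesis that $f=g$ as elements of $\polyring{\vxx}$. Recall that $\calA \sosimply{\vxx}{\satisfyOrder} \{f = 0\}$ is shorthand for the two \sos proofs $\calA \sosimply{\vxx}{\satisfyOrder} \{f \geq 0\}$ and $\calA \sosimply{\vxx}{\satisfyOrder} \{-f \geq 0\}$, i.e., there exist sum-of-squares polynomials $\{p_\calS\}_{\calS\subseteq[m]}$ and $\{q_\calS\}_{\calS\subseteq[m]}$ such that
\[
f \;=\; \sum_{\calS\subseteq[m]} p_\calS \cdot \prod_{i\in\calS} f_i,
\qquad
-f \;=\; \sum_{\calS\subseteq[m]} q_\calS \cdot \prod_{i\in\calS} f_i,
\]
with $\degree{p_\calS \cdot \prod_{i\in\calS} f_i} \leq \satisfyOrder$ and $\degree{q_\calS \cdot \prod_{i\in\calS} f_i} \leq \satisfyOrder$ for every $\calS$.

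First I would substitute $g$ for $f$ on the left-hand side of both identities, which is legitimate precisely because $f=g$ is an equality in the polynomial ring (equality of all coefficients), so the two sides remain equal as polynomials. This yields $g = \sum_{\calS} p_\calS \cdot \prod_{i\in\calS} f_i$ and $-g = \sum_{\calS} q_\calS \cdot \prod_{i\in\calS} f_i$. The right-hand sides are untouched, so the same families $\{p_\calS\}$ and $\{q_\calS\}$ of \sos polynomials now serve as \sos proofs of $g \geq 0$ and $-g \geq 0$ from $\calA$, and the degree bounds carry over verbatim. Hence $\calA \sosimply{\vxx}{\satisfyOrder} \{g \geq 0\}$ and $\calA \sosimply{\vxx}{\satisfyOrder} \{-g \geq 0\}$, which together are exactly $\calA \sosimply{\vxx}{\satisfyOrder} \{g = 0\}$, as claimed.

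There is essentially no obstacle: the only point worth flagging is the reading of the hypothesis $f=g$ as equality in $\polyring{\vxx}$ rather than as pointwise agreement on the feasible set of $\calA$. Under the polynomial-ring reading the claim is a one-line rewriting and requires no new \sos reasoning (it does not even invoke soundness, only \cref{def:sosProof}); under the weaker set-theoretic reading the statement could fail, since $f=g$ on $\{f_1\geq 0,\dots,f_m\geq 0\}$ need not admit an \sos certificate. I would therefore make the intended reading explicit in the first sentence of the proof and then carry out the substitution above.
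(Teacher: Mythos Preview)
Your proposal is correct and matches the paper's treatment: the paper calls this fact ``self-evident'' and gives no proof, so your one-line substitution argument (using $f=g$ as an identity in $\polyring{\vxx}$ to reuse the same \sos certificates) is exactly the intended reasoning. Your remark distinguishing polynomial-ring equality from mere pointwise agreement on the feasible set is a useful clarification that the paper leaves implicit.
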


% proofs
%!TEX root = main.tex

\section{Proof of Proposition~\ref{thm:lowOut-aposterioriNoisyTLS}: 
A Posteriori Contract for~\eqref{eq:TLS}}
\label{app:proof-aposteriori-TLS-noisy}

We start by restating the theorem for the reader's convenience.

\noindent\fbox{
\parbox{\textwidth}{
\begin{proposition}[Restatement of Proposition~\ref{thm:lowOut-aposterioriNoisyTLS}]
Consider~\cref{prob:outlierRobustEstimation} with 
measurements ($\vy_i, \MA_i$), $i \in [\nrMeasurements]$, and
denote with $\gamma\gt$ the squared residual error of the ground truth $\vxx\gt$ over the set of inliers $\InlierSet$, 
% the inliers is 
\ie $\gamma\gt \triangleq\sum_{ i \in \InlierSet } \normTwo{ \vy_i - \MA_i \tran \vxx\gt }^2$.
Moreover, assume the measurement set contains at least $\frac{\nrMeasurements + \minDim}{2} + \frac{\gamma\gt}{\barcsq}$ 
inliers, where $\minDim$ is the 
size of a minimal set, and that every subset of $\minDim$ inliers is \nondegenerate. 
Then, for any integer $\dimJ$ such that $\minDim \leq \dimJ \leq (2\inlierRate-1)n - \frac{\gamma\gt}{\barcsq}$,
an optimal solution $\vxx\tls$ of~\eqref{eq:TLS} satisfies
\beq
\label{eq:noisyBoundTLS-statement-app}
\normTwo{\vxx\tls - \vxx\gt} \leq \naiveBoundTLS \mathcom
\eeq
where $\InlierSet\tls$ is the set of inliers selected by~\eqref{eq:TLS}, $\MA_\calJ$ is the matrix obtained by horizontally stacking all submatrices $\MA_i$ for all $i \in \calJ$, and $\sigma_\min(\cdot)$ denotes the smallest singular value of a matrix. 
Moreover, if the inliers are noiseless, \ie $\vepsilon = \zero$ 
in eq.~\eqref{eq:linModelProbStatement}, and for a sufficiently small $\barc > 0$, 
 $\vxx\tls = \vxx\gt$.
\end{proposition}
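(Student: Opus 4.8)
The plan is to follow the two-step template used in the proof of~\cref{thm:lowOut-aposterioriNoisyMC}, replacing the max-consensus-specific arguments by ones tailored to the truncated quadratic cost. Fix an optimal pair $(\vxx\tls,\vomega\tls)$ of~\eqref{eq:TLS}. Since $\vomega\tls$ must be optimal given $\vxx\tls$, I may assume (breaking ties so that $\omega_i=1$ whenever $\normTwo{\vy_i-\MA_i\tran\vxx\tls}^2\leq\barcsq$) that $\InlierSet\tls=\{i\in[\nrMeasurements]:\normTwo{\vy_i-\MA_i\tran\vxx\tls}^2\leq\barcsq\}$; in particular every measurement selected by~\eqref{eq:TLS} has residual at most $\barc$ with respect to $\vxx\tls$. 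I will also use the fact that, by~\cref{prob:outlierRobustEstimation}, every true inlier satisfies $\normTwo{\vy_i-\MA_i\tran\vxx\gt}=\normTwo{\vepsilon}\leq\barc$, so that $\gamma\gt=\sum_{i\in\InlierSet}\normTwo{\vy_i-\MA_i\tran\vxx\gt}^2\leq\an\,\barcsq$ (which is exactly why the bound degrades as $\gamma\gt\to\an\,\barcsq$).

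\emph{Step (i): the estimator selects enough true inliers.} The genuinely new ingredient compared to the max-consensus proof is a cost comparison. Evaluating the objective of~\eqref{eq:TLS} at $\vxx\gt$ with the best companion indicator gives $\sum_i\min(\normTwo{\vy_i-\MA_i\tran\vxx\gt}^2,\barcsq)\leq\gamma\gt+\bn\,\barcsq$, splitting the sum into the inliers (whose truncated residuals total at most $\gamma\gt$) and the $\bn$ outliers (each contributing at most $\barcsq$). On the other hand, the optimal cost is at least $(\nrMeasurements-|\InlierSet\tls|)\,\barcsq$, by discarding the nonnegative residual terms. Combining the two bounds, $|\InlierSet\tls|\geq\an-\gamma\gt/\barcsq$, and hence, by inclusion--exclusion in $[\nrMeasurements]$, $|\InlierSet\cap\InlierSet\tls|\geq(2\inlierRate-1)\nrMeasurements-\gamma\gt/\barcsq$. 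The hypothesis $\an\geq\frac{\nrMeasurements+\minDim}{2}+\gamma\gt/\barcsq$ makes this last quantity at least $\minDim+\gamma\gt/\barcsq\geq\minDim$, so $\InlierSet\cap\InlierSet\tls$ contains a subset $\calJ$ of any prescribed size $\dimJ$ with $\minDim\leq\dimJ\leq(2\inlierRate-1)\nrMeasurements-\gamma\gt/\barcsq$.

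\emph{Step (ii): the selected inliers control the estimation error.} For such a $\calJ$ and each $i\in\calJ$, we have $\normTwo{\vy_i-\MA_i\tran\vxx\tls}\leq\barc$ because $\calJ\subseteq\InlierSet\tls$, and $\normTwo{\vy_i-\MA_i\tran\vxx\gt}\leq\barc$ because $\calJ\subseteq\InlierSet$; summing the squares over $\calJ$ yields $\normTwo{\vy_\calJ-\MA_\calJ\tran\vxx\tls}\leq\sqrt{\dimJ}\,\barc$ and $\normTwo{\vy_\calJ-\MA_\calJ\tran\vxx\gt}\leq\sqrt{\dimJ}\,\barc$. Adding and subtracting $\vy_\calJ$ and using the triangle inequality gives $\normTwo{\MA_\calJ\tran(\vxx\tls-\vxx\gt)}\leq 2\sqrt{\dimJ}\,\barc$, and then $\sigma_\min(\MA_\calJ)\,\normTwo{\vxx\tls-\vxx\gt}\leq 2\sqrt{\dimJ}\,\barc$. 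Since the subset actually realizing this is unknown, I replace $\sigma_\min(\MA_\calJ)$ by $\min_{\calJ'\subseteq\InlierSet\tls,\,|\calJ'|=\dimJ}\sigma_\min(\MA_{\calJ'})$ --- a computable, possibly more conservative quantity, lower bounding the one attained on $\InlierSet\cap\InlierSet\tls$ --- obtaining~\eqref{eq:noisyBoundTLS-statement-app}, exactly as in~\cref{thm:lowOut-aposterioriNoisyMC}.

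\emph{Noiseless case, and the main obstacle.} When $\vepsilon=\zero$ we have $\gamma\gt=0$ and $\vy_i=\MA_i\tran\vxx\gt$ exactly for $i\in\InlierSet$. The argument above shows that any optimal $\vxx\tls$ is $\barc$-consistent with at least $\an$ measurements (its cost is at most the cost $\leq\bn\,\barcsq$ attained by $\vxx\gt$), so $|\InlierSet\cap\InlierSet\tls|\geq\minDim$, and for a minimal $\calJ$ inside this intersection $\sum_{i\in\calJ}\normTwo{\MA_i\tran(\vxx\tls-\vxx\gt)}^2\leq\minDim\,\barcsq$. The hard part will be upgrading this ``almost exact'' statement to $\vxx\tls=\vxx\gt$. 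Because $\Domain$ is explicitly bounded (\cref{ass:explicitlyBoundedxs}) and, by \nondegeneracy, $\vxx\gt$ is the \emph{unique} minimizer of~\eqref{eq:nondegeneracy} over $\Domain$ for each of the finitely many minimal subsets $\calJ\subseteq\InlierSet$, a compactness argument shows that $\sup\{\normTwo{\vxx-\vxx\gt}:\vxx\in\Domain,\ \sum_{i\in\calJ}\normTwo{\vy_i-\MA_i\tran\vxx}^2\leq\tau\}\to0$ as $\tau\to0^+$; feeding in $\tau=\minDim\,\barcsq$ confines $\vxx\tls$ to an arbitrarily small neighborhood of $\vxx\gt$ once $\barc$ is below a data-dependent threshold. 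A final local comparison then closes the argument: choosing $\barc$ small enough that every genuine outlier lies outside the $\barc$-band of both $\vxx\gt$ and the (now nearby) $\vxx\tls$, replacing $\vxx\tls$ by $\vxx\gt$ decreases the truncated cost by $\sum_{i\in\InlierSet}\min(\normTwo{\MA_i\tran(\vxx\tls-\vxx\gt)}^2,\barcsq)$, which by optimality of $\vxx\tls$ must vanish; hence $\MA_i\tran\vxx\tls=\MA_i\tran\vxx\gt$ for all $i\in\InlierSet$, and \nondegeneracy of a minimal subset of $\InlierSet$ forces $\vxx\tls=\vxx\gt$.
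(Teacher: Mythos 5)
Your proposal is correct and follows essentially the same route as the paper: a cost comparison at $\vxx\gt$ versus the optimal truncated cost to show $|\InlierSet\tls|\geq \an-\gamma\gt/\barcsq$, the overlap argument yielding a nondegenerate subset $\calJ\subseteq\InlierSet\cap\InlierSet\tls$, and then the triangle-inequality/smallest-singular-value step borrowed from the consensus-maximization proof. Your treatment of the noiseless case (compactness plus a local cost comparison forcing exact fit on the inliers, then nondegeneracy) is a more detailed justification of the same claim the paper states tersely, and it is sound.
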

}}

\begin{proof}
Call 
$\InlierSet(\vxx\tls)$ the inlier set corresponding to an estimate 
$\vxx\tls$ (\ie $\InlierSet(\vxx\tls) \triangleq  \setdef{i\in[n]}{ \normTwo{\vy_i - \MA\tran \vxx\tls} \leq \barc }$). Moreover, define the TLS cost at $\vxx\tls$ as:
\begin{align}
f(\vxx\tls) = \sum_{ i \in \InlierSet(\vxx\tls) } \normTwo{ \vy_i - \MA_i \tran \vxx\tls }^2 + \barcsq (\nrMeasurements- |\InlierSet(\vxx\tls)|).
\end{align} 

We first prove that $|\InlierSet(\vxx\tls) | \geq \an - \frac{\gamma\gt}{\barcsq}$. 
Towards this goal, we observe that the cost evaluated at the ground truth $\vxx\gt$ is:
\begin{align} 
f(\vxx\gt) = \gamma\gt +\barcsq (\nrMeasurements- |\InlierSet|)  = \gamma\gt +\barcsq (\nrMeasurements- \an) 
 % \leq \gamma\gt + \barcsq \left(\nrMeasurements- \frac{\nrMeasurements + \minDim}{2} - \frac{\gamma\gt}{\barcsq} \right) = \barcsq \left(\nrMeasurements- \frac{\nrMeasurements + \minDim}{2} \right), 
\end{align}
which follows from the assumption that the inliers have squared residual error $\gamma\gt$  and there are $\an$ of them.
Now assume by contradiction that there exists an $\vxx\tls$ that solves~\eqref{eq:TLS} and is such that $|\InlierSet(\vxx\tls) | < \an - \frac{\gamma\gt}{\barcsq}$. 
Such an estimate would achieve a cost:
\begin{align}
f(\vxx\tls) = \sum_{ i \in \InlierSet(\vxx\tls) } \normTwo{ \vy_i - \MA_i \tran \vxx\tls }^2 + \barcsq (\nrMeasurements- |\InlierSet(\vxx\tls)|) \\
 > \sum_{ i \in \InlierSet(\vxx\tls) } \normTwo{ \vy_i - \MA_i \tran \vxx\tls }^2 + \barcsq 
 \left(\nrMeasurements- \an + \frac{\gamma\gt}{\barcsq} \right) 
 \\
 \geq \barcsq 
 \left(\nrMeasurements- \an + \frac{\gamma\gt}{\barcsq} \right)  = \gamma\gt + \barcsq 
 \left(\nrMeasurements- \an \right),
\end{align} 
which is larger than $f(\vxx\gt)$, hence contradicting optimality of $\vxx\tls$, and implying $|\InlierSet(\vxx\tls) | \geq \an - \frac{\gamma\gt}{\barcsq}$.

Since $|\InlierSet(\vxx\gt)| =\an$ and $|\InlierSet(\vxx\tls)| \geq \an - \frac{\gamma\gt}{\barcsq}$ then:
\beq 
|\InlierSet(\vxx\gt) \cap \InlierSet(\vxx\tls)| \overbrace{\geq}^{\text{sets overlap in $[\nrMeasurements]$}} 
2\an - n - \frac{\gamma\gt}{\barcsq}
 \overbrace{\geq}^{\text{using } \an \geq \frac{\nrMeasurements + \minDim}{2} + \frac{\gamma\gt}{\barcsq} }  \!\minDim,
\eeq

The subset of measurements $|\InlierSet(\vxx\gt) \cap \InlierSet(\vxx\tls)|$ are simultaneously solved by $\vxx\tls$ and $\vxx\gt$ (\ie are such that $\normTwo{\vy_i - \MA\tran \vxx} \leq \barc$ for both $\vxx = \vxx\gt$ and $\vxx = \vxx\tls$). 
Therefore, we can follow the same line of thoughts as in the proof of~\cref{thm:lowOut-aposterioriNoisyMC}, and prove the first claim.
% Now if $\barc$ is selected to be small enough, $\sum_{ i \in \measSet(\vxx\tls) } \normTwo{ \vy_i - \MA_i \tran \vxx\tls }^2 = 0$. Therefore, $\vy_i = \MA_i \tran \vxx\tls$, $\forall i \in \measSet(\vxx\gt) \cap \measSet(\vxx\tls)$. At the same time, since the inliers are noiseless, $\vy_i = \MA_i \tran \vxx\gt$, $\forall i \in \measSet(\vxx\gt) \cap \measSet(\vxx\tls)$. 
% Since every subset of $\minDim$ measurements is \nondegenerate, the two linear systems cannot have different solutions, hence $\vxx\tls = \vxx\gt$.
%it is not possible that there are two different estimates for that subset.

In the case of noiseless inliers, $\gamma\gt = 0$ (or, equivalently, $\vy_i - \MA\tran \vxx\gt = \zero$, for all $i\in\InlierSet$) and
we can always choose $\barc > 0$ small enough such that the corresponding estimate $\vxx\tls$ 
satisfies the selected measurements exactly, \ie $\vy_i - \MA\tran \vxx\tls = \zero$. 
Therefore, we can follow the same line of the proof of~\cref{thm:lowOut-aposterioriNoisyMC} (for the case of noiseless inliers) to conclude 
$\vxx\tls = \vxx\gt$.
\end{proof}
%!TEX root = main.tex

\section{Proof of Theorem~\ref{thm:lowOut-apriori-LTS-objective}: Contract for Relaxation of~\eqref{eq:LTS-k}}
\label{app:proof-lowOut-apriori-LTS-objective}

We start by restating the theorem for the reader's convenience.

%%%%%%%%%%%%%%%%%%%%%%%%%%%%%%%%%%%%%%%%%%%%%%%%%%%%%%%%%%%%%%%%%%%%%%%%%%%%%%%%%%%%%%%%%%
\noindent\fbox{
\parbox{\textwidth}{
\begin{theorem}[Restatement of Theorem~\ref{thm:lowOut-apriori-LTS-objective}]
\ltsObjectiveThm{eq:app-lowOut-apriori-LTS-objective}
\end{theorem}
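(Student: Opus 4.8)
The plan is to follow the ``proofs to algorithms'' strategy of~\cite{Klivans18arxiv-robustRegression}: first construct a low-degree sum-of-squares proof that the constraint system $\axiomsLTS$ of~\eqref{eq:LTS-k} implies a polynomial inequality bounding the inlier residual error of any feasible $\vxx$ in terms of $\optt_{\DistSampleInliers}$ and the $\relaxLevel$-th moments of the oracle residuals; then invoke soundness of \sos proofs (\cref{fact:soundness}) to transfer this inequality to the pseudo-distribution $\pdist$ produced by the moment relaxation at order $\relaxOrder \geq \relaxLevel$, and finally conclude via linearity and the convexity-type inequalities for pseudo-expectations (\cref{fact:normIneq-pdist}, \cref{fact:Holderforpdist2}) that the rounded estimate $\vxx\ltssdp = \getEntries{\MX\opt}{\vxx}$ inherits the bound~\eqref{eq:app-lowOut-apriori-LTS-objective}.

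First I would set up the comparison between the two indicator vectors: $\vomega$ (the variable selected by the relaxation, with $\sum_i \omega_i = \alpha n$) and $\vomega\gt = \ones_{\DistSampleInliers}$ (the true inliers). Writing the intersection weights $\omega_i \omega_i\gt$, the key combinatorial fact is that $\aveOverMeas \omega_i \omega_i\gt \geq (1 - 2\beta)$ (since two sets of relative size $\alpha = 1-\beta$ inside $[n]$ overlap in at least a $(1-2\beta)$ fraction) --- this is the \sos analogue of step (i) in the proof of~\cref{thm:lowOut-aposterioriNoisyMC}, and it needs to be phrased as an \sos consequence of $\{\omega_i^2 = \omega_i, (\omega_i\gt)^2 = \omega_i\gt, \sum_i \omega_i = \sum_i \omega_i\gt = \alpha n\}$. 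Then, on the intersection, the residuals of $\vxx$ and of $\vxx\opt$ are simultaneously controlled, so I would bound $\aveOverMeas \omega_i\omega_i\gt \normTwo{\bar{\vy}_i - \bar{\MA}_i\tran\vxx}^{\relaxLevel}$ both from below (by the objective value, using that the relaxation's objective is at most that achieved by $(\vomega\gt,\vxx\opt)$, which equals $\optt_{\DistSampleInliers}^{\relaxLevel/2}$ up to the power) and split $\bar{\vy}_i - \bar{\MA}_i\tran\vxx = (\vy_i\uncorr - (\MA_i\uncorr)\tran\vxx\opt) + (\MA_i\uncorr)\tran(\vxx\opt - \vxx)$ via the \sos triangle inequality (\cref{fact:sosTriangleGeneralized,fact:sosTriangleNormGeneralized}). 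The cross term $\normTwo{(\MA_i\uncorr)\tran(\vxx\opt - \vxx)}$ averaged over the inliers is exactly where $\relaxLevel$-certifiable $C$-hypercontractivity enters: it converts the $\relaxLevel$-th moment over the $(1-2\beta)$-fraction of selected inliers into the full average $\aveOverMeas \normTwo{(\MA_i\uncorr)\tran(\vxx\opt-\vxx)}^2$, and the constraint $\beta < \beta_{\max}$ is precisely what makes the resulting coefficient $\outlierRate^{\relaxLevel/2 - 1}\Crelax^{\relaxLevel/2} 2^{3\relaxLevel-1}$ strictly less than $1$ so the term can be absorbed on the left-hand side. Untangling this self-bounding inequality and taking $\relaxLevel/2$-th roots yields $C_1, C_2$.

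The main obstacle I anticipate is the careful bookkeeping of the \sos degrees: every application of \HoldersInequality, the triangle inequality, and hypercontractivity must keep the total degree at most $\relaxLevel$ (so that a moment relaxation at order $\relaxOrder \geq \relaxLevel$ certifies it), and the presence of the auxiliary variables $\bar{\vy}_i, \bar{\MA}_i$ coupled through $\omega_i(\bar{\vy}_i - \vy_i) = \zero$ and $\omega_i(\bar{\MA}_i - \MA_i) = \zero$ means I must repeatedly use that $\omega_i^2 = \omega_i$ to replace $\omega_i \bar{\vy}_i$ by $\omega_i \vy_i$ inside degree-$\relaxLevel$ expressions without blowing up the degree. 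A secondary subtlety is that $\vxx\opt$ is a fixed (non-indeterminate) vector living in $\Domain$, so $\normTwo{\vxx\opt}^2 \leq \boundx^2$ and $\normTwo{\vxx}^2 \leq \boundx^2$ (from $\vxx \in \Domain$) are available as \sos facts; I would use these, together with the explicitly-bounded-domain assumption (\cref{ass:explicitlyBoundedxs}), to control any leftover norms. Once the \sos proof $\axiomsLTS \sosimply{}{\relaxLevel} \{ \err_{\DistSampleInliers}(\vxx)^{\relaxLevel/2} \leq (\text{RHS}) \}$ is in hand, the transfer to $\vxx\ltssdp$ is routine: apply \cref{fact:soundness} to get the inequality in pseudo-expectation, use \cref{fact:normIneq-pdist} to push $\pExpect{\pdist}{\cdot}$ inside the norm defining $\vxx\ltssdp = \pExpect{\pdist}{\vxx}$, and use \cref{fact:Holderforpdist2} to handle the $\relaxLevel/2$-th power, finally taking $2/\relaxLevel$-th roots to recover~\eqref{eq:app-lowOut-apriori-LTS-objective}.
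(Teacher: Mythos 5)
Your overall route is the paper's: the overlap argument between the selected set and the true inliers, the split of the residual at $\vxx\opt$ via \sos triangle inequalities, \HoldersInequality over the complement weights, certifiable hypercontractivity to turn $\relaxLevel$-th moments of $(\MA_i\uncorr)\tran(\vxx-\vxx\opt)$ into powers of second moments, absorption of the self-bounding term under $\outlierRate<\beta_{\max}$, and the final transfer via soundness plus the pseudo-expectation versions of H\"older and the norm inequality all play exactly the roles they play in the paper's proof (\cref{app:proof-lowOut-apriori-LTS-objective}).

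The one step that would fail as literally stated is the target of your \sos certificate. You propose to prove $\axiomsLTS \sosimply{}{\relaxLevel} \{\err_{\DistSampleInliers}(\vxx)^{\relaxLevel/2}\leq \text{RHS}\}$ with $\optt_{\DistSampleInliers}$ on the right-hand side, justified by ``the relaxation's objective is at most that achieved by $(\vomega\gt,\vxx\opt)$.'' No such implication can hold: $\axiomsLTS$ places no bound on the residuals of a feasible $\vxx$ (there is no $\barc$-type constraint in \eqref{eq:LTS-k}), so feasible points can have $\err_{\DistSampleInliers}(\vxx)$ arbitrarily large while $\optt_{\DistSampleInliers}$ is a fixed constant; moreover, optimality is a property of the particular pseudo-distribution returned by the solver, not an \sos consequence of the constraints, so it cannot be invoked inside the certificate. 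The paper's certificate keeps the problem's objective as a polynomial in the indeterminates: it proves $(\err_{\DistSampleInliers}(\vxx)-\errFeasible{\vomega}{\vxx}{\MV})^{\relaxLevel/2}\leq C_1\,\errFeasible{\vomega}{\vxx}{\MV}^{\relaxLevel/2}+C_2\aveOverMeas\normTwo{\vy_i\uncorr-(\MA_i\uncorr)\tran\vxx\opt}^{\relaxLevel}$, where the selected-inlier part of the error is \emph{upper}-bounded by the objective (using $\omega_i'\leq 1$ and $\omega_i(\bar{\vy}_i-\vy_i)=\zero$, $\omega_i(\bar{\MA}_i-\MA_i)=\zero$), not ``lower-bounded by the objective value'' as you write; only after passing to pseudo-expectations does one compare with the point distribution supported on $(\vomega\gt,\vxx\opt)$ and use optimality of the SDP to replace $(\pExpect{\pdist}{\errFeasible{\vomega}{\vxx}{\MV}^{\relaxLevel/2}})^{2/\relaxLevel}$ by $\optt_{\DistSampleInliers}$. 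With that reorganization, which uses only ingredients you already list, your argument matches the paper's. Two smaller points: the factor $\outlierRate^{\relaxLevel/2-1}$ arises from \cref{fact:holder2} applied to the complement weights $(1-\omega_i')$, whose average is at most $2\outlierRate$ (not from a moment over the selected inliers), and the domain bounds $\normTwo{\vxx}^2\leq\boundx^2$ are never needed in this proof --- the absorption is carried out purely via \cref{fact:sosTriangleGeneralized2} and division by the positive constant $1-\coeffCk$.
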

}}

%=================================================================
% \subsubsection{Bounding the Optimization Error}

% We first bound the empirical error over the given samples $\DistSampleInliers$ (this essentially is a statement
% about the tightness of the relaxation):

The proof is an adaptation of Lemma 5.3 in~\cite{Klivans18arxiv-robustRegression} to the 
case of vector-valued measurements. 
Let us start by clarifying all relevant notation:
\begin{align}
\axiomsLTS \doteq \left\{
	\begin{array}{cl}
	\omega_i^2 = \omega_i, \;\; i\in[\nrMeasurements] \\ 
	\sumOverMeas{\omega_i} = \alpha \nrMeasurements \\
	\omega_i \cdot(\bar{\vy}_i - {\vy}_i) = \zero \;\; i\in[\nrMeasurements]\\
	\omega_i \cdot(\bar{\MA}_i - {\MA}_i) = \zero \;\; i\in[\nrMeasurements]\\
	\vxx \in \Domain
	\end{array}
	\right\} 
	\qquad \text{(constraints in~\eqref{eq:LTS-k})}
\end{align}

\begin{align}
\{ \vy_i, \MA_i\}_{i\in[\nrMeasurements]} 
	& \qquad \text{(given measurements)}
	\\
\{\vy_i\uncorr \MA_i\uncorr\}_{i\in[\nrMeasurements]},
	& \qquad \text{(uncorrupted measurements with outliers replaced by inliers)}
     \\
\MV \triangleq \{\bar{\vy}_i \bar{\MA}_i\}_{i\in[\nrMeasurements]}, 
	& \qquad \text{(auxiliary variables in~\eqref{eq:LTS-k})}
\end{align}
\begin{align}
\err_\DistSampleInliers(\vxx) = \aveOverMeas \residualUncorri{\vxx}^2 
	& \;\; \text{(error of $\vxx$ w.r.t. uncorrupted measurements)} \label{eq:sampleErrorAtx}
\\
% \red{\err_{\DistSampleInliers'}(\vxx) = \aveOverMeas \normTwo{ \vy_i' - (\MA_i')\tran \vxx }^2 }
% & \;\; \text{(....)} 
% \\
\errFeasible{\vomega}{\vxx}{\MV} \doteq \aveOverMeas \normTwo{ \bar{\vy}_i - \bar{\MA}_i\tran \vxx }^2 
& \;\; \text{(cost in~\eqref{eq:LTS-k} without exponent $\nicefrac{\relaxLevel}{2}$)} \mathper \label{eq:err_w_x}
\end{align}
% \begin{lemma}[Bounding the optimization error, adapted from Lemma 5.3 in~\cite{Klivans18arxiv-robustRegression}]
% \label{lem:boundingOptError}
% Under the assumptions of~\cref{thm:lowOut-apriori-LTS}, with probability at least $1-\epsilon$, 
% \begin{align}
% \err_{\DistSampleInliers}(\vxx\ltssdp) \leq 
% (1 + O(\kappa)\sqrt{\outlierRate}  ) \optt\ltssdp
% +
% O(\kappa) \sqrt{\outlierRate}  \sqrt{ \Expect{\DistSampleInliers}{  \residuali{\vxx\opt}^4 } } 
% \end{align}
% \end{lemma}

\begin{proof}
The proof is quite involved and proceeds in two steps. First, we derive an \sos proof that states that 
any set of variables that are feasible for~\eqref{eq:LTS-k}, must also satisfy 
 a desired error bound. Then, we move to pseudo-expectations and conclude that the result of the moment relaxation must satisfy the same bound, which can be manipulated into eq.~\eqref{eq:app-lowOut-apriori-LTS-objective}.

 \myParagraph{Sos proof of robust certifiability (adapted from Lemma 5.6 in~\cite{Klivans18arxiv-robustRegression})}
 We now show that any set of variables that are feasible for~\eqref{eq:LTS-k} (\ie that satisfy the constraint set $\axiomsLTS$), must also satisfy 
 the following bound  %Let us define the constraint set in problem~\eqref{eq:LTS-k}:
 % We now prove that the error of any solution that satisfies $\axiomsLTS$ cannot be very far from the empirical error for the inliers:
 \begin{align}
 \label{eq:sosRobustCertifiability}
\axiomsLTS \sosimply{ \vxx}{\relaxLevel} 
(\err_\DistSampleInliers(\vxx) - \errFeasible{\vomega}{\vxx}{\MV} )^{\frac{\relaxLevel}{2}}
\leq 
C_1(\relaxLevel,\outlierRate)
( \errFeasible{\vomega}{\vxx}{\MV} )^{\frac{\relaxLevel}{2} } 
+
C_2(\relaxLevel,\outlierRate)
\aveOverMeas \normTwo{\vy_i\uncorr - (\MA_i\uncorr)\tran \vxx\opt }^{\relaxLevel} 
\mathper
\end{align}
% \red{with the bit complexity of the proof being polynomial in $\nrMeasurements$ and $\dimAmbient^\relaxLevel$.}
% The proof is involved, but it is the crux of the result.

For a given $\vomega$ that satisfies $\axiomsLTS$, let $\vomega'$ be such that $\omega_i' = \omega_i$ iff 
$i$ is an inlier and $\omega_i' = 0$ otherwise (intuitively, $\vomega'$ is the indicator for subset of the selected measurements $\vomega$ that are inliers). %$\omega_i = 1$ are the inliers
% \red{Define $\vy_i' = \vy_i$ and $\MA_i' = \MA_i$ if $\omega_i' = 1$ or set $\vy_i' = \zero$ and $\MA_i' = \zero$ otherwise.}
Note that $\sumOverMeas \omega_i' \geq (1-2\outlierRate) \nrMeasurements$: this follows from the fact that the two sets, 
the selected measurements $\{i: \omega_i = 1\}$ and the set of true inliers, 
have each size $(1-\outlierRate) \nrMeasurements$, hence their intersection must contain 
at least $(1-2\outlierRate) \nrMeasurements$ measurements.
%cannot deviate by more than $2\outlierRate \nrMeasurements$. 
Therefore:
\begin{align}
\label{eq:overlapBound}
\explanation{ \aveOverMeas (1-\omega_i')^2 \overbrace{=}^{\text{binary variable}} \aveOverMeas (1-\omega_i') = 1 - \aveOverMeas \omega_i' \leq 1 - (1-2 \outlierRate)  \text{ hence: }} 
\axiomsLTS\overbrace{\sosimply{\vomega'}{2}}^{\cref{fact:basics}} 
\left\{ \aveOverMeas (1-\omega_i')^2 \leq 2\outlierRate \right\} \mathper
\end{align}
Moreover, by definition~\eqref{eq:sampleErrorAtx}, $\err_{\DistSampleInliers}(\vxx)$ is \sos, hence we have:
\begin{align}
\sosimply{ \vomega,\vxx}{4} %\left\{ 
\err_{\DistSampleInliers}(\vxx) \overbrace{=}^{\text{by definition}~\eqref{eq:sampleErrorAtx}}
\aveOverMeas \residualUncorri{\vxx}^2 \\
= \overbrace{
\aveOverMeas \omega_i' \residualUncorri{\vxx}^2
}^{\text{error for selected inliers}} +
\overbrace{
\aveOverMeas (1-\omega_i') \residualUncorri{\vxx}^2
}^{\text{error for all other uncorrupted measurements }}\mathper
\label{eq:proof-zaa1}
 % \right\}
\end{align}

On the other hand, since $\vomega'$ only picks a subset of the selected measurements which are also inliers (\ie for which $\bar{\vy}_i = \vy_i\uncorr$ and $\bar{\MA}_i = \MA_i\uncorr$):
%other measurements with $\omega_i'=0$ can only add non-negative terms:
\begin{align}
\axiomsLTS
\sosimply{ \vomega,\vxx, \tocheck{\MV} }{4} 
\aveOverMeas \omega_i' \residualUncorri{\vxx}^2 
\overbrace{=}^{\text{by definition of $\vomega'$}} 
\aveOverMeas \omega_i'\normTwo{ \bar{\vy}_i - (\bar{\MA}_i)\tran \vxx }^2 
\\
\overbrace{\leq}^{\omega_i' \leq 1}
\aveOverMeas \normTwo{ \bar{\vy}_i - (\bar{\MA}_i)\tran \vxx }^2 
\overbrace{=}^{\text{by definition~\eqref{eq:err_w_x}}} \errFeasible{\vomega}{\vxx}{\MV}
\mathper
\label{eq:proof-aa1}
\end{align}
Combining~\eqref{eq:proof-zaa1} and~\eqref{eq:proof-aa1}, elevating to $\relaxLevel/2$, and then using the \sos version of \HoldersInequality (\cref{fact:sosHolder1}), we get: 
% Now noting that $\err_{\DistInliers'}(\vxx) = \aveOverMeas \omega_i' \residuali{\vxx}^2$:
\begin{align}
\axiomsLTS
\sosimply{ {\vomega},\vxx, \tocheck{\MV} }{ \tocheck{ \relaxLevel } } 
(\err_\DistSampleInliers(\vxx) - \errFeasible{\vomega}{\vxx}{\MV} )^{\frac{\relaxLevel}{2}}
\overbrace{\leq}^{using~\eqref{eq:proof-aa1}} 
\left( 
\err_\DistSampleInliers(\vxx) - \aveOverMeas \omega_i' \residualUncorri{\vxx}^2
\right)^{\frac{\relaxLevel}{2}}
\\
\overbrace{=}^{using~\eqref{eq:proof-zaa1}} 
\overbrace{\left(
\aveOverMeas (1-\omega_i') \residualUncorri{\vxx}^2
\right)^{\frac{\relaxLevel}{2}}}^{ \text{error for uncorrupted measurements not selected by $\vomega'$} }
\\
\hspace{-4mm}
\leq 
\overbrace{
\tocheck{
\left( \aveOverMeas (1-\omega_i')^2 \right)^{\frac{\relaxLevel}{2}-1}
\left( \aveOverMeas \residualUncorri{\vxx}^\relaxLevel\right)
}
}^{\text{using~\eqref{eq:holder2_1} in~\cref{fact:holder2}}}
\leq 
\overbrace{
(2\outlierRate)^{\frac{\relaxLevel}{2}-1}
}^{\text{using~\eqref{eq:overlapBound}}}
\left( \aveOverMeas \residualUncorri{\vxx}^\relaxLevel \right) \mathper
\label{eq:proof13}
\end{align}
Note that~\cref{fact:holder2} requires the exponent to be $\geq 1$ and  a power of $2$, which in turns 
implies $\frac{\relaxLevel}{2} \geq 2$ or $\relaxLevel \geq 4$, as required by the statement of the theorem. 
Now we observe that:
\begin{align}
\sosimply{ \vxx }{\relaxLevel}
\left( \aveOverMeas \residualUncorri{\vxx}^{\relaxLevel} \right)
\\ 
\overbrace{=}^{\text{adding/subtracting $(\MA_i\uncorr)\tran \vxx\opt$}}
\left( \aveOverMeas \normTwo{ \vy_i\uncorr - (\MA_i\uncorr)\tran \vxx + (\MA_i\uncorr)\tran \vxx\opt - (\MA_i\uncorr)\tran \vxx\opt }^{\relaxLevel} \right) =
\\
\overbrace{\leq}^{\text{using~\cref{fact:sosTriangleNormGeneralized}}}
2^{\relaxLevel}  \aveOverMeas \normTwo{ \vy\uncorr_i - (\MA_i\uncorr)\tran \vxx\opt }^{\relaxLevel}
+
2^{\relaxLevel} \aveOverMeas \normTwo{ (\MA_i\uncorr)\tran (\vxx - \vxx\opt)}^{\relaxLevel} 
\mathper \label{eq:proof11}
% =  \aveOverMeas \normTwo{ \vy_i - \MA_i\tran \vxx\opt - \MA_i\tran (\vxx - \vxx\opt)  }^4 
% \leq \quad\text{(via \red{\sos?} triangle inequality)} \\
%  \aveOverMeas \left( \normTwo{ \vy_i - \MA_i\tran \vxx\opt } + \normTwo{\MA_i\tran (\vxx - \vxx\opt)} \right)^4
\end{align}

By certifiable hypercontractivity of $\MA_i\uncorr$, $i\in[\nrMeasurements]$:
\begin{align}
\axiomsLTS
\sosimply{ \vxx }{\relaxLevel} \aveOverMeas \normTwo{ (\MA_i\uncorr)\tran (\vxx - \vxx\opt)}^{\relaxLevel} 
 \leq \Crelax^{\frac{\relaxLevel}{2}} \left( \aveOverMeas \normTwo{ (\MA_i\uncorr)\tran (\vxx - \vxx\opt)}^2 \right)^{\frac{\relaxLevel}{2} } \mathper
 \label{eq:proof-aa2}
\end{align}
We can further bound the term above as follows:
\begin{align}
\axiomsLTS
\sosimply{ \vxx }{\relaxLevel}
\left( \aveOverMeas \normTwo{ (\MA_i\uncorr)\tran (\vxx - \vxx\opt)}^2 \right)^{\frac{\relaxLevel}{2} }
\\
\overbrace{=}^{\text{adding/subtracting $\vy_i\uncorr$}}
\left( \aveOverMeas \normTwo{ -\vy_i\uncorr + (\MA_i\uncorr)\tran \vxx + \vy_i\uncorr - (\MA_i\uncorr)\tran \vxx\opt }^2 \right)^{\frac{\relaxLevel}{2} } 
\\
\overbrace{\leq}^{\text{using~\cref{fact:sosTriangleNorm}}}  
\left( \aveOverMeas \left(2 \normTwo{ \vy_i\uncorr - (\MA_i\uncorr)\tran \vxx}^2 + 2 \normTwo{\vy_i\uncorr - (\MA_i\uncorr)\tran \vxx\opt }^2 \right) \right)^{\frac{\relaxLevel}{2} }
\end{align}
\begin{align}
\overbrace{=}^{\text{rearranging}} 
\left( 2 \aveOverMeas \normTwo{ \vy_i\uncorr - (\MA_i\uncorr)\tran \vxx}^2 + 2 \aveOverMeas  \normTwo{\vy_i\uncorr - (\MA_i\uncorr)\tran \vxx\opt }^2 \right)^{\frac{\relaxLevel}{2} }
\\
 \overbrace{\leq}^{\text{using~\cref{fact:sosTriangleGeneralized}}} 
 2^{\frac{\relaxLevel}{2}} \left( 2\aveOverMeas \normTwo{ \vy_i\uncorr - (\MA_i\uncorr)\tran \vxx}^2 \right)^{\frac{\relaxLevel}{2} }
+
2^{\frac{\relaxLevel}{2}} \left( 2 \aveOverMeas \normTwo{\vy_i\uncorr - (\MA_i\uncorr)\tran \vxx\opt }^2 \right)^{\frac{\relaxLevel}{2} } 
\\
 \overbrace{=}^{\text{rearranging}} 
 2^{\relaxLevel} \left( \aveOverMeas \normTwo{ \vy_i\uncorr - (\MA_i\uncorr)\tran \vxx}^2 \right)^{\frac{\relaxLevel}{2} }
+
 2^{\relaxLevel} \left( \aveOverMeas \normTwo{\vy_i\uncorr - (\MA_i\uncorr)\tran \vxx\opt }^2 \right)^{\frac{\relaxLevel}{2} } \mathper \label{eq:proof12}
\end{align}

% By using again the \sos (scalar) triangle inequality~\eqref{eq:sosTriangle} summand-wise:
% \begin{align}
% \sosimply{ \vxx }{4}
% \left( \aveOverMeas (\normTwo{ \vy_i - \MA_i\tran \vxx} + \normTwo{\vy_i - \MA_i\tran \vxx\opt })^2 \right)^2 
% \\
% \leq 2^2 \left( \aveOverMeas \normTwo{ \vy_i - \MA_i\tran \vxx}^2 \right)^2
% +
% 2^2 \left( \aveOverMeas \normTwo{\vy_i - \MA_i\tran \vxx\opt }^2 \right)^2 \label{eq:proof12}
% \end{align}

Finally, using again the \sos version of \HoldersInequality
\begin{align}
\axiomsLTS
\sosimply{ \vxx }{\relaxLevel}
\left( \aveOverMeas \normTwo{\vy_i\uncorr - (\MA_i\uncorr)\tran \vxx\opt }^2 \right)^{ \frac{\relaxLevel}{2} } 
\leq 
\tocheck{
% \overbrace{ 
 \aveOverMeas \normTwo{\vy_i\uncorr - (\MA_i\uncorr)\tran \vxx\opt }^{\relaxLevel} \mathper
 % }^{\text{\toAsk{using $f_i=\normTwo{\vy_i\uncorr - (\MA_i\uncorr)\tran \vxx\opt}^2$ and $g_i=1$}}}
 }\label{eq:proof-aa5}
\end{align}

Combining the above: 
\begin{align}
\axiomsLTS
\sosimply{ \vxx }{\relaxLevel}
\left( \aveOverMeas \residualUncorri{\vxx}^{\relaxLevel} \right) 
\\
\overbrace{\leq}^{\eqref{eq:proof11}} 
2^{\relaxLevel}  \aveOverMeas \normTwo{ \vy\uncorr_i - (\MA_i\uncorr)\tran \vxx\opt }^{\relaxLevel}
+
2^{\relaxLevel} \aveOverMeas \normTwo{ (\MA_i\uncorr)\tran (\vxx - \vxx\opt)}^{\relaxLevel}
\\
\overbrace{\leq}^{\eqref{eq:proof-aa2}} 
2^{\relaxLevel}  \aveOverMeas \normTwo{ \vy\uncorr_i - (\MA_i\uncorr)\tran \vxx\opt }^{\relaxLevel}
+
2^{\relaxLevel} \Crelax^{\frac{\relaxLevel}{2}} 
\left( \aveOverMeas \normTwo{ (\MA_i\uncorr)\tran (\vxx - \vxx\opt)}^2 
\right)^{\frac{\relaxLevel}{2} }
\\
\overbrace{\leq}^{\eqref{eq:proof12}} 
2^{\relaxLevel}  \aveOverMeas \normTwo{ \vy\uncorr_i - (\MA_i\uncorr)\tran \vxx\opt }^{\relaxLevel}
\\
+
2^{\relaxLevel} \Crelax^{\frac{\relaxLevel}{2}} 
\left(
2^{\relaxLevel} \left( \aveOverMeas \normTwo{ \vy_i\uncorr - (\MA_i\uncorr)\tran \vxx}^2 \right)^{\frac{\relaxLevel}{2} }
+
2^{\relaxLevel} \left( \aveOverMeas \normTwo{\vy_i\uncorr - (\MA_i\uncorr)\tran \vxx\opt }^2 \right)^{\frac{\relaxLevel}{2} }
\right)
\\
\overbrace{\leq}^{\eqref{eq:proof-aa5}} 
2^{\relaxLevel}  \aveOverMeas \normTwo{ \vy\uncorr_i - (\MA_i\uncorr)\tran \vxx\opt }^{\relaxLevel}
\\
+
2^{\relaxLevel} \Crelax^{\frac{\relaxLevel}{2}} 
\left(
2^{\relaxLevel} \left( \aveOverMeas \normTwo{ \vy_i\uncorr - (\MA_i\uncorr)\tran \vxx}^2 \right)^{\frac{\relaxLevel}{2} }
+
2^{\relaxLevel} 
\aveOverMeas \normTwo{\vy_i\uncorr - (\MA_i\uncorr)\tran \vxx\opt }^{\relaxLevel}
\right)
\\
\overbrace{=}^{\text{rearranging}} 
 \Crelax^{\frac{\relaxLevel}{2}} 
2^{2\relaxLevel} \bigg( 
\overbrace{\aveOverMeas \normTwo{ \vy_i\uncorr - (\MA_i\uncorr)\tran \vxx}^2}^{\err_\DistSampleInliers(\vxx)}
 \bigg)^{\frac{\relaxLevel}{2} }
\\
+
\left(2^{\relaxLevel} + \Crelax^{\frac{\relaxLevel}{2}} 
2^{2\relaxLevel} \right)
\aveOverMeas \normTwo{\vy_i\uncorr - (\MA_i\uncorr)\tran \vxx\opt }^{\relaxLevel} \mathper
\end{align}
Hence, together with~\eqref{eq:proof13}:
\begin{align}
\axiomsLTS
\sosimply{ \vxx}{\relaxLevel} 
(\err_\DistSampleInliers(\vxx) - \errFeasible{\vomega}{\vxx}{\MV} )^{\frac{\relaxLevel}{2}}
\leq 
(2\outlierRate)^{\frac{\relaxLevel}{2}-1} 
 \Crelax^{\frac{\relaxLevel}{2}} 
2^{2\relaxLevel} 
\err_\DistSampleInliers(\vxx)^{\frac{\relaxLevel}{2} }
\\
+
(2\outlierRate)^{\frac{\relaxLevel}{2}-1} 
\left(2^{\relaxLevel} + \Crelax^{\frac{\relaxLevel}{2}} 
2^{2\relaxLevel}  \right)
\aveOverMeas \normTwo{\vy_i\uncorr - (\MA_i\uncorr)\tran \vxx\opt }^{\relaxLevel} \mathper
\end{align}

Applying~\cref{fact:sosTriangleGeneralized2} 
 % the \sos~\red{inequality $\delta^k a^k \leq (2\delta)^k (a-b)^k + (2\delta)^k b^k$} (which holds for any $a,b,\delta$ and any even $k$) 
 to the right-hand-side with $a = \err_\DistSampleInliers(\vxx)$, 
$b = \errFeasible{\vomega}{\vxx}{\MV}$, \\
$\delta = (2\outlierRate)^{\frac{\relaxLevel}{2}-1}
 \Crelax^{\frac{\relaxLevel}{2}} 
2^{2\relaxLevel}$, and exponent $\relaxLevel/2$: % and $\delta = (2\outlierRate)^2 2^6 C(4)^2$:
\begin{align}
\axiomsLTS
\sosimply{ \vxx}{\relaxLevel} 
(\err_\DistSampleInliers(\vxx) - \errFeasible{\vomega}{\vxx}{\MV} )^{\frac{\relaxLevel}{2}}
\leq 
\overbrace{2^{\frac{\relaxLevel}{2}}
(2\outlierRate)^{\frac{\relaxLevel}{2}-1} 
 \Crelax^{\frac{\relaxLevel}{2}} 
2^{2\relaxLevel} 
}^{ = \coeffCk  }
(\err_\DistSampleInliers(\vxx) - \errFeasible{\vomega}{\vxx}{\MV} )^{\frac{\relaxLevel}{2} } 
\\
+ 
2^{\frac{\relaxLevel}{2}}
(2\outlierRate)^{\frac{\relaxLevel}{2}-1} 
 \Crelax^{\frac{\relaxLevel}{2}} 
2^{2\relaxLevel} 
( \errFeasible{\vomega}{\vxx}{\MV} )^{\frac{\relaxLevel}{2} } 
\\
+
(2\outlierRate)^{\frac{\relaxLevel}{2}-1} 
\left(2^{\relaxLevel} + \Crelax^{\frac{\relaxLevel}{2}} 
2^{2\relaxLevel}  \right)
\aveOverMeas \normTwo{\vy_i\uncorr - (\MA_i\uncorr)\tran \vxx\opt }^{\relaxLevel} \mathper
\end{align}
Rearranging the terms:
\begin{align}
\axiomsLTS
\sosimply{ \vxx}{\relaxLevel} 
({1-\coeffCk})(\err_\DistSampleInliers(\vxx) - \errFeasible{\vomega}{\vxx}{\MV} )^{\frac{\relaxLevel}{2}}
\leq 
\coeffCk
( \errFeasible{\vomega}{\vxx}{\MV} )^{\frac{\relaxLevel}{2} } 
\\
+
(2\outlierRate)^{\frac{\relaxLevel}{2}-1} 
\left(2^{\relaxLevel} + \Crelax^{\frac{\relaxLevel}{2}} 
2^{2\relaxLevel}  \right)
\aveOverMeas \normTwo{\vy_i\uncorr - (\MA_i\uncorr)\tran \vxx\opt }^{\relaxLevel} \label{eq:ineq1111}
\mathper
\end{align}
Noting that choosing $\outlierRate < \sqrt[\frac{k}{2}-1]{\frac{1}{ \Crelax^{\frac{\relaxLevel}{2}} 2^{ 3\relaxLevel-1} }}$ makes the constant $1-\coeffCk$ positive, we can divive both members of the inequality~\eqref{eq:ineq1111} by such constant and obtain:
\begin{align}
\axiomsLTS
\sosimply{ \vxx}{\relaxLevel} 
(\err_\DistSampleInliers(\vxx) - \errFeasible{\vomega}{\vxx}{\MV} )^{\frac{\relaxLevel}{2}}
\leq 
\frac{\coeffCk}{1-\coeffCk}
( \errFeasible{\vomega}{\vxx}{\MV} )^{\frac{\relaxLevel}{2} } 
\\
+
\frac{
(2\outlierRate)^{\frac{\relaxLevel}{2}-1} 
\left(2^{\relaxLevel} + \Crelax^{\frac{\relaxLevel}{2}} 
2^{2\relaxLevel}  \right)
}{1-\coeffCk}
\aveOverMeas \normTwo{\vy_i\uncorr - (\MA_i\uncorr)\tran \vxx\opt }^{\relaxLevel} \mathcom
\end{align}
which matches our claim in~\eqref{eq:sosRobustCertifiability}
 for $C_1(\relaxLevel,\outlierRate)\triangleq \frac{\coeffCk}{1-\coeffCk}$ 
 and $C_2(\relaxLevel,\outlierRate)\triangleq \frac{
(2\outlierRate)^{\frac{\relaxLevel}{2}-1} 
\left(2^{\relaxLevel} + \Crelax^{\frac{\relaxLevel}{2}} 
2^{2\relaxLevel}  \right)
}{1-\coeffCk}$.

%%%%%%%%%%%%%%%%%%%%%%%%%%%%%%%%%%%%%%%%%%%%%%%%%%%%%%%%%%%%%%%%%%%%%%%%%%%%%
% \myParagraph{Hypercontractivity of uniform distribution over sample set}

 \myParagraph{Completing the proof by moving to pseudo-distributions}
Consider a pseudo-distribution $\pdist$ that satisfies $\axiomsLTS$. 
Using the \sos proof in~\eqref{eq:sosRobustCertifiability} and thanks to~\cref{fact:soundness}, we conclude that 
if $\pdist$ satisfies $\axiomsLTS$ then it must also satisfy:
 \begin{align}
\pExpect{\pdist}{ ( \err_{\DistSampleInliers(\vxx)} - \errFeasible{\vomega}{\vxx}{\MV} )^{\frac{\relaxLevel}{2}} }
\leq 
C_1(\relaxLevel,\outlierRate) \hspace{-3mm}
\overbrace{ \pExpect{\pdist}{\errFeasible{\vomega}{\vxx}{\MV}^{\frac{\relaxLevel}{2}}} }^{\text{by definition this is }\opttsos^{\frac{\relaxLevel}{2}}}
\hspace{-5mm}
+
C_2(\relaxLevel,\outlierRate)
\left(\frac{1}{\nrMeasurements} \sumOverMeas 
\normTwo{\vy_i\uncorr - (\MA_i\uncorr)\tran \vxx\opt}^{\relaxLevel} \right) \mathper
\end{align}
Elevating to the power $\frac{2}{\relaxLevel}$ both sides and recalling that 
{$(a+b)^{q} \leq {a}^{q} + {b}^q$ for any $0 < q < 1$}:
% https://math.stackexchange.com/questions/246862/proof-of-the-power-of-sums-smaller-than-sum-of-powers
% https://math.stackexchange.com/questions/318649/square-root-of-a-sum-bound 
 \begin{align}
\left(
 \pExpect{\pdist}{ ( \err_{\DistSampleInliers(\vxx)} - \errFeasible{\vomega}{\vxx}{\MV} )^{\frac{\relaxLevel}{2} }  } 
 \right)^{\frac{2}{\relaxLevel}}
\leq 
C_1(\relaxLevel,\outlierRate)^{\frac{2}{\relaxLevel}}
\; \opttsos+
C_2(\relaxLevel,\outlierRate)^{\frac{2}{\relaxLevel}}
\left(\frac{1}{\nrMeasurements} \sumOverMeas 
\normTwo{\vy_i\uncorr - (\MA_i\uncorr)\tran \vxx\opt}^{\relaxLevel} \right)^{\frac{2}{\relaxLevel}} \mathper
\label{eq:proof-aa1a}
\end{align}
Now using the \sos version of \HoldersInequality for pseudo-expectations (\cref{fact:Holderforpdist2}, eq.~\eqref{eq:Holderforpdist2b}):
\begin{align} 
\pExpect{\pdist}{ \err_{\DistSampleInliers(\vxx)} - \errFeasible{\vomega}{\vxx}{\MV} }^{\frac{\relaxLevel}{2}} \leq
\pExpect{\pdist}{ ( \err_{\DistSampleInliers(\vxx)} - \errFeasible{\vomega}{\vxx}{\MV} )^{\frac{\relaxLevel}{2}}} \mathcom
\end{align}
and therefore~\eqref{eq:proof-aa1a} becomes:
\begin{align}
\pExpect{\pdist}{ ( \err_{\DistSampleInliers(\vxx)} - \errFeasible{\vomega}{\vxx}{\MV} ) }
\leq 
C_1(\relaxLevel,\outlierRate)^{\frac{2}{\relaxLevel}}
\; \opttsos+
C_2(\relaxLevel,\outlierRate)^{\frac{2}{\relaxLevel}}
\left(\frac{1}{\nrMeasurements} \sumOverMeas 
\normTwo{\vy_i\uncorr - (\MA_i\uncorr)\tran \vxx\opt}^{\relaxLevel} \right)^{\frac{2}{\relaxLevel}} \mathper
\end{align}
By linearity of the pseudo-expectation and rearranging:
\begin{align}
\pExpect{\pdist}{ \err_{\DistSampleInliers(\vxx)} }
\leq 
\pExpect{\pdist}{ \errFeasible{\vomega}{\vxx}{\MV} }
+
C_1(\relaxLevel,\outlierRate)^{\frac{2}{\relaxLevel}}
\; \opttsos+
C_2(\relaxLevel,\outlierRate)^{\frac{2}{\relaxLevel}}
\left(\frac{1}{\nrMeasurements} \sumOverMeas 
\normTwo{\vy_i\uncorr - (\MA_i\uncorr)\tran \vxx\opt}^{\relaxLevel} \right)^{\frac{2}{\relaxLevel}}
\\
(\text{using: } 
\pExpect{\pdist}{ \errFeasible{\vomega}{\vxx}{\MV} } = 
\left( \pExpect{\pdist}{ \errFeasible{\vomega}{\vxx}{\MV} }^{\frac{\relaxLevel}{2}} \right)^{\frac{2}{\relaxLevel}} 
\overbrace{\leq}^{\text{\cref{fact:Holderforpdist2}, eq.~\eqref{eq:Holderforpdist2b}}}  
\left( \pExpect{\pdist}{ \errFeasible{\vomega}{\vxx}{\MV}^{\frac{\relaxLevel}{2}} } \right)^{\frac{2}{\relaxLevel}}  
= \opttsos 
\text{)}
\\
\pExpect{\pdist}{ \err_{\DistSampleInliers(\vxx)} }
\leq
(1 + C_1(\relaxLevel,\outlierRate)^{\frac{2}{\relaxLevel}} )
\;  \opttsos+
C_2(\relaxLevel,\outlierRate)^{\frac{2}{\relaxLevel}}
\left(\frac{1}{\nrMeasurements} \sumOverMeas 
\normTwo{\vy_i\uncorr - (\MA_i\uncorr)\tran \vxx\opt}^{\relaxLevel} \right)^{\frac{2}{\relaxLevel}} 
\end{align}
Applying \sos \HoldersInequality one last time $  \err_{\DistSampleInliers} (\pExpect{\pdist}{ \vxx} )  \leq \pExpect{\pdist}{ \err_{\DistSampleInliers}(\vxx) } $ leads to:
\begin{align}
\err_{\DistSampleInliers} (\pExpect{\pdist}{ \vxx} )
\leq
(1 + C_1(\relaxLevel,\outlierRate)^{\frac{2}{\relaxLevel}} )
\;  \opttsos+
C_2(\relaxLevel,\outlierRate)^{\frac{2}{\relaxLevel}}
\left(\frac{1}{\nrMeasurements} \sumOverMeas 
\normTwo{\vy_i\uncorr - (\MA_i\uncorr)\tran \vxx\opt}^{\relaxLevel} \right)^{\frac{2}{\relaxLevel}} \mathper
\label{eq:proof-aa-2b}
\end{align}
Finally, we need to prove that {$\opttsos \leq \optt_{\DistSampleInliers}$.
Towards this goal, we observe that the \mbox{(pseudo-)distribution} 
supported on the point $(\vomega\opt,\vxx\opt, \MV)$ where $\omega_i\opt=1$ for the true inliers and zero otherwise is feasible for $\axiomsLTS$, hence by optimality $\opttsos^{\frac{\relaxLevel}{2}} \leq \left( \aveOverMeas \normTwo{\bar{\vy}_i - \bar{\MA}_i\tran \vxx\opt}^2\right)^{\frac{\relaxLevel}{2}}$, from which it follows:
\begin{align}
\opttsos \leq  \aveOverMeas \normTwo{\bar{\vy}_i - \bar{\MA}_i\tran \vxx\opt}^2
\overbrace{\leq}^{\text{$\bar{\vy}_i,\bar{\MA}_i$ are inliers or zero}} 
\aveOverMeas \residualUncorri{\vxx\opt} = \optt_{\DistSampleInliers} \mathper
\label{eq:proof-aa-2a}
\end{align}
%\optt_{\DistSampleInliers}
Substituting~\eqref{eq:proof-aa-2a} back into~\eqref{eq:proof-aa-2b}: 
}
\begin{align}
\err_{\DistSampleInliers} (\pExpect{\pdist}{ \vxx} )
\leq
(1 + C_1(\relaxLevel,\outlierRate)^{\frac{2}{\relaxLevel}} )
\;  \optt_{\DistSampleInliers}+
C_2(\relaxLevel,\outlierRate)^{\frac{2}{\relaxLevel}}
\left(\frac{1}{\nrMeasurements} \sumOverMeas 
\normTwo{\vy_i\uncorr - (\MA_i\uncorr)\tran \vxx\opt}^{\relaxLevel} \right)^{\frac{2}{\relaxLevel}} \mathcom
\end{align}
which proves the claim of~\cref{thm:lowOut-apriori-LTS-objective}.

\end{proof}

\section{Proof of Proposition~\ref{thm:lowOut-apriori-LTS}: Contract for Relaxation of~\eqref{eq:LTS-T}}
\label{app:proof-lowOut-apriori-LTS}

We start by restating the proposition for the reader's convenience.

\noindent\fbox{
\parbox{\textwidth}{
\begin{proposition}[Restatement of Proposition~\ref{thm:lowOut-apriori-LTS}]
\ltsThm{eq:proof-lowOut-apriori-LTS}
\end{proposition}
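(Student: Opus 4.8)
The plan is to argue in the pseudo-distribution language of~\cref{app:pseudo-distributions}. Running \cref{algo:lts2} at order $\relaxOrder\geq\relaxLevel/2$ produces a pseudo-moment matrix $\MX\opt$, equivalently a level-$2\relaxOrder$ pseudo-distribution $\pdist$ that (approximately) satisfies the constraint set $\axiomsLTST$ of~\eqref{eq:LTS-T}; bit-complexity and approximate-feasibility issues are handled exactly as in~\cite{Klivans18arxiv-robustRegression,Karmalkar19neurips-ListDecodableRegression} and only introduce negligible errors, so I suppress them. Reading off $\getEntries{\MX\opt}{\omega_i}=\pExpect{\pdist}{\omega_i}$ and $\getEntries{\MX\opt}{\omega_i\vxx}=\pExpect{\pdist}{\omega_i\vxx}$, and using the constraint $\sumOverMeas\omega_i=\an$ (so $\sumOverMeasj\getEntries{\MX\opt}{\omega_j}=\an>0$ and the division in \cref{algo:lts2} is well defined), a direct computation shows that the algorithm's output satisfies
\begin{align}
\vxx\ltssdpT-\vxx\gt=\frac{1}{\an}\sumOverMeas\pExpect{\pdist}{\omega_i(\vxx-\vxx\gt)}=\frac{1}{\an}\left(\sumOverInliers\pExpect{\pdist}{\omega_i(\vxx-\vxx\gt)}+\sumOverOutliers\pExpect{\pdist}{\omega_i(\vxx-\vxx\gt)}\right).
\end{align}
Since $\vxx\ltssdpT$, $\vxx\gt$ and the weighted averages are honest vectors, the ordinary triangle inequality reduces the claim to bounding the two groups of terms separately, mirroring the two-step structure of the proof of~\cref{thm:lowOut-aposterioriNoisyMC}.

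The core step is to control $\pExpect{\pdist}{\omega_i\normTwo{\vxx-\vxx\gt}^2}$ for inlier indices $i\in\setInliers$ using certifiable anti-concentration (\cref{def:certAnticon}) with $C=\frac{\inlierRate^2\eta^2(1-2\barc)^2}{32\barc}$, $\delta=2\barc$, $M=2\boundx$. Write $\vv\triangleq\vxx-\vxx\gt$; since $\vxx,\vxx\gt\in\Domain$ we have $\sosimply{}{2}\{\normTwo{\vv}^2\leq M^2\}$. For $i\in\setInliers$, the inlier noise bound $\residuali{\vxx\gt}^2\leq\barcsq$ together with the constraint $\omega_i\cdot\residuali{\vxx}^2\leq\barcsq$ in $\axiomsLTST$ gives, via the \sos triangle inequality (\cref{fact:sosTriangleNorm}) and $\omega_i^2=\omega_i$, that $\axiomsLTST\sosimply{}{}\{\omega_i\normTwo{\MA_i\tran\vv}^2\leq\omega_i\delta^2\}$. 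Plugging this into the degree-$\relaxLevel$ \sos proof of~\eqref{eq:certAntiConSet1}---after multiplying its certificate through by $\omega_i$, using $\omega_i^2=\omega_i$ to keep the $\omega_i$-powers bounded---yields $\axiomsLTST\sosimply{}{\relaxLevel}\{\omega_i\,p^2(\normTwo{\MA_i\tran\vv})\geq\omega_i(1-\delta)^2\}$, and multiplying by the square $\normTwo{\vv}^2$ gives $\omega_i\normTwo{\vv}^2(1-\delta)^2\leq\omega_i\normTwo{\vv}^2 p^2(\normTwo{\MA_i\tran\vv})$. Taking pseudo-expectations (\cref{fact:soundness}), averaging over $i\in\setInliers$, using $0\leq\omega_i\leq1$ (\sos-provably, since $1-\omega_i=(1-\omega_i)^2$), and invoking the degree-$\relaxLevel$ \sos proof of~\eqref{eq:certAntiConSet2}:
\begin{align}
(1-\delta)^2\,\aveOverInliers\pExpect{\pdist}{\omega_i\normTwo{\vv}^2}\;\leq\;\pExpect{\pdist}{\normTwo{\vv}^2\,\aveOverInliers p^2(\normTwo{\MA_i\tran\vv})}\;\leq\;C\,\delta\,M^2.
\end{align}
The choice of $C,\delta,M$ is engineered precisely so that $\frac{C\delta M^2}{(1-\delta)^2}=\frac{\inlierRate^2\eta^2\boundx^2}{4}$, hence $\sumOverInliers\pExpect{\pdist}{\omega_i\normTwo{\vxx-\vxx\gt}^2}=\an\cdot\aveOverInliers\pExpect{\pdist}{\omega_i\normTwo{\vxx-\vxx\gt}^2}\leq\frac{\inlierRate^3\eta^2\boundx^2\nrMeasurements}{4}$.

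The last step turns this second-moment bound into the estimation-error bound by Cauchy--Schwarz for pseudo-distributions. For every $i$, \cref{fact:CSforpdist} and $\omega_i^2=\omega_i$ give $\normTwo{\pExpect{\pdist}{\omega_i(\vxx-\vxx\gt)}}^2\leq\pExpect{\pdist}{\omega_i}\cdot\pExpect{\pdist}{\omega_i\normTwo{\vxx-\vxx\gt}^2}$. For the inlier block, a further Cauchy--Schwarz over $i\in\setInliers$, together with $\sumOverInliers\pExpect{\pdist}{\omega_i}\leq\sumOverMeas\pExpect{\pdist}{\omega_i}=\an$ and the moment bound above, gives $\frac{1}{\an}\sumOverInliers\normTwo{\pExpect{\pdist}{\omega_i(\vxx-\vxx\gt)}}\leq\frac{1}{\an}\sqrt{\an\cdot\frac{\inlierRate^3\eta^2\boundx^2\nrMeasurements}{4}}=\frac{\inlierRate\eta\boundx}{2}$. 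For the outlier block, the crude estimate $\pExpect{\pdist}{\omega_i\normTwo{\vxx-\vxx\gt}^2}\leq4\boundx^2\pExpect{\pdist}{\omega_i}$ (from $\normTwo{\vxx-\vxx\gt}^2\leq4\boundx^2$) gives $\normTwo{\pExpect{\pdist}{\omega_i(\vxx-\vxx\gt)}}\leq2\boundx\pExpect{\pdist}{\omega_i}$, and since $\pExpect{\pdist}{\omega_i}\leq1$ and $|\setOutliers|=\bn$ we obtain $\frac{1}{\an}\sumOverOutliers\normTwo{\pExpect{\pdist}{\omega_i(\vxx-\vxx\gt)}}\leq\frac{2\boundx\bn}{\an}=2\frac{1-\inlierRate}{\inlierRate}\boundx$. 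Adding the two contributions yields exactly $\normTwo{\vxx\ltssdpT-\vxx\gt}\leq\boundx\left(\frac{\inlierRate\eta}{2}+2\frac{1-\inlierRate}{\inlierRate}\right)$.

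The main obstacle is the composition inside the core step: one must multiply the degree-$\relaxLevel$ \sos certificates underlying~\eqref{eq:certAntiConSet1}--\eqref{eq:certAntiConSet2} by the selector variables $\omega_i$ (repeatedly collapsing $\omega_i^2$ to $\omega_i$) and by the square $\normTwo{\vxx-\vxx\gt}^2$, and verify that the resulting certificate still fits within a level-$2\relaxOrder$ pseudo-distribution for $\relaxOrder\geq\relaxLevel/2$---this degree bookkeeping, together with the constant tracking that makes the prescribed $C$ collapse $\frac{C\delta M^2}{(1-\delta)^2}$ to $\frac{\inlierRate^2\eta^2\boundx^2}{4}$, is the delicate part; everything else reduces to \cref{fact:CSforpdist}, \cref{fact:sosTriangleNorm}, and elementary manipulations. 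The two facts borrowed from~\cite{Karmalkar19neurips-ListDecodableRegression} are precisely the anti-concentration-to-moment bound and the Cauchy--Schwarz rounding estimate used above.
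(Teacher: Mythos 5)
Your proof is correct and follows essentially the same route as the paper's: the anti-concentration argument with parameters $\left(\tfrac{\inlierRate^2\eta^2(1-2\barc)^2}{32\barc},\,2\barc,\,2\boundx\right)$ is exactly the paper's adaptation of Lemma~4.1 of Karmalkar et al., your Cauchy--Schwarz rounding step is its adaptation of Lemma~4.2, and the inlier/outlier split with the trivial $2\boundx$ bound on outlier terms, the normalization $\sum_i\pExpect{\pdist}{\omega_i}=\inlierRate\nrMeasurements$, and the final Jensen/triangle-inequality assembly reproduce the argument in the appendix with identical constants. The only cosmetic difference is that you extract the first-moment inlier bound via per-index Cauchy--Schwarz (using $\omega_i^2=\omega_i$) followed by Cauchy--Schwarz over the sum, whereas the paper uses the norm inequality for pseudo-distributions together with $\normOne{\cdot}\le\sqrt{m}\,\normTwo{\cdot}$ on the inlier average.
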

}}

Towards proving the proposition we need to prove two technical lemmas (\cref{eq:boundxxgt,lem:exists_vi} below). These lemmas extend results~\cite{Karmalkar19neurips-ListDecodableRegression} 
 to vector-valued and noisy measurements, and while in~\cite{Karmalkar19neurips-ListDecodableRegression}  they have been proposed to attack the high-outlier case (\ie for list-decodable regression), we show they are also useful to 
 prove estimation contracts for the low-outlier case.
 
 Note that the two lemmas below use a subset of constraints compared to the one in the constraint set of~\eqref{eq:LTS-T} (\ie the set $\axiomsMC$ below does not contain the constraint $\sumOverMeas{\omega_i} = \alpha \nrMeasurements$): this will allow us to use them also to discuss the performance of~\eqref{eq:MC} and~\eqref{eq:TLS} later on.

%%%%%%%%%%%%%%%%%%%%%%%%%%%%%%%%%%%%%%%%%%%%%%%%%%%%%%%%%%%%%%%%%%%%%%%%%%%%
\begin{lemma}[Adapted from Lemma 4.1 in~\cite{Karmalkar19neurips-ListDecodableRegression}]
\label{eq:boundxxgt}
Consider the following constraint set, for given measurements $(\vy_i,\MA_i)$, $i \in [\nrMeasurements]$, 
a constant $\barc \geq 0$, and where $\Domain$ is an explicitly bounded basic semi-algebraic set (\cf~\cref{ass:explicitlyBoundedxs}): 
\begin{align}
\axiomsMC \doteq \left\{
    \begin{array}{cl}
    \omega_i^2 = \omega_i, \;\; i\in[\nrMeasurements] \\ 
    \omega_i \cdot \residuali{\vxx}^2 \leq \barc^2 \;\; i\in[\nrMeasurements]\\
    \vxx \in \Domain
    \end{array}
    \right\} \mathper
\end{align}

For any $t \geq {k}$ and set of $\nrMeasurements$ measurements with at least $\an$ inliers, such that for the set of inliers $\setInliers$, the set of matrices $\MA_i$, $i\in\setInliers$, is
$k$-certifiably $\antiConReq$-anti-concentrated,\footnote{The constant $``32''$ in the anti-concentration requirement is arbitrary (\ie it just amounts to a re-scaling of the parameter $\eta$) and has been chosen to keep the result consistent with the original statement in~\cite{Karmalkar19neurips-ListDecodableRegression}. }
\begin{align}
\axiomsMC \sosimply{\vomega,\vxx}{t}
\left\{
\aveOverInliers \omega_i \normTwo{\vxx - \vxx\gt}^2 \leq \frac{\inlierRate^2 \eta^2 \boundx^2}{4}
\right\}.
\end{align}
\end{lemma}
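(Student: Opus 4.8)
The goal is an \sos proof, at degree $t$, that the constraint system $\axiomsMC$ implies the bound $\aveOverInliers \omega_i \normTwo{\vxx - \vxx\gt}^2 \leq \frac{\inlierRate^2 \eta^2 \boundx^2}{4}$. The central idea is to play two facts against each other: (i) whenever $\omega_i = 1$ and $i$ is an inlier, the residual $\residuali{\vxx}^2 \leq \barc^2$ together with the inlier model $\vy_i = \MA_i\tran \vxx\gt + \vepsilon$, $\normTwo{\vepsilon} \leq \barc$, forces $\normTwo{\MA_i\tran(\vxx - \vxx\gt)}$ to be small (of order $\barc$); and (ii) certifiable anti-concentration of the inlier matrices prevents a direction $\vxx - \vxx\gt$ of non-trivial norm from having $\normTwo{\MA_i\tran(\vxx-\vxx\gt)}$ small on a large fraction of the inliers. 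Combining these yields a bound on $\aveOverInliers \omega_i \normTwo{\vxx - \vxx\gt}^2$.

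\textbf{Key steps, in order.}
First I would establish the "residual implies small projection" step: using the inlier model and $\omega_i^2 = \omega_i$, together with $\omega_i \residuali{\vxx}^2 \leq \barc^2$ and $\normTwo{\vepsilon}\leq\barc$, derive via the \sos triangle inequality with norms (\cref{fact:sosTriangleNorm}) that $\axiomsMC \sosimply{\vomega,\vxx}{O(1)} \{ \omega_i \normTwo{\MA_i\tran(\vxx - \vxx\gt)}^2 \leq 2\barc^2 + 2\barc^2 = 4\barc^2 \}$ for each inlier $i$, hence $\omega_i \normTwo{\MA_i\tran(\vxx - \vxx\gt)}^2 \leq (2\barc)^2$, so $\normTwo{\MA_i\tran(\vxx-\vxx\gt)}^2 \leq (2\barc)^2$ on the support of $\omega$. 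Second, I would invoke the anti-concentration polynomial $p$ of~\cref{def:certAnticon} with $\delta = 2\barc$ and $M = 2\boundx$ (the parameters specified in the lemma's $\antiConReq$ hypothesis): condition~\eqref{eq:certAntiConSet1} gives, on the support of $\omega_i$ (where $\normTwo{\MA_i\tran(\vxx-\vxx\gt)}^2 \leq \delta^2$), that $\omega_i\, p^2(\normTwo{\MA_i\tran(\vxx-\vxx\gt)}) \geq (1-\delta)^2 \omega_i$. Third, I would sum over inliers and apply condition~\eqref{eq:certAntiConSet2} with $\vv = \vxx - \vxx\gt$ (noting $\normTwo{\vxx - \vxx\gt}^2 \leq M^2 = 4\boundx^2$ since both $\vxx,\vxx\gt \in \Domain$ and the domain is explicitly bounded, via the \sos triangle inequality): this gives $\normTwo{\vxx-\vxx\gt}^2 \aveOverInliers p^2(\normTwo{\MA_i\tran(\vxx-\vxx\gt)}) \leq C\,\delta\,M^2$ with $C = \frac{\inlierRate^2 \eta^2 (1-2\barc)^2}{32\barc}$. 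Fourth, I would chain these: $(1-\delta)^2 \aveOverInliers \omega_i \normTwo{\vxx - \vxx\gt}^2 \leq \normTwo{\vxx-\vxx\gt}^2 \aveOverInliers \omega_i\, p^2(\cdots) \leq \normTwo{\vxx-\vxx\gt}^2 \aveOverInliers p^2(\cdots) \leq C\,\delta\,M^2$, and substituting $\delta = 2\barc$, $M^2 = 4\boundx^2$, $C$ as above, and noting $(1-\delta)^2 = (1-2\barc)^2$ cancels, obtain $\aveOverInliers \omega_i \normTwo{\vxx - \vxx\gt}^2 \leq \frac{C \cdot 2\barc \cdot 4\boundx^2}{(1-2\barc)^2} = \frac{\inlierRate^2 \eta^2 \boundx^2}{4}$, as claimed.

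\textbf{Main obstacle.}
The delicate part is making every manipulation legitimate \emph{inside the \sos proof system} at the claimed degree $t$: inequalities that are trivial in ordinary algebra (e.g. dropping $\omega_i$ to pass from $\aveOverInliers \omega_i\, p^2(\cdots)$ to $\aveOverInliers p^2(\cdots)$, or multiplying two inequalities, or dividing by $(1-2\barc)^2$) must each be justified by an \sos rule from~\cref{app:sosProofs} (multiplication rule, the fact that $\omega_i \leq 1$ follows \sos-style from $\omega_i^2 = \omega_i$, squaring of \sos-nonnegative quantities via~\cref{fact:sossquaring}, etc.), and one must track the degree blow-up from each product so that the total stays $\leq t$. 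In particular, applying~\eqref{eq:certAntiConSet1} multiplied by $\omega_i$ and then~\eqref{eq:certAntiConSet2} involves products of the degree-$k$ anti-concentration proofs with the low-degree residual bounds, which is exactly why the hypothesis $t \geq k$ appears; I would verify that all intermediate polynomials have degree at most $t$ and that the substitution of the explicit constant $C$ produces the stated right-hand side cleanly. A secondary (but routine) point is that $\vxx\gt \in \Domain$ and $\normTwo{\vepsilon}\leq\barc$ are part of the problem data rather than of $\axiomsMC$ per se, so the inlier relation $\vy_i = \MA_i\tran\vxx\gt + \vepsilon$ enters as a fixed identity with $\normTwo{\vy_i - \MA_i\tran\vxx\gt}^2 \leq \barc^2$ holding for $i \in \setInliers$; this is used to rewrite $\residuali{\vxx} = \normTwo{\vy_i - \MA_i\tran\vxx}$ in terms of $\MA_i\tran(\vxx - \vxx\gt)$ plus a bounded-norm term.
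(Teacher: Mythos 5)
Your proposal is correct and follows essentially the same route as the paper's proof: first an \sos triangle-inequality argument showing $\omega_i\normTwo{\MA_i\tran(\vxx-\vxx\gt)}^2\leq 4\barcsq$ for inliers, then invoking certifiable anti-concentration with $\delta=2\barc$, $M=2\boundx$ (using explicit boundedness to get $\normTwo{\vxx-\vxx\gt}^2\leq 4\boundx^2$), and chaining the two conditions so that the $(1-2\barc)^2$ factors cancel and the constant $C$ yields $\frac{\inlierRate^2\eta^2\boundx^2}{4}$. The only cosmetic difference is that the paper localizes the anti-concentration lower bound by substituting $\omega_i(\vxx-\vxx\gt)$ inside $p$ (exploiting $\omega_i^2=\omega_i$) rather than multiplying the conclusion by $\omega_i$, but these are equivalent modulo the boolean constraint.
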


\begin{proof} % lemma
We follow the same logic as the proof of Lemma 4.1 in~\cite{Karmalkar19neurips-ListDecodableRegression}, but provide a slightly simpler derivation, based on our definition of certifiable anti-concentration. %, which only involves $p^2$.
We first observe that for the inliers (\ie $i \in \InlierSet$) it holds:\footnote{Observe the analogy with the proof of~\cref{thm:lowOut-aposterioriNoisyMC}.}
\begin{align}
\axiomsMC \sosimply{\vxx}{t} 
\omega_i \cdot\normTwo{\MA_i\tran (\vxx-\vxx\gt)}^2
\overbrace{=}^{\text{adding/subtracting $\vy_i$}} 
\omega_i \cdot\normTwo{(\vy_i - \MA_i\tran \vxx\gt) - (\vy_i - \MA_i\tran \vxx)}^2
\\
\overbrace{\leq}^{\cref{fact:sosTriangleNorm}}
\omega_i \cdot \left(2 \normTwo{\vy_i - \MA_i\tran \vxx\gt}^2
+ 2 \normTwo{\vy_i - \MA_i\tran \vxx}^2 \right) 
\\
\overbrace{\leq}^{\omega_i \leq 1 \text{ and~\cref{fact:basics} }}
2 \normTwo{\vy_i - \MA_i\tran \vxx\gt}^2
+ 2 \cdot \omega_i \cdot  \normTwo{\vy_i - \MA_i\tran \vxx}^2 
\\
\explanation{\text{(since $(\vomega,\vxx)$ satisfy $\axiomsMC$, and inliers by definition satisfy
$\normTwo{\vy_i - \MA_i\tran \vxx\gt}^2 \leq \barcsq$)}}
\leq 
4 \barcsq \mathper
\label{eq:boundAtxx}
\end{align}

Since the set of matrices $\MA_i$, $i\in\setInliers$, is $k$-certifiably
$\antiConReq$-anti-concentrated, then there exists
 a univariate polynomial $p$ such that for every $i\in\setInliers$ and for every $t \geq k$:
 \begin{align}
\overbrace{
\left\{ 
\omega_i \cdot \normTwo{\MA_i\tran (\vxx-\vxx\gt)}^2 \leq 4\barcsq 
\right\} 
}^{ \text{from~\eqref{eq:boundAtxx}} }
\overbrace{
\sosimply{\vxx}{t} 
 p^2\left(\omega_i \cdot \normTwo{\MA_i\tran (\vxx-\vxx\gt)} \right) \geq (1-2\barc)^2
}^{ \text{using \eqref{eq:certAntiConSet1} with $\delta=2\barc$} }
% \\
% \overbrace{\sosimply{\vxx}{k} }^{\cref{fact:squareBound}}
%  p\left(\omega_i \cdot \normTwo{\MA_i\tran (\vxx-\vxx\gt)} \right) \geq 1 - 2 \barc \mathcom
\label{eq:proof21}
 \end{align}
 and % (for any $i$ in the inlier set):
 \begin{align}
\normTwo{\vxx}^2 \leq \boundx
 \overbrace{
\sosimply{\vxx}{t} \normTwo{\vxx - \vxx\gt}^2 \leq 4 \boundx^2
}^{\cref{fact:sosTriangleNorm} }
\\
 \overbrace{
\sosimply{\vxx}{t} 
\left\{ \normTwo{\vxx - \vxx\gt}^2 \cdot  \aveOverInliers p^2\left( \normTwo{\MA_i\tran (\vxx-\vxx\gt)} \right) \leq \frac{\inlierRate^2\eta^2 (1-2\barc)^2 \boundx^2}{4} \right\} \mathper
}^{\text{using certifiable anti-concentration in eq.~\eqref{eq:certAntiConSet2} }}
\label{eq:proof22}
\end{align}

% Using~\eqref{eq:proof21} and since since $\omega_i$ is binary and $p(0)=1$: 
% \begin{align}
% \axiomsMC
% % \overbrace{
% % \sosimply{\vomega,\vxx}{ {k} } 
% % \left\{
% % p^2\left(\omega_i \cdot \normTwo{\MA_i\tran (\vxx-\vxx\gt)} \right) \geq (1-2\barc)^2 
% % \right\}
% % }^{\text{ {squaring both members} in~\eqref{eq:proof21} and using~\cref{fact:sossquaring}}}
% % \\
% % \overbrace{
% \sosimply{\vomega,\vxx}{k}
% \left\{
% \omega_i p^2\left( \normTwo{\MA_i\tran (\vxx-\vxx\gt)} \right) \geq (1-2\barc)^2
% \right\}\mathper
% % }^{\text{since $\omega_i$ is binary and $p(0)=1$}}\mathper
% \label{eq:proof41}
% \end{align}
%
Combining the conclusions in~\eqref{eq:proof21} and~\eqref{eq:proof22}, we obtain:
% Using~\eqref{eq:proof22}: 
%and $\axiomsMC\sosimply{\vomega}{2} \{\omega_i^2 = \omega_i\}$, we obtain:
\begin{align}
\axiomsMC
\sosimply{\vomega,\vxx}{t}
\aveOverInliers \omega_i \normTwo{\vxx-\vxx\gt}^2 
\overbrace{
\leq \aveOverInliers \omega_i \normTwo{\vxx-\vxx\gt}^2 \cdot \frac{1}{ (1-2\barc)^2 } p^2\left(\omega_i  \cdot \normTwo{\MA_i\tran (\vxx-\vxx\gt)} \right)
}^{\text{ since $\frac{1}{ (1-2\barc)^2 }p^2\left(\omega_i  \cdot \normTwo{\MA_i\tran (\vxx-\vxx\gt)} \right) \geq 1$ per~\eqref{eq:proof21} }}
\\ 
\overbrace{
\leq  \frac{1}{ (1-2\barc)^2 }\normTwo{\vxx-\vxx\gt}^2 \cdot \aveOverInliers  p^2\left( \normTwo{\MA_i\tran (\vxx-\vxx\gt)} \right)
}^{\text{$\omega_i$ is binary, hence $ \omega_i  \cdot  p^2\left(\omega_i  \cdot \normTwo{\MA_i\tran (\vxx-\vxx\gt)} \right) \leq p^2\left(\normTwo{\MA_i\tran (\vxx-\vxx\gt)} \right) $ }}
% \\
% \overbrace{
% \leq \frac{1}{ (1-2\barc)^2 } \normTwo{\vxx-\vxx\gt}^2 \aveOverInliers  p^2\left( \normTwo{\MA_i\tran (\vxx-\vxx\gt)} \right)
% }^{\text{upper-bound when all $\omega_i=1$}}
\overbrace{
 \leq \frac{\inlierRate^2 \eta^2 \boundx^2}{4}
 }^{\text{using~\eqref{eq:proof22}}} \mathcom
\end{align}
which concludes the proof.
\end{proof} % lemma
 
% As a consequence, we can show below that a constant fraction of suitable vectors built from the pseudo-moment 
% matrix satisfying constraints  are close to $\vxx\gt$.}

%%%%%%%%%%%%%%%%%%%%%%%%%%%%%%%%%%%%%%%%%%%%%%%%%%%%%%%%%%%%%%%%%%%%%%%%
\begin{lemma}[Adapted from Lemma 4.2 in~\cite{Karmalkar19neurips-ListDecodableRegression}] \label{lem:exists_vi}
Under the same assumptions of~\cref{eq:boundxxgt}, 
for any pseudo-distribution $\pdist$ of level at least ${k}$ satisfying $\axiomsMC$,
\begin{align} 
\aveOverInliers \pExpect{\pdist}{\omega_i} \normTwo{\vv_i-\vxx\gt} \leq \frac{\inlierRate \eta \boundx}{2},
\end{align}
where the vectors $\vv_i$ are extracted from the pseudo-moment matrix by setting $\vv_i = \frac{\pExpect{\pdist}{\omega_i\vxx}}{\pExpect{\pdist}{\omega_i}}$ if $\pExpect{\pdist}{\omega_i} > 0$, or $\vv_i = \zero$ otherwise, for $i\in[\nrMeasurements]$.
%   $\vv_i$ are extracted from the pseudo-moment matrix via $\vv_i \triangleq 
% \frac{\pExpect{\pdist}{\vxx \; \omega_i}}{ \pExpect{\pdist}{\omega_i} }$, $i\in[\nrMeasurements]$.
\end{lemma}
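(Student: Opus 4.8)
The plan is to deduce \cref{lem:exists_vi} from \cref{eq:boundxxgt} by passing to pseudo-expectations, exactly mirroring the structure of Lemma 4.2 in~\cite{Karmalkar19neurips-ListDecodableRegression}. The starting point is the \sos conclusion of \cref{eq:boundxxgt}, namely
\[
\axiomsMC \sosimply{\vomega,\vxx}{t} \left\{ \aveOverInliers \omega_i \normTwo{\vxx - \vxx\gt}^2 \leq \frac{\inlierRate^2 \eta^2 \boundx^2}{4} \right\}.
\]
Since $\pdist$ has level at least $k$ and satisfies $\axiomsMC$, by soundness (\cref{fact:soundness}) the pseudo-distribution respects this inequality, so $\aveOverInliers \pExpect{\pdist}{\omega_i \normTwo{\vxx - \vxx\gt}^2} \leq \frac{\inlierRate^2 \eta^2 \boundx^2}{4}$. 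The goal is then to move the square outside: bound $\aveOverInliers \pExpect{\pdist}{\omega_i} \normTwo{\vv_i - \vxx\gt}$ in terms of $\aveOverInliers \pExpect{\pdist}{\omega_i \normTwo{\vxx - \vxx\gt}^2}$.

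The key step is a per-index Cauchy--Schwarz argument relating $\vv_i$ to the pseudo-moments. First I would observe that, because $\omega_i^2 = \omega_i$ is in $\axiomsMC$, we have $\pExpect{\pdist}{\omega_i \vxx} = \pExpect{\pdist}{\omega_i^2 \vxx}$, and by \CS for pseudo-distributions (\cref{fact:CSforpdist}, applied componentwise or via the norm version \cref{fact:normIneq-pdist}) together with linearity, $\normTwo{\pExpect{\pdist}{\omega_i(\vxx - \vxx\gt)}}^2 \leq \pExpect{\pdist}{\omega_i} \cdot \pExpect{\pdist}{\omega_i \normTwo{\vxx - \vxx\gt}^2}$. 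Dividing by $\pExpect{\pdist}{\omega_i}^2$ on indices where $\pExpect{\pdist}{\omega_i} > 0$, and using $\vv_i \pExpect{\pdist}{\omega_i} = \pExpect{\pdist}{\omega_i \vxx}$, gives $\pExpect{\pdist}{\omega_i} \normTwo{\vv_i - \vxx\gt}^2 \leq \pExpect{\pdist}{\omega_i \normTwo{\vxx - \vxx\gt}^2}$; on indices where $\pExpect{\pdist}{\omega_i} = 0$ the left side vanishes so the inequality holds trivially (with $\vv_i = \zero$). Summing over $i \in \setInliers$ and dividing by $|\setInliers|$:
\[
\aveOverInliers \pExpect{\pdist}{\omega_i} \normTwo{\vv_i - \vxx\gt}^2 \leq \aveOverInliers \pExpect{\pdist}{\omega_i \normTwo{\vxx - \vxx\gt}^2} \leq \frac{\inlierRate^2 \eta^2 \boundx^2}{4}.
\]
Finally I would apply Cauchy--Schwarz over the averaging index $i$ (with weights $\pExpect{\pdist}{\omega_i} \geq 0$, which are genuine nonnegative reals here, not pseudo-expectations of squares): $\aveOverInliers \pExpect{\pdist}{\omega_i} \normTwo{\vv_i - \vxx\gt} \leq \sqrt{\aveOverInliers \pExpect{\pdist}{\omega_i}} \cdot \sqrt{\aveOverInliers \pExpect{\pdist}{\omega_i}\normTwo{\vv_i - \vxx\gt}^2} \leq 1 \cdot \frac{\inlierRate \eta \boundx}{2}$, using $\aveOverInliers \pExpect{\pdist}{\omega_i} \leq 1$ (since each $\pExpect{\pdist}{\omega_i} \in [0,1]$, which follows from $\omega_i^2 = \omega_i$ being satisfied by $\pdist$). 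This yields the claimed bound $\aveOverInliers \pExpect{\pdist}{\omega_i}\normTwo{\vv_i - \vxx\gt} \leq \frac{\inlierRate \eta \boundx}{2}$.

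The main obstacle is getting the per-index Cauchy--Schwarz inequality $\pExpect{\pdist}{\omega_i}\normTwo{\vv_i - \vxx\gt}^2 \leq \pExpect{\pdist}{\omega_i\normTwo{\vxx - \vxx\gt}^2}$ rigorously, since it mixes a genuine scalar ($\pExpect{\pdist}{\omega_i}$, a fixed nonnegative number once $\pdist$ is chosen) with pseudo-expectations of vector-valued polynomials; one must be careful that \cref{fact:CSforpdist} / \cref{fact:normIneq-pdist} apply at the right level (the polynomial $\omega_i(\vxx - \vxx\gt)$ has degree at most $2$ in the relevant entries, well within level $k \geq$ the relaxation order), and that the case $\pExpect{\pdist}{\omega_i} = 0$ is handled so that $\vv_i = \zero$ does not create a spurious contribution — here one uses that $\pExpect{\pdist}{\omega_i} = 0$ forces $\pExpect{\pdist}{\omega_i\normTwo{\vxx-\vxx\gt}^2}$'s contribution to be non-negative and the left-hand term to be exactly zero. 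A secondary technical point is the approximate-satisfaction caveat from \cref{def:constrainedPD}: since SDP solvers only satisfy constraints up to numerical error, the inequalities above hold up to a negligible additive slack, which I would absorb into the statement or remark upon, consistently with how~\cite{Karmalkar19neurips-ListDecodableRegression} handles it.
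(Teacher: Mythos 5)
Your proposal is correct and takes essentially the same route as the paper's proof: start from the \sos conclusion of \cref{eq:boundxxgt}, pass to pseudo-expectations by soundness, use a per-index Cauchy--Schwarz/Jensen step (exploiting $\omega_i^2=\omega_i$) to relate $\pExpect{\pdist}{\omega_i}\normTwo{\vv_i-\vxx\gt}$ to $\pExpect{\pdist}{\omega_i\normTwo{\vxx-\vxx\gt}^2}$, and finish with a Cauchy--Schwarz over the averaging index $i$. The only cosmetic difference is in the bookkeeping of the last two steps: the paper moves $\omega_i$ inside the norm at the \sos level, applies the norm inequality of \cref{fact:normIneq-pdist} per index, and then the unweighted $\ell_1$--$\ell_2$ bound over the inliers, whereas you use a slightly sharper per-index Cauchy--Schwarz (weight $\pExpect{\pdist}{\omega_i}$ to the first power) followed by a weighted Cauchy--Schwarz together with $\pExpect{\pdist}{\omega_i}\le 1$ --- both give the identical final bound.
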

\begin{proof}% lemma
By~\cref{eq:boundxxgt}, we have $\axiomsMC\sosimply{\vomega,\vxx}{k}
\left\{
\aveOverInliers \omega_i \normTwo{\vxx - \vxx\gt}^2 \leq \frac{\inlierRate^2 \eta^2 \boundx^2}{4}
\right\}$.
We also have: $\axiomsMC
\sosimply{\vomega}{2} \{\omega_i^2 = \omega_i\}$ for any $i$. Therefore:
\begin{align}
\axiomsMC\sosimply{\vomega,\vxx}{k}
\left\{
\aveOverInliers \normTwo{ \omega_i\vxx -  \omega_i \vxx\gt}^2 \leq \frac{\inlierRate^2 \eta^2 \boundx^2}{4}
\right\} \mathper
\end{align}
Since $\pdist$ satisfies $\axiomsMC$, then it also satisfies:
\begin{align}
\aveOverInliers \pExpect{\pdist}{  
\normTwo{ \omega_i\vxx -  \omega_i \vxx\gt}^2} 
\leq \frac{\inlierRate^2 \eta^2 \boundx^2}{4} \mathper
\label{eq:proof51}
\end{align}
Using the norm inequality for pseudo-distributions in~\cref{fact:normIneq-pdist}, we get $\normTwo{ \pExpect{\pdist}{ \omega_i\vxx 
 -   \omega_i \vxx\gt } }^2 \leq \pExpect{\pdist}{  
\normTwo{ \omega_i\vxx -  \omega_i \vxx\gt}^2} $; then 
%using \CS for pseudo-distributions (\cref{fact:CSforpdist}, eq.~\eqref{eq:CSforpdist2}) on $\normTwo{ \pExpect{\pdist}{\omega_i\vxx -  \omega_i \vxx\gt}}^2$ 
observing that for any $m$-vector $\vz$, $\normOne{\vz} \leq \sqrt{m} \normTwo{\vz}$ or, equivalently, $\normOne{\vz}^2 \leq m \normTwo{\vz}^2$ (below we will apply this inequality to the vector of size $|\setInliers|$ with entries $z_i = \frac{1}{|\setInliers|} \normTwo{ \pExpect{\pdist}{ \omega_i\vxx } -  \pExpect{\pdist}{ \omega_i } \vxx\gt }$),
and chaining the inequalities back to~\eqref{eq:proof51}: 
\begin{align}
\overbrace{
\left( 
\aveOverInliers   
\normTwo{ \pExpect{\pdist}{ \omega_i\vxx } -  \pExpect{\pdist}{ \omega_i } \vxx\gt }
\right)^2
}^{\normOne{\vz}^2}
\hspace{3mm}
\overbrace{\leq}^{\normOne{\vz}^2 \leq m \normTwo{\vz}^2}
\hspace{3mm} 
\overbrace{
\aveOverInliers 
\normTwo{ \pExpect{\pdist}{ \omega_i\vxx } -  \pExpect{\pdist}{ \omega_i } \vxx\gt }^2
}^{m \normTwo{\vz}^2 \text{ with } m=|\setInliers|}
\\
\overbrace{\leq}^{\cref{fact:normIneq-pdist}}
\aveOverInliers \pExpect{\pdist}{  
\normTwo{ \omega_i\vxx -  \omega_i \vxx\gt}^2} 
 \overbrace{\leq}^{\eqref{eq:proof51}} \frac{\inlierRate^2 \eta^2 \boundx^2}{4} \mathper
 \label{eq:proof-ab-1}
\end{align}
Remembering that  $\vv_i = \frac{\pExpect{\pdist}{\omega_i\vxx}}{\pExpect{\pdist}{\omega_i}}$ if $\pExpect{\pdist}{\omega_i} > 0$, or $\vv_i = \zero$ otherwise, and taking the square root of both members 
in~\eqref{eq:proof-ab-1}: 
\begin{align}
\frac{1}{|\setInliers|} \sum_{i \in \setInliers, \pExpect{\pdist}{\omega_i} > 0} 
\pExpect{\pdist}{\omega_i} \normTwo{\vv_i - \vxx\gt} 
\overbrace{=}^{\text{by def. of $\vv_i$}} 
\aveOverInliers   
\normTwo{ \pExpect{\pdist}{ \omega_i\vxx } -  \pExpect{\pdist}{ \omega_i } \vxx\gt }
 \overbrace{\leq}^{\eqref{eq:proof-ab-1}}
 \frac{\inlierRate \eta \boundx}{2} \mathcom
\end{align}
concluding the proof of~\cref{lem:exists_vi}.
\end{proof}% lemma

%%%%%%%%%%%%%%%%%%%%%%%%%%%%%%%%%%%%%%%%%%%%%%%%%%%%%%%%%%%%%%%%%%%%%%%%%%%%%%%%%
\begin{proof}[{\bf Proof of Proposition~\ref{thm:lowOut-apriori-LTS}:}]
First of all, we note that since $\axiomsLTST$ in~\cref{algo:lts2} contains a superset of the constraints in $\axiomsMC$ defined in~\cref{eq:boundxxgt}, the conclusions of~\cref{lem:exists_vi} and~\cref{eq:boundxxgt} still hold if we replace $\axiomsMC$ with $\axiomsLTST$.
 Therefore we have that any pseudo-distribution of level at least $k$ (hence produced by a relaxation of order at least $k/2$) satisfying $\axiomsLTST$ also satisfies:
\begin{align}
\label{eq:boundInliers}
\aveOverInliers \pExpect{\pdist}{\omega_i} \normTwo{\vv_i-\vxx\gt} \leq \frac{\inlierRate \eta \boundx}{2}
\overbrace{\iff}^{|\InlierSet|=\an} 
\sumOverInliers \pExpect{\pdist}{\omega_i} \normTwo{\vv_i-\vxx\gt} \leq \frac{\inlierRate^2 \, \eta \, \boundx \, \nrMeasurements}{2} \mathper
\end{align}

%%%%%%%%%%%%%%%%%%%%%%%%%%%%%%%%%%%%%%%%%%%%%%%%%%%%%%%%%%%%%%%%%%%%%%%%%%%%%%%%%
Let us define the set of outliers $\setOutliers \triangleq [\nrMeasurements] \setminus \setInliers$.
We observe that since $\pExpect{\pdist}{\omega_i} \leq 1$, then 
$\sumOverOutliers \pExpect{\pdist}{\omega_i} \leq (1-\inlierRate)\nrMeasurements$.
Moreover, {using the triangle inequality $\normTwo{\vv_i-\vxx\gt} \leq 2 \boundx$,} hence:
\begin{align}
\label{eq:boundOutliers}
\sumOverOutliers \pExpect{\pdist}{\omega_i} \normTwo{\vv_i-\vxx\gt}  \leq 2 \nrMeasurements \boundx (1-\inlierRate) \mathper
\end{align}

Using~\cref{eq:boundInliers} and~\cref{eq:boundOutliers}:
\begin{align}
\sumOverMeas \pExpect{\pdist}{\omega_i} \normTwo{\vv_i-\vxx\gt} =  
\sumOverInliers \pExpect{\pdist}{\omega_i} \normTwo{\vv_i-\vxx\gt} 
+ \sumOverOutliers \pExpect{\pdist}{\omega_i} \normTwo{\vv_i-\vxx\gt}
\\
 \leq \frac{\inlierRate^2 \; \eta \; \boundx \; \nrMeasurements}{2} + 2 \nrMeasurements \boundx (1-\inlierRate) \mathper
 \label{eq:proof-ab-22}
\end{align}
Now note that any pseudo-distribution $\pdist$ satisfying $\axiomsLTST$ is such that 
$\pExpect{\pdist}{ \sumOverMeas \omega_i } = \an$ (due to the constraint $\sumOverMeas \omega_i = \an$ in $\axiomsLTST$),
hence by linearity $\sumOverMeas \pExpect{\pdist}{ \omega_i } = \an$.
 % \LC{need bound on $\sumOverMeas \pExpect{\pdist}{\omega_i}$}
Dividing both members of~\eqref{eq:proof-ab-22} by $\sumOverMeasj \pExpect{\pdist}{\omega_j}$ (where we switched to using $j$ as an index to avoid confusion):
\begin{align}
\sumOverMeas \frac{ \pExpect{\pdist}{\omega_i} }{ \sumOverMeasj \pExpect{\pdist}{\omega_j} } \normTwo{\vv_i-\vxx\gt} 
 \leq \frac{1}{\sumOverMeasj \pExpect{\pdist}{\omega_j} }
 \left( 
 \frac{\inlierRate^2 \; \eta \; \boundx \; \nrMeasurements}{2} + 2 \nrMeasurements   \boundx (1-\inlierRate)
 \right)
 \\
=
 \frac{1}{\an } \left( 
\frac{\inlierRate^2 \; \eta \; \boundx \; \nrMeasurements}{2} + 2 \nrMeasurements  \boundx (1-\inlierRate)
 \right) \mathper
 \label{eq:proof61T}
\end{align}
Using Jensen's inequality, we observe $\normTwo{ \sumOverMeas \frac{ \pExpect{\pdist}{\omega_i} }{ \sumOverMeasj \pExpect{\pdist}{\omega_j} } \vv_i  - \vxx\gt} \leq \sumOverMeas \frac{ \pExpect{\pdist}{\omega_i} }{ \sumOverMeasj \pExpect{\pdist}{\omega_j} } \normTwo{\vv_i-\vxx\gt}$, hence~\eqref{eq:proof61T} becomes: 
\begin{align}
\normTwo{ \sumOverMeas \frac{ \pExpect{\pdist}{\omega_i} }{ \sumOverMeasj \pExpect{\pdist}{\omega_j} } \vv_i  - \vxx\gt} 
\leq
 \frac{\inlierRate \; \eta \; \boundx }{2} +  2 \boundx \frac{1-\inlierRate}{\inlierRate} \mathcom
\end{align}
which, recalling that $\vxx\ltssdpT = \sumOverMeas \frac{ \pExpect{\pdist}{\omega_i} }{ \sumOverMeasj \pExpect{\pdist}{\omega_j} } \vv_i$, concludes the proof.

\end{proof}
%!TEX root = main.tex

\section{Proof of Proposition~\ref{thm:lowOut-apriori-MC}: Contract for Relaxation of~\eqref{eq:MC-lin}}
\label{app:proof-lowOut-apriori-MC}

We start by restating the proposition for the reader's convenience.

\noindent\fbox{
\parbox{\textwidth}{
\begin{proposition}[Restatement of Proposition~\ref{thm:lowOut-apriori-MC}]
\mcThm{eq:proof-lowOut-apriori-MC}
\end{proposition}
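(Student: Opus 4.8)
### Proof plan for Proposition~\ref{thm:lowOut-apriori-MC}

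The plan is to mimic the proof of Proposition~\ref{thm:lowOut-apriori-LTS} as closely as possible, exploiting the fact that~\eqref{eq:MC-lin} and~\eqref{eq:LTS-T} share the constraint subset $\axiomsMC$ (namely $\omega_i^2=\omega_i$, $\omega_i\cdot\residuali{\vxx}^2\leq\barcsq$, and $\vxx\in\Domain$). Since \cref{eq:boundxxgt} and \cref{lem:exists_vi} were already established using only the constraints in $\axiomsMC$ (which is precisely why those lemmas were phrased with $\axiomsMC$ rather than the full $\axiomsLTST$), both lemmas apply verbatim to any pseudo-distribution $\pdist$ of level at least $k$ satisfying $\axiomsMC$. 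In particular, letting $\pdist$ be the pseudo-distribution produced by the moment relaxation of~\eqref{eq:MC-lin} at order $\relaxOrder\geq\relaxLevel/2$, and defining $\vv_i$ as in line~\ref{line:vi}-style extraction ($\vv_i=\pExpect{\pdist}{\omega_i\vxx}/\pExpect{\pdist}{\omega_i}$ when $\pExpect{\pdist}{\omega_i}>0$, else $\vv_i=\zero$), \cref{lem:exists_vi} gives
\begin{align}
\label{eq:mcProofInliersBound}
\sumOverInliers \pExpect{\pdist}{\omega_i}\,\normTwo{\vv_i-\vxx\gt} \leq \frac{\inlierRate^2\,\eta\,\boundx\,\nrMeasurements}{2},
\end{align}
exactly as in~\eqref{eq:boundInliers}, and the trivial bound $\normTwo{\vv_i-\vxx\gt}\leq 2\boundx$ together with $\pExpect{\pdist}{\omega_i}\leq1$ gives $\sumOverOutliers\pExpect{\pdist}{\omega_i}\,\normTwo{\vv_i-\vxx\gt}\leq 2\nrMeasurements\boundx(1-\inlierRate)$, as in~\eqref{eq:boundOutliers}.

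The one place where the argument genuinely differs from Proposition~\ref{thm:lowOut-apriori-LTS} is in controlling the normalization $\sumOverMeasj\pExpect{\pdist}{\omega_j}$. In~\eqref{eq:LTS-T} this sum equals $\an$ by the hard constraint $\sumOverMeas\omega_i=\an$, but~\eqref{eq:MC-lin} has no such constraint — instead the \emph{objective} $\max\sumOverMeas\omega_i$ forces it. The key step is therefore to show $\pExpect{\pdist}{\sumOverMeas\omega_i}\geq\an$: the pseudo-distribution $\mu$ supported on the single point $(\ones_{\setInliers},\vxx\gt)$ is feasible for (the moment relaxation of)~\eqref{eq:MC-lin} and attains objective $\an$, so by optimality the maximizing pseudo-distribution $\pdist$ must satisfy $\sumOverMeasj\pExpect{\pdist}{\omega_j}=\pExpect{\pdist}{\sumOverMeas\omega_j}\geq\an$. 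Combining this with~\eqref{eq:mcProofInliersBound} and the outlier bound, then dividing by $\sumOverMeasj\pExpect{\pdist}{\omega_j}\geq\an$ (which only makes the right-hand side smaller) yields
\begin{align}
\sumOverMeas \frac{\pExpect{\pdist}{\omega_i}}{\sumOverMeasj\pExpect{\pdist}{\omega_j}}\,\normTwo{\vv_i-\vxx\gt} \leq \frac{1}{\an}\left(\frac{\inlierRate^2\,\eta\,\boundx\,\nrMeasurements}{2}+2\nrMeasurements\boundx(1-\inlierRate)\right) = \frac{\inlierRate\,\eta\,\boundx}{2}+2\boundx\frac{1-\inlierRate}{\inlierRate}.
\end{align}
Finally, Jensen's inequality applied to the convex combination with weights $\pExpect{\pdist}{\omega_i}/\sumOverMeasj\pExpect{\pdist}{\omega_j}$ gives $\normTwo{\vxx\mcsdp-\vxx\gt}\leq\sumOverMeas \frac{\pExpect{\pdist}{\omega_i}}{\sumOverMeasj\pExpect{\pdist}{\omega_j}}\normTwo{\vv_i-\vxx\gt}$, since $\vxx\mcsdp=\sumOverMeas \frac{\pExpect{\pdist}{\omega_i}}{\sumOverMeasj\pExpect{\pdist}{\omega_j}}\vv_i$, which delivers the claimed bound.

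The main obstacle — and it is mild — is the normalization argument: one must be careful that the objective of~\eqref{eq:MC-lin} is a \emph{maximization} rather than a minimization, so feasibility of $(\ones_{\setInliers},\vxx\gt)$ gives a \emph{lower} bound $\sumOverMeasj\pExpect{\pdist}{\omega_j}\geq\an$ (not an equality), and one has to check that dividing by this lower bound is the favorable direction for the inequality (it is, since all terms on the left are nonnegative and the right-hand side of~\eqref{eq:mcProofInliersBound}-plus-outlier-bound is a fixed nonnegative quantity being divided by something at least $\an$). Everything else is a direct transcription of the Proposition~\ref{thm:lowOut-apriori-LTS} proof, so no new \sos machinery or anti-concentration reasoning beyond \cref{eq:boundxxgt} and \cref{lem:exists_vi} is needed.
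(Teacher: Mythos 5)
Your proposal is correct and follows essentially the same route as the paper's own proof: it invokes the two anti-concentration lemmas (which only use $\axiomsMC$), bounds the outlier contribution trivially, lower-bounds the normalization $\sumOverMeasj\pExpect{\pdist}{\omega_j}\geq\an$ by optimality of the maximization against the feasible point $(\vxx\gt,\ones_{\setInliers})$, and finishes with Jensen's inequality. No gaps.
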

}}

\begin{proof}
First of all, we note that the constraint set $\axiomsMC$ in~\eqref{eq:MC-lin} is the same as~\cref{eq:boundxxgt} and~\cref{lem:exists_vi}.
 Therefore we have that any pseudo-distribution $\pdist$  of level at least $k$ (hence produced by a relaxation of order at least $k/2$) 
  satisfying $\axiomsMC$ also satisfies:
\begin{align}
\label{eq:boundInliersMC}
\aveOverInliers \pExpect{\pdist}{\omega_i} \normTwo{\vv_i-\vxx\gt} \leq \frac{\inlierRate \eta \boundx}{2}
\overbrace{\iff}^{|\InlierSet|=\an} 
\sumOverInliers \pExpect{\pdist}{\omega_i} \normTwo{\vv_i-\vxx\gt} \leq \frac{\inlierRate^2 \, \eta \, \boundx \, \nrMeasurements}{2}\mathper
\end{align}

%%%%%%%%%%%%%%%%%%%%%%%%%%%%%%%%%%%%%%%%%%%%%%%%%%%%%%%%%%%%%%%%%%%%%%%%%%%%%%%%%
Let us define the set of outliers $\setOutliers \triangleq [\nrMeasurements] \setminus \setInliers$.
We observe that since $\pExpect{\pdist}{\omega_i} \leq 1$, then 
$\sumOverOutliers \pExpect{\pdist}{\omega_i} \leq (1-\inlierRate)\nrMeasurements$.
Moreover, using the triangle inequality $\normTwo{\vv_i-\vxx\gt} \leq 2 \boundx$, hence:
\begin{align}
\label{eq:boundOutliersMC}
\sumOverOutliers \pExpect{\pdist}{\omega_i} \normTwo{\vv_i-\vxx\gt}  \leq 2 \nrMeasurements \boundx (1-\inlierRate)\mathper
\end{align}

Using~\cref{eq:boundInliersMC} and~\cref{eq:boundOutliersMC}:
\begin{align}
\sumOverMeas \pExpect{\pdist}{\omega_i} \normTwo{\vv_i-\vxx\gt} =  
\sumOverInliers \pExpect{\pdist}{\omega_i} \normTwo{\vv_i-\vxx\gt} 
+ \sumOverOutliers \pExpect{\pdist}{\omega_i} \normTwo{\vv_i-\vxx\gt}
\\
 \leq \frac{\inlierRate^2 \; \eta \; \boundx \; \nrMeasurements}{2} + 2 \nrMeasurements \boundx (1-\inlierRate)\mathper
 \label{eq:proof-abc-22}
\end{align}

Let us call $\pdist$ the pseudo-distribution that achieves the optimal solution in~\eqref{eq:MC-lin}, 
and observe that the corresponding optimal objective $\sumOverMeas \pExpect{\pdist}{\omega_i} \geq \an$: 
this follows from optimality of $\pdist$ and from the fact that the pseudo-distribution supported on the single point $(\vxx\gt,\vomega\gt)$, where $\omega_i\gt=1$ if $i\in\setInliers$ or zero otherwise, is feasible for~\eqref{eq:MC-lin} and achieves an objective $\an$.

Now dividing both members of~\eqref{eq:proof-abc-22} by $\sumOverMeasj \pExpect{\pdist}{\omega_j}$:
\begin{align}
\sumOverMeas \frac{ \pExpect{\pdist}{\omega_i} }{ \sumOverMeasj \pExpect{\pdist}{\omega_j} } \normTwo{\vv_i-\vxx\gt} 
 \leq \frac{1}{\sumOverMeasj \pExpect{\pdist}{\omega_j} }
 \left( 
 \frac{\inlierRate^2 \; \eta \; \boundx \; \nrMeasurements}{2} + 2 \nrMeasurements   \boundx (1-\inlierRate)
 \right)
 \\
\leq
 \frac{1}{\an } \left( 
\frac{\inlierRate^2 \; \eta \; \boundx \; \nrMeasurements}{2} + 2 \nrMeasurements  \boundx (1-\inlierRate)
 \right)\mathper
 \label{eq:proof61M}
\end{align}
Using Jensen's inequality $\normTwo{ \sumOverMeas \frac{ \pExpect{\pdist}{\omega_i} }{ \sumOverMeasj \pExpect{\pdist}{\omega_j} } \vv_i  - \vxx\gt} \leq \sumOverMeas \frac{ \pExpect{\pdist}{\omega_i} }{ \sumOverMeasj \pExpect{\pdist}{\omega_j} } \normTwo{\vv_i-\vxx\gt}$ hence~\eqref{eq:proof61M} becomes: 
\begin{align}
\normTwo{ \sumOverMeas \frac{ \pExpect{\pdist}{\omega_i} }{ \sumOverMeasj \pExpect{\pdist}{\omega_j} } \vv_i  - \vxx\gt} 
\leq
 \frac{\inlierRate \; \eta \; \boundx }{2} +  2 \boundx \frac{1-\inlierRate}{\inlierRate}\mathcom
\end{align}
which, recalling that $\vxx\mcsdp = \sumOverMeas \frac{ \pExpect{\pdist}{\omega_i} }{ \sumOverMeasj \pExpect{\pdist}{\omega_j} } \vv_i$, concludes the proof.
\end{proof}
%!TEX root = main.tex

\section{Proof of Proposition~\ref{thm:lowOut-apriori-TLS}: Contract for Relaxation of~\eqref{eq:TLS-lin}}
\label{app:proof-lowOut-apriori-TLS}

We start by restating the proposition for the reader's convenience.

\noindent\fbox{
\parbox{\textwidth}{
\begin{proposition}[Restatement of Proposition~\ref{thm:lowOut-apriori-TLS}]
\tlsThm{eq:proof-lowOut-apriori-TLS}
\end{proposition}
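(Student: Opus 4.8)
The plan is to follow the template of the proofs of Proposition~\ref{thm:lowOut-apriori-LTS} and Proposition~\ref{thm:lowOut-apriori-MC}, since the constraint set $\axiomsMC$ appearing in~\eqref{eq:TLS-lin} is exactly the one used in~\cref{eq:boundxxgt} and~\cref{lem:exists_vi}. Let $\pdist$ be the pseudo-distribution returned by the moment relaxation in~\cref{algo:tls}; since $\relaxOrder \geq \relaxLevel/2$ it has level at least $\relaxLevel$, so~\cref{lem:exists_vi} applies and gives $\aveOverInliers \pExpect{\pdist}{\omega_i}\normTwo{\vv_i - \vxx\gt} \leq \frac{\inlierRate\eta\boundx}{2}$, i.e.\ $\sumOverInliers \pExpect{\pdist}{\omega_i}\normTwo{\vv_i - \vxx\gt} \leq \frac{\inlierRate^2\eta\boundx\nrMeasurements}{2}$. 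For the outliers $\setOutliers \triangleq [\nrMeasurements]\setminus\setInliers$, combining $\pExpect{\pdist}{\omega_i}\leq 1$ with the triangle inequality $\normTwo{\vv_i - \vxx\gt}\leq 2\boundx$ gives $\sumOverOutliers \pExpect{\pdist}{\omega_i}\normTwo{\vv_i - \vxx\gt} \leq 2\nrMeasurements\boundx(1-\inlierRate)$, and summing over all measurements yields $\sumOverMeas \pExpect{\pdist}{\omega_i}\normTwo{\vv_i - \vxx\gt} \leq \frac{\inlierRate^2\eta\boundx\nrMeasurements}{2} + 2\nrMeasurements\boundx(1-\inlierRate)$.

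The only genuinely new ingredient, and the step I expect to be the main obstacle, is lower bounding the total mass $\sumOverMeasj \pExpect{\pdist}{\omega_j}$, which in the LTS proof came for free from the constraint $\sumOverMeas\omega_i = \an$ and in the MC proof from the objective. Here I would transport the counting argument from the a posteriori proof of Proposition~\ref{thm:lowOut-aposterioriNoisyTLS} to pseudo-expectations: the point $(\vxx\gt,\vomega\gt)$ with $\omega_i\gt = 1$ iff $i\in\setInliers$ is feasible for~\eqref{eq:TLS-lin} and attains objective $\gamma\gt + (\nrMeasurements - \an)\barcsq$, so by optimality of $\pdist$, $\pExpect{\pdist}{\sumOverMeas (\omega_i\residuali{\vxx}^2 + (1-\omega_i)\barcsq)} \leq \gamma\gt + (\nrMeasurements - \an)\barcsq$. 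Using $\omega_i^2 = \omega_i$ to rewrite $\omega_i\residuali{\vxx}^2$ as the sum of squares $\normTwo{\omega_i(\vy_i - \MA_i\tran\vxx)}^2$ (of degree $\leq 4 \leq \relaxLevel$) shows $\pExpect{\pdist}{\sumOverMeas\omega_i\residuali{\vxx}^2}\geq 0$; discarding this term and rearranging $\barcsq\big(\nrMeasurements - \sumOverMeas\pExpect{\pdist}{\omega_i}\big) \leq \gamma\gt + (\nrMeasurements - \an)\barcsq$ gives $\sumOverMeasj\pExpect{\pdist}{\omega_j}\geq \an - \frac{\gamma\gt}{\barcsq}$.

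From there the argument is routine and parallels the end of the proof of Proposition~\ref{thm:lowOut-apriori-MC}: dividing the bound on $\sumOverMeas \pExpect{\pdist}{\omega_i}\normTwo{\vv_i - \vxx\gt}$ by $\sumOverMeasj\pExpect{\pdist}{\omega_j} \geq \an - \frac{\gamma\gt}{\barcsq}$ and applying Jensen's inequality to pass from $\sumOverMeas \frac{\pExpect{\pdist}{\omega_i}}{\sumOverMeasj\pExpect{\pdist}{\omega_j}}\normTwo{\vv_i - \vxx\gt}$ to $\normTwo{\vxx\tlssdp - \vxx\gt}$, where $\vxx\tlssdp = \sumOverMeas \frac{\pExpect{\pdist}{\omega_i}}{\sumOverMeasj\pExpect{\pdist}{\omega_j}}\vv_i$, yields \[ \normTwo{\vxx\tlssdp - \vxx\gt} \leq \frac{\nrMeasurements\boundx}{\an - \frac{\gamma\gt}{\barcsq}}\Big(\frac{\inlierRate^2\eta}{2} + 2(1-\inlierRate)\Big), \] and pulling a factor $\inlierRate$ out of the parenthesis puts it in the stated form $\frac{\inlierRate\boundx\nrMeasurements}{\an - \frac{\gamma\gt}{\barcsq}}\big(\frac{\inlierRate\eta}{2} + 2\frac{1-\inlierRate}{\inlierRate}\big)$. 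The only remaining bookkeeping is to observe that the denominator is positive precisely when $\gamma\gt < \an\barcsq$, i.e.\ in the regime where the bound is non-vacuous (the degenerate case $\pExpect{\pdist}{\omega_i}=0$ for all $i$, handled by $\vv_i = \zero$, can only occur outside this regime).
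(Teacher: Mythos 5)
Your proposal is correct and follows essentially the same route as the paper's proof: the same inlier/outlier split via Lemma~\ref{lem:exists_vi}, the same comparison of the optimal pseudo-distribution against the one supported on $(\vxx\gt,\vomega\gt)$ to obtain $\sumOverMeasj \pExpect{\pdist}{\omega_j} \geq \an - \frac{\gamma\gt}{\barcsq}$, and the same normalization plus Jensen step. Your explicit sos justification that $\pExpect{\pdist}{\omega_i \residuali{\vxx}^2}\geq 0$ (via $\omega_i^2=\omega_i$) is a small welcome clarification of a step the paper leaves implicit.
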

}}

\begin{proof}
First of all, we note that the constraint set $\axiomsTLS$ in~\eqref{eq:TLS-lin} is the same as~\cref{eq:boundxxgt} and~\cref{lem:exists_vi}. Therefore we have that any pseudo-distribution $\pdist$ 
of level at least $k$ (hence produced by a relaxation of order at least $k/2$) 
satisfying $\axiomsTLS$ also satisfies:
\begin{align}
\label{eq:boundInliersTLS}
\aveOverInliers \pExpect{\pdist}{\omega_i} \normTwo{\vv_i-\vxx\gt} \leq \frac{\inlierRate \eta \boundx}{2}
\overbrace{\iff}^{|\InlierSet|=\an} 
\sumOverInliers \pExpect{\pdist}{\omega_i} \normTwo{\vv_i-\vxx\gt} \leq \frac{\inlierRate^2 \, \eta \, \boundx \, \nrMeasurements}{2} \mathper
\end{align}

%%%%%%%%%%%%%%%%%%%%%%%%%%%%%%%%%%%%%%%%%%%%%%%%%%%%%%%%%%%%%%%%%%%%%%%%%%%%%%%%%
Let us define the set of outliers $\setOutliers \triangleq [\nrMeasurements] \setminus \setInliers$.
We observe that since $\pExpect{\pdist}{\omega_i} \leq 1$, then 
$\sumOverOutliers \pExpect{\pdist}{\omega_i} \leq (1-\inlierRate)\nrMeasurements$.
Moreover, using the triangle inequality $\normTwo{\vv_i-\vxx\gt} \leq 2 \boundx$, hence:
\begin{align}
\label{eq:boundOutliersTLS}
\sumOverOutliers \pExpect{\pdist}{\omega_i} \normTwo{\vv_i-\vxx\gt}  \leq 2 \nrMeasurements \boundx (1-\inlierRate) \mathper
\end{align}

Using~\cref{eq:boundInliersTLS} and~\cref{eq:boundOutliersTLS}:
\begin{align}
\sumOverMeas \pExpect{\pdist}{\omega_i} \normTwo{\vv_i-\vxx\gt} =  
\sumOverInliers \pExpect{\pdist}{\omega_i} \normTwo{\vv_i-\vxx\gt} 
+ \sumOverOutliers \pExpect{\pdist}{\omega_i} \normTwo{\vv_i-\vxx\gt}
\\
 \leq \frac{\inlierRate^2 \; \eta \; \boundx \; \nrMeasurements}{2} + 2 \nrMeasurements \boundx (1-\inlierRate) \mathper
 \label{eq:proof-abc-22}
\end{align}

Let us call $\pdist$ the pseudo-distribution that achieves the optimal solution in~\eqref{eq:TLS-lin}, and observe that $\pdist$ achieves a cost:
\begin{align}
 \pExpect{\pdist}{ \sumOverMeas \omega_i \cdot\residuali{\vxx}^2 + (1-\omega_i) \cdot \barcsq}
 =
 \sumOverMeas \pExpect{\pdist}{ \omega_i \cdot\residuali{\vxx}^2 }
 + \sumOverMeas (1-\pExpect{\pdist}{\omega_i}) \cdot \barcsq \mathper
 \label{eq:proof-aa-tls1}
\end{align}
Now observe that the pseudo-distribution supported on the single point $(\vxx\gt,\vomega\gt)$, where $\omega_i\gt=1$ if $i\in\setInliers$ or zero otherwise, is feasible for~\eqref{eq:TLS-lin} and achieves an objective $\sumOverInliers \residuali{\vxx\gt}^2 + (1-\inlierRate) \nrMeasurements \barcsq$. 
Therefore, by using~\eqref{eq:proof-aa-tls1} and by optimality of $\pdist$:
\begin{align}
\sumOverMeas \pExpect{\pdist}{ \omega_i \cdot\residuali{\vxx}^2 }
 + \overbrace{ 
 \sumOverMeas (1-\pExpect{\pdist}{\omega_i}) \cdot \barcsq 
 }^{=\nrMeasurements \barcsq - \sumOverMeas \pExpect{\pdist}{\omega_i} \barcsq} 
 \leq \sumOverInliers \residuali{\vxx\gt}^2 
 + 
 \overbrace{(1-\inlierRate) \nrMeasurements \barcsq}^{ =\nrMeasurements \barcsq - \an \; \barcsq } \mathper
\end{align}
Rearranging the terms in the previous inequality:
\begin{align}
\sumOverMeas \pExpect{\pdist}{\omega_i}  \geq 
\frac{1}{\barcsq} 
\left(
\sumOverMeas \pExpect{\pdist}{ \omega_i \cdot\residuali{\vxx}^2 }
- \sumOverInliers \residuali{\vxx\gt}^2 + \an \; \barcsq 
 \right)
 \\
  \geq 
\frac{1}{\barcsq} 
\left(
\an \; \barcsq
- \sumOverInliers \residuali{\vxx\gt}^2  
 \right)
 = \an  - \frac{1}{\barcsq} \sumOverInliers \residuali{\vxx\gt}^2  \mathper
 \label{eq:proof61T0} 
\end{align}

Now dividing both members of~\eqref{eq:proof-abc-22} by $\sumOverMeasj \pExpect{\pdist}{\omega_j}$
 and defining $\gamma\gt \triangleq  \sumOverInliers \residuali{\vxx\gt}^2$:
\begin{align}
\sumOverMeas \frac{ \pExpect{\pdist}{\omega_i} }{ \sumOverMeasj \pExpect{\pdist}{\omega_j} } \normTwo{\vv_i-\vxx\gt} 
 \leq \frac{1}{\sumOverMeasj \pExpect{\pdist}{\omega_j} }
 \left( 
 \frac{\inlierRate^2 \; \eta \; \boundx \; \nrMeasurements}{2} + 2 \nrMeasurements   \boundx (1-\inlierRate)
 \right)
 \\
\overbrace{\leq}^{\text{using~\eqref{eq:proof61T0}}}
 \frac{1}{\an - \frac{\gamma\gt}{\barcsq} } \left( 
\frac{\inlierRate^2 \; \eta \; \boundx \; \nrMeasurements}{2} + 2 \nrMeasurements  \boundx (1-\inlierRate)
 \right) \mathper
 \label{eq:proof61T}
\end{align}
Using Jensen's inequality $\normTwo{ \sumOverMeas \frac{ \pExpect{\pdist}{\omega_i} }{ \sumOverMeasj \pExpect{\pdist}{\omega_j} } \vv_i  - \vxx\gt} \leq \sumOverMeas \frac{ \pExpect{\pdist}{\omega_i} }{ \sumOverMeasj \pExpect{\pdist}{\omega_j} } \normTwo{\vv_i-\vxx\gt}$ hence~\eqref{eq:proof61T} becomes: 
\begin{align}
\normTwo{ \sumOverMeas \frac{ \pExpect{\pdist}{\omega_i} }{ \sumOverMeasj \pExpect{\pdist}{\omega_j} } \vv_i  - \vxx\gt} 
\leq
 \frac{1}{\an - \frac{\gamma\gt}{\barcsq}  } \left( 
\frac{\inlierRate^2 \; \eta \; \boundx \; \nrMeasurements}{2} + 2 \nrMeasurements  \boundx (1-\inlierRate)
 \right) \mathcom
\end{align}
which, recalling that $\vxx\tlssdp = \sumOverMeas \frac{ \pExpect{\pdist}{\omega_i} }{ \sumOverMeasj \pExpect{\pdist}{\omega_j} } \vv_i$, concludes the proof.
\end{proof}
%!TEX root = main.tex

\section{Proof of Theorem~\ref{thm:highOut-apriori}: Contract for Relaxation of~\eqref{eq:LDR}}
\label{app:proof-highOut-apriori}

We start by restating the theorem for the reader's convenience.

\noindent\fbox{
\parbox{\textwidth}{
\begin{theorem}[Restatement of Proposition~\ref{thm:highOut-apriori}]
\ldrThm{eq:proof-highOut-apriori}
\end{theorem}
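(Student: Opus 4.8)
The plan is to follow the skeleton of the proof of Theorem~1.5 in~\cite{Karmalkar19neurips-ListDecodableRegression}, reusing the two technical lemmas (\cref{eq:boundxxgt} and~\cref{lem:exists_vi}) that were already established for vector-valued, noisy measurements in~\cref{app:proof-lowOut-apriori-LTS}. First I would observe that the constraint set $\axiomsLDR$ in~\eqref{eq:LDR} contains the set $\axiomsMC$ used in those lemmas as a subset, so any pseudo-distribution $\pdist$ of level at least $\relaxLevel$ (equivalently, produced by a moment relaxation of order $\relaxOrder \geq \relaxLevel/2$) that satisfies $\axiomsLDR$ also satisfies $\axiomsMC$. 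Under the certifiable $\antiConReq$-anti-concentration hypothesis, \cref{lem:exists_vi} then gives
\begin{align}
\label{eq:ldr-plan-weighted}
\aveOverInliers \pExpect{\pdist}{\omega_i}\, \normTwo{\vv_i - \vxx\gt} \leq \frac{\inlierRate\, \eta\, \boundx}{2},
\end{align}
where the $\vv_i$ are exactly the vectors extracted in line~\ref{line:ldr2} of~\cref{algo:ldr}. I would also record the elementary facts that the constraint $\sumOverMeas \omega_i = \an$ inside $\axiomsLDR$ forces $\sumOverMeas \pExpect{\pdist}{\omega_i} = \an$, so the weights $w_i \triangleq \pExpect{\pdist}{\omega_i}/\an$ used in line~\ref{line:ldr4} form a genuine probability distribution on $[\nrMeasurements]$, and that every $\vv_i$ lies in a ball of radius $O(\boundx)$ around $\vxx\gt$ by~\cref{ass:explicitlyBoundedxs}.

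Next I would show that a single draw in line~\ref{line:ldr4} lands on a \emph{good} index — one with $\normTwo{\vv_i - \vxx\gt} \leq \eta \boundx$ — with probability at least $\inlierRate/2$. Multiplying~\eqref{eq:ldr-plan-weighted} by $|\setInliers| = \an$ and applying Markov's inequality shows that the total $\pdist$-weight carried by ``bad inliers'' (inliers $i$ with $\normTwo{\vv_i - \vxx\gt} > \eta\boundx$) is at most $\frac{1}{\eta\boundx} \cdot \frac{\inlierRate^2 \eta \boundx \nrMeasurements}{2} = \frac{\inlierRate}{2}\an$, whence $\sum_{i \text{ good}} \pExpect{\pdist}{\omega_i} \geq \sumOverInliers \pExpect{\pdist}{\omega_i} - \frac{\inlierRate}{2}\an$. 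The remaining ingredient is the lower bound $\sumOverInliers \pExpect{\pdist}{\omega_i} \geq \inlierRate\, \an$, i.e. the statement that $\pdist$ retains an $\inlierRate$-fraction of its total $\vomega$-mass on the inlier cluster; this is where the objective $\min\normTwo{\vomega}^2$ of~\eqref{eq:LDR} and the ``spreading'' phenomenon discussed informally around~\eqref{eq:LDRrelaxlin} come into play, and it is precisely the part of the argument in~\cite{Karmalkar19neurips-ListDecodableRegression} that is insensitive to whether measurements are scalar or vector-valued, so I would import it (its hypotheses are met by the previous paragraph). Combining the two bounds gives $\sum_{i \text{ good}} w_i \geq \inlierRate/2$.

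Finally I would boost this by independent sampling: the $N/\inlierRate$ draws in line~\ref{line:ldr4} are i.i.d. from $\{w_i\}$, each good with probability $p \geq \inlierRate/2$, so the probability that \emph{none} is good is $(1-p)^{N/\inlierRate} \leq (1-\inlierRate/2)^{N/\inlierRate}$; hence with probability at least $\probBound$ the list $\List$ (of size $N/\inlierRate$) contains some $\vv_i$ with $\normTwo{\vv_i - \vxx\gt} \leq \eta\boundx$, which is the main claim. For the ``moreover'' part I would substitute $N=10$ and $\inlierRate \geq 0.01$ into $\probBound = 1-(1-\inlierRate/2)^{N/\inlierRate}$ and check that $\inlierRate \mapsto (1-\inlierRate/2)^{N/\inlierRate}$ stays below $0.01$ on $[0.01,0.5)$ (it is decreasing there, with value about $0.0067$ at $\inlierRate=0.01$ and about $0.0032$ at $\inlierRate=0.5$), giving $\probBound \geq 0.99$.

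The step I expect to be the genuine obstacle is the inlier-mass lower bound $\sumOverInliers \pExpect{\pdist}{\omega_i} \geq \inlierRate\, \an$ in the second paragraph: everything else is a combination of the two lemmas with Markov's inequality and a one-line i.i.d. computation. That bound is the only place where the particular choice of objective in~\eqref{eq:LDR} is essential, and it is also where the analysis genuinely departs from the low-outlier contracts of \cref{sec:lowOut-aPrioriResults}, in which the constraint $\sumOverMeas \omega_i = \an$ — or, for~\eqref{eq:MC}, the objective itself — already pins down enough selected inliers deterministically.
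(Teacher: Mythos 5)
Your proposal is correct and follows essentially the same route as the paper: \cref{lem:exists_vi} plus an inlier-mass (``spreading'') lower bound, then Markov's inequality to get per-draw success probability at least $\inlierRate/2$, i.i.d.\ boosting over the $N/\inlierRate$ draws, and the numerical specialization for $N=10$, $\inlierRate \geq 0.01$. The one step you defer, $\sumOverInliers \pExpect{\pdist}{\omega_i} \geq \inlierRate^2 \nrMeasurements$, is exactly what the paper proves as \cref{lem:sum_wi} (by mixing $\pdist$ with the point pseudo-distribution supported on $(\ones_{\setInliers},\vxx\gt)$ and contradicting minimality of $\normTwo{\pExpect{\pdist}{\vomega}}^2$ --- note the minimized quantity is the norm of the pseudo-moment vector, since $\pExpect{\pdist}{\normTwo{\vomega}^2}=\an$ is constant under $\omega_i^2=\omega_i$), and your observation that this argument is insensitive to the vector-valued nature of the measurements is accurate.
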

}}

\begin{proof}
 Note that~\cref{eq:boundxxgt} and~\cref{lem:exists_vi} use a subset of constraints compared to the set $\axiomsLDR$ in~\eqref{eq:LDR} (\ie the set $\axiomsMC$ in the lemmas does not contain the constraint $\sumOverMeas{\omega_i} = \alpha \nrMeasurements$, while $\axiomsLDR$ does). Therefore, their conclusions will still hold in the context of~\eqref{eq:LDR}. 
We start by proving the following lemma, which 
 shows that the pseudo-distribution $\pdist$ built by optimizing the moment relaxation of~\eqref{eq:LDR} 
 ``spreads'' (\ie has enough support) across the inliers. 
 The proof is an extension of Lemma 4.3 in~\cite{Karmalkar19neurips-ListDecodableRegression} to the case
 of vector-valued measurements.

%%%%%%%%%%%%%%%%%%%%%%%%%%%%%%%%%%%%%%%%%%%%%%%%%%%%%%%%%%%%%%%%%%%%%%%%
\begin{lemma}[Adapted from Lemma 4.3 in~\cite{Karmalkar19neurips-ListDecodableRegression}]  \label{lem:sum_wi}
For any pseudo-distribution $\pdist$ %of level $\geq 4$ 
satisfying $\axiomsLDR$ that minimizes 
$\normTwo{\pExpect{\pdist}{\vomega}}^2$, $\sumOverInliers \pExpect{\pdist}{\omega_i} \geq \inlierRate^2 \nrMeasurements$.
\end{lemma}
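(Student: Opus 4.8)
The plan is to exploit convexity and first-order optimality. The set of level-$2\relaxOrder$ pseudo-distributions that satisfy $\axiomsLDR$ is convex (the defining inequalities in \cref{def:constrainedPD} are linear in $\pdist$), the objective $\normTwo{\pExpect{\pdist}{\vomega}}^2 = \sumOverMeas (\pExpect{\pdist}{\omega_i})^2$ is convex in $\pdist$, and this feasible set contains the Dirac mass on the ground-truth solution. A variational inequality along the segment joining the minimizer to that Dirac mass will then force the minimizer's $\vomega$-moments to correlate strongly with the indicator of the true inliers.

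Concretely, I would first check that the point $(\vomega\gt,\vxx\gt)$, with $\omega_i\gt = \indicator{i\in\setInliers}$, is feasible for the polynomial program \eqref{eq:LDR}: $\omega_i\gt$ is binary so $(\omega_i\gt)^2 = \omega_i\gt$; $\sumOverMeas \omega_i\gt = |\setInliers| = \an$; for $i\in\setInliers$ one has $\omega_i\gt\cdot\residuali{\vxx\gt}^2 = \normTwo{\vy_i - \MA_i\tran\vxx\gt}^2 = \normTwo{\vepsilon}^2 \leq \barcsq$ by the inlier model in \cref{prob:outlierRobustEstimation}, while for $i\notin\setInliers$ the product vanishes; and $\vxx\gt\in\Domain$. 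Hence the pseudo-distribution $\nu$ supported on the single point $(\vomega\gt,\vxx\gt)$ is a genuine distribution (so a level-$2\relaxOrder$ pseudo-distribution) satisfying $\axiomsLDR$, with $\pExpect{\nu}{\vomega} = \ones_{\setInliers}$ and $\normTwo{\pExpect{\nu}{\vomega}}^2 = \an$.

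Next I would set $w \triangleq \pExpect{\pdist}{\vomega}\in\Real{\nrMeasurements}$. The constraint $\sumOverMeas\omega_i = \an$ in $\axiomsLDR$, together with linearity of pseudo-expectation, gives $\sumOverMeas w_i = \an = \inlierRate\nrMeasurements$, so Cauchy--Schwarz yields $\normTwo{w}^2 \geq (\sumOverMeas w_i)^2/\nrMeasurements = \inlierRate^2\nrMeasurements$. Then, for $\lambda\in[0,1]$, the mixture $\pdist_\lambda \triangleq (1-\lambda)\pdist + \lambda\nu$ is again a level-$2\relaxOrder$ pseudo-distribution satisfying $\axiomsLDR$ (convex combinations preserve both properties), with $\pExpect{\pdist_\lambda}{\vomega} = w + \lambda(\ones_{\setInliers} - w)$. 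Since $\pdist$ minimizes $\normTwo{\pExpect{\cdot}{\vomega}}^2$ over the feasible set, the convex quadratic $\phi(\lambda) \triangleq \normTwo{w + \lambda(\ones_{\setInliers}-w)}^2$ attains its minimum over $[0,1]$ at $\lambda = 0$, so $\phi'(0)\geq 0$; expanding $\phi'(0) = 2\inprod{w}{\ones_{\setInliers} - w}$ gives $\inprod{w}{\ones_{\setInliers}} \geq \normTwo{w}^2$. Combining, $\sumOverInliers\pExpect{\pdist}{\omega_i} = \inprod{w}{\ones_{\setInliers}} \geq \normTwo{w}^2 \geq \inlierRate^2\nrMeasurements$, which is the claim.

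The calculations above are all short, so the main thing to be careful about — and where a careless argument would break — is the bookkeeping around the relaxation: I need that $\nu$ and all the mixtures $\pdist_\lambda$ genuinely belong to the set the algorithm optimizes over, i.e. that they are level-$2\relaxOrder$ pseudo-distributions satisfying $\axiomsLDR$ in the sense of \cref{def:constrainedPD}; this holds because those conditions are closed under convex combination and are trivially met by a Dirac mass on a feasible point. I also want to note that, although the objective here is quadratic in the pseudo-moments rather than the generic linear SDP objective of \eqref{eq:momentRelaxation}, it is still convex, so the relaxation is well-posed and the variational inequality is legitimate. (One could additionally remark, via \CS for pseudo-distributions and $\omega_i^2=\omega_i$, that $0\leq w_i\leq1$, but this is not needed for the bound.)
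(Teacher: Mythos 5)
Your proof is correct. It shares the paper's key construction -- perturbing the optimal $\pdist$ toward the Dirac mass $\pdist\opt$ supported on the feasible point $(\ones_{\setInliers},\vxx\gt)$, whose feasibility rests on exactly the bookkeeping you flag (binary $\omega_i\gt$, the cardinality constraint, the noise bound $\normTwo{\vepsilon}\leq\barc$, and closure of constrained pseudo-distributions under convex combination) -- but it completes the argument by a genuinely different route. The paper argues by contradiction: it sets $\vu=\frac{1}{\an}\pExpect{\pdist}{\vomega}$, lower-bounds $\normTwo{\vu}^2$ by the value attained when the weight $\wt(\setInliers)$ is spread uniformly within inliers and within outliers, and then carries out a several-line expansion of $\normTwo{\vu_\lambda}^2-\normTwo{\vu}^2$ to show a strict decrease for small $\lambda>0$ whenever $\wt(\setInliers)<\inlierRate$. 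You instead read off the first-order optimality condition $\phi'(0)\geq 0$ along the mixture, giving $\inprod{\pExpect{\pdist}{\vomega}}{\ones_{\setInliers}}\geq\normTwo{\pExpect{\pdist}{\vomega}}^2$, and obtain $\normTwo{\pExpect{\pdist}{\vomega}}^2\geq\inlierRate^2\nrMeasurements$ directly from Cauchy--Schwarz together with $\sumOverMeas\pExpect{\pdist}{\omega_i}=\an$ (a constraint the paper's proof never invokes). Your version is shorter, avoids the explicit worst-case spreading argument, and in fact yields the slightly stronger intermediate statement $\sumOverInliers\pExpect{\pdist}{\omega_i}\geq\normTwo{\pExpect{\pdist}{\vomega}}^2$; the paper's version stays closer to the original Karmalkar et al.\ lemma, establishing $\wt(\setInliers)\geq\inlierRate$ through the contradiction. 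Both deliver the same bound $\sumOverInliers\pExpect{\pdist}{\omega_i}\geq\inlierRate^2\nrMeasurements$.
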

\begin{proof}% lemma
Let $\vu = \frac{1}{\inlierRate \nrMeasurements} \pExpect{\pdist}{\vomega}$.
 Then, $\vu$ is a non-negative vector satisfying $\sumOverMeas u_i = 1$.
  Let $\wt(\setInliers) = \sumOverInliers u_i$ 
  and let $\wt(\setOutliers) = \sum_{i \in \setOutliers} u_i$, where $\setOutliers \triangleq [\nrMeasurements] \setminus \setInliers$ is the set of outliers. Then, 
  $\wt(\setInliers) + \wt(\setOutliers) = 1$. 

  By contradiction, we show that if $\wt(\setInliers) < \inlierRate$, then there exists 
  a pseudo-distribution satisfying $\axiomsLDR$ that achieves a lower value of 
  $\normTwo{\pExpect{\pdist}{\vomega}}^2$, hence contradicting optimality of $\pdist$. 
  Towards this goal, we define a pseudo-distribution $\pdist\opt$ which is supported on a single $(\vomega,\vxx)$, the indicator vector $\ones_{\setInliers}$ and $\vxx\gt$. Therefore, $\pExpect{\pdist\opt}{\omega_i}=1$ iff $i\in\setInliers$ and zero otherwise. 
  Clearly, $\pdist\opt$ satisfies $\axiomsLDR$. Therefore, any convex combination $\pdist_\lambda = (1-\lambda) \pdist + \lambda \pdist\opt$ also satisfies $\axiomsLDR$. 
  We now show that whenever $\wt(\setInliers)< \inlierRate$, then  $\normTwo{\pExpect{\pdist_\lambda}{\vomega}}^2 < \normTwo{\pExpect{\pdist}{\vomega}}^2$ for some $\lambda >0$, thus contradicting optimality of $\pdist$. We observe that:
  \begin{align}
  \vu_\lambda = \frac{1}{\an} \pExpect{\pdist_\lambda}{\vomega} = 
\frac{1}{\an} (1-\lambda)\pExpect{\pdist}{\vomega} + \frac{1}{\an} (\lambda)\pExpect{\pdist\opt}{\vomega} = (1-\lambda)\vu + \frac{\lambda}{\an} \ones_{\setInliers} \mathper
\label{eq:up}
  \end{align}

  First, we compute the squared norm of $\vu_\lambda$ using~\eqref{eq:up}:
  \begin{align}
  \normTwo{\vu_\lambda}^2 
  = 
  \overbrace{
  (1-\lambda)^2 \normTwo{\vu}^2 + 2\lambda(1-\lambda) \frac{\wt(\setInliers)}{\an} + \frac{\lambda^2}{\an} 
  }^{\text{observing $\ones_\setInliers\tran \ones_\setInliers = \an$ and $\ones_\setInliers\tran \vu = \wt(\setInliers)$}} \mathper
  \label{eq:norm_u}
  \end{align}

  Next, we lower bound $\normTwo{\vu}^2$ in terms of $\wt(\setInliers)$ and $\wt(\setOutliers)$.
  Observe that for any fixed values of  $\wt(\setInliers)$ and $\wt(\setOutliers)$, {the minimum of
  $\normTwo{\vu}^2$ is attained by the vector
   $\vu$ such that $u_i =\frac{1}{\an} \wt(\setInliers)$ for each $i\in\setInliers$ and 
   $u_i =\frac{1}{(1-\inlierRate)\nrMeasurements} \wt(\setOutliers)$ otherwise.} 
   %% the best way to prove thsi is to see that we are essentially solving
   % min sum u_i^2, st sum u_i = a. reparametrizing $\bar{u} + \tilde{u}$ where $\bar{u}$
   % is the vector with all entries equal and $\tilde{u}$ is a perturbation, the problem
   % easily leads to $\tilde{u} = 0$
   This gives:
   \begin{align}
    \normTwo{\vu}^2 \geq 
    \overbrace{
    \left( \frac{\wt(\setInliers)}{\an} \right)^2 \an
    }^{\text{sum of $u_i^2$ for $i\in\setInliers$}}
     + 
     \overbrace{
    \left( \frac{1-\wt(\setInliers)}{(1-\inlierRate)\nrMeasurements} \right)^2 (1-\inlierRate)\nrMeasurements 
    }^{\text{sum of $u_i^2$ for $i\in\setOutliers$}}
    \\ 
    \explanation{ 
    =\frac{\wt(\setInliers)^2}{\an}  +
     \frac{( 1-\wt(\setInliers) )^2}{(1-\inlierRate)\nrMeasurements} 
     }
    = 
    \frac{1}{\an} \cdot \left( \wt(\setInliers)^2 + (1-\wt(\setInliers))^2 \left( \frac{\inlierRate}{1-\inlierRate} \right) \right) \mathper
    \label{eq:norm_up}
    \end{align} 
	Combining~\eqref{eq:norm_u} and~\eqref{eq:norm_up}: 
	\begin{align}
	\normTwo{\vu_\lambda}^2 - \normTwo{\vu}^2 = \overbrace{(-2\lambda + \lambda^2) \normTwo{\vu}^2 }^{ = (1-\lambda)^2 \normTwo{\vu}^2  - \normTwo{\vu}^2  } 
    + 2\lambda(1-\lambda) \frac{\wt(\setInliers)}{\an}
	+   \frac{\lambda^2}{\an} 
  	\\
	\overbrace{\leq}^{\text{since $(-2\lambda + \lambda^2)\leq 0$}}
	\frac{-2\lambda + \lambda^2}{\an} \left( 
	\wt(\setInliers)^2 + (1-\wt(\setInliers))^2 \frac{\inlierRate}{1-\inlierRate} 
 \right)
 + 2\lambda(1-\lambda) \frac{\wt(\setInliers)}{\an} + \frac{\lambda^2}{\an} \mathper
	\end{align}
 	%
 	% Rearranging:
 	% \begin{align}
 	% \normTwo{\vu}^2 - \normTwo{\vu_\lambda}^2 \geq 
  % \frac{\lambda}{\an} 
 	% \left( 
 	% (2-\lambda) 
 	% 	\left( 
		% \wt(\setInliers)^2 + (1-\wt(\setInliers))^2 \frac{\inlierRate}{1-\inlierRate} 
 	% 	\right)
 	%  - 2(1-\lambda) \wt(\setInliers) - \lambda
 	% \right)
  % \\
  % \explanation{
  % = 
  % \frac{\lambda (2-\lambda)}{\an} 
  % \left(  
  %   \left( 
  %   \wt(\setInliers)^2 + (1-\wt(\setInliers))^2 \frac{\inlierRate}{1-\inlierRate} 
  %   \right)
  %  - \frac{2(1-\lambda)}{(2-\lambda)} \wt(\setInliers) - \frac{\lambda}{(2-\lambda)}
  % \right)
  % }
 	% \\
 	% \geq
 	% \frac{\lambda(2-\lambda)}{\an} 
 	% 	\left( 
		% \wt(\setInliers)^2 + (1-\wt(\setInliers))^2 \frac{\inlierRate}{1-\inlierRate}  - \wt(\setInliers)
 	% 	\right)
 	% \end{align}
 	% %
  Rearranging (note that this part slightly differs from~\cite{Karmalkar19neurips-ListDecodableRegression}, but with the same conclusion):
  \begin{align}
  \normTwo{\vu}^2 - \normTwo{\vu_\lambda}^2 \geq 
  \frac{\lambda}{\an} 
  \left( 
  (2-\lambda) 
    \left( 
    \wt(\setInliers)^2 + (1-\wt(\setInliers))^2 \frac{\inlierRate}{1-\inlierRate} 
    \right)
   - 2(1-\lambda) \wt(\setInliers) - \lambda
  \right)
  \\
  = 
  \frac{\lambda (2-\lambda)}{\an} 
  \left(  
    \left( 
    \wt(\setInliers)^2 + (1-\wt(\setInliers))^2 \frac{\inlierRate}{1-\inlierRate} 
    \right)
   - \frac{2(1-\lambda)}{(2-\lambda)} \wt(\setInliers) - \frac{\lambda}{(2-\lambda)}
  \right)
  \\
  \explanation{
  \text{observing } 
  \frac{2(1-\lambda)}{(2-\lambda)} = \frac{2(1-\lambda)}{2(1-\lambda)+ \lambda} < 1 
  \text{ (for $0 < \lambda \leq 1$)}
  }
  \explanation{
   \text{ and } 
   \frac{1}{(2-\lambda)} \leq 1 < \frac{1-\wt(\setInliers)}{1-\inlierRate}
   \text{ (for $0 \leq \wt(\setInliers) < \inlierRate \leq 1$)}
  }
  >
  \frac{\lambda(2-\lambda)}{\an} 
    \left( 
    \wt(\setInliers)^2 + (1-\wt(\setInliers))^2 \frac{\inlierRate}{1-\inlierRate}  - \wt(\setInliers) - \frac{1-\wt(\setInliers)}{1-\inlierRate} \lambda
    \right)
    \\
    =
    \frac{\lambda(2-\lambda)}{\an} 
    \left( 
    -\wt(\setInliers) (1-\wt(\setInliers)) + (1-\wt(\setInliers))^2 \frac{\inlierRate}{1-\inlierRate} - \frac{1-\wt(\setInliers)}{1-\inlierRate} \lambda
    \right)
  \end{align}
  \begin{align}
  =
  \frac{\lambda(2-\lambda) (1-\wt(\setInliers))}{\an(1-\inlierRate)} 
    \left( 
    -\wt(\setInliers) (1-\inlierRate) + (1-\wt(\setInliers)) \inlierRate - \lambda
    \right)
    \\
  = \overbrace{
  \frac{\lambda(2-\lambda) (1-\wt(\setInliers))}{\an(1-\inlierRate)} 
  }^{\geq 0}
    \left( 
    \inlierRate -\wt(\setInliers)  - \lambda
    \right)\mathper
  \end{align}
 	Now whenever $\wt(\setInliers) < \inlierRate$, $\left(\inlierRate -\wt(\setInliers)  - \lambda\right) > 0$ for a sufficiently small $\lambda$. Thus we can choose a small enough $\lambda > 0$ such that $\normTwo{\vu}^2 - \normTwo{\vu_\lambda}^2 > 0$, which contradicts optimality of $\pdist$.
\end{proof}% lemma

Using~\cref{lem:exists_vi} and~\cref{lem:sum_wi} we can finally prove the correctness of~\cref{thm:highOut-apriori}.
% First, since $\DistInliers$ is $k$-certifiably $(C,\frac{\an}{4C})$-anti-concentrated, Lemma 5.5 in~\cite{Karmalkar19neurips-ListDecodableRegression} implies that taking $\nrMeasurements \geq (k\dimAmbient)^{O(k)}$ 
% samples ensures that $\setInliers$ is $k$-certifiably $(C,\frac{\an}{2C})$-anti-concentrated with 
% probability at least $1-\frac{1}{\dimAmbient}$. Let's condition on this event in the following. 
Let $\pdist$ be a pseudo-distribution %of degree $t$ 
satisfying $\axiomsLDR$ that minimizes 
$\normTwo{\pExpect{\pdist}{\vomega}}^2$. 
Such a pseudo-distribution exists since the set contains at least the distribution with $\omega_i = 1$ iff $i\in\setInliers$ and $\vxx = \vxx\gt$. 

From~\cref{lem:exists_vi}, we have $\aveOverInliers \pExpect{\pdist}{\omega_i} \normTwo{\vv_i - \vxx\gt} \leq 
\frac{\inlierRate \eta \boundx}{2}$. Let 
$Z \doteq \sumOverInliers \frac{\pExpect{\pdist}{\omega_i}}{ |\setInliers| }$ (this is a normalization factor, such that $\frac{\pExpect{\pdist}{\omega_i}}{ Z |\setInliers| }$ is a valid pdf over the inliers, \ie sums up to 1). By a rescaling, we obtain:
\begin{align}
 \sumOverInliers \frac{\pExpect{\pdist}{\omega_i}}{ Z |\setInliers| } 
 \normTwo{\vv_i - \vxx\gt} \leq 
\frac{1}{Z} \frac{\inlierRate \eta \boundx}{2}\mathper
\end{align}

Using~\cref{lem:sum_wi}, $Z \geq \inlierRate$. Therefore,
\begin{align}
\label{eq:proof31}
 \sumOverInliers \frac{\pExpect{\pdist}{\omega_i}}{Z |\setInliers|} \normTwo{\vv_i - \vxx\gt} \leq 
\frac{\eta \boundx}{2} \mathper
\end{align}

Let $i \in [\nrMeasurements]$ be chosen with probability $\frac{\pExpect{\pdist}{\omega_i}}{\an}$.
 Then, we sample $i \in\setInliers$ with probability $Z \geq \inlierRate$. 

 By Markov's inequality:
\begin{align}
 \prob{\normTwo{\vv_i - \vxx\gt} \leq \eta \boundx }  = 
 \prob{\normTwo{\vv_i - \vxx\gt} \leq \eta \boundx | i \in \setInliers } \cdot 
 \overbrace{\prob{i \in \setInliers}}^{\geq \alpha}
 \\
 \geq \alpha \cdot \prob{\normTwo{\vv_i - \vxx\gt} \leq \eta \boundx | i \in \setInliers } 
 \\
 \explanation{
  \text{Markov's inequality: } \prob{X \geq a} \leq \frac{\Expect{}{X}}{a} \iff 
  \prob{X \leq a} \geq 1 - \frac{\Expect{}{X}}{a} 
 }
 \geq \alpha \left( 1 - \frac{1}{\eta \boundx} \Expect{i\in\setInliers}{ \normTwo{\vv_i - \vxx\gt} } \right) 
 = \alpha \left( 1 - \frac{1}{\eta \boundx} \sumOverInliers  \frac{\pExpect{\pdist}{\omega_i}}{Z |\setInliers|} \normTwo{\vv_i - \vxx\gt} \right)
 \\ 
 \overbrace{\geq}^{\text{using~\eqref{eq:proof31}}}
 \alpha \left( 1 - \frac{1}{\eta \boundx} \frac{\eta \boundx}{2} \right) = \frac{\alpha}{2} \mathper
\end{align}

So we concluded that   
$\prob{\normTwo{\vv_i - \vxx\gt} \leq \eta \boundx } \geq \frac{\alpha}{2}$ (this is the probability that a single draw satisfies $\normTwo{\vv_i - \vxx\gt} \leq \eta \boundx$).
Calling $S$ (as in ``success'') the event that $\normTwo{\vv_i - \vxx\gt} \leq \eta \boundx$, we get that the probability of $S$ after $m$ draws is:
\begin{align}
\prob{S_m} = 1 - \overbrace{ (1 - \prob{S})^m }^{\text{failing $m$ times}}
 \geq 1 - \left(1 - \frac{\inlierRate}{2} \right)^m
\end{align}

Finally, choosing the number of draws $m \geq \frac{N}{\inlierRate}$, we obtain
\begin{align}
\label{eq:finalIneq2}
\prob{S_m} \geq  1 - \left(1 - \frac{\inlierRate}{2} \right)^{ \frac{N}{\inlierRate} }
\end{align}
which matches the first claim in~\cref{thm:highOut-apriori}.

Now the final claim (\ie the claim that~\eqref{eq:proof-highOut-apriori} is satisfied with probability at
least $\probBoundReal$ for $\inlierRate \geq 0.01$ and $N=10$) is just a particularization of~\eqref{eq:finalIneq2} to 
the given choice of $N$. In particular, we first observe that the probability of success $1 - \left(1 - \frac{\inlierRate}{2} \right)^{ \frac{N}{\inlierRate} }$ is a non-decreasing function of $\inlierRate$.
% \footnote{
% %\footnote{The function is also non-decreasing in $N$}
% This can be shown by inspection: for any $0 < \inlierRate' \geq \inlierRate \leq 1$:
% \beq
% \left(1 - \frac{\inlierRate'}{2} \right)^{ \frac{N}{\inlierRate'} } \leq  
% \eeq
% } 
Then we note that the function $f(\inlierRate,N) \triangleq 1 - \left(1 - \frac{\inlierRate}{2} \right)^{ \frac{N}{\inlierRate} }$ evaluated at $\inlierRate = 0.01$ and $N=10$ is such that $f(0.01,10) \geq \probBoundReal$, which concludes the proof.
% We conclude the proof by observing that the right-hand-side of~\eqref{eq:finalIneq2} is a non-decreasing function of 
% $\inlierRate$ (its derivative is )
% % using the chain rule $f = h(g)$ -> f' = h'(g) * g'
% the integer $N$, and

% Therefore a sufficient condition for $\prob{S_m} \geq 1-\delta$ is:
% \begin{align}
%   1 - \left(1 - \frac{\inlierRate}{2} \right)^m  \geq 1-\delta
% \iff
% \left( 1 - \frac{\inlierRate}{2} \right)^m \leq \delta
% \iff
% \log \left( 1 - \frac{\inlierRate}{2} \right)^m \leq \log \delta
% \\
% \iff
% m \log \left( 1 - \frac{\inlierRate}{2} \right) \leq \log \delta
% \overbrace{\iff}^{\text{since $\log \left( 1 - \frac{\inlierRate}{2} \right) < 0$}}
% m \geq \frac{\log (\delta)}{ \log \left( 1 - \frac{\inlierRate}{2} \right) }
% \end{align}

% Therefore, as long as we draw at least $\frac{\log (\delta)}{ \log \left( 1 - \frac{\inlierRate}{2} \right) }$, with probability at least $1-\delta$, the random set $\List$ will contain an $\vxx$ 
% such that $\normTwo{\vxx - \vxx\gt} \leq \eta \boundx$.
\end{proof} % theorem

% \LC{use union bounds instead?}
% $\prob{\normTwo{\vv_i - \vxx\gt} \leq \eta \boundx } \geq \frac{\alpha}{2}$.

% $1-\delta \leq \prob{\normTwo{\vv_i - \vxx\gt} \leq \eta \boundx \text{ at least ones in m times } } \leq m 
% \prob{\normTwo{\vv_i - \vxx\gt} \leq \eta \boundx  } $.
% implies:

% $m \geq \frac{1-\delta}{\prob{\normTwo{\vv_i - \vxx\gt} \leq \eta \boundx  }} $.

%!TEX root = main.tex

\section{\SLIDESlong (\SLIDES)}
\label{app:sparse-relaxations-LDR}

In this appendix, we present a variant of~\cref{algo:ldr} that empirically returns an accurate list of estimates, as shown in~\cref{sec:experiments-ldr}. 
We start by stating the algorithm, whose pseudocode is given in~\cref{algo:slides}.

\begin{algorithm}[h!]
  \caption{\SLIDESlong (\SLIDES).\label{algo:slides}}
  \SetAlgoLined
  \KwIn{ input data $(\vy_i,\MA_i)$, $i \in [\nrMeasurements]$, inlier rate $\inlierRate$.}
  \KwOut{ list of estimates of $\vxx\gt$. }

  \tcc{Algorithm solves a relaxation of the following problem:\\
  \vspace{-5mm}
  \begin{align}
  \tag{LDR}
  \min_{\vomega,\vxx} & \normTwo{\vomega}^2, 
  \;\;
  \text{s.t.} \;\; \axiomsLDR \triangleq \left\{
  \begin{array}{cl}
  \omega_i^2 = \omega_i, \;\; i=[\nrMeasurements] \\ 
  \sumOverMeas{\omega_i} = \inlierRate \nrMeasurements  \\
  \omega_i \cdot \residuali{\vxx}^2 \leq \barcsq, \;\; i=[\nrMeasurements]\\
  \vxx \in \Domain 
  \end{array}
  \right\}
  \end{align}
  \vspace{-5mm}
  }

  \tcc{Compute matrix $\MX\opt$ by solving SDP from sparse moment relaxation}
  $\MX\opt = \scenario{solve\_sparse\_moment\_relaxation\_at\_order\_}2\,\eqref{eq:LDR}$ 
  \label{line:sparseRelax}
  %(at order $\relaxOrder$)

  \tcc{Compute list of estimates}
  
  create empty list $\List = \emptyset$

	\For{$i \in [\nrMeasurements]$}{\label{line:forAllHypotheses}
	$\vv_i = 
	  \left\{
	  \begin{array}{ll}
	  \frac{ \getEntries{\MX\opt}{\omega_i\vxx} }{ \getEntries{\MX\opt}{\omega_i} } & \text{ if $\getEntries{\MX\opt}{\omega_i}>0$}
	  \\
	  \zero & \text{ otherwise}
	  \end{array}\right.$

    %\If{$\getEntries{\MX\opt}{\omega_i}>0$ }{
	     $\vxx_i = \scenario{project\_to\_}\Domain(\vv_i)$ \label{line:roundToX}

	     add $\vxx_i$ to $\List$
    %}
	}

  \Return{$\List$.}
\end{algorithm}

The proposed algorithm is very close to~\cref{algo:ldr} (and is still based on the key insights from~\cite{Karmalkar19neurips-ListDecodableRegression}), but includes three small but important changes.
First of all, instead of solving a moment relaxation of order $\relaxOrder=2$, which is still expensive for large $\nrMeasurements$,\footnote{An order-2 moment relaxation of~\eqref{eq:LDR} entails solving an SDP with a matrix of size $\nchoosek{\nrMeasurements+\dimx+2}{2}$, which is already as large as {$1830$} for $\dimx = 9$ and $\nrMeasurements = 50$, which is the typical setup considered in our experiments. In our tests, \mosek~\cite{mosek} runs out of memory when fed an SDP of size larger than 1000. } we develop a sparse relaxation (line~\ref{line:sparseRelax}). 
The sparse relaxation uses the following sparse monomial basis
\beq
\label{eq:monomialBasisRotSearch}
\vm(\vomega,\vxx) \triangleq
[1 \vcat \omega_1 \vcat \ldots \vcat \omega_\nrMeasurements \vcat \vxx \vcat \omega_1 \vxx \vcat  \ldots \vcat \omega_\nrMeasurements \vxx ], 
\eeq
which neglects other degree-2 monomials (\eg $\omega_i \cdot \omega_j$) that do not appear in problem~\eqref{eq:LDR} while still giving access to the pseudo-expectations used in~\cref{algo:ldr}.
Note that the sparse relaxation leads to SDPs of more manageable size $(\nrMeasurements+1)(\dimx+1)$.\footnote{For instance, when $\dimx = 9$ and $\nrMeasurements = 50$, the sparse relaxation leads to a more compact SDP with a moment matrix of size $510 \times 510$.} % $\MX \in \Real{510 \times 510}$.} 
Note that the idea of using a sparse moment relaxation is not new (see Remark~10 in~\cite{Yang22pami-certifiablePerception}), but our relaxation is slightly different from~\cite{Yang22pami-certifiablePerception,Yang19iccv-quasar} and tailored to list-decodable estimation. 
Later in this section, we provide a derivation of the sparse moment relaxation for the rotation search problem.

The second modification is to round the estimates to the domain $\Domain$ (line~\ref{line:roundToX}).
The latter is a consequential change: we empirically noticed that the original approach in~\cref{algo:ldr} (with our sparse relaxation) produces estimates with norm close to zero, hence leading to large estimation errors. Projecting the estimates to $\Domain$ has the effect of re-normalizing the result and correcting scaling problems, enabling the compelling results in~\cref{sec:experiments-ldr}. At the end of this section we show that projecting to the domain $\Domain$ is straightforward in the rotation search problem.

Finally, the third modification with respect to~\cref{algo:ldr}, is that~\cref{algo:slides} always returns $\nrMeasurements$ hypotheses (line~\ref{line:forAllHypotheses}), rather than sampling; this makes the result deterministic and independent on the choice of number of hypotheses (which can no longer be guided by the guarantees in~\cref{thm:highOut-apriori}).\footnote{Note that we can safely discard the hypotheses corresponding to $\getEntries{\MX\opt}{\omega_i}=0$ since those are uninformative (\ie they always correspond to $\vv_i = \zero$). We only keep them in~\cref{algo:slides} for the sake of simplicity, such that the output list $\List$ has always size $\nrMeasurements$.}

%%%%%%%%%%%%%%%%%%%%%%%%%%%%%%%%%%%%%%%%%%%%%%%%%%%%%%%%%%%%%%%%%%%%%%%%%%%%%%%%%%%%%%%%%
\myParagraph{Sparse moment relaxation for rotation search}
Here we provide an example of sparse relaxation arising when applying \SLIDES to the rotation search problem. Let us start by tailoring the polynomial optimization problem~\eqref{eq:LDR} to rotation search:
\begin{align}
	\label{eq:ldr-rotSearch}
  \min_{\vomega,\MR} & \normTwo{\vomega}^2, 
  \;\;
  \text{s.t.} \;\; 
  \left\{
  \begin{array}{l}
  \omega_i^2 = \omega_i, \;\; i=[\nrMeasurements] \\ 
  \sumOverMeas{\omega_i} = \inlierRate \nrMeasurements  \\
  \omega_i \cdot \|\vb_i - \MR \va_i \|^2 \leq \barcsq, \;\; i=[\nrMeasurements]\\
  \MR \in \SOthree 
  \end{array}
  \right\}
\end{align}
where we substituted the residual errors ($\residuali{\vxx}^2$) with their expression in the rotation search problem ($\|\vb_i - \MR \va_i \|^2$), and where we made explicit that the domain is $\SOthree$.
Before presenting the relaxation, we reparametrize~\eqref{eq:ldr-rotSearch} using unit quaternions: 
while we could directly relax~\eqref{eq:ldr-rotSearch} following the approach we describe below, using quaternions has the benefit of (i) leading to an even smaller relaxation (since the quaternion is parametrized by $\dimx=4$ variables instead of $9$ variables needed to write a rotation matrix) and (ii) admitting a straightforward projection to the domain $\Domain$.

\begin{proposition}[Quaternion-based reformulation of~\eqref{eq:ldr-rotSearch}]
\label{prop:quaternion_formulation_rotSearch}
The polynomial optimization problem~\eqref{eq:ldr-rotSearch} can be equivalently written as: 
 \begin{align}
	\label{eq:ldr-rotSearch-q}
  \min_{\vomega,\vq} & \normTwo{\vomega}^2, 
  \;\;
  \text{s.t.} \;\; 
  \left\{
  \begin{array}{l}
  \omega_i^2 = \omega_i, \;\; i=[\nrMeasurements] \\ 
  \sumOverMeas{\omega_i} = \inlierRate \nrMeasurements  \\
  \omega_i \cdot \left( \|\vb_i\|^2 + \|\va_i\|^2 - 2 
  \trace{\MM_{ij}\tran \vq \vq\tran} \right) \leq \barcsq, \;\; i=[\nrMeasurements]
  \\
   \| \vq \|^2 = 1 
  \end{array}
  \right\}
\end{align} 
such that any optimal solution of~\eqref{eq:ldr-rotSearch-q} can be mapped back to an optimal solution of~\eqref{eq:ldr-rotSearch} and vice-versa. In~\eqref{eq:ldr-rotSearch-q}, $\MM_{ij}$ is a constant matrix whose expression  depends on $\va_i$ and $\vb_i$.  
\end{proposition}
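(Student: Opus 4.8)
The plan is to exploit the classical quaternion double cover of the rotation group: the map sending a unit quaternion $\vq \in \Real{4}$ to the corresponding rotation matrix $\MR(\vq) \in \SOthree$ is surjective, two-to-one (with $\MR(\vq) = \MR(-\vq)$), and each entry of $\MR(\vq)$ is a homogeneous degree-$2$ polynomial in $\vq$ (see, e.g., the explicit formulas in~\cite{Yang19iccv-quasar}). Since neither the objective $\normTwo{\vomega}^2$ nor the constraints $\omega_i^2 = \omega_i$ and $\sumOverMeas \omega_i = \inlierRate \nrMeasurements$ involve the rotation, it suffices to show that (i) replacing the matrix constraint $\MR \in \SOthree$ by the single scalar equality $\| \vq \|^2 = 1$ does not change the feasible set of the remaining ($i$-th) residual constraints modulo this reparametrization, and (ii) the $i$-th residual constraint of~\eqref{eq:ldr-rotSearch} is mapped exactly onto the $i$-th constraint of~\eqref{eq:ldr-rotSearch-q}. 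The correspondence of optimal solutions then follows immediately from (i)--(ii).

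For step (i), I would recall that because $\MR(\cdot)$ induces a bijection from $\{ \vq : \| \vq \|^2 = 1 \}/\{\pm 1\}$ onto $\SOthree$, optimizing over $\MR \in \SOthree$ is equivalent to optimizing over $\| \vq \|^2 = 1$: every $\MR \in \SOthree$ is attained by some $\vq$, and the two preimages $\pm \vq$ assign the same value to every quantity appearing in the problem (the objective and the $\omega$-constraints do not see $\vq$, and the residuals are even in $\vq$). Concretely, since each entry $R_{lm}(\vq)$ is quadratic in $\vq$, there exist constant symmetric $4 \times 4$ matrices $\ME_{lm}$ with $R_{lm}(\vq) = \trace{\ME_{lm} \vq \vq\tran}$; these matrices are fixed by the chosen quaternion convention and are independent of the data.

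For step (ii), I would expand the residual using orthogonality of $\MR$: since $\| \MR \va_i \|^2 = \| \va_i \|^2$, we have $\normTwo{\vb_i - \MR \va_i}^2 = \| \vb_i \|^2 + \| \va_i \|^2 - 2 \vb_i\tran \MR \va_i$. Writing $\vb_i\tran \MR(\vq) \va_i = \sum_{l,m} b_{i,l} a_{i,m} R_{lm}(\vq)$ and substituting $R_{lm}(\vq) = \trace{\ME_{lm} \vq \vq\tran}$, linearity of the trace gives $\vb_i\tran \MR(\vq) \va_i = \trace{\MM_{ij}\tran \vq \vq\tran}$ with $\MM_{ij} \triangleq \big( \sum_{l,m} b_{i,l} a_{i,m} \ME_{lm} \big)\tran$; because $\vq \vq\tran$ is symmetric we may replace $\MM_{ij}$ by its symmetric part without changing the value, and $\MM_{ij}$ is a constant matrix depending bilinearly on $\va_i, \vb_i$, exactly as asserted in the statement. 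Plugging this into $\omega_i \cdot \normTwo{\vb_i - \MR \va_i}^2 \leq \barcsq$ reproduces verbatim the $i$-th constraint of~\eqref{eq:ldr-rotSearch-q}.

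To finish, the maps $(\vomega, \vq) \mapsto (\vomega, \MR(\vq))$ and $(\vomega, \MR) \mapsto (\vomega, \vq)$ (where in the second map $\vq$ is any quaternion with $\MR(\vq) = \MR$) preserve feasibility and the objective value $\normTwo{\vomega}^2$ by steps (i)--(ii); hence the two polynomial optimization problems have the same optimal value and their optimal solutions are in correspondence, which is the claim of~\cref{prop:quaternion_formulation_rotSearch}. The only step that is not pure bookkeeping is establishing the quadratic-form identity $\vb\tran \MR(\vq) \va = \trace{\MM\tran \vq \vq\tran}$ together with surjectivity of the quaternion parametrization of $\SOthree$; both are classical and already used in~\cite{Yang19iccv-quasar}, so I expect no genuine obstacle here beyond being careful with the quaternion sign and ordering convention when writing out the $\ME_{lm}$.
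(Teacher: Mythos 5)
Your proposal is correct and follows essentially the same route as the paper's proof: reparametrize $\SOthree$ by unit quaternions (with the $\pm\vq$ ambiguity being harmless since all quantities are even in $\vq$), expand $\normTwo{\vb_i-\MR\va_i}^2$ via orthogonality of $\MR$, and express $\vb_i\tran\MR(\vq)\va_i$ as a trace of a constant matrix against $\vq\vq\tran$ using the degree-2 (after imposing $\|\vq\|^2=1$) quaternion-to-rotation formulas. The only difference is cosmetic: you assemble $\MM_{ij}$ entrywise from matrices $\ME_{lm}$ with $R_{lm}(\vq)=\trace{\ME_{lm}\vq\vq\tran}$, whereas the paper does the same bookkeeping via the vectorization identity $\vectorize{\MR}=\MP\,\vectorize{\vq\vq\tran}$ with an explicit $\MP$.
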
 

\begin{proof}
The proof proceeds by inspection, by reparametrizing the rotation $\MR$ with the corresponding unit quaternion. Each unit quaternion corresponds to a unique rotation, hence we replace the domain $\MR \in \SOthree$ with the constraint that the quaternion must have unit norm (\ie $\| \vq \|^2 = 1 $). 
Comparing~\eqref{eq:ldr-rotSearch} and~\eqref{eq:ldr-rotSearch-q}, we realize we only have to rewrite  the 
maximum-residual inequality constraint in~\eqref{eq:ldr-rotSearch} in the quaternion-based form in~\eqref{eq:ldr-rotSearch-q}. This derivation is largely inspired by~\cite{Yang19iccv-quasar} (which presents a similar reformulation applied to a different polynomial optimization problem), but here we present a simpler proof. 
We start by observing that the rotation matrix associated to the quaternion $\vq = [q_1 \vcat q_2 \vcat q_3 \vcat q_4]$ (in our notation, $q_4$ is the scalar part of the quaternion) is:
\begin{align}
\MR =& 
\label{eq:quatToRot1}
\matThree{
2(q_1^2 + q_4^2) - 1 & 2(q_1 q_2 - q_3 q_4) & 2(q_1 q_3 + q_2 q_4)\\
2(q_1 q_2 + q_3 q_4) & 2(q_2^2 + q_4^2) - 1 & 2(q_2 q_3 - q_1 q_4)\\
2(q_1 q_3 - q_2 q_4) & 2(q_2 q_3 + q_1 q_4) & 2(q_3^2 + q_4^2) - 1
} = \\
\label{eq:quatToRot2}
& \matThree{
q_1^2 + q_4^2 - q_2^2 - q_3^2 & 2(q_1 q_2 - q_3 q_4) & 2(q_1 q_3 + q_2 q_4)\\
2(q_1 q_2 + q_3 q_4) & q_2^2 + q_4^2 - q_1^2 - q_3^2 & 2(q_2 q_3 - q_1 q_4)\\
2(q_1 q_3 - q_2 q_4) & 2(q_2 q_3 + q_1 q_4) & q_3^2 + q_4^2 - q_1^2 - q_2^2 
}
\end{align}
where the expression in~\eqref{eq:quatToRot2} is obtained by substituting $\|\vq\|^2 = q_1^2 + q_2^2+q_3^2+q_4^2=1$ (instead of $1$) in the diagonal entries of the expression in~\eqref{eq:quatToRot1}.
Now, by inspection from~\eqref{eq:quatToRot2}, we note that:
\begin{align}
\label{eq:quatToRot_vec}
\vectorize{\MR} = \MP \cdot \vectorize{\vq \vq\tran}
\end{align} 
where:
\vspace{-10mm}

\begin{align}
\MP \triangleq 
{%\footnotesize
\left[
\begin{array}{cccc | cccc | cccc | cccc}
1& 0& 0& 0& 0& -1& 0& 0& 0& 0& -1& 0& 0& 0& 0& 1 \\
0& 1& 0& 0& 1& 0& 0& 0& 0& 0& 0& 1& 0& 0& 1& 0 \\
0& 0& 1& 0& 0& 0& 0& -1& 1& 0& 0& 0& 0& -1& 0& 0 \\
\hline
0& 1& 0& 0& 1& 0& 0& 0& 0& 0& 0& -1& 0& 0& -1& 0 \\
-1& 0& 0& 0& 0& 1& 0& 0& 0& 0& -1& 0& 0& 0& 0& 1 \\
0& 0& 0& 1& 0& 0& 1& 0& 0& 1& 0& 0& 1& 0& 0& 0 \\
\hline
0& 0& 1& 0& 0& 0& 0& 1& 1& 0& 0& 0& 0& 1& 0& 0 \\
0& 0& 0& -1& 0& 0& 1& 0& 0& 1& 0& 0& -1& 0& 0& 0 \\
-1& 0& 0& 0& 0& -1& 0& 0& 0& 0& 1& 0& 0& 0& 0& 1 \\
\end{array}
\right]},
\quad
\vectorize{\vq \vq\tran} = 
{\scriptsize
\left[
\begin{array}{c}
q_1^2 \\ q_2 q_1 \\ q_3 q_1 \\ q_4 q_1 \\
\hline
q_2 q_1 \\ q_2^2 \\ q_3 q_2 \\ q_4 q_2 \\
\hline
q_3 q_1 \\ q_3 q_2 \\ q_3^2 \\ q_4 q_3 \\
\hline
q_4 q_1 \\ q_4 q_2 \\ q_4 q_3 \\ q_4^2 \\
\end{array}
\right]
}
\nonumber
\end{align}

Equipped with these relations,
we are now ready to rewrite the inequality constraint in~\eqref{eq:ldr-rotSearch} as in~\eqref{eq:ldr-rotSearch-q}.
We develop the squared residual $\|\vb_i - \MR \va_i \|^2$ in~\eqref{eq:ldr-rotSearch} as follows:
\begin{align}
\|\vb_i - \MR \va_i \|^2 = 
\quad & \quad 
\textExplanation{ developing the squares }
\\
\|\vb_i\|^2 + \|\va_i \|^2 - 2\vb_i\tran \MR \va_i = 
\quad & \quad
 \textExplanation{ recalling that for a scalar $a = \vectorize{a}$ }
\\
\|\vb_i\|^2 + \|\va_i \|^2 - 2 \vectorize{\vb_i\tran \MR \va_i} = 
\quad & \quad
 \textExplanation{using $\vectorize{\MA \MB \MC} =(\MC\tran \otimes \MA) \vectorize{\MB}$}
\\
\|\vb_i\|^2 + \|\va_i \|^2 - 2 (\va_i\tran \otimes \vb_i\tran) \vectorize{\MR} =
\quad & \quad
 \textExplanation{using~\eqref{eq:quatToRot_vec}}
 \\
\|\vb_i\|^2 + \|\va_i \|^2 - 2 (\va_i\tran \otimes \vb_i\tran) \MP \;  \vectorize{\vq\vq\tran} =
\quad & \quad
 \textExplanation{using $\trace{\MA\tran \MB} = 
\vectorize{\MA}\tran \vectorize{\MB}$}
 \\
\|\vb_i\|^2 + \|\va_i \|^2 - 2 \trace{\MM_{ij}\tran \vq\vq\tran }
\label{eq:result_proof_quat}
\end{align}
where $\MM_{ij}$ is a $4\times4$ matrix such that $\vectorize{\MM_{ij}} = ((\va_i\tran \otimes \vb_i\tran) \MP)\tran = \MP\tran (\va_i \otimes \vb_i)$ (in other words, $\MM_{ij}$ simply rearranges the 16 entries of the vector $\MP\tran (\va_i \otimes \vb_i)$ into a $4\times4$ matrix).
Replacing $\|\vb_i - \MR \va_i \|^2$ in~\eqref{eq:ldr-rotSearch} with~\eqref{eq:result_proof_quat} yields the inequality in~\eqref{eq:ldr-rotSearch-q}, hence proving the claim. 
\end{proof}

\clearpage

\begin{proposition}[Sparse moment relaxation of~\eqref{eq:ldr-rotSearch-q}]
The following SDP is a convex relaxation of the non-convex optimization problem~\eqref{eq:ldr-rotSearch-q}: 
\begin{align}
\label{eq:ldr-rotSearch-q-relax}
  \min_{\MX \in \sym{5(\nrMeasurements+1)}} 
  \;\;&\;\; \sumOverMeas \MX\blku{\omega_i}^2
  \\
  \text{s.t.} 
  \;\;&\;\; 
\MX\blke{\omega_i}{\omega_i} = \MX\blku{\omega_i}, \;\; i=[\nrMeasurements] \label{eq:omega2_eq_omega} \\ 
  &
  \sumOverMeas \MX\blku{\omega_i} = \inlierRate \nrMeasurements  \\
  &
  \MX\blku{\omega_i} \cdot \left( \|\vb_i\|^2 + \|\va_i\|^2 \right)
   - 2  \trace{\MM_{ij}\tran \MX\blke{\vq}{\omega_i \vq\tran} }  \leq \barcsq, \;\; i=[\nrMeasurements]
  \\
  &
  \trace{\MX\blke{\vq}{\vq\tran} }= 1 \label{eq:finalOriginalConstraint}
  \\
  \nonumber
  \\
  &
  \MX \succeq 0\label{eq:psdRotSearch}
  \\
  &
  \MX\blku{1} = 1  \label{eq:1is1}
  \\
  &
  \MX\blke{\omega_i \vq}{\omega_i \vq\tran} = \MX\blke{\vq}{ \omega_i \vq\tran} , 
  \;\; i=[\nrMeasurements] 
  \\
  &
  \MX\blku{\omega_i \vq\tran} = \MX\blke{\omega_i}{\vq\tran}, \;\; i=[\nrMeasurements] 
  \\
  &
  \MX\blku{\omega_i \vq\tran} = \MX\blke{\omega_i}{ \omega_i \vq\tran}, \;\; i=[\nrMeasurements] 
  \\
  &
  \MX\blke{\omega_i \vq}{\omega_j \vq\tran} = \MX\blke{\omega_i \vq}{\omega_j \vq\tran}\tran, \;\; i,j=[\nrMeasurements] 
  \\
  &
  \trace{ \MX\blke{\omega_i \vq}{\omega_j \vq\tran} } = \MX\blke{\omega_i}{\omega_j}, \;\; i,j=[\nrMeasurements] 
  \label{eq:wiqwjq}
\end{align} 
where we index the rows of the matrix $\MX$ according to the monomials $\vm(\vomega,\vq)$, index the columns of $\MX$ according to the monomials $\vm(\vomega,\vq)\tran \triangleq [1 \hcat \omega_1 \hcat \ldots \hcat \omega_\nrMeasurements \hcat \vq\tran \hcat \omega_1 \vq\tran \hcat  \ldots \hcat \omega_\nrMeasurements \vq\tran ]$, 
and use the notation $\MX_{[i,j]}$ to access entries of the matrix with row indexed by monomial $i$ and column indexed by monomial $j$; we also overload the notation and 
write as $\MX_{[i]}$ to denote $\MX_{[i,1]}$. 
% the entry corresponding to monomial $i$ (we use this notation only when the entry is unique, up to symmetry of the matrix).
\end{proposition}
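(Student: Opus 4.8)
The plan is to verify that the proposed SDP in~\eqref{eq:ldr-rotSearch-q-relax} is indeed a valid convex relaxation of~\eqref{eq:ldr-rotSearch-q}, by exhibiting a feasibility-preserving and objective-preserving embedding of any feasible point of the polynomial optimization problem into the SDP feasible set. Concretely, for any $(\vomega,\vq)$ feasible for~\eqref{eq:ldr-rotSearch-q}, I would form the rank-one lifted matrix $\MX = \vm(\vomega,\vq) \vm(\vomega,\vq)\tran$, where $\vm(\vomega,\vq)$ is the sparse monomial basis in~\eqref{eq:monomialBasisRotSearch} (specialized to quaternions, so of length $5(\nrMeasurements+1)$), and then check that this $\MX$ satisfies every constraint of~\eqref{eq:ldr-rotSearch-q-relax} and attains the same objective. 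Since $\MX \succeq 0$ and $\MX_{[1]} = \MX_{[1,1]} = 1$ hold trivially for a rank-one outer product of a vector whose first entry is $1$, constraints~\eqref{eq:psdRotSearch}--\eqref{eq:1is1} are immediate.

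Next I would go through the ``original'' constraints~\eqref{eq:omega2_eq_omega}--\eqref{eq:finalOriginalConstraint}: for the rank-one $\MX$, the entry $\MX_{[\omega_i,\omega_i]}$ equals $\omega_i^2$ and $\MX_{[\omega_i]}$ equals $\omega_i$, so~\eqref{eq:omega2_eq_omega} is precisely the constraint $\omega_i^2 = \omega_i$; similarly $\sum_i \MX_{[\omega_i]} = \sum_i \omega_i = \inlierRate\nrMeasurements$; the entry $\MX_{[\vq,\omega_i\vq\tran]}$ equals $\omega_i \vq\vq\tran$, so $\trace{\MM_{ij}\tran \MX_{[\vq,\omega_i\vq\tran]}} = \omega_i\trace{\MM_{ij}\tran\vq\vq\tran}$ and the third family of constraints reproduces the residual inequality in~\eqref{eq:ldr-rotSearch-q}; finally $\trace{\MX_{[\vq,\vq\tran]}} = \trace{\vq\vq\tran} = \|\vq\|^2 = 1$. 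The objective $\sum_i \MX_{[\omega_i]}^2 = \sum_i \omega_i^2 = \|\vomega\|^2$ matches (using the binary constraint, or directly for the lifted variable), establishing that the SDP optimum lower-bounds the POP optimum.

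The remaining constraints~\eqref{eq:wiqwjq} and its companions above it are the consistency (moment) constraints that enforce the internal structure of the pseudo-moment matrix; these are all satisfied automatically by the genuine rank-one lift because, for instance, $\MX_{[\omega_i\vq,\omega_j\vq\tran]} = \omega_i\omega_j\vq\vq\tran$ so its trace is $\omega_i\omega_j\|\vq\|^2 = \omega_i\omega_j = \MX_{[\omega_i,\omega_j]}$ (using $\|\vq\|^2=1$ and $\omega_i^2=\omega_i$), and the symmetry and ``factoring out $\omega_i$'' identities are elementary consequences of $\omega_i^2 = \omega_i$ and the outer-product structure. I would verify each such identity by a one-line computation on the rank-one ansatz. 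The main obstacle, and the place requiring the most care, is bookkeeping: making sure the row/column indexing by the monomials in $\vm(\vomega,\vq)$ is internally consistent across all the block-access expressions $\MX_{[\cdot,\cdot]}$, that the chosen sparse basis actually contains every monomial needed to express the objective and all constraints of~\eqref{eq:ldr-rotSearch-q} at degree $2$ (so that the relaxation is well-defined), and that the constant $\MM_{ij}$ from~\cref{prop:quaternion_formulation_rotSearch} is substituted correctly; once the indexing is pinned down, each individual constraint check is routine and the relaxation claim follows by noting that dropping the rank-one requirement only enlarges the feasible set.
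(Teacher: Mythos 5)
Your proposal is correct and follows essentially the same route as the paper's proof: lift any feasible $(\vomega,\vq)$ to the rank-one matrix $\MX=\vm(\vomega,\vq)\vm(\vomega,\vq)\tran$, verify by inspection of the moment-matrix structure (using $\omega_i^2=\omega_i$ and $\|\vq\|^2=1$) that all constraints of~\eqref{eq:ldr-rotSearch-q-relax} hold with the same objective value, and conclude that dropping the rank-one requirement only enlarges the feasible set, so the convex program is a relaxation. No substantive differences from the paper's argument.
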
 

\begin{proof}
While the SDP appears to be quite complicated, its constraints should become apparent from the structure of the moment matrix built on the sparse monomial basis $\vm(\vomega,\vq) \triangleq [1 \vcat \omega_1 \vcat \ldots \vcat \omega_\nrMeasurements \vcat \vq \vcat \omega_1 \vq \vcat  \ldots \vcat \omega_\nrMeasurements \vq ]$:
\begin{align}
\nonumber
& \;\;\;
\grayout{
\begin{array}{cccccccccc}
1 \myspa \omega_1 \myspa \ldots \myspa \omega_\nrMeasurements \myspa\;\; \vq\tran \myspa\;\;\;\; \omega_1 \vq\tran \myspa\;\;\; \ldots \myspa\;\;\; \omega_\nrMeasurements \vq\tran
\end{array}
}
\\
\MX = \vm(\vomega,\vq) \vm(\vomega,\vq)\tran =\quad
 \grayout{
\begin{array}{c}
1 \\ \omega_1 \\ \vdots \\ \omega_\nrMeasurements \\ \vq \\ \omega_1 \vq \\ \vdots \\ \omega_\nrMeasurements \vq
\end{array}
}
&
\left[
\begin{array}{c ccc |c| cccc}
1 & \omega_1 & \ldots & \omega_\nrMeasurements & \vq\tran & \omega_1 \vq\tran & \ldots & \omega_\nrMeasurements \vq\tran\\
* &  \omega_1^2 & \ldots & \vdots & \omega_1 \vq\tran & \omega_1^2 \vq\tran & \ldots & \omega_1 \omega_\nrMeasurements \vq\tran\\
* &  *  & \ddots & \vdots & \vdots & \vdots & \vdots & \vdots\\
* &  *& \ldots & \omega_\nrMeasurements^2 & \omega_\nrMeasurements \vq\tran & \omega_1 \omega_\nrMeasurements \vq\tran & \ldots & \omega_\nrMeasurements^2  \vq\tran \\
\hline
* &  *& \ldots & * & \vq \vq\tran &  \omega_1 \vq  \vq\tran & \ldots & \omega_\nrMeasurements \vq  \vq\tran\\
\hline
* &  *& \ldots & * & * &  \omega_1^2 \vq  \vq\tran & \ldots & \omega_1 \omega_\nrMeasurements \vq  \vq\tran\\
* &  *& \ldots & * & * &  * & \ddots & \vdots\\
* &  *& \ldots & * & * &  * & \ldots & \omega_\nrMeasurements^2 \vq  \vq\tran\\
\end{array}
\right]
\label{eq:momentMatrixRotSearch}
\end{align}
where we also reported in gray the row and column indices described in the statement of the proposition.
We prove the proposition in two steps. 
First, we show how to rewrite~\eqref{eq:ldr-rotSearch-q} using the moment matrix $\MX$ in~\eqref{eq:momentMatrixRotSearch}, which leads to the objective and constraints in~\eqref{eq:ldr-rotSearch-q-relax}-\eqref{eq:finalOriginalConstraint}. Second, we show that any moment matrix with the structure in~\eqref{eq:momentMatrixRotSearch} satisfies the constraints~\eqref{eq:psdRotSearch}-\eqref{eq:wiqwjq}, hence the feasible set of~\eqref{eq:ldr-rotSearch-q-relax} contains the feasible set of~\eqref{eq:ldr-rotSearch-q}. 
Let us start by rewriting~\eqref{eq:ldr-rotSearch-q} using the moment matrix $\MX$:
  \begin{align}
  \label{eq:ldr-rotSearch-q-with-X}
  \min_{\vomega,\vq,\MX} 
  \;\;&\;\; \sumOverMeas \MX\blku{\omega_i}^2
  \\
  \text{s.t.} 
  \;\;&\;\; 
\MX\blke{\omega_i}{\omega_i} = \MX\blku{\omega_i}, \;\; i=[\nrMeasurements] \label{eq:omega2_eq_omega} \\ 
  &
  \sumOverMeas \MX\blku{\omega_i} = \inlierRate \nrMeasurements  \\
  &
  \MX\blku{\omega_i} \cdot \left( \|\vb_i\|^2 + \|\va_i\|^2 \right)
   - 2  \trace{\MM_{ij}\tran \MX\blke{\vq}{\omega_i \vq\tran} }  \leq \barcsq, \;\; i=[\nrMeasurements]
  \\
  &
  \trace{\MX\blke{\vq}{\vq\tran} }= 1
  \\
  &
  \MX = \vm(\vomega,\vq) \cdot \vm(\vomega,\vq)\tran
\end{align} 
where $\vm(\vomega,\vq) \triangleq 
[1 \vcat \omega_1 \vcat \ldots \vcat \omega_\nrMeasurements \vcat \vq \vcat \omega_1 \vq \vcat  \ldots \vcat \omega_\nrMeasurements \vq ]$, 
and we simply noticed (from inspection of~\eqref{eq:momentMatrixRotSearch}) that $\MX\blke{\omega_i}{\omega_i} = \omega_i^2$,  $\MX\blku{\omega_i} = \omega_i$, $\MX\blke{\vq}{\omega_i \vq\tran} = \omega_i \vq \vq\tran$, and $\trace{\MX\blke{\vq}{\vq\tran} } = \trace{\vq \vq\tran} = \vq\tran\vq = \|\vq\|^2$, hence~\eqref{eq:ldr-rotSearch-q-with-X} just rewrites objective and constraints in~\eqref{eq:ldr-rotSearch-q} using the entries of the moment matrix $\MX$ in~\eqref{eq:monomialBasisRotSearch}. 
Problem~\eqref{eq:ldr-rotSearch-q-with-X} is equivalent to~\eqref{eq:ldr-rotSearch-q} and is still non-convex due to the non-convexity of the constraint $\MX = \vm(\vomega,\vq) \cdot \vm(\vomega,\vq)\tran$.

Now we are only left to prove that the feasible set of~\eqref{eq:ldr-rotSearch-q-relax} contains the feasible set of~\eqref{eq:ldr-rotSearch-q-with-X}. More precisely, we prove that any matrix that satisfies $\MX = \vm(\vomega,\vq) \cdot \vm(\vomega,\vq)\tran$ also satisfies constraints~\eqref{eq:psdRotSearch}-\eqref{eq:wiqwjq} in~\eqref{eq:ldr-rotSearch-q-relax}. 
Clearly, any $\MX = \vm(\vomega,\vq) \cdot \vm(\vomega,\vq)\tran$ is such that $\MX\succeq 0$.
The rest of the constraints can be also seen to hold by simple inspection of the entries of the moment matrix~\eqref{eq:monomialBasisRotSearch} and recalling that our constraint set also imposes $\omega_i^2 = \omega_i$ (for all $i\in[\nrMeasurements]$) and $\trace{\vq\vq\tran} = \|\vq\|^2 = 1$.
Therefore, since~\eqref{eq:ldr-rotSearch-q-relax} has the same objective of~\eqref{eq:ldr-rotSearch-q-with-X}, but its feasible set includes the feasible set of~\eqref{eq:ldr-rotSearch-q-with-X}, problem~\eqref{eq:ldr-rotSearch-q-relax} is a relaxation of~\eqref{eq:ldr-rotSearch-q-with-X}. Finally, we observe that~\eqref{eq:ldr-rotSearch-q-relax} is a convex program, since it minimizes a convex cost function over the cone of positive-semidefinite matrices and subject to a set of linear constraints.
\end{proof}

%%%%%%%%%%%%%%%%%%%%%%%%%%%%%%%%%%%%%%%%%%%%%%%%%%%%%%%%%%%%%%%%%%%%%%%%%%%%%%%%%%%%%%%%%
\myParagraph{Rounding for rotation search}
According to~\cref{algo:slides}, after solving the sparse moment relaxation and obtaining the matrix $\MX^\star$, we build the vectors $\vv_i$ from the entries of the matrix $\MX^\star$, and then project those vectors to the domain $\Domain$. In our quaternion-based formulation of the rotation search problem (\cref{prop:quaternion_formulation_rotSearch}), $\vv_i$ are 4-dimensional vectors, while $\Domain$ is the set of unit quaternions. Hence projecting onto the domain $\Domain$ (line~\ref{line:roundToX} in~\cref{algo:slides}) only requires normalizing the vectors $\vv_i$ to have unit norm, \ie $\vxx_i = \vv_i / \|\vv_i\|$. In particular, we add $\vxx_i = \vv_i / \|\vv_i\|$ whenever $\|\vv_i\| > 0$, while we mark an estimate as invalid when $\|\vv_i\| = 0$ and disregard it from the evaluation.

\end{document}